\newtheorem{theorem}{Theorem}
\newtheorem{definition}{Definition}
\newtheorem{corollary}[theorem]{Corollary}%
\newtheorem{lemma}[theorem]{Lemma}%
\newtheorem*{theorem*}{Theorem}%
\newcommand\Ba{\bm{a}}
\newcommand\Bb{\bm{b}}
\newcommand\Bw{\bm{w}}
\newcommand\Bx{\bm{x}}
\newcommand\By{\bm{y}}
\newcommand\Bz{\bm{z}}
\newcommand\BA{\bm{A}}
\newcommand\BW{\bm{W}}
 \newcommand{\dR}{\mathbb{R}}
 \newcommand{\rN}{\mathrm{N}}
 \newcommand{\cN}{\mathcal{N}}
\newcommand\EXP{\mathbf{\mathrm{E}}}
\newcommand\VAR{\mathbf{\mathrm{Var}}}
\newcommand\nn{\mathrm{new}}
\newcommand\munn{{\tilde \mu}}
\newcommand\nunn{{\tilde \nu}}
\newcommand\xinn{{\tilde \xi}}
\renewcommand{\leq}{\leqslant}
\renewcommand{\geq}{\geqslant}
\DeclareMathOperator{\erfc}{erfc}
\DeclareMathOperator{\erf}{erf}
\DeclareMathOperator{\E}{E}
\DeclareMathOperator{\Var}{Var}
\DeclareMathOperator{\selu}{selu}
\title{Self-Normalizing Neural Networks}
\author{
 G\"{u}nter Klambauer
 \And 
 Thomas Unterthiner
 \And
 Andreas Mayr
 \And 
 Sepp Hochreiter \\ 
 LIT AI Lab \& Institute of Bioinformatics, \\
 Johannes Kepler University Linz\\
 A-4040 Linz, Austria\\
\texttt{\{klambauer,unterthiner,mayr,hochreit\}@bioinf.jku.at}
}
\begin{document}
\renewcommand\indexname{Brief index}
\maketitle

\begin{abstract}
Deep Learning has revolutionized vision via convolutional neural networks (CNNs) 
and natural language processing via recurrent neural networks (RNNs).
However, success stories of Deep Learning with 
standard feed-forward neural networks (FNNs) are rare. 
FNNs that perform well are typically shallow 
and, therefore cannot exploit many levels of abstract representations. 
We introduce self-normalizing neural networks (SNNs) to
enable high-level abstract representations.
While batch normalization requires explicit normalization, 
neuron activations of SNNs 
automatically converge towards zero mean and unit variance. 
The activation function of SNNs are ``scaled exponential linear units''
(SELUs), which induce self-normalizing properties.
Using the Banach fixed-point theorem, 
we prove that activations close to zero mean and unit variance 
that are propagated through many network layers will converge 
towards zero mean and unit variance ---
even under the presence of noise and perturbations.
This convergence property of SNNs allows to (1) train deep networks with many layers, (2) employ strong regularization 
schemes, and (3) to make learning highly robust. 
Furthermore, for activations not close to unit  
variance, we prove an upper and lower bound 
on the variance, thus, vanishing and exploding gradients are impossible.
We compared SNNs on (a) 121 tasks from the UCI machine learning repository, 
on (b) drug discovery benchmarks, and on (c) astronomy tasks
with standard FNNs, and other machine learning methods such as random forests and support vector machines.
For FNNs we considered  
(i) ReLU networks without normalization, 
(ii) batch normalization, (iii) layer normalization, (iv) weight normalization,
(v) highway networks, and (vi) residual networks. 
SNNs significantly outperformed all competing FNN methods at
121 UCI tasks, outperformed all competing methods 
at the Tox21 dataset, and set a new record at an astronomy data set. 
The winning SNN architectures are often very deep. 
Implementations are available at: \href{https://www.github.com/bioinf-jku/SNNs}{github.com/bioinf-jku/SNNs}.
\end{abstract}

Accepted for publication at NIPS 2017; please cite as: \\
{\tt Klambauer, G., Unterthiner, T., Mayr, A., \& Hochreiter, S. (2017). Self-Normalizing Neural Networks. In Advances in Neural Information Processing Systems (NIPS).}

\section*{Introduction}
\label{sec:introduction}
Deep Learning has set new records at different benchmarks and
led to various commercial applications \citep{bib:Lecun2015,bib:Schmidhuber2015}. 
Recurrent neural networks (RNNs) \citep{bib:Hochreiter1997} 
achieved new levels at speech and natural language processing, 
for example at the TIMIT benchmark \citep{bib:Graves2013} or at 
language translation \citep{bib:Sutskever2014}, and 
are already employed in mobile devices \citep{bib:Sak2015}. 
RNNs have won handwriting recognition challenges (Chinese and Arabic
handwriting) \cite{bib:Schmidhuber2015, bib:Graves2009,bib:Cirecsan2015}
and Kaggle challenges,
such as the ``Grasp-and Lift EEG'' competition. 
Their counterparts, convolutional neural networks (CNNs) \citep{bib:Lecun1995} excel
at vision and video tasks. 
CNNs are on par with human dermatologists at the 
visual detection of skin cancer \citep{bib:Esteva2017}. 
The visual processing for self-driving cars is based on CNNs \citep{bib:Huval2015},
as is the visual input to AlphaGo which has beaten
one of the best human GO players \citep{bib:Silver2016}.
At vision challenges, CNNs are constantly winning, for example at 
the large ImageNet competition \citep{bib:Krizhevsky2012, bib:He2015res}, but 
also almost all Kaggle vision challenges, such as  the ``Diabetic Retinopathy'' and 
the ``Right Whale'' challenges \citep{bib:Dugan2016,bib:Gulshan2016}. 

However, looking at Kaggle challenges that are not related to vision or sequential
tasks, gradient boosting, random forests, or support vector machines (SVMs) 
are winning most of the competitions. 
Deep Learning is notably absent, and for the few cases where FNNs won, 
they are shallow. For example, the HIGGS challenge,
the Merck Molecular Activity challenge, and the 
Tox21 Data challenge were all won by FNNs with at most four hidden layers.
Surprisingly, it is hard to find success stories with FNNs that
have many hidden layers, though they would allow for different levels
of abstract representations 
of the input \citep{bib:Bengio2013b}.

To robustly train very deep CNNs, batch normalization evolved into a standard to normalize
neuron activations to zero mean and unit variance \citep{bib:Ioffe2015}. 
Layer normalization \citep{bib:Ba2016} also ensures zero mean and unit
variance, while weight normalization \citep{bib:Salimans2016} ensures
zero mean and unit variance if in the previous layer the activations have
zero mean and unit variance.
However, training with normalization techniques is perturbed by
stochastic gradient descent (SGD), stochastic
regularization (like dropout), 
and the estimation of the normalization parameters.
Both RNNs and CNNs can stabilize learning via weight sharing, 
therefore they are less prone to these perturbations. 
In contrast, FNNs trained with normalization techniques suffer from
these perturbations and have high variance 
in the training error (see Figure~\ref{fig:perturb}). 
This high variance hinders learning and slows it down. 
Furthermore, strong regularization, such as dropout, 
is not possible as it would further 
increase the variance which in turn would lead to divergence of  the
learning process.
We believe that this sensitivity to perturbations 
is the reason that FNNs are less
successful than RNNs and CNNs.

Self-normalizing neural networks (SNNs) are robust to perturbations
and do not have high variance in their training errors (see Figure~\ref{fig:perturb}).
SNNs push neuron activations to zero mean and unit variance 
thereby leading to the same effect as batch normalization,
which enables to robustly learn many layers. 
SNNs are based on scaled exponential linear units ``SELUs''
which induce self-normalizing properties like variance stabilization
which in turn avoids exploding and vanishing gradients.

\section*{Self-normalizing Neural Networks (SNNs)}
\paragraph{Normalization and SNNs.}
\index{definitions}
For a neural network with activation function $f$, 
we consider two consecutive layers 
that are connected by a weight matrix $\BW$.
Since the input to a neural
network is a random variable, 
the activations $\Bx$ in the lower
layer, the network inputs $\Bz=\BW \Bx$, and the 
activations $\By=f(\Bz)$ in the higher layer are
random variables as well. 
We assume that all activations $x_i$ of the lower layer
have mean 
$\mu:=\EXP(x_i)$ and variance $\nu:=\VAR(x_i)$. 
An activation $y$ in the
higher layer has mean
$\munn:=\EXP(y)$ and variance
$\nunn:=\VAR(y)$. 
Here $\EXP(.)$ denotes the expectation and
$\VAR(.)$ the variance of a random variable.
A single activation $y=f(z)$ has net input 
$z=\Bw^T \Bx$. 
For $n$ units with activation 
$x_i, 1\leq i \leq n$ in the lower layer, we define $n$  
times the mean of the 
weight vector $\Bw \in \dR^n$ as $\omega:=\sum_{i=1}^n  w_i$ and $n$
times the second moment as $\tau:=\sum_{i=1}^n  w_i^2$.

We consider the mapping $g$ that maps mean and variance of the activations from one layer
to mean and variance of the activations in the next layer \index{mapping $g$}

\begin{align}
\begin{pmatrix}
\mu \\ \nu
\end{pmatrix}
\ &\mapsto \ 
\begin{pmatrix}
\munn \\ \nunn
\end{pmatrix} \ : \quad
\begin{pmatrix}
\munn \\ \nunn
\end{pmatrix}
  \ = \ g 
\begin{pmatrix}
\mu \\ \nu
\end{pmatrix}
 \ . 
\end{align}
Normalization techniques like batch, layer, or weight normalization 
ensure a mapping $g$ that keeps 
$(\mu,\nu)$ and $(\munn,\nunn)$
close to predefined values, typically $(0,1)$. 
\begin{definition}[Self-normalizing neural net] \index{self-normalizing neural networks}
\label{def:SNN}
A neural network is self-normalizing if it possesses a mapping 
$g:\Omega \mapsto \Omega$ for each activation $y$ that maps mean and variance from one layer to the next 
and has a stable and attracting fixed point depending on $(\omega,\tau)$ in $\Omega$.
Furthermore, the mean and the variance remain in the domain $\Omega$, that is $g(\Omega) \subseteq \Omega$, where
$\Omega= \{ (\mu, \nu)\ |\ \mu \in [\mu_{\min},\mu_{\max}], \nu \in [\nu_{\min},\nu_{\max}] \}$.
When iteratively applying the mapping $g$, each point within $\Omega$ converges to this fixed point.
\end{definition}
Therefore, we consider activations of a neural network to be normalized, 
if both their mean and their variance across samples are within predefined intervals. 
If mean and variance of $\Bx$ are already within these intervals, then also
mean and variance of $\By$ remain in these intervals, i.e., the
normalization is transitive across layers. Within these intervals, 
the mean and variance both converge to a fixed point if the mapping $g$ is applied
iteratively. 

\setlength{\belowcaptionskip}{0pt}
\begin{figure}
 \includegraphics[width=0.49\columnwidth]{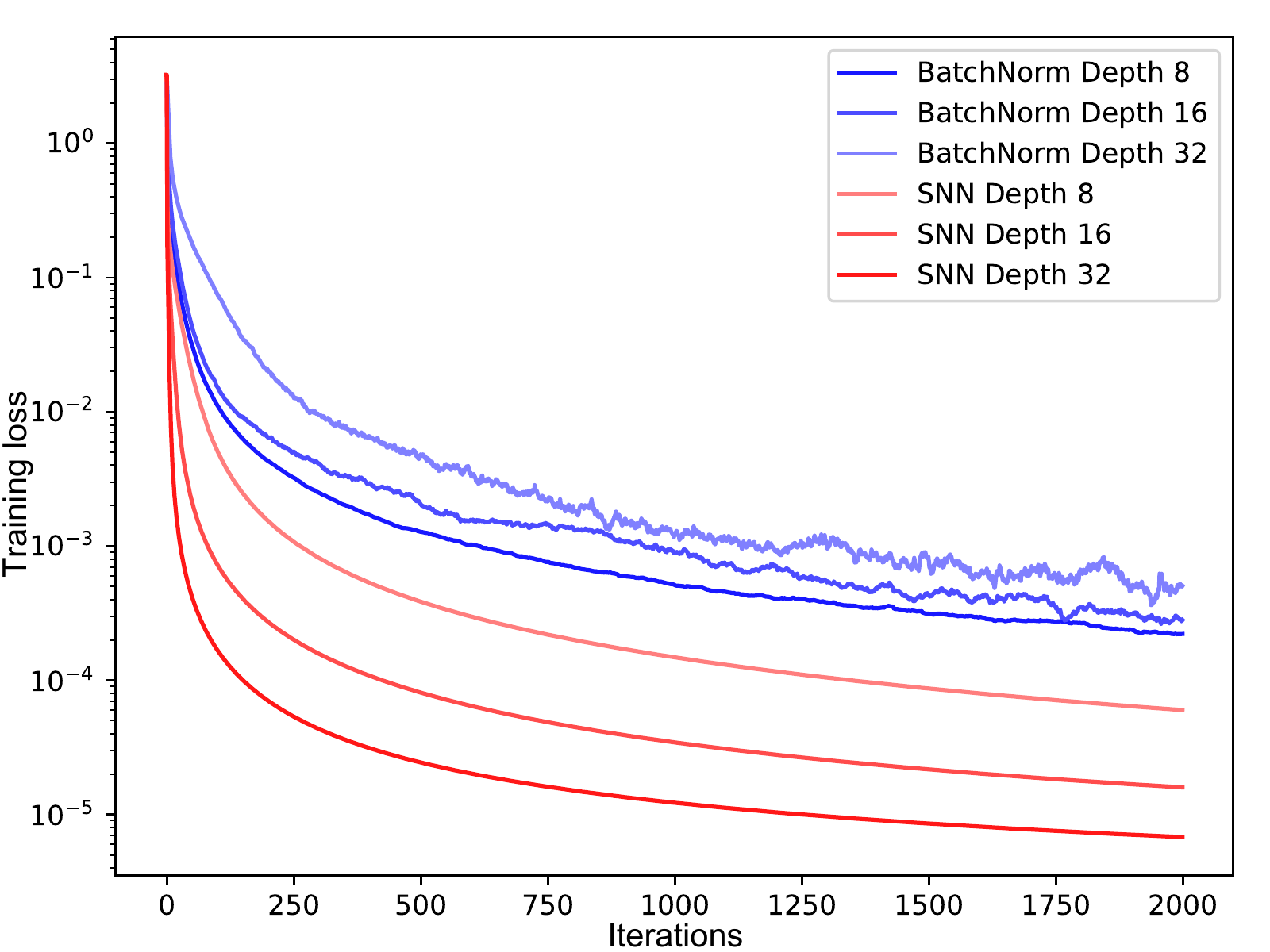}
 \includegraphics[width=0.49\columnwidth]{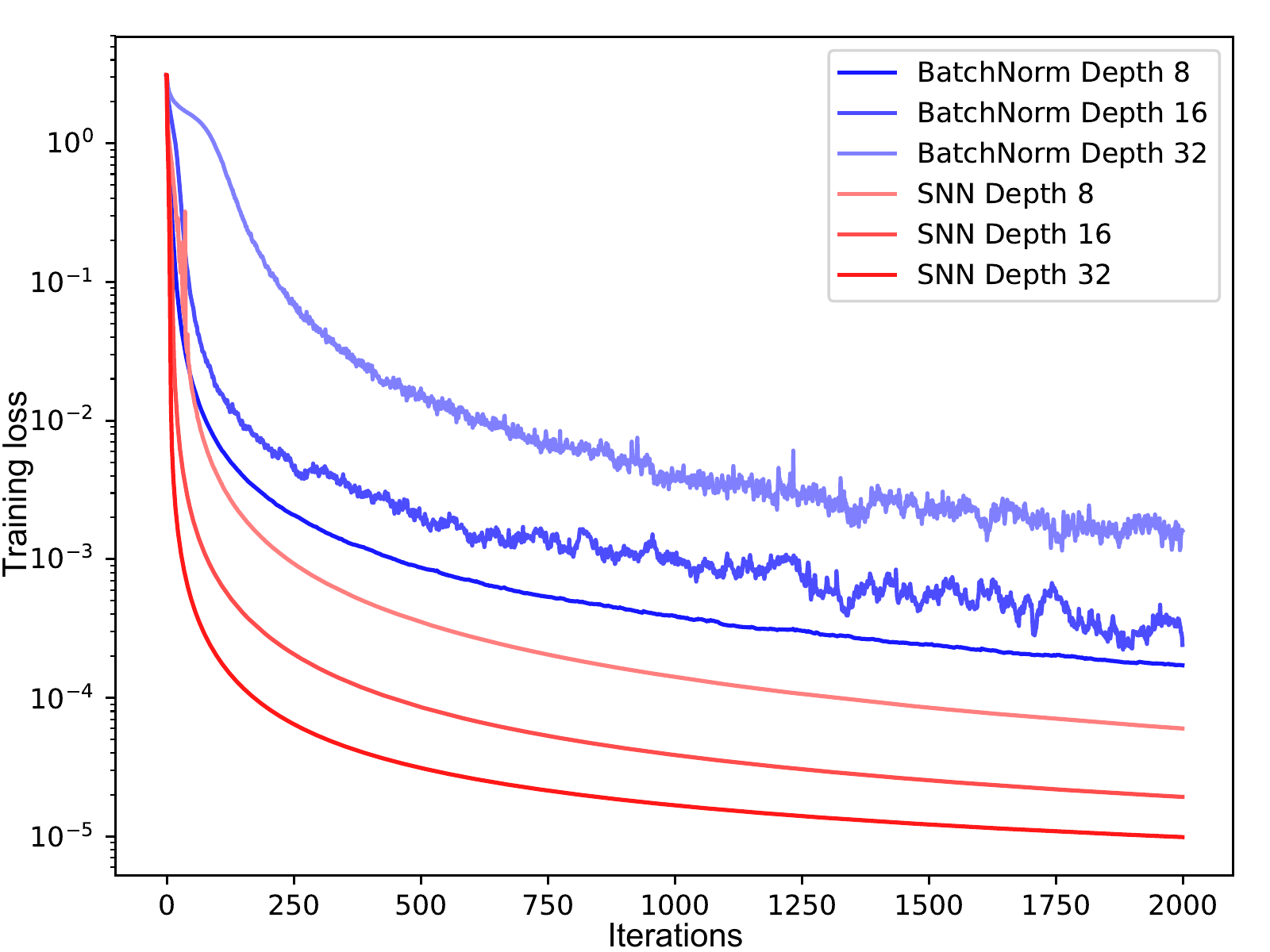}
 \caption[FNN and SNN trainin error curves]{The left panel and the right panel show the training error (y-axis) for feed-forward neural networks (FNNs) with batch
   normalization (BatchNorm) and self-normalizing networks (SNN) across update steps (x-axis)
   on the MNIST dataset the CIFAR10 dataset, respectively.  
   We tested networks with 8, 16, and 32 layers and learning rate 1e-5. FNNs 
   with batch normalization exhibit high variance due to perturbations. 
   In contrast, SNNs do not suffer from high variance as they are
   more robust to perturbations and learn faster.  \label{fig:perturb}}
\end{figure}

Therefore, SNNs
keep normalization of activations when propagating them 
through layers of the network. 
The normalization effect is observed across layers of a network: 
in each layer the activations are getting closer to the fixed point.
The normalization effect can also observed be for two fixed layers across 
learning steps: perturbations of lower layer activations or
weights are damped in the higher layer by drawing the activations
towards the fixed point.
If for all $y$ in the higher layer, $\omega$
and $\tau$ of the corresponding weight vector are the same, then
the fixed points are also the same. In this case we have a
unique fixed point for all activations $y$. 
Otherwise, in the more general case, $\omega$
and $\tau$ differ for different $y$ but the mean activations are
drawn into $[\mu_{\min},\mu_{\max}]$ and the variances
are drawn into $[\nu_{\min},\nu_{\max}]$.

\paragraph{Constructing Self-Normalizing Neural Networks.}
We aim at constructing self-normalizing
neural networks by adjusting the properties of the function $g$.
Only two design choices are available for the
function $g$: 
(1) the activation function and 
(2) the initialization of the weights.

For the activation function,
we propose ``scaled exponential linear units'' (SELUs) to render a FNN
as self-normalizing. The SELU activation function is given by \index{SELU!definition}
\begin{align}
 \selu(x) \ &= \ \lambda 
\ \begin{cases}
  x & \text{if } x > 0 \\
 \alpha e^{x}-\alpha & \text{if } x \leq 0
 \end{cases}  \ .
\end{align}
SELUs allow to construct a mapping $g$ with properties that lead to SNNs.
SNNs cannot be derived with (scaled)
rectified linear units (ReLUs), sigmoid units, $\tanh$ units, and leaky
ReLUs.
The activation function is required to have 
(1) negative and positive values for controlling the mean,
(2) saturation regions (derivatives approaching zero) to dampen the variance if
it is too large in the lower layer, 
(3) a slope larger than one to increase the variance if
it is too small in the lower layer,
(4) a continuous curve. 
The latter ensures a fixed point, where variance damping is equalized by variance increasing.
We met these properties of the activation function by multiplying the exponential linear
unit (ELU) \citep{bib:Clevert2015} with $\lambda>1$ to ensure a slope larger than one
for positive net inputs. 

For the weight initialization, we 
propose $\omega=0$ and $\tau=1$ for all units in the higher layer.
The next paragraphs will show the advantages of this initialization. 
Of course, during learning these assumptions on the weight vector will
be violated. However, we can prove the self-normalizing property
even for weight vectors that are not normalized, therefore, the 
self-normalizing property can be kept during learning and weight changes.


\paragraph{Deriving the Mean and Variance Mapping Function $g$.}
We assume that the $x_i$ are independent from each other but share
the same mean $\mu$ and variance $\nu$. 
Of course, the independence assumptions is 
not fulfilled in general. We will elaborate on the independence
assumption below.  
The network input $z$ in the higher layer 
is $z=\Bw^T \Bx$ for which we can infer the following moments 
$\EXP(z) \ = \ \sum_{i=1}^n  \ w_i \ \EXP(x_i) \ = \ \mu \ \omega$
and 
$\VAR(z) \ = \ \VAR(\sum_{i=1}^n w_i \ x_i)  \ = \ \nu \ \tau$,
where we used the independence of the $x_i$.
The net input $z$ is a weighted sum of independent, 
but not necessarily identically distributed variables $x_i$,
for which the central limit theorem (CLT) states that $z$ approaches a normal distribution:
$z \sim \cN (\mu \omega  ,  \sqrt{\nu \tau})$ 
with density $p_{\rN}(z  ;  \mu \omega  , \sqrt{\nu \tau})$. 
According to the CLT, the larger $n$, the closer is $z$ to a normal distribution.
For Deep Learning, broad layers with hundreds of neurons $x_i$ are common. 
Therefore the assumption that $z$ is normally distributed is met well for most currently used
neural networks (see Figure~\ref{fig:clt}).
%
The function $g$ maps the mean and variance of activations in the lower layer to the mean
$\munn=\EXP(y)$ and variance $\nunn=\VAR(y)$ of the activations $y$ in the next layer:
\begin{align}
\label{eq:mappingG}
g:\begin{pmatrix}
\mu \\ \nu 
\end{pmatrix} \mapsto \begin{pmatrix}
\munn \\ \nunn 
\end{pmatrix}: \ \ \quad
&\munn(\mu,\omega, \nu, \tau)   \ = \
\int_{-\infty}^{\infty }\selu(z) \ 
p_{{\rN}}(z;\mu \omega,\sqrt{\nu \tau})\ \mathrm{d}z  \\ \nonumber
& \nunn(\mu,\omega, \nu, \tau)   \ = \ \int_{-\infty}^{\infty} \selu(z)^2 \ 
p_{{\rN}}(z;\mu \omega,\sqrt{\nu \tau})\ \mathrm{d}z  \ - \ (\munn)^2 \ . 
\end{align}
These integrals can be analytically computed and lead to following \index{mapping $g$}
mappings of the moments: 
\begin{align}
\label{eq:mappingMean}
\munn\
  &= \ \frac{1}{2} \lambda  \left((\mu \omega)
    \erf  \left(\frac{\mu \omega}{\sqrt{2} \sqrt{\nu \tau}}\right)
    +\right. \\ \nonumber 
    &\left. \alpha \ e^{\mu \omega + \frac{\nu \tau}{2}} \erfc \left(\frac{\mu \omega + \nu \tau}{\sqrt{2} \sqrt{\nu \tau}}\right) - 
    \alpha  \erfc \left(\frac{\mu \omega}{\sqrt{2} \sqrt{\nu \tau }}\right) + \sqrt{\frac{2}{\pi }} 
    \sqrt{\nu \tau} e^{-\frac{(\mu \omega)^2}{2 (\nu \tau)}}+ \mu \omega \right) \\
\label{eq:mappingVar}
\nunn \ &= \ \frac{1}{2} \lambda ^2 \left(\left((\mu \omega)^2+\nu \tau \right)   \left(2 - \erfc  \left(\frac{\mu \omega}{\sqrt{2}
  \sqrt{\nu \tau}}\right) \right)+  \alpha ^2 \left(-2 e^{\mu \omega+\frac{\nu \tau}{2}} \erfc \left(\frac{\mu \omega+\nu \tau}{\sqrt{2} \sqrt{\nu \tau}}  \right) \right. \right.\\ \nonumber
 &\left. \left. + e ^{2 (\mu \omega+\nu \tau)} \erfc \left(\frac{\mu \omega+2 \nu \tau}{\sqrt{2} \sqrt{\nu \tau}}\right)+ \erfc \left(\frac{\mu \omega}{\sqrt{2} \sqrt{\nu \tau}}\right)\right)+\sqrt{\frac{2}{\pi }} (\mu \omega) \sqrt{\nu \tau} 
 e^{-\frac{(\mu \omega)^2}{2 (\nu \tau)}}\right)-  \left(\munn \right)^2 \\ \nonumber
\end{align}

\paragraph{Stable and Attracting Fixed Point $\bm{(0,1)}$ for Normalized Weights.}
\label{sec:perfect}

We assume a normalized weight
vector $\Bw$ with $\omega=0$ and $\tau=1$. 
Given a fixed point $(\mu,\nu)$,
we can solve equations Eq.~\eqref{eq:mappingMean} and Eq.~\eqref{eq:mappingVar} for  $\alpha$ and
$\lambda$. 
We chose the fixed point $(\mu,\nu)=(0,1)$,
which is typical for activation normalization.
We obtain the fixed point equations $\munn=\mu=0$ and $\nunn=\nu=1$ 
that we solve  for  $\alpha$ and $\lambda$ and obtain the solutions \index{SELU!parameters} $\alpha_{\mathrm{01}}\approx \ 1.6733$ and $\lambda_{\mathrm{01}}\approx \ 1.0507$,
where the subscript ${\mathrm{01}}$ indicates that these are the parameters for fixed point $(0,1)$.
The analytical expressions for $\alpha_{\mathrm{01}}$ and $\lambda_{\mathrm{01}}$ are given in Eq.~\eqref{eq:alphalambda}.  
We are interested whether the fixed point $(\mu,\nu)=(0,1)$ is stable
and attracting. If the Jacobian of $g$ has a norm smaller than 1 at the
fixed point, then $g$ is a contraction mapping and the fixed point is stable.
The (2x2)-Jacobian $\mathcal J(\mu,\nu)$ of $g:(\mu,\nu) \mapsto (\munn,\nunn)$ evaluated at the fixed point $(0,1)$ with $\alpha_{\mathrm{01}}$ and 
$\lambda_{\mathrm{01}}$ is 
\begin{align}
\mathcal J(\mu,\nu) \ &= \ 
\begin{pmatrix}
 \partial \frac{\mu^{\nn}(\mu,\nu)}{\partial \mu}  & \partial
 \frac{\mu^{\nn}(\mu,\nu)}{\partial \nu} \\
~ & ~ \\
 \partial \frac{\nu^{\nn}(\mu,\nu)}{\partial \mu} & \partial \frac{\nu^{\nn}(\mu,\nu)}{\partial \nu}
 \end{pmatrix},\
  &\mathcal J(0,1) \ = \
\begin{pmatrix}
 0.0  & 0.088834 \\
 0.0  &  0.782648 
\end{pmatrix} \ .
\end{align}

The spectral norm of $\mathcal J(0,1)$ (its largest
singular value) is $0.7877<1$. That means $g$ is a contraction
mapping around the fixed point $(0,1)$ (the mapping is depicted in Figure~\ref{fig:arrows}).
Therefore, $(0,1)$ is a stable fixed point of
the mapping $g$.

\paragraph{Stable and Attracting Fixed Points for Unnormalized Weights.}
A normalized weight vector $\Bw$ cannot be ensured during learning.
For SELU parameters 
$\alpha = \alpha_{\rm 01}$ and $\lambda=\lambda_{\rm 01}$,
we show in the next theorem that 
if $(\omega,\tau)$ is close to $(0,1)$, then $g$ still has an
attracting and stable fixed point
that is close to $(0,1)$.
Thus, in the general case there still exists a stable fixed point
which, however, depends on $(\omega,\tau)$.
If we restrict $(\mu,\nu,\omega,\tau)$ to certain intervals, then we 
can show that $(\mu,\nu)$ is mapped to the respective intervals.
Next we present the central theorem of this paper,
from which follows that SELU
networks are self-normalizing under mild conditions on the weights.
\begin{theorem}[Stable and Attracting Fixed Points]
\label{lem:fixedPoint} \index{Theorem 1}
We assume $\alpha = \alpha_{\rm 01}$ and $\lambda=\lambda_{\rm 01}$.
We restrict the range of the variables to the following intervals
$\mu \in [-0.1,0.1]$,
$\omega \in [-0.1,0.1]$,
$\nu \in [0.8,1.5]$, and
$\tau \in [0.95,1.1]$, 
that define the functions' domain $\Omega$.
For $\omega=0$ and $\tau=1$, the mapping  Eq.~\eqref{eq:mappingG}
 has the stable
fixed point $(\mu,\nu)=(0,1)$, whereas for other $\omega$ and $\tau$ the mapping  Eq.~\eqref{eq:mappingG}
 has a stable and
attracting fixed point depending on $(\omega,\tau)$ in the 
$(\mu,\nu)$-domain: $\mu \in [-0.03106, 0.06773]$ and 
$\nu \in [0.80009,1.48617]$.
All points within the $(\mu,\nu)$-domain converge when
iteratively applying the mapping  Eq.~\eqref{eq:mappingG} to this fixed point.
\end{theorem}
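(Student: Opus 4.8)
The plan is to invoke the Banach fixed-point theorem applied to the map $g(\cdot,\cdot,\omega,\tau):(\mu,\nu)\mapsto(\munn,\nunn)$ for each fixed pair $(\omega,\tau)$ in its interval. Two things must be verified: first, that $g$ maps the domain into itself, $g(\Omega)\subseteq\Omega$; second, that $g$ is a contraction on $\Omega$, i.e.\ that there is a constant $\theta<1$ with $\NRM{g(\mu_1,\nu_1)-g(\mu_2,\nu_2)}\leq\theta\,\NRM{(\mu_1,\nu_1)-(\mu_2,\nu_2)}$ throughout $\Omega$. Granting both, Banach yields a unique fixed point in $\Omega$, depending continuously on $(\omega,\tau)$, to which every point of $\Omega$ converges under iteration; for $(\omega,\tau)=(0,1)$ that fixed point is $(0,1)$ by the computation preceding the theorem. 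A useful reduction first: by Eq.~\eqref{eq:mappingMean} and Eq.~\eqref{eq:mappingVar}, $\munn$ and $\nunn$ depend on $(\mu,\omega,\nu,\tau)$ only through the products $\mu\omega$ and $\nu\tau$, so over the stated ranges everything is a function of $(x,y):=(\mu\omega,\nu\tau)$ ranging in the two-dimensional box $[-0.01,0.01]\times[0.76,1.65]$.

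For the invariance $g(\Omega)\subseteq\Omega$, I would study the closed forms for $\munn$ and $\nunn$ as functions of $x$ and $y$ and show $\munn\in[-0.03106,0.06773]$ and $\nunn\in[0.80009,1.48617]$, which is strictly inside $\Omega$. Since $x$ lives in a tiny symmetric interval around $0$, one controls $\munn,\nunn$ in $x$ near $0$ (expansion/monotonicity) and then locates the extrema in $y\in[0.76,1.65]$ by sign analysis of $\partial_y\munn$ and $\partial_y\nunn$; the extreme values are attained at corners of the box or at a small number of isolable interior critical points. This simultaneously supplies the $g(\Omega)\subseteq\Omega$ requirement of Definition~\ref{def:SNN}.

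For the contraction, the $(\mu,\nu)$-Jacobian of $g$ factors through the chain rule as
\begin{align*}
\mathcal J(\mu,\nu)=\begin{pmatrix}\omega\,\partial_x\munn & \tau\,\partial_y\munn\\[2pt] \omega\,\partial_x\nunn & \tau\,\partial_y\nunn\end{pmatrix},
\end{align*}
so the first column carries the small factor $\omega$ with $\ABS{\omega}\leq 0.1$ and is negligible, while the dominant behaviour is governed by $\tau\,\partial_y\nunn$. I would bound each of the four entries uniformly over the box using the closed forms of $\partial_x\munn,\partial_y\munn,\partial_x\nunn,\partial_y\nunn$ (combinations of $\erf$, $\erfc$, $e^{x+y/2}$, $e^{2(x+y)}$ and $e^{-x^2/(2y)}$, all bounded on the box), feed these into the exact largest-singular-value formula for a $2\times2$ matrix,
\begin{align*}
s_{\max}(\mathcal J)=\sqrt{\tfrac12\,\NRM{\mathcal J}_F^2+\tfrac12\sqrt{\NRM{\mathcal J}_F^4-4\det(\mathcal J)^2}}\,,
\end{align*}
and show $s_{\max}(\mathcal J)<1$ everywhere on the box (consistent with the value $0.7877$ at $(0,1)$). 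The mean value theorem then upgrades this to the contraction estimate with $\theta=\sup_\Omega s_{\max}(\mathcal J)<1$.

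The main obstacle is making these two bounding steps rigorous over a continuum rather than at sample points: the entries of $\mathcal J$ and the functions $\munn,\nunn$ are transcendental, so one must either exploit monotonicity and sign patterns of their derivatives to reduce the optimization to finitely many boundary/critical configurations, or establish explicit Lipschitz constants for $(x,y)\mapsto\mathcal J$ and $(x,y)\mapsto(\munn,\nunn)$ (i.e.\ bound the next order of derivatives of $g$) so that a sufficiently fine grid evaluation---ideally carried out with interval arithmetic---certifies the strict inequalities on the whole box. Carefully controlling all the subterms and their derivatives, and checking that nothing degenerates near the small-$y$ end of the range, is where essentially all of the technical effort goes; the existence, uniqueness, stability, and global-convergence conclusions then follow immediately from the Banach fixed-point theorem.
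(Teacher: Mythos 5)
Your plan is the same as the paper's: invoke Banach's fixed-point theorem, establish $g(\Omega)\subseteq\Omega$ by sign/monotonicity analysis of $\munn,\nunn$, and establish contractivity by showing the largest singular value of the $(\mu,\nu)$-Jacobian of $g$ is below $1$ uniformly, with a computer-assisted grid evaluation upgraded to a uniform bound via explicit Lipschitz constants and the mean value theorem. The paper carries out exactly this programme (Lemma~\ref{lem:region} for invariance, Lemma~\ref{lem:sBound} fed by Lemma~\ref{lem:Ds1Bounds} and Lemma~\ref{lem:meanValue} for the singular-value bound) with an equivalent closed-form for the largest singular value of a $2\times2$ matrix, and it even enlarges the $\tau$-range to $[0.8,1.25]$ for the contraction lemma so the same bound can be reused in Theorem~2.

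One caveat on your proposed simplification. The reduction to $(x,y)=(\mu\omega,\nu\tau)$ is correct and genuinely helpful for the invariance step, because $\munn$ and $\nunn$ really are functions of $(x,y)$ alone (the paper exploits precisely this inside several sub-lemmata). But the Jacobian you correctly wrote,
\begin{align*}
\mathcal J=\begin{pmatrix}\omega\,\partial_x\munn & \tau\,\partial_y\munn\\ \omega\,\partial_x\nunn & \tau\,\partial_y\nunn\end{pmatrix}
=\begin{pmatrix}\partial_x\munn & \partial_y\munn\\ \partial_x\nunn & \partial_y\nunn\end{pmatrix}\begin{pmatrix}\omega & 0\\0&\tau\end{pmatrix},
\end{align*}
is \emph{not} a function of $(x,y)$ alone: the explicit column scalings by $\omega$ and $\tau$ survive (and, since $x=\mu\omega$ with $|\mu|\le 0.1$, the admissible $(x,\omega)$ pairs are coupled by $|x|\le 0.1|\omega|$). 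Bounding $s_{\max}(\mathcal J)$ ``everywhere on the box'' thus still entails scanning or worst-casing over $\omega$ and $\tau$; plugging worst-case bounds for the four entries independently into the singular-value formula is liable to be too loose, since the $0.7877$ at $(0,1)$ does not leave much margin. The paper sidesteps this by gridding directly over $(\mu,\omega,\nu,\tau)$ and bounding $|\partial S/\partial\mu|,\dots,|\partial S/\partial\tau|$ so that the MVT certifies $S<1$ off-grid. If you pursue the $(x,y)$ coordinates, keep the $(\omega,\tau)$ dependence explicit in the spectral-norm optimisation; with that care, your argument goes through and matches the paper's.
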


\begin{figure}
 \includegraphics[width=\columnwidth]{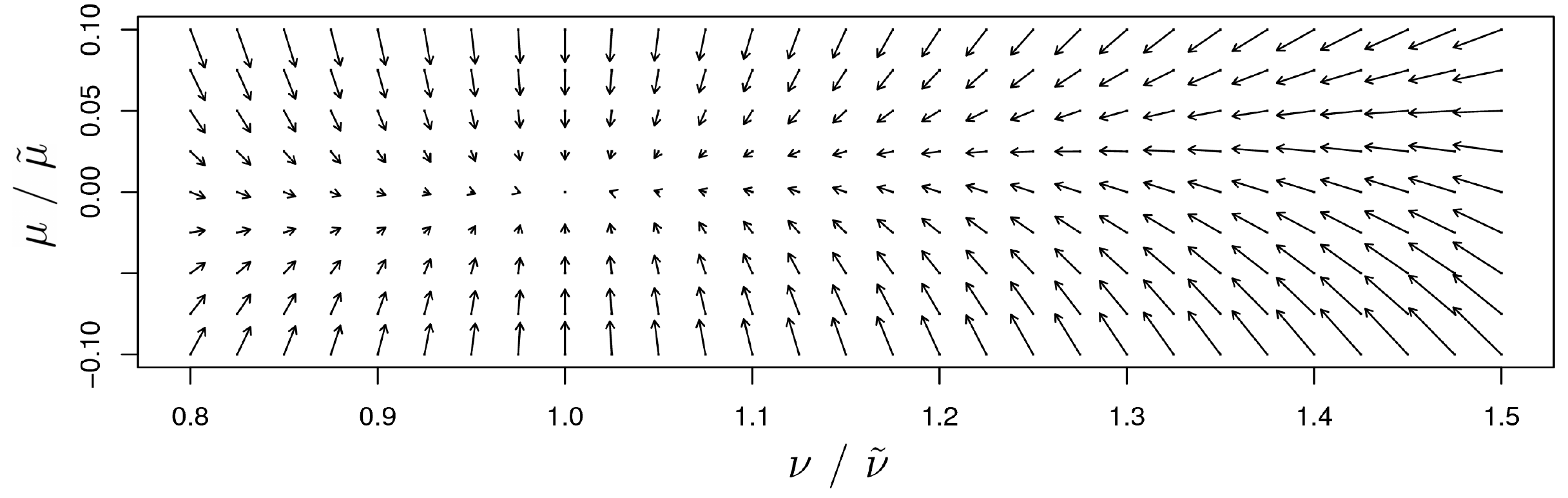}
 \caption[Visualization of the mapping $g$]{For $\omega=0$ and $\tau=1$, 
  the mapping $g$ of mean $\mu$ ($x$-axis) and variance $\nu$ ($y$-axis)
  to the next layer's mean $\munn$ and variance $\nunn$ 
  is depicted.
  Arrows show in which direction $(\mu,\nu)$ is mapped by $g:(\mu,\nu) \mapsto (\munn, \nunn)$.
  The fixed point of the mapping $g$ is $(0,1)$.
 \label{fig:arrows}}
\end{figure}

\begin{proof} \index{Theorem 1!proof sketch}
We provide a proof sketch (see detailed proof in Appendix~Section~\ref{sec:proofs}).
With the Banach fixed point theorem we show that there exists
a unique attracting and stable fixed point. 
To this end, we have to prove that  a) $g$ is a contraction mapping and b) that 
the mapping stays in the domain, that is, $g(\Omega)  \subseteq \Omega$. 
The spectral norm of the Jacobian of $g$ can be obtained via 
an explicit formula for the largest singular value for a $2\times2$ matrix. 
$g$ is a contraction mapping if its spectral norm is smaller than $1$.
We perform a computer-assisted proof to 
evaluate the largest singular value on a fine
grid and ensure the precision of the computer 
evaluation by an error propagation analysis of the implemented
algorithms on the according hardware.
Singular values between grid points are upper bounded by the 
mean value theorem. To this end, we bound the derivatives
of the formula for the largest singular value with respect to
$\omega,\tau,\mu,\nu$. 
Then we apply the mean value theorem to pairs of points, where one is on the grid and the
other is off the grid. This shows that for all values of
$\omega,\tau,\mu,\nu$ in  the domain $\Omega$, the spectral norm of
$g$ is smaller than one. 
Therefore, $g$ is a contraction mapping on the domain $\Omega$.
Finally, we show that the mapping $g$ stays in the domain $\Omega$ by
deriving bounds on $\munn$ and $\nunn$.
Hence, the Banach fixed-point theorem holds and there exists a unique
fixed point in $\Omega$ that is attained.
\end{proof}

Consequently, feed-forward neural networks with many units in each layer 
and with the SELU activation function are self-normalizing (see definition~\ref{def:SNN}), which 
readily follows from Theorem~\ref{lem:fixedPoint}.
To give an intuition, the main property of SELUs is that they damp the variance for negative
net inputs and increase the variance for positive net inputs. 
The variance damping is stronger if net inputs are further away from zero while
the variance increase is stronger if net inputs are close to zero.
Thus, for large variance of the activations in the lower
layer the damping effect is dominant and the variance decreases in the
higher layer.
Vice versa, for small variance the 
variance increase is dominant and the variance increases in the higher layer.

However, we cannot guarantee that mean and variance remain in the domain $\Omega$.
Therefore, we next treat the case where $(\mu,\nu)$ are outside $\Omega$.
It is especially crucial to consider $\nu$ because this variable has much stronger 
influence than $\mu$. Mapping $\nu$ across layers to a high value corresponds to an 
exploding gradient, since the  Jacobian of the activation of high layers with respect to activations
in lower layers has large singular values. 
Analogously, mapping $\nu$ across layers to a low value corresponds to an 
vanishing gradient. Bounding the mapping of $\nu$ from above and below would avoid 
both exploding and vanishing gradients.
Theorem~\ref{th:varDecrease} states that the variance of neuron activations of SNNs
is bounded from above, and therefore ensures that SNNs learn robustly and do not  
suffer from exploding gradients.

\begin{theorem}[Decreasing $\nu$]
\label{th:varDecrease} \index{Theorem 2}
For $\lambda=\lambda_{\rm 01}$, $\alpha=\alpha_{\rm 01}$
and the domain $\Omega^+$:
$-1 \leq \mu \leq 1$, 
$-0.1 \leq \omega \leq 0.1$,
$3 \leq \nu \leq 16$, and 
$0.8 \leq \tau \leq 1.25$, 
we have for the mapping of the variance
$\nunn(\mu,\omega,\nu,\tau,\lambda, \alpha )$  given in Eq.~\eqref{eq:mappingVar}:
$\nunn(\mu,\omega,\nu,\tau,\lambda_{\rm 01},\alpha_{\rm 01})  < \nu$.
\end{theorem}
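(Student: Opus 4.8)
The plan is to collapse this four-variable inequality to one numerical check by exploiting structure and monotonicity. First observe that $\nunn$ in Eq.~\eqref{eq:mappingVar} depends on $(\mu,\omega,\nu,\tau)$ only through the products $a:=\mu\omega$ and $b:=\nu\tau$, and that on $\Omega^+$ we have $a\in[-0.1,0.1]$ and $b\in[2.4,20]$; so write $\nunn=\nunn(a,b)$ and recast the claim as $\nunn(\mu\omega,\nu\tau)<\nu$. I would first show that, for fixed $\mu,\omega,\tau$, the function $\nu\mapsto\nunn(\mu\omega,\nu\tau)-\nu$ is strictly decreasing on $[3,16]$, so it suffices to verify $\nunn(\mu\omega,3\tau)<3$. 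Next, since $\nunn(a,b)$ is strictly increasing in $b$, the worst case on this slice is $\tau=1.25$, i.e.\ $b=3.75$; and a Lipschitz estimate for $\nunn$ in $a$ over the very short interval $[-0.1,0.1]$ pins the worst case at $a=0.1$. The theorem then follows from the single inequality $\nunn(0.1,3.75)<3$, checked directly from Eq.~\eqref{eq:mappingVar}.

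For the monotonicity in $\nu$ it suffices to bound $\partial\nunn/\partial\nu=\tau\,\partial\nunn/\partial b$; since $\tau\le1.25$ the claim follows once $\partial\nunn/\partial b<0.8$ on the whole domain. To get this I would write $\nunn=\xinn-\munn^2$ with $\xinn=\EXP(y^2)=\lambda^2\bigl(\int_0^\infty z^2\,p_{\rN}(z;a,\sqrt b)\,\mathrm dz+\alpha^2\int_{-\infty}^0(e^z-1)^2\,p_{\rN}(z;a,\sqrt b)\,\mathrm dz\bigr)$. On the range in question $\munn>0$ and $\munn$ is increasing in $b$, so $\partial\nunn/\partial b\le\partial\xinn/\partial b$; and $\partial\xinn/\partial b$ splits into the positive-part contribution $\lambda^2\bigl(\Phi(a/\sqrt b)+O(a)\bigr)$, which is below $0.6$ because $|a|\le0.1$ is tiny and $\Phi\le1/2+$ small, plus the negative-part contribution, which is positive but small and bounded explicitly through its $\erf$ and $\erfc$ terms; their sum stays comfortably below $0.8$. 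The same sign analysis of the $a$- and $b$-derivatives of the closed form delivers the monotonicity in $b$ and the Lipschitz/monotonicity control in $a$ used above.

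For the terminal evaluation $\nunn(0.1,3.75)<3$ I would use explicit antiderivatives rather than quadrature: the positive part is $\int_0^\infty z^2\,p_{\rN}(z;a,\sqrt b)\,\mathrm dz=(a^2+b)\,\Phi(a/\sqrt b)+a\sqrt b\,\varphi(a/\sqrt b)$ (here $\approx b/2$), the negative part is the exact combination of $\erfc$ terms in Eq.~\eqref{eq:mappingVar}, and one subtracts $\munn(0.1,3.75)^2$ computed from Eq.~\eqref{eq:mappingMean}. This yields $\nunn(0.1,3.75)\approx2.88$.

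The main obstacle is that the inequality is genuinely tight: at the corner $\mu\omega=0.1,\ \tau=1.25,\ \nu=3$ the value of $\nunn$ is only about $2.88$, a margin of a few percent. Hence the lossy bound $\int_0^\infty z^2\,p_{\rN}\le a^2+b$ (which throws away the fact that roughly half of the Gaussian mass is negative) is unusable and must be replaced by the truncated second moment $\approx b/2$; similarly the $-\munn^2$ term cannot be dropped. So the derivative bounds and the sign analyses underpinning the monotonicity reductions must be carried out carefully rather than by order-of-magnitude arguments. If a fully analytic treatment of those sign conditions becomes unwieldy, the safe alternative is a computer-assisted proof in the style of Theorem~\ref{lem:fixedPoint}: evaluate $\nunn-\nu$ on a fine grid in $(\mu\omega,\nu,\tau)$, bound the partial derivatives with respect to these variables (the monotonicity step already supplies most of them), close the gaps between grid points via the mean value theorem, and keep a margin that absorbs the floating-point error exactly as in the proof of Theorem~\ref{lem:fixedPoint}.
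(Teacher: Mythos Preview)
Your overall strategy---monotonicity reductions in $\nu$ and $\tau$ to the corner $\nu=3,\ \tau=1.25$, a further reduction in $\mu\omega$, then a single numerical check---is exactly what the paper does. Two points of comparison are worth noting.

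First, the paper makes one clean simplification you miss: it works with $g:=\xinn-\nu$ rather than $\nunn-\nu$. Since $\nunn=\xinn-(\munn)^2\le\xinn$, proving $\xinn<\nu$ immediately gives $\nunn<\nu$, and you never have to track $\munn$ or its derivatives. With this reduction the monotonicity in $\tau$ is free (because $\partial\xinn/\partial\tau=(\nu/\tau)\,\partial\xinn/\partial\nu>0$), and the reduction in $\mu,\omega$ is handled by showing $\partial\xinn/\partial\mu$ has the sign of $\omega$ (and symmetrically), which is cleaner than a Lipschitz argument. The paper's corner value is $\xinn(0.1,3.75)\approx 2.982$, so the true margin is about $0.018$, not the $0.12$ you estimate; your figure $\nunn\approx 2.88$ looks off.

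Second, the step you sketch most loosely---$\partial\nunn/\partial b<0.8$, equivalently $\partial\xinn/\partial\nu<1$---is in fact the whole technical content of the proof, and your argument for it does not close. The paper's tight bound is $\partial\xinn/\partial\nu\le 0.9951$ (equivalently $\partial\xinn/\partial b\lesssim 0.796$), so ``comfortably below $0.8$'' is wishful: there is essentially no slack. The paper obtains this bound through a dedicated sub-function lemma showing that
\[
e^{\frac{(x+y)^2}{2x}}\erfc\!\Big(\tfrac{x+y}{\sqrt{2x}}\Big)-2\,e^{\frac{(2x+y)^2}{2x}}\erfc\!\Big(\tfrac{2x+y}{\sqrt{2x}}\Big)
\]
is negative, strictly increasing in $x=\nu\tau$, and (at the minimal $x$) monotone in $y=\mu\omega$; this pins the extremum of the bracket in $\partial\xinn/\partial\nu$ and yields the numerical bound after evaluating $\erfc$ at explicit arguments. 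Your split into ``positive part $\lesssim 0.6$'' plus ``small negative-part contribution'' is not sharp enough to reach $0.8$. Finally, no computer-assisted fallback is needed here: unlike Theorem~\ref{lem:fixedPoint}, the paper proves this theorem fully analytically once the sub-function lemma is in hand.
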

The proof can be found in the Appendix~Section~\ref{sec:proofs}.
Thus, when mapped across many layers, the variance in the interval $[3,16]$ is mapped to a value below $3$. Consequently, all fixed
points $(\mu,\nu)$ of the mapping $g$ (Eq.~\eqref{eq:mappingG}) have $\nu<3$.
Analogously, Theorem~\ref{th:s2Increase} states that the variance of neuron activations of SNNs
is bounded from below, and therefore ensures that SNNs do not suffer from vanishing gradients.
\begin{theorem}[Increasing $\nu$]
\label{th:s2Increase} \index{Theorem 3}
We consider $\lambda=\lambda_{\rm 01}$, $\alpha=\alpha_{\rm 01}$
and the domain $\Omega^-$: 
$-0.1 \leq \mu \leq 0.1$, and
$-0.1 \leq \omega \leq 0.1$.
For the domain 
$0.02 \leq \nu \leq 0.16$
and $0.8 \leq \tau \leq 1.25$ as well as for the domain
$0.02 \leq \nu \leq 0.24$
and $0.9 \leq \tau \leq 1.25$,
the mapping of the variance
$\nunn(\mu,\omega,\nu,\tau,\lambda,\alpha )$  given in Eq.~\eqref{eq:mappingVar} 
increases:
$\nunn(\mu,\omega,\nu,\tau,\lambda_{\rm 01},\alpha_{\rm 01})  >  \nu$.
\end{theorem}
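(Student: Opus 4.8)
The plan is to prove Theorem~\ref{th:s2Increase} by the same computer-assisted strategy used for Theorem~\ref{lem:fixedPoint}, noting that here the goal is only the scalar inequality $\nunn>\nu$ rather than a bound on a $2\times2$ spectral norm, which makes the argument lighter. Define
\[
\rho(\mu,\omega,\nu,\tau)\ :=\ \nunn(\mu,\omega,\nu,\tau,\lambda_{\rm 01},\alpha_{\rm 01})\ -\ \nu ,
\]
with $\nunn$ given in closed form by Eq.~\eqref{eq:mappingVar}; it suffices to show $\rho>0$ on each of the two compact boxes making up $\Omega^-$. A useful preliminary reduction is that, reading off Eqs.~\eqref{eq:mappingMean} and \eqref{eq:mappingVar}, $\munn$ and $\nunn$ depend on $(\mu,\omega)$ only through the product $\mu\omega$ and on $(\nu,\tau)$ only through the product $\nu\tau$. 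Hence $\nunn=\widetilde F(\mu\omega,\nu\tau)$ for an explicit real-analytic $\widetilde F\colon\dR^2\to\dR$, the left-hand side of the claim does not see $\mu,\omega$ individually, and the right-hand side $\nu$ equals $b/\tau$ once $b:=\nu\tau$ is fixed; so for each value of $(a,b):=(\mu\omega,\nu\tau)$ it is enough to beat the largest admissible $\nu$ with $\nu\tau=b$. This collapses the verification to a genuinely two-parameter problem in $(a,b)$.

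With that reduction in hand I would (i) check that $\rho$ is $C^1$ on each box: every summand is analytic, all arguments of $\exp,\erf,\erfc$ remain bounded, and $\nu\tau$ is bounded away from $0$, so the factors $\sqrt{\nu\tau}\,e^{-(\mu\omega)^2/(2\nu\tau)}$ and $e^{\mu\omega+\nu\tau/2}\erfc(\cdot)$ are smooth there; (ii) derive explicit Lipschitz constants $L_\mu,L_\omega,L_\nu,L_\tau$ for $\rho$ over the domain by bounding each elementary factor and its derivative (using $0<\erfc(t)\le 2$, $|\erf(t)|\le1$, $\sup_t|t|e^{-t^2}<\infty$, and the bounded exponential ranges), exactly the derivative-bounding step of the proof of Theorem~\ref{lem:fixedPoint}; (iii) evaluate $\rho$ on a fine rectangular grid and, for any off-grid point $p$ with nearest node $q$, use $\rho(p)\ge\rho(q)-\sum_i L_i\,|p_i-q_i|$ from the mean value theorem, so that verifying $\rho(q)>\sum_i L_i\,\Delta_i/2$ at every node (with $\Delta_i$ the cell widths) certifies $\rho>0$ throughout; and (iv) make the node evaluations rigorous via the same floating-point error-propagation analysis as in Theorem~\ref{lem:fixedPoint}, so the computed lower bounds provably exceed the thresholds. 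Carrying this out for both boxes of $\Omega^-$ yields $\nunn>\nu$.

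The main obstacle is that the margin $\rho=\nunn-\nu$ is genuinely thin, because the two boxes are chosen close to maximal. For small $\nu$ both branches of $\selu$ have slope above one, so $\nunn\approx\tfrac12\lambda_{\rm 01}^2(1+\alpha_{\rm 01}^2)\,\nu\tau\approx2\nu\tau$ (up to a small $(\munn)^2$ term), and $\rho$ is only of order $\nu$, i.e.\ as small as $\sim10^{-2}$ near $\nu=0.02$; near the upper end of each $\nu$-interval the saturating negative branch has already eroded the amplification ratio $\nunn/\nu$ toward $1$, so $\rho$ approaches $0$ there as well --- which is also why the admissible $\nu$-range is larger in the box with $\tau\ge0.9$ than in the one with $\tau\ge0.8$. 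Consequently the grid spacing and the constants $L_i$ must be sharp enough that $\sum_i L_i\Delta_i/2$ stays below this small margin uniformly, forcing either a fine (possibly region-adaptive) grid or tighter analytic derivative bounds, and the floating-point certification, though routine, must likewise be carried out carefully precisely because the margin is small. A cheaper route worth trying first would be to establish that $b\mapsto\widetilde F(a,b)$ is increasing and concave on the domain (note $\widetilde F(a,0)=0$), so that $\widetilde F(a,b)/b$ is decreasing and the whole verification reduces to the single worst corner $b=\nu_{\max}\tau_{\min}$ together with a one-parameter sweep in $a$; but proving those shape properties over the full range is itself delicate, so the grid argument is the safe fallback.
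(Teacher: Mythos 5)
Your proposal would yield a valid proof, but it takes a genuinely different and heavier route than the paper's. The paper does not carry out a four-dimensional grid verification of the inequality $\nunn>\nu$. Instead it applies the mean value theorem only in $\nu$, writing
\[
\xinn(\mu,\omega,\nu,\tau)=\xinn(\mu,\omega,\nu_{\min},\tau)+\frac{\partial\xinn}{\partial\nu}\bigl(\mu,\omega,\nu+t(\nu_{\min}-\nu),\tau\bigr)\,(\nu-\nu_{\min}),
\]
and then produces three analytic bounds: a constant lower bound on $\partial\xinn/\partial\nu$ over the box (roughly $0.969$ on $\Omega_1^-$ and $0.977$ on $\Omega_2^-$, obtained from the Monotone Derivative Lemma~\ref{th:s2monotone}, which pins the worst case for the subfunction at an explicit corner in $\mu\omega$ and $\nu\tau$), a lower bound on the starting value $\xinn(\mu,\omega,\nu_{\min},\tau)$ at the worst corner (via Lemma~\ref{lem:mapDerivatives}), and the uniform bound $(\munn)^2<0.005$ from Lemma~\ref{lem:musquared}. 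Combining these gives $\nunn> c + s\nu$ with $s\approx0.97<1$ and $c>0$, and then $c>(1-s)\nu_{\max}$ settles the claim for the whole $\nu$-range in one stroke, with no gridding over $\nu$. The only computer-assisted grid evaluation in the paper's proof is confined to Lemma~\ref{lem:musquared}, bounding $(\munn)^2$. Your observation that $\munn,\nunn$ factor through $(\mu\omega,\nu\tau)$ is correct and the paper uses it in the supporting lemmata (via the substitution $x=\nu\tau$, $y=\mu\omega$), but the top-level argument is a one-dimensional linearization in $\nu$, not a Lipschitz-plus-grid sweep. Your "cheaper route" paragraph (monotonicity and concavity in $b=\nu\tau$ so that $\widetilde F(a,b)/b$ is decreasing) is in the right spirit and is the closest thing to what the paper actually does, except the paper never needs concavity: a uniform derivative lower bound on $[\nu_{\min},\nu]$ already suffices because $\nu_{\max}$ is small enough that the constant offset $c$ dominates the deficit $(1-s)\nu$. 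What your grid approach buys is robustness to thin margins and freedom from the monotonicity analysis; what the paper's approach buys is that almost the entire proof is explicit and hand-checkable, with the numerics restricted to a single two-dimensional bound on $(\munn)^2$. One small caveat: the paper's appendix version of the theorem uses $\nu_{\min}=0.05$ rather than the main-text $0.02$; your plan, if carried out literally on $[0.02,0.16]$, would need to confirm the inequality on that slightly larger interval as well.
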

The proof can be found in the Appendix~Section~\ref{sec:proofs}.
All fixed
points $(\mu,\nu)$ of the mapping $g$ (Eq.~\eqref{eq:mappingG}) ensure for $0.8 \leq \tau$ that
$\nunn>0.16$ 
and for $0.9 \leq \tau$ that $\nunn>0.24$.
Consequently, the variance mapping Eq.~\eqref{eq:mappingVar} ensures a lower bound on the variance $\nu$. 
Therefore SELU networks control the variance of the activations and
push it into an interval, whereafter the mean and variance move toward
the fixed point. 
Thus, SELU networks are steadily normalizing the variance and
subsequently normalizing the mean, too. 
In all experiments, we observed that 
self-normalizing neural networks push the mean and variance of activations into the domain $\Omega$ .

\paragraph{Initialization.}
\label{sec:init} \index{initialization}
Since SNNs have a fixed point at zero mean and unit variance 
for normalized weights $\omega=\sum_{i=1}^n w_i=0$ and
$\tau=\sum_{i=1}^n w_i^2=1$ (see above), 
we initialize SNNs such that these
constraints are fulfilled in expectation.
We draw the weights from a Gaussian distribution 
with $\EXP(w_i)=0$ and variance $\VAR(w_i)=1/n$.
Uniform and truncated Gaussian distributions with these moments 
led to networks with similar behavior. 
The ``MSRA initialization''  is similar since 
it uses zero mean and variance $2/n$ to initialize the weights \citep{bib:He2015init}.
The additional factor $2$ counters the effect of rectified 
linear units.

\paragraph{New Dropout Technique.}
\label{sec:dropout} \index{dropout}
Standard dropout randomly sets an activation $x$ to zero with probability $1-q$ for $0 < q \leq 1$. 
In order to preserve the mean, the activations are scaled by $1/q$ during training. 
If $x$ has mean $\EXP(x)=\mu$ and variance
$\VAR(x)=\nu$, and the dropout variable $d$ follows
a binomial distribution $B(1,q)$, then the mean $\EXP(1/q d x)= \mu$ is kept.
Dropout fits well to rectified linear units, since
zero is in the low variance region and corresponds
to the default value.
For scaled exponential linear units, the default and low variance
value is $\lim_{x \to -\infty} \selu(x)=-\lambda \alpha=\alpha'$. 
Therefore, we propose ``alpha dropout'', 
that randomly sets inputs to $\alpha'$. 
The new mean and new variance is
$\EXP(x d + \alpha' (1-d)) = q \mu + (1-q) \alpha'$, and 
$\VAR(x d + \alpha' (1-d)) = q ((1-q)(\alpha'-\mu)^2+\nu )$.
We aim at keeping mean and variance to their original values after ``alpha
dropout'', in order to ensure the self-normalizing property even for ``alpha dropout''.
The affine transformation $a(x d + \alpha'(1-d))+b$ allows to
determine parameters $a$ and $b$ such that mean and variance are kept to their values: 
$\EXP(a(x d + \alpha'(1-d))+b)=\mu \ \ \text{and} \ \ \VAR(a(x d + \alpha'(1-d))+b)=\nu \ .$
In contrast to dropout, $a$ and $b$ will depend on $\mu$ and $\nu$,
however our SNNs converge to activations with
zero mean and unit variance.  
With $\mu=0$ and $\nu=1$, we obtain $a=\left(q+\alpha'^2q(1-q)\right)^{-1/2}$ and $b=-\left(q+\alpha'^2q(1-q)\right)^{-1/2} \left((1-q)\alpha'\right)$.
The parameters $a$ and $b$ only depend on the dropout rate $1-q$ 
and the most negative activation $\alpha'$. 
Empirically, we found that dropout rates $1-q=0.05$ or $0.10$ lead to models with good performance.
``Alpha-dropout'' fits well to scaled exponential linear units by randomly setting 
activations to the negative saturation value.

\paragraph{Applicability of the central limit theorem and independence assumption.}
\label{sec:clt} \index{central limit theorem}
In the derivative of the mapping (Eq.~\eqref{eq:mappingG}), we used the central limit theorem (CLT) 
to approximate the network inputs $z=\sum_{i=1}^n w_i x_i$ with a normal distribution.
We justified normality because network inputs represent a weighted sum of the inputs $x_i$, where for Deep Learning $n$ is typically large.
The Berry-Esseen theorem states that the convergence rate to normality is $n^{-1/2}$ \citep{bib:Korolev2012}. 
In the classical version of the CLT, the random variables have to be independent and identically 
distributed, which typically does not hold for neural networks.
However, the Lyapunov CLT does not require the variable to be identically distributed anymore. Furthermore,
even under weak dependence, sums of random variables converge in distribution to a Gaussian distribution \cite{bib:Bradley1981}.

\section*{Experiments}
\index{experiments}
We compare SNNs to other deep networks at different
benchmarks. 
Hyperparameters such as
number of layers (blocks), neurons per layer, learning rate, and dropout rate,
are adjusted by grid-search for each dataset on a separate validation set
(see Section~\ref{sec:experiments}). 
We compare the following FNN methods: \index{experiments!methods compared}
\begin{itemize}
\item {\bf ``MSRAinit'':} FNNs without normalization and 
with ReLU activations and ``Microsoft weight initialization'' \citep{bib:He2015init}.
\item {\bf ``BatchNorm'':} FNNs with batch normalization \citep{bib:Ioffe2015}. 
\item {\bf ``LayerNorm'':} FNNs with layer normalization \citep{bib:Ba2016}. 
\item {\bf ``WeightNorm'':} FNNs with weight normalization \citep{bib:Salimans2016}. 
\item {\bf ``Highway'':} Highway networks \citep{bib:Srivastava2015}.
\item {\bf ``ResNet'':} Residual networks \citep{bib:He2015res} adapted to FNNs  
using residual blocks with 2 or 3 layers with rectangular or diavolo shape. 
\item {\bf ``SNNs'':} Self normalizing networks with SELUs with $\alpha=\alpha_{\mathrm{01}}$ and $\lambda=\lambda_{\mathrm{01}}$ and 
the proposed dropout technique and initialization strategy. 
\end{itemize}

%

%

\paragraph{121 UCI Machine Learning Repository datasets.} \index{experiments!UCI}
The benchmark comprises 121 classification datasets from the UCI Machine Learning repository \cite{bib:Fernandez2014} 
from diverse application areas, such as physics, geology, or biology.
The size of the datasets ranges between $10$ and $130,000$ data points and the 
number of features from $4$ to $250$. 
In abovementioned work \citep{bib:Fernandez2014},
there were methodological mistakes \citep{bib:Wainberg2016} which we avoided here. 
Each compared FNN method
was optimized with respect to its architecture and hyperparameters on a validation set that was then 
removed from the subsequent analysis. 
The selected hyperparameters served to evaluate the methods in terms of accuracy on 
the pre-defined test sets (details on the hyperparameter selection are given in Section~\ref{sec:experiments}).
The accuracies are reported in the Table~\ref{tab:UCIfull}. 
We ranked the methods by their accuracy for each
prediction task and compared their average ranks.
SNNs significantly outperform all competing networks in pairwise comparisons (paired
Wilcoxon test across datasets) as reported in Table~\ref{tab:uci} (left panel).


\begin{table}[htp]
\caption[Comparison of seven FNNs on 121 UCI tasks]{{\bf Left:} Comparison of seven FNNs on 121 UCI tasks. 
We consider the average rank difference to rank $4$, which is
the average rank of seven methods with random predictions. 
The first column gives the method, the second 
the average rank difference, and the last the $p$-value 
of a paired Wilcoxon test whether the difference to the best performing 
method is significant.
SNNs significantly outperform all other methods.
{\bf Right:} Comparison of 24 machine learning methods (ML) on the UCI datasets
with more than 1000 data points. 
The first column gives the method, the second 
the average rank difference to rank $12.5$, and the last the $p$-value 
of a paired Wilcoxon test whether the difference to the best performing 
method is significant. Methods that were significantly worse than
the best method are marked with ``*''.
The full tables can be found in Table~\ref{tab:UCIfull}, Table~\ref{tab:uciS1} and Table~\ref{tab:uciS2}.
SNNs outperform all competing methods. 
\label{tab:uci} \label{tab:uci2}}
\centering
\begin{tabular}{lcclcc}
  \toprule
\multicolumn{3}{c}{FNN method comparison}   &  \multicolumn{3}{c}{ML method comparison} \\
 Method      & avg. rank diff. & $p$-value  & Method  &  avg. rank diff. & $p$-value \\ 
    \midrule
SNN         & -0.756$\ \ $ &  &  SNN &  -6.7$\ \ $  &  \\ 
MSRAinit    & -0.240* &    { 2.7e-02}  & SVM  &  -6.4$\ \ $  &  5.8e-01 \\ 
LayerNorm   & -0.198*  &    { 1.5e-02} &  RandomForest &  -5.9$\ \ $  &  2.1e-01 \\ 
Highway     & $\  $0.021*  &    { 1.9e-03} &  MSRAinit &  -5.4* &  { 4.5e-03} \\ 
ResNet      & $\  $0.273* &    { 5.4e-04} &  LayerNorm &  -5.3$\ \ $  &  7.1e-02 \\ 
WeightNorm  & $\  $0.397* &    { 7.8e-07} &  Highway &  -4.6* &  { 1.7e-03} \\ 
BatchNorm   & $\  $0.504* &    { 3.5e-06} &  $\ldots$ &   $\ldots$ &  $\ldots$ \\ 
   \bottomrule
\end{tabular}
\end{table}

We further included 17 machine learning methods representing diverse method groups \citep{bib:Fernandez2014} 
in the comparison and 
the grouped the data sets into ``small'' and ``large'' data sets (for details see Section~\ref{sec:experiments}).
On 75 small datasets with less than 1000 data points, random forests and SVMs outperform SNNs and other FNNs. 
On 46 larger datasets with at least 1000 data points, 
SNNs show the highest performance followed by SVMs and random forests (see right panel of Table~\ref{tab:uci2},
for complete results see Tables~\ref{tab:uciS1}~and~\ref{tab:uciS1}).
Overall, SNNs have outperformed state of the art machine learning methods on UCI datasets
with more than 1,000 data points.


Typically, hyperparameter selection chose SNN architectures that were
much deeper than the selected architectures of other FNNs, with an average depth of 10.8 layers, 
compared to average depths of 6.0 for BatchNorm, 3.8 WeightNorm, 7.0 LayerNorm, 5.9 Highway,  
and 7.1 for MSRAinit networks. For ResNet, the average number of blocks was 6.35. 
SNNs with many more than 4 layers often provide the best predictive accuracies across all neural networks.

\paragraph{Drug discovery: The Tox21 challenge dataset.} \index{experiments!Tox21}
The Tox21 challenge dataset comprises about 12,000 chemical compounds
whose twelve toxic effects have to be predicted based on their chemical structure. 
We used the validation sets 
of the challenge winners for hyperparameter selection (see Section~\ref{sec:experiments}) and 
the challenge test set for performance comparison. 
We repeated the whole evaluation procedure 5 times 
to obtain error bars. 
The results in terms of average AUC are given in Table~\ref{tab:tox21}.
In 2015, the challenge organized by the US NIH 
was won by an ensemble of shallow ReLU FNNs which achieved an AUC of 0.846 \citep{bib:Mayr2016}.
Besides FNNs, this ensemble also contained random forests and SVMs.
Single SNNs came close with an AUC of 0.845$\pm$0.003.
The best performing SNNs have 8 layers, compared to the runner-ups ReLU networks with layer normalization with 2 and 3 layers. 
Also batchnorm and weightnorm networks, typically perform best with shallow
networks of 2 to 4 layers (Table~\ref{tab:tox21}). The deeper the networks, the 
larger the difference in performance between SNNs and other methods (see columns 5--8 of Table~\ref{tab:tox21}).
The best performing method is an SNN with 8 layers.

\index{experiments!Tox21!hyperparameters}
\begin{table}[ht]
\caption[Comparison of FNNs at the Tox21 challenge dataset]{Comparison of FNNs at the Tox21 challenge dataset
in terms of AUC. The rows  represent different methods and the columns 
different network depth and for ResNets  the number of residual blocks 
(``na'': 32 blocks were omitted due to computational constraints).
The deeper the networks, the more prominent is the advantage of SNNs.
The best networks are SNNs with 8 layers. 
 \label{tab:tox21}}
\centering
\begin{tabular}{lccccccc}
  \toprule
  \multicolumn{8}{c}{\#layers / \#blocks} \\
  method & 2 & 3 & 4 & 6 & 8 & 16 & 32 \\ 
  \midrule 
  SNN        & {\em 83.7} \tiny $\pm$ 0.3 & {\bf 84.4} \tiny $\pm$ 0.5 & {\bf 84.2} \tiny $\pm$ 0.4 & {\bf 83.9} \tiny $\pm$ 0.5 & {\bf 84.5} \tiny $\pm$ 0.2 & {\bf 83.5} \tiny $\pm$ 0.5 & {\bf 82.5} \tiny $\pm$ 0.7 \\ 
  Batchnorm  & 80.0 \tiny $\pm$ 0.5       & 79.8 \tiny $\pm$ 1.6       & 77.2 \tiny $\pm$ 1.1       & 77.0 \tiny $\pm$ 1.7       & 75.0 \tiny $\pm$ 0.9       & 73.7 \tiny $\pm$ 2.0       & 76.0 \tiny $\pm$ 1.1 \\ 
  WeightNorm & {\em 83.7} \tiny $\pm$ 0.8 & 82.9 \tiny $\pm$ 0.8       & 82.2 \tiny $\pm$ 0.9       & {\em 82.5} \tiny $\pm$ 0.6 & {\em 81.9} \tiny $\pm$ 1.2 & 78.1 \tiny $\pm$ 1.3       & 56.6 \tiny $\pm$ 2.6 \\ 
  LayerNorm  & {\bf 84.3} \tiny $\pm$ 0.3 & {\em 84.3} \tiny $\pm$ 0.5 & {\em 84.0} \tiny $\pm$ 0.2 & {\em 82.5} \tiny $\pm$ 0.8 & 80.9 \tiny $\pm$ 1.8       & 78.7 \tiny $\pm$ 2.3       & 78.8 \tiny $\pm$ 0.8 \\ 
  Highway    & 83.3 \tiny $\pm$ 0.9       & 83.0 \tiny $\pm$ 0.5       & 82.6 \tiny $\pm$ 0.9       & 82.4 \tiny $\pm$ 0.8       & 80.3 \tiny $\pm$ 1.4       & 80.3 \tiny $\pm$ 2.4       & 79.6 \tiny $\pm$ 0.8 \\ 
  MSRAinit     & 82.7 \tiny $\pm$ 0.4       & 81.6 \tiny $\pm$ 0.9       & 81.1 \tiny $\pm$ 1.7       & 80.6 \tiny $\pm$ 0.6       & 80.9 \tiny $\pm$ 1.1       & 80.2 \tiny $\pm$ 1.1       & 80.4 \tiny $\pm$ 1.9 \\ 
  ResNet    & 82.2 \tiny $\pm$ 1.1       & 80.0 \tiny $\pm$ 2.0       & 80.5 \tiny $\pm$ 1.2       &   81.2 \tiny $\pm$  0.7                       & 81.8 \tiny $\pm$ 0.6       & {\em 81.2} \tiny $\pm$ 0.6 &   na \\ 
   \bottomrule
\end{tabular}
\end{table}

\paragraph{Astronomy: Prediction of pulsars in the HTRU2 dataset.} \index{experiments!HTRU2} \index{experiments!astronomy}

Since a decade, machine learning methods have been used to identify pulsars in radio wave signals \citep{bib:Lyon2016a}.
Recently, the High Time Resolution Universe Survey (HTRU2) dataset has been released with
1,639 real pulsars and 16,259 spurious signals.
Currently, the highest AUC value of a 10-fold cross-validation is 0.976
which has been achieved by Naive Bayes classifiers followed by decision tree C4.5 with 0.949 and SVMs with 0.929.
We used eight features constructed by the PulsarFeatureLab as used previously \citep{bib:Lyon2016a}.
We assessed the performance of FNNs using 10-fold nested cross-validation,
where the hyperparameters were selected in the inner loop on a validation set (for details on the hyperparameter selection see Section~\ref{sec:experiments}). 
Table~\ref{tab:HTRU2} reports the results
in terms of AUC. SNNs outperform all other methods and have pushed the state-of-the-art 
to an AUC of $0.98$.

\begin{table}[ht]
\caption[Comparison of FNNs and reference methods at HTRU2]{Comparison of FNNs and reference methods at HTRU2 
in terms of AUC. 
The first, fourth and seventh column give the method, 
the second, fifth and eight column the AUC averaged over 10 cross-validation folds, 
and the third and sixth column the $p$-value of a paired Wilcoxon test of the AUCs against 
the best performing method across the 10 folds. 
FNNs achieve better results than Naive Bayes (NB), C4.5, and SVM.
SNNs exhibit the best performance and set a new record. \label{tab:HTRU2}}
\centering
\begin{tabular}{lcclccllc}
  \toprule
  \multicolumn{3}{c}{FNN methods}   &  \multicolumn{3}{c}{FNN methods} & \multicolumn{2}{c}{ref. methods}  \\
   method &  AUC & $p$-value & method &  AUC & $p$-value & method &  AUC  \\ 
  \midrule
  SNN           &    0.9803$\ \ $ \tiny $\pm$  0.010 &    & & & & &                     \\ 
  MSRAinit      & 0.9791$\ \ $  \tiny $\pm$  0.010 & \tiny3.5e-01       &  LayerNorm  & 0.9762* \tiny $\pm$  0.011 & \tiny { 1.4e-02}     & NB &0.976      \\ 
  WeightNorm    & 0.9786*         \tiny $\pm$  0.010 & \tiny{ 2.4e-02} &  BatchNorm  & 0.9760$\ \ $ \tiny $\pm$  0.013 & \tiny 6.5e-02                & C4.5 & 0.946  \\ 
  Highway       & 0.9766*         \tiny $\pm$  0.009 & \tiny{ 9.8e-03} & ResNet & 0.9753* \tiny $\pm$ 0.010 & \tiny 6.8e-03       & SVM  & 0.929\\ 
   \bottomrule
\end{tabular}
\end{table}
%

\section*{Conclusion}
We have introduced self-normalizing neural networks for 
which we have proved that neuron activations are pushed towards zero mean and unit variance
when propagated through the network. 
Additionally, for activations not close to unit
variance, we have proved an upper and lower bound 
on the variance mapping. Consequently, SNNs do not face vanishing and exploding gradient 
problems.  Therefore, SNNs work well for architectures with many layers, allowed us to introduce a 
novel regularization scheme, and learn very robustly.
On 121 UCI benchmark datasets, SNNs have outperformed other FNNs with and without normalization techniques, 
such as batch, layer, and weight normalization, or specialized architectures, such as Highway or 
Residual networks. 
SNNs also yielded the best results on drug discovery and astronomy tasks.
The best performing SNN architectures are typically very deep in contrast to other FNNs. 


\section*{Acknowledgments}
This work was supported by IWT research grant IWT150865 (Exaptation), H2020 project
grant 671555 (ExCAPE), grant IWT135122 (ChemBioBridge), 
Zalando SE with Research Agreement 01/2016, 
Audi.JKU Deep Learning Center, Audi Electronic Venture GmbH, 
and the NVIDIA Corporation.

\section*{References}
The references are provided in Section~\ref{sec:references}.


\section*{Appendix}
\renewcommand{\thesection}{A\arabic{section}}
\renewcommand{\thefigure}{A\arabic{figure}}
\renewcommand{\thetable}{A\arabic{table}}

\sectionfont{\large}
\subsectionfont{\normalsize}
\subsubsectionfont{\normalsize}
\paragraphfont{\normalsize}


\setcounter{theorem}{0}
\tableofcontents


\vspace{1cm}
This appendix is organized as follows: the first section 
sets the background, definitions, and formulations.
The main theorems are presented in the next section.
The following section is devoted to the proofs of these theorems.
The next section reports additional results and details on the
performed computational experiments, such as hyperparameter selection. 
The last section shows that our theoretical bounds can be
confirmed by numerical methods as a sanity check.

The proof of theorem 1 is based on the Banach's fixed point theorem 
for which we require (1) a contraction mapping, which is proved in Subsection~\ref{sec:S}
and (2) that the mapping stays within its domain, which is proved in Subsection~\ref{sec:maptoregion}
For part (1), the proof relies on the main Lemma~12, which is a computer-assisted proof, and can be found 
in Subsection~\ref{sec:S}. The validity of the computer-assisted proof is shown in Subsection~\ref{sec:error} by 
error analysis and the precision of the functions' implementation. 
The last Subsection~\ref{sec:smallLemmata} compiles various lemmata with intermediate results that support 
the proofs of the main lemmata and theorems.

\section{Background}
\label{sec:fixedpointanalysis}
We consider a neural network with {\bf activation function} $f$ and 
two consecutive layers that are connected by {\bf weight matrix} $\BW$.
Since samples that serve as input to the neural network are chosen according to a distribution, 
the  {\bf activations $\Bx$ in the lower layer}, 
the {\bf network inputs} $\Bz=\BW \Bx$, and {\bf activations $\By=f(\Bz)$ in the
higher layer} are all random variables. We assume that all units $x_i$ in the lower layer
have {\bf mean activation} $\mu:=\E(x_i)$ and {\bf variance of the
activation}
$\nu:=\Var(x_i)$ and a unit $y$ in the
higher layer has mean activation $\munn:=\E(y)$ and variance
$\nunn:=\Var(y)$. Here $\E(.)$ denotes the expectation and
$\Var(.)$ the variance of a random variable.
For activation of unit $y$, we have net input  $z=\Bw^T \Bx$ and 
the {\bf scaled exponential linear unit (SELU)}
activation $y=\selu(z)$, with 
\begin{align}
 \selu(x) \ &= \ \lambda 
\ \begin{cases}
  x & \text{if } x > 0 \\
 \alpha e^{x}-\alpha & \text{if } x \leq 0
 \end{cases}  \ .
\end{align}
For $n$ units $x_i, 1\leq i \leq n$ in the lower layer and 
the {\bf weight vector} $\Bw \in \dR^n$, we define 
{\bf $n$ times the mean} by $\omega:=\sum_{i=1}^n  w_i$ 
and {\bf $n$ times the second moment} by $\tau:=\sum_{i=1}^n  w_i^2$.

We define a {\bf mapping $g$} from mean $\mu$ and
variance $\nu$ of one layer 
to the mean $\munn$ and  variance  $\nunn$ in the next layer:
\begin{align}
\label{eq:mapping}
g: (\mu,\nu) \mapsto (\munn,\nunn) \ .
\end{align}
For neural networks with scaled exponential linear units, 
the mean is of the activations in the next layer computed according to
\begin{align}
\munn \ &= \ \int_{-\infty}^{0} \lambda \alpha (\exp(z)- 1)
  p_{\mathrm{Gauss}}(z; \mu \omega ,\sqrt{\nu \tau}) dz \ + \ 
\int_{0}^{\infty} \lambda  z  p_{\mathrm{Gauss}}(z; \mu \omega ,\sqrt{\nu \tau}) dz \ ,
\end{align}
and the second moment of the activations in the next layer is computed according to
\begin{align}
\xinn \ &= \ \int_{-\infty}^{0} \lambda^2 \alpha^2 (\exp (z)- 1)^2
  p_{\mathrm{Gauss}}(z; \mu \omega ,\sqrt{\nu \tau}) dz \ + \ 
\int_{0}^{\infty} \lambda^2  z^2  p_{\mathrm{Gauss}}(z; \mu \omega ,\sqrt{\nu \tau}) dz \ .
\end{align}

Therefore, the expressions $\munn$ and $\nunn$ have the following form: \index{mapping $g$!definition}
\begin{align} 
&\munn(\mu,\omega,\nu,\tau, \lambda ,\alpha ) \
  = \frac{1}{2} \lambda  \left(-(\alpha +\mu  \omega ) \erfc \left(\frac{\mu  \omega }{\sqrt{2} \sqrt{\nu  \tau }}\right)+ \right. \\ \nonumber & \left.
    \alpha  e^{\mu  \omega +\frac{\nu  \tau }{2}} \erfc 
    \left(\frac{\mu  \omega +\nu  \tau }{\sqrt{2} \sqrt{\nu  \tau }}\right)+ 
    \sqrt{\frac{2}{\pi }} \sqrt{\nu  \tau } e^{-\frac{\mu ^2 \omega ^2}{2 \nu  \tau }}+2 \mu  \omega \right) \\  
&\nunn(\mu, \omega,\nu,\tau,\lambda ,\alpha )
  \ = \xinn(\mu, \omega,\nu,\tau,\lambda ,\alpha ) - \left( \munn(\mu,\omega,\nu,\tau, \lambda ,\alpha ) \right)^2 \\
\label{eq:mappingSecondMom}
&\xinn(\mu, \omega,\nu,\tau,\lambda ,\alpha )
  \ = \ \frac{1}{2} \lambda ^2 \left(\left((\mu \omega)^2+\nu \tau \right) 
  \left(\erf  \left(\frac{\mu \omega}{\sqrt{2}
  \sqrt{\nu \tau}}\right)+1\right)+\right. \\ \nonumber
 &\left. \alpha ^2 \left(-2 e^{\mu \omega+\frac{\nu \tau}{2}} \erfc \left(\frac{\mu \omega+\nu \tau}{\sqrt{2} \sqrt{\nu \tau}}\right)
 + e ^{2 (\mu \omega+\nu \tau)} \erfc \left(\frac{\mu \omega+2 \nu \tau}{\sqrt{2} \sqrt{\nu \tau}}\right)+\right.\right. \\ \nonumber
 &\left.\left. \erfc \left(\frac{\mu \omega}{\sqrt{2} \sqrt{\nu \tau}}\right)\right)+\sqrt{\frac{2}{\pi }} (\mu \omega) \sqrt{\nu \tau} 
 e^{-\frac{(\mu \omega)^2}{2 (\nu \tau)}}\right) 
\end{align}

We solve equations Eq.~\ref{eq:mappingMean} and
Eq.~\ref{eq:mappingVar} for fixed points $\munn= \mu$ and $\nunn= \nu$. 
For a normalized weight vector with $\omega=0$ and $\tau = 1$ and the
{\bf fixed point $(\mu,\nu)=(0,1)$},
 we can solve equations Eq.~\ref{eq:mappingMean} and 
Eq.~\ref{eq:mappingVar} for $\alpha$ and $\lambda$.
We denote the solutions to fixed point $(\mu,\nu)=(0,1)$
by  $\alpha_{\rm 01}$ and $\lambda_{\rm 01}$.
\begin{align}
\label{eq:alphalambda}
&\alpha_{\rm 01} =  -\frac{\sqrt{\frac{2}{\pi}}}{\erfc \left(\frac{1}{\sqrt{2}}\right) \exp \left(\frac{1}{2}\right)-1} \approx  1.67326\\ \nonumber
&\lambda_{\rm 01} =  \left(1-\erfc \left(\frac{1}{\sqrt{2}}\right) \sqrt{e}\right) \sqrt{2 \pi}  \\ \nonumber
&\left(2 \erfc \left(\sqrt{2}\right) e^2 +\pi
  \erfc \left(\frac{1}{\sqrt{2}}\right)^2 e -2 (2+\pi ) \erfc \left(\frac{1}{\sqrt{2}}\right) \sqrt{e}+\pi +2\right)^{-1/2}\\ \nonumber
&\lambda_{\rm 01} \approx  1.0507 \ .
\end{align}
The parameters $\alpha_{\rm 01}$ and $\lambda_{\rm 01}$ ensure \index{SELU!parameters}
\begin{align}
 &\munn(0, 0, 1, 1, \lambda_{\rm 01} ,\alpha_{\rm 01}) = 0 \nonumber \\
 &\nunn(0, 0, 1, 1, \lambda_{\rm 01} ,\alpha_{\rm 01}) = 1 \nonumber
\end{align}

Since we focus on the fixed point  $(\mu,\nu)=(0,1)$,
we assume throughout the analysis that $\alpha = \alpha_{\rm 01}$ and $\lambda=\lambda_{\rm 01}$.
We consider the functions $\munn(\mu, \omega,\nu,\tau,\lambda_{\rm 01} ,\alpha_{\rm 01})$, 
$\nunn(\mu, \omega,\nu,\tau,\lambda_{\rm 01} ,\alpha_{\rm 01})$, 
and $\xinn(\mu, \omega,\nu,\tau,\lambda_{\rm 01} ,\alpha_{\rm 01})$ 
on the {\bf domain} 
$\Omega = \{(\mu, \omega,\nu,\tau) \ | \  \mu \in [\mu_{\rm
  min},\mu_{\rm max}]= [-0.1,0.1], \omega \in [\omega_{\rm min},
\omega_{\rm max}]=[-0.1,0.1],
\nu \in [\nu_{\rm min},\nu_{\rm max}]=
[0.8,1.5], \tau \in [\tau_{\rm min},\tau_{\rm max}]=[0.95,1.1] \}$.

Figure~\ref{fig:arrows} visualizes 
the mapping $g$ for $\omega=0$ and $\tau=1$ and 
$\alpha_{\rm 01}$ and $\lambda_{\rm 01}$ at few pre-selected points.
It can be seen that $(0,1)$ is an attracting 
fixed point of the mapping $g$.



\section{Theorems}
\subsection{Theorem 1: Stable and Attracting Fixed Points Close to (0,1)}

\index{Theorem 1}
Theorem~\ref{lem:fixedPoint}
shows that the mapping $g$ defined by Eq.~\eqref{eq:mappingMean}
and Eq.~\eqref{eq:mappingVar} 
exhibits a stable and attracting fixed point close to zero mean and
unit variance. 
Theorem~\ref{lem:fixedPoint} establishes the self-normalizing property of self-normalizing
neural networks (SNNs). The stable and
attracting fixed point leads to robust learning through many layers.

\begin{theorem}[Stable and Attracting Fixed Points]
We assume $\alpha = \alpha_{\rm 01}$ and $\lambda=\lambda_{\rm 01}$.
We restrict the range of the variables to the domain \index{domain!Theorem 1}
$\mu \in [-0.1,0.1]$,
$\omega \in [-0.1,0.1]$,
$\nu \in [0.8,1.5]$, and
$\tau \in [0.95,1.1]$.
For $\omega=0$ and $\tau=1$, the mapping  Eq.~\eqref{eq:mappingMean}
and Eq.~\eqref{eq:mappingVar} has the stable
fixed point $(\mu,\nu)=(0,1)$.
For other $\omega$ and $\tau$ the mapping  Eq.~\eqref{eq:mappingMean}
and Eq.~\eqref{eq:mappingVar}  has a stable and
attracting fixed point depending on $(\omega,\tau)$ in the 
$(\mu,\nu)$-domain: $\mu \in [-0.03106, 0.06773]$ and 
$\nu \in [0.80009,1.48617]$.
All points within the $(\mu,\nu)$-domain converge when
iteratively applying the mapping  Eq.~\eqref{eq:mappingMean}
and Eq.~\eqref{eq:mappingVar} to this fixed point.
\end{theorem}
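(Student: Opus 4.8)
The plan is to invoke the Banach fixed-point theorem on the mapping $g:(\mu,\nu)\mapsto(\munn,\nunn)$ restricted to the compact rectangle $\Omega$ in the $(\mu,\nu)$-variables (with $(\omega,\tau)$ treated as fixed parameters ranging over their own intervals). Two things must be checked: (a) $g$ is a contraction on $\Omega$, i.e.\ there is a constant $\theta<1$ with $\|\mathcal J(\mu,\nu)\|_2\le\theta$ everywhere on $\Omega$ (uniformly over the admissible $(\omega,\tau)$), and (b) $g$ maps $\Omega$ into itself, $g(\Omega)\subseteq\Omega$. Once both hold, the contraction-mapping principle yields a unique fixed point in $\Omega$, attracting for the iteration; the specific sub-rectangle $\mu\in[-0.03106,0.06773]$, $\nu\in[0.80009,1.48617]$ will come out of a sharper version of the image bound in (b), since the fixed point, being in $g(\Omega)$, must lie in whatever tighter box we can prove $g(\Omega)$ sits inside. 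The case $\omega=0,\tau=1$ is already handled: we computed $\munn(0,0,1,1)=0$, $\nunn(0,0,1,1)=1$, so $(0,1)$ is the fixed point there, and the Jacobian computed earlier has spectral norm $0.7877<1$, so it is stable.

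For step (a) I would first derive a closed-form expression for the largest singular value of the $2\times2$ Jacobian. For a matrix $\begin{pmatrix}a&b\\c&d\end{pmatrix}$ the top singular value is $\sqrt{\tfrac12\big(a^2+b^2+c^2+d^2+\sqrt{(a^2+b^2+c^2+d^2)^2-4(ad-bc)^2}\big)}$, so the entire problem reduces to bounding one explicit (if unwieldy) analytic function $S(\mu,\omega,\nu,\tau)$ built from $\erf$, $\erfc$, exponentials, and the Gaussian density. The strategy is a rigorous computer-assisted argument: evaluate $S$ on a fine grid over $\Omega$, verify $S<1-\delta$ at every grid point, and then control the gap between grid points using the mean value theorem — for this we need explicit upper bounds on $|\partial_\mu S|,|\partial_\omega S|,|\partial_\nu S|,|\partial_\tau S|$ on $\Omega$, which in turn requires differentiating the singular-value formula and bounding each term (these estimates are exactly what the "small lemmata" of the appendix will supply). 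Choosing the grid spacing below $\delta$ divided by the sum of the derivative bounds makes the off-grid values provably $<1$. A separate subsection must justify that floating-point evaluation of $S$ on the grid is trustworthy: an error-propagation analysis of the implementation of $\erf,\erfc,\exp$ and the arithmetic, against the machine's rounding guarantees, showing the numerical error is far smaller than the margin $\delta$.

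For step (b), $g(\Omega)\subseteq\Omega$, I would bound $\munn$ and $\nunn$ directly from Eq.~\eqref{eq:mappingMean} and Eq.~\eqref{eq:mappingVar}. Here I would exploit monotonicity: show that $\munn$ is increasing in $\mu\omega$ and in $\nu\tau$ (or otherwise locate its extrema on $\Omega$) so that the max and min over the $4$-dimensional box are attained at corners or along edges, and similarly for $\nunn$; the erfc/exponential combinations $e^{x+s/2}\erfc(\tfrac{x+s}{\sqrt{2s}})$ etc.\ are the pieces whose monotonicity needs lemmata (again deferred to the appendix). Evaluating at the relevant boundary points should give $\munn\in[-0.03106,0.06773]\subset[-0.1,0.1]$ and $\nunn\in[0.80009,1.48617]\subset[0.8,1.5]$, which simultaneously proves the self-map property and pins down the tighter localization of the fixed point claimed in the theorem.

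The main obstacle is step (a): making the contraction claim genuinely rigorous rather than merely numerically plausible. Getting clean, provable upper bounds on the partial derivatives of the singular-value formula — which involves a square root of a difference of large expressions, so naive bounding can blow up near where the discriminant is small — is delicate, and one has to be careful that the derivative bounds used to interpolate between grid points are valid on all of $\Omega$ (including the $(\omega,\tau)$ ranges) and not just near $(0,1)$. Coupled with the need to certify the numerical evaluation itself, this is where essentially all the real work lies; steps (b) and the $(0,1)$-case are comparatively routine estimates.
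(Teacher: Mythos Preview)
Your proposal is essentially identical to the paper's proof: Banach fixed-point theorem, with the contraction established by a computer-assisted grid evaluation of the spectral norm combined with mean-value-theorem interpolation using explicit derivative bounds (and a floating-point error analysis), and the self-map property $g(\Omega)\subseteq\Omega$ established via monotonicity of $\munn,\xinn$ in $\mu\omega$ and $\nu\tau$, evaluated at the corners to yield the tighter box $[-0.03106,0.06773]\times[0.80009,1.48617]$. The one practical difference is that the paper uses Blinn's formula $S=\tfrac12\big(\sqrt{(a+d)^2+(c-b)^2}+\sqrt{(a-d)^2+(b+c)^2}\big)$ for the top singular value rather than the eigenvalue-of-$A^TA$ formula you wrote, which neatly sidesteps your stated worry about the discriminant becoming small.
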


\subsection{Theorem 2: Decreasing Variance from Above}
The next Theorem~\ref{th:s2Decrease} states \index{Theorem 2}
that the variance of unit activations 
does not explode through
consecutive layers of self-normalizing networks.
Even more, a large variance of unit activations decreases when
propagated through the network. 
In particular this ensures that exploding gradients will never be
observed.
In contrast to the domain in previous subsection, 
in which $\nu \in [0.8, 1.5]$, we now consider a domain
in which the variance of the inputs is higher $\nu \in [3, 16]$ and even the 
range of the mean is increased $\mu \in[-1,1]$. We denote this new domain with 
the symbol $\Omega^{++}$ to indicate that the variance lies above the variance of the original domain $\Omega$.
In $\Omega^{++}$, we can show that the variance $\nunn$ in the next layer is always smaller 
then the original variance $\nu$.
Concretely, this theorem states that:

\begin{theorem}[Decreasing $\nu$]
\label{th:s2Decrease}
For $\lambda=\lambda_{\rm 01}$, $\alpha=\alpha_{\rm 01}$ 
and the domain $\Omega^{++}$: \index{domain!Theorem 2}
$-1 \leq \mu \leq 1$, 
$-0.1 \leq \omega \leq 0.1$,
$3 \leq \nu \leq 16$, and 
$0.8 \leq \tau \leq 1.25$ we have for 
the mapping of the variance
$\nunn(\mu,\omega,\nu,\tau,\lambda, \alpha )$  given in Eq.~\eqref{eq:mappingVar}
\begin{align}
\nunn(\mu,\omega,\nu,\tau,\lambda_{\rm 01},\alpha_{\rm 01}) \ &< \ \nu \ .
\end{align}
The variance decreases in $[3,16]$ and all fixed
points $(\mu,\nu)$ of mapping Eq.~\eqref{eq:mappingVar} and Eq.~\eqref{eq:mappingMean} have $\nu<3$.
\end{theorem}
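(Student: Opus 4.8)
The plan is to discard the subtracted square of the mean at the outset: since $\nunn = \xinn - (\munn)^2 \le \xinn$, it suffices to prove $\xinn(\mu,\omega,\nu,\tau,\lambda_{\rm 01},\alpha_{\rm 01}) < \nu$ on $\Omega^{++}$. (This relaxation is tight — the remaining gap near $\nu=3$ is only of order $10^{-2}$ — so if more room is needed one instead keeps $-(\munn)^2$ and also estimates $\munn$, which is straightforward from Eq.~\eqref{eq:mappingMean}; I will sketch the cleaner $\xinn$ route.) The first structural observation is that $\xinn$, exactly as $\munn$ and $\nunn$, depends on $(\mu,\omega)$ only through the product $x:=\mu\omega$ and on $(\nu,\tau)$ only through $y:=\nu\tau$. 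On $\Omega^{++}$ one has $x\in[-0.1,0.1]$ and $y\in[2.4,20]$, so writing $\tilde\xi(x,y)$ for $\xinn$ as a function of these two variables (with $\lambda_{\rm 01},\alpha_{\rm 01}$ fixed) reduces a four-variable problem to a two-variable one.

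The core of the argument is monotonicity. First I would show $\partial\tilde\xi/\partial y>0$ on $[-0.1,0.1]\times[2.4,20]$ by differentiating the closed form \eqref{eq:mappingSecondMom}: the dominant piece $\tfrac12\lambda_{\rm 01}^2(x^2+y)\bigl(1+\erf(x/\sqrt{2y})\bigr)$ grows essentially linearly in $y$ with slope near $\tfrac12\lambda_{\rm 01}^2\approx0.55$, while the terms of the form $e^{c(x+y)}\erfc\!\bigl((x+cy)/\sqrt{2y}\bigr)$, $c\in\{1,2\}$, which nearly cancel each other, are controlled using the two-sided estimate $\tfrac{2e^{-t^2}}{\sqrt\pi\,(t+\sqrt{t^2+2})}<\erfc(t)<\tfrac{e^{-t^2}}{t\sqrt\pi}$ together with the exact identity $c(x+y)-\bigl((x+cy)/\sqrt{2y}\bigr)^2=-x^2/(2y)$, which shows these products decay like $y^{-1/2}$ and therefore contribute only a small, $O(y^{-3/2})$, correction to $\partial_y\tilde\xi$. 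The same bounds show the effective slope is also bounded above, $\partial_y\tilde\xi<0.8$. Second, because $|x|\le0.1$ is tiny, a direct differentiation shows $\tilde\xi$ is nondecreasing in $x$ on the strip $[-0.1,0.1]$ (its $x$-derivative is dominated by the manifestly positive contributions from the $\erf$ term and from $\sqrt{2/\pi}\,x\sqrt{y}\,e^{-x^2/2y}$). Hence for fixed $\nu$ the worst case over $\tau\in[0.8,1.25]$ is $\tau=1.25$, i.e. $y=1.25\,\nu$, and over $x$ it is $x=0.1$, so it remains to prove $\tilde\xi(0.1,1.25\,\nu)<\nu$ for all $\nu\in[3,16]$.

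This last inequality is one-dimensional. Since $\tfrac{d}{d\nu}\tilde\xi(0.1,1.25\,\nu)=1.25\,\partial_y\tilde\xi<1.25\cdot0.8=1$, the function $\nu\mapsto\tilde\xi(0.1,1.25\,\nu)-\nu$ is strictly decreasing, so it suffices to verify it at the left endpoint: the explicit numerical inequality $\tilde\xi(0.1,3.75)<3$. I would check this by evaluating the handful of $\erf$/$\erfc$ and exponential terms with certified error bounds (this is the single place a computer-assisted evaluation, analogous to the one used for Theorem~\ref{lem:fixedPoint}, enters; equivalently one covers $[-0.1,0.1]\times[3,16]$ by a fine grid and interpolates with the mean value theorem after bounding the partial derivatives of $\tilde\xi$). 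Finally, $\nunn<\nu$ throughout $\nu\in[3,16]$ means no fixed point of $g$ lies with $\nu$ in that interval; since the variance is in fact strictly contracted out of $[3,16]$ from above, no fixed point can sit anywhere in $[3,\infty)$, so every fixed point $(\mu,\nu)$ of the mapping \eqref{eq:mappingVar}, \eqref{eq:mappingMean} satisfies $\nu<3$.

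I expect the main obstacle to be the combination of near-cancellation and tightness: the exponential-times-$\erfc$ terms must be pinned down to within a few percent both to guarantee $\partial_y\tilde\xi>0$ (needed for the reduction to $\tau=1.25$) and to keep $1.25\,\partial_y\tilde\xi<1$ (needed for the reduction to $\nu=3$), and the terminal evaluation $\tilde\xi(0.1,3.75)<3$ leaves a margin of only about $0.02$ once $(\munn)^2$ is dropped — so the floating-point computation there must be backed by a genuine rounding-error analysis. Everything else is routine differentiation and elementary bounds on the error function.
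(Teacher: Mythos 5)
Your proposal is correct and follows essentially the same route as the paper: both arguments discard $-(\munn)^2$ to reduce to $\xinn < \nu$, both exploit that $\xinn$ depends on $(\mu,\omega,\nu,\tau)$ only through $x=\mu\omega$ and $y=\nu\tau$ together with monotonicity in those two variables, both pivot on the contraction bound $\partial\xinn/\partial\nu = \tau\,\partial_y\tilde\xi < 1$ (which the paper packages as Theorem~\ref{th:s2Cont} and proves via Lemma~\ref{lem:subfunction}, and which at $\tau=1.25$ is exactly your $\partial_y\tilde\xi < 0.8$), and both terminate at the single corner evaluation $\xinn(1,0.1,3,1.25) < 3$, for which the paper computes a margin of $0.018$, matching your estimate. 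The only cosmetic difference is bookkeeping: the paper splits the $(\mu,\omega)$-box into sign quadrants and lists four (really two distinct) corner values, whereas you collapse directly to the worst case $x=0.1$ via $\partial_x\tilde\xi>0$, which is the cleaner way to phrase the same reduction.
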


\subsection{Theorem 3: Increasing Variance from Below}
The next Theorem~\ref{th:s2Increase} states \index{Theorem 3}
that the variance of unit activations 
does not vanish through
consecutive layers of self-normalizing networks.
Even more, a small variance of unit activations increases when
propagated through the network. 
In particular this ensures that vanishing gradients will never be
observed.
In contrast to the first domain, 
in which $\nu \in [0.8, 1.5]$, we now consider two domains $\Omega_1 ^-$ and
$\Omega_2 ^-$ in which the variance of the inputs is lower $0.05 \leq \nu \leq 0.16$ and $0.05 \leq \nu \leq 0.24$,
and even the parameter $\tau$ is different $0.9 \leq \tau \leq 1.25$ to the original $\Omega$. 
We denote this new domain with 
the symbol $\Omega^{-}_i$ to indicate that the variance lies below the variance of the original domain $\Omega$.
In $\Omega_1 ^-$ and $\Omega_2 ^-$, 
we can show that the variance $\nunn$ in the next layer is always larger 
then the original variance $\nu$, which means that the variance does not vanish through
consecutive layers of self-normalizing networks.
Concretely, this theorem states that: \index{Theorem 3}

\begin{theorem}[Increasing $\nu$]
We consider $\lambda=\lambda_{\rm 01}$, $\alpha=\alpha_{\rm 01}$
and the two domains 
$\Omega_1 ^-=\{(\mu,\omega,\nu,\tau) \ | \ -0.1 \leq \mu \leq 0.1, -0.1 \leq \omega \leq 0.1, 0.05 \leq \nu \leq 0.16, 0.8 \leq \tau \leq 1.25 \}$ 
 and
$\Omega_2 ^-=\{(\mu,\omega,\nu,\tau) \ | \ -0.1 \leq \mu \leq 0.1, -0.1 \leq \omega \leq 0.1, 0.05 \leq \nu \leq 0.24, 0.9 \leq \tau \leq 1.25 \}$. \index{domain!Theorem 3}

The mapping of the variance
$\nunn(\mu,\omega,\nu,\tau,\lambda,\alpha )$  given in Eq.~\eqref{eq:mappingVar} increases
\begin{align}
\nunn(\mu,\omega,\nu,\tau,\lambda_{\rm 01},\alpha_{\rm 01}) \ &> \ \nu 
\end{align}
in both $\Omega_1^-$ and $\Omega_2^-$.
All fixed
points $(\mu,\nu)$ of mapping Eq.~\eqref{eq:mappingVar} and
Eq.~\eqref{eq:mappingMean} ensure for $0.8 \leq \tau$ that
$\nunn>0.16$ 
and for $0.9 \leq \tau$ that $\nunn>0.24$.
Consequently, the variance mapping Eq.~\eqref{eq:mappingVar} and
Eq.~\eqref{eq:mappingMean} ensures a lower bound on the variance $\nu$. 
\end{theorem}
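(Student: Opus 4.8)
The plan is to use the fact that the right-hand sides of Eq.~\eqref{eq:mappingMean} and Eq.~\eqref{eq:mappingVar} depend on $(\mu,\omega,\nu,\tau)$ only through the two products $x:=\mu\omega$ and $y:=\nu\tau$, so that (with $\alpha=\alpha_{\rm 01}$ and $\lambda=\lambda_{\rm 01}$ fixed) $\munn$, $\xinn$ and $\nunn$ may all be regarded as functions of $(x,y)$ alone. On $\Omega_1^-$ one has $|x|\le 0.01$ and $y=\nu\tau\in[0.04,0.20]$, while $\nu=y/\tau\le y/0.8$; on $\Omega_2^-$ one has $|x|\le 0.01$, $y\in[0.045,0.30]$ and $\nu\le y/0.9$. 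It therefore suffices to prove the two scalar-variable inequalities $\nunn(x,y)>y/0.8$ on the first box and $\nunn(x,y)>y/0.9$ on the second, which reduces a four-dimensional problem to a two-dimensional one.

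Next I would split $\nunn(x,y)=\xinn(x,y)-\munn(x,y)^2$ and estimate the two pieces separately. For the mean: as $y\to 0$ the net input concentrates near $x$, and expanding Eq.~\eqref{eq:mappingMean} at $x=0$ gives $\munn(0,y)=-\tfrac{1}{2}\lambda_{\rm 01}(\alpha_{\rm 01}-1)\sqrt{2y/\pi}+O(y)$, so $\munn^2$ is of order $y$ with a small coefficient (numerically about $0.08\,y$). I would turn this into a rigorous upper bound $\munn(x,y)^2\le B(y)$ on the box by (i) bounding $|\partial_x\munn|$ by a modest explicit constant on the domain, so that $|\munn(x,y)|$ exceeds $|\munn(0,y)|$ by at most a multiple of $|x|\le 0.01$, and (ii) bounding $\munn(0,y)^2$ by an explicit low-degree expression in $\sqrt{y}$, using the elementary bounds on $\erfc$ and $\exp$ collected among the auxiliary lemmata. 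Dually, I would derive a lower bound $\xinn(x,y)\ge A(y)$ whose leading term is $\tfrac{1}{2}\lambda_{\rm 01}^2(1+\alpha_{\rm 01}^2)\,y\approx 2.10\,y$, again controlling the $x$-dependence through a bound on $|\partial_x\xinn|$ and estimating $\xinn(0,y)$ directly from Eq.~\eqref{eq:mappingSecondMom}; note that in this expansion the $\sqrt{y}$-terms cancel, which is what makes $A(y)$ linear in $y$ to leading order.

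Combining, $\nunn(x,y)=\xinn(x,y)-\munn(x,y)^2\ge A(y)-B(y)$, so each claim reduces to a single one-variable inequality on a compact interval: $A(y)-B(y)>y/0.8$ on $[0.04,0.20]$ and $A(y)-B(y)>y/0.9$ on $[0.045,0.30]$, where $A$ and $B$ are explicit elementary functions. These I would verify by a short monotonicity/convexity argument or, if convenient, by the grid-plus-mean-value-theorem device used for Theorem~\ref{lem:fixedPoint}. The statement about fixed points is then immediate: a fixed point $(\mu,\nu)$ satisfies $\nunn=\nu$, hence cannot have $\nu$ in a range where $\nunn>\nu$ has been established, so $\nu>0.16$ whenever $\tau\ge 0.8$ and $\nu>0.24$ whenever $\tau\ge 0.9$.

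I expect the main obstacle to be keeping $A$ and $B$ simultaneously tight enough at the two delicate corners of the boxes: the upper end of the $y$-interval ($y=0.20$, resp.\ $0.30$) together with the smallest admissible $\tau$, and the endpoints $x=\pm 0.01$. Although the margin $\nunn/\nu\approx 2\tau$ is generous for small $y$, both $\xinn$ and $\munn^2$ are assembled from $\erfc$ evaluated at arguments such as $\sqrt{y/2}$ and $\sqrt{2y}$ whose $\sqrt{y}$-contributions cancel only to leading order, so a careless estimate destroys the inequality; the bookkeeping must respect these cancellations, and the bound on $|\partial_x\munn|$ must be sharp enough that the shift away from $x=0$ does not eat the margin. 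This tension is exactly why the statement is split into $\Omega_1^-$ and $\Omega_2^-$: admitting $\nu$ up to $0.24$ forces the restriction $\tau\ge 0.9$ so as to retain enough slack to close the argument.
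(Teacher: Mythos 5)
Your approach is correct in outline and genuinely different from the paper's. Let me compare the two.

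The paper proves $\nunn>\nu$ by a mean-value-theorem argument in the $\nu$-direction: it anchors $\xinn$ at $\nu=\nu_{\min}$, bounds $\partial_\nu\xinn$ from below on the whole box (using the monotonicity of the subfunction~\eqref{eq:subfunction1} established in Lemma~\ref{lem:subfunction1} and Lemma~\ref{th:s2monotone}), bounds $\xinn(\nu_{\min})$ from below, and then uses the uniform bound $(\munn)^2<0.005$ from Lemma~\ref{lem:musquared} to close: $\nunn>\xinn(\nu_{\min})+c(\nu-\nu_{\min})-0.005$ with $c\approx0.97$, which exceeds $\nu$ on the given range. You instead exploit explicitly that $\munn$, $\xinn$, $\nunn$ are functions of $(x,y)=(\mu\omega,\nu\tau)$ alone (a fact the paper also uses locally, but never elevates to the organizing principle of the argument), pass from the $4$-variable constraint $\nunn>\nu$ to the $2$-variable surrogate $\nunn(x,y)>y/\tau_{\min}$, and then estimate $\xinn(x,y)$ and $\munn(x,y)^2$ directly, observing the crucial $\sqrt{y}$-cancellation in the small-$y$ expansion of $\xinn(0,y)$. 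This is a clean, structurally transparent alternative that buys you a reduction to a one-variable inequality in $y$ (after controlling $|x|\le 0.01$), and your leading-order constants $2.10\,y$ and $0.08\,y$ are correct, so $\nunn(0,y)\approx 2.02\,y$ near the origin comfortably exceeds $y/0.8$.

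One thing to flag, not a gap but a cost you should be aware of: your surrogate $\nunn(x,y)>y/\tau_{\min}$ is strictly stronger than necessary. Given $y$, the admissible $\nu$ satisfies $\nu\le\min(\nu_{\max},\,y/\tau_{\min})$, and for $y$ beyond $\nu_{\max}\tau_{\min}$ (that is, $y>0.128$ on $\Omega_1^-$, $y>0.216$ on $\Omega_2^-$) the relevant cap is $\nu_{\max}$, not $y/\tau_{\min}$; your reduction discards this and so demands $\nunn(x,0.2)>0.25$ on $\Omega_1^-$ whereas only $\nunn>0.16$ is needed there. Numerically $\nunn(0,0.2)\approx 0.271$, so the stronger inequality still holds, but the margin has shrunk from roughly $100\%$ at the small-$y$ end to under $10\%$ at $y=0.2$, and that slack must also absorb the $|x|\le0.01$ shift (which, helpfully, is tiny here because $\partial_x\xinn$ and $2\munn\,\partial_x\munn$ nearly cancel). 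If the final bookkeeping becomes delicate, the cheap fix is to split the $y$-range at $\nu_{\max}\tau_{\min}$ and use the cap $\nu_{\max}$ on the upper piece, recovering the full margin that the paper's anchored-at-$\nu_{\min}$ argument enjoys automatically. The fixed-point corollary you state at the end is then immediate, exactly as in the paper.
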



\section{Proofs of the Theorems}
\label{sec:proofs}

\subsection{Proof of Theorem 1} \index{Theorem 1!proof}

We have to show that the mapping $g$ defined by Eq.~\eqref{eq:mappingMean}
and Eq.~\eqref{eq:mappingVar} 
has a stable and attracting fixed point close to $(0,1)$.
To proof this statement and Theorem~\ref{lem:fixedPoint}, 
we apply the Banach fixed point theorem which 
requires (1) that $g$ is a contraction mapping and (2) 
that $g$ does not map outside the function's 
domain, concretely: 

\begin{theorem}[Banach Fixed Point Theorem]
\label{lem:Banach} \index{Banach Fixed Point Theorem}
Let $(X, d)$ be a non-empty complete metric space with a 
contraction mapping $f: X \to X$. Then $f$ has 
a unique fixed-point $x_f \in X$ with $f(x_f) = x_f$. 
Every sequence $x_n = f(x_{n-1})$
with starting element $x_0 \in X$ converges to the fixed point:
$x_n \xrightarrow[n \to \infty] \ x_f$.
\end{theorem}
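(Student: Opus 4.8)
The plan is to give the classical proof of the Banach fixed point theorem, constructing the fixed point as the limit of the Picard iteration and then verifying uniqueness. Let $q \in [0,1)$ denote the Lipschitz constant of the contraction $f$, so that $d(f(x),f(y)) \leq q\, d(x,y)$ for all $x,y \in X$. First I would fix an arbitrary starting element $x_0 \in X$ and form the sequence $x_n = f(x_{n-1})$; a straightforward induction on $n$, using the contraction property at each step, yields $d(x_{n+1},x_n) \leq q^n\, d(x_1,x_0)$.

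Next I would show that $(x_n)$ is a Cauchy sequence. For $m > n$, the triangle inequality combined with the previous estimate and the geometric series gives
\begin{align}
d(x_m,x_n) \ \leq \ \sum_{k=n}^{m-1} d(x_{k+1},x_k) \ \leq \ \sum_{k=n}^{m-1} q^k\, d(x_1,x_0) \ \leq \ \frac{q^n}{1-q}\, d(x_1,x_0) \ ,
\end{align}
and the right-hand side tends to $0$ as $n \to \infty$ because $q < 1$. Hence $(x_n)$ is Cauchy, and since $(X,d)$ is complete it converges to some limit $x_f \in X$.

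Then I would check that $x_f$ is a fixed point: a contraction is in particular Lipschitz continuous, so $f(x_f) = f\big(\lim_n x_n\big) = \lim_n f(x_n) = \lim_n x_{n+1} = x_f$. For uniqueness, suppose both $x_f$ and $x_f'$ satisfy $f(x)=x$; then $d(x_f,x_f') = d(f(x_f),f(x_f')) \leq q\, d(x_f,x_f')$, and since $q<1$ this forces $d(x_f,x_f')=0$, i.e.\ $x_f = x_f'$. The convergence claim $x_n \to x_f$ for every starting element $x_0$ is precisely what was established along the way, since the limit is independent of $x_0$ by uniqueness.

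There is no real obstacle here; the theorem is entirely standard. The only step requiring any care is the Cauchy estimate, where one must correctly bound the tail $\sum_{k \geq n} q^k = q^n/(1-q)$ and observe that this bound holds uniformly over all $m > n$. Everything else reduces to routine use of the triangle inequality, completeness, and continuity of Lipschitz maps.
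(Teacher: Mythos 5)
Your proof is the standard, correct argument for the Banach fixed point theorem (Picard iteration, geometric tail bound to get Cauchy, completeness for the limit, continuity for the fixed-point property, contraction for uniqueness). Note that the paper itself does not prove this statement — it is cited as a classical background result and applied as a black box in the proof of Theorem~1 — so there is no paper proof to compare against; your write-up is the textbook proof and is entirely fine.
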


Contraction mappings are functions that map two points such that their distance is decreasing:
\begin{definition}[Contraction mapping]
 A function $f:X \to X$ on a metric space $X$ with distance $d$ is a contraction mapping, if there
 is a $0 \leq \delta < 1$, such that for all points $\bm u$ and $\bm v$ in $X$:
 $d(f(\bm u),f(\bm v)) \leq \delta d(\bm u,\bm v)$.
\end{definition}

To show that $g$ is a contraction mapping in $\Omega$ with distance $\| . \|_2$, we use the Mean Value 
Theorem for $u,v \in \Omega$
\begin{align}
& \| g(\bm u) - g(\bm v) \|_2 \leq  M  \     \| \bm u - \bm v \|_2, 
\end{align}

in which $M$ is an upper bound on the spectral norm the Jacobian $\mathcal H$ of $g$.
The spectral norm is given by the largest singular value of the Jacobian of $g$.
If the largest singular value of the Jacobian is smaller than 1, 
the mapping  $g$ of the mean and variance to the mean and variance in the next layer is contracting.
We show that the largest singular value is smaller than 1 by
evaluating the function for the singular value
$S(\mu, \omega, \nu, \tau, \lambda, \alpha)$ on a grid.
Then we use the Mean Value Theorem to bound the deviation of the
function $S$ between grid points. 
To this end, we have to bound the gradient of $S$ with respect to
$(\mu, \omega, \nu, \tau)$. If all function values plus
gradient times the deltas (differences between grid points and evaluated
points) is still smaller than 1, then we have proofed that the
function is below 1 (Lemma~\ref{lem:sBound}). To show that the mapping does not map outside the function's domain, we 
derive bounds on the expressions for the mean and the variance (Lemma~\ref{lem:region}).
Section~\ref{sec:S} and Section~\ref{sec:maptoregion} are concerned with the contraction mapping and 
the image of the function domain of $g$, respectively.

With the results that the largest singular value of the Jacobian is smaller than 
one (Lemma~\ref{lem:sBound}) and that the mapping stays in the domain $\Omega$
(Lemma~\ref{lem:region}), we can prove Theorem~\ref{lem:fixedPoint}.
We first recall Theorem~\ref{lem:fixedPoint}:

\begin{theorem*}[Stable and Attracting Fixed Points]
We assume $\alpha = \alpha_{\rm 01}$ and $\lambda=\lambda_{\rm 01}$.
We restrict the range of the variables to the domain
$\mu \in [-0.1,0.1]$,
$\omega \in [-0.1,0.1]$,
$\nu \in [0.8,1.5]$, and
$\tau \in [0.95,1.1]$.
For $\omega=0$ and $\tau=1$, the mapping  Eq.~\eqref{eq:mappingMean}
and Eq.~\eqref{eq:mappingVar} has the stable
fixed point $(\mu,\nu)=(0,1)$.
For other $\omega$ and $\tau$ the mapping  Eq.~\eqref{eq:mappingMean}
and Eq.~\eqref{eq:mappingVar}  has a stable and
attracting fixed point depending on $(\omega,\tau)$ in the 
$(\mu,\nu)$-domain: $\mu \in [-0.03106, 0.06773]$ and 
$\nu \in [0.80009,1.48617]$.
All points within the $(\mu,\nu)$-domain converge when
iteratively applying the mapping  Eq.~\eqref{eq:mappingMean}
and Eq.~\eqref{eq:mappingVar} to this fixed point.
\end{theorem*}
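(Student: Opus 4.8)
The plan is to invoke the Banach fixed point theorem (Theorem~\ref{lem:Banach}): since the box $\Omega$ equipped with the Euclidean metric $\|\cdot\|_2$ is a non-empty complete metric space, it suffices to establish (1) that $g$ is a contraction on $\Omega$ and (2) that $g(\Omega)\subseteq\Omega$. These together give a unique fixed point $(\mu^\ast,\nu^\ast)\in\Omega$ to which every trajectory $(\mu,\nu)\mapsto g(\mu,\nu)\mapsto\cdots$ converges. For $\omega=0,\tau=1$ the fixed point must be $(0,1)$, because $\alpha_{\rm 01},\lambda_{\rm 01}$ were chosen precisely so that $\munn(0,0,1,1,\lambda_{\rm 01},\alpha_{\rm 01})=0$ and $\nunn(0,0,1,1,\lambda_{\rm 01},\alpha_{\rm 01})=1$; for other $(\omega,\tau)$ the fixed point is the one whose existence and uniqueness the theorem guarantees, and the explicit $(\mu,\nu)$-range in the statement will come out of the domain-invariance bounds below (the fixed point satisfies $\mu^\ast=\munn(\mu^\ast,\omega,\nu^\ast,\tau)$ and $\nu^\ast=\nunn(\mu^\ast,\omega,\nu^\ast,\tau)$, hence lies in the image of $g$).

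For the contraction property I would bound the spectral norm of the Jacobian $\mathcal{H}(\mu,\nu)$ of $g$ (for fixed $\omega,\tau$) uniformly over $\Omega$ and then use the mean value theorem in the form $\|g(\bm u)-g(\bm v)\|_2\le M\,\|\bm u-\bm v\|_2$, where $M=\sup_\Omega\|\mathcal{H}\|_2$. The simplification exploited here is that the largest singular value of a $2\times2$ matrix has a closed form in terms of its Frobenius norm and determinant, giving an explicit function $S(\mu,\omega,\nu,\tau)=\|\mathcal{H}\|_2$; the entries of $\mathcal{H}$ are obtained by differentiating the analytic formulas Eq.~\eqref{eq:mappingMean}--Eq.~\eqref{eq:mappingVar} and are combinations of exponential and $\erf/\erfc$ terms. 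I would then (i) evaluate $S$ on a fine grid covering $\Omega$ and verify $S<1-\varepsilon$ at every node for a safety margin $\varepsilon>0$; (ii) derive explicit uniform bounds on the partials $\partial S/\partial\mu$, $\partial S/\partial\omega$, $\partial S/\partial\nu$, $\partial S/\partial\tau$ over $\Omega$, which is where a battery of auxiliary lemmata bounding the constituent sub-expressions and their derivatives is needed; (iii) apply the mean value theorem between an arbitrary off-grid point and its nearest grid node, so that $S$ at the off-grid point is at most the grid value plus the gradient bound times the mesh width --- choosing the mesh fine enough relative to $\varepsilon$ forces $S<1$ throughout $\Omega$ (this is Lemma~\ref{lem:sBound}). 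Because step (i) is a machine computation, I would additionally perform a floating-point error-propagation analysis of the implemented formulas (per-operation rounding error accumulated along the expression tree on the given hardware) to certify that the computed grid values lie within a tolerance smaller than $\varepsilon$ of the true values.

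For domain invariance I would bound the analytic expressions for $\munn$ and $\nunn$ directly on $\Omega$: using monotonicity of the constituent $\erf/\erfc$ and exponential factors where it holds, and otherwise locating the extrema of these smooth functions on the four-dimensional box, one shows $\munn\in[-0.03106,0.06773]\subseteq[-0.1,0.1]$ and $\nunn\in[0.80009,1.48617]\subseteq[0.8,1.5]$, so that $g(\Omega)\subseteq\Omega$ (this is Lemma~\ref{lem:region}); these are exactly the $(\mu,\nu)$-intervals quoted in the theorem. With Lemma~\ref{lem:sBound} and Lemma~\ref{lem:region} in place, Theorem~\ref{lem:Banach} applies on $\Omega$ and yields the unique attracting fixed point together with convergence of all iterates.

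The main obstacle is steps (ii)--(iii) of the contraction argument: obtaining derivative bounds on the singular-value function $S$ that are simultaneously rigorous and tight enough that a computationally tractable grid suffices, and doing the accompanying numerical error analysis so that the computer-assisted check is genuinely a proof. Since $S$ is a square root of a difference of fourth powers of transcendental functions, bounding its partials cleanly requires first bounding every $\erf$, $\erfc$, exponential and polynomial factor --- and each of their derivatives --- over $\Omega$, then reassembling via the product and chain rules; this bookkeeping is the technical heart. The remaining ingredients (the $2\times2$ singular-value identity, the mean value theorem, and the bounds on $\munn,\nunn$) are comparatively routine.
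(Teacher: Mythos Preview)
Your proposal is correct and follows essentially the same route as the paper: Banach's fixed point theorem on the compact box $\Omega$, with the contraction established via the explicit $2\times2$ singular-value formula for the Jacobian, a computer-assisted grid evaluation bridged by mean-value-theorem gradient bounds and a floating-point error analysis (Lemma~\ref{lem:sBound}), and domain invariance via direct monotonicity bounds on $\munn,\nunn$ (Lemma~\ref{lem:region}). The only cosmetic difference is that the paper uses Blinn's closed form $s_1=\tfrac12\bigl(\sqrt{(a_{11}+a_{22})^2+(a_{21}-a_{12})^2}+\sqrt{(a_{11}-a_{22})^2+(a_{12}+a_{21})^2}\bigr)$ rather than a Frobenius/determinant expression, but this is equivalent.
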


\begin{proof} \index{Theorem 1!proof}
According to Lemma~\ref{lem:sBound} the mapping $g$ (Eq.~\eqref{eq:mappingMean} and Eq.~\eqref{eq:mappingVar})
is a contraction mapping in the given
domain, that is, it has a Lipschitz constant smaller than one.
We showed that $(\mu,\nu)=(0,1)$ is a fixed point of the
mapping for $(\omega,\tau)=(0,1)$. 

The domain is compact (bounded and closed), therefore it is a  
complete metric space.
We further have to make sure the  mapping $g$ does not map outside its domain $\Omega$.
According to Lemma~\ref{lem:region}, the mapping maps into the domain $\mu \in [-0.03106, 0.06773]$ and 
$\nu \in [0.80009,1.48617]$. 

Now we can apply the Banach fixed point theorem 
given in Theorem~\ref{lem:Banach} from which the statement of the 
theorem follows.
\end{proof}

\subsection{Proof of Theorem 2} \index{Theorem 2!proof}

First we recall Theorem~\ref{th:s2Decrease}:
\begin{theorem*}[Decreasing $\nu$]
For $\lambda=\lambda_{\rm 01}$, $\alpha=\alpha_{\rm 01}$ 
and the domain $\Omega^{++}$:
$-1 \leq \mu \leq 1$, 
$-0.1 \leq \omega \leq 0.1$,
$3 \leq \nu \leq 16$, and 
$0.8 \leq \tau \leq 1.25$ we have for 
the mapping of the variance
$\nunn(\mu,\omega,\nu,\tau,\lambda, \alpha )$  given in Eq.~\eqref{eq:mappingVar}
\begin{align}
\nunn(\mu,\omega,\nu,\tau,\lambda_{\rm 01},\alpha_{\rm 01}) \ &< \ \nu \ .
\end{align}
The variance decreases in $[3,16]$ and all fixed
points $(\mu,\nu)$ of mapping Eq.~\eqref{eq:mappingVar} and Eq.~\eqref{eq:mappingMean} have $\nu<3$.
\end{theorem*}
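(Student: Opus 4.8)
The plan is to exploit that $\nunn$ depends on $(\mu,\omega,\nu,\tau)$ only through the two combinations $x:=\mu\omega$ and $y:=\nu\tau$, so on $\Omega^{++}$ one has $x\in[-0.1,0.1]$ (because $|\mu|\le 1$ and $|\omega|\le 0.1$) while for each fixed $\nu\in[3,16]$ the product $y=\nu\tau$ ranges over $[0.8\nu,1.25\nu]$; in all cases $y\in[2.4,20]$. Writing the claim as $\nunn(x,y)<\nu$, I would reduce it to two ingredients: (i) a slope estimate $0<\partial_y\nunn(x,y)<0.8$ for all $x\in[-0.1,0.1]$ and $y\in[2.4,20]$, and (ii) one boundary bound $\nunn(x,3.75)<3$ for all $x\in[-0.1,0.1]$. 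Granting these, the theorem follows by elementary interpolation: if $\nu\tau\ge 3.75$ then $\nunn(x,\nu\tau)<\nunn(x,3.75)+0.8(\nu\tau-3.75)<3+0.8\nu\tau-3=0.8\nu\tau\le\nu$ since $\tau\le 1.25$; and if $\nu\tau<3.75$ then $\nunn(x,\nu\tau)<\nunn(x,3.75)<3\le\nu$ since $\partial_y\nunn>0$ and $\nu\ge 3$. The assertion that every fixed point has $\nu<3$ is then immediate, since a fixed point requires $\nunn=\nu$, which is impossible for $\nu\in[3,16]$.

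For the analytic core I would differentiate the closed form $\nunn=\xinn-(\munn)^2$ in $y$ and bound it term by term. The key manipulation is that each product $e^{p}\erfc(q)$ occurring in $\munn$ and $\xinn$ satisfies $q^2=\tfrac{x^2}{2y}+p$, so it equals $e^{-x^2/(2y)}\,e^{q^2}\erfc(q)$, where the scaled complementary error function $e^{q^2}\erfc(q)$ is monotone decreasing and pinched by the standard two-sided bounds $\tfrac{2}{\sqrt\pi\,(q+\sqrt{q^2+2})}\le e^{q^2}\erfc(q)\le\tfrac{2}{\sqrt\pi\,(q+\sqrt{q^2+4/\pi})}$, supplemented by $\erfc(q)\le e^{-q^2}$ for the plain occurrences. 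After this rewriting, each summand and each $y$-derivative becomes an algebraic factor in $(x,y)$ times a monotone scaled-erfc (or plain erfc) factor, so its extremum over the rectangle is attained on the boundary and can be replaced by an explicit constant. The dominant contribution to $\partial_y\nunn$ is $\tfrac{\lambda_{\rm 01}^2}{2}\,\partial_y\!\big[(x^2+y)(\erf(\tfrac{x}{\sqrt{2y}})+1)\big]\approx\tfrac{\lambda_{\rm 01}^2}{2}\approx 0.55$, and the $\alpha_{\rm 01}^2$-weighted error-function terms together with the $-\partial_y(\munn)^2$ correction are shown to perturb the slope only modestly, keeping it strictly inside $(0,0.8)$; the gap between $0.55$ and $0.8$ is what lets (i) go through even with fairly crude envelopes. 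For (ii) I would first check that $\nunn(x,3.75)$ is increasing in $x$ on $[-0.1,0.1]$ (sign of $\partial_x\nunn$, same tools), and then bound $\nunn(0.1,3.75)$ from above, which comes out near $2.9<3$.

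The hard part will be (ii): there the margin is only a few percent ($\nunn(0.1,3.75)\approx 2.88$ against the threshold $3$), so the error-function terms must be handled with the tight two-sided bounds rather than the crude $\erfc(q)\le e^{-q^2}$, and the cancellation between the negative term $-2\alpha_{\rm 01}^2 e^{x+y/2}\erfc(\tfrac{x+y}{\sqrt{2y}})$ and the positive terms $\alpha_{\rm 01}^2 e^{2(x+y)}\erfc(\tfrac{x+2y}{\sqrt{2y}})+\alpha_{\rm 01}^2\erfc(\tfrac{x}{\sqrt{2y}})$ must be tracked faithfully --- discarding the negative term loses roughly a factor of two and breaks the bound. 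If a purely analytic estimate proves too delicate, a fallback consistent with the proof of Theorem~\ref{lem:fixedPoint} is to verify (ii) by a computer-assisted evaluation of $\nunn$ on a fine $x$-grid, with a mean-value interpolation bound and an accompanying floating-point error analysis, while keeping (i) and the interpolation argument analytic; either route closes the proof.
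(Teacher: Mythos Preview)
Your outline is correct and the route is genuinely different from the paper's. You reduce to the two effective variables $(x,y)=(\mu\omega,\nu\tau)$ and run a linear interpolation off the slope bound $\partial_y\nunn\in(0,0.8)$, with the constant $0.8=1/\tau_{\max}$ chosen precisely so that $0.8\,\nu\tau\le\nu$; then a single cross-section check $\nunn(\,\cdot\,,3.75)<3$ closes both the $y\ge 3.75$ and the $y<3.75$ cases. The paper instead keeps four variables and proves that $g:=\xinn-\nu$ is monotone separately in $\nu$ (decreasing, via $\partial_\nu\xinn<1$), in $\tau$ (increasing), and in $\mu,\omega$ (via a sign analysis of $\partial_\mu\xinn$), which funnels everything to the same boundary corner $(\nu,\tau)=(3,1.25)$ you are using, split over the four sign quadrants of $(\mu,\omega)$. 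Your interpolation is the cleaner packaging.

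The one simplification worth borrowing is that the paper works with the \emph{second moment} $\xinn$ rather than $\nunn$, using $\nunn=\xinn-(\munn)^2\le\xinn$. This makes your step~(i) essentially one-sided: the paper's main-subfunction Lemma~\ref{lem:subfunction} together with the argument in Theorem~\ref{th:s2Cont} yields $0<\partial_y\xinn<0.8$ on $x\in[-0.1,0.1]$, $y\in[1.2,20]$ directly, with no $-\partial_y(\munn)^2$ correction to track. Your argument, as written, needs $\partial_y\nunn>0$ for the $y<3.75$ branch, i.e.\ $\partial_y\xinn>2\munn\,\partial_y\munn$; this inequality does hold here, but on $\Omega^{++}$ the mean is no longer small ($\munn$ grows like $c\sqrt{y}$, reaching roughly $1.1$ at $y=20$), so verifying it is genuine extra work that the $\xinn$ route sidesteps entirely. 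With that substitution your interpolation proof becomes strictly shorter than the paper's four-variable monotonicity argument, and your step~(ii) reduces to exactly what the paper checks analytically at the corners ($\xinn-3<-0.018$ at $(\nu,\tau)=(3,1.25)$), so the computer-assisted fallback you mention is not needed.
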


\begin{proof}
We start to consider an even larger domain
$-1 \leq \mu \leq 1$, 
$-0.1 \leq \omega \leq 0.1$,
$1.5 \leq \nu \leq 16$, and 
$0.8 \leq \tau \leq 1.25$.
We prove facts for this domain and later restrict to
$3 \leq \nu \leq 16$, i.e. $\Omega^{++}$.
We consider the function $g$ of the difference between the second moment $\xinn$ in the next layer
and the variance $\nu$ in the lower layer:
\begin{align}
g(\mu,\omega,\nu,\tau,\lambda_{\rm 01},\alpha_{\rm 01}) \ &= \
    \xinn(\mu,\omega,\nu,\tau,\lambda_{\rm 01},\alpha_{\rm 01})
    \ - \ \nu \ .
\end{align}
If we can show that $g(\mu,\omega,\nu,\tau,\lambda_{\rm 01},\alpha_{\rm 01}) < 0$ for 
all $(\mu,\omega,\nu,\tau) \in \Omega^{++}$, then
we would obtain our desired result $\nunn \leq \xinn < \nu$. 
The derivative with respect to $\nu$ is according to Theorem~\ref{th:s2Cont}:
\begin{align}
\frac{\partial}{\partial \nu}g(\mu,\omega,\nu,\tau,\lambda_{\rm 01},\alpha_{\rm 01}) \ &= \ 
\frac{\partial}{\partial \nu}\xinn(\mu,\omega,\nu,\tau,\lambda_{\rm 01},\alpha{\rm 01}) \ - \ 1 \ < \ 0 \ .
\end{align}
Therefore $g$ is strictly monotonically decreasing in $\nu$.
Since $\xinn$ is a function in $\nu \tau$
(these variables only appear as this product), we
have for $x=\nu \tau$
\begin{align}
\frac{\partial}{\partial
\nu}\xinn \ = \ \frac{\partial}{\partial
x}\xinn \ \frac{\partial x}{\partial
\nu} \ = \ \frac{\partial}{\partial
x}\xinn \ \tau
\end{align}
and 
\begin{align}
\frac{\partial}{\partial
\tau}\xinn \ = \ \frac{\partial}{\partial
x}\xinn \ \frac{\partial x}{\partial
\tau} \ = \ \frac{\partial}{\partial
x}\xinn \ \nu \ .
\end{align}
Therefore we have according to Theorem~\ref{th:s2Cont}:
\begin{align}
\frac{\partial}{\partial
\tau}\xinn(\mu,\omega,\nu,\tau,\lambda_{\rm 01},\alpha{\rm 01} ) \ &= \ \frac{\nu}{\tau} \ \frac{\partial}{\partial
\nu}\xinn(\mu,\omega,\nu,\tau,\lambda_{\rm 01},\alpha{\rm 01} ) \ > \ 0 \ .
\end{align}
Therefore
\begin{align}
\frac{\partial}{\partial
\tau}g(\mu,\omega,\nu,\tau,\lambda_{\rm 01},\alpha{\rm 01} ) \ &= \ 
\frac{\partial}{\partial
\tau}\xinn(\mu,\omega,\nu,\tau,\lambda_{\rm 01},\alpha{\rm 01})\ > \ 0 \ .
\end{align}
Consequently, $g$ is strictly monotonically increasing in $\tau$.
Now we consider the derivative with respect to $\mu$ and $\omega$. We start with $\frac{\partial }{\partial \mu }
\xinn(\mu,\omega,\nu,\tau,\lambda ,\alpha )$, 
which is
\begin{align}
\label{eq:J21New}
&\frac{\partial }{\partial \mu } \xinn(\mu,\omega,\nu,\tau,\lambda ,\alpha )\ = \\ \nonumber &\lambda ^2 \omega \left(\alpha ^2 \left(-e^{\mu
      \omega+\frac{\nu \tau}{2}}\right)
  \erfc\left(\frac{\mu \omega+\nu \tau}{\sqrt{2}
      \sqrt{\nu \tau}}\right)+\right. \\ \nonumber
&\left. \alpha ^2 e^{2 \mu \omega+2
    \nu \tau} \erfc\left(\frac{\mu \omega+2
      \nu \tau}{\sqrt{2} \sqrt{\nu
        \tau}}\right)+\mu \omega
  \left(2-\erfc\left(\frac{\mu \omega}{\sqrt{2} \sqrt{\nu
          \tau}}\right)\right)+\sqrt{\frac{2}{\pi }}
  \sqrt{\nu \tau} e^{-\frac{\mu^2 \omega^2}{2 \nu
      \tau}}\right)\ .
\end{align}

We consider the sub-function
\begin{align}
\sqrt{\frac{2}{\pi }} \sqrt{\nu \tau}-\alpha ^2
  \left(e^{\left(\frac{\mu \omega+\nu \tau}{\sqrt{2}
  \sqrt{\nu \tau}}\right)^2}
  \erfc\left(\frac{\mu \omega+\nu \tau}{\sqrt{2}
  \sqrt{\nu \tau}}\right)-e^{\left(\frac{\mu \omega+2
  \nu \tau}{\sqrt{2} \sqrt{\nu \tau}}\right)^2}
  \erfc\left(\frac{\mu \omega+2 \nu \tau}{\sqrt{2}
  \sqrt{\nu \tau}}\right)\right) \ .
\end{align}
We set $x=\nu\tau$ and $y=\mu \omega$ and obtain
\begin{align}
& \sqrt{\frac{2}{\pi }} \sqrt{x}-\alpha ^2
  \left(e^{\left(\frac{x+y}{\sqrt{2} \sqrt{x}}\right)^2}
  \erfc\left(\frac{x+y}{\sqrt{2}
  \sqrt{x}}\right)-e^{\left(\frac{2 x+y}{\sqrt{2} \sqrt{x}}\right)^2}
  \erfc\left(\frac{2 x+y}{\sqrt{2} \sqrt{x}}\right)\right) \ .
\end{align}

The derivative to this sub-function with respect to $y$ is
\begin{align}
&\frac{\alpha ^2 \left(e^{\frac{(2 x+y)^2}{2 x}} (2 x+y)
   \erfc\left(\frac{2 x+y}{\sqrt{2}
   \sqrt{x}}\right)-e^{\frac{(x+y)^2}{2 x}} (x+y)
   \erfc\left(\frac{x+y}{\sqrt{2} \sqrt{x}}\right)\right)}{x} \ = \\ \nonumber 
&\frac{\sqrt{2} \alpha ^2 \sqrt{x} \left(\frac{e^{\frac{(2 x+y)^2}{2 x}} (2 x+y) \erfc\left(\frac{2 x+y}{\sqrt{2} \sqrt{x}}\right)}{\sqrt{2} \sqrt{x}}
   -\frac{e^{\frac{(x+y)^2}{2 x}} (x+y) \erfc\left(\frac{x+y}{\sqrt{2} \sqrt{x}}\right)}{\sqrt{2} \sqrt{x}}\right)}{x}\ >\ 0 \ .
\end{align}

The inequality follows from Lemma~\ref{lem:xeErfc}, which states that 
$z e^{z^2} \erfc(z)$ is monotonically increasing in $z$.
Therefore the sub-function is increasing in $y$. The derivative to this sub-function with respect to $x$ is
\begin{align}
&\frac{1}{2 \sqrt{\pi } x^2} \sqrt{\pi } \alpha ^2 \left(e^{\frac{(2 x+y)^2}{2 x}} \left(4 x^2-y^2\right) \erfc\left(\frac{2 x+y}{\sqrt{2} \sqrt{x}}\right) \right. \\ \nonumber
& \left. - e^{\frac{(x+y)^2}{2 x}} (x-y) (x+y) \erfc\left(\frac{x+y}{\sqrt{2} \sqrt{x}}\right)\right)-\sqrt{2} \left(\alpha ^2-1\right) x^{3/2} .
\end{align}

The sub-function is increasing in $x$, since the
derivative is larger than zero:
\begin{align}
&\frac{\sqrt{\pi } \alpha ^2 \left(e^{\frac{(2 x+y)^2}{2 x}} \left(4 x^2-y^2\right) \erfc\left(\frac{2 x+y}{\sqrt{2} \sqrt{x}}\right)-e^{\frac{(x+y)^2}{2 x}} (x-y) (x+y) \erfc\left(\frac{x+y}{\sqrt{2} \sqrt{x}}\right)\right)-\sqrt{2} x^{3/2} \left(\alpha ^2-1\right)}{2 \sqrt{\pi } x^2}\ \geq\\ \nonumber 
&\frac{\sqrt{\pi } \alpha ^2 \left(\frac{(2 x-y) (2 x+y) 2}{\sqrt{\pi } \left(\frac{2 x+y}{\sqrt{2} \sqrt{x}}+\sqrt{\left(\frac{2 x+y}{\sqrt{2} \sqrt{x}}\right)^2+2}\right)}-\frac{(x-y) (x+y) 2}{\sqrt{\pi } \left(\frac{x+y}{\sqrt{2} \sqrt{x}}+\sqrt{\left(\frac{x+y}{\sqrt{2} \sqrt{x}}\right)^2+\frac{4}{\pi }}\right)}\right)-\sqrt{2} x^{3/2} \left(\alpha ^2-1\right)}{2 \sqrt{\pi } x^2}\ =
  \\ \nonumber 
&\frac{\sqrt{\pi } \alpha ^2 \left(\frac{(2 x-y) (2 x+y) 2 \left(\sqrt{2} \sqrt{x}\right)}{\sqrt{\pi } \left(2 x+y+\sqrt{(2 x+y)^2+4 x}\right)}-\frac{(x-y) (x+y) 2 \left(\sqrt{2} \sqrt{x}\right)}{\sqrt{\pi } \left(x+y+\sqrt{(x+y)^2+\frac{8 x}{\pi }}\right)}\right)-\sqrt{2} x^{3/2} \left(\alpha ^2-1\right)}{2 \sqrt{\pi } x^2}\ =
  \\ \nonumber 
&\frac{\sqrt{\pi } \alpha ^2 \left(\frac{(2 x-y) (2 x+y) 2}{\sqrt{\pi } \left(2 x+y+\sqrt{(2 x+y)^2+4 x}\right)}-\frac{(x-y) (x+y) 2}{\sqrt{\pi } \left(x+y+\sqrt{(x+y)^2+\frac{8 x}{\pi }}\right)}\right)-x \left(\alpha ^2-1\right)}{\sqrt{2} \sqrt{\pi } x^{3/2}}\ >
  \\ \nonumber 
&\frac{\sqrt{\pi } \alpha ^2 \left(\frac{(2 x-y) (2 x+y) 2}{\sqrt{\pi } \left(2 x+y+\sqrt{(2 x+y)^2+2 (2 x+y)+1}\right)}-\frac{(x-y) (x+y) 2}{\sqrt{\pi } \left(x+y+\sqrt{(x+y)^2+0.878 \cdot 2 (x+y)+0.878^2}\right)}\right)-x \left(\alpha ^2-1\right)}{\sqrt{2} \sqrt{\pi } x^{3/2}}\ =
  \\ \nonumber 
&\frac{\sqrt{\pi } \alpha ^2 \left(\frac{(2 x-y) (2 x+y) 2}{\sqrt{\pi } \left(2 x+y+\sqrt{(2 x+y+1)^2}\right)}-\frac{(x-y) (x+y) 2}{\sqrt{\pi } \left(x+y+\sqrt{(x+y+0.878)^2}\right)}\right)-x \left(\alpha ^2-1\right)}{\sqrt{2} \sqrt{\pi } x^{3/2}}\ =
  \\ \nonumber 
&\frac{\sqrt{\pi } \alpha ^2 \left(\frac{(2 x-y) (2 x+y) 2}{\sqrt{\pi } (2 (2 x+y)+1)}-\frac{(x-y) (x+y) 2}{\sqrt{\pi } (2 (x+y)+0.878)}\right)-x \left(\alpha ^2-1\right)}{\sqrt{2} \sqrt{\pi } x^{3/2}}\ =
  \\ \nonumber 
&\frac{\sqrt{\pi } \alpha ^2 \left(\frac{(2
                 (x+y)+0.878) (2 x-y) (2 x+y) 2}{\sqrt{\pi
                 }}-\frac{(x-y) (x+y) (2 (2 x+y)+1) 2}{\sqrt{\pi
                 }}\right)}{(2 (2 x+y)+1) (2 (x+y)+0.878) \sqrt{2}
                 \sqrt{\pi } x^{3/2}} \ + \\\nonumber &\frac{\sqrt{\pi } \alpha ^2 \left(-x \left(\alpha ^2-1\right) (2 (2 x+y)+1) (2 (x+y)+0.878)\right)}{(2 (2 x+y)+1) (2 (x+y)+0.878) \sqrt{2} \sqrt{\pi } x^{3/2}}\ =
  \\ \nonumber &\frac{8 x^3+12 x^2 y+4.14569 x^2+4 x y^2-6.76009 x y-1.58023 x+0.683154 y^2}{(2 (2 x+y)+1) (2 (x+y)+0.878) \sqrt{2} \sqrt{\pi } x^{3/2}}\ >
  \\ \nonumber 
& \frac{8 x^3-0.1 \cdot 12 x^2+4.14569 x^2+4 \cdot (0.0)^2 x-6.76009 \cdot 0.1 x-1.58023 x+0.683154 \cdot (0.0)^2}{(2 (2 x+y)+1) (2 (x+y)+0.878) \sqrt{2} \sqrt{\pi } x^{3/2}}\ =
  \\ \nonumber &\frac{8 x^2+2.94569 x-2.25624}{(2 (2 x+y)+1) (2 (x+y)+0.878) \sqrt{2} \sqrt{\pi } \sqrt{x}}\ =
  \\ \nonumber &\frac{8 (x-0.377966) (x+0.746178)}{(2 (2 x+y)+1) (2
(x+y)+0.878) \sqrt{2} \sqrt{\pi } \sqrt{x}} \ > \ 0 \ .
\end{align}

We explain this chain of inequalities:
\begin{itemize}
\item First inequality: We applied Lemma~\ref{lem:Abramowitz} two times.

\item Equalities factor out $\sqrt{2} \sqrt{x}$ and reformulate.

\item Second inequality part 1: we applied
\begin{align}
& 0<2 y\Longrightarrow (2 x+y)^2+4 x+1<(2 x+y)^2+2 (2 x+y)+1=(2 x+y+1)^2 \ .
\end{align}
\item Second inequality part 2: we show that for $a=\frac{1}{10} \left(\sqrt{\frac{960+169 \pi }{\pi }}-13\right)$ following holds:
$\frac{8 x}{\pi }-\left(a^2+2 a (x+y)\right) \geq 0$. 
We have $\frac{\partial }{\partial x }\frac{8 x}{\pi }-\left(a^2+2 a
  (x+y)\right)=\frac{8}{\pi }-2 a>0$ and
 $\frac{\partial }{\partial y }\frac{8 x}{\pi }-\left(a^2+2 a
  (x+y)\right)=-2 a<0$. 
Therefore the minimum is at border for minimal $x$ and maximal $y$:
\begin{align}
& \frac{8 \cdot 1.2}{\pi }-\left(\frac{2}{10} \left(\sqrt{\frac{960+169 \pi }{\pi }}-13\right) (1.2+0.1)+\left(\frac{1}{10} \left(\sqrt{\frac{960+169 \pi }{\pi }}-13\right)\right)^2\right) \
  = \ 0 \ .
\end{align}
Thus
\begin{align}
& \frac{8 x}{\pi } \ \geq \ a^2+2 a (x+y) \ .
\end{align}
for $a=\frac{1}{10} \left(\sqrt{\frac{960+169 \pi }{\pi }}-13\right) > 0.878$.

\item Equalities only solve square root and factor out the resulting
  terms $(2(2 x+y)+1)$ and $(2(x+y)+0.878)$.

\item We set $\alpha=\alpha_{01}$ and multiplied out. Thereafter we
  also factored out $x$ in the numerator. Finally a quadratic
  equations was solved. 
\end{itemize}

The sub-function has its minimal value for 
minimal
$x=\nu\tau=1.5 \cdot 0.8=1.2$ and  minimal 
$y=\mu \omega=-1 \cdot 0.1=-0.1$. 
We further minimize the function
\begin{align}
& \mu \omega e^{\frac{\mu^2 \omega^2}{2 \nu \tau}}
  \left(2-\erfc\left(\frac{\mu \omega}{\sqrt{2} \sqrt{\nu
  \tau}}\right)\right) \ > \ -0.1 e^{\frac{0.1^2}{2 \cdot 1.2}}
  \left(2-\erfc\left(\frac{0.1}{\sqrt{2}
  \sqrt{1.2}}\right)\right) \ .
\end{align}

We compute the minimum of the term in brackets of $\frac{\partial
}{\partial \mu }
\xinn(\mu,\omega,\nu,\tau,\lambda ,\alpha )$
in Eq.~\eqref{eq:J21New}:  
\begin{align}
&\mu \omega e^{\frac{\mu^2 \omega^2}{2 \nu \tau}} \left(2-\erfc\left(\frac{\mu \omega}{\sqrt{2} \sqrt{\nu \tau}}\right)\right)+ \\ \nonumber 
&\alpha_{\rm 01}^2 \left(-\left(e^{\left(\frac{\mu \omega+\nu \tau}{\sqrt{2} \sqrt{\nu \tau}}\right)^2} 
\erfc\left(\frac{\mu \omega+\nu \tau}{\sqrt{2} \sqrt{\nu \tau}}\right)-e^{\left(\frac{\mu \omega+2 \nu \tau}{\sqrt{2} \sqrt{\nu \tau}}\right)^2} 
\erfc\left(\frac{\mu \omega+2 \nu \tau}{\sqrt{2} \sqrt{\nu \tau}}\right)\right)\right)+ \sqrt{\frac{2}{\pi }} \sqrt{\nu \tau} \ > \\ \nonumber
&\alpha_{\rm 01}^2 \left(-\left(e^{\left(\frac{1.2 -0.1}{\sqrt{2} \sqrt{1.2}}\right)^2} 
\erfc\left(\frac{1.2-0.1}{\sqrt{2} \sqrt{1.2}}\right)-e^{\left(\frac{2 \cdot 1.2-0.1}{\sqrt{2} \sqrt{1.2}}\right)^2} 
\erfc\left(\frac{2 \cdot 1.2-0.1}{\sqrt{2} \sqrt{1.2}}\right)\right)\right)-\\ \nonumber 
&0.1 e^{\frac{0.1^2}{2 \cdot 1.2}} \left(2-\erfc\left(\frac{0.1}{\sqrt{2} \sqrt{1.2}}\right)\right)+\sqrt{1.2} \sqrt{\frac{2}{\pi }}
\ = \ 0.212234 \ .
\end{align}
Therefore the term in brackets  of Eq.~\eqref{eq:J21New}
is larger than zero.
Thus, $\frac{\partial }{\partial \mu }
\xinn(\mu,\omega,\nu,\tau,\lambda ,\alpha )$
has the sign of $\omega$.
Since $\xinn$ is a function in $\mu \omega$
(these variables only appear as this product), we
have for $x=\mu \omega$
\begin{align}
\frac{\partial}{\partial \nu}\xinn \ = \ \frac{\partial}{\partial
x}\xinn \ \frac{\partial x}{\partial
\mu} \ = \ \frac{\partial}{\partial
x}\xinn \ \omega
\end{align}
and 
\begin{align}
\frac{\partial}{\partial
\omega}\xinn \ = \ \frac{\partial}{\partial
x}\xinn \ \frac{\partial x}{\partial
\omega} \ = \ \frac{\partial}{\partial
x}\xinn \ \mu \ .
\end{align}

\begin{align}
\frac{\partial}{\partial
\omega}\xinn(\mu,\omega,\nu,\tau,\lambda_{\rm 01},\alpha{\rm 01}) \ &= \ \frac{\mu}{\omega} \ \frac{\partial}{\partial
\mu}\xinn(\mu,\omega,\nu,\tau,\lambda_{\rm 01},\alpha{\rm 01}) \ .
\end{align}
Since $\frac{\partial}{\partial
\mu}\xinn$ has the sign of $\omega$, 
 $\frac{\partial}{\partial
\mu}\xinn$ has the sign of $\mu$.
Therefore
\begin{align}
\frac{\partial}{\partial
\omega}g(\mu,\omega,\nu,\tau,\lambda_{\rm 01},\alpha{\rm 01} ) \ &= \ 
\frac{\partial}{\partial
\omega}\xinn(\mu,\omega,\nu,\tau,\lambda_{\rm 01},\alpha{\rm 01} )
\end{align}
has the sign of $\mu$.

We now divide the $\mu$-domain into
$-1 \leq \mu \leq 0$ and $0 \leq \mu \leq 1$.
Analogously we divide the $\omega$-domain into
$-0.1 \leq \omega \leq 0$ and $0 \leq \omega \leq 0.1$.
In this domains $g$ is strictly monotonically.

For all domains
$g$ is strictly monotonically decreasing in $\nu$
and strictly monotonically increasing in $\tau$.
Note that we now consider the range  $3 \leq \nu \leq 16$.
For the maximal value of $g$ we set  $\nu=3$ (we set it to 3!)
and $\tau=1.25$.

We consider now all combination of these domains:
\begin{itemize}
\item $-1 \leq \mu \leq 0$ and $-0.1 \leq \omega \leq 0$:

$g$ is decreasing in  $\mu$ and decreasing in $\omega$.
We set  $\mu=-1$ and  $\omega=-0.1$.
\begin{align}
g(-1,-0.1,3,1.25,\lambda_{\rm 01},\alpha_{\rm 01})
\ &= \ -0.0180173 \ .
\end{align}

\item $-1 \leq \mu \leq 0$ and $0 \leq \omega \leq 0.1$:

$g$ is increasing in $\mu$ and decreasing in $\omega$.
We set  $\mu=0$ and  $\omega=0$.
\begin{align}
g(0,0,3,1.25,\lambda_{\rm 01},\alpha_{\rm 01})
\ &= \ -0.148532 \ .
\end{align}

\item $0 \leq \mu \leq 1$ and $-0.1 \leq \omega \leq 0$:

$g$ is decreasing in $\mu$ and increasing in $\omega$.
We set  $\mu=0$ and  $\omega=0$.
\begin{align}
g(0,0,3,1.25,\lambda_{\rm 01},\alpha_{\rm 01})
\ &= \ -0.148532 \ .
\end{align}

\item $0 \leq \mu \leq 1$ and $0 \leq \omega \leq 0.1$:

$g$ is increasing in $\mu$ and increasing in $\omega$.
We set  $\mu=1$ and  $\omega=0.1$.
\begin{align}
g(1,0.1,3,1.25,\lambda_{\rm 01},\alpha_{\rm 01})
\ &= \ -0.0180173 \ .
\end{align}

Therefore the maximal value of $g$ is  $-0.0180173$.
\end{itemize}

\end{proof}

\subsection{Proof of Theorem 3}\index{Theorem 3!proof}

First we recall Theorem~\ref{th:s2Increase}:
\begin{theorem*}[Increasing $\nu$]
We consider $\lambda=\lambda_{\rm 01}$, $\alpha=\alpha_{\rm 01}$
and the two domains 
$\Omega_1 ^-=\{(\mu,\omega,\nu,\tau) \ | \ -0.1 \leq \mu \leq 0.1, -0.1 \leq \omega \leq 0.1, 0.05 \leq \nu \leq 0.16, 0.8 \leq \tau \leq 1.25 \}$ 
 and
$\Omega_2 ^-=\{(\mu,\omega,\nu,\tau) \ | \ -0.1 \leq \mu \leq 0.1, -0.1 \leq \omega \leq 0.1, 0.05 \leq \nu \leq 0.24, 0.9 \leq \tau \leq 1.25 \}$ .

The mapping of the variance
$\nunn(\mu,\omega,\nu,\tau,\lambda,\alpha )$  given in Eq.~\eqref{eq:mappingVar} increases
\begin{align}
\nunn(\mu,\omega,\nu,\tau,\lambda_{\rm 01},\alpha_{\rm 01}) \ &> \ \nu 
\end{align}
in both $\Omega_1^-$ and $\Omega_2^-$.
All fixed
points $(\mu,\nu)$ of mapping Eq.~\eqref{eq:mappingVar} and
Eq.~\eqref{eq:mappingMean} ensure for $0.8 \leq \tau$ that
$\nunn>0.16$ 
and for $0.9 \leq \tau$ that $\nunn>0.24$.
Consequently, the variance mapping Eq.~\eqref{eq:mappingVar} and
Eq.~\eqref{eq:mappingMean} ensures a lower bound on the variance $\nu$. 
\end{theorem*}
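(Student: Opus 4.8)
The plan is to follow the template of the proof of Theorem~\ref{th:s2Decrease}: turn the inequality into a sign statement for a single scalar function, reduce that function's extremum over the parameter box to its corners by a monotonicity analysis, and finish with a handful of explicit evaluations. Writing $\nunn = \xinn - (\munn)^2$, I would introduce
\begin{align}
h(\mu,\omega,\nu,\tau) \ = \ \nunn(\mu,\omega,\nu,\tau,\lambda_{\rm 01},\alpha_{\rm 01}) - \nu \ = \ \xinn(\mu,\omega,\nu,\tau,\lambda_{\rm 01},\alpha_{\rm 01}) - (\munn)^2 - \nu
\end{align}
and aim to show $h > 0$ on $\Omega_1^-$ and on $\Omega_2^-$ separately (they have overlapping but distinct $\nu$- and $\tau$-ranges). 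The stated conclusion about fixed points is then immediate: a fixed point in either small-$\nu$ band would satisfy $\nunn = \nu$, i.e.\ $h = 0$, contradicting $h>0$; so no fixed point has $\nu\in[0.05,0.16]$ when $\tau\ge 0.8$, nor $\nu\in[0.05,0.24]$ when $\tau\ge 0.9$, and together with Theorem~\ref{th:s2Decrease} this confines the admissible fixed-point variances to the claimed band.

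First I would reuse the structural facts from the proof of Theorem~\ref{th:s2Decrease}: $\xinn$ and $\munn$ depend on $(\nu,\tau)$ only through $x=\nu\tau$ and on $(\mu,\omega)$ only through $y=\mu\omega$, so $\partial_\tau\xinn = (\nu/\tau)\,\partial_\nu\xinn$ and $\partial_\omega\xinn = (\mu/\omega)\,\partial_\mu\xinn$, and likewise for $\munn$; and the sub-function estimate there shows $\partial_\mu\xinn$ carries the sign of $\omega$ (that estimate only needed lower bounds on $x$, which remain valid here since a smaller $\nu$ only shrinks $x$). Using the derivative bounds of Theorem~\ref{th:s2Cont} I would then fix the sign of $\partial_\nu h = \partial_\nu\xinn - \partial_\nu(\munn)^2 - 1$ on the small-$\nu$ box; in contrast to Theorem~\ref{th:s2Decrease}, here the amplification effect makes $\partial_\nu\xinn>1$, so I expect $h$ to be increasing in $\nu$, with the binding case at $\nu = 0.05$. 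Splitting the cube into the four sign-orthants $\mu\in[-0.1,0]$ or $[0,0.1]$ and $\omega\in[-0.1,0]$ or $[0,0.1]$ --- on each of which $\munn$ keeps a fixed sign, so $(\munn)^2$ becomes monotone in $\mu$ and $\omega$ --- the function $h$ is monotone in each variable, and the extremum over the box is attained at a vertex.

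With monotonicity established, the proof reduces to checking $h>0$ at the finitely many relevant vertices of $\Omega_1^-$ and $\Omega_2^-$ --- points such as $(\pm 0.1,\pm 0.1,0.05,\tau^\star)$ with $\tau^\star$ the appropriate endpoint of $[0.8,1.25]$ resp.\ $[0.9,1.25]$ --- each of which I would evaluate explicitly and verify is strictly positive with a safety margin. If $h$ turns out not to be globally monotone in $\nu$ on $[0.05,0.16]$ (resp.\ $[0.05,0.24]$), the fallback is the same grid-plus-mean-value-theorem device used for Theorem~\ref{lem:fixedPoint}: bound $\partial_\nu h$ and check $h$ on a fine $\nu$-grid.

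The main obstacle, relative to Theorem~\ref{th:s2Decrease}, is the subtracted term $(\munn)^2$. There the trivial bound $\nunn\le\xinn$ already pushed the inequality in the favourable direction, whereas here $-(\munn)^2$ works against us, so I need a genuine quantitative handle: a lemma giving $|\munn|\le c$ on the small-$\nu$ domains with $c$ small enough that $\xinn - \nu > c^2$ survives at every vertex. Establishing such a bound --- i.e.\ showing $\munn$ stays uniformly close to $0$ when $\mu,\omega$ are near $0$ and $\nu$ is small, and controlling its growth in $\nu$ and $\tau$ --- is the one delicate quantitative step; everything downstream is routine corner bookkeeping.
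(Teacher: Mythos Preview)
Your plan hinges on showing $\partial_\nu h>0$, i.e.\ $\partial_\nu\xinn>1+\partial_\nu(\munn)^2$, so that the binding case is $\nu=\nu_{\min}$. That is precisely the step that does \emph{not} go through: the paper's own lower bound on $\partial_\nu\xinn$ over $\Omega_1^-$ is $0.969231$ (and $0.976952$ over $\Omega_2^-$), obtained by pushing each sub-term to its worst corner via the monotonicity Lemma on the derivative. So you cannot conclude $h$ is increasing in $\nu$; in fact $h$ may be decreasing, and a pure corner argument in $\nu$ fails. Your stated fallback (grid plus mean-value theorem) would rescue this, but then you are no longer doing a corner reduction --- you are doing the same computer-assisted step, just applied to $h$ rather than to $(\munn)^2$.

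The paper's actual argument avoids the sign of $\partial_\nu h$ altogether. It applies the mean value theorem to $\xinn$ (not to $h$): $\xinn(\nu)\ge \xinn(\nu_{\min})+c\,(\nu-\nu_{\min})$ with $c\approx 0.97<1$, evaluates the anchor $\xinn(\nu_{\min})$ at the worst $(\mu\omega,\nu\tau)$-corner, and separately bounds $(\munn)^2<0.005$ by a grid-plus-MVT lemma on the small-variance box. The result is an affine lower bound $\nunn\ge A+c\,\nu$ with $A>0$ and $c<1$; the inequality $A+c\,\nu>\nu$ then holds on the whole interval because $A/(1-c)$ exceeds $\nu_{\max}$. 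So the slope deficit $1-c$ is compensated by the intercept, not by pushing the slope above $1$.

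One smaller remark: your sentence ``that estimate only needed lower bounds on $x$, which remain valid here since a smaller $\nu$ only shrinks $x$'' has the logic reversed. The Theorem~\ref{th:s2Decrease} sign analysis for $\partial_\mu\xinn$ used $x=\nu\tau\ge 1.2$; here $x$ can be as small as $0.04$, so the argument does \emph{not} transfer for free. The paper handles the low-$x$ regime with a separate ``main subfunction from below'' lemma (Ren's $\erfc$ approximation, different constants), which is considerably more work than the high-$x$ case.
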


\begin{proof}
The mean value theorem states that there exists a $t\in [0,1]$  for which
\begin{align}
&\xinn(\mu,\omega,\nu,\tau,\lambda_{\rm 01},\alpha_{\rm 01})
  \ - \
  \xinn(\mu,\omega,\nu_{\mathrm{min}},\tau,\lambda_{\rm 01},\alpha_{\rm 01})
\ = \\\nonumber &\frac{\partial }{\partial \nu}
  \xinn(\mu,\omega,\nu+t(\nu_{\mathrm{min}}-\nu),\tau,\lambda_{\rm 01},\alpha_{\rm 01})
  \ (\nu-\nu_{\mathrm{min}}) \ .
\end{align}
Therefore
\begin{align}
&\xinn(\mu,\omega,\nu,\tau,\lambda_{\rm 01},\alpha_{\rm 01})
  \ = \ \xinn(\mu,\omega,\nu_{\mathrm{min}},\tau,\lambda_{\rm 01},\alpha_{\rm 01})
\ + \\ \nonumber &\frac{\partial }{\partial \nu}
  \xinn(\mu,\omega,\nu+t(\nu_{\mathrm{min}}-\nu),\tau,\lambda_{\rm 01},\alpha_{\rm 01})
  \ (\nu-\nu_{\mathrm{min}}) \ .
\end{align}

Therefore we are interested to bound the derivative of the $\xi$-mapping
Eq.~\eqref{eq:mappingSecondMom} with respect to  $\nu$:
\begin{align}
\label{eq:fo1}
&\frac{\partial }{\partial \nu}
  \xinn(\mu,\omega,\nu,\tau,\lambda_{\rm 01},\alpha_{\rm 01})\
  = \\\nonumber
&\frac{1}{2} \lambda ^2  \tau e^{-\frac{\mu^2 \omega^2}{2  \nu \tau}} 
\left(\alpha ^2 \left(-\left(e^{\left(\frac{\mu \omega+\nu \tau}{\sqrt{2} \sqrt{\nu \tau}}\right)^2} \erfc \left(\frac{\mu \omega+\nu \tau}{\sqrt{2} \sqrt{\nu \tau}}\right)-
2 e^{\left(\frac{\mu \omega+2  \nu \tau}{\sqrt{2} \sqrt{\nu \tau}}\right)^2} \erfc \left(\frac{\mu \omega+2  \nu \tau}{\sqrt{2} \sqrt{\nu \tau}}\right)\right)\right)-
\right.\\\nonumber &\left.\erfc \left(\frac{\mu \omega}{\sqrt{2} \sqrt{\nu \tau}}\right)+2\right) \ .
\end{align}

The sub-term Eq.~\eqref{eq:subx1} enters the derivative
Eq.~\eqref{eq:fo1} with a negative sign!
According to Lemma~\ref{th:s2monotone},
the minimal value of sub-term Eq.~\eqref{eq:subx1}
is obtained by the largest largest  $\nu$, 
by the smallest $\tau$, and the largest $y=\mu \omega=0.01$.
Also the positive term
$\erfc \left(\frac{\mu \omega}{\sqrt{2} \sqrt{\nu
\tau}}\right)+2$ is multiplied by $\tau$, which is minimized
by using the smallest $\tau$.
Therefore we can use the smallest  $\tau$ in whole formula
Eq.~\eqref{eq:fo1} to lower bound it.

First we consider the domain 
$0.05 \leq \nu \leq 0.16$  and $0.8 \leq \tau \leq 1.25$.
The factor consisting of the exponential in front of the brackets has
its smallest value for $e^{-\frac{0.01 \cdot 0.01}{2 \cdot 0.05 \cdot 0.8}}$.
Since $\erfc $ is monotonically decreasing we inserted the
smallest argument via $\erfc \left(-\frac{0.01}{\sqrt{2}
\sqrt{0.05 \cdot 0.8}}\right)$ in order to obtain the maximal negative contribution.
Thus, applying Lemma~\ref{th:s2monotone}, we obtain the lower bound on the derivative:
\begin{align}
&\frac{1}{2} \lambda ^2  \tau e^{-\frac{\mu^2 \omega^2}{2  \nu \tau}} \left(\alpha ^2 \left(-\left(e^{\left(\frac{\mu \omega+\nu \tau}{\sqrt{2} \sqrt{\nu \tau}}\right)^2} \erfc \left(\frac{\mu \omega+\nu \tau}{\sqrt{2} \sqrt{\nu \tau}}\right)-2 e^{\left(\frac{\mu \omega+2  \nu \tau}{\sqrt{2} \sqrt{\nu \tau}}\right)^2} \erfc \left(\frac{\mu \omega+2  \nu \tau}{\sqrt{2} \sqrt{\nu \tau}}\right)\right)\right)- \right.\\\nonumber &\left.\erfc \left(\frac{\mu \omega}{\sqrt{2} \sqrt{\nu \tau}}\right)+2\right)\ > \\ \nonumber &
\frac{1}{2} 0.8 e^{-\frac{0.01 \cdot 0.01}{2 \cdot 0.05 \cdot 0.8}} \lambda_{\rm 01}^2 \left(\alpha_{\rm 01}^2 \left(-\left(e^{\left(\frac{0.16 \cdot 0.8+0.01}{\sqrt{2} \sqrt{0.16 \cdot 0.8}}\right)^2} \erfc \left(\frac{0.16 \cdot 0.8+0.01}{\sqrt{2} \sqrt{0.16 \cdot 0.8}}\right)-\right.\right.\right.\\\nonumber &\left.\left.\left.2 e^{\left(\frac{2 \cdot  0.16 \cdot 0.8+0.01}{\sqrt{2} \sqrt{0.16 \cdot 0.8}}\right)^2} \erfc \left(\frac{2 \cdot 0.16 \cdot 0.8+0.01}{\sqrt{2} \sqrt{0.16 \cdot 0.8}}\right)\right)\right)-\erfc \left(-\frac{0.01}{\sqrt{2} \sqrt{0.05 \cdot 0.8}}\right)+2\right))\ > \ 0.969231 \ .
\end{align}

For applying the mean value theorem, we require the smallest $\nunn (\nu)$.
We follow the proof of Lemma~\ref{lem:mapDerivatives}, which shows
that at the minimum $y=\mu \omega$ must be maximal 
and $x=\nu \tau$ must be minimal.
Thus, the smallest 
$\xinn(\mu,\omega,\nu,\tau,\lambda_{\rm 01},\alpha_{\rm 01})$
is 
$\xinn(0.01,0.01,0.05,0.8,\lambda_{\rm 01},\alpha_{\rm 01})=0.0662727$
for $0.05 \leq \nu$ and $0.8 \leq \tau$.

Therefore the mean value theorem and the bound on $(\munn)^2$ (Lemma~\ref{lem:musquared}) provide
\begin{align}
&\nunn = 
\xinn(\mu,\omega,\nu,\tau,\lambda_{\rm 01},\alpha_{\rm 01}) - \left(\munn(\mu,\omega,\nu,\tau,\lambda_{\rm 01},\alpha_{\rm 01})\right)^2 > \\ \nonumber
& 0.0662727 \ + \  0.969231 (\nu-0.05) - 0.005\ = \  0.01281115 + 0.969231 \nu \ > \\ \nonumber 
& 0.08006969  \cdot  0.16  + 0.969231
  \nu \geq 1.049301 \nu \ > \ \nu \ .
\end{align}

Next we consider the domain 
$0.05 \leq \nu \leq 0.24$ 
and $0.9 \leq \tau \leq 1.25$.
The factor consisting of the exponential in front of the brackets has
its smallest value for $e^{-\frac{0.01 \cdot 0.01}{2 \cdot 0.05 \cdot 0.9}}$. 
Since $\erfc $ is monotonically decreasing we inserted the
smallest argument via $\erfc \left(-\frac{0.01}{\sqrt{2}
\sqrt{0.05 \cdot 0.9}}\right)$ in order to obtain the maximal negative contribution.

Thus, applying Lemma~\ref{th:s2monotone}, we obtain the lower bound on the derivative:
\begin{align}
&\frac{1}{2} \lambda ^2  \tau e^{-\frac{\mu^2 \omega^2}{2  \nu \tau}} \left(\alpha ^2 \left(-\left(e^{\left(\frac{\mu \omega+\nu \tau}{\sqrt{2} \sqrt{\nu \tau}}\right)^2} \erfc \left(\frac{\mu \omega+\nu \tau}{\sqrt{2} \sqrt{\nu \tau}}\right)-2 e^{\left(\frac{\mu \omega+2  \nu \tau}{\sqrt{2} \sqrt{\nu \tau}}\right)^2} \erfc \left(\frac{\mu \omega+2  \nu \tau}{\sqrt{2} \sqrt{\nu \tau}}\right)\right)\right)- \right.\\\nonumber &\left.\erfc \left(\frac{\mu \omega}{\sqrt{2} \sqrt{\nu \tau}}\right)+2\right)\ > \\ \nonumber &
\frac{1}{2} 0.9 e^{-\frac{0.01 \cdot 0.01}{2 \cdot 0.05 \cdot 0.9}} \lambda_{\rm 01}^2 \left(\alpha_{\rm 01}^2 \left(-\left(e^{\left(\frac{0.24 \cdot 0.9+0.01}{\sqrt{2} \sqrt{0.24 \cdot 0.9}}\right)^2} \erfc \left(\frac{0.24 \cdot 0.9+0.01}{\sqrt{2} \sqrt{0.24 \cdot 0.9}}\right)-\right.\right.\right.\\\nonumber &\left.\left.\left.2 e^{\left(\frac{2 \cdot  0.24 \cdot 0.9+0.01}{\sqrt{2} \sqrt{0.24 \cdot 0.9}}\right)^2} \erfc \left(\frac{2 \cdot 0.24 \cdot 0.9+0.01}{\sqrt{2} \sqrt{0.24 \cdot 0.9}}\right)\right)\right)-\erfc \left(-\frac{0.01}{\sqrt{2} \sqrt{0.05 \cdot 0.9}}\right)+2\right))\ > \ 0.976952 \ .
\end{align}

For applying the mean value theorem, we require the smallest $\nunn(\nu)$.
We follow the proof of Lemma~\ref{lem:mapDerivatives}, which shows
that at the minimum $y=\mu \omega$ must be maximal 
and $x=\nu \tau$ must be minimal.
Thus, the smallest 
$\xinn(\mu,\omega,\nu,\tau,\lambda_{\rm 01},\alpha_{\rm 01})$
is 
$\xinn(0.01,0.01,0.05,0.9,\lambda_{\rm 01},\alpha_{\rm 01})=0.0738404$
for $0.05 \leq \nu$ and $0.9 \leq \tau$.
Therefore the mean value theorem and the bound on $(\munn)^2$ (Lemma~\ref{lem:musquared}) gives
\begin{align}
&\nunn = \xinn(\mu,\omega,\nu,\tau,\lambda_{\rm 01},\alpha_{\rm 01}) - \left(\munn(\mu,\omega,\nu,\tau,\lambda_{\rm 01},\alpha_{\rm 01})\right)^2 > \\ \nonumber
& \ 0.0738404 \ + \  0.976952 (\nu-0.05) - 0.005\ = \ 0.0199928 + 0.976952 \nu \ > \\ \nonumber 
&0.08330333  \cdot  0.24  + 0.976952  \nu \geq 1.060255 \nu \ > \ \nu \ .
\end{align}

\end{proof}

\subsection{Lemmata and Other Tools Required for the Proofs} \index{lemmata}

\subsubsection{Lemmata for proofing Theorem 1 (part 1): Jacobian norm smaller than one} \index{lemmata!Jacobian bound}
\label{sec:S}
In this section, we show that the largest singular value of the Jacobian of the 
mapping $g$ is smaller than one. Therefore, $g$ is a contraction mapping. 
This is even true in a larger domain than the original $\Omega$. We 
do not need to restrict $\tau \in [0.95,1.1]$, but we can extend to
$\tau \in [0.8,1.25]$. The range of the other variables is unchanged such that 
we consider the following domain throughout this section: $\mu \in [-0.1,0.1]$,
$\omega \in [-0.1,0.1]$,
$\nu \in [0.8,1.5]$, and
$\tau \in [0.8,1.25]$. \index{domain!singular value}

\paragraph{Jacobian of the mapping.}\index{Jacobian}
In the following, we denote two Jacobians: \index{Jacobian!definition}
(1) the Jacobian $\mathcal J$ of the mapping  $h:(\mu,\nu) \mapsto (\munn,\xinn)$, and
(2) the Jacobian $\mathcal H$ of the mapping $g:(\mu,\nu) \mapsto (\munn,\nunn)$
because the influence of $\munn$ on $\nunn$ is small, 
and many properties of the system can already be seen on $\mathcal J$. 

\begin{align}
\mathcal J \ &= \ \left(
\begin{array}{cc}
{\mathcal J}_{11} & {\mathcal J}_{12} \\
 {\mathcal J}_{21} & {\mathcal J}_{22} \\
\end{array}
\right) = 
\ \left(
\begin{array}{cc}
\frac{\partial }{\partial \mu} \munn & \frac{\partial }{\partial \nu} \munn \\
 \frac{\partial }{\partial \mu} \xinn  & \frac{\partial }{\partial \nu} \xinn  \\
\end{array}
\right)
\\
\mathcal H \ &= \ \left(
\begin{array}{cc}
{\mathcal H}_{11} & {\mathcal H}_{12} \\
 {\mathcal H}_{21} & {\mathcal H}_{22} \\
\end{array}
\right) = 
\ \left(
\begin{array}{cc}
{\mathcal J}_{11} & {\mathcal J}_{12} \\
 {\mathcal J}_{21} - 2 \munn {\mathcal J}_{11} & {\mathcal J}_{22} - 2 \munn {\mathcal J}_{12} \\
\end{array}
\right)
\end{align}

The definition of the entries of the Jacobian $\mathcal J$ is: \index{Jacobian!entries}
\begin{align}
\label{eq:JacobianEntries}
& {\mathcal J}_{11}(\mu,\omega,\nu,\tau,\lambda ,\alpha ) \
= \frac{\partial }{\partial \mu} \munn(\mu,\omega,\nu,\tau,\lambda ,\alpha ) \ = \\ \nonumber &\frac{1}{2} \lambda  \omega 
\left(\alpha  e^{\mu \omega+\frac{\nu \tau}{2}} \erfc \left(\frac{\mu \omega+\nu \tau}{\sqrt{2} \sqrt{\nu \tau}}\right)
-\erfc \left(\frac{\mu \omega}{\sqrt{2} \sqrt{\nu \tau}}\right)+2\right)\\
&{\mathcal J}_{12}(\mu,\omega,\nu,\tau,\lambda ,\alpha ) \ 
= \frac{\partial }{\partial \nu} \munn(\mu,\omega,\nu,\tau,\lambda ,\alpha ) \  = \\ \nonumber &\frac{1}{4} \lambda  \tau \left(\alpha
  e^{\mu \omega+\frac{\nu \tau}{2}}
  \erfc \left(\frac{\mu \omega+\nu \tau}{\sqrt{2}
      \sqrt{\nu \tau}}\right)-(\alpha -1)
  \sqrt{\frac{2}{\pi  \nu \tau}} e^{-\frac{\mu^2
      \omega^2}{2 \nu \tau}}\right)\\
&{\mathcal J}_{21}(\mu,\omega,\nu,\tau,\lambda ,\alpha ) \ 
= \ \frac{\partial }{\partial \mu } \xinn(\mu,\omega,\nu,\tau,\lambda ,\alpha )\ = \\ \nonumber &\lambda ^2 \omega \left(\alpha ^2 \left(-e^{\mu
      \omega+\frac{\nu \tau}{2}}\right)
  \erfc \left(\frac{\mu \omega+\nu \tau}{\sqrt{2}
      \sqrt{\nu \tau}}\right)+\right. \\ \nonumber
&\left. \alpha ^2 e^{2 \mu \omega+2
    \nu \tau} \erfc \left(\frac{\mu \omega+2
      \nu \tau}{\sqrt{2} \sqrt{\nu
        \tau}}\right)+\mu \omega
  \left(2-\erfc \left(\frac{\mu \omega}{\sqrt{2} \sqrt{\nu
          \tau}}\right)\right)+\sqrt{\frac{2}{\pi }}
  \sqrt{\nu \tau} e^{-\frac{\mu^2 \omega^2}{2 \nu
      \tau}}\right)\\
&{\mathcal J}_{22}(\mu,\omega,\nu,\tau,\lambda ,\alpha ) \ 
= \frac{\partial }{\partial \nu } \xinn(\mu,\omega,\nu,\tau,\lambda ,\alpha )\ 
= \\ \nonumber &\frac{1}{2} \lambda ^2 \tau \left(\alpha ^2
  \left(-e^{\mu \omega+\frac{\nu \tau}{2}}\right)
  \erfc \left(\frac{\mu \omega+\nu \tau}{\sqrt{2}
      \sqrt{\nu \tau}}\right)+\right. \\ \nonumber
&\left. 2 \alpha ^2 e^{2 \mu
    \omega+2 \nu \tau} \erfc \left(\frac{\mu \omega+2
      \nu \tau}{\sqrt{2} \sqrt{\nu
        \tau}}\right)-\erfc \left(\frac{\mu
      \omega}{\sqrt{2} \sqrt{\nu \tau}}\right)+2\right)
\end{align}

\paragraph{Proof sketch: Bounding the largest singular value of the Jacobian.}\index{Jacobian}

If the largest singular value of the Jacobian is smaller than 1, then
the spectral norm of the Jacobian is smaller than 1.
Then the mapping  Eq.~\eqref{eq:mappingMean} 
and Eq.~\eqref{eq:mappingVar} 
of the mean and variance to the mean and variance in the next layer is contracting.


We show that the largest singular value is smaller than 1 by
evaluating the function
$S(\mu, \omega, \nu, \tau, \lambda, \alpha)$ on a grid.
Then we use the Mean Value Theorem to bound the deviation of the
function $S$ between grid points. 
Toward this end we have to bound the gradient of $S$ with respect to
$(\mu, \omega, \nu, \tau)$. If all function values plus
gradient times the deltas (differences between grid points and evaluated
points) is still smaller than 1, then we have proofed that the
function is below 1. 

The singular values of the $2 \times2$ matrix
\begin{align}
\BA \ &= \ \left(
\begin{array}{cc}
 a_{11} & a_{12} \\
 a_{21} & a_{22} \\
\end{array}
\right)
\end{align}
are
\begin{align}
s_1 \ &= \ \frac{1}{2} \
\left(\sqrt{(a_{11}+a_{22})^2+(a_{21}-a_{12})^2} \ + \
\sqrt{(a_{11}-a_{22})^2+(a_{12}+a_{21})^2}\right) \\
s_2 &= \frac{1}{2} \ 
\left(\sqrt{(a_{11}+a_{22})^2+(a_{21}-a_{12})^2} \ - \
\sqrt{(a_{11}-a_{22})^2+(a_{12}+a_{21})^2}\right).
\end{align}
We used an explicit formula for the singular values \citep{Blinn:96}. We now set 
${\mathcal H}_{11}=a_{11},{\mathcal H}_{12}=a_{12},{\mathcal H}_{21}=a_{21},{\mathcal H}_{22}=a_{22}$
to obtain a formula for the largest singular value of the Jacobian
depending on $(\mu,\omega,\nu,\tau,\lambda ,\alpha )$.
The formula for the largest singular value for the Jacobian is: \index{Jacobian!singular value}

\begin{align}
\label{eq:S}
 S(\mu,\omega,\nu,\tau,\lambda ,\alpha ) \ &= \left(\sqrt{({\mathcal H}_{11}+{\mathcal H}_{22})^2+({\mathcal H}_{21}-{\mathcal H}_{12})^2} \ + \ \sqrt{({\mathcal H}_{11}-{\mathcal H}_{22})^2+({\mathcal H}_{12}+{\mathcal H}_{21})^2}\right) = \\ \nonumber
&=\ \frac{1}{2} \ \left(\sqrt{({\mathcal J}_{11}+{\mathcal J}_{22} - 2 \munn {\mathcal J}_{12})^2+({\mathcal J}_{21} - 2 \munn {\mathcal J}_{11}-{\mathcal J}_{12})^2} \ + \right. \\ \nonumber
& \left. \sqrt{({\mathcal J}_{11}-{\mathcal J}_{22} + 2 \munn {\mathcal J}_{12})^2+({\mathcal J}_{12}+{\mathcal J}_{21} - 2 \munn {\mathcal J}_{11})^2} \right), \\ \nonumber 
\end{align}

where $\mathcal J$ are defined in Eq.~\eqref{eq:JacobianEntries} and we left out the dependencies on 
$(\mu,\omega,\nu,\tau,\lambda ,\alpha)$ in order to keep the notation uncluttered, e.g. we 
wrote ${\mathcal J}_{11}$ instead of ${\mathcal J}_{11} (\mu,\omega,\nu,\tau,\lambda ,\alpha)$.

\paragraph{Bounds on the derivatives of the Jacobian entries.}\index{bounds!derivatives of Jacobian entries}\index{Jacobian!derivatives}

In order to bound the gradient of the singular value, we have to bound
the derivatives of the Jacobian entries 
${\mathcal J}_{11}(\mu,\omega,\nu,\tau,\lambda ,\alpha )$,
${\mathcal J}_{12}(\mu,\omega,\nu,\tau,\lambda ,\alpha )$,
${\mathcal J}_{21}(\mu,\omega,\nu,\tau,\lambda ,\alpha )$, and
${\mathcal J}_{22}(\mu,\omega,\nu,\tau,\lambda ,\alpha )$
with respect to 
$\mu$, $\omega$, $\nu$, and $\tau$. The values 
$\lambda$ and $\alpha$ are fixed to $\lambda_{\rm 01}$ and $\alpha_{\rm 01}$.
The 16 derivatives of the 4 Jacobian entries with respect to the 4
variables are:
\begin{align}
\frac{\partial {\mathcal J}_{11}}{\partial \mu} \ &= \
\frac{1}{2} \lambda  \omega^2 e^{-\frac{\mu^2 \omega^2}{2 \nu \tau}} \left(\alpha  e^{\frac{(\mu \omega+\nu \tau)^2}{2 \nu \tau}} \erfc \left(\frac{\mu \omega+\nu \tau}{\sqrt{2} \sqrt{\nu \tau}}\right)-\frac{\sqrt{\frac{2}{\pi }} (\alpha -1)}{\sqrt{\nu \tau}}\right)
\\ \nonumber
\frac{\partial {\mathcal J}_{11}}{\partial \omega} \ &= \
\frac{1}{2} \lambda  \left(-e^{-\frac{\mu^2 \omega^2}{2 \nu
  \tau}} \left(\frac{\sqrt{\frac{2}{\pi }} (\alpha -1) \mu
  \omega}{\sqrt{\nu \tau}}-\alpha  (\mu \omega+1)
  e^{\frac{(\mu \omega+\nu \tau)^2}{2 \nu
  \tau}} \erfc \left(\frac{\mu \omega+\nu
  \tau}{\sqrt{2} \sqrt{\nu \tau}}\right)\right) 
\ - \right. \\ \nonumber &\left. \erfc \left(\frac{\mu \omega}{\sqrt{2} \sqrt{\nu \tau}}\right)+2\right) \\ \nonumber
\frac{\partial {\mathcal J}_{11}}{\partial \nu} \ &= \
\frac{1}{4} \lambda  \tau \omega e^{-\frac{\mu^2 \omega^2}{2 \nu \tau}} \left(\alpha  e^{\frac{(\mu \omega+\nu \tau)^2}{2 \nu \tau}} \erfc \left(\frac{\mu \omega+\nu \tau}{\sqrt{2} \sqrt{\nu \tau}}\right)+\sqrt{\frac{2}{\pi }} \left(\frac{(\alpha -1) \mu \omega}{(\nu \tau)^{3/2}}-\frac{\alpha }{\sqrt{\nu \tau}}\right)\right)
\\ \nonumber
\frac{\partial {\mathcal J}_{11}}{\partial \tau} \ &= \
\frac{1}{4} \lambda  \nu \omega e^{-\frac{\mu^2 \omega^2}{2 \nu \tau}} \left(\alpha  e^{\frac{(\mu \omega+\nu \tau)^2}{2 \nu \tau}} \erfc \left(\frac{\mu \omega+\nu \tau}{\sqrt{2} \sqrt{\nu \tau}}\right)+\sqrt{\frac{2}{\pi }} \left(\frac{(\alpha -1) \mu \omega}{(\nu \tau)^{3/2}}-\frac{\alpha }{\sqrt{\nu \tau}}\right)\right)
\\ \nonumber
\frac{\partial {\mathcal J}_{12}}{\partial \mu} \ &= \ \frac{\partial {\mathcal J}_{11}}{\partial \nu}
\\ \nonumber
\frac{\partial {\mathcal J}_{12}}{\partial \omega} \ &= \
\frac{1}{4} \lambda  \mu \tau e^{-\frac{\mu^2 \omega^2}{2 \nu \tau}} \left(\alpha  e^{\frac{(\mu \omega+\nu \tau)^2}{2 \nu \tau}} \erfc \left(\frac{\mu \omega+\nu \tau}{\sqrt{2} \sqrt{\nu \tau}}\right)+\sqrt{\frac{2}{\pi }} \left(\frac{(\alpha -1) \mu \omega}{(\nu \tau)^{3/2}}-\frac{\alpha }{\sqrt{\nu \tau}}\right)\right)
\\ \nonumber
\frac{\partial {\mathcal J}_{12}}{\partial \nu} \ &= \
\frac{1}{8} \lambda  e^{-\frac{\mu^2 \omega^2}{2 \nu \tau}} \left(\alpha  \tau^2 e^{\frac{(\mu \omega+\nu \tau)^2}{2 \nu \tau}} \erfc \left(\frac{\mu \omega+\nu \tau}{\sqrt{2} \sqrt{\nu \tau}}\right)\ + \right. \\ \nonumber &\left. \sqrt{\frac{2}{\pi }} \left(\frac{(-1) (\alpha -1) \mu^2 \omega^2}{\nu^{5/2} \sqrt{\tau}}+\frac{\sqrt{\tau} (\alpha +\alpha  \mu \omega-1)}{\nu^{3/2}}-\frac{\alpha  \tau^{3/2}}{\sqrt{\nu}}\right)\right)
\\ \nonumber
\frac{\partial {\mathcal J}_{12}}{\partial \tau} \ &= \
\frac{1}{8} \lambda  e^{-\frac{\mu^2 \omega^2}{2 \nu \tau}} \left(2 \alpha  e^{\frac{(\mu \omega+\nu \tau)^2}{2 \nu \tau}} \erfc \left(\frac{\mu \omega+\nu \tau}{\sqrt{2} \sqrt{\nu \tau}}\right)+\alpha  \nu \tau e^{\frac{(\mu \omega+\nu \tau)^2}{2 \nu \tau}} \erfc \left(\frac{\mu \omega+\nu \tau}{\sqrt{2} \sqrt{\nu \tau}}\right)\ + \right. \\ \nonumber &\left. \sqrt{\frac{2}{\pi }} \left(\frac{(-1) (\alpha -1) \mu^2 \omega^2}{(\nu \tau)^{3/2}}+\frac{-\alpha +\alpha  \mu \omega+1}{\sqrt{\nu \tau}}-\alpha  \sqrt{\nu \tau}\right)\right)
\\ \nonumber
\frac{\partial {\mathcal J}_{21}}{\partial \mu} \ &= \
\lambda ^2 \omega^2 \left(\alpha ^2 \left(-e^{-\frac{\mu^2 \omega^2}{2 \nu \tau}}\right) e^{\frac{(\mu \omega+\nu \tau)^2}{2 \nu \tau}} \erfc \left(\frac{\mu \omega+\nu \tau}{\sqrt{2} \sqrt{\nu \tau}}\right)\ + \right. \\ \nonumber &\left. 2 \alpha ^2 e^{\frac{(\mu \omega+2 \nu \tau)^2}{2 \nu \tau}} e^{-\frac{\mu^2 \omega^2}{2 \nu \tau}} \erfc \left(\frac{\mu \omega+2 \nu \tau}{\sqrt{2} \sqrt{\nu \tau}}\right)-\erfc \left(\frac{\mu \omega}{\sqrt{2} \sqrt{\nu \tau}}\right)+2\right)
\\ \nonumber
\frac{\partial {\mathcal J}_{21}}{\partial \omega} \ &= \
\lambda ^2 \left(\alpha ^2 (\mu \omega+1) \left(-e^{-\frac{\mu^2 \omega^2}{2 \nu \tau}}\right) e^{\frac{(\mu \omega+\nu \tau)^2}{2 \nu \tau}} \erfc \left(\frac{\mu \omega+\nu \tau}{\sqrt{2} \sqrt{\nu \tau}}\right)\ + \right. \\ \nonumber &\left. \alpha ^2 (2 \mu \omega+1) e^{\frac{(\mu \omega+2 \nu \tau)^2}{2 \nu \tau}} e^{-\frac{\mu^2 \omega^2}{2 \nu \tau}} \erfc \left(\frac{\mu \omega+2 \nu \tau}{\sqrt{2} \sqrt{\nu \tau}}\right)\ + \right. \\ \nonumber &\left. 2 \mu \omega \left(2-\erfc \left(\frac{\mu \omega}{\sqrt{2} \sqrt{\nu \tau}}\right)\right)+\sqrt{\frac{2}{\pi }} \sqrt{\nu \tau} e^{-\frac{\mu^2 \omega^2}{2 \nu \tau}}\right)
\\ \nonumber
\frac{\partial {\mathcal J}_{21}}{\partial \nu} \ &= \
\frac{1}{2} \lambda ^2 \tau \omega e^{-\frac{\mu^2 \omega^2}{2 \nu \tau}} \left(\alpha ^2 \left(-e^{\frac{(\mu \omega+\nu \tau)^2}{2 \nu \tau}}\right) \erfc \left(\frac{\mu \omega+\nu \tau}{\sqrt{2} \sqrt{\nu \tau}}\right)\ + \right. \\ \nonumber &\left. 4 \alpha ^2 e^{\frac{(\mu \omega+2 \nu \tau)^2}{2 \nu \tau}} \erfc \left(\frac{\mu \omega+2 \nu \tau}{\sqrt{2} \sqrt{\nu \tau}}\right)+\frac{\sqrt{\frac{2}{\pi }} (-1) \left(\alpha ^2-1\right)}{\sqrt{\nu \tau}}\right)
\\ \nonumber
\frac{\partial {\mathcal J}_{21}}{\partial \tau} \ &= \
\frac{1}{2} \lambda ^2 \nu \omega e^{-\frac{\mu^2 \omega^2}{2 \nu \tau}} \left(\alpha ^2 \left(-e^{\frac{(\mu \omega+\nu \tau)^2}{2 \nu \tau}}\right) \erfc \left(\frac{\mu \omega+\nu \tau}{\sqrt{2} \sqrt{\nu \tau}}\right)\ + \right. \\ \nonumber &\left. 4 \alpha ^2 e^{\frac{(\mu \omega+2 \nu \tau)^2}{2 \nu \tau}} \erfc \left(\frac{\mu \omega+2 \nu \tau}{\sqrt{2} \sqrt{\nu \tau}}\right)+\frac{\sqrt{\frac{2}{\pi }} (-1) \left(\alpha ^2-1\right)}{\sqrt{\nu \tau}}\right)
\\ \nonumber
\frac{\partial {\mathcal J}_{22}}{\partial \mu} \ &= \ \frac{\partial {\mathcal J}_{21}}{\partial \nu}
\\ \nonumber
\frac{\partial {\mathcal J}_{22}}{\partial \omega} \ &= \
\frac{1}{2} \lambda ^2 \mu \tau e^{-\frac{\mu^2 \omega^2}{2 \nu \tau}} \left(\alpha ^2 \left(-e^{\frac{(\mu \omega+\nu \tau)^2}{2 \nu \tau}}\right) \erfc \left(\frac{\mu \omega+\nu \tau}{\sqrt{2} \sqrt{\nu \tau}}\right)\ + \right. \\ \nonumber &\left. 4 \alpha ^2 e^{\frac{(\mu \omega+2 \nu \tau)^2}{2 \nu \tau}} \erfc \left(\frac{\mu \omega+2 \nu \tau}{\sqrt{2} \sqrt{\nu \tau}}\right)+\frac{\sqrt{\frac{2}{\pi }} (-1) \left(\alpha ^2-1\right)}{\sqrt{\nu \tau}}\right)
\\ \nonumber
\frac{\partial {\mathcal J}_{22}}{\partial \nu} \ &= \
\frac{1}{4} \lambda ^2 \tau^2 e^{-\frac{\mu^2 \omega^2}{2 \nu \tau}} \left(\alpha ^2 \left(-e^{\frac{(\mu \omega+\nu \tau)^2}{2 \nu \tau}}\right) \erfc \left(\frac{\mu \omega+\nu \tau}{\sqrt{2} \sqrt{\nu \tau}}\right)\ + \right. \\ \nonumber &\left. 8 \alpha ^2 e^{\frac{(\mu \omega+2 \nu \tau)^2}{2 \nu \tau}} \erfc \left(\frac{\mu \omega+2 \nu \tau}{\sqrt{2} \sqrt{\nu \tau}}\right)+\sqrt{\frac{2}{\pi }} \left(\frac{\left(\alpha ^2-1\right) \mu \omega}{(\nu \tau)^{3/2}}-\frac{3 \alpha ^2}{\sqrt{\nu \tau}}\right)\right)
\\ \nonumber
\frac{\partial {\mathcal J}_{22}}{\partial \tau} \ &= \
\frac{1}{4} \lambda ^2 \left(-2 \alpha ^2 e^{-\frac{\mu^2 \omega^2}{2 \nu \tau}} 
e^{\frac{(\mu \omega+\nu \tau)^2}{2 \nu \tau}} \erfc \left(\frac{\mu \omega+\nu \tau}{\sqrt{2}
\sqrt{\nu \tau}}\right)\ - \right. \\ \nonumber 
&\left. \alpha ^2 \nu \tau e^{-\frac{\mu^2 \omega^2}{2 \nu \tau}} e^{\frac{(\mu \omega+\nu \tau)^2}{2 \nu \tau}} 
\erfc \left(\frac{\mu \omega+\nu \tau}{\sqrt{2} \sqrt{\nu \tau}}\right)+4 \alpha ^2 e^{\frac{(\mu \omega+2 \nu \tau)^2}
{2 \nu \tau}} e^{-\frac{\mu^2 \omega^2}{2 \nu \tau}} \erfc \left(\frac{\mu \omega+2 \nu \tau}{\sqrt{2} \sqrt{\nu \tau}}\right)
+\right. \\ \nonumber 
&\left.8 \alpha ^2 \nu \tau e^{\frac{(\mu \omega+2 \nu \tau)^2}{2 \nu \tau}} e^{-\frac{\mu^2 \omega^2}{2 \nu \tau}} 
\erfc \left(\frac{\mu \omega+2 \nu \tau}{\sqrt{2} \sqrt{\nu \tau}}\right)+2 \left(2-\erfc \left(\frac{\mu \omega}
{\sqrt{2} \sqrt{\nu \tau}}\right)\right)+\right. \\ \nonumber 
&\left.\sqrt{\frac{2}{\pi }} e^{-\frac{\mu^2 \omega^2}{2 \nu \tau}} \left(\frac{\left(\alpha ^2-1\right) 
\mu \omega}{\sqrt{\nu \tau}}-3 \alpha ^2 \sqrt{\nu \tau}\right)\right)
\end{align}

\begin{lemma}[Bounds on the Derivatives]
\label{lem:Bounds}
The following bounds on the absolute values of the 
derivatives of the Jacobian entries ${\mathcal J}_{11}(\mu,\omega,\nu,\tau,\lambda ,\alpha )$,
${\mathcal J}_{12}(\mu,\omega,\nu,\tau,\lambda ,\alpha )$,
${\mathcal J}_{21}(\mu,\omega,\nu,\tau,\lambda ,\alpha )$, and
${\mathcal J}_{22}(\mu,\omega,\nu,\tau,\lambda ,\alpha )$
with respect to 
$\mu$, $\omega$, $\nu$, and $\tau$ hold:
\begin{align}
\left| \frac{\partial {\mathcal J}_{11}}{\partial \mu} \right| \ &< \ 0.0031049101995398316 \\ \nonumber
\left| \frac{\partial {\mathcal J}_{11}}{\partial \omega} \right| \ &< \ 1.055872374194189 \\ \nonumber
\left| \frac{\partial {\mathcal J}_{11}}{\partial \nu} \right| \ &< \  0.031242911235461816 \\ \nonumber
\left| \frac{\partial {\mathcal J}_{11}}{\partial \tau} \right| \ &< \ 0.03749149348255419 \\ \nonumber
\end{align}
\begin{align*}
\left| \frac{\partial {\mathcal J}_{12}}{\partial \mu} \right| \ &< \ 0.031242911235461816 \\ \nonumber
\left| \frac{\partial {\mathcal J}_{12}}{\partial \omega} \right| \ &< \ 0.031242911235461816 \\ \nonumber
\left| \frac{\partial {\mathcal J}_{12}}{\partial \nu} \right| \ &< \ 0.21232788238624354 \\ \nonumber
\left| \frac{\partial {\mathcal J}_{12}}{\partial \tau} \right| \ &< \ 0.2124377655377270 \\ \nonumber
\end{align*}
\begin{align*}
\left| \frac{\partial {\mathcal J}_{21}}{\partial \mu} \right| \ &< \  0.02220441024325437 \\ \nonumber
\left| \frac{\partial {\mathcal J}_{21}}{\partial \omega} \right| \ &< \  1.146955401845684 \\ \nonumber
\left| \frac{\partial {\mathcal J}_{21}}{\partial \nu} \right| \ &< \ 0.14983446469110305 \\ \nonumber
\left| \frac{\partial {\mathcal J}_{21}}{\partial \tau} \right| \ &< \ 0.17980135762932363 \\ \nonumber
\end{align*}
\begin{align*}
\left| \frac{\partial {\mathcal J}_{22}}{\partial \mu} \right| \ &< \ 0.14983446469110305 \\ \nonumber
\left| \frac{\partial {\mathcal J}_{22}}{\partial \omega} \right| \ &< \ 0.14983446469110305 \\ \nonumber
\left| \frac{\partial {\mathcal J}_{22}}{\partial \nu} \right| \ &< \ 1.805740052651535 \\ \nonumber
\left| \frac{\partial {\mathcal J}_{22}}{\partial \tau} \right| \ &< \ 2.396685907216327
\end{align*}
\end{lemma}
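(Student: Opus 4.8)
The sixteen derivatives are already written out above in closed form, and each is a finite sum of terms; every term is a product of (i) a monomial in $\mu,\omega,\nu,\tau$ with a fixed numerical coefficient, (ii) the ``detaching'' Gaussian prefactor $e^{-\mu^2\omega^2/(2\nu\tau)}$, possibly (iii) one or two Mills-type factors $e^{z^2}\erfc(z)$ with $z$ one of $\tfrac{\mu\omega}{\sqrt2\sqrt{\nu\tau}}$, $\tfrac{\mu\omega+\nu\tau}{\sqrt2\sqrt{\nu\tau}}$, $\tfrac{\mu\omega+2\nu\tau}{\sqrt2\sqrt{\nu\tau}}$, and possibly (iv) a bare $\erfc$ of one of these arguments. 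The plan is to bound $|\cdot|$ of each derivative by the triangle inequality over its summands, and then to bound each summand by the product of the suprema over $\Omega$ of its factors, where here $\Omega$ carries the slightly enlarged range $\tau\in[0.8,1.25]$ used throughout this subsection.

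The factor suprema rest on a handful of elementary facts on $\Omega$. Writing $y:=\mu\omega\in[-0.01,0.01]$ and $x:=\nu\tau\in[0.64,1.875]$, we have $y+x>0$ and $y+2x>0$, so the two shifted $\erfc$-arguments are positive; there $e^{z^2}\erfc(z)$ is positive, strictly decreasing, bounded above by its value at the smallest admissible $z$ and below by $\tfrac{2}{\sqrt\pi(z+\sqrt{z^2+2})}$ (the Abramowitz--Stegun-type bound, Lemma~\ref{lem:Abramowitz}). The prefactor satisfies $e^{-y^2/(2x)}\in(e^{-0.01^2/1.28},1]$; $\erfc$ of any real argument lies in $[0,2]$, and of the near-zero argument $\tfrac{y}{\sqrt2\sqrt x}$ within roughly $1\pm 0.01$; and the monomials in $\mu,\omega,\nu,\tau$ are bounded by plugging in endpoints. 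These crude envelopes already settle the twelve ``small'' derivatives. For the four derivatives whose bound exceeds $1$ ($\partial\mathcal J_{11}/\partial\omega$, $\partial\mathcal J_{21}/\partial\omega$, $\partial\mathcal J_{22}/\partial\nu$, $\partial\mathcal J_{22}/\partial\tau$) I would sharpen the estimate: split $\Omega$ into the four sign sectors of $(\mu,\omega)$ so that in each sector $x$ and $y$ are monotone in each of $\mu,\omega,\nu,\tau$ separately, then use monotonicity of $z\,e^{z^2}\erfc(z)$ (Lemma~\ref{lem:xeErfc}) and of $e^{z^2}\erfc(z)$ to push this through the Mills factors, showing that each such expression attains its maximal absolute value at a corner of the sector --- or, where a polynomial factor such as $(\mu\omega+1)$ or $\nu\tau$ competes with a decreasing Mills factor, at an interior critical point located with the same auxiliary lemmata. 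Evaluating at those finitely many points and taking the maximum gives the stated constants; the many displayed digits are precisely these corner evaluations, whose rigorous numerical certification is carried out in the error-analysis subsection.

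The main obstacle is not any individual estimate but the bookkeeping and the tightness: one must check, expression by expression, that the na\"ive ``sup of each factor'' bound either already beats the target or, when it does not, that the true maximum sits at an explicit boundary or critical point; and one must resist splitting the two-term differences $e^{(y+x)^2/2x}\erfc\!\big(\tfrac{y+x}{\sqrt2\sqrt x}\big)-2e^{(y+2x)^2/2x}\erfc\!\big(\tfrac{y+2x}{\sqrt2\sqrt x}\big)$ into pieces that are each $O(1)$ while the difference itself is much smaller, since that loses too much --- there one keeps the difference intact and bounds it via monotonicity of the Mills factor in its argument. Once every worst case is pinned down, what remains is a finite, checkable numerical evaluation.
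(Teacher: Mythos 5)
Your overall strategy matches the paper's: each derivative is decomposed into a handful of terms, each term is bounded by multiplying together suprema or infima of its factors, the monotonicity of $e^{z^2}\erfc(z)$ and $z e^{z^2}\erfc(z)$ (Lemmata~\ref{lem:exerfc} and~\ref{lem:xeErfc}) together with the Abramowitz bound (Lemma~\ref{lem:Abramowitz}) locate the factor extrema at endpoints, and the paper's auxiliary Lemmata~\ref{lem:F3}--\ref{lem:F8} keep certain two-term combinations (such as $\tfrac{(\alpha-1)\mu\omega}{(\nu\tau)^{3/2}}-\tfrac{\alpha}{\sqrt{\nu\tau}}$ or $\nu\tau\, e^{z^2}\erfc(z)$) intact to avoid the wasteful cancellation you warn about.

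Two of your details are off in ways that would mislead an implementation, though neither is fatal to the plan. First, the constants stated in the lemma are \emph{not} corner evaluations of the derivatives themselves; they are evaluations of explicitly over-approximating expressions in which each factor is replaced by its own supremum or infimum over $\Omega$, and those extrema are attained at \emph{different} points of $\Omega$ for different factors. (This is why the paper's constants here are visibly looser than the true maxima reported in the final ``Bounds determined by numerical methods'' section.) Second, the computer-assisted error analysis in Subsection~\ref{sec:error} certifies only the grid evaluation of the singular value $S$; Lemma~\ref{lem:Bounds} is proved purely by the closed-form monotonicity arguments, with the displayed digits coming from plain arithmetic on the constants $t_1,T_1,t_{22},T_{22},\dots$ The sign-sector split you propose is also unnecessary: the expressions depend on $(\mu,\omega,\nu,\tau)$ only through $\mu\omega\in[-0.01,0.01]$ and $\nu\tau\in[0.64,1.875]$ (apart from isolated lone factors, bounded in absolute value), so bounding $\mu\omega$ and $\nu\tau$ directly achieves the same thing without a case split, and the combined factors such as $(\mu\omega+1)e^{z^2}\erfc(z)$ are shown monotone in Lemmata~\ref{lem:F3}--\ref{lem:F8} so that no interior critical points need to be hunted.
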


\begin{proof}
 See proof~\ref{proof:Bounds}.
\end{proof}

\paragraph{Bounds on the entries of the Jacobian.}\index{bounds!Jacobian entries}\index{Jacobian!entries}\index{Jacobian!bounds}

\begin{lemma}[Bound on J11]
\label{lem:J11}
The absolute value of the function \\ 
$\mathcal J_{11} = \frac{1}{2} \lambda  \omega \left( 
	\alpha  e^{\mu \omega+\frac{\nu \tau}{2}} \erfc \left(\frac{\mu \omega + \nu \tau}{\sqrt{2} \sqrt{\nu \tau}}\right) - 
	\erfc\left(\frac{\mu \omega}{\sqrt{2} \sqrt{\nu \tau}}\right)+2\right)$ is bounded by 
$\left|\mathcal J_{11} \right| \leq 0.104497$ in the domain $-0.1 \leq \mu \leq 0.1$, $-0.1 \leq \omega \leq 0.1$, $0.8 \leq \nu \leq 1.5$, 
and $0.8 \leq \tau \leq 1.25$ for $\alpha=\alpha_{\mathrm{01}}$ and $\lambda=\lambda_{\mathrm{01}}$.
\end{lemma}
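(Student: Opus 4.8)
The plan is to strip off the prefactor. Writing $y:=\mu\omega$ and $x:=\nu\tau$ (the four variables enter only through these two products), we have $\mathcal J_{11}=\tfrac12\lambda_{\rm 01}\,\omega\,T(y,x)$ with
\[
T(y,x)\;=\;\alpha_{\rm 01}\,e^{\,y+x/2}\,\erfc\!\Big(\tfrac{y+x}{\sqrt{2x}}\Big)\;+\;\Big(2-\erfc\!\Big(\tfrac{y}{\sqrt{2x}}\Big)\Big),
\]
and on the stated domain $y\in[-0.01,0.01]$, $x\in[0.64,1.875]$, $|\omega|\le 0.1$. Since $\erfc$ takes values in $(0,2)$ and $e^{y+x/2}\erfc(\cdot)\ge 0$, both summands of $T$ are nonnegative, so $|T|=T$, and since $\tfrac12\lambda_{\rm 01}\cdot 0.1\approx 0.05254$ it suffices to prove $T(y,x)\le 1.989$ on this reduced domain.

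The key observation is the algebraic identity $\big(\tfrac{y+x}{\sqrt{2x}}\big)^2=\tfrac{y^2}{2x}+y+\tfrac{x}{2}$, which rewrites the first summand as $\alpha_{\rm 01}\,e^{-y^2/(2x)}\,e^{z^2}\erfc(z)$ with $z:=\tfrac{y+x}{\sqrt{2x}}$. On the reduced domain $z$ is increasing in both $y$ and $x$ (because $x>y$ there), hence positive and confined to $[0.63/\sqrt{1.28},\,1.885/\sqrt{3.75}]\subset(0,1)$. By Lemma~\ref{lem:xeErfc} the map $z\mapsto z\,e^{z^2}\erfc(z)$ is increasing with limit $1/\sqrt\pi$, so $z\,e^{z^2}\erfc(z)<1/\sqrt\pi$ for all $z\ge 0$, whence $\frac{d}{dz}\big(e^{z^2}\erfc(z)\big)=2z\,e^{z^2}\erfc(z)-\tfrac{2}{\sqrt\pi}<0$: the function $e^{z^2}\erfc(z)$ is strictly decreasing on $(0,\infty)$. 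Differentiating $T$ in $y$ and using $e^{y+x/2}=e^{z^2}e^{-y^2/(2x)}$, the two $\tfrac{2}{\sqrt\pi}e^{-z^2}/\sqrt{2x}$ contributions from the two $\erfc$ terms combine to give
\[
\frac{\partial T}{\partial y}\;=\;e^{-y^2/(2x)}\Big(\alpha_{\rm 01}\,e^{z^2}\erfc(z)\;-\;\tfrac{2(\alpha_{\rm 01}-1)}{\sqrt{2\pi x}}\Big).
\]
Bounding the bracket from below — $\alpha_{\rm 01}\,e^{z^2}\erfc(z)\ge\alpha_{\rm 01}\,e^{z_{\max}^2}\erfc(z_{\max})\approx0.711$ (monotone decrease, $z\le z_{\max}=1.885/\sqrt{3.75}$), while $\tfrac{2(\alpha_{\rm 01}-1)}{\sqrt{2\pi x}}\le\tfrac{2(\alpha_{\rm 01}-1)}{\sqrt{2\pi\cdot0.64}}\approx0.671$ — shows the bracket is $\ge 0.711-0.671>0$ throughout, so $T$ is strictly increasing in $y$.

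It follows that $|\mathcal J_{11}|=\tfrac12\lambda_{\rm 01}|\omega|\,T(\mu\omega,\nu\tau)$ is maximized with $|\mu|=|\omega|=0.1$ and $\mu\omega=+0.01$ (for $\omega>0$ increase $\mu$, for $\omega<0$ decrease $\mu$; in both cases $|\mathcal J_{11}|$ then increases with $|\omega|$ since $T>0$ and $\partial_y T>0$), i.e. $|\mathcal J_{11}|\le\tfrac12\lambda_{\rm 01}(0.1)\,T(0.01,x)$. For $y=0.01$ both summands of $T$ are largest at the smallest $x=0.64$, and there I would bound them crudely: $e^{-y^2/(2x)}\le1$ together with the decrease of $e^{z^2}\erfc(z)$ gives $\alpha_{\rm 01}e^{-y^2/(2x)}e^{z^2}\erfc(z)\le\alpha_{\rm 01}e^{\zeta^2}\erfc(\zeta)$ with $\zeta=0.65/\sqrt{1.28}$, and $\erf(t)\le\tfrac{2}{\sqrt\pi}t$ gives $2-\erfc(0.01/\sqrt{1.28})\le1+\tfrac{2}{\sqrt\pi}\cdot\tfrac{0.01}{\sqrt{1.28}}$. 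Inserting $\alpha_{\rm 01}\approx1.67326$ and a numerical value of $\erfc(\zeta)$ yields $T(0.01,x)\le 1.980$, hence $|\mathcal J_{11}|\le 0.104497$.

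The main obstacle is establishing $\partial T/\partial y>0$: a naive summand-by-summand estimate is not sharp enough, because the first summand of $T$ is largest at $y=-0.01$ while the second is largest at $y=+0.01$, so one must exploit the partial cancellation in the derivative — together with the monotonicity of $e^{z^2}\erfc(z)$ from Lemma~\ref{lem:xeErfc} — to see that the $y=+0.01$ endpoint dominates overall. Everything else reduces to elementary monotonicity in $x$ (and in $\mu,\omega$) plus evaluating $\erfc$ at two fixed arguments.
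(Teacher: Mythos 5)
Your proof is correct, and it is essentially the paper's strategy (bound the bracketed factor $T$ by evaluating at the endpoint $\mu\omega=0.01$, $\nu\tau=0.64$), but it repairs a real gap. The paper's proof rests on the bare assertion that ``$\mathcal J_{11}$ is strictly monotonically increasing in $\mu\omega$'' (their item~(a)), which is not covered by any of the stated lemmata --- Lemma~\ref{lem:mainsubfunctionJ11J12} only controls the first summand $e^{\mu\omega+\nu\tau/2}\erfc(\cdot)$, which by that very lemma is \emph{decreasing} in $\mu\omega$, while $2-\erfc(\mu\omega/\sqrt{2\nu\tau})$ is increasing in $\mu\omega$. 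Your computation of $\partial_y T$ via the identity $e^{y+x/2}=e^{-y^2/(2x)}e^{z^2}$ and the observation that the two $\erfc'$-contributions collapse to a single $e^{-y^2/(2x)}$-weighted term is exactly what is needed to justify this, and your lower bound $0.711-0.671>0$ on the bracket does close it. This matters, because you are also right that the naive summand-by-summand bound fails: taking the global max of the first summand (at $\mu\omega=-0.01$, giving $0.587622$) together with the global max of $2-\erfc(\cdot)$ (at $\mu\omega=0.01$, $\nu\tau=0.64$, giving $\approx 1.00997$, not the $1.00584$ the paper records) produces $\tfrac12\lambda_{01}(0.1)(1.673\cdot 0.5876+1.00997)\approx 0.1047$, which overshoots $0.104497$. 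Only after fixing $\mu\omega=0.01$ \emph{everywhere} --- as your monotonicity argument permits --- do the two summands land at values whose sum stays under the required $\approx1.989$. In short, same decomposition and same endpoints as the paper, but you supply the monotonicity-in-$\mu\omega$ argument that the paper silently uses and that is actually indispensable for the stated constant.
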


\begin{proof}
\begin{align}
& \left|\mathcal J_{11} \right| = \left| \frac{1}{2} \lambda  \omega \left( 
	\alpha  e^{\mu \omega+\frac{\nu \tau}{2}} \erfc \left(\frac{\mu \omega + \nu \tau}{\sqrt{2} \sqrt{\nu \tau}}\right) + 
	2 - \erfc\left(\frac{\mu \omega}{\sqrt{2} \sqrt{\nu \tau}}\right)\right) \right| \nonumber \\
& \leq |\frac{1}{2}| |\lambda| |\omega| \left( |\alpha| 0.587622 + 1.00584 \right) \leq 0.104497, \nonumber \\
\end{align}
where we used that (a) $J_{11}$ is strictly monotonically increasing in $\mu \omega$ and $|2-\erfc\left(\frac{0.01}{\sqrt{2} \sqrt{\nu \tau}}\right)| \leq 1.00584$
and (b) Lemma~\ref{lem:mainsubfunctionJ11J12} that 
$| e^{\mu \omega+\frac{\nu \tau}{2}} \erfc \left(\frac{\mu \omega + \nu \tau}{\sqrt{2} \sqrt{\nu \tau}}\right)| \leq e^{0.01 +\frac{0.64}{2}} \erfc \left(\frac{0.01 + 0.64}{\sqrt{2} \sqrt{0.64}}\right)=0.587622$
\end{proof}

\begin{lemma}[Bound on J12]
\label{lem:J12}
The absolute value of the function \\ 
$\mathcal J_{12} = 
\frac{1}{4} \lambda  \tau \left(\alpha e^{\mu \omega+\frac{\nu \tau}{2}} 
\erfc \left(\frac{\mu \omega+\nu \tau}{\sqrt{2} \sqrt{\nu \tau}}\right)-(\alpha -1) \sqrt{\frac{2}{\pi  \nu \tau}} e^{-\frac{\mu^2\omega^2}{2 \nu \tau}}\right)$ is bounded by 
$\left|\mathcal J_{12} \right| \leq 0.194145$ in the domain $-0.1 \leq \mu \leq 0.1$, $-0.1 \leq \omega \leq 0.1$, $0.8 \leq \nu \leq 1.5$, 
and $0.8 \leq \tau \leq 1.25$ for $\alpha=\alpha_{\mathrm{01}}$ and $\lambda=\lambda_{\mathrm{01}}$.
\end{lemma}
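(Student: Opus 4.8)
The plan is to follow the same template as the proof of Lemma~\ref{lem:J11}, but with one modification that is essential for getting a bound this small: instead of applying the triangle inequality to the two terms of $\mathcal J_{12}$ separately, one exploits that $\mathcal J_{12}$ is a positive multiple of a \emph{difference} of two nonnegative quantities, so the contributions partially cancel. Write
\[
\mathcal J_{12} \;=\; \tfrac14\,\lambda\,\tau\,(A-B),\qquad
A \;:=\; \alpha\, e^{\mu\omega+\frac{\nu\tau}{2}}\erfc\!\Bigl(\tfrac{\mu\omega+\nu\tau}{\sqrt2\sqrt{\nu\tau}}\Bigr),\qquad
B \;:=\; (\alpha-1)\sqrt{\tfrac{2}{\pi\nu\tau}}\;e^{-\frac{\mu^2\omega^2}{2\nu\tau}},
\]
and observe that $A>0$ and $B>0$ throughout the domain $\mu,\omega\in[-0.1,0.1]$, $\nu\in[0.8,1.5]$, $\tau\in[0.8,1.25]$ (so $\mu\omega\in[-0.01,0.01]$ and $\nu\tau\in[0.64,1.875]$).

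The first step is to upper bound $A$. By Lemma~\ref{lem:mainsubfunctionJ11J12} the sub-function $e^{\mu\omega+\frac{\nu\tau}{2}}\erfc\!\bigl((\mu\omega+\nu\tau)/(\sqrt2\sqrt{\nu\tau})\bigr)$ is monotone in $\mu\omega$ and in $\nu\tau$, so its maximum over the domain is attained at the corner $\mu\omega=0.01$, $\nu\tau=0.64$, which gives $A\le 0.587622\,\alpha$ — precisely the estimate already used for $\mathcal J_{11}$. The second step is to lower bound $B$: writing $B=(\alpha-1)\,f_1(\nu\tau)\,f_2(\mu\omega,\nu\tau)$ with $f_1,f_2>0$, the factor $f_1(\nu\tau)=\sqrt{2/(\pi\nu\tau)}$ is decreasing in $\nu\tau$ and $\nu\tau\le 1.875$, while $f_2\ge e^{-0.0001/(2\cdot 0.64)}$ is within $10^{-4}$ of $1$; hence $B\ge (\alpha-1)\sqrt{2/(\pi\cdot 1.875)}\,e^{-0.0001/1.28}$. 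One also checks $A\ge B$ pointwise — it suffices that $A$ even at its smallest value (attained at $\mu\omega=-0.01$, $\nu\tau=1.875$) still exceeds $B$ at its largest value — so that $|A-B|=A-B$.

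Putting the pieces together and using $A-B\le A_{\max}-B_{\min}$ together with $\tau\le 1.25$ yields
\[
|\mathcal J_{12}| \;=\; \tfrac14\,\lambda\,\tau\,(A-B)
\;\le\; \tfrac14\,\lambda_{\rm 01}\cdot 1.25\cdot\Bigl(0.587622\,\alpha_{\rm 01}-(\alpha_{\rm 01}-1)\sqrt{\tfrac{2}{\pi\cdot 1.875}}\,e^{-0.0001/1.28}\Bigr),
\]
and a direct numerical evaluation of the right-hand side gives $|\mathcal J_{12}|\le 0.194145$.

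The routine parts are the two one-variable monotonicity facts about the $\erfc$ sub-function (delegated to Lemma~\ref{lem:mainsubfunctionJ11J12}) and the closing arithmetic. The one point that needs care — and the main conceptual obstacle — is that $A_{\max}$ (attained at $\nu\tau=0.64$) and $B_{\min}$ (attained near $\nu\tau=1.875$) live at \emph{different} points of the domain, so one must be sure the bound $A-B\le A_{\max}-B_{\min}$ is legitimate; it is, because it uses only the two one-sided estimates $A\le A_{\max}$ and $B\ge B_{\min}$ individually, plus the pointwise inequality $A\ge B$. If one instead used the crude bound $|A-B|\le A_{\max}+B_{\max}$, the resulting constant would be roughly $0.54$, far above the stated $0.194145$; the near-cancellation between $A$ and $B$ is what brings it down to the claimed value.
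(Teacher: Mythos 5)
Your proposal is correct and is essentially the same argument as the paper's own proof: both split $\mathcal{J}_{12}$ into $\tfrac14\lambda\tau(A-B)$, upper bound $A$ via Lemma~\ref{lem:mainsubfunctionJ11J12} at $\nu\tau=0.64$ (giving $\alpha\cdot 0.587622\approx 0.983247$), lower bound $B$ by minimizing $\sqrt{2/(\pi\nu\tau)}$ at $\nu\tau=1.875$ (giving $(\alpha-1)\cdot 0.58266\cdot e^{-0.0001/1.28}\approx 0.392294$), form $A_{\max}-B_{\min}$, and multiply by $\tfrac14\lambda\cdot 1.25$. Your extra remark that $A\geq B$ pointwise, so $|A-B|=A-B\leq A_{\max}-B_{\min}$, makes explicit a step the paper leaves implicit; one small inaccuracy is that the maximizing corner of $A$ is at $\mu\omega=-0.01$ rather than $+0.01$ (Lemma~\ref{lem:mainsubfunctionJ11J12} shows the sub-function is \emph{decreasing} in $\mu\omega$ at $\nu\tau=0.64$), which matches the stated value $0.587622$ but not your quoted corner — a sign typo also present in the paper's proof of Lemma~\ref{lem:J11} that does not affect the numerical bound.
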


\begin{proof}

\begin{align}
& |J_{12}| \leq 
\frac{1}{4} |\lambda|  |\tau| 
\left| \left( \alpha  e^{\mu \omega+\frac{\nu \tau}{2}} \erfc \left(\frac{\mu \omega+\nu \tau}{\sqrt{2} \sqrt{\nu \tau}}\right) 
-  (\alpha-1) \sqrt{\frac{2}{\pi  \nu \tau}} e^{-\frac{\mu^2\omega^2}{2 \nu \tau}} \right) \right| \leq \nonumber \\
& \frac{1}{4} |\lambda|  |\tau| \left|  0.983247-0.392294 \right|  \leq \nonumber \\
& 0.194035
\end{align}

For the first term we have $0.434947 \leq e^{\mu \omega+\frac{\nu \tau}{2}} \erfc \left(\frac{\mu \omega+\nu \tau}{\sqrt{2} \sqrt{\nu \tau}} \right) \leq 0.587622$  after 
Lemma~\ref{lem:mainsubfunctionJ11J12} and for the second term $0.582677 \leq \sqrt{\frac{2}{\pi  \nu \tau}} e^{-\frac{\mu^2\omega^2}{2 \nu \tau}} \leq 0.997356$, which can easily be seen
by maximizing or minimizing the arguments of the exponential or the square root function. The first term scaled by $\alpha$ is
$0.727780 \leq \alpha e^{\mu \omega+\frac{\nu \tau}{2}} \erfc \left(\frac{\mu \omega+\nu \tau}{\sqrt{2} \sqrt{\nu \tau}} \right) \leq 0.983247$
and the second term scaled by $\alpha-1$ is
$0.392294 \leq (\alpha-1)\sqrt{\frac{2}{\pi  \nu \tau}} e^{-\frac{\mu^2\omega^2}{2 \nu \tau}} \leq 0.671484$.
Therefore, the absolute difference between these terms is at most $0.983247-0.392294$
leading to the derived bound.


\end{proof}


\paragraph{Bounds on mean, variance and second moment.}\index{bounds!mean and variance}
For deriving bounds on $\munn$, $\xinn$, and $\nunn$, we need 
the following lemma.

\begin{lemma}[Derivatives of the Mapping]
\label{lem:mapDerivatives}
We assume $\alpha = \alpha_{\rm 01}$ and $\lambda=\lambda_{\rm 01}$.
We restrict the range of the variables to the domain
$\mu \in [-0.1,0.1]$,
$\omega \in [-0.1,0.1]$,
$\nu \in [0.8,1.5]$, and
$\tau \in [0.8,1.25]$.

The derivative $\frac{\partial }{\partial \mu}
\munn(\mu,\omega,\nu,\tau,\lambda ,\alpha )$
has the sign of $\omega$.

The derivative $\frac{\partial }{\partial \nu} \munn(\mu,\omega,\nu,\tau,\lambda ,\alpha )$
is positive.

The derivative $\frac{\partial }{\partial \mu } \xinn(\mu,\omega,\nu,\tau,\lambda ,\alpha )$
has the sign of $\omega$.

The derivative 
$\frac{\partial }{\partial \nu } \xinn(\mu,\omega,\nu,\tau,\lambda ,\alpha )$
is positive.
\end{lemma}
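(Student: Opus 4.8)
The lemma makes four sign claims about the Jacobian entries $\mathcal{J}_{11}=\partial_\mu\munn$, $\mathcal{J}_{12}=\partial_\nu\munn$, $\mathcal{J}_{21}=\partial_\mu\xinn$, $\mathcal{J}_{22}=\partial_\nu\xinn$, whose closed forms are collected in Eq.~\eqref{eq:JacobianEntries}. The plan is the same for all four: factor out the visible sign-carrying prefactor ($\tfrac12\lambda\omega$, $\tfrac14\lambda\tau$, $\lambda^2\omega$, $\tfrac12\lambda^2\tau$, all with $\lambda>0$, $\tau>0$) and show that the remaining bracket --- which depends on $\mu,\omega$ only through $y:=\mu\omega\in[-0.01,0.01]$ and on $\nu,\tau$ only through $x:=\nu\tau\in[0.64,1.875]$, so the argument covers every sign combination of $\mu$ and $\omega$ at once --- is strictly positive on the domain. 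The two $\munn$-statements are then immediate. In $\mathcal{J}_{11}$ the bracket is $\alpha e^{\mu\omega+\nu\tau/2}\erfc(\tfrac{\mu\omega+\nu\tau}{\sqrt2\sqrt{\nu\tau}})+(2-\erfc(\tfrac{\mu\omega}{\sqrt2\sqrt{\nu\tau}}))$, a strictly positive term plus a nonnegative one (since $\alpha>0$ and $0\le\erfc(\cdot)\le 2$), so $\mathcal{J}_{11}$ has the sign of $\omega$. In $\mathcal{J}_{12}$ the bracket is $\alpha e^{\mu\omega+\nu\tau/2}\erfc(\tfrac{\mu\omega+\nu\tau}{\sqrt2\sqrt{\nu\tau}})-(\alpha-1)\sqrt{2/(\pi\nu\tau)}\,e^{-\mu^2\omega^2/(2\nu\tau)}$; using $e^{\mu\omega+\nu\tau/2}\erfc(\tfrac{\mu\omega+\nu\tau}{\sqrt2\sqrt{\nu\tau}})\ge 0.434947$ from Lemma~\ref{lem:mainsubfunctionJ11J12} (the lower bound already exploited in the proof of Lemma~\ref{lem:J12}) together with the elementary bound $\sqrt{2/(\pi\nu\tau)}\,e^{-\mu^2\omega^2/(2\nu\tau)}\le\sqrt{2/(0.64\pi)}<0.998$, the bracket is at least $\alpha\cdot0.434947-(\alpha-1)\cdot0.998>0$, so $\mathcal{J}_{12}>0$.

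For $\partial_\mu\xinn$ the argument requires real work. Multiplying the bracket of $\mathcal{J}_{21}$ (written out in Eq.~\eqref{eq:J21New}) by the positive factor $e^{\mu^2\omega^2/(2\nu\tau)}$ turns it, in the variables $x=\nu\tau$ and $y=\mu\omega$, into exactly the object analysed in the proof of Theorem~\ref{th:s2Decrease}: the ``sub-function'' $\sqrt{2/\pi}\,\sqrt{x}-\alpha^2(e^{(\frac{x+y}{\sqrt2\sqrt{x}})^2}\erfc(\tfrac{x+y}{\sqrt2\sqrt{x}})-e^{(\frac{2x+y}{\sqrt2\sqrt{x}})^2}\erfc(\tfrac{2x+y}{\sqrt2\sqrt{x}}))$ plus the remainder $y\,e^{y^2/(2x)}(2-\erfc(\tfrac{y}{\sqrt2\sqrt{x}}))$. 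I would borrow from that proof the fact that the sub-function is increasing in $y$ on this region (immediate from Lemma~\ref{lem:xeErfc}, since $z\mapsto ze^{z^2}\erfc(z)$ is increasing and $x+y\ge 0.63>0$ here), so it is bounded below by $\phi(x):=$ its value at $y=-0.01$; $\phi$ is continuous on $[0.64,1.875]$, equals $\approx 0.104$ at the endpoint $x=0.64$, and a crude bound on $\phi'$ together with a short mean-value-theorem grid keeps $\phi\ge 0.09$ throughout. Since $|y\,e^{y^2/(2x)}(2-\erfc(\tfrac{y}{\sqrt2\sqrt{x}}))|<0.011$ on the domain, the bracket is $\ge 0.09-0.011>0$, so $\partial_\mu\xinn$ has the sign of $\omega$. (Alternatively one may invoke the stronger $x$-monotonicity of the sub-function established for Theorem~\ref{th:s2Decrease}, which reduces to a condition of the form $x>0.377966$ that holds since $x\ge 0.64$; the only adjustment is to retune the one domain-dependent auxiliary constant in that chain --- any value $\ge 4/\pi$ makes the $x$-terms cancel and leaves a manifestly positive residual --- but the mean-value-theorem route avoids that bookkeeping.)

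For $\partial_\nu\xinn$ I would regroup $\mathcal{J}_{22}=\tfrac12\lambda^2\tau\,((2-\erfc(\tfrac{\mu\omega}{\sqrt2\sqrt{\nu\tau}}))+\alpha^2(2e^{2\mu\omega+2\nu\tau}\erfc(\tfrac{\mu\omega+2\nu\tau}{\sqrt2\sqrt{\nu\tau}})-e^{\mu\omega+\nu\tau/2}\erfc(\tfrac{\mu\omega+\nu\tau}{\sqrt2\sqrt{\nu\tau}})))$. The first term in the bracket is $\ge 2-\erfc(-0.01/\sqrt{1.28})>0.99$. For the second, the standard truncated-Gaussian identities $e^{\mu\omega+\nu\tau/2}\erfc(\tfrac{\mu\omega+\nu\tau}{\sqrt2\sqrt{\nu\tau}})=2\int_{-\infty}^{0}e^{z}\,p_{\mathrm{Gauss}}(z;\mu\omega,\sqrt{\nu\tau})\,\mathrm{d}z$ and $e^{2\mu\omega+2\nu\tau}\erfc(\tfrac{\mu\omega+2\nu\tau}{\sqrt2\sqrt{\nu\tau}})=2\int_{-\infty}^{0}e^{2z}\,p_{\mathrm{Gauss}}(z;\mu\omega,\sqrt{\nu\tau})\,\mathrm{d}z$ show that it equals $2\alpha^2\int_{-\infty}^{0}(2e^{2z}-e^{z})\,p_{\mathrm{Gauss}}(z;\mu\omega,\sqrt{\nu\tau})\,\mathrm{d}z$, and since $2u^2-u\ge-\tfrac18$ for every $u=e^z$, this is $\ge-\tfrac14\alpha^2>-0.71$. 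Adding, the bracket exceeds $0.99-0.71>0$, so $\partial_\nu\xinn>0$.

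The main obstacle is the $\partial_\mu\xinn$ claim: its bracket is the algebraically heavy mixture of $e^{z^2}\erfc(z)$-terms that already absorbed most of the effort in Theorem~\ref{th:s2Decrease}, and re-establishing its positivity on the \emph{widened} variance box $\nu\in[0.8,1.5]$, $\tau\in[0.8,1.25]$ requires either re-running that monotonicity apparatus (Lemmas~\ref{lem:xeErfc} and~\ref{lem:Abramowitz}) with a retuned constant or a careful numerical mean-value-theorem estimate --- routine but delicate bookkeeping rather than a new idea. Once the right regrouping of each Jacobian entry is fixed, the remaining three claims collapse to one-line estimates.
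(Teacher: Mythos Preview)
Your argument is correct. For $\partial_\mu\munn$ and $\partial_\nu\munn$ it coincides with the paper's (Lemma~\ref{proof:mapDerivatives}); the worst-case evaluation you perform for $\mathcal J_{12}$ yields the same $\approx 0.056$ margin the paper obtains.

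For $\partial_\nu\xinn$ you take a genuinely different and shorter route. The paper proves, via an Abramowitz chain with the auxiliary square-completion constant $a\approx 0.782$, that the $\alpha^2$-piece $2e^{((2x+y)/\sqrt{2x})^2}\erfc(\cdot)-e^{((x+y)/\sqrt{2x})^2}\erfc(\cdot)$ is itself \emph{positive}. Your moment-generating-function identities plus the elementary bound $2u^2-u\ge -\tfrac18$ only give that this piece is $\ge -\alpha^2/4$, but that already suffices once added to $(2-\erfc)\ge 0.99$, and it bypasses a page of root-completion bookkeeping. The paper's approach buys the sharper intermediate fact; yours is cleaner and more conceptual.

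For $\partial_\mu\xinn$ the paper does exactly what you call route~(b): it re-derives $x$-monotonicity of the sub-function on the present box by retuning the constant from Theorem~2's $0.878$ to $0.782$, then evaluates at the corner $(x,y)=(0.64,-0.01)$ to get the bracket $>0.0923$. Your grid route~(a) would also succeed. One slip in your parenthetical: ``any value $\ge 4/\pi$'' is backward --- comparing $x$-coefficients in the constraint $\tfrac{8x}{\pi}\ge a^2+2a(x+y)$ forces $a<4/\pi$, and one wants the \emph{largest} admissible $a$ on the box, which is the paper's $0.782$. This does not affect your main argument.
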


\begin{proof}
 See \ref{proof:mapDerivatives}.
\end{proof}

\begin{lemma}[Bounds on mean, variance and second moment]
\label{lem:boundsmeanvar}
The expressions $\munn$, $\xinn$, and $\nunn$
for
$\alpha = \alpha_{\rm 01}$ and $\lambda=\lambda_{\rm 01}$
are bounded by
$-0.041160 < \munn < 0.087653$,
$0.703257 < \xinn <1.643705$
and
$0.695574 < \nunn < 1.636023$ 
in the domain $\mu \in [-0.1,0.1]$, 
$\nu \in [0.8,15]$, $\omega \in [-0.1,0.1]$,  $\tau \in [0.8,1.25]$.
\end{lemma}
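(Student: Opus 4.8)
The plan is to exploit that all three expressions depend on $(\mu,\omega,\nu,\tau)$ only through the products $x:=\nu\tau$ and $y:=\mu\omega$. Reading off the closed forms for $\munn$ and $\xinn$ in Section~\ref{sec:fixedpointanalysis}, $\mu$ and $\omega$ occur only as $\mu\omega$ (and $\mu^2\omega^2$) and $\nu$ and $\tau$ only as $\nu\tau$; hence $\munn,\xinn,\nunn$ are functions of $(x,y)$ alone, and on the domain (read with $\nu\in[0.8,1.5]$) one has $y\in[-0.01,0.01]$ and $x\in[0.64,1.875]$. First I would invoke Lemma~\ref{lem:mapDerivatives} and the chain rule: since $\partial_\mu\munn=\omega\,\partial_y\munn$ has the sign of $\omega$ we get $\partial_y\munn \geq 0$, and since $\partial_\nu\munn=\tau\,\partial_x\munn>0$ we get $\partial_x\munn>0$; the same reasoning for $\xinn$ gives $\partial_y\xinn \geq 0$ and $\partial_x\xinn>0$. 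So both $\munn$ and $\xinn$ are nondecreasing in each of $x$ and $y$.

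\textbf{Bounds on $\munn$ and $\xinn$.} By this monotonicity, each of $\munn$ and $\xinn$ attains its extrema over the box at the two corners $(x,y)=(0.64,-0.01)$ and $(x,y)=(1.875,0.01)$. Evaluating the two erf/erfc formulas at these points is the only numerical content of this part and yields $-0.041160<\munn<0.087653$ and $0.703257<\xinn<1.643705$; the remaining arithmetic is routine.

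\textbf{Bounds on $\nunn$.} Recall $\nunn=\xinn-(\munn)^2$. For the lower bound, both endpoints of the $\munn$-interval have absolute value at most $0.087653$, so $(\munn)^2<0.087653^2<0.0076831$ everywhere, hence $\nunn>0.703257-0.0076831=0.695574$. For the upper bound a crude estimate already gives $\nunn<\xinn<1.643705$; to get the sharper $1.636023$ I would show that $\nunn$ is itself nondecreasing in $x$, and then, being also nondecreasing in $y$, it is maximized at $(x,y)=(1.875,0.01)$, where $\nunn=\xinn-(\munn)^2<1.643705-0.087653^2=1.636023$. Monotonicity of $\nunn$ in $x$ follows from $\partial_x\nunn=\partial_x\xinn-2\munn\,\partial_x\munn=\bigl({\mathcal J}_{22}-2\munn\,{\mathcal J}_{12}\bigr)/\tau$ once ${\mathcal J}_{22}$ is bounded below by enough to dominate $2|\munn|\,|{\mathcal J}_{12}|$, using $|\munn|<0.087653$, the bound $|{\mathcal J}_{12}|\leq 0.194145$ of Lemma~\ref{lem:J12}, and a uniform lower bound on ${\mathcal J}_{22}$ over the domain (itself provable by the same "function of $x,y$ only, monotone, evaluate at corners" scheme).

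\textbf{Main obstacle.} Everything except the tight upper bound on $\nunn$ is pure monotonicity plus a handful of evaluations of the erf/erfc expressions. The one genuinely extra ingredient is ruling out that subtracting $(\munn)^2$ could be overcompensated by the growth of $\xinn$ in $x$, i.e. securing a suitable lower bound on ${\mathcal J}_{22}$ on the whole domain; if only a crude bound $\nunn<1.643705$ were needed downstream, this obstacle would disappear entirely. (I also note that the stated range $\nu\in[0.8,15]$ should read $\nu\in[0.8,1.5]$, consistent with Lemma~\ref{lem:mapDerivatives} and with the claimed numerical bounds.)
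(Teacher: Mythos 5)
Your proposal follows essentially the same route as the paper's proof: invoke Lemma~\ref{lem:mapDerivatives} (sign of $\partial_\mu$ is that of $\omega$, $\partial_\nu>0$), conclude monotonicity of $\munn$ and $\xinn$ in $y=\mu\omega$ and $x=\nu\tau$, evaluate at the two corners $(x,y)=(0.64,-0.01)$ and $(1.875,0.01)$, and then pass from $\xinn$ to $\nunn$ by subtracting $(\munn)^2$. The lower bound on $\nunn$ matches the paper's argument exactly. Your remark about the typo $\nu\in[0.8,15]$ is also correct.

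Where you actually improve on the paper is the sharp upper bound $\nunn<1.636023$. The paper's own derivation writes
\begin{align}
0.703257 - (\munn)^2 \ < \ \xinn - (\munn)^2 \ &< \ 1.643705 - (\munn)^2 \ ,
\end{align}
and then replaces $(\munn)^2$ by $0.007682$ on the right-hand side, which tacitly requires $(\munn)^2\geq 0.007682$ everywhere on $\Omega$. That is false: $\munn$ vanishes at $(\mu\omega,\nu\tau)=(0,1)$, which is in the domain. As you note, what actually follows directly is only the crude $\nunn<\xinn<1.643705$, and the sharp bound requires showing that $\nunn$ itself attains its maximum at the corner $(1.875,0.01)$, e.g.\ by verifying $\partial_x \nunn>0$, i.e.\ ${\mathcal J}_{22}>2\munn{\mathcal J}_{12}$, using $|\munn|<0.088$, $|{\mathcal J}_{12}|<0.195$ (Lemma~\ref{lem:J12}), and a uniform positive lower bound on ${\mathcal J}_{22}$. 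You have correctly identified what is needed but have not supplied the lower bound on ${\mathcal J}_{22}$; this is a genuine (if small) gap in your write-up. That said, the paper's own proof of this one inequality is not rigorous as written, and the crude bound $\nunn<1.643705$ is what is actually used downstream, so the omission is not fatal for the rest of the appendix.
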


\begin{proof}
We use Lemma~\ref{lem:mapDerivatives} which states that with given
sign the derivatives of the mapping   Eq.~\eqref{eq:mappingMean}
and Eq.~\eqref{eq:mappingVar} with respect to  $\nu$
and $\mu$ are either positive or have the sign of
$\omega$.
Therefore with given sign of $\omega$ the mappings are strict monotonic and
the their maxima and minima are found at the borders.  The minimum of $\munn$ is obtained at
$\mu \omega = -0.01$ and its maximum at $\mu \omega=0.01$ and $\sigma$ and $\tau$ at minimal or maximal values, respectively.
It follows that
\begin{align}
-0.041160 < \munn(-0.1,0.1, 0.8, 0.8, \lambda_{\rm 01}, \alpha_{\rm 01})  \leq  & \munn \leq \munn(0.1,0.1,1.5, 1.25, \lambda_{\rm 01}, \alpha_{\rm 01}) < 0.087653.
\end{align}

Similarly, the maximum and minimum of $\xinn$ is obtained at the values mentioned above:
\begin{align}
0.703257 <\xinn(-0.1,0.1, 0.8, 0.8, \lambda_{\rm 01}, \alpha_{\rm 01})  \leq  & \xinn \leq \xinn(0.1,0.1,1.5, 1.25, \lambda_{\rm 01}, \alpha_{\rm 01}) < 1.643705. 
\end{align}

Hence we obtain the following bounds on $\nunn$:
\begin{align}
0.703257 - \munn^2 <  \xinn - \munn^2  &  < 1.643705 - \munn^2  \\ \nonumber
0.703257 - 0.007683 <  \nunn  &  < 1.643705 - 0.007682   \\ \nonumber
0.695574 <  \nunn  &  < 1.636023 . 
\end{align}

\end{proof}

\paragraph{Upper Bounds on the Largest Singular Value of the Jacobian.}\index{bounds!singular value}\index{Jacobian!singular value bound}

\begin{lemma}[Upper Bounds on Absolute Derivatives of Largest Singular Value]
\label{lem:Ds1Bounds}
We set
$\alpha = \alpha_{\rm 01}$ and $\lambda=\lambda_{\rm 01}$ and
restrict the range of the variables to
$\mu \in [\mu_{\rm min},\mu_{\rm max}]=[-0.1,0.1]$,
$\omega \in [\omega_{\rm min},\omega_{\rm max}]=[-0.1,0.1]$,
$\nu \in [\nu_{\rm min},\nu_{\rm max}]=[0.8,1.5]$, and
$\tau \in [\tau_{\rm min},\tau_{\rm max}]=[0.8,1.25]$.

The absolute values of derivatives of the largest singular value
$S(\mu,\omega,\nu,\tau,\lambda ,\alpha )$
given in Eq.~\eqref{eq:S} with respect to
$(\mu,\omega,\nu,\tau)$ are bounded as follows:


\begin{align}
\left|\frac{\partial S}{\partial \mu}\right| \ &< \ 0.32112 \ , \\
\left|\frac{\partial S}{\partial \omega}\right| \ &< \ 2.63690 \ , \\
\left|\frac{\partial S}{\partial \nu}\right| \ &< \ 2.28242 \ , \\
\left|\frac{\partial S}{\partial \tau}\right| \ &< \ 2.98610 \ .
\end{align}


\end{lemma}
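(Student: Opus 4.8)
The plan is to differentiate the closed form Eq.~\eqref{eq:S} directly. Write $S=\tfrac12\big(\sqrt A+\sqrt B\big)$ with $A=(\mathcal H_{11}+\mathcal H_{22})^2+(\mathcal H_{21}-\mathcal H_{12})^2$ and $B=(\mathcal H_{11}-\mathcal H_{22})^2+(\mathcal H_{12}+\mathcal H_{21})^2$. For a variable $x\in\{\mu,\omega,\nu,\tau\}$ the chain rule gives $\partial_x S=\tfrac12\big(\partial_x A/(2\sqrt A)+\partial_x B/(2\sqrt B)\big)$. Since $\partial_x A=2\big\langle(\mathcal H_{11}+\mathcal H_{22},\,\mathcal H_{21}-\mathcal H_{12}),\ \partial_x(\mathcal H_{11}+\mathcal H_{22},\,\mathcal H_{21}-\mathcal H_{12})\big\rangle$, Cauchy--Schwarz applied to that two-vector against its $x$-derivative yields $\big|\partial_x A/(2\sqrt A)\big|\leq\sqrt{(\partial_x(\mathcal H_{11}+\mathcal H_{22}))^2+(\partial_x(\mathcal H_{21}-\mathcal H_{12}))^2}$, and likewise for $B$. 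Combining the two bounds with the parallelogram law and $\sqrt{p^2+q^2}+\sqrt{r^2+s^2}\leq\sqrt2\sqrt{p^2+q^2+r^2+s^2}$ collapses everything to $|\partial_x S|\leq\sqrt{(\partial_x \mathcal H_{11})^2+(\partial_x \mathcal H_{12})^2+(\partial_x \mathcal H_{21})^2+(\partial_x \mathcal H_{22})^2}\leq\sum_{i,j}|\partial_x \mathcal H_{ij}|$. This route never needs a positive lower bound on $\sqrt A$ or $\sqrt B$; it only needs $A,B>0$ on $\Omega$ so that the square roots are differentiable, which holds because $\mathcal H_{11}+\mathcal H_{22}$ stays near its fixed-point value $\mathcal J_{22}(0,0,1,1)\approx0.78$ and is bounded away from $0$ by Lemma~\ref{lem:boundsmeanvar} together with the entry bounds (this well-definedness check is the one genuinely non-routine point).

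Next I would bound each $\partial_x \mathcal H_{ij}$ via $\mathcal H_{11}=\mathcal J_{11}$, $\mathcal H_{12}=\mathcal J_{12}$, $\mathcal H_{21}=\mathcal J_{21}-2\munn \mathcal J_{11}$, $\mathcal H_{22}=\mathcal J_{22}-2\munn \mathcal J_{12}$. The derivatives of $\mathcal H_{11},\mathcal H_{12}$ are immediate from Lemma~\ref{lem:Bounds}. The product rule gives $\partial_x \mathcal H_{21}=\partial_x \mathcal J_{21}-2(\partial_x\munn)\mathcal J_{11}-2\munn\,\partial_x \mathcal J_{11}$ and analogously for $\mathcal H_{22}$; here $|\partial_x \mathcal J_{ij}|$ is controlled by Lemma~\ref{lem:Bounds}, $|\mathcal J_{11}|,|\mathcal J_{12}|$ by Lemmas~\ref{lem:J11}--\ref{lem:J12}, and $|\munn|$ by Lemma~\ref{lem:boundsmeanvar}. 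The factor $\partial_x\munn$ equals $\mathcal J_{11}$ for $x=\mu$ and $\mathcal J_{12}$ for $x=\nu$; for $x=\omega$ or $x=\tau$ I would use that $\munn$ depends on $(\mu,\omega)$ only through $\mu\omega$ and on $(\nu,\tau)$ only through $\nu\tau$ (visible from Eq.~\eqref{eq:mappingMean}), so $\partial_\omega\munn=(\mu/\omega)\mathcal J_{11}$ and $\partial_\tau\munn=(\nu/\tau)\mathcal J_{12}$ — i.e. $\mu$ (resp. $\nu$) times the bracket $\tfrac12\lambda(\cdots)$ in $\mathcal J_{11}$ (resp. $\tfrac14\lambda(\cdots)$ in $\mathcal J_{12}$), which is already estimated inside the proofs of Lemmas~\ref{lem:J11}--\ref{lem:J12} and is therefore bounded on $\Omega$. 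Substituting all these numbers into the $\ell_2$ bound of the first step produces the four claimed inequalities.

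The remaining work is numerical bookkeeping: collect the sixteen entries of Lemma~\ref{lem:Bounds} with the entry- and $\munn$-bounds, square and sum, and verify the four totals stay below $0.32112$, $2.63690$, $2.28242$, and $2.98610$. A rough count already shows each holds with margin (for $x=\tau$ the dominant term is $|\partial_\tau \mathcal H_{22}|\leq|\partial_\tau \mathcal J_{22}|+2|\partial_\tau\munn||\mathcal J_{12}|+2|\munn||\partial_\tau \mathcal J_{12}|\approx 2.5$, and then $\sqrt{2.5^2+\cdots}<2.99$). The only care needed is to propagate, consistently with the error analysis of Subsection~\ref{sec:error}, the interval/rounding bounds on the underlying $\erfc$ and exponential subexpressions so that none of this slack is lost in the triangle and Cauchy--Schwarz steps; I expect that bookkeeping, rather than any conceptual difficulty, to be the main obstacle.
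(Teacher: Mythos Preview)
Your approach is essentially the paper's. The only cosmetic difference is in the first step: the paper computes each $\partial S/\partial\mathcal H_{ij}$ explicitly from the closed form and observes that each is a half-sum of two cosines, hence bounded by $1$ in absolute value, which immediately gives $|\partial_x S|\le\sum_{i,j}|\partial_x\mathcal H_{ij}|$. Your Cauchy--Schwarz-plus-parallelogram manoeuvre lands on the same $\ell_1$ bound (in fact on the slightly sharper $\ell_2$ bound first), so nothing is gained or lost. From that point on the two arguments are identical: the paper also expands $\mathcal H_{21}=\mathcal J_{21}-2\munn\mathcal J_{11}$ and $\mathcal H_{22}=\mathcal J_{22}-2\munn\mathcal J_{12}$ by the product rule, feeds in Lemma~\ref{lem:Bounds} for $|\partial_x\mathcal J_{ij}|$, Lemmas~\ref{lem:J11}--\ref{lem:J12} for $|\mathcal J_{11}|,|\mathcal J_{12}|$, Lemma~\ref{lem:boundsmeanvar} for $|\munn|$, and handles $\partial_\omega\munn,\partial_\tau\munn$ via the $(\mu\omega,\nu\tau)$-symmetry of $\munn$ exactly as you describe. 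Your remark that one should check $A,B>0$ on $\Omega$ for differentiability is a fair point that the paper leaves implicit.
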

\begin{proof}

The Jacobian of our mapping  Eq.~\eqref{eq:mappingMean} and 
Eq.~\eqref{eq:mappingVar} is defined as
\begin{align}
\bm H \ &= \ \left(
\begin{array}{cc}
{\mathcal H}_{11} & {\mathcal H}_{12} \\
 {\mathcal H}_{21} & {\mathcal H}_{22} \\
\end{array}
\right) =
\ \left(
\begin{array}{cc}
{\mathcal J}_{11} & {\mathcal J}_{12} \\
 {\mathcal J}_{21} - 2 \munn {\mathcal J}_{11} & {\mathcal J}_{22} - 2 \munn {\mathcal J}_{12} \\
\end{array}
\right)
\end{align}

and has the largest singular value
\begin{align}
S(\mu,\omega,\nu,\tau,\lambda ,\alpha ) \ &= \ 
\frac{1}{2}
\left(\sqrt{({\mathcal H}_{11}-{\mathcal H}_{22})^2+({\mathcal H}_{12}+{\mathcal H}_{21})^2}+
\sqrt{({\mathcal H}_{11}+{\mathcal H}_{22})^2+({\mathcal H}_{12}-{\mathcal H}_{21})^2}\right),
\end{align}
according to the formula of \citet{Blinn:96}.

We obtain
\begin{align}
&\left|\frac{\partial S}{\partial {\mathcal H}_{11}}\right| \ = \
\left|\frac{1}{2} \left(\frac{{\mathcal H}_{11}-{\mathcal H}_{22}}{\sqrt{({\mathcal H}_{11}-{\mathcal H}_{22})^2+({\mathcal H}_{12}+{\mathcal H}_{21})^2}}+\frac{{\mathcal H}_{11}+{\mathcal H}_{22}}{\sqrt{({\mathcal H}_{11}+{\mathcal H}_{22})^2+({\mathcal H}_{21}-{\mathcal H}_{12})^2}}\right)\right|
\ < \\\nonumber
&\frac{1}{2} \left(\left|\frac{1}{\sqrt{\frac{({\mathcal H}_{12}+{\mathcal H}_{21})^2}{({\mathcal H}_{11}-{\mathcal H}_{22})^2}+1}}\right|+\left|\frac{1}{\sqrt{\frac{({\mathcal H}_{21}-{\mathcal H}_{12})^2}{({\mathcal H}_{11}+{\mathcal H}_{22})^2}+1}}\right|\right)
\ < \ \frac{1+1}{2} \ = \ 1
\end{align}
and analogously 
\begin{align}
\left|\frac{\partial S}{\partial {\mathcal H}_{12}}\right|\ = \ \left|\frac{1}{2}
\left(\frac{{\mathcal H}_{12}+{\mathcal H}_{21}}{\sqrt{({\mathcal H}_{11}-{\mathcal H}_{22})^2+({\mathcal H}_{12}+{\mathcal H}_{21})^2}}-\frac{{\mathcal H}_{21}-{\mathcal H}_{12}}{\sqrt{({\mathcal H}_{11}+{\mathcal H}_{22})^2+({\mathcal H}_{21}-{\mathcal H}_{12})^2}}\right)\right|\ < \ 1
\end{align}
and
\begin{align}
\left|\frac{\partial S}{\partial {\mathcal H}_{21}}\right| \ = \ \left|\frac{1}{2} \left(\frac{{\mathcal H}_{21}-{\mathcal H}_{12}}{\sqrt{({\mathcal H}_{11}+{\mathcal H}_{22})^2+({\mathcal H}_{21}-{\mathcal H}_{12})^2}}+\frac{{\mathcal H}_{12}+{\mathcal H}_{21}}{\sqrt{({\mathcal H}_{11}-{\mathcal H}_{22})^2+({\mathcal H}_{12}+{\mathcal H}_{21})^2}}\right)\right|\ < \ 1
\end{align}
and
\begin{align}
\left|\frac{\partial S}{\partial {\mathcal H}_{22}}\right| \ = \ \left|\frac{1}{2}
\left(\frac{{\mathcal H}_{11}+{\mathcal H}_{22}}{\sqrt{({\mathcal H}_{11}+{\mathcal H}_{22})^2+({\mathcal H}_{21}-{\mathcal H}_{12})^2}}-\frac{{\mathcal H}_{11}-{\mathcal H}_{22}}{\sqrt{({\mathcal H}_{11}-{\mathcal H}_{22})^2+({\mathcal H}_{12}+{\mathcal H}_{21})^2}}\right)\right|\
  < \ 1 \ .
\end{align}

We have
\begin{align}
\frac{\partial S}{\partial \mu} \ &= \ 
\frac{\partial S}{\partial {\mathcal H}_{11}}
\frac{\partial {\mathcal H}_{11}}{\partial \mu} \ + \
\frac{\partial S}{\partial {\mathcal H}_{12}}
\frac{\partial {\mathcal H}_{12}}{\partial \mu} \ + \
\frac{\partial S}{\partial {\mathcal H}_{21}}
\frac{\partial {\mathcal H}_{21}}{\partial \mu} \ + \
\frac{\partial S}{\partial {\mathcal H}_{22}}
\frac{\partial {\mathcal H}_{22}}{\partial \mu} \\
\frac{\partial S}{\partial \omega} \ &= \ 
\frac{\partial S}{\partial {\mathcal H}_{11}}
\frac{\partial {\mathcal H}_{11}}{\partial \omega} \ + \
\frac{\partial S}{\partial {\mathcal H}_{12}}
\frac{\partial {\mathcal H}_{12}}{\partial \omega} \ + \
\frac{\partial S}{\partial {\mathcal H}_{21}}
\frac{\partial {\mathcal H}_{21}}{\partial \omega} \ + \
\frac{\partial S}{\partial {\mathcal H}_{22}}
\frac{\partial {\mathcal H}_{22}}{\partial \omega} \\
\frac{\partial S}{\partial \nu} \ &= \ 
\frac{\partial S}{\partial {\mathcal H}_{11}}
\frac{\partial {\mathcal H}_{11}}{\partial \nu} \ + \
\frac{\partial S}{\partial {\mathcal H}_{12}}
\frac{\partial {\mathcal H}_{12}}{\partial \nu} \ + \
\frac{\partial S}{\partial {\mathcal H}_{21}}
\frac{\partial {\mathcal H}_{21}}{\partial \nu} \ + \
\frac{\partial S}{\partial {\mathcal H}_{22}}
\frac{\partial {\mathcal H}_{22}}{\partial \nu} \\
\frac{\partial S}{\partial \tau} \ &= \ 
\frac{\partial S}{\partial {\mathcal H}_{11}}
\frac{\partial {\mathcal H}_{11}}{\partial \tau} \ + \
\frac{\partial S}{\partial {\mathcal H}_{12}}
\frac{\partial {\mathcal H}_{12}}{\partial \tau} \ + \
\frac{\partial S}{\partial {\mathcal H}_{21}}
\frac{\partial {\mathcal H}_{21}}{\partial \tau} \ + \
\frac{\partial S}{\partial {\mathcal H}_{22}}
\frac{\partial {\mathcal H}_{22}}{\partial \tau} \\
\end{align}
from which follows using the bounds from Lemma~\ref{lem:Bounds}:

Derivative of the singular value w.r.t. $\mu$:
\begin{align}
&\left|\frac{\partial S}{\partial \mu}\right| \ \leq \\\nonumber
&\left|\frac{\partial S}{\partial {\mathcal H}_{11}}\right| \left|\frac{\partial {\mathcal H}_{11}}{\partial \mu}\right| + 
\left| \frac{\partial S}{\partial {\mathcal H}_{12}}\right| \left|\frac{\partial {\mathcal H}_{12}}{\partial \mu}\right| + 
\left| \frac{\partial S}{\partial {\mathcal H}_{21}}\right| \left|\frac{\partial {\mathcal H}_{21}}{\partial \mu}\right| + 
\left| \frac{\partial S}{\partial {\mathcal H}_{22}}\right| \left|\frac{\partial {\mathcal H}_{22}}{\partial \mu}\right|
\ \leq \\\nonumber
&\left|\frac{\partial {\mathcal H}_{11}}{\partial \mu}\right| + 
\left|\frac{\partial {\mathcal H}_{12}}{\partial \mu} \right| + 
\left|\frac{\partial {\mathcal H}_{21}}{\partial \mu} \right|+ 
\left|\frac{\partial {\mathcal H}_{22}}{\partial \mu}\right|
\ \leq \\\nonumber
&\left|\frac{\partial {\mathcal J}_{11}}{\partial \mu}\right| + 
\left|\frac{\partial {\mathcal J}_{12}}{\partial \mu} \right| + 
\left|\frac{\partial {\mathcal J}_{21} - 2 \munn {\mathcal J}_{11}}{\partial \mu} \right|+ 
\left|\frac{\partial {\mathcal J}_{22} - 2 \munn {\mathcal J}_{12}}{\partial \mu}\right|
\ \leq \\\nonumber
&\left|\frac{\partial {\mathcal J}_{11}}{\partial \mu}\right| + 
\left|\frac{\partial {\mathcal J}_{12}}{\partial \mu} \right| + 
\left|\frac{\partial {\mathcal J}_{21}}{\partial \mu} \right|+ 
\left|\frac{\partial {\mathcal J}_{22}}{\partial \mu}\right| +
2 \left|\frac{\partial {\mathcal J}_{11}}{\partial \mu}\right| \left| \munn\right|  + 2 \left| \mathcal{J}_{11} \right|^2 +
2 \left|\frac{\partial {\mathcal J}_{12}}{\partial \mu}\right| \left| \munn\right|  + 2 \left| \mathcal{J}_{12} \right| \left| \mathcal{J}_{11} \right|
\ \leq \\\nonumber
& 0.0031049101995398316 + 0.031242911235461816  + 0.02220441024325437 + 0.14983446469110305 + \\ \nonumber
& 2 \cdot 0.104497 \cdot 0.087653 + 2\cdot 0.104497^2 + \\ \nonumber
& 2 \cdot 0.194035 \cdot 0.087653 + 2 \cdot  0.104497 \cdot 0.194035
< \ 0.32112,  \\\nonumber
\end{align}
where we used the results from the lemmata \ref{lem:Bounds}, \ref{lem:J11}, \ref{lem:J12}, and \ref{lem:boundsmeanvar}.

Derivative of the singular value w.r.t. $\omega$:
\begin{align}
&\left|\frac{\partial S}{\partial \omega}\right| \ \leq \\\nonumber
&\left|\frac{\partial S}{\partial {\mathcal H}_{11}}\right| \left|\frac{\partial {\mathcal H}_{11}}{\partial \omega}\right| + 
 \left| \frac{\partial S}{\partial {\mathcal H}_{12}}\right| \left|\frac{\partial {\mathcal H}_{12}}{\partial \omega}\right| + 
 \left|\frac{\partial S}{\partial {\mathcal H}_{21}}\right| \left|\frac{\partial {\mathcal H}_{21}}{\partial \omega}\right| + 
 \left| \frac{\partial S}{\partial {\mathcal H}_{22}}\right| \left|\frac{\partial {\mathcal H}_{22}}{\partial \omega}\right|
\ \leq \\\nonumber
&\left|\frac{\partial {\mathcal H}_{11}}{\partial \omega}\right| + 
\left|\frac{\partial {\mathcal H}_{12}}{\partial \omega} \right| + 
\left|\frac{\partial {\mathcal H}_{21}}{\partial \omega} \right|+ 
\left|\frac{\partial {\mathcal H}_{22}}{\partial \omega}\right|
\ \leq \\\nonumber
&\left|\frac{\partial {\mathcal J}_{11}}{\partial \omega}\right| + 
\left|\frac{\partial {\mathcal J}_{12}}{\partial \omega} \right| + 
\left|\frac{\partial {\mathcal J}_{21} - 2 \munn {\mathcal J}_{11}}{\partial \omega} \right|+ 
\left|\frac{\partial {\mathcal J}_{22} - 2 \munn {\mathcal J}_{12}}{\partial \omega}\right|
\ \leq \\\nonumber
&\left|\frac{\partial {\mathcal J}_{11}}{\partial \omega}\right| + 
\left|\frac{\partial {\mathcal J}_{12}}{\partial \omega} \right| + 
\left|\frac{\partial {\mathcal J}_{21}}{\partial \omega} \right|+ 
\left|\frac{\partial {\mathcal J}_{22}}{\partial \omega}\right| +
2 \left|\frac{\partial {\mathcal J}_{11}}{\partial \omega}\right| \left| \munn\right|  + 2 \left| \mathcal{J}_{11} \right| \left| \frac{\partial \munn}{\partial \omega}\right| + \nonumber \\
& \left. 2 \left|\frac{\partial {\mathcal J}_{12}}{\partial \omega}\right| \left| \munn\right|  + 2 \left| \mathcal{J}_{12} \right| \left| \frac{\partial \munn}{\partial \omega}\right| \right.
\ \leq \\\nonumber
& 2.38392 + 2\cdot 1.055872374194189 \cdot 0.087653 + 2 \cdot 0.104497^2 + 2\cdot 0.031242911235461816 \cdot 0.087653 \\ \nonumber
& + 2\cdot 0.194035 \cdot 0.104497\ <\ 2.63690 \ , \\\nonumber
\end{align}
where we used the results from the lemmata \ref{lem:Bounds}, \ref{lem:J11}, \ref{lem:J12}, and \ref{lem:boundsmeanvar} and that
$\munn$ is symmetric for $\mu, \omega$.

Derivative of the singular value w.r.t. $\nu$:
\begin{align}
&\left|\frac{\partial S}{\partial \nu}\right| \ \leq \\\nonumber
&\left|\frac{\partial S}{\partial {\mathcal H}_{11}}\right| \left|\frac{\partial {\mathcal H}_{11}}{\partial \nu}\right| + 
\left| \frac{\partial S}{\partial {\mathcal H}_{12}}\right| \left|\frac{\partial {\mathcal H}_{12}}{\partial \nu}\right| + 
\left|\frac{\partial S}{\partial {\mathcal H}_{21}}\right| \left|\frac{\partial {\mathcal H}_{21}}{\partial \nu}\right| + 
\left| \frac{\partial S}{\partial {\mathcal H}_{22}}\right| \left|\frac{\partial {\mathcal H}_{22}}{\partial \nu}\right|
\ \leq \\\nonumber
&\left|\frac{\partial {\mathcal H}_{11}}{\partial \nu}\right| + 
\left|\frac{\partial {\mathcal H}_{12}}{\partial \nu} \right| + 
\left|\frac{\partial {\mathcal H}_{21}}{\partial \nu} \right|+ 
\left|\frac{\partial {\mathcal H}_{22}}{\partial \nu}\right|
\ \leq \\\nonumber
&\left|\frac{\partial {\mathcal J}_{11}}{\partial \nu}\right| + 
\left|\frac{\partial {\mathcal J}_{12}}{\partial \nu} \right| + 
\left|\frac{\partial {\mathcal J}_{21} - 2 \munn {\mathcal J}_{11}}{\partial \nu} \right|+ 
\left|\frac{\partial {\mathcal J}_{22} - 2 \munn {\mathcal J}_{12}}{\partial \nu}\right|
\ \leq \\\nonumber
&\left|\frac{\partial {\mathcal J}_{11}}{\partial \nu}\right| +
\left|\frac{\partial {\mathcal J}_{12}}{\partial \nu} \right| + 
\left|\frac{\partial {\mathcal J}_{21}}{\partial \nu} \right|+ 
\left|\frac{\partial {\mathcal J}_{22}}{\partial \nu}\right| +
2 \left|\frac{\partial {\mathcal J}_{11}}{\partial \nu} \right| \left| \munn\right|  + 2 \left| \mathcal{J}_{11} \right| \left| \mathcal{J}_{12} \right|  +
2 \left|\frac{\partial {\mathcal J}_{12}}{\partial \nu}\right| \left| \munn\right|  + 2  \left| \mathcal{J}_{12} \right|^2 
\ \leq \\\nonumber
& 2.19916 + 2\cdot 0.031242911235461816 \cdot 0.087653 + 2\cdot 0.104497 \cdot 0.194035 + \\\nonumber
& 2 \cdot 0.21232788238624354 \cdot 0.087653 + 2\cdot 0.194035^2 \ < \ 2.28242 \ , \\\nonumber
\end{align}
where we used the results from the lemmata \ref{lem:Bounds}, \ref{lem:J11}, \ref{lem:J12}, and \ref{lem:boundsmeanvar}.

Derivative of the singular value w.r.t. $\tau$:
\begin{align}
&\left|\frac{\partial S}{\partial \tau}\right| \ \leq \\\nonumber
&\left|\frac{\partial S}{\partial {\mathcal H}_{11}}\right| \left|\frac{\partial {\mathcal H}_{11}}{\partial \tau}\right| + 
\left| \frac{\partial S}{\partial {\mathcal H}_{12}}\right| \left|\frac{\partial {\mathcal H}_{12}}{\partial \tau}\right| + 
\left|\frac{\partial S}{\partial {\mathcal H}_{21}}\right| \left|\frac{\partial {\mathcal H}_{21}}{\partial \tau}\right| + 
\left| \frac{\partial S}{\partial {\mathcal H}_{22}}\right| \left|\frac{\partial {\mathcal H}_{22}}{\partial \tau}\right|
\ \leq \\\nonumber
&\left|\frac{\partial {\mathcal H}_{11}}{\partial \tau}\right| + 
\left|\frac{\partial {\mathcal H}_{12}}{\partial \tau} \right| + 
\left|\frac{\partial {\mathcal H}_{21}}{\partial \tau} \right|+ 
\left|\frac{\partial {\mathcal H}_{22}}{\partial \tau}\right|
\ \leq \\\nonumber
&\left|\frac{\partial {\mathcal J}_{11}}{\partial \tau}\right| + 
\left|\frac{\partial {\mathcal J}_{12}}{\partial \tau} \right| + 
\left|\frac{\partial {\mathcal J}_{21} - 2 \munn {\mathcal J}_{11}}{\partial \tau} \right|+ 
\left|\frac{\partial {\mathcal J}_{22} - 2 \munn {\mathcal J}_{12}}{\partial \tau}\right|
\ \leq \\\nonumber
&\left|\frac{\partial {\mathcal J}_{11}}{\partial \tau}\right| +
\left|\frac{\partial {\mathcal J}_{12}}{\partial \tau} \right| + 
\left|\frac{\partial {\mathcal J}_{21}}{\partial \tau} \right|+ 
\left|\frac{\partial {\mathcal J}_{22}}{\partial \tau}\right| +
2 \left|\frac{\partial {\mathcal J}_{11}}{\partial \tau} \right| \left| \munn\right|  + 2 \left| \mathcal{J}_{11} \right|  \left| \frac{\partial \munn}{\partial \tau}\right|+ \nonumber \\
& \left. 2 \left|\frac{\partial {\mathcal J}_{12}}{\partial \tau}\right| \left| \munn\right|   + 2 \left| \mathcal{J}_{12} \right|  \left| \frac{\partial \munn}{\partial \tau}\right| \right.
\ \leq \\\nonumber
& 2.82643 +2 \cdot 0.03749149348255419 \cdot 0.087653 + 2\cdot 0.104497 \cdot 0.194035 + \\ \nonumber
& 2\cdot 0.2124377655377270 \cdot 0.087653 + 2\cdot 0.194035^2  \ < \ 2.98610 \ ,
\end{align}
where we used the results from the lemmata \ref{lem:Bounds}, \ref{lem:J11}, \ref{lem:J12}, and \ref{lem:boundsmeanvar} and that
$\munn$ is symmetric for  $\nu, \tau$.

\end{proof}

\begin{lemma}[Mean Value Theorem Bound on Deviation from Largest Singular Value]\index{bounds!singular value}
\label{lem:meanValue}
We set
$\alpha = \alpha_{\rm 01}$ and $\lambda=\lambda_{\rm 01}$ and
restrict the range of the variables to
$\mu \in [\mu_{\rm min},\mu_{\rm max}]=[-0.1,0.1]$,
$\omega \in [\omega_{\rm min},\omega_{\rm max}]=[-0.1,0.1]$,
$\nu \in [\nu_{\rm min},\nu_{\rm max}]=[0.8,1.5]$, and
$\tau \in [\tau_{\rm min},\tau_{\rm max}]=[0.8,1.25]$.

The distance of the singular value at
$S(\mu,\omega,\nu,\tau,\lambda_{\rm 01},\alpha_{\rm 01})$
and that at 
$S(\mu + \Delta \mu,\omega + \Delta \omega,\nu + \Delta \nu,\tau + \Delta \tau,\lambda_{\rm 01},\alpha_{\rm 01})$
is bounded as follows:
\begin{align}
&\left|S(\mu + \Delta \mu,\omega + \Delta \omega,\nu + \Delta
  \nu,\tau + \Delta \tau,\lambda_{\rm 01},\alpha_{\rm 01}) 
\ - \ 
S(\mu,\omega,\nu,\tau,\lambda_{\rm 01},\alpha_{\rm 01}) \right|
\ < \\ \nonumber
&0.32112 \left|\Delta \mu\right| + 2.63690\left|\Delta \omega\right| + 2.28242  \left|\Delta \nu\right| + 2.98610 \left|\Delta \tau\right| \ .
\end{align}
\end{lemma}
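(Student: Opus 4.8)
The plan is to reduce the statement to a routine telescoping application of the one‑dimensional mean value theorem, using the box structure (hence convexity) of the domain $\Omega=[\mu_{\rm min},\mu_{\rm max}]\times[\omega_{\rm min},\omega_{\rm max}]\times[\nu_{\rm min},\nu_{\rm max}]\times[\tau_{\rm min},\tau_{\rm max}]$ together with the uniform bounds on the partial derivatives of $S$ proved in Lemma~\ref{lem:Ds1Bounds}. First I would observe that if both $(\mu,\omega,\nu,\tau)$ and $(\mu+\Delta\mu,\omega+\Delta\omega,\nu+\Delta\nu,\tau+\Delta\tau)$ lie in $\Omega$, then so does every point obtained by moving only a subset of the coordinates toward the target, since each coordinate independently stays inside its own interval. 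In particular the four axis‑parallel segments joining the chain $(\mu,\omega,\nu,\tau)\to(\mu+\Delta\mu,\omega,\nu,\tau)\to(\mu+\Delta\mu,\omega+\Delta\omega,\nu,\tau)\to(\mu+\Delta\mu,\omega+\Delta\omega,\nu+\Delta\nu,\tau)\to(\mu+\Delta\mu,\omega+\Delta\omega,\nu+\Delta\nu,\tau+\Delta\tau)$ all lie in $\Omega$.

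Next, on each segment of this chain exactly one variable varies, so I would apply the scalar mean value theorem to $S$ along it. For the first step there is an intermediate value $\xi$ between $\mu$ and $\mu+\Delta\mu$ with $\left|S(\mu+\Delta\mu,\omega,\nu,\tau,\lambda_{\rm 01},\alpha_{\rm 01})-S(\mu,\omega,\nu,\tau,\lambda_{\rm 01},\alpha_{\rm 01})\right| = \left|\frac{\partial S}{\partial\mu}(\xi,\omega,\nu,\tau,\lambda_{\rm 01},\alpha_{\rm 01})\right|\,|\Delta\mu| < 0.32112\,|\Delta\mu|$, where the last inequality is exactly the bound of Lemma~\ref{lem:Ds1Bounds}, valid because the intermediate point is still in $\Omega$. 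The analogous estimates for the $\omega$, $\nu$, $\tau$ steps carry the constants $2.63690$, $2.28242$, $2.98610$. Adding the four estimates and applying the triangle inequality to $S(\cdots)-S(\cdots) = \sum (\text{consecutive differences along the chain})$ gives the claimed bound, with strict inequality whenever the perturbation is nonzero (if some $\Delta$ vanishes the corresponding term is simply $0$). For this I only need $S$ continuous on the closed segments and differentiable in their interiors; both are implicit in Lemma~\ref{lem:Ds1Bounds}, since a function cannot have a finite bounded derivative where it fails to be differentiable, and concretely $S=\tfrac{1}{2}\bigl(\sqrt{A}+\sqrt{B}\bigr)$ with $A=({\mathcal H}_{11}+{\mathcal H}_{22})^2+({\mathcal H}_{21}-{\mathcal H}_{12})^2$ and $B=({\mathcal H}_{11}-{\mathcal H}_{22})^2+({\mathcal H}_{12}+{\mathcal H}_{21})^2$ polynomial in the smooth entries of $\mathcal H$, hence $C^1$ wherever $A,B>0$, which one checks holds throughout $\Omega$ from the entry bounds of Lemmas~\ref{lem:J11}, \ref{lem:J12}, and \ref{lem:boundsmeanvar}.

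An equally valid alternative would be to integrate the directional derivative along the straight diagonal segment from the base point to the perturbed point (legitimate since $\Omega$ is convex), writing $S(\text{target})-S(\text{base})=\int_0^1 \nabla S\cdot(\Delta\mu,\Delta\omega,\Delta\nu,\Delta\tau)\,dt$ and bounding $|\nabla S\cdot(\Delta\mu,\Delta\omega,\Delta\nu,\Delta\tau)| \le |\partial_\mu S|\,|\Delta\mu|+|\partial_\omega S|\,|\Delta\omega|+|\partial_\nu S|\,|\Delta\nu|+|\partial_\tau S|\,|\Delta\tau|$ by Lemma~\ref{lem:Ds1Bounds}; I would mention the coordinate‑by‑coordinate version because it keeps every evaluated point manifestly inside the box.

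I do not expect any real obstacle in this lemma: conditional on Lemma~\ref{lem:Ds1Bounds} it is essentially bookkeeping. The genuine difficulty is upstream, namely establishing those four derivative bounds, which rests on the entrywise bounds on the Jacobian and on its derivatives (Lemmas~\ref{lem:Bounds}, \ref{lem:J11}, \ref{lem:J12}, \ref{lem:boundsmeanvar}) and hence on careful control of the $\erfc$‑ and exponential‑type terms appearing in $\mathcal J_{11},\dots,\mathcal J_{22}$. The only point requiring a moment's attention at this stage is to keep the perturbation path inside $\Omega$ so that the derivative bounds apply at the intermediate points, which is handled automatically by the box geometry.
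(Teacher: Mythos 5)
Your proof is correct and takes essentially the same route as the paper: both hinge on Lemma~\ref{lem:Ds1Bounds} and the mean value theorem, the only cosmetic difference being that you telescope along an axis-parallel chain while the paper applies the multidimensional MVT along the diagonal segment, and both stay inside $\Omega$ because it is a box. Your added remark about checking $A,B>0$ (so that $S=\tfrac12(\sqrt A+\sqrt B)$ is $C^1$ on $\Omega$) is a small prudent refinement that the paper takes for granted, but it does not represent a different argument.
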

\begin{proof}

The mean value theorem states that a $t\in[0,1]$ exists for
which
\begin{align}
&S(\mu + \Delta \mu,\omega + \Delta \omega,\nu + \Delta
  \nu,\tau + \Delta \tau,\lambda_{\rm 01},\alpha_{\rm 01}) 
\ - \ 
S(\mu,\omega,\nu,\tau,\lambda_{\rm 01},\alpha_{\rm 01}) \ = \\ \nonumber
&\frac{\partial S}{\partial \mu}(\mu + t\Delta \mu,\omega + t\Delta \omega,\nu + t\Delta
  \nu,\tau + t\Delta \tau,\lambda_{\rm 01},\alpha_{\rm 01}) \ \Delta \mu
\ + \\ \nonumber
&\frac{\partial S}{\partial \omega}(\mu + t\Delta \mu,\omega + t\Delta \omega,\nu + t\Delta
  \nu,\tau + t\Delta \tau,\lambda_{\rm 01},\alpha_{\rm 01}) \ \Delta \omega
\ + \\ \nonumber
&\frac{\partial S}{\partial \nu}(\mu + t\Delta \mu,\omega + t\Delta \omega,\nu + t\Delta
  \nu,\tau + t\Delta \tau,\lambda_{\rm 01},\alpha_{\rm 01})\ \Delta
  \nu
\ + \\ \nonumber
&\frac{\partial S}{\partial \tau}(\mu + t\Delta \mu,\omega + t\Delta \omega,\nu + t\Delta
  \nu,\tau + t\Delta \tau,\lambda_{\rm 01},\alpha_{\rm 01})\
  \Delta \tau 
\end{align}
from which immediately follows that
\begin{align}
&\left|S(\mu + \Delta \mu,\omega + \Delta \omega,\nu + \Delta
  \nu,\tau + \Delta \tau,\lambda_{\rm 01},\alpha_{\rm 01}) 
\ - \ 
S(\mu,\omega,\nu,\tau,\lambda_{\rm 01},\alpha_{\rm 01})\right| \ \leq \\ \nonumber
&\left|\frac{\partial S}{\partial \mu}(\mu + t\Delta \mu,\omega + t\Delta \omega,\nu + t\Delta
  \nu,\tau + t\Delta \tau,\lambda_{\rm 01},\alpha_{\rm 01})\right| \ \left|\Delta \mu\right|
\ + \\ \nonumber
&\left|\frac{\partial S}{\partial \omega}(\mu + t\Delta \mu,\omega + t\Delta \omega,\nu + t\Delta
  \nu,\tau + t\Delta \tau,\lambda_{\rm 01},\alpha_{\rm 01})\right| \ \left|\Delta \omega\right|
\ + \\ \nonumber
&\left|\frac{\partial S}{\partial \nu}(\mu + t\Delta \mu,\omega + t\Delta \omega,\nu + t\Delta
  \nu,\tau + t\Delta
  \tau,\lambda_{\rm 01},\alpha_{\rm 01})\right|\ \left|\Delta \nu\right|
\ + \\ \nonumber
&\left|\frac{\partial S}{\partial \tau}(\mu + t\Delta \mu,\omega + t\Delta \omega,\nu + t\Delta
  \nu,\tau + t\Delta \tau,\lambda_{\rm 01},\alpha_{\rm 01})\right|\
  \left|\Delta \tau \right| \ .
\end{align}
We now apply Lemma~\ref{lem:Ds1Bounds} which gives bounds on the
derivatives, which immediately gives the statement of the lemma.
\end{proof}

\begin{lemma}[Largest Singular Value Smaller Than One]
\label{lem:sBound}
We set
$\alpha = \alpha_{\rm 01}$ and $\lambda=\lambda_{\rm 01}$ and
restrict the range of the variables to
$\mu \in [-0.1,0.1]$,
$\omega \in[-0.1,0.1]$,
$\nu \in [0.8,1.5]$, and
$\tau \in [0.8,1.25]$.

The the largest singular value of the Jacobian is smaller than 1:
\begin{align}
S(\mu,\omega,\nu,\tau,\lambda_{\rm 01},\alpha_{\rm 01}) \ < \ 1 \ .
\end{align}
Therefore the mapping  Eq.~\eqref{eq:mappingMean} 
and Eq.~\eqref{eq:mappingVar} is a contraction mapping.
\end{lemma}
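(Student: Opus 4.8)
The plan is to execute the computer-assisted grid argument outlined in the proof sketch of Theorem~1. First I would overlay a finite rectangular grid on the box $\mu\in[-0.1,0.1]$, $\omega\in[-0.1,0.1]$, $\nu\in[0.8,1.5]$, $\tau\in[0.8,1.25]$, with mesh widths $\Delta\mu,\Delta\omega,\Delta\nu,\Delta\tau$ small enough that every point of the domain lies within a box of half-widths $\Delta\mu,\Delta\omega,\Delta\nu,\Delta\tau$ around some grid point. At each grid point $(\mu_k,\omega_l,\nu_m,\tau_n)$ I would evaluate the closed-form expression Eq.~\eqref{eq:S} for the largest singular value and check the strengthened inequality
\begin{align}
S(\mu_k,\omega_l,\nu_m,\tau_n,\lambda_{\rm 01},\alpha_{\rm 01}) \ + \ 0.32112\,\Delta\mu + 2.63690\,\Delta\omega + 2.28242\,\Delta\nu + 2.98610\,\Delta\tau \ < \ 1 ,
\end{align}
whose coefficients are exactly the derivative bounds established in Lemma~\ref{lem:Ds1Bounds}.

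To conclude, for an arbitrary $(\mu,\omega,\nu,\tau)$ in the domain I would take the nearest grid point and invoke Lemma~\ref{lem:meanValue}: the value of $S$ at the arbitrary point differs from its value at the grid point by at most $0.32112\,\Delta\mu + 2.63690\,\Delta\omega + 2.28242\,\Delta\nu + 2.98610\,\Delta\tau$, so the displayed grid inequality forces $S(\mu,\omega,\nu,\tau,\lambda_{\rm 01},\alpha_{\rm 01})<1$ on all of the (compact) domain. Since $S$ is the spectral norm of the Jacobian $\mathcal H$ of $g$, the mean value theorem for vector-valued maps then gives $\| g(\bm u)-g(\bm v)\|_2 \le \delta\,\| \bm u-\bm v\|_2$ with $\delta=\sup_\Omega S<1$, i.e. $g$ is a contraction mapping, which is the assertion.

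The step I expect to be the main obstacle is controlling the numerical reliability of the grid evaluation rather than any analytic difficulty: the mesh must be fine enough that the worst grid value of $S$ plus the mean-value slack still clears $1$ --- and because the supremum of $S$ over the enlarged range $\tau\in[0.8,1.25]$ is noticeably above the value $\approx 0.788$ at the fixed point, this requires a very fine grid and hence an actual machine computation --- while at the same time the evaluation of Eq.~\eqref{eq:S}, which involves $\erf$, $\erfc$ and nearly cancelling exponentials, is subject to floating-point error. The proof therefore has to come with a rounding-error analysis of the implemented routine on the specific hardware, bounding $|S_{\text{computed}} - S_{\text{true}}|$, and this bound must be absorbed into the margin of the grid inequality. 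That error analysis is supplied in Subsection~\ref{sec:error}; granting it, checking the inequality at the grid points together with Lemma~\ref{lem:meanValue} completes the proof.
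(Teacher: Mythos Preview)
Your proposal is correct and matches the paper's proof essentially point for point: the paper fixes specific mesh widths $\Delta\mu=0.0068097371$, $\Delta\omega=0.0008292885$, $\Delta\nu=0.0009580840$, $\Delta\tau=0.0007323095$, evaluates $S$ on the resulting grid to obtain a maximum of $0.9912524171058772$, invokes Lemma~\ref{lem:meanValue} with the Lemma~\ref{lem:Ds1Bounds} coefficients to bound the off-grid deviation by $0.008747$, and appeals to the error analysis in Subsection~\ref{sec:error} for the floating-point precision, exactly as you outline. The only cosmetic difference is that the paper computes $\max_{\text{grid}}S$ first and then adds the slack, whereas you phrase it as checking the strengthened inequality at each grid point; these are equivalent.
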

\begin{proof}
We set 
$\Delta \mu=0.0068097371$,
$\Delta \omega=0.0008292885$,
$\Delta \nu=0.0009580840$, and
$\Delta \tau=0.0007323095$.



According to Lemma~\ref{lem:meanValue} we have
\begin{align}
\label{eq:bb1}
&\left|S(\mu + \Delta \mu,\omega + \Delta \omega,\nu + \Delta
  \nu,\tau + \Delta \tau,\lambda_{\rm 01},\alpha_{\rm 01}) 
\ - \ 
S(\mu,\omega,\nu,\tau,\lambda_{\rm 01},\alpha_{\rm 01}) \right|
\ < \\ \nonumber
&0.32112 \cdot 0.0068097371 + 2.63690 \cdot 0.0008292885 + \\ \nonumber
& 2.28242 \cdot 0.0009580840 + 2.98610 \cdot 0.0007323095\ < \ 0.008747 \ .
\end{align}

For a grid with grid length
$\Delta \mu=0.0068097371$,
$\Delta \omega=0.0008292885$,
$\Delta \nu=0.0009580840$, and
$\Delta \tau=0.0007323095$,
we evaluated the function  Eq.~\eqref{eq:S} 
for the largest singular value
in the domain
$\mu \in [-0.1,0.1]$,
$\omega \in [-0.1,0.1]$,
$\nu \in [0.8,1.5]$, and
$\tau \in [0.8,1.25]$.
We did this using a computer.
According to Subsection~\ref{sec:error}
the precision if regarding error propagation
and precision of the implemented functions is larger than
$10^{-13}$.
We performed the evaluation on different operating systems and
different hardware architectures including CPUs and GPUs.
In all cases the function  Eq.~\eqref{eq:S} for the largest singular
value of the Jacobian is bounded by $0.9912524171058772$.

We obtain from Eq.~\eqref{eq:bb1}:
\begin{align}
S(\mu + \Delta \mu,\omega + \Delta \omega,\nu + \Delta
  \nu,\tau + \Delta \tau,\lambda_{\rm 01},\alpha_{\rm 01})
\ \leq \ 0.9912524171058772 \ + \ 0.008747 \ < \ 1 \ .
\end{align}
\end{proof}

\subsubsection{Lemmata for proofing Theorem 1 (part 2): Mapping within domain}\index{mapping in domain}
\label{sec:maptoregion}

We further have to investigate whether the the mapping  Eq.~\eqref{eq:mappingMean}
and Eq.~\eqref{eq:mappingVar} maps into a predefined domains.

\begin{lemma}[Mapping into the domain]
\label{lem:region}
The mapping   Eq.~\eqref{eq:mappingMean}
and Eq.~\eqref{eq:mappingVar} map for 
 $\alpha = \alpha_{\rm 01}$ and $\lambda=\lambda_{\rm 01}$
into the domain
$\mu \in [-0.03106, 0.06773]$ and
$\nu \in [0.80009,1.48617]$ with $\omega \in [-0.1,0.1]$ and $\tau \in [0.95,1.1]$.
\end{lemma}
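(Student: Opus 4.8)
The mapping given by Eq.~\eqref{eq:mappingMean} and Eq.~\eqref{eq:mappingVar} maps, for $\alpha = \alpha_{01}$ and $\lambda = \lambda_{01}$, into the domain $\mu \in [-0.03106, 0.06773]$ and $\nu \in [0.80009, 1.48617]$ with $\omega \in [-0.1,0.1]$ and $\tau \in [0.95, 1.1]$.

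**Plan.** The idea is to show that $\munn$ and $\nunn$ attain their extrema on the boundary of the domain in $(\mu,\omega,\nu,\tau)$, then evaluate at the relevant corners. The key tool is Lemma~\ref{lem:mapDerivatives}: on the domain (with the restriction $\tau \in [0.95,1.1]$ now, a sub-interval of $[0.8,1.25]$), the derivative $\partial \munn/\partial\mu$ has the sign of $\omega$, $\partial\munn/\partial\nu$ is positive, $\partial\xinn/\partial\mu$ has the sign of $\omega$, and $\partial\xinn/\partial\nu$ is positive. Since $\munn$ and $\xinn$ depend on $\mu,\omega$ only through the product $y = \mu\omega$, and on $\nu,\tau$ only through the product $x = \nu\tau$, the monotonicity in $\mu$ with sign $\omega$ translates to monotonicity in $y$, and likewise the positivity of the $\nu$-derivative means monotone increasing in $x$. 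Hence the extrema over the box are found at the corners $y = \mu\omega \in \{-0.01, 0.01\}$ and $x = \nu\tau \in \{0.8 \cdot 0.95, 1.5\cdot 1.1\} = \{0.76, 1.65\}$ — but I must be careful: the claimed output interval for $\nu$ is $[0.80009,1.48617]$, which is narrower than what $x \in [0.76,1.65]$ would give if I naively substituted; I should substitute the actual corner values of $(\mu,\omega,\nu,\tau)$, not of the products, since $\nunn$ is not itself only a function of those products (the subtracted $(\munn)^2$ term mildly breaks this). So concretely: for the maximum of $\munn$, take $\mu\omega = 0.01$ and $\nu\tau$ maximal, i.e. $(\mu,\omega,\nu,\tau) = (0.1,0.1,1.5,1.1)$; for the minimum of $\munn$, take $\mu\omega = -0.01$ and $\nu\tau$ minimal, i.e. $(0.1,-0.1,0.8,0.95)$ (or the sign-flipped pair). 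Evaluate $\munn$ at these two points to get the bounds $-0.03106 \le \munn \le 0.06773$.

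For the variance bound, write $\nunn = \xinn - (\munn)^2$. The maximum of $\nunn$ is at most $\max \xinn - \min (\munn)^2 = \max\xinn - 0$, achieved (for $\xinn$) at $\mu\omega = 0.01$, $\nu\tau = 1.65$, i.e. $(0.1,0.1,1.5,1.1)$; the minimum of $\nunn$ is at least $\min\xinn - \max(\munn)^2$. Here $\max (\munn)^2 = \max(0.06773^2, 0.03106^2) = 0.06773^2 \approx 0.004587$, and $\min\xinn$ is at $\mu\omega = 0.01$, $\nu\tau = 0.76$, i.e. $(0.1,0.1,0.8,0.95)$. I then need to verify numerically that $\xinn(0.1,0.1,1.5,1.1,\lambda_{01},\alpha_{01}) \le 1.48617$ and $\xinn(0.1,0.1,0.8,0.95,\lambda_{01},\alpha_{01}) - 0.004587 \ge 0.80009$ — i.e. that the closed-form expressions in Eq.~\eqref{eq:mappingSecondMom} and Eq.~\eqref{eq:mappingMean}, evaluated at these finitely many corner points, produce the asserted numbers. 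This is a finite computation with $\erf$, $\erfc$, and $\exp$; I would carry it out with the same verified-precision numerics as in Subsection~\ref{sec:error} (precision better than $10^{-13}$), so the inequalities hold with ample margin.

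**Main obstacle.** The only real subtlety is that $\nunn$ (unlike $\munn$ and $\xinn$) is not a function of the products $\mu\omega$ and $\nu\tau$ alone, because of the $(\munn)^2$ correction; so I cannot directly invoke corner-reduction for $\nunn$ itself. The resolution is exactly the decomposition above: bound $\xinn$ and $(\munn)^2$ separately — each of which \emph{does} enjoy the corner-reduction via Lemma~\ref{lem:mapDerivatives} — and combine. One should double-check that the worst-case combination of corners for $\xinn$ and for $(\munn)^2$ used in the lower bound is genuinely the true minimum of $\nunn$ and not merely a valid-but-loose lower bound; given the tightness of the claimed constants ($0.80009$ versus a raw lower estimate), it is cleanest to also directly evaluate $\nunn$ at the handful of candidate corner points $(\pm0.1,\pm0.1,\{0.8,1.5\},\{0.95,1.1\})$ and take the min/max over that finite set, using the separated bound only to confirm no interior extremum is missed. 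After that, the proof reduces to reporting the numbers, and we may then feed Lemma~\ref{lem:region} together with Lemma~\ref{lem:sBound} into the Banach fixed point theorem as in the proof of Theorem~\ref{lem:fixedPoint}.
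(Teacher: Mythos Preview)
Your approach is essentially identical to the paper's: invoke Lemma~\ref{lem:mapDerivatives} to reduce $\munn$ and $\xinn$ to corner evaluations, then bound $\nunn = \xinn - (\munn)^2$ by combining the separate bounds. One small slip: since $\xinn$ is \emph{increasing} in $y=\mu\omega$, its minimum is at $\mu\omega=-0.01$ (e.g.\ $(\mu,\omega,\nu,\tau)=(-0.1,0.1,0.8,0.95)$), not at $\mu\omega=0.01$ as you wrote; the paper evaluates there to get $\xinn>0.80467$, and then $\nunn>0.80467-(\munn)^2>0.80467-0.004597>0.80009$.
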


\begin{proof}
We use Lemma~\ref{lem:mapDerivatives} which states that with given
sign the derivatives of the mapping Eq.~\eqref{eq:mappingMean}
and Eq.~\eqref{eq:mappingVar} with respect to  $\alpha = \alpha_{\rm 01}$
and $\lambda=\lambda_{\rm 01}$ are either positive or have the sign of
$\omega$.
Therefore with given sign of $\omega$ the mappings are strict monotonic and
the their maxima and minima are found at the borders.  The minimum of $\munn$ is obtained at
$\mu \omega = -0.01$ and its maximum at $\mu \omega=0.01$ and $\sigma$ and $\tau$ at their 
minimal and maximal values, respectively. It follows that:
\begin{align}
-0.03106 <\munn(-0.1,0.1, 0.8, 0.95, \lambda_{\rm 01}, \alpha_{\rm 01})  \leq  & \munn \leq \munn(0.1,0.1,1.5, 1.1, \lambda_{\rm 01}, \alpha_{\rm 01}) < 0.06773,
\end{align}
and that $\munn \in [-0.1,0.1]$.

Similarly, the maximum and minimum of $\xinn($ is obtained at the values mentioned above:
\begin{align}
0.80467 <\xinn(-0.1,0.1, 0.8, 0.95, \lambda_{\rm 01}, \alpha_{\rm 01})  \leq  & \xinn \leq \xinn(0.1,0.1,1.5, 1.1, \lambda_{\rm 01}, \alpha_{\rm 01}) < 1.48617. 
\end{align}
Since $|\xinn-\nunn| = |\munn^2| < 0.004597$, we can conclude that 
$0.80009 < \nunn < 1.48617$ and the variance remains in $[0.8,1.5]$.
\end{proof}

\begin{corollary}
 The image $g(\Omega')$ of the mapping $g:(\mu,\nu) \mapsto (\munn, \nunn)$ (Eq.~\eqref{eq:mapping})
 and the domain $\Omega'=\{(\mu,\nu) | -0.1 \leq \mu \leq 0.1, 0.8 \leq \mu \leq 1.5 \}$ is 
 a subset of $\Omega'$: 
 \begin{align} g(\Omega') \subseteq \Omega', \end{align}
 for all $\omega \in [-0.1,0.1]$ and $\tau \in [0.95,1.1]$.
\end{corollary}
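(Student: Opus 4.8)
The plan is to obtain this corollary as an immediate consequence of Lemma~\ref{lem:region}, which carries all the content; the corollary merely repackages that lemma in the $(\mu,\nu)$-coordinates used for the Banach fixed-point argument. First I would observe that the set $\Omega'=\{(\mu,\nu)\mid -0.1\leq\mu\leq 0.1,\ 0.8\leq\nu\leq 1.5\}$ is exactly the projection onto the $(\mu,\nu)$-plane of the four-variable domain on which Lemma~\ref{lem:region} is stated, once $\omega$ and $\tau$ are restricted to $[-0.1,0.1]$ and $[0.95,1.1]$ respectively. Hence for any point $(\mu,\nu)\in\Omega'$ and any admissible pair $(\omega,\tau)$ the hypotheses of Lemma~\ref{lem:region} are satisfied.

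Next I would invoke Lemma~\ref{lem:region} directly: it asserts that the mapping $g$ defined by Eq.~\eqref{eq:mappingMean} and Eq.~\eqref{eq:mappingVar} sends such a point to $(\munn,\nunn)$ with $\munn\in[-0.03106,0.06773]$ and $\nunn\in[0.80009,1.48617]$. It then only remains to check the two elementary numerical inclusions $[-0.03106,0.06773]\subseteq[-0.1,0.1]$ and $[0.80009,1.48617]\subseteq[0.8,1.5]$. From these we get $g(\mu,\nu)=(\munn,\nunn)\in\Omega'$, and since $(\mu,\nu)$ was an arbitrary element of $\Omega'$ and the bounds of Lemma~\ref{lem:region} hold uniformly in $\omega\in[-0.1,0.1]$, $\tau\in[0.95,1.1]$, we conclude $g(\Omega')\subseteq\Omega'$ for all such $\omega$ and $\tau$, as claimed.

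There is essentially no obstacle at the level of the corollary itself: the real work was already done in Lemma~\ref{lem:region}, whose proof rests on the monotonicity statements of Lemma~\ref{lem:mapDerivatives} (which reduce the extrema of $\munn$ and $\xinn$ over the box to corner evaluations at $\mu\omega=\pm 0.01$ and the extreme values of $\nu,\tau$) together with the bound $|\munn^2|<0.004597$ used to pass from the bounds on the second moment $\xinn$ to bounds on the variance $\nunn=\xinn-(\munn)^2$. The only point requiring mild care is bookkeeping of the parameter ranges — one must use $\tau\in[0.95,1.1]$ here (the range under which Lemma~\ref{lem:region} is proved), not the wider range $\tau\in[0.8,1.25]$ employed elsewhere for the contraction estimate — but the corollary is stated with precisely that range, so it follows verbatim.
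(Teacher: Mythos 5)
Your proposal is correct and matches the paper's approach: the paper's own proof is the one-liner ``Directly follows from Lemma~\ref{lem:region},'' and you invoke exactly that lemma, just spelling out the numerical inclusions $[-0.03106,0.06773]\subseteq[-0.1,0.1]$ and $[0.80009,1.48617]\subseteq[0.8,1.5]$ and the parameter-range bookkeeping that the paper leaves implicit.
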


\begin{proof}
 Directly follows from Lemma~\ref{lem:region}.
\end{proof}


\subsubsection{Lemmata for proofing Theorem 2: The variance is contracting}\index{contracting variance}

\paragraph{Main Sub-Function.}
We consider the main sub-function of the derivate of second moment, $J22$ (Eq.~\eqref{eq:JacobianEntries}):
\begin{align}
&\frac{\partial}{\partial \nu} \xinn= \frac{1}{2} \lambda ^2 \tau \left(- \alpha ^2 e^{\mu \omega+\frac{\nu \tau}{2}} \erfc \left(\frac{\mu \omega+\nu \tau}{\sqrt{2} \sqrt{\nu \tau}}\right) +  
    2 \alpha ^2 e^{2 \mu
    \omega+2 \nu \tau} \erfc \left(\frac{\mu \omega+2
      \nu \tau}{\sqrt{2} \sqrt{\nu
        \tau}}\right)-\erfc \left(\frac{\mu
      \omega}{\sqrt{2} \sqrt{\nu \tau}}\right)+2\right)
\end{align}

that depends on $\mu \omega$ and $\nu \tau$, therefore we 
set  $x=\nu \tau$ and $y=\mu \omega$. Algebraic reformulations provide the 
formula in the following form:

\begin{align}
& \frac{\partial}{\partial\nu}\xinn\ = 
\frac{1}{2}\lambda^{2}\tau
\left(\alpha^{2}\left(-e^{-\frac{y^{2}}{2x}}\right) \left(e^{\frac{(x+y)^{2}}{2x}}\erfc\left(\frac{y+x}{\sqrt{2}\sqrt{x}}\right)-2e^{\frac{(2x+y)^{2}}{2x}}\erfc\left(\frac{y+2x}{\sqrt{2}\sqrt{x}}\right)\right)
- \erfc\left(\frac{y}{\sqrt{2}\sqrt{x}}\right)+ 2\right)
\end{align}

For $\lambda=\lambda_{\rm 01}$ and 
$\alpha=\alpha_{\rm 01}$,
we consider the domain
$-1 \leq \mu \leq 1$, 
$-0.1 \leq \omega \leq 0.1$,
$1.5 \leq \nu \leq 16$, and,
$0.8 \leq \tau \leq 1.25$.

For $x$ and $y$ we obtain: $0.8 \cdot 1.5=1.2 \leq x \leq 20=1.25 \cdot 16$ and 
$0.1 \cdot (-1)= -0.1 \leq y \leq 0.1=0.1 \cdot 1$.
In the following we assume to remain within this domain.

\begin{lemma}[Main subfunction]
\label{lem:subfunction}

For $1.2 \leq x \leq 20$ and $-0.1 \leq y \leq 0.1$, 

the function
\begin{align}
\label{eq:subfunction}
e^{\frac{(x+y)^2}{2 x}} \erfc \left(\frac{x+y}{\sqrt{2} \sqrt{x}}\right)-2 e^{\frac{(2 x+y)^2}{2 x}} \erfc \left(\frac{2 x+y}{\sqrt{2} \sqrt{x}}\right)
\end{align}
is smaller than zero, is strictly monotonically increasing in $x$,
and strictly monotonically decreasing in $y$ for the minimal $x=12/10=1.2$.
\end{lemma}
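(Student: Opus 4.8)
With the substitution $x=\nu\tau$, $y=\mu\omega$ already in force, write $\psi(z):=e^{z^2}\erfc(z)$ and set
\[
a \ := \ \frac{x+y}{\sqrt{2x}}, \qquad b \ := \ \frac{2x+y}{\sqrt{2x}},
\]
so that \eqref{eq:subfunction} is exactly $\psi(a)-2\psi(b)$, with $b-a=\sqrt{x/2}>0$ and $b=2a-\tfrac{y}{\sqrt{2x}}$. On the box $1.2\le x\le 20$, $|y|\le 0.1$ one checks $a\in[0.71,3.18]$ and $b\ge\sqrt{2.4}>1.5$, so both arguments are positive. I will use three elementary facts about $\psi$ on $[0,\infty)$: the derivative identity $\psi'(z)=2z\psi(z)-\tfrac{2}{\sqrt\pi}$; the two-sided Abramowitz bound $\tfrac{2}{\sqrt\pi(z+\sqrt{z^2+2})}\le\psi(z)\le\tfrac{2}{\sqrt\pi(z+\sqrt{z^2+4/\pi})}$ (Lemma~\ref{lem:Abramowitz}); and monotone increase of $z\mapsto z\psi(z)=ze^{z^2}\erfc z$ (Lemma~\ref{lem:xeErfc}). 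Writing $r(z):=\sqrt\pi\,z\psi(z)$, the Abramowitz upper bound gives $r(z)<1$, hence $\psi'(z)=\tfrac{2}{\sqrt\pi}\bigl(r(z)-1\bigr)<0$, and $r$ is increasing and bounded by $1$.

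\textbf{Negativity ($\psi(a)<2\psi(b)$).} Bound $\psi(a)$ above and $\psi(b)$ below by the Abramowitz values; it then suffices to prove $b+\sqrt{b^2+2}<2a+2\sqrt{a^2+4/\pi}$. Since $b\le 2a+\tfrac{0.1}{\sqrt{2.4}}$, this is an inequality in the single variable $a\in[0.71,3.18]$; isolating the $\sqrt{b^2+2}$ term (its coefficient on the right stays positive because $\sqrt{a^2+4/\pi}>1$) and squaring once reduces it to a polynomial inequality in $a$, with the remaining $\sqrt{a^2+4/\pi}$ bounded numerically on the interval. The slack here is of order $1$, so this part is routine.

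\textbf{Monotonicity in $x$.} Fix $y$ and differentiate: with $a_x=\tfrac{x-y}{2\sqrt2\,x^{3/2}}$, $b_x=\tfrac{2x-y}{2\sqrt2\,x^{3/2}}$ and the $\psi'$ identity, $\partial_x(\psi(a)-2\psi(b))$ has the same sign as
\[
\frac{6x-2y}{\sqrt\pi} \ - \ \bigl(4(2x-y)\,b\psi(b)-2(x-y)\,a\psi(a)\bigr),
\]
so $\partial_x(\cdots)>0$ is equivalent to $4(2x-y)\,r(b)-2(x-y)\,r(a)<6x-2y$, i.e. (using $6x-2y=4(2x-y)-2(x-y)$) to
\[
\frac{1-r(b)}{1-r(a)} \ > \ \frac{x-y}{2(2x-y)}.
\]
Now bound $r(b)$ from above and $r(a)$ from below by their Abramowitz values, turning this into a purely algebraic inequality in $(x,y)$, which I would verify over the box $[1.2,20]\times[-0.1,0.1]$ (bounding $y$ at $\pm 0.1$ and subdividing the $x$-range as needed). \emph{This is the main obstacle}: as $x\to\infty$ both sides of the displayed inequality tend to $\tfrac14$, so the estimate is nearly sharp already on $[1.2,20]$; the crude facts $\psi'<0$ or $r<1$ are useless here and one must retain the sharp constant $4/\pi$ (and, as in the proof of Theorem~2, possibly replace $\sqrt{b^2+4/\pi}$ by a completed square) to keep enough precision to close the gap.

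\textbf{Monotonicity in $y$ at $x=1.2$.} Here $a_y=b_y=\tfrac{1}{\sqrt{2x}}$, so $\partial_y(\psi(a)-2\psi(b))=\tfrac{1}{\sqrt{2x}}\bigl(\psi'(a)-2\psi'(b)\bigr)$; since both $\psi'$ are negative, $\partial_y(\cdots)<0$ is equivalent to $|\psi'(a)|>2|\psi'(b)|$, i.e. to $2r(b)-r(a)>1$ at $x=1.2$. A short computation with $r'(z)=\sqrt\pi\psi(z)(1+2z^2)-2z$ shows $\tfrac{d}{dy}\bigl(2r(b)-r(a)\bigr)=\tfrac{1}{\sqrt{2.4}}\bigl(2r'(b)-r'(a)\bigr)<0$ on $|y|\le0.1$, so it suffices to verify $2r(b)-r(a)>1$ at the single point $y=0.1$, i.e. at $a=\tfrac{1.3}{\sqrt{2.4}}$, $b=\tfrac{2.5}{\sqrt{2.4}}$, bounding $\psi$ above and below by Abramowitz. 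The margin there is again small ($\approx 0.04$), which forces the sharp constants but presents no further difficulty once the monotonicity-in-$y$ step has been isolated.
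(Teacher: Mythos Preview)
Your reduction is considerably cleaner than the paper's --- writing $r(z)=\sqrt\pi\,z\psi(z)$ and reducing $\partial_x>0$ to the ratio inequality
\[
\frac{1-r(b)}{1-r(a)}\ >\ \frac{x-y}{2(2x-y)}
\]
is exactly the right structural move, and your treatment of negativity and of the $y$-derivative at $x=1.2$ is correct in outline (for negativity the paper takes the more economical route: once monotonicity in $x$ is proved, the limit $x\to\infty$ gives $0$, so the function lies below $0$ everywhere).

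The genuine gap is in the $x$-monotonicity step, and it is not just ``nearly sharp'' --- your proposed bound is in fact the wrong way. Abramowitz gives $1-r(b)\ge\dfrac{4/\pi}{(b+\sqrt{b^2+4/\pi})^2}$ and $1-r(a)\le\dfrac{2}{(a+\sqrt{a^2+2})^2}$, so the resulting lower bound on the ratio behaves like $\dfrac{2}{\pi}\cdot\dfrac{a^2}{b^2}\to\dfrac{1}{2\pi}\approx 0.159$ as $x\to\infty$, which is \emph{strictly below} the target $\tfrac14$. Concretely at $x=20$, $y=0$ (so $a=\sqrt{10}$, $b=\sqrt{40}$) one gets the Abramowitz ratio $\approx 0.17$, whereas you need $>0.25$: the purely algebraic inequality you would be trying to verify on the box is simply false near the right edge, so no amount of subdivision or completing the square under the radical will rescue it.

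This is exactly why the paper abandons the Abramowitz bounds here and instead uses the sharper single-term approximation of Ren and MacKenzie, $\psi(z)\approx \dfrac{2.911}{\sqrt\pi\,(1.911)z+\sqrt{\pi z^2+2.911^2}}$, together with a numerically established two-sided error bound on the full combination (for the $x$-derivative the error is shown to lie in $[0.0114,0.0170]$, for the $y$-derivative at $x=1.2$ in $[-0.164,0.164]$). After subtracting/adding a safety margin, the remaining expression is rational in $\tau$ (or $y$) and the paper carries out an explicit, if lengthy, chain of polynomial estimates to finish. So your framework is sound, but to close the argument you must either import a tighter approximation of $\psi$ with controlled two-sided error (as the paper does), or exploit cancellation between the Abramowitz errors at $a$ and $b$ that a term-by-term bound throws away.
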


\begin{proof}
See proof~\ref{proof:mainsubfunction}.
\end{proof}

The graph of the subfunction in the specified domain is displayed in Figure~\ref{fig:subfunction}.

\begin{figure}
 \centering
 \includegraphics[width=0.48\textwidth]{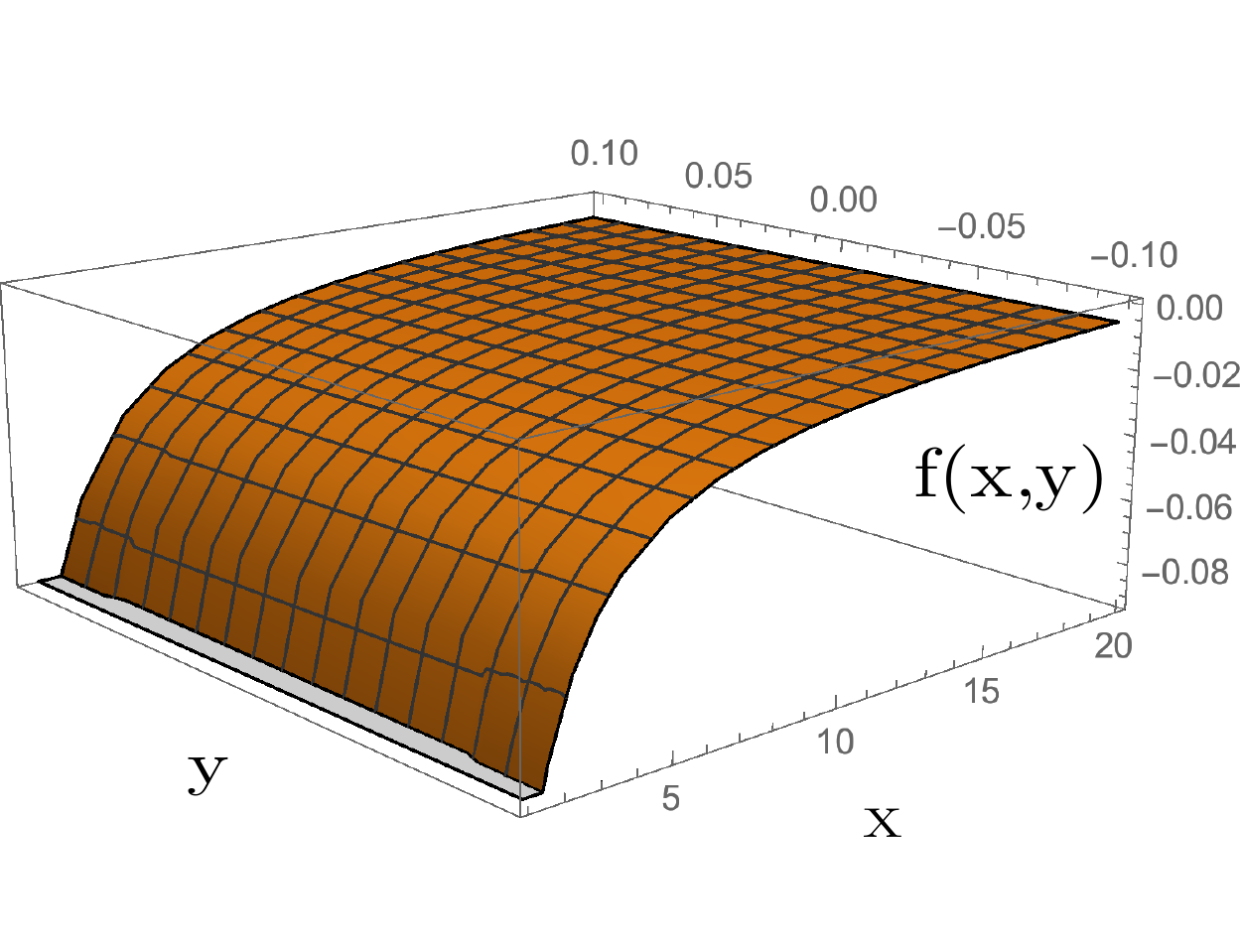}
 \includegraphics[width=0.48\textwidth]{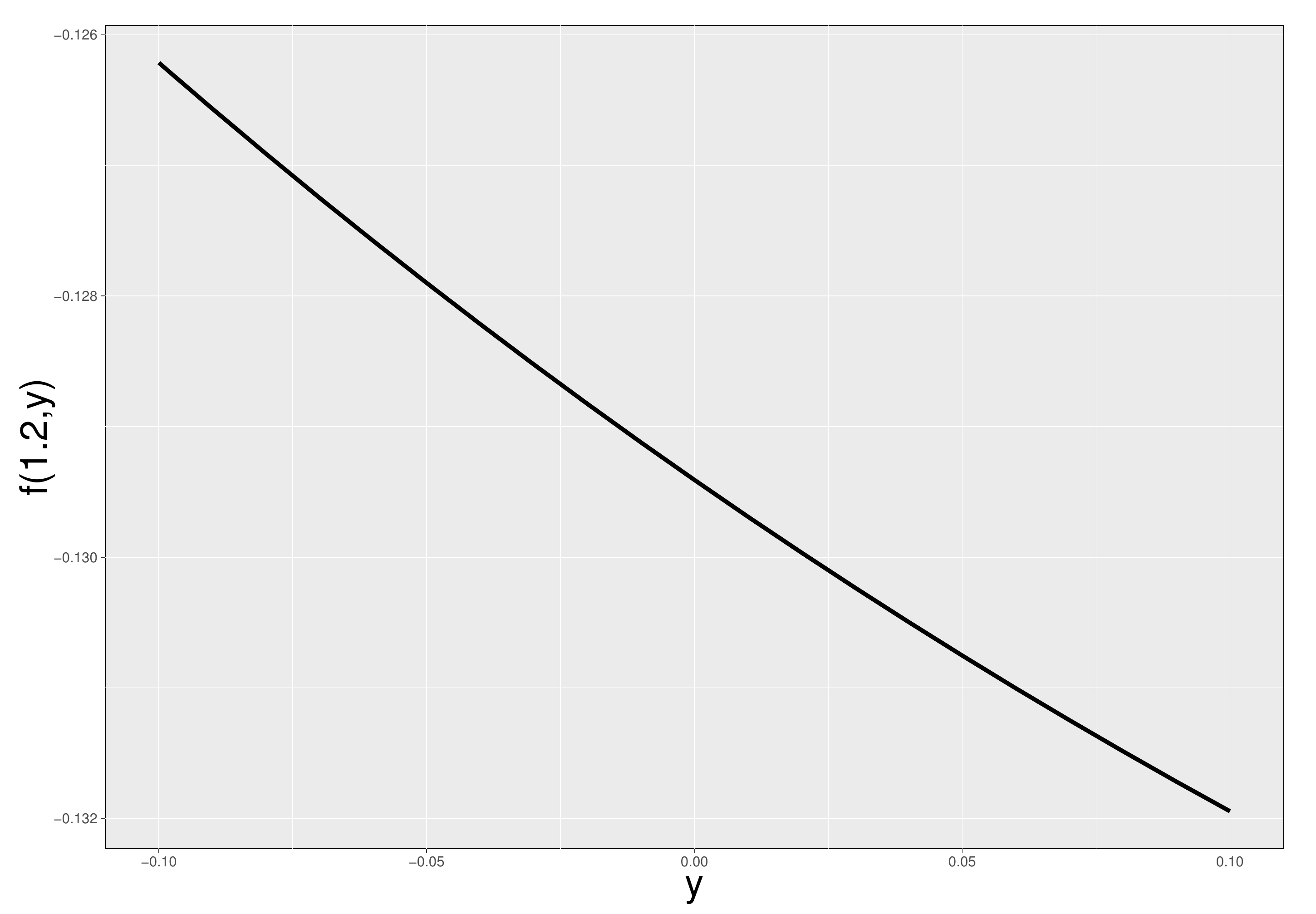}
 \caption[Graph of the main subfunction of the derivative of the second moment]{{\bf Left panel:} Graphs of the main subfunction $f(x,y)=e^{\frac{(x+y)^2}{2 x}} \erfc \left(\frac{x+y}{\sqrt{2} \sqrt{x}}\right)-2 e^{\frac{(2 x+y)^2}{2 x}} \erfc \left(\frac{2 x+y}{\sqrt{2} \sqrt{x}}\right)$
  treated in Lemma~\ref{lem:subfunction}. The function is negative and monotonically increasing with $x$ independent of $y$. 
   {\bf Right panel:}  Graphs of the main subfunction at minimal $x=1.2$. The graph shows that the function $f(1.2,y)$ is strictly monotonically decreasing in $y$.
  \label{fig:subfunction}}
\end{figure}

\begin{theorem}[Contraction $\nu$-mapping]
\label{th:s2Cont}
The mapping of the variance $\nunn(\mu, \omega, \nu, \tau,\lambda,\alpha )$  given in Eq.~\eqref{eq:mappingVar}
is contracting for 
$\lambda=\lambda_{\rm 01}$, $\alpha=\alpha_{\rm 01}$
and the domain $\Omega^+$: 
$-0.1 \leq \mu \leq 0.1$, 
$-0.1 \leq \omega \leq 0.1$,
$1.5 \leq \nu \leq 16$, and 
$0.8 \leq \tau \leq 1.25$, that is,
\begin{align}
&\left| \frac{\partial}{\partial \nu}\nunn(\mu,\omega,\nu,\tau,\lambda_{\rm 01},\alpha_{\rm 01}) \right| \ < \ 1 \ .
\end{align}
\end{theorem}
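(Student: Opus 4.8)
The plan is to start from the identity $\frac{\partial}{\partial\nu}\nunn = {\mathcal H}_{22} = {\mathcal J}_{22} - 2\munn\,{\mathcal J}_{12}$ (from Eq.~\eqref{eq:JacobianEntries} and the definition of $\mathcal H$ in terms of $\mathcal J$), and then to show that on $\Omega^+$ each of the three pieces ${\mathcal J}_{22}=\frac{\partial}{\partial\nu}\xinn$, $\munn$ and ${\mathcal J}_{12}=\frac{\partial}{\partial\nu}\munn$ has a fixed sign, so that $\frac{\partial}{\partial\nu}\nunn$ is squeezed between $-2\munn{\mathcal J}_{12}$ (a small negative quantity) and ${\mathcal J}_{22}$. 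Throughout I would write $x=\nu\tau\in[1.2,20]$ and $y=\mu\omega\in[-0.01,0.01]$, using that $\munn$ and $\xinn$ depend on the four variables only through $(x,y)$ while the $\tau$ in front of ${\mathcal J}_{22}$ and ${\mathcal J}_{12}$ is an overall multiplier, so that $\bigl|\frac{\partial}{\partial\nu}\nunn\bigr|\leq 1.25\,\bigl|p(x,y)-2\munn(x,y)q(x,y)\bigr|$ with ${\mathcal J}_{22}=\tau p$, ${\mathcal J}_{12}=\tau q$.

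The sign facts come first. Rewriting ${\mathcal J}_{22}=\frac12\lambda^2\tau\bigl(-\alpha^2 e^{-y^2/(2x)}\phi(x,y)-\erfc(\tfrac{y}{\sqrt{2x}})+2\bigr)$, where $\phi(x,y)$ is exactly the main subfunction of Lemma~\ref{lem:subfunction}, the properties $\phi<0$ and $\erfc(\cdot)<2$ give ${\mathcal J}_{22}>0$. For $\munn\geq 0$ and ${\mathcal J}_{12}>0$ on $\Omega^+$ I would invoke the monotonicity statements of Lemma~\ref{lem:mapDerivatives} (whose proof uses only the signs of the integrands and carries over to $\nu\in[1.5,16]$): $\munn$ is increasing in $x$ and in $y$, hence minimal at $(x,y)=(1.2,-0.01)$, where a direct evaluation gives a small but strictly positive value; and ${\mathcal J}_{12}$ has the sign of its bracket, which after the substitution $u=\tfrac{x+y}{\sqrt{2x}}$ becomes $\alpha e^{u^2}\erfc(u)-(\alpha-1)\sqrt{\tfrac{2}{\pi x}}>0$, verified from the standard bounds on $e^{u^2}\erfc(u)$ (Lemma~\ref{lem:Abramowitz}) for $x\geq\alpha$ and by direct evaluation on the short interval $[1.2,\alpha]$.

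Given the signs, the bound is two-sided and short. On the upper side $2\munn{\mathcal J}_{12}\geq 0$, so $\frac{\partial}{\partial\nu}\nunn\leq{\mathcal J}_{22}=\frac12\lambda^2\tau\bigl(\alpha^2 e^{-y^2/(2x)}|\phi(x,y)|+2-\erfc(\tfrac{y}{\sqrt{2x}})\bigr)$; Lemma~\ref{lem:subfunction} says $|\phi|$ is largest at the smallest $x$ and there at the largest $y$, so $|\phi(x,y)|\leq|\phi(1.2,0.1)|$, and together with $e^{-y^2/(2x)}\leq 1$, monotonicity of $\erfc$, and $\tau\leq1.25$ this gives ${\mathcal J}_{22}<1$. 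On the lower side ${\mathcal J}_{22}>0$, so $\frac{\partial}{\partial\nu}\nunn\geq-2\munn{\mathcal J}_{12}\geq-2\bigl(\sup_{\Omega^+}\munn\bigr)\bigl(\sup_{\Omega^+}{\mathcal J}_{12}\bigr)$, and a routine corner analysis (the former maximal at $(x,y)=(20,0.01)$, the latter controlled using $x=\nu\tau\geq1.5\tau$) shows this is comfortably above $-1$. Combining the two sides yields $\bigl|\frac{\partial}{\partial\nu}\nunn\bigr|<1$ on $\Omega^+$; note that the same estimate gives $0<\frac{\partial}{\partial\nu}\xinn<1$, which is the form used in the proof of Theorem~\ref{th:s2Decrease}.

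The main obstacle is that the upper bound is genuinely tight: the naive estimate for ${\mathcal J}_{22}$ that treats $x\geq1.2$ and $\tau\leq1.25$ as independent already gives roughly $0.998$, so the slightest slack is fatal. This forces the proof to use the sign information $2\munn{\mathcal J}_{12}\geq 0$ — equivalently $\munn\geq 0$ everywhere on $\Omega^+$ — and that fact is itself tight, since $\min_{\Omega^+}\munn$ is only of order $10^{-2}$, so it must be proved carefully on the enlarged range $\nu\in[1.5,16]$, outside the scope of the earlier bound lemmas. Pinning down the sharp constant $|\phi(1.2,0.1)|$ from Lemma~\ref{lem:subfunction} is the other delicate ingredient; if extra margin on the upper side is wanted, one can instead exploit $x=\nu\tau\geq1.5\tau$ so that a large prefactor $\tau$ forces a larger $x$ and hence a smaller $|\phi|$, and then maximize the resulting one-variable bound over $\tau\in[0.8,1.25]$.
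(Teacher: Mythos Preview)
Your proposal is correct and follows essentially the same route as the paper: decompose $\frac{\partial}{\partial\nu}\nunn={\mathcal J}_{22}-2\munn{\mathcal J}_{12}$, establish the signs ${\mathcal J}_{22}>0$, $\munn>0$, ${\mathcal J}_{12}>0$ on $\Omega^+$, and then bound ${\mathcal J}_{22}<1$ by rewriting it in terms of the main subfunction of Lemma~\ref{lem:subfunction} and evaluating at the worst corner $x=1.2$, $y=0.1$, $\tau=1.25$ (the paper obtains ${\mathcal J}_{22}\leq 0.995063$). The paper is slightly terser than you on the lower side, writing $|\,{\mathcal J}_{22}-2\munn{\mathcal J}_{12}\,|<|{\mathcal J}_{22}|$ directly without separately arguing that the subtraction cannot overshoot past $-{\mathcal J}_{22}$; your explicit treatment of the lower bound via $-2(\sup\munn)(\sup{\mathcal J}_{12})$ is a small improvement in rigor rather than a different idea.
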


\begin{proof}
In this domain $\Omega^+$ we have the following three properties (see further below):
$\frac{\partial}{\partial \nu} \xinn < 1$, $\munn > 0$,
and $\frac{\partial}{\partial \nu} \munn > 0$. Therefore, we have

\begin{align}
 & \left| \frac{\partial}{\partial \nu} \nunn \right| = \left| \frac{\partial}{\partial \nu} \xinn- 2 \munn \frac{\partial}{\partial \nu} \munn \right| 
  < \left| \frac{\partial}{\partial \nu} \xinn \right| < 1 
\end{align}

\begin{itemize}
\item We first proof that $\frac{\partial}{\partial \nu} \xinn < 1$ in an even larger domain that fully contains $\Omega^+$.
According to  Eq.~\eqref{eq:JacobianEntries},
the derivative of the mapping  Eq.~\eqref{eq:mappingVar} 
with respect to the variance $\nu$ is
\begin{align}
\label{eq:s2Ds2}
&\frac{\partial}{\partial \nu} \xinn(\mu,\omega,\nu,\tau,\lambda_{\rm 01},\alpha_{\rm 01}) \ = \\ \nonumber 
&\frac{1}{2} \lambda ^2 \tau \left(\alpha ^2 \left(-e^{\mu \omega+\frac{\nu \tau}{2}}\right) \erfc
 \left(\frac{\mu \omega+\nu \tau}{\sqrt{2} \sqrt{\nu \tau}}\right)+\right. \\ \nonumber
&\left. 2 \alpha ^2 e^{2 \mu \omega+2 \nu \tau} \erfc \left(\frac{\mu \omega +2 \nu \tau}{\sqrt{2} \sqrt{\nu \tau}}\right)- 
   \erfc \left(\frac{\mu \omega}{\sqrt{2} \sqrt{\nu \tau}}\right)+2\right) \ .
\end{align}

For 
$\lambda=\lambda_{\rm 01}$, $\alpha=\alpha_{\rm 01}$, 
$-1 \leq \mu \leq 1$, 
$-0.1 \leq \omega \leq 0.1$ 
$1.5 \leq \nu \leq 16$, and 
$0.8 \leq \tau \leq 1.25$, we first show that the derivative is positive
and then upper bound it.

According to  Lemma~\ref{lem:subfunction}, the expression
\begin{align}
e^{\frac{(\mu \omega +\nu \tau)^2}{2 \nu \tau}} \erfc \left(\frac{\mu \omega+\nu \tau}{\sqrt{2} \sqrt{\nu \tau}}\right)- 
 2 e^{\frac{(\mu \omega +2 \nu \tau)^2}{2 \nu \tau}} \erfc\left(\frac{\mu \omega+2 \nu \tau}{\sqrt{2} \sqrt{\nu \tau}}\right) 
\end{align}
is negative. This expression multiplied by positive factors is
subtracted in the derivative Eq.~\eqref{eq:s2Ds2}, therefore, the
whole term is positive.
The remaining term
\begin{align}
2-\erfc\left(\frac{\mu \omega}{\sqrt{2} \sqrt{\nu \tau}}\right)
\end{align}
of the derivative Eq.~\eqref{eq:s2Ds2}
is also positive according to Lemma~\ref{lem:basics}.
All factors outside the brackets in  Eq.~\eqref{eq:s2Ds2} are
positive. Hence, the derivative Eq.~\eqref{eq:s2Ds2} is positive.

The upper bound of the derivative is:
\begin{align}
&\frac{1}{2} \lambda_{\rm 01}^2 \tau \left(\alpha_{\rm 01}^2 \left(-e^{\mu \omega+\frac{\nu \tau}{2}}\right)
  \erfc\left(\frac{\mu \omega+\nu \tau}{\sqrt{2}
  \sqrt{\nu \tau}}\right)+\right. \\ \nonumber 
& \left. 2 \alpha_{\rm 01}^2 e^{2 \mu \omega+2 \nu \tau} \erfc \left(\frac{\mu \omega+2 \nu \tau}{\sqrt{2} \sqrt{\nu \tau}}\right) 
 - \erfc\left(\frac{\mu \omega}{\sqrt{2} \sqrt{\nu \tau}}\right)+2\right)\ = \\ \nonumber 
&\frac{1}{2} \lambda_{\rm 01}^2 \tau \left(\alpha_{\rm 01}^2 \left(-e^{-\frac{\mu^2 \omega^2}{2 \nu \tau}}\right) 
\left(e^{\frac{(\mu \omega+\nu \tau)^2}{2 \nu \tau}} \erfc\left(\frac{\mu \omega+\nu \tau}{\sqrt{2} \sqrt{\nu \tau}}\right)-\right.\right. \\ \nonumber 
&\left. \left. 2 e^{\frac{(\mu \omega+2 \nu \tau)^2}{2 \nu \tau}} \erfc\left(\frac{\mu \omega+2 \nu \tau}{\sqrt{2} \sqrt{\nu \tau}}\right)\right)-\erfc\left(\frac{\mu \omega}{\sqrt{2} \sqrt{\nu \tau}}\right)+2\right)\ \leq \\ \nonumber 
& \frac{1}{2} 1.25 \lambda_{\rm 01}^2 \left(\alpha_{\rm 01}^2 \left(-e^{-\frac{\mu^2 \omega^2}{2 \nu \tau}}\right) 
\left(e^{\frac{(\mu \omega+\nu \tau)^2}{2 \nu \tau}} \erfc\left(\frac{\mu \omega+\nu \tau}{\sqrt{2} \sqrt{\nu \tau}}\right)-\right.\right. \\ \nonumber 
& \left. \left. 2 e^{\frac{(\mu \omega+2 \nu \tau)^2}{2 \nu \tau}} \erfc\left(\frac{\mu \omega+2 \nu \tau}{\sqrt{2} \sqrt{\nu \tau}}\right)\right) - 
\erfc\left(\frac{\mu \omega}{\sqrt{2} \sqrt{\nu \tau}}\right)+2\right) \ \leq \\ \nonumber 
& \frac{1}{2} 1.25 \lambda_{\rm 01}^2 \left(\alpha_{\rm 01}^2 \left(e^{\left(\frac{1.2 +0.1}{\sqrt{2} \sqrt{1.2}}\right)^2} 
\erfc \left(\frac{1.2 +0.1}{\sqrt{2} \sqrt{1.2}}\right)-\right.\right. \\ \nonumber 
& \left. \left. 2 e^{\left(\frac{2 \cdot 1.2+0.1}{\sqrt{2} \sqrt{1.2}}\right)^2} \erfc\left(\frac{2 \cdot 1.2+0.1}{\sqrt{2} \sqrt{1.2}}\right)\right) 
\left(-e^{-\frac{\mu^2 \omega^2}{2 \nu \tau}}\right)-\erfc\left(\frac{\mu \omega}{\sqrt{2} \sqrt{\nu \tau}}\right)+2\right) \ \leq \\ \nonumber 
& \frac{1}{2} 1.25 \lambda_{\rm 01}^2 \left(-e^{0.0} \alpha_{\rm 01}^2 \left(e^{\left(\frac{1.2 +0.1}{\sqrt{2} \sqrt{1.2}}\right)^2} 
\erfc \left(\frac{1.2 +0.1}{\sqrt{2} \sqrt{1.2}}\right)-\right.\right. \\ \nonumber & \left. \left. 2 e^{\left(\frac{2 \cdot 1.2+0.1}{\sqrt{2} \sqrt{1.2}}\right)^2} 
\erfc\left(\frac{2 \cdot 1.2+0.1}{\sqrt{2} \sqrt{1.2}}\right)\right)-\erfc\left(\frac{\mu \omega}{\sqrt{2} \sqrt{\nu \tau}}\right)+2\right) \ \leq \\ \nonumber 
& \frac{1}{2} 1.25 \lambda_{\rm 01}^2 \left(-e^{0.0} \alpha_{\rm 01}^2 \left(e^{\left(\frac{1.2 +0.1}{\sqrt{2} \sqrt{1.2}}\right)^2} 
\erfc \left(\frac{1.2 +0.1}{\sqrt{2} \sqrt{1.2}}\right)-\right.\right. \\ \nonumber 
& \left. \left. 2 e^{\left(\frac{2 \cdot 1.2+0.1}{\sqrt{2} \sqrt{1.2}}\right)^2} \erfc\left(\frac{2 \cdot 1.2+0.1}{\sqrt{2} \sqrt{1.2}}\right)\right)
-\erfc\left(\frac{0.1}{\sqrt{2} \sqrt{1.2}}\right)+2\right) \ \leq \\ \nonumber 
&0.995063 \ < \ 1 \ .
\end{align}
We explain the chain of inequalities:
\begin{itemize}
\item First equality brings the expression
into a shape where we can apply  Lemma~\ref{lem:subfunction} for the
the function Eq.~\eqref{eq:subfunction}.
\item First inequality: The overall factor $\tau$ is bounded by 1.25.
\item Second inequality: We apply Lemma~\ref{lem:subfunction}.
 According to Lemma~\ref{lem:subfunction} the function Eq.~\eqref{eq:subfunction} is negative.
The largest contribution is to subtract the most negative value of 
the function Eq.~\eqref{eq:subfunction}, that is, the minimum of 
function Eq.~\eqref{eq:subfunction}.
According to Lemma~\ref{lem:subfunction} the function
Eq.~\eqref{eq:subfunction} is strictly monotonically increasing in $x$
and strictly monotonically decreasing in $y$ for $x=1.2$.
Therefore the function Eq.~\eqref{eq:subfunction} has its minimum 
at minimal $x=\nu \tau=1.5 \cdot 0.8=1.2$ 
and maximal $y=\mu \omega=1.0 \cdot 0.1=0.1$. We insert these values into
the expression.

\item Third inequality: We use for the whole expression 
the maximal factor 
$e^{-\frac{\mu^2 \omega^2}{2 \nu \tau}}<1$ by setting this
factor to 1.
\item Fourth inequality: $\erfc$ is strictly monotonically
  decreasing. Therefore we maximize its argument to obtain the least
  value which is subtracted. We use the minimal $x=\nu
  \tau=1.5 \cdot 0.8=1.2$ and the maximal $y=\mu \omega=1.0 \cdot 0.1=0.1$.
\item Sixth inequality: evaluation of the terms.
\end{itemize}

\item We now show that $\munn > 0$. The expression
$\munn(\mu,\omega,\nu,\tau)$ (Eq.~\eqref{eq:mappingMean})
is strictly monotonically increasing im $\mu \omega$ and $\nu \tau$. Therefore,
the minimal value in $\Omega^+$ is obtained at 
$\munn(0.01,0.01,1.5,0.8)= 0.008293 > 0$. 

\item Last we show that $\frac{\partial}{\partial \nu} \munn > 0$.
The expression
$\frac{\partial}{\partial \nu}  \munn(\mu,\omega,\nu,\tau)={\mathcal J}_{12}(\mu,\omega,\nu,\tau)$ (Eq.~\eqref{eq:JacobianEntries})
can we reformulated as follows:

\begin{align}
&{\mathcal J}_{12}(\mu,\omega,\nu,\tau,\lambda ,\alpha ) \ =
\frac{\lambda  \tau  e^{-\frac{\mu ^2 \omega ^2}{2 \nu  \tau }} \left(\sqrt{\pi } \alpha  e^{\frac{(\mu  \omega +\nu  \tau )^2}{2 \nu  \tau }} 
\erfc \left(\frac{\mu  \omega +\nu  \tau }{\sqrt{2} \sqrt{\nu  \tau }}\right)-\frac{\sqrt{2} (\alpha -1)}{\sqrt{\nu  \tau }} \right)}{4 \sqrt{\pi } }
\end{align}
is larger than zero when the term $\sqrt{\pi } \alpha   e^{\frac{(\mu  \omega +\nu  \tau )^2}{2 \nu  \tau }} 
\erfc \left(\frac{\mu  \omega +\nu  \tau }{\sqrt{2} \sqrt{\nu  \tau }}\right)- \frac{\sqrt{2} (\alpha -1)}{\sqrt{\nu  \tau }}$
is larger than zero. This term obtains its minimal value 
at $\mu \omega = 0.01$ and $\nu \tau =16 \cdot 1.25$, which can easily be shown using the 
Abramowitz bounds (Lemma~\ref{lem:Abramowitz})
and evaluates to $0.16$, therefore ${\mathcal J}_{12} > 0$ in $\Omega^+$.

\end{itemize}
\end{proof}

%
%

\subsubsection{Lemmata for proofing Theorem 3: The variance is expanding}\index{expanding variance}

\paragraph{Main Sub-Function From Below.}

We consider functions in
$\mu \omega$ and $\nu \tau$, therefore we 
set  $x=\mu \omega$ and $y=\nu \tau$.

For $\lambda=\lambda_{\rm 01}$ and 
$\alpha=\alpha_{\rm 01}$,
we consider the domain
$-0.1 \leq \mu \leq 0.1$, 
$-0.1 \leq \omega \leq 0.1$ 
$0.00875 \leq \nu \leq 0.7$, and 
$0.8 \leq \tau \leq 1.25$.

For $x$ and $y$ we obtain: $0.8 \cdot 0.00875=0.007 \leq x \leq 0.875=1.25 \cdot 0.7$ and 
$0.1 \cdot (-0.1)= -0.01 \leq y \leq 0.01=0.1 \cdot 0.1$.
In the following we assume to be within this domain.

In this domain, we consider the main sub-function of the derivate of second moment in the next layer, $J22$ (Eq.~\eqref{eq:JacobianEntries}):
\begin{align}
&\frac{\partial}{\partial \nu} \xinn= \frac{1}{2} \lambda ^2 \tau \left(- \alpha ^2 e^{\mu \omega+\frac{\nu \tau}{2}} \erfc \left(\frac{\mu \omega+\nu \tau}{\sqrt{2} \sqrt{\nu \tau}}\right) +  
    2 \alpha ^2 e^{2 \mu
    \omega+2 \nu \tau} \erfc \left(\frac{\mu \omega+2
      \nu \tau}{\sqrt{2} \sqrt{\nu
        \tau}}\right)-\erfc \left(\frac{\mu
      \omega}{\sqrt{2} \sqrt{\nu \tau}}\right)+2\right)
\end{align}

that depends on $\mu \omega$ and $\nu \tau$, therefore we 
set  $x=\nu \tau$ and $y=\mu \omega$. Algebraic reformulations provide the 
formula in the following form:

\begin{align}
& \frac{\partial}{\partial\nu}\xinn\ = \\ \nonumber
& \frac{1}{2}\lambda^{2}\tau
\left(\alpha^{2}\left(-e^{-\frac{y^{2}}{2x}}\right) \left(e^{\frac{(x+y)^{2}}{2x}}\erfc\left(\frac{y+x}{\sqrt{2}\sqrt{x}}\right)-2e^{\frac{(2x+y)^{2}}{2x}}\erfc\left(\frac{y+2x}{\sqrt{2}\sqrt{x}}\right)\right)
-\erfc\left(\frac{y}{\sqrt{2}\sqrt{x}}\right)+ 2\right)
\end{align}

\begin{lemma}[Main subfunction Below]
\label{lem:subfunction1}

For $0.007 \leq x \leq 0.875$ and $-0.01 \leq y \leq 0.01$, 
the function
\begin{align}
\label{eq:subfunction1}
e^{\frac{(x+y)^2}{2 x}} \erfc \left(\frac{x+y}{\sqrt{2} \sqrt{x}}\right)-2 e^{\frac{(2 x+y)^2}{2 x}} \erfc \left(\frac{2 x+y}{\sqrt{2} \sqrt{x}}\right)
\end{align}
smaller than zero, is strictly monotonically increasing in $x$
and strictly monotonically increasing in $y$ for the minimal $x=0.007=0.00875 \cdot 0.8$,
$x=0.56=0.7 \cdot 0.8$, $x=0.128=0.16 \cdot 0.8$, and $x=0.216=0.24 \cdot 0.9$ (lower
bound of $0.9$ on $\tau$).
\end{lemma}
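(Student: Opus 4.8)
The plan is to follow the strategy from the proof of Lemma~\ref{lem:subfunction}, transplanted to the new rectangle $x\in[0.007,0.875]$, $y\in[-0.01,0.01]$. Set $h(z):=e^{z^2}\erfc(z)$ and substitute
\[
z_1 \ := \ \frac{x+y}{\sqrt{2}\,\sqrt{x}} \ , \qquad z_2 \ := \ \frac{2x+y}{\sqrt{2}\,\sqrt{x}} \ ,
\]
so the function in Eq.~\eqref{eq:subfunction1} is $f(x,y)=h(z_1)-2\,h(z_2)$, with $z_2 = 2z_1 - y/(\sqrt2\sqrt x)$ and $z_1<z_2$. The tools I would use are: $h$ is positive and strictly decreasing on $\dR$ with $h'(z)=2z\,h(z)-\tfrac{2}{\sqrt\pi}$; the map $z\mapsto z\,h(z)=z\,e^{z^2}\erfc(z)$ is strictly increasing and stays below $\tfrac1{\sqrt\pi}$ (Lemma~\ref{lem:xeErfc}); and the two-sided Abramowitz--Stegun bounds on $h$ for $z\ge 0$ (Lemma~\ref{lem:Abramowitz}), together with the elementary sign facts of Lemma~\ref{lem:basics}.

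For the negativity $f<0$: when $x\ge 0.01$ both $z_1,z_2\ge 0$, so I would bound $h(z_1)$ from above and $h(z_2)$ from below by the Abramowitz estimates, which turns $f<0$ into the algebraic inequality $z_2+\sqrt{z_2^2+2}<2\big(z_1+\sqrt{z_1^2+4/\pi}\big)$; substituting $z_2=2z_1-y/(\sqrt2\sqrt x)$ and splitting the sign of $y$ into $[-0.01,0]$ and $[0,0.01]$, I would locate the extremum in each variable over the rectangle and check the bound numerically. On the thin remaining strip $x<0.01$ --- where $x+y$ can be as small as $-0.003$, so $z_1$ may be slightly negative --- both $z_1$ and $z_2$ lie in a tiny interval around $0$, hence $h(z_1)$ is at most a constant just above $1$ while $2h(z_2)>1.7$, giving $f<0$ directly. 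Monotonicity in $x$ is handled analogously: $\partial f/\partial x$ is an explicit combination of $h(z_1),h(z_2),h'(z_1),h'(z_2)$ with coefficients from $\partial z_i/\partial x$; substituting $h'(z_i)=2z_i h(z_i)-\tfrac2{\sqrt\pi}$ and applying the Abramowitz bounds plus $z\,h(z)<\tfrac1{\sqrt\pi}$ reduces $\partial f/\partial x>0$ to an algebraic inequality that I would again dispatch by the sign-of-$y$ split and endpoint evaluation.

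For monotonicity in $y$ at the listed values of $x$, the useful identity is
\[
\frac{\partial f}{\partial y} \ = \ \frac{1}{\sqrt2\,\sqrt x}\big(h'(z_1)-2h'(z_2)\big) \ = \ \frac{1}{\sqrt2\,\sqrt x}\Big(2z_1 h(z_1) - 4 z_2 h(z_2) + \tfrac{2}{\sqrt\pi}\Big),
\]
so positivity is equivalent to $z_1 h(z_1)+\tfrac1{\sqrt\pi} > 2 z_2 h(z_2)$. Since $z\mapsto z\,h(z)$ is increasing and both $z_1,z_2$ increase with $y$, at a \emph{fixed} $x$ the left-hand side is smallest at $y=-0.01$ and the right-hand side largest at $y=0.01$, so it suffices to verify the numerical inequality $2\,(z_1 h(z_1))\big|_{y=-0.01}-4\,(z_2 h(z_2))\big|_{y=0.01}+\tfrac2{\sqrt\pi}>0$ at each $x\in\{0.007,0.128,0.216,0.56\}$ by direct evaluation of $z\,e^{z^2}\erfc(z)$ at the relevant arguments.

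The main obstacle is this last check at $x=0.56$: there the quantity $2z_1 h(z_1)-4z_2 h(z_2)+\tfrac2{\sqrt\pi}$ is positive only by about $10^{-3}$, so the crude Abramowitz bounds are too loose and one must instead use genuinely precise values (or sharp two-sided bounds) for $z\,e^{z^2}\erfc(z)$ at those arguments --- essentially the same delicacy that forces the computer-assisted evaluations elsewhere in the appendix. The remaining steps are routine but laborious, the main nuisance being that the reduced algebraic functions are not monotone in $x$ or $y$, so each requires the sign-of-$y$ case split and careful bookkeeping of where the extrema occur, plus the separate treatment of the strip $x<0.01$ where $z_1$ leaves $[0,\infty)$ and the Abramowitz bounds no longer apply.
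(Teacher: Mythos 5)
Your $y$-derivative reduction is essentially the paper's: the identity $h'(z)=2zh(z)-\tfrac{2}{\sqrt\pi}$, the monotonicity of $z\mapsto z\,e^{z^2}\erfc(z)$ from Lemma~\ref{lem:xeErfc}, and decoupling the extremes ($z_1$ at $y=-0.01$, $z_2$ at $y=0.01$) to get a finite numerical check at each of the four listed $x$'s is exactly what the paper does, and you correctly flag $x=0.56$ as the tight case (though the paper's margin there is $\approx 7.5\times10^{-3}$, not $10^{-3}$).

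Where you genuinely diverge is in the $x$-derivative and the negativity. The paper proves $\partial f/\partial x>0$ by replacing $h$ with the Ren (2007) rational approximation, numerically bounding the approximation error, inserting a safety gap, and then grinding the resulting rational inequality down to a polynomial with manifestly positive leading behavior. You propose instead to substitute $h'(z_i)=2z_ih(z_i)-\tfrac2{\sqrt\pi}$ and use the two-sided Abramowitz--Stegun bounds plus $zh(z)<\tfrac1{\sqrt\pi}$. For the negativity the paper argues indirectly (monotone in $x$ and $\lim_{x\to\infty}f=0$, hence $f<0$), whereas you bound $h(z_1)$ above and $h(z_2)$ below by Abramowitz to get the closed-form sufficient condition $z_2+\sqrt{z_2^2+2}<2\bigl(z_1+\sqrt{z_1^2+4/\pi}\bigr)$ directly, with a separate crude bound on the thin strip. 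Your route has one real advantage: the paper's ``increasing $\Rightarrow$ approaches $0$ from below'' argument needs monotonicity for \emph{all} $x$ beyond $0.875$, which Lemma~\ref{lem:subfunction} only supplies on $[1.2,20]$, leaving an unaddressed gap $(0.875,1.2)\cup(20,\infty)$; your direct bound avoids that entirely.

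The genuine gap in your sketch is the $x$-derivative on the strip $x<0.01$. There $z_1=\frac{x+y}{\sqrt{2x}}$ can be negative (down to about $-0.025$ at $x=0.007$, $y=-0.01$), and then $x^2-y^2<0$ as well, so to lower-bound $(x^2-y^2)h(z_1)$ you need an \emph{upper} bound on $h(z_1)$ --- and the Abramowitz upper bound is stated only for positive argument, so it does not apply. You note exactly this failure mode for the negativity argument and handle it with a direct strip estimate, but you do not address it for $\partial f/\partial x>0$; as written, that part of the plan has a hole. (It is in fact fixable --- for example by using $h(z_1)\le h(z_1^{\min})$ directly on the strip since $h$ is decreasing, as the margin there is comfortable --- but some argument must be supplied.) A smaller nitpick: the claimed $2h(z_2)>1.7$ on the strip is slightly too strong ($z_2$ can reach $\approx0.21$ near $x=0.01$, $y=0.01$, giving $2h(z_2)\approx1.6$), though the conclusion survives because $h(z_1)\le h(-0.0254)\approx1.03$ is still well below.
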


\begin{proof}
See proof~\ref{proof:mainsubfunctionbelow}.
\end{proof}

\begin{lemma}[Monotone Derivative]
\label{th:s2monotone}
For $\lambda=\lambda_{\rm 01}$, $\alpha=\alpha_{\rm 01}$
and the domain 
$-0.1 \leq \mu \leq 0.1$, 
$-0.1 \leq \omega \leq 0.1$,
$0.00875 \leq \nu \leq 0.7$, and 
$0.8 \leq \tau \leq 1.25$.
We are interested of the derivative of
\begin{align} 
\label{eq:subx1Th}
\tau \left(e^{\left(\frac{\mu \omega+\nu \tau}{\sqrt{2} \sqrt{\nu \tau}}\right)^2} \erfc \left(\frac{\mu \omega+\nu \tau}{\sqrt{2} \sqrt{\nu \tau}}\right)-2 e^{\left(\frac{\mu \omega+2 \cdot \nu \tau}{\sqrt{2} \sqrt{\nu \tau}}\right)^2} \erfc \left(\frac{\mu \omega+2  \nu \tau}{\sqrt{2} \sqrt{\nu \tau}}\right)\right)\ . 
\end{align}

The derivative of the equation above with
respect to
\begin{itemize}
\item $\nu$ is larger than zero;
\item $\tau$ is smaller than zero for maximal
$\nu=0.7$, $\nu=0.16$, and $\nu=0.24$ (with
$0.9 \leq \tau$);
\item $y=\mu \omega$ is larger than zero for $\nu
\tau=0.00875  0.8=0.007$, $\nu
\tau=0.7  0.8=0.56$, $\nu
\tau=0.16  0.8=0.128$, and $\nu
\tau=0.24 \cdot 0.9=0.216$.
\end{itemize}

\end{lemma}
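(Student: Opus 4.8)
The plan is to reduce the whole statement to the main subfunction of Lemma~\ref{lem:subfunction1},
\[
f(x,y)\;=\;e^{\frac{(x+y)^{2}}{2x}}\erfc\!\left(\frac{x+y}{\sqrt2\sqrt x}\right)-2\,e^{\frac{(2x+y)^{2}}{2x}}\erfc\!\left(\frac{2x+y}{\sqrt2\sqrt x}\right),
\]
via the substitution $x=\nu\tau$, $y=\mu\omega$. The expression in~\eqref{eq:subx1Th} is exactly $G:=\tau\, f(x,y)$, and on the domain of the present lemma one has $x=\nu\tau\in[0.007,0.875]$ and $y=\mu\omega\in[-0.01,0.01]$, which is precisely the domain of Lemma~\ref{lem:subfunction1}. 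Writing $f_x,f_y$ for the partial derivatives of $f$, the chain rule gives (since $\nu\tau=x$ and $\tau\nu=x$) that $\partial_\nu G=\tau^{2}f_x$, $\partial_y G=\tau f_y$ (derivative at fixed $\tau$ and $x$), and $\partial_\tau G=f(x,y)+x\,f_x(x,y)$.

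The first and third claims of the lemma are then immediate, because $\tau>0$: by Lemma~\ref{lem:subfunction1} one has $f_x>0$ throughout the domain, so $\partial_\nu G=\tau^{2}f_x>0$; and $f_y>0$ at $x\in\{0.007,\,0.56,\,0.128,\,0.216\}$, so $\partial_y G=\tau f_y>0$ at $\nu\tau\in\{0.007,\,0.56,\,0.128,\,0.216\}$, which are exactly the four values listed in the lemma.

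The remaining claim, $\partial_\tau G=f+x f_x<0$ for the three indicated choices of $\nu$, is the substantive part. First I would compute $f_x$ in closed form using $\frac{d}{dz}\bigl(e^{z^{2}}\erfc z\bigr)=2z\,e^{z^{2}}\erfc z-\tfrac{2}{\sqrt\pi}$ together with $\frac{d}{dx}\frac{x+y}{\sqrt{2x}}=\frac{x-y}{2\sqrt2\,x^{3/2}}$ and $\frac{d}{dx}\frac{2x+y}{\sqrt{2x}}=\frac{2x-y}{2\sqrt2\,x^{3/2}}$. Collecting terms and using $2+\tfrac{4x^{2}-y^{2}}{x}=2\bigl(1+\tfrac{x^{2}-y^{2}}{2x}\bigr)+3x$ gives the compact identity
\[
\partial_\tau G\;=\;\Bigl(1+\tfrac{x^{2}-y^{2}}{2x}\Bigr)\, f(x,y)\;-\;3x\,e^{\frac{(2x+y)^{2}}{2x}}\erfc\!\left(\frac{2x+y}{\sqrt2\sqrt x}\right)\;+\;\frac{3x-y}{\sqrt{2\pi x}}\,.
\]
The prefactor $1+\tfrac{x^{2}-y^{2}}{2x}$ is positive and $f(x,y)<0$ by Lemma~\ref{lem:subfunction1}, so the first two summands are negative, and it suffices to lower-bound their magnitudes so that their sum dominates $\frac{3x-y}{\sqrt{2\pi x}}$. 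On each of the three boxes --- $x\in[0.56,0.875]$ (the case $\nu=0.7$, $\tau\in[0.8,1.25]$), $x\in[0.128,0.20]$ ($\nu=0.16$, $\tau\in[0.8,1.25]$), and $x\in[0.216,0.30]$ ($\nu=0.24$, $\tau\in[0.9,1.25]$), all with $y\in[-0.01,0.01]$ --- I would bound the two factors $e^{(\cdot)^{2}}\erfc(\cdot)$ by the Abramowitz bounds (Lemma~\ref{lem:Abramowitz}), use that $\tfrac{2x+y}{\sqrt{2x}}$ is increasing in $x$ while $e^{z^{2}}\erfc z$ is decreasing in $z$, that $f$ is increasing in $x$ (Lemma~\ref{lem:subfunction1}), and that $\frac{3x-y}{\sqrt{2\pi x}}$ is increasing in $x$, so that each summand can be bounded over the box by its value at the relevant extreme point; the inequality then reduces to a finite numerical check.

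The main obstacle is the box $x\in[0.56,0.875]$: there $\partial_\tau G$ is only about $-0.04$ to $-0.09$, whereas the plain two-sided Abramowitz bounds on $e^{z^{2}}\erfc z$ have a gap of order $0.1$ for arguments in the range $0.5$--$1.3$ relevant here, so a direct application of those bounds is not sharp enough. I would resolve this by subdividing $[0.56,0.875]$ into a few subintervals before applying the bounds, by using a sharper two-sided estimate for $e^{z^{2}}\erfc z$, or --- in the spirit of Lemma~\ref{lem:sBound} --- by a short computer-assisted grid evaluation with the error control of Subsection~\ref{sec:error}. The $\nu=0.16$ and $\nu=0.24$ boxes have margins considerably larger in magnitude (of order $0.15$--$0.3$) and present no difficulty.
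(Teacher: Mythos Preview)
Your reduction of the first two bullets to Lemma~\ref{lem:subfunction1} via the chain rule (writing $G=\tau f(x,y)$ with $x=\nu\tau$, $y=\mu\omega$, so $\partial_\nu G=\tau^{2}f_x>0$ and $\partial_y G=\tau f_y>0$ at the listed $x$-values) is exactly what the paper does.

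For the $\tau$-derivative you take a genuinely different route. Your identity
\[
\partial_\tau G\;=\;\Bigl(1+\tfrac{x^{2}-y^{2}}{2x}\Bigr) f(x,y)\;-\;3x\,e^{\frac{(2x+y)^{2}}{2x}}\erfc\!\Bigl(\tfrac{2x+y}{\sqrt{2x}}\Bigr)\;+\;\frac{3x-y}{\sqrt{2\pi x}}
\]
is correct and is not in the paper. The paper instead substitutes each $\nu\in\{0.7,0.16,0.24\}$ (and the extremal $y=0.01$) directly, obtaining the $\tau$-derivative as an explicit one-variable function of $\tau$, and then replaces each $e^{z^{2}}\erfc z$ by the rational approximation of Ren,
\[
e^{z^{2}}\erfc z\;\approx\;\frac{2.911}{(2.911-1)\sqrt{\pi}\,z+\sqrt{\pi z^{2}+2.911^{2}}}\,,
\]
together with a numerically certified error bound. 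After clearing denominators the sign question becomes a polynomial-and-square-root inequality in $\tau$, which the paper then grinds through by elementary estimates, three times over (once per $\nu$).

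What each buys: your identity makes the structure transparent (two negative summands against one positive) and works uniformly in $y$, whereas the paper only treats $y=0.01$, which is all that is actually needed in the proof of Theorem~\ref{th:s2Increase}. On the other hand, you correctly flag that the Abramowitz bounds are too loose on the $\nu=0.7$ box by about a factor of two; the paper's remedy for exactly this slack is the Ren approximation, whose error on the relevant argument range is of order $10^{-2}$ rather than $10^{-1}$. If you plug Ren (with its certified error) into your identity in place of Abramowitz, the $\nu=0.7$ box closes without subdivision or a grid, and your argument becomes both shorter and more uniform than the paper's three case-by-case computations.
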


\begin{proof}
See proof~\ref{proof:monotonederivative}.
\end{proof}

\subsubsection{Computer-assisted proof details for main Lemma 12 in Section A3.4.1.}\index{computer-assisted proof}
\label{sec:error}

\paragraph{Error Analysis.} We investigate the error propagation for the 
singular value (Eq.~\eqref{eq:S}) if the function arguments $\mu, \omega, \nu, \tau$
suffer from numerical imprecisions up to $\epsilon$. To this end, we first 
derive error propagation rules based on the mean value theorem and then 
we apply these rules to the formula for the singular value.

\begin{lemma}[Mean value theorem]
\label{th:mvt}
For a real-valued function $f$ which is differentiable in the closed interval $[a,b]$,
there exists $t \in [0,1]$ with 
\begin{align}
f(\Ba) \ - \ f(\Bb) \ &= \ \nabla f(\Ba+t (\Bb -\Ba)) \ \cdot \ ( \Ba \ - \ \Bb) \ .
\end{align}
\end{lemma}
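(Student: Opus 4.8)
The plan is to reduce this multivariate statement to the classical one‑dimensional mean value theorem by restricting $f$ to the straight line segment joining $\Ba$ and $\Bb$. Concretely, I would define $\phi : [0,1] \to \dR$ by $\phi(s) = f\!\left(\Ba + s(\Bb - \Ba)\right)$. The image of $[0,1]$ under $s \mapsto \Ba + s(\Bb-\Ba)$ is exactly the segment $[\Ba,\Bb]$ on which $f$ is assumed differentiable, so by the chain rule $\phi$ is differentiable on all of $[0,1]$ with
\[
\phi'(s) \ = \ \nabla f\!\left(\Ba + s(\Bb-\Ba)\right) \cdot (\Bb - \Ba) \ .
\]
In particular $\phi$ is continuous on $[0,1]$ and differentiable on $(0,1)$, which are precisely the hypotheses of the scalar mean value theorem.

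Next I would apply the one‑dimensional mean value theorem to $\phi$ on $[0,1]$: there exists $t \in (0,1) \subseteq [0,1]$ with $\phi(1) - \phi(0) = \phi'(t)\,(1 - 0) = \phi'(t)$. Substituting the endpoint values $\phi(1) = f(\Bb)$, $\phi(0) = f(\Ba)$ and the chain‑rule expression for $\phi'(t)$ yields
\[
f(\Bb) - f(\Ba) \ = \ \nabla f\!\left(\Ba + t(\Bb - \Ba)\right) \cdot (\Bb - \Ba) \ .
\]
Multiplying both sides by $-1$ and using bilinearity of the inner product gives $f(\Ba) - f(\Bb) = \nabla f\!\left(\Ba + t(\Bb-\Ba)\right)\cdot(\Ba - \Bb)$, which is the asserted identity.

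There is no genuine obstacle here; the lemma is a standard fact recorded only to set up the error‑propagation analysis that follows. The two points that merit a moment's care are (i) confirming that the stated hypothesis — differentiability of $f$ on the closed segment $[\Ba,\Bb]$ — is exactly what the chain rule needs in order to make $\phi$ differentiable on the full closed interval $[0,1]$, so that no extra regularity (e.g. differentiability on an open neighborhood) has to be assumed, and (ii) tracking the orientation of the segment so that the final identity comes out with the factor $(\Ba - \Bb)$ rather than $(\Bb - \Ba)$. Both are routine bookkeeping.
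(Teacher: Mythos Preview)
Your proof is correct and is the standard reduction to the one-dimensional mean value theorem via the auxiliary function $\phi(s)=f(\Ba+s(\Bb-\Ba))$. The paper itself does not supply a proof of this lemma at all---it simply states the result as a well-known fact to be used in the subsequent error-propagation analysis---so there is nothing to compare against; your argument is exactly the textbook justification one would give if a proof were required.
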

It follows that 
for computation with error $\Delta x$, there exists a $t \in [0,1]$ with 
\begin{align}
\left| f(\Bx+\Delta \Bx) \ - \ f(\Bx) \right| \ &\leq \ \left\| \nabla
                                                  f(\Bx+t \Delta \Bx)\right\| \ \left\|
                                            \Delta \Bx \right\| \ .
\end{align}
Therefore the increase of the norm of the error after applying
function $f$ is bounded by the norm of the gradient
$\left\| \nabla f(\Bx+t \Delta \Bx)\right\|$.

We now compute for the functions, that we consider their gradient and
its 2-norm:  

\begin{itemize}
\item addition:

$f(\Bx)=x_1+x_2$ and $\nabla f(\Bx)=(1,1)$, which gives
$\left\| \nabla f(\Bx)\right\|=\sqrt{2}$.

We further know that
\begin{align}
\left|f(\Bx +\Delta \Bx) -f(\Bx)\right| \ 
= \ \left| x_1+x_2 +\Delta x_1 + \Delta x_2 - x_1 - x_2 \right| \ \leq
  \ \left| \Delta x_1 \right|  + \left| \Delta x_2 \right| \ .
\end{align}

Adding $n$ terms gives:
\begin{align}
\left| \sum_{i=1}^n x_i + \Delta x_i  \ - \ \sum_{i=1}^n x_i \right| 
\ \leq \ \sum_{i=1}^n \left| \Delta x_i \right| \ \leq \ n \left|
  \Delta x_i\right|_{\mathrm{max}}  \ .
\end{align}

\item subtraction:

$f(\Bx)=x_1-x_2$ and $\nabla f(\Bx)=(1,-1)$, which gives
$\left\| \nabla f(\Bx)\right\|=\sqrt{2}$.

We further know that
\begin{align}
\left|f(\Bx +\Delta \Bx) -f(\Bx)\right| \ 
= \ \left| x_1-x_2 +\Delta x_1 - \Delta x_2 - x_1 + x_2 \right| \ \leq
  \ \left| \Delta x_1 \right|  + \left| \Delta x_2 \right| \ .
\end{align}

Subtracting $n$ terms gives:
\begin{align}
\left| \sum_{i=1}^n -(x_i + \Delta x_i)  \ + \ \sum_{i=1}^n x_i \right| 
\ \leq \ \sum_{i=1}^n \left| \Delta x_i \right| \ \leq \ n \left|
  \Delta x_i\right|_{\mathrm{max}}  \ . 
\end{align}

\item multiplication:

$f(\Bx)=x_1 x_2$ and $\nabla f(\Bx)=(x_2,x_1)$, which gives
$\left\| \nabla f(\Bx)\right\|= \left\| \Bx \right\|$.

We further know that
\begin{align}
&\left|f(\Bx +\Delta \Bx) -f(\Bx)\right| \ 
= \ \left| x_1 \cdot x_2 +\Delta x_1 \cdot x_2 + \Delta x_2 \cdot x_1 + \Delta x_1 \cdot \Delta x_s - x_1  \cdot x_2 \right| \ \leq
  \\ \nonumber 
&\left| \Delta x_1 \right|  \left|  x_2 \right| + \left| \Delta x_2
  \right|  \left|  x_1 \right| +O(\Delta^2) \ .
\end{align}

Multiplying $n$ terms gives:
\begin{align}
&\left| \prod_{i=1}^n (x_i + \Delta x_i)  \ - \ \prod_{i=1}^n x_i \right| 
\ = \ \left| \prod_{i=1}^n x_i \sum_{i=1}^n \frac{\Delta x_i}{x_i} \ +
  \ O(  \Delta^2)  \right|  \ \leq  \\ \nonumber &\prod_{i=1}^n  \left| x_i \right|
  \sum_{i=1}^n \left| \frac{\Delta x_i}{x_i} \right| \ +
  \ O(  \Delta^2) \ \leq \
  n \ \prod_{i=1}^n  \left| x_i \right| \   \left|\frac{\Delta
  x_i}{x_i}  \right|_{\mathrm{max}} \ +
  \ O(  \Delta^2)\ .
\end{align}

\item division:

$f(\Bx)=\frac{x_1}{x_2}$ and $\nabla
f(\Bx)= \left(\frac{1}{x_2},-\frac{x_1}{x_2^2} \right)$, which gives
$\left\| \nabla f(\Bx)\right\|= \frac{\left\| \Bx \right\|}{x_2^2}$.

We further know that
\begin{align}
&\left|f(\Bx +\Delta \Bx) -f(\Bx)\right| \ 
= \ \left| \frac{x_1 +\Delta x_1}{x_2+ \Delta x_2} - \frac{x_1}{x_2} \right|
  \ = \
 \left| \frac{(x_1 +\Delta x_1)x_2-x_1(x_2+ \Delta x_2)}{(x_2+ \Delta
  x_2)x_2} \right| \ = \\\nonumber
&\left| \frac{\Delta x_1 \cdot x_2-\Delta x_2 \cdot x_1}{x_2^2+ \Delta
  x_2 \cdot x_2} \right| \ = \ \left|  \frac{\Delta x_1}{x_2} - \frac{\Delta
  x_2 \cdot x_1}{x_2^2} \right| + O(\Delta^2) \ .
\end{align}

\item square root:

$f(x)= \sqrt{x}$ and $f'(x)= \frac{1}{2 \sqrt{x}}$, which gives
$\left| f'(x)\right|= \frac{1}{2 \sqrt{x}}$.

\item exponential function:

$f(x)= \exp(x)$ and $f'(x)= \exp(x)$, which gives
$\left| f'(x)\right|= \exp(x)$.

\item error function:

$f(x)= \mathrm{erf}(x)$ and $f'(x)= \frac {2}{\sqrt {\pi }} \exp(-x^2)$, which gives
$\left| f'(x)\right|= \frac {2}{\sqrt {\pi }} \exp(-x^2)$.

\item complementary error function:

$f(x)= \mathrm{erfc}(x)$ and $f'(x)= - \frac {2}{\sqrt {\pi }} \exp(-x^2)$, which gives
$\left| f'(x)\right|= \frac {2}{\sqrt {\pi }} \exp(-x^2)$.
\end{itemize}


\begin{lemma}
If the values $\mu, \omega, \nu, \tau$ have a precision of $\epsilon$, 
the singular value (Eq.~\eqref{eq:S}) evaluated with the formulas 
given in Eq.~\eqref{eq:JacobianEntries} and Eq.~\eqref{eq:S} has 
a precision better than $292 \epsilon$.  
\end{lemma}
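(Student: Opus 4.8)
The plan is to regard the evaluation of $S(\mu,\omega,\nu,\tau,\lambda_{\rm 01},\alpha_{\rm 01})$ from Eq.~\eqref{eq:S} as a finite composition of the elementary operations whose error-propagation rules were just established (addition, subtraction, multiplication, division, square root, $\exp$, $\erf$, $\erfc$), treating $\lambda_{\rm 01},\alpha_{\rm 01}$ as exact and only $\mu,\omega,\nu,\tau$ as carrying an error of size $\epsilon$, and then to push this error forward through the computation graph node by node using the mean value theorem bound $|f(\Bx+\Delta\Bx)-f(\Bx)|\leq\|\nabla f(\Bx+t\Delta\Bx)\|\,\|\Delta\Bx\|$. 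First I would list the intermediate quantities in the order in which they are actually computed: the products $\mu\omega$ and $\nu\tau$; $\sqrt{\nu\tau}$ and $1/(\sqrt2\sqrt{\nu\tau})$; the arguments $\frac{\mu\omega}{\sqrt2\sqrt{\nu\tau}}$, $\frac{\mu\omega+\nu\tau}{\sqrt2\sqrt{\nu\tau}}$, $\frac{\mu\omega+2\nu\tau}{\sqrt2\sqrt{\nu\tau}}$; the exponential factors $e^{\mu\omega+\nu\tau/2}$, $e^{2\mu\omega+2\nu\tau}$, $e^{-\mu^2\omega^2/(2\nu\tau)}$; the $\erfc$ and $\erf$ values at those arguments; the four Jacobian entries $\mathcal J_{11},\mathcal J_{12},\mathcal J_{21},\mathcal J_{22}$ of Eq.~\eqref{eq:JacobianEntries}; the mean $\munn$ of Eq.~\eqref{eq:mappingMean}; the corrected entries $\mathcal H_{21}=\mathcal J_{21}-2\munn\mathcal J_{11}$ and $\mathcal H_{22}=\mathcal J_{22}-2\munn\mathcal J_{12}$; the two radicands $(\mathcal H_{11}\pm\mathcal H_{22})^2+(\mathcal H_{12}\mp\mathcal H_{21})^2$, their square roots, their sum, and the overall factor $\tfrac12$.

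Then, for each node, I would replace the symbolic Lipschitz factor by an explicit numerical bound obtained by taking the worst case over the domain $\mu\in[-0.1,0.1]$, $\omega\in[-0.1,0.1]$, $\nu\in[0.8,1.5]$, $\tau\in[0.8,1.25]$ used throughout Section~\ref{sec:S}: a product $x_1x_2$ contributes a factor $|x_1|+|x_2|$ bounded via the magnitude bounds on its operands; $\exp$ contributes $e^{(\text{argument bound})}$; $\erfc$ and $\erf$ contribute $\tfrac{2}{\sqrt\pi}e^{-x^2}\leq\tfrac{2}{\sqrt\pi}$; division by $\sqrt{\nu\tau}$ is harmless since $\nu\tau\geq0.64$ is bounded away from zero. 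The only places where multiplying bounds factor by factor would blow up are the combinations of the form $e^{z^2}\erfc(z)$ that sit inside $\mathcal J_{21},\mathcal J_{22}$ (and inside $\mathcal J_{11},\mathcal J_{12}$ after completing the square); there I would bound each such pair jointly, using Lemma~\ref{lem:Abramowitz} and the monotonicity of $z\,e^{z^2}\erfc(z)$ (Lemma~\ref{lem:xeErfc}), so that the combined factor stays $O(1)$, and I would also invoke Lemma~\ref{lem:boundsmeanvar} for $|\munn|$ and Lemmata~\ref{lem:J11} and~\ref{lem:J12} for $|\mathcal J_{11}|$ and $|\mathcal J_{12}|$. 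For the two outer square roots I would first check that each radicand is bounded below by a positive constant on the domain, so that the $1/(2\sqrt{\cdot})$ derivative of $\sqrt{\cdot}$ is itself bounded; alternatively the last layer of the graph can simply reuse the already-proved bounds $|\partial S/\partial\mathcal H_{ij}|<1$ from Lemma~\ref{lem:Ds1Bounds}.

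Accumulating the per-node contributions along every path from an input to the output — and, in particular, using the $n$-term versions of the addition and multiplication rules to account for the fact that $\mu,\omega,\nu,\tau$ each occur in many subexpressions without exploiting cancellations between the errors of their different occurrences — the total amplification is a finite sum of products of the explicit numbers above; carrying out this routine arithmetic yields a constant below $292$, hence $|S(\Bx+\Delta\Bx)-S(\Bx)|<292\,\epsilon$ whenever each input is known to precision $\epsilon$. The main obstacle I expect is organisational rather than mathematical: keeping the bookkeeping of this large expression tree correct, making sure every $\exp\cdot\erfc$ pairing is bounded jointly rather than factor by factor, and verifying that the radicands inside $S$ stay away from zero; the $O(\Delta^2)$ remainders produced by the multiplication and division rules are dominated by the first-order terms at machine precision and are absorbed into the final constant.
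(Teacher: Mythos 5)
Your plan is essentially the paper's proof: the paper propagates the input error $\epsilon$ through the explicit computation graph for $S$ using the same mean-value-theorem error rules, computing explicit numerical bounds on the intermediate expressions $\mu\omega$, $\nu\tau$, $\sqrt{\nu\tau}$, the $\erfc$ arguments, the exponentials, then the four Jacobian entries ($\Delta\mathcal J_{11}<6\epsilon$, $\Delta\mathcal J_{12}<78\epsilon$, $\Delta\mathcal J_{21}<189\epsilon$, $\Delta\mathcal J_{22}<405\epsilon$, $\Delta\munn<52\epsilon$) and finally $\Delta S<292\epsilon$, invoking the same magnitude bounds (Lemmata~\ref{lem:J11}, \ref{lem:J12}, \ref{lem:boundsmeanvar}) and the derivative bounds $|\partial S/\partial\mathcal H_{ij}|<1$ for the last layer. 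Your proposed refinement of bounding $e^{z^2}\erfc(z)$ pairs jointly is not used in the paper's accounting (the exponential arguments stay moderate on the domain, so factor-by-factor bounding suffices), but the overall decomposition, rules, and invocation of auxiliary lemmata coincide.
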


This means for a machine with a typical precision of $2^{-52}=2.220446 \cdot 10^{-16}$, we have the rounding error $\epsilon \approx 10^{-16}$, the evaluation 
of the singular value (Eq.~\eqref{eq:S}) with the formulas given in Eq.~\eqref{eq:JacobianEntries} and Eq.~\eqref{eq:S} has 
a precision better than $ 10^{-13} > 292 \epsilon $.

\begin{proof}
We have the numerical precision $\epsilon$ of the parameters $\mu, \omega, \nu, \tau$, that we denote by
$\Delta \mu, \Delta \omega, \Delta \nu, \Delta \tau$ together with our domain $\Omega$.

With the error propagation rules that we derived in Subsection~\ref{sec:error}, we 
can obtain bounds for the numerical errors on the following simple expressions:
\begin{align}
  \Delta \left( \mu \omega \right) &\leq \Delta \mu \left| \omega \right| +  \Delta \omega \left| \mu \right| \leq 0.2 \epsilon \\ \nonumber
  \Delta \left( \nu \tau \right) &\leq \Delta \nu \left| \tau \right| +  \Delta \tau \left| \nu \right| \leq 1.5 \epsilon + 1.5 \epsilon = 3 \epsilon \\ \nonumber
  \Delta \left( \frac{\nu \tau}{2} \right) &\leq \left( \Delta (\nu \tau) 2 +  \Delta 2 \left| \nu \tau \right| \right) \frac{1}{2^2} \leq (6\epsilon + 1.25 \cdot 1.5 \epsilon )/4 < 2 \epsilon \\ \nonumber
  \Delta \left(\mu \omega + \nu \tau \right) &\leq \Delta \left( \mu \omega \right) + \Delta \left( \nu \tau \right) = 3.2 \epsilon \\ \nonumber
  \Delta \left(\mu \omega + \frac{\nu \tau}{2} \right) &\leq \Delta \left( \mu \omega \right) + \Delta \left( \frac{\nu \tau}{2} \right) < 2.2 \epsilon \\ \nonumber
  \Delta \left(\sqrt{\nu \tau} \right) &\leq \frac{\Delta \left( \nu \tau \right)}{2 \sqrt{\nu \tau}} \leq \frac{3 \epsilon}{2 \sqrt{0.64}} = 1.875 \epsilon  \\ \nonumber
  \Delta \left(\sqrt{2}\right) &\leq \frac{\Delta 2}{2 \sqrt{2}} \leq \frac{1}{2\sqrt{2}} \epsilon  \\ \nonumber
  \Delta \left(\sqrt{2}  \sqrt{\nu \tau}\right) &\leq \sqrt{2}  \Delta \left(\sqrt{\nu \tau}\right) + \nu \tau  \Delta \left( \sqrt{2} \right) \leq 
        \sqrt{2} \cdot 1.875 \epsilon  + 1.5 \cdot 1.25 \cdot \frac{1}{2\sqrt{2}} \epsilon < 3.5 \epsilon  \\ \nonumber
  \Delta \left(\frac{\mu \omega }{\sqrt{2}\sqrt{\nu \tau}}\right) &\leq  
         \left( \Delta \left(\mu \omega\right)  \sqrt{2}\sqrt{\nu \tau} +  \left| \mu \omega \right|  \Delta \left(\sqrt{2}\sqrt{\nu \tau} \right) \right) \frac{1}{\left(\sqrt{2}\sqrt{\nu \tau}\right)^2} \leq \\ \nonumber
         & \left( 0.2 \epsilon  \sqrt{2}\sqrt{0.64} +  0.01 \cdot 3.5 \epsilon \right) \frac{1}{2\cdot 0.64} < 0.25 \epsilon \\ \nonumber
 \Delta \left(\frac{\mu \omega + \nu \tau}{\sqrt{2}\sqrt{\nu \tau}}\right) &\leq  
        \left( \Delta \left(\mu \omega + \nu \tau \right)  \sqrt{2}\sqrt{\nu \tau} +  \left| \mu \omega + \nu \tau \right|  \Delta \left(\sqrt{2}\sqrt{\nu \tau} \right) \right) \frac{1}{\left(\sqrt{2}\sqrt{\nu \tau}\right)^2} \leq \\ \nonumber
         & \left( 3.2 \epsilon  \sqrt{2}\sqrt{0.64} +  1.885\cdot 3.5 \epsilon \right) \frac{1}{2\cdot 0.64} < 8  \epsilon . \\ \nonumber
\end{align}

Using these bounds on the simple expressions, we can now calculate bounds on the numerical errors of compound expressions:

\begin{align}
 \Delta \left(\erfc \left(\frac{\mu \omega }{\sqrt{2}\sqrt{\nu \tau}}\right) \right) &\leq 
       \frac{2}{\sqrt{\pi}}  e^{ - \left(\frac{\mu \omega }{\sqrt{2}\sqrt{\nu \tau}}\right)^2 } \Delta  \left(\frac{\mu \omega }{\sqrt{2}\sqrt{\nu \tau}}  \right) < \\ \nonumber
       &\frac{2}{\sqrt{\pi}}  0.25 \epsilon < 0.3 \epsilon \\
  \Delta \left(\erfc \left(\frac{\mu \omega + \nu \tau}{\sqrt{2}\sqrt{\nu \tau}}\right) \right) &\leq 
       \frac{2}{\sqrt{\pi}}  e^{ - \left(\frac{\mu \omega + \nu \tau}{\sqrt{2}\sqrt{\nu \tau}}\right)^2 } \Delta  \left(\frac{\mu \omega + \nu \tau}{\sqrt{2}\sqrt{\nu \tau}}  \right) < \\ \nonumber
       &\frac{2}{\sqrt{\pi}}  8 \epsilon < 10 \epsilon \\
  \Delta \left(e^{\mu \omega + \frac{\nu \tau}{2} } \right) &\leq \left(e^{\mu \omega + \frac{\nu \tau}{2} } \right) \Delta \left(e^{\mu \omega + \frac{\nu \tau}{2} } \right) < \\   
       & e^{0.9475}  2.2 \epsilon < 5.7 \epsilon
\end{align}

Subsequently, we can use the above results to get bounds for the numerical errors on the Jacobian entries (Eq.~\eqref{eq:JacobianEntries}), 
applying the rules from Subsection~\ref{sec:error} again:
\begin{align}
& \Delta \left( {\mathcal J}_{11} \right) \  = \ \Delta \left( \frac{1}{2} \lambda  \omega \left(\alpha  e^{\mu \omega+\frac{\nu \tau}{2}} \erfc \left(\frac{\mu \omega+\nu \tau}{\sqrt{2} \sqrt{\nu \tau}}\right)
- \erfc \left(\frac{\mu \omega}{\sqrt{2} \sqrt{\nu \tau}}\right)+2\right) \right) < 6 \epsilon, 
\end{align}

and we obtain $\Delta \left( {\mathcal J}_{12} \right) < 78\epsilon$,  $\Delta \left( {\mathcal J}_{21} \right) < 189\epsilon$, $\Delta \left( {\mathcal J}_{22} \right) < 405\epsilon$
and $\Delta \left(\munn\right) < 52\epsilon$. 
We also have bounds on the absolute values on $\mathcal J_{ij}$ and $\munn$ (see Lemma~\ref{lem:J11},
Lemma~\ref{lem:J12}, and Lemma~\ref{lem:boundsmeanvar}), therefore we can 
propagate the error also through the function that calculates the singular value (Eq.~\eqref{eq:S}). 

\begin{align}
& \Delta \left(S(\mu,\omega,\nu,\tau,\lambda ,\alpha ) \right)\ = \\ \nonumber
& \Delta \left( \frac{1}{2} \ \left(\sqrt{({\mathcal J}_{11}+{\mathcal J}_{22} - 2 \munn {\mathcal J}_{12})^2+({\mathcal J}_{21} - 2 \munn {\mathcal J}_{11}-{\mathcal J}_{12})^2} \ + \right. \right.\\ \nonumber
& \left. \left. \sqrt{({\mathcal J}_{11}-{\mathcal J}_{22} + 2 \munn {\mathcal J}_{12})^2+({\mathcal J}_{12}+{\mathcal J}_{21} - 2 \munn {\mathcal J}_{11})^2} \right)\right) < 292 \epsilon.
\end{align}
\end{proof}

\paragraph{Precision of Implementations.}
We will show that our computations are correct up to 3 ulps. For
our implementation in GNU C library and the hardware architectures
that we used, the precision of all mathematical functions that we used
is at least one ulp.
The term ``ulp'' (acronym for ``unit in the last place'') was coined
by W. Kahan in 1960. It is the highest precision (up to some factor
smaller 1), which can be
achieved for the given hardware and floating point representation.

Kahan defined ulp as \citep{Kahan:04}:
\begin{quote}
``Ulp$(x)$ is the gap between the two {\em finite} floating-point numbers
nearest $x$, even if $x$ is one of them. (But ulp(NaN) is NaN.)''
\end{quote}
Harrison defined ulp as \citep{Harrison:99}:
\begin{quote}
``an ulp in $x$ is the distance
between the two closest {\em straddling} floating point numbers $a$ and $b$, i.e.\ those with
$a \leq x \leq b$ and $a \not= b$ assuming an unbounded exponent range.''
\end{quote}
In the literature we find also slightly different definitions
\citep{Muller:05}.


According to \citep{Muller:05} who refers to \citep{Goldberg:91}:
\begin{quote}
``IEEE-754 mandates four standard rounding modes:''

``Round-to-nearest: $r(x)$ is the floating-point value closest to $x$ with the
usual distance; if two floating-point value are equally close to $x$, then $r(x)$
is the one whose least significant bit is equal to zero.''

``IEEE-754 standardises 5 operations: addition (which we shall note $\oplus$ in order to
distinguish it from the operation over the reals), subtraction ($\ominus$), multiplication
($\otimes$), division ($\oslash$), and also square root.''

``IEEE-754 specifies {em exact rounding} [Goldberg, 1991, \S1.5]: the result of a
floating-point operation is the same as if the operation were performed on the
real numbers with the given inputs, then rounded according to the rules in the
preceding section. Thus, $x \oplus y$ is defined as $r(x + y)$, with $x$ and $y$ taken as
elements of $\dR \cup \{-\infty,+\infty\}$; the same applies for the other operators.''
\end{quote}
Consequently, the IEEE-754 standard guarantees that addition,
subtraction, multiplication, division, and squared root is precise up
to one ulp.

We have to consider transcendental functions. First the is the
exponential function, and then the complementary error
function $\mathrm{erfc}(x)$,
which can be computed via the error function $\mathrm{erf}(x)$.

Intel states \citep{Muller:05}:
\begin{quote}
``With the Intel486 processor and Intel 387 math coprocessor, the worst-
case, transcendental function error is typically $3$ or $3.5$ ulps, but is some-
times as large as $4.5$ ulps.''
\end{quote}

According to \url{https://www.mirbsd.org/htman/i386/man3/exp.htm} and 
\url{http://man.openbsd.org/OpenBSD-current/man3/exp.3}:
\begin{quote}
``exp$(x)$, log$(x)$, expm1$(x)$ and log1p$(x)$ are accurate to within an ulp''
\end{quote}
which is the same for freebsd \url{https://www.freebsd.org/cgi/man.cgi?query=exp&sektion=3&apropos=0&manpath=freebsd}:
\begin{quote}
``The values of exp(0), expm1(0), exp2(integer), and pow(integer, integer)
are exact provided that they are representable.  Otherwise the error in
these functions is generally below one ulp.''
\end{quote}
The same holds for ``FDLIBM'' \url{http://www.netlib.org/fdlibm/readme}:
\begin{quote}
``FDLIBM is intended to provide a reasonably portable (see
assumptions below), reference quality (below one ulp for
major functions like sin,cos,exp,log) math library
(libm.a).''
\end{quote}

In
\url{http://www.gnu.org/software/libc/manual/html_node/Errors-in-Math-Functions.html}
we find that both $\mathrm{exp}$ and
$\mathrm{erf}$ have an error of 1 ulp while $\mathrm{erfc}$ has an
error up to 3 ulps depending on the architecture.
For the most common architectures as used by us, however, the error of
$\mathrm{erfc}$ is 1 ulp.


We implemented the function in the programming language C.
We rely on the GNU C Library \citep{Loosemore:16}.
According to the GNU C Library manual which can be obtained from
\url{http://www.gnu.org/software/libc/manual/pdf/libc.pdf},
the errors of the math functions $\exp$, $\mathrm{erf}$, and
$\mathrm{erfc}$
are not larger than 3 ulps for all architectures
\citep[pp. 528]{Loosemore:16}.
For the architectures ix86, i386/i686/fpu, and m68k/fpmu68k/m680x0/fpu
that we used the error are at least one ulp
\citep[pp. 528]{Loosemore:16}.

\subsubsection{Intermediate Lemmata and Proofs}
\label{sec:smallLemmata}

Since we focus on the fixed point 
$(\mu,\nu)=(0,1)$,
we assume for our whole analysis
that $\alpha = \alpha_{\rm 01}$ and $\lambda=\lambda_{\rm 01}$.
Furthermore, we restrict the range of the variables
$\mu \in [\mu_{\rm min},\mu_{\rm max}]=[-0.1,0.1]$,
$\omega \in [\omega_{\rm min},\omega_{\rm max}]=[-0.1,0.1]$,
$\nu \in [\nu_{\rm min},\nu_{\rm max}]=[0.8,1.5]$, and
$\tau \in [\tau_{\rm min},\tau_{\rm max}]=[0.8,1.25]$.

For bounding different partial derivatives we need properties of
different functions. 
We will bound a the absolute value of a function by computing an upper
bound on its maximum and a lower bound on its minimum. These bounds
are computed by upper or lower bounding terms. The bounds get tighter
if we can combine terms to a more complex function and bound this
function. The following lemmata give some properties of functions that
we will use in bounding complex functions. 

Throughout this work, we use the error function $\erf(x):=\frac{1}{\sqrt{\pi}} \int_{-x} ^x e^{-t^2}$ and the complementary
error function $\erfc (x) = 1 - \erf(x)$. \index{error function!definition} \index{complementary error function!definition}
\index{erf} \index{erfc}

\begin{lemma}[Basic functions]
\label{lem:basics}

$\exp(x)$ is strictly monotonically increasing from $0$ at $-\infty$ to
$\infty$ at $\infty$ and has positive curvature.

According to its definition $\erfc (x)$ is strictly monotonically decreasing from 2 at $-\infty$ to 0 at $\infty$.
\end{lemma}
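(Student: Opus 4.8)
The plan is to verify each of the two assertions directly from the defining formulas by elementary one-variable calculus, differentiating term by term and reading off the limiting behaviour from standard facts. No fixed-point machinery or estimates are needed here; the lemma is purely a collection of monotonicity and convexity statements that later lemmata will cite when bounding compound expressions.

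First I would handle $\exp$. Its derivative satisfies $\frac{\mathrm{d}}{\mathrm{d}x}\exp(x)=\exp(x)$, which is strictly positive for every real $x$, so $\exp$ is strictly monotonically increasing on $\dR$. The second derivative is $\frac{\mathrm{d}^2}{\mathrm{d}x^2}\exp(x)=\exp(x)>0$ as well, which is exactly the statement that $\exp$ has positive curvature (is strictly convex). The boundary values $\lim_{x\to-\infty}\exp(x)=0$ and $\lim_{x\to\infty}\exp(x)=\infty$ follow from the standard series characterisation of the exponential, and together with monotonicity they give the claimed range $(0,\infty)$.

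Next I would handle $\erfc$. Writing $\erfc(x)=1-\erf(x)$ with $\erf(x)=\frac{1}{\sqrt{\pi}}\int_{-x}^{x}e^{-t^2}\,\mathrm{d}t$, and noting that the integrand is even so that $\erf(x)=\frac{2}{\sqrt{\pi}}\int_{0}^{x}e^{-t^2}\,\mathrm{d}t$, the fundamental theorem of calculus gives $\erf'(x)=\frac{2}{\sqrt{\pi}}e^{-x^2}>0$ for all $x$, hence $\erfc'(x)=-\frac{2}{\sqrt{\pi}}e^{-x^2}<0$: $\erfc$ is strictly monotonically decreasing. For the endpoints I would invoke the Gaussian integral $\int_{-\infty}^{\infty}e^{-t^2}\,\mathrm{d}t=\sqrt{\pi}$, which yields $\erf(\infty)=1$ and, by oddness of $\erf$, $\erf(-\infty)=-1$; therefore $\erfc(-\infty)=2$ and $\erfc(\infty)=0$, as claimed. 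The only point that is at all worth spelling out — and as close to an ``obstacle'' as this lemma has — is reconciling the slightly nonstandard normalisation $\erf(x)=\frac{1}{\sqrt{\pi}}\int_{-x}^{x}e^{-t^2}\,\mathrm{d}t$ used in this paper with the usual definition and fixing the constant via the Gaussian integral; everything else is immediate.
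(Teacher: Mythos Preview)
Your proposal is correct and complete. The paper actually states this lemma without any proof at all, treating the assertions as self-evident basic facts about $\exp$ and $\erfc$; your write-up simply fills in the elementary calculus verification that the paper omits, and there is nothing to compare.
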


Next we introduce a bound on $\erfc $:
\begin{lemma}[Erfc bound from Abramowitz]
\label{lem:Abramowitz}
\index{Abramowitz bounds}

\begin{align}
\frac{2 e^{-x^2}}{\sqrt{\pi } \left(\sqrt{x^2+2}+x\right)} \ &< \
\erfc (x) \ \leq \ 
\frac{2 e^{-x^2}}{\sqrt{\pi } \left(\sqrt{x^2+\frac{4}{\pi }}+x\right)},
\end{align}
for $x>0$. 
\end{lemma}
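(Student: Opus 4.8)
The plan is to reduce the two‑sided estimate to a statement about the single function
\begin{align}
F(x) \ := \ \frac{\sqrt{\pi}}{2}\, e^{x^2}\, \erfc(x) \ = \ e^{x^2}\int_x^{\infty} e^{-t^2}\, dt ,
\end{align}
which satisfies the Riccati identity $F'(x) = 2x F(x) - 1$ (differentiate, using the fundamental theorem of calculus). Dividing the claimed bounds by $\tfrac{2}{\sqrt{\pi}} e^{-x^2}$, the Lemma becomes $G_2(x) < F(x) \le G_{4/\pi}(x)$ for $x>0$, where $G_a(x) := \bigl(x+\sqrt{x^2+a}\bigr)^{-1}$. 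An elementary computation gives $G_a'(x) = -G_a(x)/\sqrt{x^2+a}$, hence
\begin{align}
\delta_a(x) \ := \ G_a'(x) - \bigl(2x G_a(x) - 1\bigr) \ = \ \frac{(x^2+a-1) - x\sqrt{x^2+a}}{\sqrt{x^2+a}\,\bigl(x+\sqrt{x^2+a}\bigr)} .
\end{align}
The denominator is positive; since $a>1$ for $a\in\{2,4/\pi\}$, the numerator has the same sign as $(a-2)x^2+(a-1)^2$ (expand $(x^2+a-1)^2-x^2(x^2+a)$). For $a=2$ this is the constant $1>0$, so $\delta_2>0$ on $[0,\infty)$; for $a=4/\pi$ it is strictly decreasing in $x^2$, so $\delta_{4/\pi}$ is positive on $[0,x_\ast)$ and negative on $(x_\ast,\infty)$ with $x_\ast:=(4/\pi-1)/\sqrt{2-4/\pi}$. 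I will also use the trivial facts $F(0)=\tfrac{\sqrt\pi}{2}=G_{4/\pi}(0)>G_2(0)=\tfrac1{\sqrt2}$ and $0<F(x)<\tfrac1{2x}$ (because $\int_x^\infty e^{-t^2}dt<\int_x^\infty \tfrac{t}{x}e^{-t^2}dt=\tfrac{e^{-x^2}}{2x}$), which together with $G_a(x)<\tfrac1{2x}$ force $e^{-x^2}\bigl(F(x)-G_a(x)\bigr)\to0$ as $x\to\infty$.

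For the left inequality I would set $D:=F-G_2$ and use $F'=2xF-1$ together with $\delta_2>0$ to get
\begin{align}
D'(x) \ = \ 2xF(x)-1-G_2'(x) \ < \ 2xF(x)-1+1-2xG_2(x) \ = \ 2x D(x) ,
\end{align}
so that $\tfrac{d}{dx}\bigl(e^{-x^2}D(x)\bigr)=e^{-x^2}\bigl(D'(x)-2xD(x)\bigr)<0$. Thus $e^{-x^2}D(x)$ is strictly decreasing on $[0,\infty)$; since it tends to $0$ at $+\infty$, it is strictly positive everywhere, i.e.\ $D>0$, which is the left bound (strictly, as claimed).

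For the right inequality I would set $E:=G_{4/\pi}-F$; the same algebra gives $E'(x)=\delta_{4/\pi}(x)+2xE(x)$, hence $\tfrac{d}{dx}\bigl(e^{-x^2}E(x)\bigr)=e^{-x^2}\delta_{4/\pi}(x)$, and since $E(0)=0$,
\begin{align}
e^{-x^2}E(x) \ = \ \int_0^x e^{-s^2}\,\delta_{4/\pi}(s)\,ds .
\end{align}
Because $\delta_{4/\pi}$ is first positive (on $[0,x_\ast)$) and then negative (on $(x_\ast,\infty)$), this primitive increases from $0$ on $[0,x_\ast]$ and decreases on $[x_\ast,\infty)$; and since $e^{-x^2}E(x)\to0$ at $+\infty$, its limit — the total integral over $[0,\infty)$ — is $0$. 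Hence the decreasing tail never falls below $0$, so $e^{-x^2}E(x)\ge0$ for all $x\ge0$, i.e.\ $F\le G_{4/\pi}$, with equality only at $x=0$.

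I expect the only mildly delicate points to be the sign analysis of $\delta_a$ — in particular verifying that $\delta_{4/\pi}$ changes sign exactly once on $(0,\infty)$ — and the decay $e^{-x^2}(F(x)-G_a(x))\to0$ as $x\to\infty$; both follow from the explicit factorisation of the numerator of $\delta_a$ and the one‑line bound $F(x)<1/(2x)$, so there is no real obstacle beyond bookkeeping. The two essentially identical "monotone‑primitive" arguments above then yield the lower and the upper bound respectively.
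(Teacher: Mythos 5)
The paper does not prove this lemma at all: it simply cites Abramowitz and Stegun (page 298, formula 7.1.13). Your proposal supplies a complete, self-contained proof via the Riccati identity $F'(x)=2xF(x)-1$ for $F(x)=\tfrac{\sqrt{\pi}}{2}e^{x^2}\erfc(x)$, together with the integrating-factor trick $\tfrac{d}{dx}\bigl(e^{-x^2}D(x)\bigr)=e^{-x^2}\bigl(D'(x)-2xD(x)\bigr)$. I checked the key computations: $G_a'=-G_a/\sqrt{x^2+a}$ is correct, the numerator of $\delta_a$ reduces to $(x^2+a-1)-x\sqrt{x^2+a}$, and since both competing quantities are nonnegative for $a>1$, its sign indeed agrees with $(a-2)x^2+(a-1)^2$, giving $\delta_2>0$ everywhere and a single sign change of $\delta_{4/\pi}$ at $x_\ast=(4/\pi-1)/\sqrt{2-4/\pi}$. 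The boundary data $F(0)=G_{4/\pi}(0)=\sqrt{\pi}/2>G_2(0)=1/\sqrt{2}$ and the decay $F(x),G_a(x)<1/(2x)$ close the argument on both ends. The one-sign-change monotone-primitive argument for the upper bound is particularly elegant and also recovers the precise form of the inequality (strict for $x>0$, equality at $x=0$). What your route buys over the paper's citation is transparency and reproducibility; what it costs is about a page of work. Both establish exactly the same two-sided bound, so either suffices for the paper's downstream use.
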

\begin{proof}
The statement follows immediately from 
\citep{Abramowitz:64} (page 298, formula 7.1.13).
\end{proof}

These bounds are displayed in figure~\ref{fig:abramowitz}. \index{Abramowitz bounds} \index{error function!bounds}\index{complementary error function!bounds} \index{erf} \index{erfc}
\begin{figure}
 \centering
 \includegraphics[width=0.49\columnwidth]{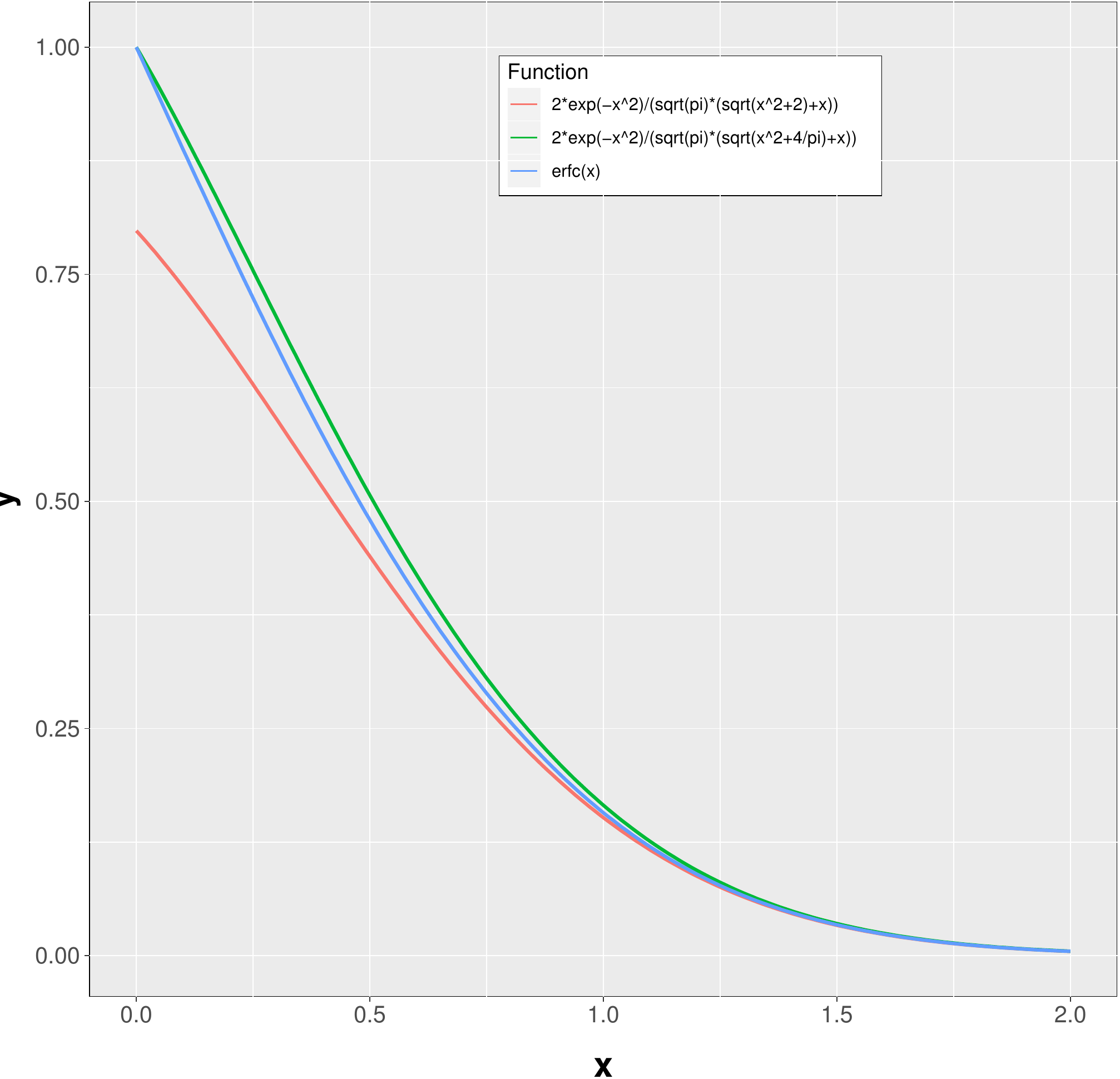}
 \caption[Graph of the Abramowitz bound for the complementary error function.]{Graphs of the upper and lower bounds on $\erfc$.  The lower bound $\frac{2 e^{-x^2}}{\sqrt{\pi } \left(\sqrt{x^2+2}+x\right)}$ (red),  
  the upper bound $\frac{2 e^{-x^2}}{\sqrt{\pi } \left(\sqrt{x^2+\frac{4}{\pi }}+x\right)}$ (green) and the function $\erfc (x)$ (blue) as 
  treated in Lemma~\ref{lem:Abramowitz}. \label{fig:abramowitz}}
\end{figure}

\begin{lemma}[Function $e^{x^2} \erfc (x)$]
\label{lem:exerfc}

$e^{x^2} \erfc (x)$ is strictly monotonically decreasing for $x > 0$ 
and has positive curvature 
(positive 2nd order derivative), that is, the decreasing slowes down.
\end{lemma}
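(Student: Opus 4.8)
The plan is to analyze $f(x) := e^{x^2}\erfc(x)$ by computing $f'$ and $f''$ explicitly and reducing each claim to an elementary algebraic inequality that follows from the two-sided estimate on $\erfc$ in Lemma~\ref{lem:Abramowitz}.

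First I would use $\erfc'(x) = -\tfrac{2}{\sqrt{\pi}} e^{-x^2}$ to obtain
\begin{align}
f'(x) \ = \ 2 x\, e^{x^2}\erfc(x) \ - \ \frac{2}{\sqrt{\pi}} \ .
\end{align}
Showing $f'(x) < 0$ for $x>0$ is then equivalent to $x\, e^{x^2}\erfc(x) < \tfrac{1}{\sqrt{\pi}}$. Plugging in the upper Abramowitz bound $e^{x^2}\erfc(x) \le \tfrac{2}{\sqrt{\pi}\left(\sqrt{x^2+4/\pi}+x\right)}$ from Lemma~\ref{lem:Abramowitz}, this reduces to $\tfrac{2x}{\sqrt{x^2+4/\pi}+x} < 1$, i.e.\ $x < \sqrt{x^2+4/\pi}$, which holds for all $x$. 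Since additionally $f'(0) = -2/\sqrt{\pi} < 0$, one concludes that $f$ is strictly decreasing on $[0,\infty)$.

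Next I would differentiate once more. Because $\left(e^{x^2}\erfc(x)\right)' = f'(x)$, we get
\begin{align}
f''(x) \ = \ 2\, e^{x^2}\erfc(x) + 2x\, f'(x) \ = \ \left(2 + 4x^2\right) e^{x^2}\erfc(x) \ - \ \frac{4x}{\sqrt{\pi}} \ ,
\end{align}
so $f''(x) > 0$ is equivalent to $e^{x^2}\erfc(x) > \tfrac{2x}{\sqrt{\pi}\,(1+2x^2)}$. Here I would use the \emph{lower} Abramowitz bound $e^{x^2}\erfc(x) > \tfrac{2}{\sqrt{\pi}\left(\sqrt{x^2+2}+x\right)}$, which makes it enough to check $\tfrac{1}{\sqrt{x^2+2}+x} \ge \tfrac{x}{1+2x^2}$; cross-multiplying gives $1+x^2 \ge x\sqrt{x^2+2}$, and squaring the two nonnegative sides collapses this to $1 \ge 0$. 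The phrase ``the decreasing slows down'' is merely the interpretation of $f'' > 0$, so no further work is required.

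I do not expect a genuine obstacle: the only thing that has to work is that the Abramowitz bounds are sharp enough in both directions, and they are, with room to spare (the $4/\pi$ versus $2$ inside the radicals). The one point to be careful about is bookkeeping — the monotonicity claim needs the \emph{upper} bound of Lemma~\ref{lem:Abramowitz} while the convexity claim needs the \emph{lower} one, so both halves of that lemma are used, and one should track that all inequalities remain strict for $x>0$, which they do since the lower Abramowitz bound is itself strict.
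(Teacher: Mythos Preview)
Your proposal is correct and follows essentially the same approach as the paper: compute $f'$ and $f''$ explicitly, then use the upper Abramowitz bound from Lemma~\ref{lem:Abramowitz} for monotonicity and the lower Abramowitz bound for convexity, reducing each to the elementary inequality $(1+x^2)^2 > x^2(x^2+2)$. Your algebraic reductions are slightly more streamlined than the paper's, but the argument is the same.
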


\begin{figure}
 \includegraphics[width=0.49\columnwidth]{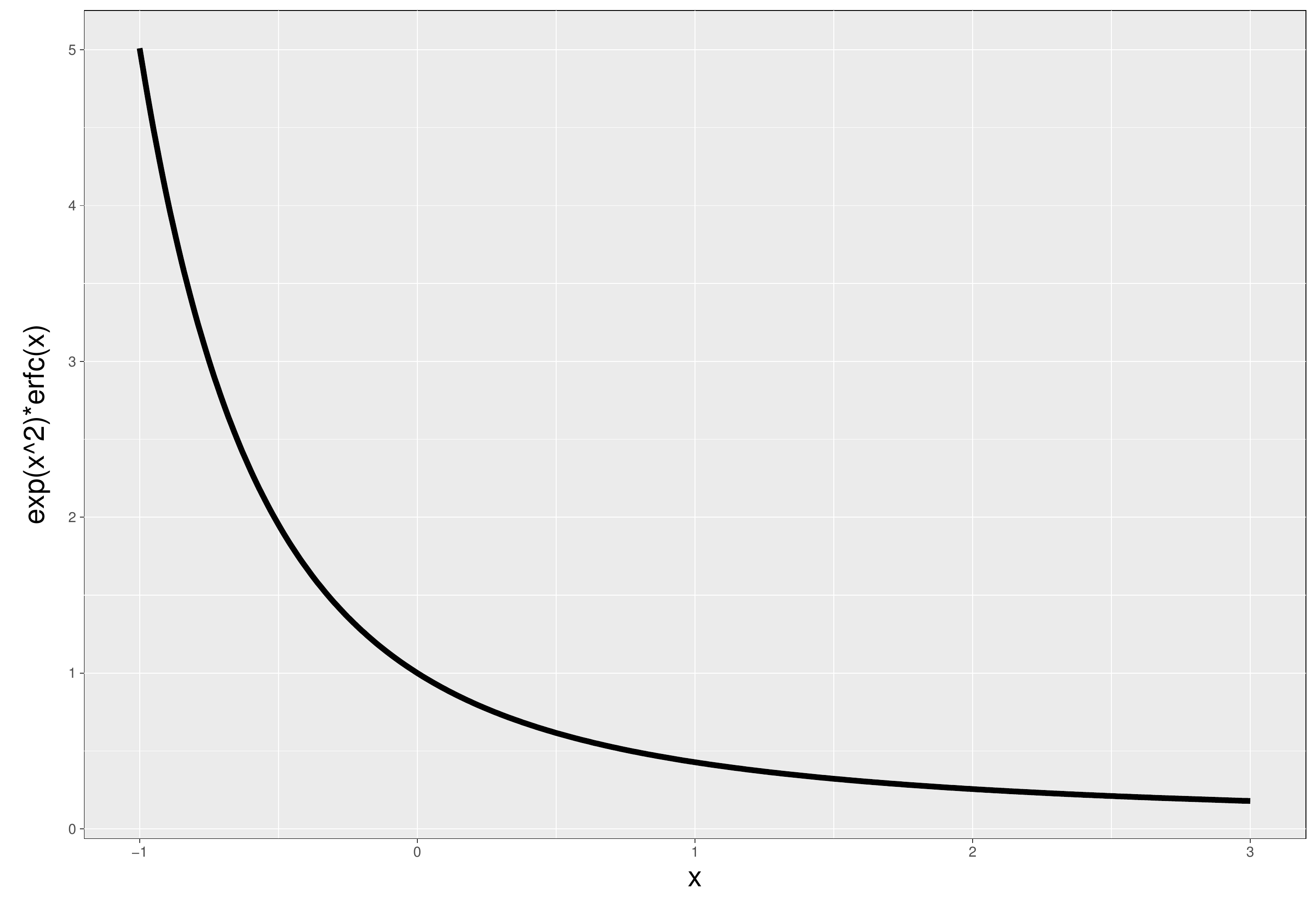}
 \includegraphics[width=0.49\columnwidth]{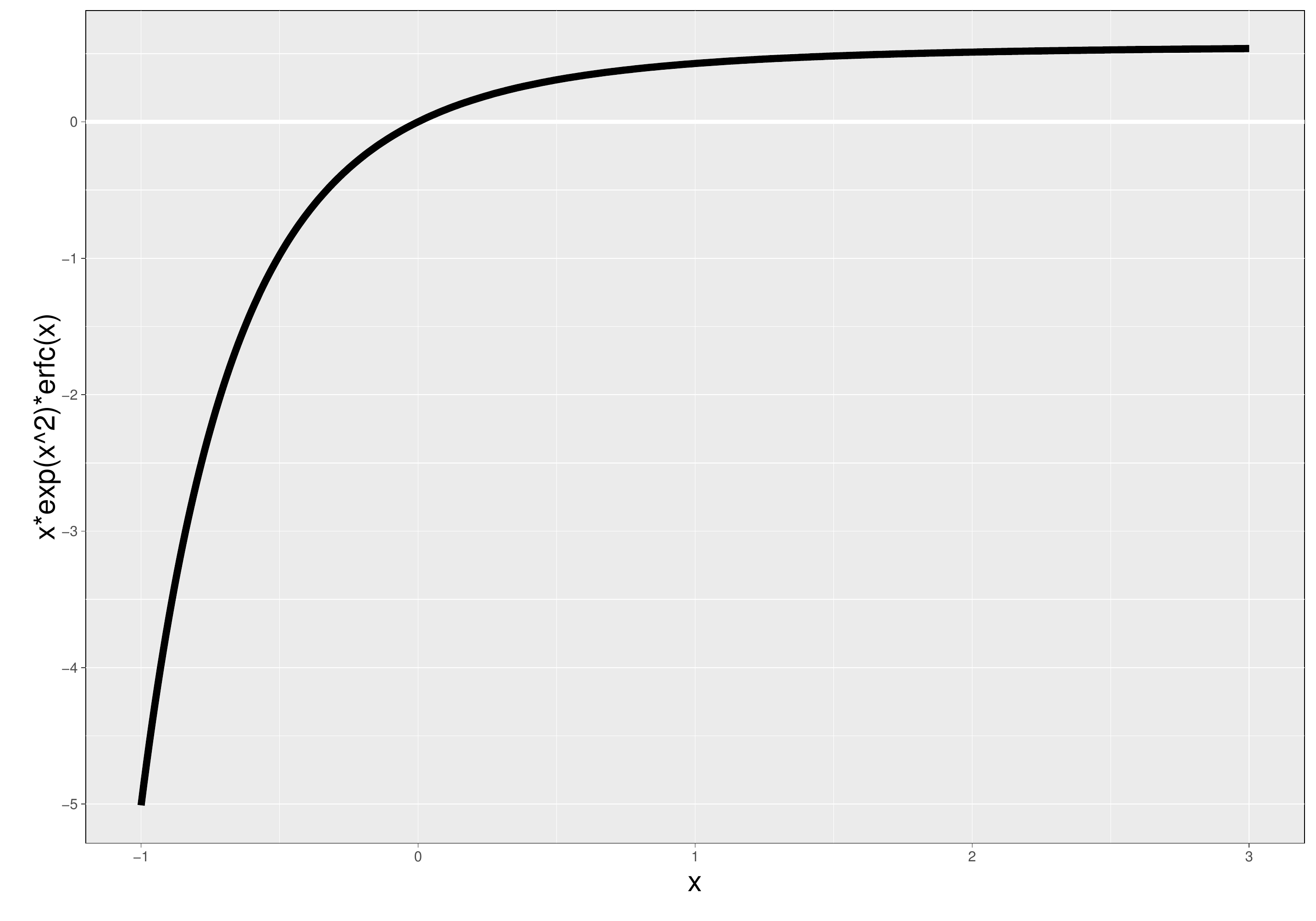}
 \caption[Graphs of the functions  $e^{x^2} \erfc (x)$ and $x e^{x^2} \erfc (x)$.]{Graphs of the functions  $e^{x^2} \erfc (x)$ (left) and $x e^{x^2} \erfc (x)$ (right) treated in Lemma~\ref{lem:exerfc} and Lemma~\ref{lem:xeErfc}, 
 respectively. \label{fig:lemma4}}
\end{figure} 

A graph of the function is displayed in Figure~\ref{fig:lemma4}.

\begin{proof}
The derivative of $e^{x^2} \erfc (x)$ is
\begin{align}
\frac{\partial e^{x^2} \erfc (x)}{\partial x} \ = \  
2 e^{x^2} x \erfc (x)-\frac{2}{\sqrt{\pi }} \ .
\end{align}
Using Lemma~\ref{lem:Abramowitz}, we get
\begin{align}
&\frac{\partial e^{x^2} \erfc (x)}{\partial x} \ = \ 
2 e^{x^2} x \erfc (x)-\frac{2}{\sqrt{\pi }}\ <  &\frac{4 x}{\sqrt{\pi } \left(\sqrt{x^2+\frac{4}{\pi
  }}+x\right)}-\frac{2}{\sqrt{\pi }}=\frac{2
  \left(\frac{2}{\sqrt{\frac{4}{\pi  x^2}+1}+1}-1\right)}{\sqrt{\pi }}
  \ < \ 0
\end{align}
Thus $e^{x^2} \erfc (x)$
is strictly monotonically decreasing for $x > 0$.

The second order derivative of $e^{x^2} \erfc (x)$ is
\begin{align}
\frac{\partial^2 e^{x^2} \erfc (x)}{\partial x^2} \ = \  
4 e^{x^2} x^2 \erfc (x)+2 e^{x^2} \erfc (x)-\frac{4 x}{\sqrt{\pi }} \ .
\end{align}

Again using Lemma~\ref{lem:Abramowitz} (first inequality), we get
\begin{align}
&2 \left(\left(2 x^2+1\right) e^{x^2} \erfc (x)-\frac{2
  x}{\sqrt{\pi }}\right)
\ > \\\nonumber
&\frac{4 \left(2 x^2+1\right)}{\sqrt{\pi } \left(\sqrt{x^2+2}+x\right)}-\frac{4 x}{\sqrt{\pi }}
\ = \\\nonumber
&\frac{4 \left(x^2-\sqrt{x^2+2} x+1\right)}{\sqrt{\pi } \left(\sqrt{x^2+2}+x\right)}
\ = \\\nonumber
&\frac{4 \left(x^2-\sqrt{x^4+2 x^2}+1\right)}{\sqrt{\pi } \left(\sqrt{x^2+2}+x\right)}
\ > \\\nonumber
&\frac{4 \left(x^2-\sqrt{x^4+2 x^2+1}+1\right)}{\sqrt{\pi }
  \left(\sqrt{x^2+2}+x\right)} \ = \ 0
\end{align}
For the last inequality we added 1 in the numerator in the square root
which is subtracted, that is, making a larger negative term in the
numerator. 
\end{proof}

\begin{lemma}[Properties of $x e^{x^2} \erfc (x)$]
\label{lem:xeErfc}
\index{error function!properties}

The function $x e^{x^2} \erfc (x)$ has the sign of $x$ and is
monotonically increasing to $\frac{1}{\sqrt{\pi }}$.
\end{lemma}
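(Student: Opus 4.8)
The plan is to dispatch the sign claim immediately, then prove monotonicity by showing that the derivative of $f(x) := x e^{x^2}\erfc(x)$ is strictly positive for every real $x$, treating $x\le 0$ and $x>0$ separately, and finally to identify the limit by squeezing with the Abramowitz bounds of Lemma~\ref{lem:Abramowitz}.

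For the sign, observe that $e^{x^2}>0$ for all $x$ and, by Lemma~\ref{lem:basics}, $\erfc(x)>0$ for all finite $x$. Hence $e^{x^2}\erfc(x)>0$, so $x e^{x^2}\erfc(x)$ has the sign of $x$ and vanishes only at $x=0$.

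For monotonicity, I would differentiate,
\begin{align}
\frac{\partial}{\partial x}\left(x e^{x^2}\erfc(x)\right) \ = \ \left(1+2x^2\right) e^{x^2}\erfc(x) \ - \ \frac{2x}{\sqrt{\pi}} \ .
\end{align}
For $x\le 0$ the second term $-\tfrac{2x}{\sqrt{\pi}}$ is nonnegative while $(1+2x^2)e^{x^2}\erfc(x)>0$, so the derivative is positive. For $x>0$, I would apply the lower Abramowitz bound $\erfc(x)>\frac{2e^{-x^2}}{\sqrt{\pi}\left(\sqrt{x^2+2}+x\right)}$ from Lemma~\ref{lem:Abramowitz}, obtaining
\begin{align}
\left(1+2x^2\right) e^{x^2}\erfc(x) \ - \ \frac{2x}{\sqrt{\pi}} \ > \ \frac{2}{\sqrt{\pi}}\left(\frac{1+2x^2}{\sqrt{x^2+2}+x} - x\right) \ = \ \frac{2}{\sqrt{\pi}}\cdot\frac{1+x^2-x\sqrt{x^2+2}}{\sqrt{x^2+2}+x} \ .
\end{align}
Since $1+x^2>0$ and $x\sqrt{x^2+2}>0$ for $x>0$, the numerator is nonnegative if and only if $(1+x^2)^2\ge x^2(x^2+2)$, i.e.\ if and only if $1\ge 0$, which holds. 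Hence the derivative is strictly positive on all of $\dR$ and $f$ is strictly monotonically increasing.

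To pin the limit, both inequalities of Lemma~\ref{lem:Abramowitz} give, for $x>0$,
\begin{align}
\frac{2x}{\sqrt{\pi}\left(\sqrt{x^2+2}+x\right)} \ < \ x e^{x^2}\erfc(x) \ \le \ \frac{2x}{\sqrt{\pi}\left(\sqrt{x^2+\frac{4}{\pi}}+x\right)} \ ,
\end{align}
and dividing numerator and denominator of each bound by $x$ shows both sides tend to $\frac{1}{\sqrt{\pi}}$ as $x\to\infty$, so the squeeze theorem forces $x e^{x^2}\erfc(x)\to\frac{1}{\sqrt{\pi}}$. Combined with strict monotonicity and $f(0)=0$, this yields the claim. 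There is no genuine obstacle here; the only point requiring care is the $x>0$ case of the derivative, where one must respect the direction of the Abramowitz inequality and check that both sides are positive before squaring, after which the reduction collapses to the trivial bound $1\ge 0$.
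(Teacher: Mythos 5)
Your proposal is correct and follows essentially the same route as the paper's own proof: compute the derivative, lower-bound $e^{x^2}\erfc(x)$ via Lemma~\ref{lem:Abramowitz}, reduce to a polynomial inequality, and squeeze with the Abramowitz bounds for the limit. You are in fact slightly more careful than the paper: you split off $x\le 0$ explicitly, which is needed because the Abramowitz bound only applies for $x>0$ (the paper applies it without stating this restriction), and your algebraic simplification to $1+x^2-x\sqrt{x^2+2}>0$, settled by $(1+x^2)^2>x^2(x^2+2)$, is a touch cleaner than the paper's device of inserting $1/x^2$ under the radical to form a perfect square.
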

\begin{proof}
The derivative of  $x e^{x^2} \erfc (x)$ is
\begin{align}
2 e^{x^2} x^2 \erfc (x)+e^{x^2} \erfc (x)-\frac{2
  x}{\sqrt{\pi }} \ .
\end{align}
This derivative is positive since
\begin{align}
&2 e^{x^2} x^2 \erfc (x)+e^{x^2} \erfc (x)-\frac{2
  x}{\sqrt{\pi }}\ = \\\nonumber
&e^{x^2} \left(2 x^2+1\right) \erfc (x)-\frac{2
  x}{\sqrt{\pi }}>\frac{2 \left(2 x^2+1\right)}{\sqrt{\pi }
  \left(\sqrt{x^2+2}+x\right)}-\frac{2 x}{\sqrt{\pi }}=\frac{2
  \left(\left(2 x^2+1\right)-x
  \left(\sqrt{x^2+2}+x\right)\right)}{\sqrt{\pi }
  \left(\sqrt{x^2+2}+x\right)}\ = \\\nonumber
&\frac{2 \left(x^2-x \sqrt{x^2+2}
  +1\right)}{\sqrt{\pi } \left(\sqrt{x^2+2}+x\right)}=\frac{2
  \left(x^2-x \sqrt{x^2+2} +1\right)}{\sqrt{\pi }
  \left(\sqrt{x^2+2}+x\right)}>\frac{2
  \left(x^2-x \sqrt{x^2+\frac{1}{x^2}+2} +1\right)}{\sqrt{\pi }
  \left(\sqrt{x^2+2}+x\right)}\ = \\\nonumber
&\frac{2 \left(x^2-\sqrt{x^4+2
  x^2+1}+1\right)}{\sqrt{\pi } \left(\sqrt{x^2+2}+x\right)}=\frac{2
  \left(x^2-\sqrt{\left(x^2+1\right)^2}+1\right)}{\sqrt{\pi }
  \left(\sqrt{x^2+2}+x\right)}=0 \ .
\end{align}

We apply Lemma~\ref{lem:Abramowitz}
to $x \erfc (x) e^{x^2}$ and divide the terms of the lemma by $x$,
which gives
\begin{align}
\frac{2}{\sqrt{\pi }
  \left(\sqrt{\frac{2}{x^2}+1}+1\right)}<x\erfc (x) e^{x^2}\leq
  \frac{2}{\sqrt{\pi } \left(\sqrt{\frac{4}{\pi  x^2}+1}+1\right)} \ .
\end{align}
For $\lim_{x \to \infty}$ both the upper and the lower bound go to 
$\frac{1}{\sqrt{\pi }}$.
\end{proof}

\begin{lemma}[Function $\mu \omega$]
\label{lem:x11}

$h_{11}(\mu, \omega) = \mu \omega$ is monotonically increasing in $\mu \omega$.
It has minimal value $t_{11}=-0.01$ and maximal value
$T_{11}=0.01$.
\end{lemma}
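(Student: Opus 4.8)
The statement is essentially a bookkeeping fact, so the plan is short. First I would observe that $h_{11}(\mu,\omega)=\mu\omega$ is, as a function of the single scalar quantity $\mu\omega$, literally the identity map; hence it is trivially (strictly) monotonically increasing in $\mu\omega$. No analysis of $\erfc$, exponentials, or derivatives is needed here — this is purely the observation that $t\mapsto t$ is increasing.

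Next I would pin down the extremal values on the prescribed box $\mu\in[\mu_{\min},\mu_{\max}]=[-0.1,0.1]$, $\omega\in[\omega_{\min},\omega_{\max}]=[-0.1,0.1]$. The product $\mu\omega$ is a bilinear function on a rectangle, so it attains its extrema at the corners. A case split on signs does it: when $\mu$ and $\omega$ have the same sign the product is nonnegative and maximized at $(\mu,\omega)=(0.1,0.1)$ or $(-0.1,-0.1)$, giving $\mu\omega=0.01$; when they have opposite signs the product is nonpositive and minimized at $(\mu,\omega)=(0.1,-0.1)$ or $(-0.1,0.1)$, giving $\mu\omega=-0.01$. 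Equivalently, $|\mu\omega|=|\mu|\,|\omega|\le 0.1\cdot 0.1=0.01$, with equality attainable, and both signs are realized. Hence $T_{11}=0.01$ and $t_{11}=-0.01$.

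There is no real obstacle in this lemma; the only "work" is the elementary corner-extremum argument for a bilinear map on a box, which I would state in one or two lines. Its role is simply to record the range of the auxiliary variable $\mu\omega$ so that later lemmata (which bound more complicated functions of $\mu\omega$ and $\nu\tau$) can substitute these endpoints when maximizing or minimizing.
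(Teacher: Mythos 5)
Your argument is correct and matches the paper's intent; the paper simply states ``Obvious'' as the proof, and your one- or two-line corner-extremum/identity-map reasoning is exactly the elementary filling-in that the authors leave implicit.
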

\begin{proof}
Obvious.
\end{proof}

\begin{lemma}[Function $\nu \tau$]
\label{lem:x22}

$h_{22}(\nu, \tau)=\nu \tau$ is
monotonically increasing in $\nu \tau$ and is positive.
It has minimal value $t_{22}=0.64$ and maximal value
$T_{22}=1.875$.
\end{lemma}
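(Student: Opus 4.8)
The plan is to treat this as the elementary bookkeeping lemma it is, exactly parallel to the preceding Lemma~\ref{lem:x11} on $\mu\omega$. Since $h_{22}$ is literally the map $(\nu,\tau)\mapsto \nu\tau$, the assertion that it is ``monotonically increasing in $\nu\tau$'' holds by definition: it is the identity map in the composite variable $x:=\nu\tau$. Positivity is immediate, because on the domain of this subsection, $\nu\in[\nu_{\rm min},\nu_{\rm max}]=[0.8,1.5]$ and $\tau\in[\tau_{\rm min},\tau_{\rm max}]=[0.8,1.25]$, so both factors are bounded below by $0.8>0$ and hence $\nu\tau\geq 0.64>0$.

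For the extremal values, I would note that $\frac{\partial}{\partial \nu}(\nu\tau)=\tau>0$ and $\frac{\partial}{\partial \tau}(\nu\tau)=\nu>0$ everywhere on the closed rectangular domain, so $h_{22}$ is strictly increasing in each coordinate separately; its minimum over the rectangle is therefore attained at the lower-left corner $(\nu_{\rm min},\tau_{\rm min})$ and its maximum at the upper-right corner $(\nu_{\rm max},\tau_{\rm max})$. Evaluating,
\begin{align}
t_{22} \ &= \ \nu_{\rm min}\,\tau_{\rm min} \ = \ 0.8\cdot 0.8 \ = \ 0.64 \ , \\
T_{22} \ &= \ \nu_{\rm max}\,\tau_{\rm max} \ = \ 1.5\cdot 1.25 \ = \ 1.875 \ ,
\end{align}
which yields the claimed bounds $0.64 \leq \nu\tau \leq 1.875$.

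There is no genuine obstacle here; the one point requiring mild care is that this subsection works over the \emph{enlarged} domain with $\tau\in[0.8,1.25]$ (rather than $\tau\in[0.95,1.1]$ as in Theorem~\ref{lem:fixedPoint}), so the relevant corner products are $0.64$ and $1.875$ rather than the tighter numbers one would obtain on the smaller rectangle. Getting that domain right, and recording the resulting constants $t_{22},T_{22}$ for reuse when bounding the more complex expressions built out of $\nu\tau$, is the entire content of the lemma.
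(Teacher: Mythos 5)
Your proof is correct and takes essentially the same approach the paper intends; the paper's own proof is simply ``Obvious,'' and your write-up just spells out the corner-evaluation argument and the domain bookkeeping that justify the constants $t_{22}=0.64$ and $T_{22}=1.875$.
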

\begin{proof}
Obvious. 
\end{proof}

\begin{lemma}[Function $\frac{\mu \omega+\nu \tau}{\sqrt{2} \sqrt{\nu \tau}}$]
\label{lem:xx1}

$h_1(\mu, \omega,\nu, \tau)=\frac{\mu \omega+\nu \tau}{\sqrt{2} \sqrt{\nu \tau}}$
is larger than zero and increasing in both $\nu
\tau$ and $\mu \omega$.
It has minimal value $t_1=0.5568$ and maximal value
$T_1=0.9734$.
\end{lemma}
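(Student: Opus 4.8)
The plan is to reduce everything to the two derived variables $x := \mu\omega$ and $y := \nu\tau$, since the lemma phrases monotonicity directly in terms of $\mu\omega$ and $\nu\tau$. By Lemma~\ref{lem:x11} we have $x \in [-0.01, 0.01]$ and by Lemma~\ref{lem:x22} we have $y \in [0.64, 1.875]$, with both $\mu\omega$ and $\nu\tau$ ranging monotonically over these intervals. Writing $h_1 = \frac{x+y}{\sqrt{2}\sqrt{y}} = \frac{1}{\sqrt{2}}\left(\frac{x}{\sqrt{y}} + \sqrt{y}\right)$, the two monotonicity claims and the positivity will all follow from elementary one-variable calculus together with the corner evaluations.

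First I would establish monotonicity in $\mu\omega$: $\frac{\partial h_1}{\partial x} = \frac{1}{\sqrt{2}\sqrt{y}} > 0$ since $y \geq 0.64 > 0$, so $h_1$ is strictly increasing in $\mu\omega$. Next, for monotonicity in $\nu\tau$, compute $\frac{\partial h_1}{\partial y} = \frac{1}{2\sqrt{2}}\left(\frac{1}{\sqrt{y}} - \frac{x}{y^{3/2}}\right) = \frac{y - x}{2\sqrt{2}\, y^{3/2}}$; since $y \geq 0.64$ and $x \leq 0.01$, the numerator satisfies $y - x \geq 0.63 > 0$, hence $h_1$ is strictly increasing in $\nu\tau$ as well.

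Because $h_1$ is jointly increasing in $x$ and in $y$ on the box, its extrema over the domain are attained at opposite corners. Evaluating at $(x,y) = (-0.01, 0.64)$ gives the minimum $t_1 = \frac{0.63}{\sqrt{2}\sqrt{0.64}} \approx 0.5568$, which is in particular strictly positive, establishing $h_1 > 0$; evaluating at $(x,y) = (0.01, 1.875)$ gives the maximum $T_1 = \frac{1.885}{\sqrt{2}\sqrt{1.875}} \approx 0.9734$.

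The argument is entirely routine; the only point that needs a moment's care is verifying that the sign of $y - x$, and hence the direction of monotonicity in $\nu\tau$, never flips on the domain, which is immediate from the numeric bounds $y \geq 0.64$ and $|x| \leq 0.01$. I therefore do not anticipate any real obstacle.
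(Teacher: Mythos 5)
Your proof is correct and follows essentially the same route as the paper: substitute $x=\mu\omega$, $y=\nu\tau$, differentiate, and observe the sign is fixed on the box. The paper's own proof only writes out the derivative with respect to $\nu\tau$ (treating monotonicity in $\mu\omega$ as obvious and not spelling out the corner evaluations that give $t_1$ and $T_1$), so your version is slightly more explicit but not a different argument.
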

\begin{proof}
The derivative of the function
$\frac{\mu \omega+x}{\sqrt{2} \sqrt{x}}$
with respect to $x$ is
\begin{align}
&\frac{1}{\sqrt{2} \sqrt{x}}-\frac{\mu \omega+x}{2 \sqrt{2} x^{3/2}}
\ = \frac{2 x-(\mu \omega+x)}{2 \sqrt{2} x^{3/2}}
\ = \ \frac{x-\mu \omega}{2 \sqrt{2} x^{3/2}} \ > \ 0 \ ,
\end{align}
since $x>0.8 \cdot 0.8$ and $\mu \omega<0.1 \cdot 0.1$.
\end{proof}

\begin{lemma}[Function $\frac{\mu \omega+2 \nu \tau}{\sqrt{2} \sqrt{\nu \tau}}$]
\label{lem:xx2}

$h_2(\mu, \omega,\nu, \tau)=\frac{\mu \omega+2 \nu \tau}{\sqrt{2} \sqrt{\nu \tau}}$
is larger than zero and increasing in both $\nu
\tau$ and $\mu \omega$.
It has minimal value $t_2=1.1225$ and maximal value
$T_2=1.9417$.
\end{lemma}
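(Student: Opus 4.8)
The plan is to follow the same strategy as in the proof of Lemma~\ref{lem:xx1}: reduce the four-variable function to a function of the two combined variables $x = \nu\tau$ and $y = \mu\omega$. By Lemma~\ref{lem:x22} we have $x = \nu\tau \in [0.64, 1.875]$, and by Lemma~\ref{lem:x11} we have $y = \mu\omega \in [-0.01, 0.01]$; it therefore suffices to analyze $\tilde h_2(x,y) = \frac{y + 2x}{\sqrt{2}\,\sqrt{x}}$ on the rectangle $[0.64, 1.875] \times [-0.01, 0.01]$.

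First I would verify positivity. Since $x > 0$ and $y + 2x \geq -0.01 + 2\cdot 0.64 = 1.27 > 0$, the quotient $\tilde h_2(x,y)$ is strictly positive on the whole domain. Next, monotonicity: the partial derivative with respect to $y$ is $\frac{\partial \tilde h_2}{\partial y} = \frac{1}{\sqrt{2}\,\sqrt{x}} > 0$, so $\tilde h_2$ is strictly increasing in $\mu\omega$. For the partial derivative with respect to $x$, I would compute
\begin{align}
\frac{\partial \tilde h_2}{\partial x} \ = \ \frac{2}{\sqrt{2}\,\sqrt{x}} - \frac{y + 2x}{2\sqrt{2}\,x^{3/2}} \ = \ \frac{4x - (y + 2x)}{2\sqrt{2}\,x^{3/2}} \ = \ \frac{2x - y}{2\sqrt{2}\,x^{3/2}} \ > \ 0 \ ,
\end{align}
where the inequality uses $x \geq 0.64$ and $y \leq 0.01$, so that $2x - y \geq 1.27 > 0$. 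Hence $\tilde h_2$ is strictly increasing in $\nu\tau$.

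Finally, since $\tilde h_2$ is increasing in both arguments over a closed rectangle, its extrema are attained at opposite corners: the minimum at $(x,y) = (0.64, -0.01)$ and the maximum at $(x,y) = (1.875, 0.01)$. Evaluating these gives
\begin{align}
\tilde h_2(0.64, -0.01) \ &= \ \frac{1.27}{\sqrt{2}\,\sqrt{0.64}} \ > \ 1.1225 \ , \\
\tilde h_2(1.875, 0.01) \ &= \ \frac{3.76}{\sqrt{2}\,\sqrt{1.875}} \ < \ 1.9417 \ ,
\end{align}
which yields $t_2 = 1.1225$ and $T_2 = 1.9417$. None of these steps poses a genuine difficulty; the only point requiring care is rounding the two endpoint evaluations in the conservative direction --- down for the lower bound $t_2$ and up for the upper bound $T_2$ --- so that the claimed constants are rigorously valid.
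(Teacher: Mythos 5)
Your proposal is correct and follows essentially the same route as the paper's proof, which computes exactly the same derivative $\frac{\partial}{\partial x}\frac{\mu\omega + 2x}{\sqrt{2}\sqrt{x}} = \frac{2x - \mu\omega}{2\sqrt{2}\,x^{3/2}} > 0$. You merely spell out the positivity, the $y$-monotonicity, and the corner evaluations that the paper leaves implicit as obvious.
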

\begin{proof}
The derivative of the function
$\frac{\mu \omega+2 x}{\sqrt{2} \sqrt{x}}$
with respect to $x$ is
\begin{align}
\frac{\sqrt{2}}{\sqrt{x}}-\frac{\mu \omega+2 x}{2 \sqrt{2} x^{3/2}}=\frac{4 x-(\mu \omega+2 x)}{2 \sqrt{2} x^{3/2}}=
\frac{2 x-\mu \omega}{2 \sqrt{2} x^{3/2}} \ > \ 0 \ .
\end{align}
\end{proof}

\begin{lemma}[Function $\frac{\mu \omega}{\sqrt{2} \sqrt{\nu \tau}}$]
\label{lem:xx3}

$h_3(\mu, \omega,\nu, \tau)=\frac{\mu \omega}{\sqrt{2} \sqrt{\nu \tau}}$
monotonically decreasing in $\nu
\tau$ and monotonically increasing in $\mu \omega$.
It has minimal value $t_3=-0.0088388$ and maximal value
$T_3=0.0088388$.
\end{lemma}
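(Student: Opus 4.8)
The plan is to reduce the lemma to an elementary one-variable calculus check by exploiting that $h_3$ depends on its four arguments only through the two products $\mu\omega$ and $\nu\tau$. I would set $y=\mu\omega$ and $x=\nu\tau$, so that $h_3=\frac{y}{\sqrt 2\,\sqrt x}$; by Lemma~\ref{lem:x11} one has $y\in[t_{11},T_{11}]=[-0.01,0.01]$ and by Lemma~\ref{lem:x22} one has $x\in[t_{22},T_{22}]=[0.64,1.875]$, so in particular $x>0$ on the whole domain.

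First I would settle the monotonicity claims. From
\begin{align}
\frac{\partial h_3}{\partial y} \ = \ \frac{1}{\sqrt 2\,\sqrt x} \ > \ 0
\end{align}
it follows that $h_3$ is strictly increasing in $\mu\omega$. For the dependence on $\nu\tau$,
\begin{align}
\frac{\partial h_3}{\partial x} \ = \ -\,\frac{y}{2\sqrt 2\, x^{3/2}} ,
\end{align}
which carries the opposite sign of $y=\mu\omega$; in particular it is nonpositive whenever $\mu\omega\geq 0$, which is the sense in which $h_3$ decreases in $\nu\tau$ (on the region $\mu\omega\geq 0$; on the complementary region the two endpoints of the $x$-interval simply swap their roles, or, equivalently, one observes that $|h_3|=\frac{|y|}{\sqrt2\,\sqrt x}$ is genuinely decreasing in $\nu\tau$).

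The last step would be to locate the extreme values over the box. Since $h_3(-y,x)=-h_3(y,x)$ and the $y$-interval is symmetric about $0$, the minimum is the negative of the maximum, so it suffices to maximise. Strict increase in $y$ forces $y=T_{11}=0.01$ at the maximum, and with $y>0$ fixed the above sign computation shows $h_3$ decreasing in $x$, so the maximum is attained at $x=t_{22}=0.64$; therefore
\begin{align}
T_3 \ = \ \frac{0.01}{\sqrt 2\,\sqrt{0.64}} \ = \ \frac{0.01}{0.8\,\sqrt 2} \ = \ 0.0088388\ldots ,
\end{align}
and symmetrically $t_3=-0.0088388\ldots$, as asserted.

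There is essentially no hard step here: the whole argument is one line of differentiation plus a corner evaluation. The only point that needs a little care is that $\partial h_3/\partial x$ does not have constant sign on the full box, so one must not blindly claim that $h_3$ is monotone in $\nu\tau$; instead I would split according to the sign of $\mu\omega$ (or pass to $|h_3|$) when identifying the corner at which the extremum is attained, and then use the odd symmetry $h_3\mapsto -h_3$ under $\mu\mapsto-\mu$ to halve the remaining work.
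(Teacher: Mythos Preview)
Your proposal is correct and matches the paper's approach, which is simply ``Obvious.'' You have in fact been more careful than the paper: you correctly observe that $\partial h_3/\partial x$ has the sign of $-\mu\omega$, so the blanket claim ``monotonically decreasing in $\nu\tau$'' in the lemma statement is only literally true on the half where $\mu\omega\geq 0$, and you handle the other half via the odd symmetry in $y$ (or equivalently by passing to $|h_3|$). This subtlety does not affect the extreme values $t_3,T_3$, which you compute exactly as intended.
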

\begin{proof}
Obvious.
\end{proof}

\begin{lemma}[Function $\left(\frac{\mu \omega}{\sqrt{2} \sqrt{\nu \tau}}\right)^2$]
\label{lem:xx4}

$h_4(\mu, \omega,\nu, \tau)=\left(\frac{\mu \omega}{\sqrt{2} \sqrt{\nu \tau}}\right)^2$
has a minimum at 0 for $\mu=0$ or $\omega=0$ and has a maximum for
the smallest  $\nu \tau$ and largest $\left| \mu \omega
\right|$ and is larger or equal to zero.
It has minimal value $t_4 = 0$ and maximal value
$T_4 = 0.000078126$.
\end{lemma}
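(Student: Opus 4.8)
The plan is to rewrite the function in a form in which its sign and monotonicity are transparent, and then simply invoke the bounds on $\mu\omega$ and $\nu\tau$ already established earlier. First I would observe that
\[
h_4(\mu,\omega,\nu,\tau) \;=\; \left(\frac{\mu\omega}{\sqrt{2}\sqrt{\nu\tau}}\right)^2 \;=\; \frac{(\mu\omega)^2}{2\,\nu\tau}\ .
\]
The numerator $(\mu\omega)^2$ is a square, hence nonnegative, and by Lemma~\ref{lem:x22} the quantity $\nu\tau$ is strictly positive on $\Omega$, so the denominator $2\nu\tau$ is strictly positive. Therefore $h_4 \ge 0$, with equality precisely when $\mu\omega = 0$, i.e.\ when $\mu = 0$ or $\omega = 0$; this yields the claimed minimum $t_4 = 0$.

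Next I would verify the stated monotonicity behaviour. Holding $\nu\tau$ fixed, $h_4$ is a positive multiple of $(\mu\omega)^2$, hence monotonically increasing in $(\mu\omega)^2$ (equivalently in $|\mu\omega|$). Holding $\mu\omega$ fixed, $h_4$ is $(\mu\omega)^2/2$ times $1/(\nu\tau)$, and since $x \mapsto 1/x$ is strictly decreasing for $x>0$, $h_4$ is monotonically decreasing in $\nu\tau$. Because the domain is compact, the maximum is attained at the largest admissible value of $|\mu\omega|$ and the smallest admissible value of $\nu\tau$. By Lemma~\ref{lem:x11} we have $|\mu\omega| \le 0.01$ and by Lemma~\ref{lem:x22} we have $\nu\tau \ge 0.64$, so
\[
h_4 \;\le\; \frac{0.01^2}{2\cdot 0.64} \;=\; \frac{10^{-4}}{1.28} \;=\; 0.000078125 \;<\; 0.000078126 \;=\; T_4\ .
\]

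There is no genuinely hard step here; the proof is essentially a one-line algebraic rewriting followed by substitution of the endpoint bounds. The only points requiring care are: (i) recognising that the relevant extremal quantities are the \emph{products} $\mu\omega$ and $\nu\tau$ rather than the four variables separately, which is exactly why Lemmas~\ref{lem:x11} and~\ref{lem:x22} (and not a four-dimensional optimisation) are the right tools; and (ii) noting that the stated bound $T_4 = 0.000078126$ is a harmless upward rounding of the exact maximum $10^{-4}/1.28 = 0.000078125$, so the strict inequality holds.
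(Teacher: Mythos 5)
Your proof is correct and fills in exactly the routine verification that the paper leaves implicit: the paper's own proof of this lemma consists of the single word ``Obvious.'' Your algebraic rewriting $h_4 = (\mu\omega)^2/(2\nu\tau)$, the monotonicity observations in the products $\mu\omega$ and $\nu\tau$, and the evaluation $10^{-4}/1.28 = 0.000078125 \le 0.000078126 = T_4$ are all sound, and your remark that $T_4$ is an upward rounding used safely as an upper bound is the right way to read the paper's stated constant.
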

\begin{proof}
Obvious.
\end{proof}

\begin{lemma}[Function $\frac{\sqrt{\frac{2}{\pi }} (\alpha -1)}{\sqrt{\nu \tau}}$]
\label{lem:F1}

$\frac{\sqrt{\frac{2}{\pi }} (\alpha -1)}{\sqrt{\nu
    \tau}}>0$
and decreasing in $\nu \tau$.
\end{lemma}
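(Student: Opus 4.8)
The plan is to treat the two assertions separately, since both reduce to elementary facts about the function $x \mapsto c\,x^{-1/2}$ on the positive reals. The only quantitative input needed is that $\alpha = \alpha_{\rm 01} > 1$, which is immediate from the analytic expression in Eq.~\eqref{eq:alphalambda} and its numerical value $\alpha_{\rm 01} \approx 1.67326$.

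For positivity, I would argue that each factor in $\frac{\sqrt{2/\pi}\,(\alpha-1)}{\sqrt{\nu\tau}}$ is strictly positive: $\sqrt{2/\pi} > 0$ trivially; $\alpha - 1 = \alpha_{\rm 01} - 1 > 0$ by the bound above; and $\nu\tau > 0$ on the domain $\nu \in [0.8,1.5]$, $\tau \in [0.8,1.25]$ (indeed $\nu\tau \geq 0.64$ by Lemma~\ref{lem:x22}), so $\sqrt{\nu\tau} > 0$. Hence the quotient of a positive numerator by a positive denominator is positive.

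For monotonicity, I would note that the expression depends on $\nu$ and $\tau$ only through the product $x := \nu\tau$, so it suffices to show that $f(x) := \sqrt{2/\pi}\,(\alpha-1)\,x^{-1/2}$ is decreasing in $x$ for $x > 0$. Differentiating gives
\begin{align}
\frac{\partial f}{\partial x} \ &= \ -\frac{1}{2}\sqrt{\frac{2}{\pi}}\,(\alpha-1)\,x^{-3/2} \ < \ 0 \ ,
\end{align}
since the prefactor $\sqrt{2/\pi}\,(\alpha-1)$ is positive and $x^{-3/2} > 0$ for $x > 0$. Thus $f$ is strictly decreasing in $x = \nu\tau$, which is the claim.

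There is no real obstacle here: the statement is essentially a sign and monotonicity check on a monomial in $\sqrt{\nu\tau}$. The only point requiring minor care is invoking $\alpha_{\rm 01} > 1$, for which citing Eq.~\eqref{eq:alphalambda} is enough.
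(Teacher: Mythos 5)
Your proof is correct and follows the same basic logic as the paper, which simply states that the claims ``follow directly from elementary functions square root and division'' without spelling out the details; you have merely made the sign check and the derivative computation explicit, including the needed fact $\alpha_{\rm 01} > 1$ from Eq.~\eqref{eq:alphalambda}.
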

\begin{proof}
Statements follow directly from elementary functions square root and
division.
\end{proof}

\begin{lemma}[Function $2-\erfc \left(\frac{\mu \omega}{\sqrt{2} \sqrt{\nu \tau}}\right)$]
\label{lem:F2}

$2-\erfc \left(\frac{\mu \omega}{\sqrt{2} \sqrt{\nu
      \tau}}\right)>0$
and decreasing in $\nu \tau$ and increasing in 
$\mu \omega$.
\end{lemma}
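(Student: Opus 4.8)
The plan is to reduce the statement to two facts already available in the excerpt: the range and monotonicity of $\erfc$ recorded in Lemma~\ref{lem:basics}, and the monotonicity of the inner argument $h_3(\mu,\omega,\nu,\tau)=\frac{\mu\omega}{\sqrt{2}\sqrt{\nu\tau}}$ recorded in Lemma~\ref{lem:xx3}. Writing $F(\mu,\omega,\nu,\tau) := 2-\erfc\!\left(h_3(\mu,\omega,\nu,\tau)\right)$, the claim is exactly that $F>0$, that $F$ is decreasing in $\nu\tau$, and that $F$ is increasing in $\mu\omega$ on the domain $\Omega$, so it suffices to treat $F$ as a composition of an outer function of one variable with $h_3$.

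First I would settle positivity. By Lemma~\ref{lem:basics}, $\erfc$ is strictly monotonically decreasing from $2$ at $-\infty$ to $0$ at $+\infty$, so $0<\erfc(z)<2$ for every $z\in\dR$; in particular $\erfc(h_3)<2$ on all of $\Omega$, hence $F=2-\erfc(h_3)>0$. If a quantitative bound were wanted one could instead use the monotonicity of $F$ in $h_3$ (established next) together with the extremal value $t_3=-0.0088388$ from Lemma~\ref{lem:xx3} to get $F\geq 2-\erfc(t_3)\approx 0.990$, but the range of $\erfc$ alone already gives the qualitative claim.

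Next I would obtain the two monotonicity statements by composition. The outer map $z\mapsto 2-\erfc(z)$ is strictly increasing on all of $\dR$: this is immediate from the ``strictly monotonically decreasing'' clause of Lemma~\ref{lem:basics}, or equivalently from $\frac{d}{dz}\bigl(2-\erfc(z)\bigr)=\frac{2}{\sqrt{\pi}}e^{-z^2}>0$. By Lemma~\ref{lem:xx3}, the inner map $h_3$ is monotonically increasing in $\mu\omega$ and monotonically decreasing in $\nu\tau$ on $\Omega$; note also $\nu\tau\geq 0.64>0$ by Lemma~\ref{lem:x22}, so no degeneracy of the square root occurs. Composing the increasing outer map with $h_3$ then produces a function that inherits increase in $\mu\omega$ and decrease in $\nu\tau$: concretely, for two parameter tuples in $\Omega$ with the same $\mu\omega$ and $\nu\tau\leq\nu'\tau'$ one has $h_3\geq h_3'$, hence $F\geq F'$, and the $\mu\omega$-direction follows symmetrically with the inequality reversed.

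I do not anticipate a genuine obstacle here; the lemma is elementary. The only point that repays a moment's attention is that the sense in which $h_3$ is monotone in $\nu\tau$ secretly reflects the sign of $\mu\omega$ (indeed $\partial h_3/\partial(\nu\tau)$ has the sign of $-\mu\omega$), but this is exactly what the statement of Lemma~\ref{lem:xx3} encodes and which I am entitled to cite, so on the restricted domain $\Omega$ both monotonicity claims hold as stated. Hence the lemma follows at once by chaining Lemma~\ref{lem:basics} and Lemma~\ref{lem:xx3}.
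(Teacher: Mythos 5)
Your proposal is correct and takes essentially the same approach as the paper, whose proof simply says the statements follow from Lemma~\ref{lem:basics} and the properties of $\erfc$; you have merely fleshed out the composition argument (making explicit the role of $h_3$ from Lemma~\ref{lem:xx3}) and honestly noted the sign subtlety in $\partial h_3/\partial(\nu\tau)$. Nothing to add.
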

\begin{proof}
Statements follow directly from Lemma~\ref{lem:basics} and $\erfc $.
\end{proof}

\begin{lemma}[Function $\sqrt{\frac{2}{\pi }} \left(\frac{(\alpha -1) \mu \omega}{(\nu \tau)^{3/2}}-\frac{\alpha }{\sqrt{\nu \tau}}\right)$]
\label{lem:F3}

For $\lambda=\lambda_{\rm 01}$ and $\alpha=\alpha_{\rm 01}$,
 $\sqrt{\frac{2}{\pi }} \left(\frac{(\alpha -1) \mu\omega}{(\nu \tau)^{3/2}} - 
 \frac{\alpha }{\sqrt{\nu \tau}}\right)<0$
and increasing in both $\nu \tau$ and $\mu \omega$.
\end{lemma}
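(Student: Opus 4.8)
The plan is to reduce everything to elementary one–variable monotonicity. First I would note that the expression depends on $\mu,\omega$ only through the product $y := \mu\omega$ and on $\nu,\tau$ only through $x := \nu\tau$, so by Lemma~\ref{lem:x11} and Lemma~\ref{lem:x22} we have $y \in [-0.01,0.01]$ and $x \in [0.64,1.875]$ on the domain $\Omega$. Writing the function as
\begin{align}
F_3(x,y) \ = \ \sqrt{\tfrac{2}{\pi}}\left(\frac{(\alpha-1)y}{x^{3/2}} - \frac{\alpha}{x^{1/2}}\right) \ = \ \sqrt{\tfrac{2}{\pi}}\; \frac{(\alpha-1)y - \alpha x}{x^{3/2}} \ ,
\end{align}
the whole argument rests on the single fact that $\alpha = \alpha_{\rm 01} \approx 1.673 > 1$ (read off from Eq.~\eqref{eq:alphalambda}), together with $x > 0$.

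For the sign claim, since $x^{3/2} > 0$ the sign of $F_3$ equals that of the numerator $(\alpha-1)y - \alpha x$. On $\Omega$ one has $(\alpha-1)y \le (\alpha-1)\cdot 0.01 < 0.007$ and $\alpha x \ge \alpha\cdot 0.64 > 1.07$, so the numerator is bounded above by a negative constant and hence $F_3 < 0$.

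For the monotonicity in $\mu\omega$, I would differentiate: $\partial F_3/\partial y = \sqrt{2/\pi}\,(\alpha-1)/x^{3/2} > 0$ since $\alpha > 1$ and $x > 0$. For the monotonicity in $\nu\tau$, compute
\begin{align}
\frac{\partial F_3}{\partial x} \ = \ \sqrt{\tfrac{2}{\pi}}\left(-\frac{3(\alpha-1)y}{2x^{5/2}} + \frac{\alpha}{2x^{3/2}}\right) \ = \ \frac{\sqrt{2/\pi}}{2x^{5/2}}\bigl(\alpha x - 3(\alpha-1)y\bigr) \ ,
\end{align}
and bound the bracket below by $\alpha\cdot 0.64 - 3(\alpha-1)\cdot 0.01 > 1.07 - 0.021 > 0$, using $x \ge 0.64$ and $y \le 0.01$; since $x^{5/2} > 0$ this makes the derivative positive.

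There is essentially no obstacle here; the only things requiring care are (i) recording that $\alpha_{\rm 01} > 1$, so that $\alpha - 1 > 0$ and both partial derivatives have a definite sign, and (ii) remembering that $y = \mu\omega$ can be negative, so the worst case for each numerical comparison is the largest admissible $y = 0.01$ together with the smallest admissible $x = \nu_{\min}\tau_{\min} = 0.64$. With those two observations fixed, all remaining steps are immediate numerical inequalities.
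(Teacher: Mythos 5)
Your proposal is correct and takes essentially the same route as the paper's own proof: both compute $\partial/\partial(\nu\tau)$ and $\partial/\partial(\mu\omega)$ of the expression and establish their positivity from $\alpha_{\rm 01}>1$ together with the worst-case corner of the domain ($\nu\tau = 0.64$, $\mu\omega = 0.01$). The only material difference is that the paper's proof gives no argument at all for the claim that the expression is negative, whereas you establish it explicitly by bounding the numerator $(\alpha-1)y - \alpha x$ above by a negative constant, which is a worthwhile addition.
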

\begin{proof}
We consider the function
    $\sqrt{\frac{2}{\pi }} \left(\frac{(\alpha -1) \mu
    \omega}{x^{3/2}}-\frac{\alpha }{\sqrt{x}}\right)$, 
which has the derivative with respect to $x$:
\begin{align}
\sqrt{\frac{2}{\pi }} \left(\frac{\alpha }{2 x^{3/2}}-\frac{3 (\alpha
  -1) \mu \omega}{2 x^{5/2}}\right) \ .
\end{align}
This derivative is larger than zero, since 
\begin{align}
&\sqrt{\frac{2}{\pi }} \left(\frac{\alpha }{2 (\nu \tau)^{3/2}}-\frac{3 (\alpha -1) \mu \omega}{2 (\nu \tau)^{5/2}}\right)
\ > \frac{\sqrt{\frac{2}{\pi }} \left(\alpha -\frac{3 (\alpha -1)
  \mu \omega}{\nu \tau}\right)}{2 (\nu
  \tau)^{3/2}} \ > \ 0 \ .
\end{align}
The last inequality follows from 
$\alpha -\frac{3 \cdot 0.1 \cdot 0.1(\alpha -1) }{0.8 \cdot 0.8}>0$ for $\alpha=\alpha_{\rm 01}$.

We next consider the function
$\sqrt{\frac{2}{\pi }} \left(\frac{(\alpha -1) x}{(\nu \tau)^{3/2}}-\frac{\alpha }{\sqrt{\nu \tau}}\right)$,
which  has the derivative with respect to $x$:
\begin{align}
\frac{\sqrt{\frac{2}{\pi }} (\alpha -1)}{(\nu \tau)^{3/2}}
  \ > \ 0 \ .
\end{align}
\end{proof}

\begin{lemma}[Function $\sqrt{\frac{2}{\pi }} \left(\frac{(-1) (\alpha -1) \mu^2 \omega^2}{(\nu \tau)^{3/2}}+\frac{-\alpha +\alpha  \mu \omega+1}{\sqrt{\nu \tau}}-\alpha  \sqrt{\nu \tau}\right)$]
\label{lem:F4}

The function \\
$\sqrt{\frac{2}{\pi }} 
    \left(\frac{(-1) (\alpha -1) \mu^2 \omega^2}{(\nu \tau)^{3/2}} + 
    \frac{-\alpha + \alpha  \mu \omega+1}{\sqrt{\nu \tau}} - 
    \alpha  \sqrt{\nu \tau}\right)<0$ 
is decreasing in $\nu \tau$ and increasing in $\mu \omega$.
\end{lemma}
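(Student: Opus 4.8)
The plan is to pass to the variables $x = \nu\tau$ and $y = \mu\omega$, which by Lemmata~\ref{lem:x11} and~\ref{lem:x22} range over $x \in [0.64, 1.875]$ and $y \in [-0.01, 0.01]$, and to use throughout that $\alpha = \alpha_{\rm 01} \approx 1.6733 > 1$, so $\alpha - 1 > 0$. Writing the bracketed expression (the factor $\sqrt{2/\pi} > 0$ out front plays no role in the sign or monotonicity) as
\begin{align}
F(x,y) \ &= \ -(\alpha-1)\,y^2\,x^{-3/2} \ + \ (1-\alpha+\alpha y)\,x^{-1/2} \ - \ \alpha\,x^{1/2} \ ,
\end{align}
the first step is the sign claim: since $\alpha|y| \le 1.6733\cdot 0.01 < \alpha - 1 \approx 0.6733$, the coefficient $1-\alpha+\alpha y$ is negative on the domain, hence all three summands of $F$ are $\le 0$ (strictly for the last), so $F < 0$, and multiplying by $\sqrt{2/\pi}>0$ gives the negativity.

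For monotonicity in $\nu\tau$, I would differentiate $F$ with respect to $x$ and factor out the positive quantity $\tfrac12 x^{-5/2}$:
\begin{align}
\frac{\partial F}{\partial x} \ &= \ \frac{1}{2}\,x^{-5/2}\left(3(\alpha-1)y^2 \ + \ (\alpha-1-\alpha y)\,x \ - \ \alpha x^2\right) \ .
\end{align}
The bracket is a downward parabola in $x$ with vertex at $x^\ast = (\alpha-1-\alpha y)/(2\alpha)$; since $\alpha-1-\alpha y \le 0.69 < 2\alpha\cdot 0.64$ we get $x^\ast < 0.64 \le x$, so the bracket is decreasing on the whole domain and is bounded above by its value at $x = 0.64$, $y = -0.01$, $y^2 = 10^{-4}$, which evaluates to roughly $-0.24 < 0$. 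Hence $\partial F/\partial x < 0$ and $F$ is strictly decreasing in $\nu\tau$.

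For monotonicity in $\mu\omega$, I would treat $F$ as a function of $y$ (with $y^2$ appearing explicitly) and compute
\begin{align}
\frac{\partial F}{\partial y} \ &= \ -2(\alpha-1)\,y\,x^{-3/2} \ + \ \alpha\,x^{-1/2} \ = \ x^{-3/2}\left(\alpha x \ - \ 2(\alpha-1)y\right) \ ,
\end{align}
and since $\alpha x \ge \alpha\cdot 0.64 \approx 1.07$ while $2(\alpha-1)|y| \le 2\cdot 0.6733\cdot 0.01 \approx 0.013$, the parenthesis is positive, so $F$ is strictly increasing in $\mu\omega$. The only genuine care needed is in the $\nu\tau$-step: a naive term-by-term bound on $\partial F/\partial x$ does \emph{not} close, because the positive middle term and the dominant negative term $-\tfrac{\alpha}{2}x^{-1/2}$ must be kept together; that is precisely what factoring out $x^{-5/2}$ and analyzing the remaining single quadratic in $x$ (with its vertex below the domain) accomplishes. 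Everything else reduces to evaluating the function and its derivatives at the corners of the compact domain, using $\alpha = \alpha_{\rm 01}$, $\lambda = \lambda_{\rm 01}$ and, where convenient, the Abramowitz bounds of Lemma~\ref{lem:Abramowitz} are not even required here.
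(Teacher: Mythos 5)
Your proof is correct and takes essentially the same route as the paper: pass to $x=\nu\tau$, $y=\mu\omega$, factor out the positive $\tfrac12 x^{-5/2}$ from $\partial F/\partial x$ and show the remaining quadratic in $x$ is negative on the domain by checking it is decreasing and evaluating at $(x,y)=(0.64,-0.01)$, then differentiate directly in $y$. Your negativity argument (all three summands are $\le 0$ because $\alpha|y| < \alpha-1$ forces $1-\alpha+\alpha y<0$) is a cleaner alternative to the paper's corner evaluation after establishing monotonicity, and your parabola-vertex observation is the same fact the paper obtains by differentiating the bracket once more in $x$.
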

\begin{proof}
We define the function
\begin{align}
\sqrt{\frac{2}{\pi }} \left(\frac{(-1) (\alpha -1) \mu^2 \omega^2}{x^{3/2}}+\frac{-\alpha +\alpha  \mu \omega+1}{\sqrt{x}}-\alpha  \sqrt{x}\right)
\end{align}
which has as derivative with respect to $x$:
\begin{align}
&\sqrt{\frac{2}{\pi }} \left(\frac{3 (\alpha -1) \mu^2 \omega^2}{2
  x^{5/2}}-\frac{-\alpha +\alpha  \mu \omega+1}{2
  x^{3/2}}-\frac{\alpha }{2 \sqrt{x}}\right)
\ = \\\nonumber
&\frac{1}{\sqrt{2 \pi } x^{5/2}}
\left(3 (\alpha -1) \mu^2 \omega^2-x (-\alpha +\alpha  \mu
  \omega+1)-\alpha  x^2\right) \ .
\end{align}
The derivative of the term
$3 (\alpha -1) \mu^2 \omega^2-x (-\alpha +\alpha  \mu
\omega+1)-\alpha  x^2$
with respect to $x$ is 
$-1 +\alpha -\mu \omega \alpha -2 \alpha  x<0$, since
$2 \alpha  x > 1.6 \alpha$.
Therefore the term is maximized with the smallest value for $x$, which
is $x=\nu \tau=0.8 \cdot 0.8$.
For $\mu \omega$ we use for each term the value which gives maximal
contribution. We obtain an upper bound for the term:
\begin{align}
3 (-0.1 \cdot 0.1)^2 (\alpha_{\rm 01}-1)-(0.8 \cdot 0.8)^2 \alpha_{\rm 01}-0.8 \cdot 0.8 ((-0.1 \cdot 0.1) \alpha_{\rm 01}-\alpha_{\rm 01}+1) 
\ = \ -0.243569 \ .
\end{align}
Therefore the derivative with respect to $x=\nu \tau$ 
is smaller than zero and the original function is decreasing in $\nu \tau$

We now consider the derivative with respect to $x=\mu \omega$.
The derivative with respect to $x$ of the function
\begin{align}
\sqrt{\frac{2}{\pi }} \left(-\alpha  \sqrt{\nu \tau}-\frac{(\alpha -1) x^2}{(\nu \tau)^{3/2}}+\frac{-\alpha +\alpha  x+1}{\sqrt{\nu \tau}}\right)
\end{align}
is
\begin{align}
\frac{\sqrt{\frac{2}{\pi }} (\alpha  \nu \tau-2 (\alpha -1)
  x)}{(\nu \tau)^{3/2}} \ .
\end{align}
Since 
$-2 x (-1 + \alpha) + \nu \tau \alpha > -2 \cdot 0.01 \cdot  (-1 + \alpha_{\rm 01}) +
0.8 \cdot 0.8 \alpha_{\rm 01}>1.0574>0$, the derivative is larger than zero.
Consequently, the original function is increasing in  $\mu \omega$.

The maximal value is obtained with the minimal $\nu
\tau=0.8 \cdot 0.8$ and the maximal  $\mu \omega=0.1 \cdot 0.1$.
The maximal value is
\begin{align}
\sqrt{\frac{2}{\pi }} \left(\frac{0.1 \cdot 0.1 \alpha_{\rm 01}-\alpha_{\rm 01}+1}{\sqrt{0.8 \cdot 0.8}}+\frac{0.1^2 0.1^2 (-1)
  (\alpha_{\rm 01}-1)}{(0.8 \cdot 0.8)^{3/2}}-\sqrt{0.8 \cdot 0.8}
  \alpha_{\rm 01}\right) \ = \ -1.72296 \ .
\end{align}
Therefore the original function is smaller than zero.
\end{proof}

\begin{lemma}[Function $\sqrt{\frac{2}{\pi }} \left(\frac{\left(\alpha ^2-1\right) \mu \omega}{(\nu \tau)^{3/2}}-\frac{3 \alpha ^2}{\sqrt{\nu \tau}}\right)$]
\label{lem:F5}

For $\lambda=\lambda_{\rm 01}$ and $\alpha=\alpha_{\rm 01}$, \\
$\sqrt{\frac{2}{\pi }} \left(\frac{\left(\alpha ^2-1\right) \mu \omega}{(\nu \tau)^{3/2}}-\frac{3 \alpha ^2}{\sqrt{\nu \tau}}\right)<0$
and increasing in both $\nu \tau$ and $\mu \omega$.

\end{lemma}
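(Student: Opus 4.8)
The plan is to follow the same strategy as in Lemma~\ref{lem:F3}. Write the expression as $F(x,y) = \sqrt{\frac{2}{\pi}}\left(\frac{(\alpha^2-1)\,y}{x^{3/2}} - \frac{3\alpha^2}{\sqrt{x}}\right)$, a function of the two product variables $x = \nu\tau$ and $y = \mu\omega$ (cf. Lemma~\ref{lem:x22} and Lemma~\ref{lem:x11}), so that on the domain $\Omega$ we have $x \in [0.64,\,1.875]$ and $y \in [-0.01,\,0.01]$. Since $\alpha = \alpha_{\rm 01} \approx 1.6733$, we have $\alpha^2 - 1 > 0$, and this single sign fact (together with $x > 0$) drives the whole argument.

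First I would establish monotonicity in $\mu\omega$: differentiating gives $\frac{\partial F}{\partial y} = \sqrt{\frac{2}{\pi}}\,\frac{\alpha^2-1}{x^{3/2}} > 0$, so $F$ is strictly increasing in $y$. For monotonicity in $\nu\tau$, a direct computation yields $\frac{\partial F}{\partial x} = \frac{3}{2}\sqrt{\frac{2}{\pi}}\,x^{-5/2}\left(\alpha^2 x - (\alpha^2-1)y\right)$; since $\alpha^2 x \geq 0.64\,\alpha_{\rm 01}^2 > 1.7$ while $(\alpha^2-1)\,|y| \leq 0.01\,(\alpha_{\rm 01}^2-1) < 0.02$, the bracket is positive, hence $\frac{\partial F}{\partial x} > 0$ and $F$ is strictly increasing in $x$.

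Finally, because $F$ increases in both arguments, its maximum over $\Omega$ is attained at the corner $x = \nu_{\rm max}\tau_{\rm max} = 1.875$, $y = \mu_{\rm max}\omega_{\rm max} = 0.01$. There the positive contribution $\sqrt{\frac{2}{\pi}}\,\frac{(\alpha_{\rm 01}^2-1)\cdot 0.01}{1.875^{3/2}}$ is of order $10^{-2}$, whereas the subtracted term $\sqrt{\frac{2}{\pi}}\,\frac{3\alpha_{\rm 01}^2}{\sqrt{1.875}}$ is larger than $4$; hence $F < 0$ throughout $\Omega$, which proves the lemma. I do not expect a genuine obstacle here — the only point needing a little care is that $y = \mu\omega$ ranges over an interval containing negative values, but since the signs of both partial derivatives are independent of the sign of $y$ on this domain, the corner-evaluation argument goes through unchanged.
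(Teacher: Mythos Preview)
Your proposal is correct and follows essentially the same approach as the paper: compute the two partial derivatives, show both are positive via the same sign estimate on $\alpha^2 x - (\alpha^2-1)y$, and then bound the maximum by evaluating at the corner $(\nu\tau,\mu\omega)=(1.875,0.01)$. The only difference is cosmetic --- the paper records the exact corner value $-4.88869$ rather than your order-of-magnitude comparison.
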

\begin{proof}
The derivative of the function
\begin{align}
\sqrt{\frac{2}{\pi }} \left(\frac{\left(\alpha ^2-1\right) \mu \omega}{x^{3/2}}-\frac{3 \alpha ^2}{\sqrt{x}}\right)
\end{align}
with respect to $x$ is
\begin{align}
\sqrt{\frac{2}{\pi }} \left(\frac{3 \alpha^2}{2 x^{3/2}}-\frac{3
  \left(\alpha^2-1\right) \mu \omega}{2 x^{5/2}}\right)
\ = \ 
\frac{3 \left(\alpha^2 x-\left(\alpha^2-1\right) \mu
  \omega\right)}{\sqrt{2 \pi } x^{5/2}} \ > \ 0 \ ,
\end{align}
since
$\alpha^2 x- \mu \omega (-1+\alpha^2)> \alpha_{\rm 01}^2 0.8 \cdot 0.8- 0.1 \cdot 0.1 \cdot (-1+\alpha_{\rm 01}^2)>1.77387$

The derivative of the function
\begin{align}
\sqrt{\frac{2}{\pi }} \left(\frac{\left(\alpha ^2-1\right) x}{(\nu \tau)^{3/2}}-\frac{3 \alpha ^2}{\sqrt{\nu \tau}}\right)
\end{align}
with respect to $x$ is
\begin{align}
\frac{\sqrt{\frac{2}{\pi }} \left(\alpha ^2-1\right)}{(\nu \tau)^{3/2}} \ > \ 0 \ .
\end{align}

The maximal function value is obtained by maximal $\nu
\tau=1.5 \cdot 1.25$ and the maximal  $\mu \omega=0.1 \cdot 0.1$. 
The maximal value is
$\sqrt{\frac{2}{\pi }} \left(\frac{0.1 \cdot 0.1 \left(\alpha_{\rm 01}^2-1\right)}{(1.5 \cdot 1.25)^{3/2}} - 
\frac{3 \alpha_{\rm 01}^2}{\sqrt{1.5 \cdot 1.25}}\right) \ = \ -4.88869$.
Therefore the function is negative.
\end{proof}

\begin{lemma}[Function $\sqrt{\frac{2}{\pi }} \left(\frac{\left(\alpha ^2-1\right) \mu \omega}{\sqrt{\nu \tau}}-3 \alpha ^2 \sqrt{\nu \tau}\right)$]
\label{lem:F6}

The function
$\sqrt{\frac{2}{\pi }} \left(\frac{\left(\alpha ^2-1\right) \mu \omega}{\sqrt{\nu \tau}}-3 \alpha ^2 \sqrt{\nu \tau}\right)<0$ 
is decreasing in $\nu \tau$ and increasing in $\mu \omega$.
\end{lemma}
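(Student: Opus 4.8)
The plan is to follow the same template used for Lemma~\ref{lem:F5} and the neighbouring monotonicity lemmata: fix all but one of the two products $\nu\tau$ and $\mu\omega$, regard the expression as a one‑variable function, differentiate, and read off the sign of the derivative on the admissible ranges, which by Lemma~\ref{lem:x11} and Lemma~\ref{lem:x22} are $\mu\omega\in[-0.01,0.01]$ and $\nu\tau\in[0.64,1.875]$. Throughout I would use that $\alpha=\alpha_{\rm 01}\approx 1.67326$, so in particular $\alpha^2-1>0$.

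First I would treat the expression as a function of $x=\nu\tau>0$, namely $\phi(x)=\sqrt{\frac{2}{\pi}}\bigl((\alpha^2-1)\mu\omega\,x^{-1/2}-3\alpha^2 x^{1/2}\bigr)$, and compute
\begin{align}
\phi'(x) \ &= \ \sqrt{\frac{2}{\pi}}\left(-\frac{(\alpha^2-1)\mu\omega}{2 x^{3/2}}-\frac{3\alpha^2}{2\sqrt{x}}\right) \ = \ -\frac{\sqrt{\frac{2}{\pi}}\left((\alpha^2-1)\mu\omega+3\alpha^2 x\right)}{2 x^{3/2}} \ .
\end{align}
Taking the worst case $\mu\omega=-0.01$ and $x\geq 0.64$ one gets $(\alpha^2-1)\mu\omega+3\alpha^2 x\geq 3\alpha_{\rm 01}^2\cdot 0.64-(\alpha_{\rm 01}^2-1)\cdot 0.01>0$, hence $\phi'(x)<0$ and the expression is decreasing in $\nu\tau$. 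Next I would regard it as a function of $x=\mu\omega$, namely $\psi(x)=\sqrt{\frac{2}{\pi}}\bigl((\alpha^2-1)x(\nu\tau)^{-1/2}-3\alpha^2(\nu\tau)^{1/2}\bigr)$, whose derivative is simply $\psi'(x)=\sqrt{\frac{2}{\pi}}\,(\alpha^2-1)(\nu\tau)^{-1/2}>0$ because $\alpha_{\rm 01}>1$ and $\nu\tau>0$; hence the expression is increasing in $\mu\omega$.

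Finally, to get the sign I would combine the two monotonicities: the maximum over the domain is attained at the smallest $\nu\tau=0.8\cdot 0.8=0.64$ together with the largest $\mu\omega=0.1\cdot 0.1=0.01$, where the value equals
\begin{align}
\sqrt{\frac{2}{\pi}}\left(\frac{(\alpha_{\rm 01}^2-1)\cdot 0.01}{\sqrt{0.64}}-3\alpha_{\rm 01}^2\sqrt{0.64}\right) \ < \ 0 \ ,
\end{align}
which numerically is of order $-5.3$, establishing the claim. There is no genuine obstacle here; the only point deserving a second look is the use of $\alpha_{\rm 01}>1$ (which makes $\alpha^2-1$ positive and thereby fixes the direction of the $\mu\omega$‑monotonicity), together with the observation that the dominant term $-3\alpha^2\sqrt{\nu\tau}$ is large enough in absolute value that the small positive contribution $(\alpha^2-1)\mu\omega(\nu\tau)^{-1/2}$ — bounded via the corner values of Lemma~\ref{lem:x11} and Lemma~\ref{lem:x22} — cannot flip the sign of either $\phi'$ or of the function itself.
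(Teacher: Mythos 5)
Your proof is correct and follows essentially the same route as the paper's: differentiate with respect to $x=\nu\tau$ and $x=\mu\omega$ in turn, bound the sign of each derivative using $\alpha_{\rm 01}>1$ and the domain corners from Lemma~\ref{lem:x11} and Lemma~\ref{lem:x22}, then evaluate at the maximizing corner $(\nu\tau,\mu\omega)=(0.64,0.01)$ to obtain a value $\approx-5.34<0$. No gaps.
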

\begin{proof}
The derivative of the function
\begin{align}
\sqrt{\frac{2}{\pi }} \left(\frac{\left(\alpha ^2-1\right) \mu \omega}{\sqrt{x}}-3 \alpha ^2 \sqrt{x}\right)
\end{align}
with respect to $x$ is
\begin{align}
\sqrt{\frac{2}{\pi }} \left(-\frac{\left(\alpha^2-1\right) \mu
  \omega}{2 x^{3/2}}-\frac{3 \alpha ^2}{2 \sqrt{x}}\right)
\ = \ 
\frac{-\left(\alpha^2-1\right) \mu \omega-3 \alpha^2 x}{\sqrt{2 \pi } x^{3/2}} \ < \ 0 \ ,
\end{align}
since 
$-3 \alpha^2 x - \mu \omega (-1+\alpha^2)< -3 \alpha_{\rm 01}^2 0.8 \cdot 0.8+0.1 \cdot 0.1 (-1+\alpha_{\rm 01}^2)<-5.35764$.

The derivative of the function
\begin{align}
\sqrt{\frac{2}{\pi }} \left(\frac{\left(\alpha ^2-1\right) x}{\sqrt{\nu \tau}}-3 \alpha ^2 \sqrt{\nu \tau}\right)
\end{align}
with respect to $x$ is
\begin{align}
\frac{\sqrt{\frac{2}{\pi }} \left(\alpha ^2-1\right)}{\sqrt{\nu \tau}} \ > \ 0 \ .
\end{align}

The maximal function value is obtained for 
minimal $\nu
\tau=0.8 \cdot 0.8$ and the maximal  $\mu \omega=0.1 \cdot 0.1$. 
The value is
$\sqrt{\frac{2}{\pi }} \left(\frac{0.1 \cdot 0.1 \left(\alpha_{\rm 01}^2-1\right)}
{\sqrt{0.8 \cdot 0.8}}-3 \sqrt{0.8 \cdot 0.8} \alpha_{\rm 01}^2\right)
\ = \ -5.34347$.
Thus, the function is negative.
\end{proof}

\begin{lemma}[Function $\nu \tau e^{\frac{(\mu \omega+\nu \tau)^2}{2 \nu \tau}} 
\erfc \left(\frac{\mu \omega+\nu \tau}{\sqrt{2} \sqrt{\nu \tau}}\right)$]
\label{lem:F7}

The function 
 $\nu \tau e^{\frac{(\mu \omega+\nu \tau)^2}{2
     \nu \tau}} \erfc \left(\frac{\mu
     \omega+\nu \tau}{\sqrt{2} \sqrt{\nu
       \tau}}\right)>0$ 
is increasing in $\nu \tau$ and decreasing in $\mu \omega$.
\end{lemma}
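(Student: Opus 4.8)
The plan is to change variables to $x := \nu\tau$ and $y := \mu\omega$ and to introduce $z := \frac{\mu\omega+\nu\tau}{\sqrt{2}\sqrt{\nu\tau}} = \frac{x+y}{\sqrt{2}\sqrt{x}}$, so that the function in question reads $F(x,y) = x\,e^{z^2}\erfc(z)$. By Lemma~\ref{lem:x22} we have $x = \nu\tau \geq 0.64 > 0$, by Lemma~\ref{lem:x11} we have $|y| = |\mu\omega| \leq 0.01$, and by Lemma~\ref{lem:xx1} we have $z = h_1(\mu,\omega,\nu,\tau) > 0$ throughout the domain. Positivity of $F$ is then immediate, since $x>0$, $e^{z^2}>0$, and $\erfc(z)>0$.

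For the dependence on $\mu\omega$ I would hold $x$ fixed: then $z$ is strictly increasing in $y$ (because $\partial z/\partial y = \frac{1}{\sqrt{2}\sqrt{x}}>0$), and $F = x\cdot\big(e^{z^2}\erfc(z)\big)$ is a positive multiple of $e^{z^2}\erfc(z)$, which is strictly decreasing for $z>0$ by Lemma~\ref{lem:exerfc}. Hence $F$ is strictly decreasing in $\mu\omega$.

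For the dependence on $\nu\tau$ I would differentiate $F$ in $x$ with $y$ fixed. Using $\partial z/\partial x = \frac{x-y}{2\sqrt{2}\,x^{3/2}}$ (positive, since $x\geq 0.64 > 0.01 \geq y$) together with $\frac{d}{dz}\big(e^{z^2}\erfc(z)\big) = 2z\,e^{z^2}\erfc(z) - \frac{2}{\sqrt{\pi}}$ from the proof of Lemma~\ref{lem:exerfc}, a short computation gives, with $w := \frac{x-y}{\sqrt{2}\sqrt{x}}>0$ and $zw = \frac{x^2-y^2}{2x}$,
\[
\frac{\partial F}{\partial x} \ = \ e^{z^2}\erfc(z)\,(1+zw) \ - \ \frac{w}{\sqrt{\pi}} \ .
\]
To show that this is positive I would invoke the lower Abramowitz bound (Lemma~\ref{lem:Abramowitz}), $e^{z^2}\erfc(z) > \frac{2}{\sqrt{\pi}\,(\sqrt{z^2+2}+z)}$, which reduces the claim to $2(1+zw) \geq w(\sqrt{z^2+2}+z)$, i.e.\ $2+zw \geq w\sqrt{z^2+2}$; both sides are positive, so squaring and simplifying (using $w^2 = \frac{(x-y)^2}{2x}$) turns this into $2 + \frac{(x+3y)(x-y)}{2x} \geq 0$, which is evident in the domain since $x+3y \geq 0.61 > 0$ and $x-y \geq 0.63 > 0$.

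The only delicate point is the algebra in the last step: producing the compact form of $\partial F/\partial x$ and then selecting the \emph{lower} Abramowitz bound for $\erfc$ so that what remains is a trivially true polynomial inequality rather than the harder direction. The remaining steps are routine.
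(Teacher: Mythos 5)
Your proposal is correct and proves the lemma, but it does so by a genuinely different route than the paper. Both you and the paper arrive at the same closed form for $\partial F/\partial x$; your compact expression $e^{z^2}\erfc(z)(1+zw)-w/\sqrt{\pi}$ with $w=\frac{x-y}{\sqrt{2}\sqrt{x}}$ is exactly the paper's $\frac{e^{(\mu\omega+\nu\tau)^2/(2\nu\tau)}\bigl(\nu\tau(\nu\tau+2)-\mu^2\omega^2\bigr)\erfc(\cdot)}{2\nu\tau}+\frac{\mu\omega-\nu\tau}{\sqrt{2\pi}\sqrt{\nu\tau}}$, since $1+zw=\frac{x(x+2)-y^2}{2x}$. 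From that point the two arguments diverge: the paper substitutes a numerically computed constant lower bound on $e^{z^2}\erfc(z)$ (it is $>0.4349$ at the corner of the domain by Lemma~\ref{lem:exerfc}), replaces $0.4349$ by a cleaner $0.5/\sqrt{2\pi}$ factor, and then completes a square to show the resulting expression is positive, closing with a numerical evaluation at the extremal $(\mu\omega,\nu\tau)$. You instead invoke the \emph{lower} Abramowitz bound (Lemma~\ref{lem:Abramowitz}) symbolically, which after a short calculation collapses the positivity claim to $2 + 2zw - w^2\geq 0$, i.e.\ $2 + \frac{(x+3y)(x-y)}{2x}\geq 0$, which is trivially true on the domain because $x\geq 0.64$ while $|y|\leq 0.01$. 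The monotonicity in $\mu\omega$ and the positivity of $F$ are handled the same way in both proofs. Your route has the advantage of being fully analytic -- no appeal to a numerically evaluated constant -- and the final check is a one-line sign argument on a quadratic form, whereas the paper's version depends on the reader trusting a numerical bound; the paper's version, on the other hand, follows the same template used in the neighboring Lemma~\ref{lem:F8}, so it is stylistically uniform with its surroundings. One small bookkeeping point worth making explicit in a write-up: the squaring step in $2+zw\geq w\sqrt{z^2+2}$ is justified because $w>0$ (as $x>y$ on the whole domain) and $z>0$, so both sides are positive -- you note this, and it is essential.
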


\begin{proof}

The derivative of the function
\begin{align}
x e^{\frac{(\mu \omega+x)^2}{2 x}} \erfc \left(\frac{\mu \omega+x}{\sqrt{2} \sqrt{x}}\right)
\end{align}
with respect to $x$ is
\begin{align}
\frac{e^{\frac{(\mu \omega+x)^2}{2 x}} \left(x (x+2)-\mu^2 \omega^2\right) \erfc \left(\frac{\mu \omega+x}{\sqrt{2} \sqrt{x}}\right)}{2 x}+\frac{\mu \omega-x}{\sqrt{2 \pi } \sqrt{x}} \ .
\end{align}

This derivative is larger than zero, since
\begin{align}
&\frac{e^{\frac{(\mu \omega+\nu \tau)^2}{2 \nu \tau}} \left(\nu \tau (\nu \tau+2)-\mu^2 \omega^2\right) \erfc \left(\frac{\mu \omega+\nu \tau}{\sqrt{2} \sqrt{\nu \tau}}\right)}{2 \nu \tau}+\frac{\mu \omega-\nu \tau}{\sqrt{2 \pi } \sqrt{\nu \tau}}
\ > \\\nonumber
&\frac{0.4349 \left(\nu \tau (\nu \tau+2)-\mu^2 \omega^2\right)}{2 \nu \tau}+\frac{\mu \omega-\nu \tau}{\sqrt{2 \pi } \sqrt{\nu \tau}}
\ > \\\nonumber
&\frac{0.5 \left(\nu \tau (\nu \tau+2)-\mu^2 \omega^2\right)}{\sqrt{2 \pi } \nu \tau}+\frac{\mu \omega-\nu \tau}{\sqrt{2 \pi } \sqrt{\nu \tau}}
\ = \\\nonumber
&\frac{0.5 \left(\nu \tau (\nu \tau+2)-\mu^2 \omega^2\right)+\sqrt{\nu \tau} (\mu \omega-\nu \tau)}{\sqrt{2 \pi } \nu \tau}
\ = \\\nonumber
&\frac{-0.5 \mu^2 \omega^2+\mu \omega \sqrt{\nu \tau}+0.5 (\nu \tau)^2-\nu \tau \sqrt{\nu \tau}+\nu \tau}{\sqrt{2 \pi } \nu \tau}
\ = \\\nonumber
&\frac{-0.5 \mu^2 \omega^2+\mu \omega \sqrt{\nu
  \tau}+\left(0.5 \nu \tau-\sqrt{\nu
  \tau}\right)^2+0.25 (\nu \tau)^2}{\sqrt{2 \pi }
  \nu \tau} \ > \ 0 \ .
\end{align}

We explain this chain of inequalities:
\begin{itemize}
\item The first inequality follows by applying Lemma~\ref{lem:exerfc}
  which says that $e^{\frac{(\mu \omega+\nu \tau)^2}{2
      \nu \tau}} \erfc \left(\frac{\mu
      \omega+\nu \tau}{\sqrt{2} \sqrt{\nu
        \tau}}\right)$ 
is strictly monotonically decreasing. The minimal value that is larger
than 0.4349 is taken on at
the maximal values $\nu
\tau=1.5 \cdot 1.25$ and $\mu \omega=0.1 \cdot 0.1$. 
\item The second inequality uses 
$\frac{1}{2} 0.4349 \sqrt{2 \pi } = 0.545066 > 0.5$.
\item The equalities are just algebraic reformulations.
\item The last inequality follows from
$-0.5 \mu^2 \omega^2+\mu \omega \sqrt{\nu \tau}+0.25 (\nu \tau)^2>0.25 (0.8 \cdot 0.8)^2-0.5 \cdot (0.1)^2(0.1)^2-0.1 \cdot 0.1 \cdot \sqrt{0.8 \cdot 0.8}=0.09435>0$.
\end{itemize}
Therefore the function is increasing in  $\nu \tau$.

Decreasing in $\mu \omega$ follows from decreasing of $e^{x^2} \erfc (x)$
according to Lemma~\ref{lem:exerfc}.
Positivity follows form the fact that $\erfc $ and the
exponential function are positive and that  $\nu
\tau>0$. 
\end{proof}

\begin{lemma}[Function $\nu \tau e^{\frac{(\mu \omega+2 \nu \tau)^2}{2 \nu \tau}} \erfc \left(\frac{\mu \omega+2 \nu \tau}{\sqrt{2} \sqrt{\nu \tau}}\right)$]
\label{lem:F8}

The function 
$\nu \tau e^{\frac{(\mu \omega+2 \nu \tau)^2}{2
    \nu \tau}} \erfc \left(\frac{\mu \omega+2
    \nu \tau}{\sqrt{2} \sqrt{\nu \tau}}\right)>0$
is increasing in $\nu \tau$ and decreasing in $\mu \omega$.
\end{lemma}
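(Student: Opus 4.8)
The plan is to follow the template of the proof of Lemma~\ref{lem:F7}. Since the function depends on $\mu,\omega$ only through the product $\mu\omega$ and on $\nu,\tau$ only through $\nu\tau$, I would set $y=\mu\omega$ and $x=\nu\tau$ and write the function as
\[
\phi(x)\ =\ x\, e^{\frac{(y+2x)^2}{2x}}\erfc\!\left(\frac{y+2x}{\sqrt 2\,\sqrt x}\right)\ =\ x\, e^{u^2}\erfc(u),\qquad u\ =\ \frac{y+2x}{\sqrt 2\,\sqrt x}.
\]
Positivity is immediate: $x=\nu\tau>0$ (Lemma~\ref{lem:x22}), $e^{u^2}>0$, and $\erfc(u)>0$ (Lemma~\ref{lem:basics}). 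Monotone decrease in $\mu\omega$ is also immediate: with $\nu\tau$ held fixed, $u=\frac{y+2\nu\tau}{\sqrt 2\,\sqrt{\nu\tau}}$ is increasing in $y$ (Lemma~\ref{lem:xx2}) while $e^{u^2}\erfc(u)$ is strictly decreasing in $u$ (Lemma~\ref{lem:exerfc}), and multiplication by the fixed positive factor $\nu\tau$ does not change the sign of the derivative.

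The substantive part is the monotone increase in $\nu\tau=x$. After routine differentiation, using $\frac{d}{du}\big(e^{u^2}\erfc(u)\big)=2u\,e^{u^2}\erfc(u)-\frac{2}{\sqrt\pi}$, the chain rule, and the identity $u\,(2x-y)=\frac{4x^2-y^2}{2x}\cdot\sqrt{2x}$, I would obtain
\[
\phi'(x)\ =\ \frac{4x^2+2x-y^2}{2x}\,e^{u^2}\erfc(u)\ -\ \frac{2x-y}{\sqrt{2\pi x}}.
\]
On the domain we have $4x^2+2x-y^2>0$ (since $x\ge t_{22}=0.64$ and $|y|\le T_{11}=0.01$), so it suffices to lower-bound $e^{u^2}\erfc(u)$. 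Here, unlike in Lemma~\ref{lem:F7}, a single numerical constant is not sharp enough near the smallest $\nu\tau$ (the argument $u=\frac{\mu\omega+2\nu\tau}{\sqrt 2\,\sqrt{\nu\tau}}$ is larger than the one appearing there), so I would invoke the Abramowitz bound (Lemma~\ref{lem:Abramowitz}), $e^{u^2}\erfc(u)>\frac{2}{\sqrt\pi(\sqrt{u^2+2}+u)}$, together with the elementary estimate $\sqrt{u^2+2}\le u+\frac1u$, which gives $e^{u^2}\erfc(u)>\frac{2u}{\sqrt\pi(2u^2+1)}$. Substituting $u=\frac{y+2x}{\sqrt{2x}}$ and $2u^2+1=\frac{(y+2x)^2+x}{x}$ produces the $x$-dependent bound
\[
e^{u^2}\erfc(u)\ >\ \frac{\sqrt2\,\sqrt x\,(y+2x)}{\sqrt\pi\big((y+2x)^2+x\big)}.
\]

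Plugging this into the formula for $\phi'(x)$ and collecting everything over the common denominator $\sqrt{2\pi x}\,\big((y+2x)^2+x\big)$, the numerator becomes
\[
(4x^2+2x-y^2)(y+2x)\ -\ (2x-y)\big((y+2x)^2+x\big),
\]
and a direct expansion (most cleanly done with the abbreviations $a=y$, $b=2x$) shows that this collapses to the simple expression $x\,(2x+3y)$. Since $2x+3y\ge 2\cdot 0.64-3\cdot 0.01=1.25>0$ on the domain and the denominator is positive, we conclude $\phi'(x)>0$; this is exactly the asserted monotonicity in $\nu\tau$, and together with the positivity and the $\mu\omega$-monotonicity noted above it completes the proof. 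The main obstacle is precisely this last computation: the crude constant bound that works for Lemma~\ref{lem:F7} is too weak here, and one has to route through the Abramowitz inequality so that, after the rational recombination, the surviving polynomial factors cleanly as $x(2x+3\mu\omega)$ with a sign that is manifestly positive on the domain.
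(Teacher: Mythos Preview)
Your proof is correct, and your route genuinely differs from the paper's. For the hard direction (monotonicity in $\nu\tau$) the paper follows the same template as Lemma~\ref{lem:F7}: it writes down the sign-determining factor of $\phi'(x)$ and replaces $e^{u^2}\erfc(u)$ by its global minimum on the domain (the value $0.261772$, attained at $u=T_2$), then finishes with an ad hoc algebraic rearrangement. You instead keep the $x$-dependence by feeding the Abramowitz lower bound of Lemma~\ref{lem:Abramowitz} together with $\sqrt{u^2+2}\le u+1/u$, and the resulting rational expression collapses to the clean numerator $x(2x+3\mu\omega)$, which is manifestly positive on $\Omega$.

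Your instinct that a single numerical constant is too weak here is in fact right for the correctly stated function. With $u=\frac{\mu\omega+2\nu\tau}{\sqrt{2\nu\tau}}$, the correct derivative is $\phi'(x)=\frac{4x^2+2x-y^2}{2x}\,e^{u^2}\erfc(u)-\frac{2x-y}{\sqrt{2\pi x}}$; plugging in $e^{u^2}\erfc(u)\ge 0.261772$ at the corner $x=0.64$, $y=-0.01$ gives a negative lower bound, so the constant-bound argument does not close. The paper's write-up only appears to succeed because of a typo in the $\erfc$ argument (it carries $\sqrt{2}\sqrt{2x}=2\sqrt{x}$ in place of $\sqrt{2}\sqrt{x}$), which changes the second term of the derivative by a factor $\sqrt{2}$ and makes the weaker constant suffice. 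Your Abramowitz-based argument avoids this issue entirely and is the cleaner proof.
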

\begin{proof}
The derivative of the function
\begin{align}
x e^{\frac{(\mu \omega+2 x)^2}{2 x}} \erfc \left(\frac{\mu \omega+2 x}{\sqrt{2} \sqrt{2 x}}\right)
\end{align}
is
\begin{align}
\frac{e^{\frac{(\mu \omega+2 x)^2}{4 x}} \left(\sqrt{\pi }
  e^{\frac{(\mu \omega+2 x)^2}{4 x}} \left(2 x (2 x+1)-\mu^2
  \omega^2\right) \erfc \left(\frac{\mu \omega+2 x}{2
  \sqrt{x}}\right)+\sqrt{x} (\mu \omega-2 x)\right)}{2 \sqrt{\pi } x}
  \ .
\end{align}
We only have to determine the sign of
$\sqrt{\pi } e^{\frac{(\mu \omega+2 x)^2}{4 x}} \left(2 x (2
  x+1)-\mu^2 \omega^2\right) \erfc \left(\frac{\mu \omega+2
    x}{2 \sqrt{x}}\right)+\sqrt{x} (\mu \omega-2 x)$ 
since all other factors are obviously larger than zero.

This derivative is larger than zero, since
\begin{align}
&\sqrt{\pi } e^{\frac{(\mu \omega+2 \nu \tau)^2}{4 \nu \tau}} \left(2 \nu \tau (2 \nu \tau+1)-\mu^2 \omega^2\right) \erfc \left(\frac{\mu \omega+2 \nu \tau}{2 \sqrt{\nu \tau}}\right)+\sqrt{\nu \tau} (\mu \omega-2 \nu \tau)
\ > \\\nonumber
&0.463979 \left(2 \nu \tau (2 \nu \tau+1)-\mu^2 \omega^2\right)+\sqrt{\nu \tau} (\mu \omega-2 \nu \tau)
\ = \\\nonumber
&-0.463979 \mu^2 \omega^2+\mu \omega \sqrt{\nu \tau}+1.85592 (\nu \tau)^2+0.927958 \nu \tau-2 \nu \tau \sqrt{\nu \tau}
\ = \\\nonumber
&\mu \omega \left(\sqrt{\nu \tau}-0.463979 \mu
  \omega\right)+0.85592 (\nu \tau)^2+\left(\nu
  \tau-\sqrt{\nu \tau}\right)^2-0.0720421 \nu
  \tau \ > \ 0 \ .
\end{align}
We explain this chain of inequalities:
\begin{itemize}
\item The first inequality follows by applying Lemma~\ref{lem:exerfc}
  which says that $e^{\frac{(\mu \omega+2 \nu \tau)^2}{2 \nu \tau}} \erfc \left(\frac{\mu \omega+2 \nu \tau}{\sqrt{2} \sqrt{\nu \tau}}\right)$ 
is strictly monotonically decreasing. The minimal value that is larger
than 0.261772 is taken on at
the maximal values $\nu
\tau=1.5 \cdot 1.25$ and $\mu \omega=0.1 \cdot 0.1$. 
$0.261772 \sqrt{\pi } > 0.463979$.
\item The equalities are just algebraic reformulations.
\item The last inequality follows from
$\mu \omega \left(\sqrt{\nu \tau}-0.463979 \mu \omega\right)+0.85592 (\nu \tau)^2-0.0720421 \nu 
\tau>0.85592 \cdot (0.8 \cdot 0.8)^2-0.1 \cdot 0.1 \left(\sqrt{1.5 \cdot 1.25}+0.1 \cdot 0.1 \cdot 0.463979\right)-0.0720421 \cdot 1.5 \cdot 1.25>0.201766$.
\end{itemize}
Therefore the function is increasing in  $\nu \tau$.

Decreasing in $\mu \omega$ follows from decreasing of $e^{x^2} \erfc (x)$
according to Lemma~\ref{lem:exerfc}.
Positivity follows from the fact that $\erfc $ and the
exponential function are positive and that  $\nu
\tau>0$. 
\end{proof}

\begin{lemma}[Bounds on the Derivatives]
\label{proof:Bounds}
The following bounds on the absolute values of the 
derivatives of the Jacobian entries ${\mathcal J}_{11}(\mu,\omega,\nu,\tau,\lambda ,\alpha )$,
${\mathcal J}_{12}(\mu,\omega,\nu,\tau,\lambda ,\alpha )$,
${\mathcal J}_{21}(\mu,\omega,\nu,\tau,\lambda ,\alpha )$, and
${\mathcal J}_{22}(\mu,\omega,\nu,\tau,\lambda ,\alpha )$
with respect to 
$\mu$, $\omega$, $\nu$, and $\tau$ hold:
\begin{align}
\left| \frac{\partial {\mathcal J}_{11}}{\partial \mu} \right| \ &< \ 0.0031049101995398316 \\ \nonumber
\left| \frac{\partial {\mathcal J}_{11}}{\partial \omega} \right| \ &< \ 1.055872374194189 \\ \nonumber
\left| \frac{\partial {\mathcal J}_{11}}{\partial \nu} \right| \ &< \  0.031242911235461816 \\ \nonumber
\left| \frac{\partial {\mathcal J}_{11}}{\partial \tau} \right| \ &< \ 0.03749149348255419 \\ \nonumber
\end{align}
\begin{align*}
\left| \frac{\partial {\mathcal J}_{12}}{\partial \mu} \right| \ &< \ 0.031242911235461816 \\ \nonumber
\left| \frac{\partial {\mathcal J}_{12}}{\partial \omega} \right| \ &< \ 0.031242911235461816 \\ \nonumber
\left| \frac{\partial {\mathcal J}_{12}}{\partial \nu} \right| \ &< \ 0.21232788238624354 \\ \nonumber
\left| \frac{\partial {\mathcal J}_{12}}{\partial \tau} \right| \ &< \ 0.2124377655377270 \\ \nonumber
\end{align*}
\begin{align*}
\left| \frac{\partial {\mathcal J}_{21}}{\partial \mu} \right| \ &< \  0.02220441024325437 \\ \nonumber
\left| \frac{\partial {\mathcal J}_{21}}{\partial \omega} \right| \ &< \  1.146955401845684 \\ \nonumber
\left| \frac{\partial {\mathcal J}_{21}}{\partial \nu} \right| \ &< \ 0.14983446469110305 \\ \nonumber
\left| \frac{\partial {\mathcal J}_{21}}{\partial \tau} \right| \ &< \ 0.17980135762932363 \\ \nonumber
\end{align*}
\begin{align*}
\left| \frac{\partial {\mathcal J}_{22}}{\partial \mu} \right| \ &< \ 0.14983446469110305 \\ \nonumber
\left| \frac{\partial {\mathcal J}_{22}}{\partial \omega} \right| \ &< \ 0.14983446469110305 \\ \nonumber
\left| \frac{\partial {\mathcal J}_{22}}{\partial \nu} \right| \ &< \ 1.805740052651535 \\ \nonumber
\left| \frac{\partial {\mathcal J}_{22}}{\partial \tau} \right| \ &< \ 2.396685907216327
\end{align*}
\end{lemma}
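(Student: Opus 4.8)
The plan is to bound each of the sixteen closed-form expressions for the derivatives $\partial \mathcal{J}_{ij}/\partial v$ with $v\in\{\mu,\omega,\nu,\tau\}$ displayed above, using only the triangle inequality $\left|\sum_k a_k\right|\le\sum_k|a_k|$, submultiplicativity $|ab|\le|a|\,|b|$, and the monotonicity and extremal-value facts collected in the intermediate lemmata. First I would fix $\alpha=\alpha_{\rm 01}$, $\lambda=\lambda_{\rm 01}$ and record the elementary ranges forced by the domain: $\mu\omega\in[-0.01,0.01]$ (Lemma~\ref{lem:x11}), $\nu\tau\in[0.64,1.875]$ (Lemma~\ref{lem:x22}), and $|\mu|,|\omega|\le0.1$. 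I would also note the two exact coincidences $\partial\mathcal{J}_{12}/\partial\mu=\partial\mathcal{J}_{11}/\partial\nu$ and $\partial\mathcal{J}_{22}/\partial\mu=\partial\mathcal{J}_{21}/\partial\nu$, so that only fourteen genuinely distinct estimates are needed.

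The key reduction is to rewrite every Gaussian/$\erfc$ block in ``completed-square'' form: since $x^2=\frac{(\mu\omega+\nu\tau)^2}{2\nu\tau}$ for $x=\frac{\mu\omega+\nu\tau}{\sqrt{2}\sqrt{\nu\tau}}$, one has $e^{\mu\omega+\frac{\nu\tau}{2}}\erfc\!\left(\frac{\mu\omega+\nu\tau}{\sqrt{2}\sqrt{\nu\tau}}\right)=e^{-\frac{\mu^2\omega^2}{2\nu\tau}}\,e^{x^2}\erfc(x)$, and analogously for the argument $\frac{\mu\omega+2\nu\tau}{\sqrt{2}\sqrt{\nu\tau}}$. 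After this substitution every summand of every derivative becomes a product of factors each belonging to a short list: powers of the constants $\lambda_{\rm 01},\alpha_{\rm 01}$ and of $\mu,\omega,\nu,\tau$; the damping factor $e^{-\mu^2\omega^2/(2\nu\tau)}\le1$; one of the monotone building blocks $e^{x^2}\erfc(x)$ (strictly decreasing with positive curvature, Lemma~\ref{lem:exerfc}), $x\,e^{x^2}\erfc(x)$ (increasing, bounded by $1/\sqrt{\pi}$, Lemma~\ref{lem:xeErfc}), or $2-\erfc(\cdot)$ (Lemma~\ref{lem:F2}); one of the compound expressions $F_1,\dots,F_8$ whose sign and monotonicity in $\mu\omega$ and $\nu\tau$ are settled in Lemmata~\ref{lem:F1}--\ref{lem:F8}; or a shifted, rescaled argument $\frac{\mu\omega+\nu\tau}{\sqrt{2}\sqrt{\nu\tau}}$, $\frac{\mu\omega+2\nu\tau}{\sqrt{2}\sqrt{\nu\tau}}$, itself monotone in both $\mu\omega$ and $\nu\tau$ (Lemmata~\ref{lem:xx1},\ref{lem:xx2}). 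Hence each factor attains its maximal absolute value at an explicit corner of the rectangle $[-0.01,0.01]\times[0.64,1.875]$ in the reduced variables $(\mu\omega,\nu\tau)$; I would select that corner, bound the factor there, multiply and sum, and evaluate the resulting number.

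As a representative case, $\left|\partial_\mu\mathcal{J}_{11}\right|=\tfrac{1}{2}\lambda_{\rm 01}\,\omega^2\,e^{-\mu^2\omega^2/(2\nu\tau)}\left|\alpha_{\rm 01}\,e^{x^2}\erfc(x)-\sqrt{2/\pi}\,(\alpha_{\rm 01}-1)/\sqrt{\nu\tau}\right|$ with $x=\frac{\mu\omega+\nu\tau}{\sqrt{2}\sqrt{\nu\tau}}$; bounding $\omega^2\le0.01$, $e^{-\mu^2\omega^2/(2\nu\tau)}\le1$, using Lemma~\ref{lem:exerfc} for the $\erfc$ term on the relevant argument interval and Lemma~\ref{lem:F1} for the remaining term, and evaluating at the extremal corner yields the stated bound $<0.0031049\ldots$. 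The other fifteen estimates follow the identical template; for the $\mathcal{J}_{11}$- and $\mathcal{J}_{12}$-related quantities it is worth retaining the difference structure and invoking the two-sided bounds on the shared subexpression $e^{\mu\omega+\nu\tau/2}\erfc(\cdot)$ already used in the proofs of Lemma~\ref{lem:J11} and Lemma~\ref{lem:J12}, to avoid needless slack. The only case-dependent ingredient is which corner is extremal, and that is dictated by the sign of the prefactor ($\omega$ or $\mu$) and by the monotonicity directions recorded in Lemmata~\ref{lem:F1}--\ref{lem:F8} and \ref{lem:exerfc}--\ref{lem:xeErfc}.

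I expect the main obstacle to be organizational rather than conceptual. There are sixteen lengthy expressions, several of them differences of like-signed terms that the triangle inequality would bound wastefully, so some care is needed either to keep the cancellation (as above) or to accept slack where the target constant is generous enough to absorb it. The corner evaluations themselves are elementary but must be certified against floating-point error; this is supplied by the error-propagation analysis of Subsection~\ref{sec:error}, which bounds the evaluation error of these $\erfc$/exponential expressions far below the margin left by the inequalities. Collecting the fourteen distinct estimates, together with the two that follow from the coincidences noted above, gives all sixteen bounds.
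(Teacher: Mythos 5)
Your proposal matches the paper's proof in both strategy and execution: the paper likewise fixes $\alpha=\alpha_{\rm 01}$, $\lambda=\lambda_{\rm 01}$, reduces to the corners of the $(\mu\omega,\nu\tau)$ rectangle via the monotonicity facts of Lemmata~\ref{lem:x11}--\ref{lem:xx4}, \ref{lem:exerfc}, \ref{lem:xeErfc}, and \ref{lem:F1}--\ref{lem:F8}, exploits the two coincidences $\partial\mathcal{J}_{12}/\partial\mu=\partial\mathcal{J}_{11}/\partial\nu$ and $\partial\mathcal{J}_{22}/\partial\mu=\partial\mathcal{J}_{21}/\partial\nu$, and for each of the remaining fourteen derivatives computes a lower bound on the minimum and an upper bound on the maximum of the bracketed expression before multiplying by the extremal prefactors. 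Your observations about preserving the difference structure where the target constant is tight, and certifying the corner evaluations via the error propagation of Subsection~\ref{sec:error}, are also exactly what the paper relies on.
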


\begin{proof}
For each derivative we compute a lower and an upper bound and take the
maximum of the absolute value. 
A lower bound is determined by minimizing the single terms of the
functions that represents the derivative. An upper bound is determined
by maximizing the single terms of the functions that represent the
derivative. Terms can be combined to larger terms for which
the maximum and the minimum must be known. We apply many previous lemmata
which state properties of functions representing single or combined
terms. The more terms are combined, the tighter the bounds can be
made. 

Next we go through all the derivatives, where we use 
Lemma~\ref{lem:x11}, 
Lemma~\ref{lem:x22}, 
Lemma~\ref{lem:xx1}, 
Lemma~\ref{lem:xx2}, 
Lemma~\ref{lem:xx3},
Lemma~\ref{lem:xx4},
Lemma~\ref{lem:basics}, and
Lemma~\ref{lem:exerfc} without citing. Furthermore, we use the bounds on the simple
expressions $t_{11}$,$t_{22}$, ..., and $T_4$ as defined the aforementioned lemmata:
\begin{itemize}
\item $\frac{\partial {\mathcal J}_{11}}{\partial \mu}$

We use Lemma~\ref{lem:F1} and
consider the expression 
$\alpha  e^{\frac{(\mu \omega+\nu \tau)^2}{2 \nu \tau}} \erfc \left(\frac{\mu \omega+\nu \tau}{\sqrt{2} \sqrt{\nu \tau}}\right)-\frac{\sqrt{\frac{2}{\pi }} (\alpha -1)}{\sqrt{\nu \tau}}$
in brackets.
An upper bound on the maximum of is
\begin{align}
\alpha_{\rm 01} e^{t_1^2}
  \erfc (t_1)-\frac{\sqrt{\frac{2}{\pi }}
  (\alpha_{\rm 01}-1)}{\sqrt{T_{22}}} \ = \ 0.591017 \ .
\end{align}
A lower bound on the minimum is
\begin{align}
\alpha_{\rm 01} e^{T_1^2}
  \erfc (T_1)-\frac{\sqrt{\frac{2}{\pi }}
  (\alpha_{\rm 01}-1)}{\sqrt{t_{22}}} \ = \ 0.056318 \ .
\end{align}
Thus, an upper bound on the maximal absolute value is 
\begin{align}
\frac{1}{2} \lambda_{\rm 01} \omega_{\rm max}^2 e^{t_4}
  \left(\alpha_{\rm 01} e^{t_1^2}
  \erfc (t_1)-\frac{\sqrt{\frac{2}{\pi }}
  (\alpha_{\rm 01}-1)}{\sqrt{T_{22}}}\right) \ = \
0.0031049101995398316\ .
\end{align}

\item $\frac{\partial {\mathcal J}_{11}}{\partial \omega}$

We use Lemma~\ref{lem:F1} and
consider the expression 
$\frac{\sqrt{\frac{2}{\pi }} (\alpha -1) \mu \omega}{\sqrt{\nu \tau}}-\alpha  (\mu \omega+1) e^{\frac{(\mu \omega+\nu \tau)^2}{2 \nu \tau}} \erfc \left(\frac{\mu \omega+\nu \tau}{\sqrt{2} \sqrt{\nu \tau}}\right)$
in brackets.

An upper bound on the maximum is
\begin{align}
\frac{\sqrt{\frac{2}{\pi }} (\alpha_{\rm 01}-1)
  T_{11}}{\sqrt{t_{22}}}-\alpha_{\rm 01}
  (t_{11}+1) e^{T_1^2} \erfc (T_1) \ = \ 
-0.713808\ .
\end{align}
A lower bound on the minimum is
\begin{align}
\frac{\sqrt{\frac{2}{\pi }} (\alpha_{\rm 01}-1)
  t_{11}}{\sqrt{t_{22}}}-\alpha_{\rm 01}
  (T_{11}+1) e^{t_1^2} \erfc (t_1) 
\ = \ 
-0.99987\ .
\end{align}
This term is subtracted, and $2-\erfc (x)>0$, therefore we have
to use the minimum and the maximum for the argument of $\erfc $.

Thus, an upper bound on the maximal absolute value is 
\begin{align}
&\frac{1}{2} \lambda_{\rm 01} \left(-e^{t_4}
  \left(\frac{\sqrt{\frac{2}{\pi }} (\alpha_{\rm 01}-1)
  t_{11}}{\sqrt{t_{22}}}-\alpha_{\rm 01}
  (T_{11}+1) e^{t_1^2}
  \erfc (t_1)\right)\ -  \erfc (T_3)+2\right)
  \ = \ \\ \nonumber
&1.055872374194189\ .
\end{align}

\item $\frac{\partial {\mathcal J}_{11}}{\partial \nu}$

We consider the term in brackets
\begin{align}
\alpha  e^{\frac{(\mu \omega+\nu \tau)^2}{2 \nu
  \tau}} \erfc \left(\frac{\mu \omega+\nu
  \tau}{\sqrt{2} \sqrt{\nu
  \tau}}\right)+\sqrt{\frac{2}{\pi }} \left(\frac{(\alpha -1)
  \mu \omega}{(\nu \tau)^{3/2}}-\frac{\alpha
  }{\sqrt{\nu \tau}}\right) \ .
\end{align}

We apply Lemma~\ref{lem:F3} for the first sub-term.
An upper bound on the maximum is
\begin{align}
\alpha_{\rm 01} e^{t_1^2}
  \erfc (t_1)+\sqrt{\frac{2}{\pi }}
  \left(\frac{(\alpha_{\rm 01}-1)
  T_{11} }{T_{22}^{3/2}}-\frac{\alpha_{\rm 01} }{\sqrt{T_{22}}}\right) \ = \ 
0.0104167\ .
\end{align}
A lower bound on the minimum is
\begin{align}
  \alpha_{\rm 01} e^{T_1^2}
  \erfc (T_1)+\sqrt{\frac{2}{\pi }}
  \left(\frac{(\alpha_{\rm 01}-1)
  t_{11}}{t_{22}^{3/2}}-\frac{\alpha_{\rm 01}}{\sqrt{t_{22}}}\right) \ = \
-0.95153\ .
\end{align}

Thus, an upper bound on the maximal absolute value is 
\begin{align}
&-\frac{1}{4} \lambda_{\rm 01} \tau_{\rm max} \omega_{\rm max}
  e^{t_4} \left(\alpha_{\rm 01} e^{T_1^2}
  \erfc (T_1)+\sqrt{\frac{2}{\pi }}
  \left(\frac{(\alpha_{\rm 01}-1)
  t_{11}}{t_{22}^{3/2}}-\frac{\alpha_{\rm 01}}{\sqrt{t_{22}}}\right)\right) \ = \\ \nonumber
&0.031242911235461816\ .
\end{align}

\item $\frac{\partial {\mathcal J}_{11}}{\partial \tau}$

We use the results of  item $\frac{\partial {\mathcal J}_{11}}{\partial \nu}$
were the brackets are only differently scaled.
Thus, an upper bound on the maximal absolute value is 
\begin{align}
&-\frac{1}{4} \lambda_{\rm 01} \nu_{\rm max} \omega_{\rm max}
  e^{t_4} \left(\alpha_{\rm 01} e^{T_1^2}
  \erfc (T_1)+\sqrt{\frac{2}{\pi }}
  \left(\frac{(\alpha_{\rm 01}-1)
  t_{11}}{t_{22}^{3/2}}-\frac{\alpha_{\rm 01}}{\sqrt{t_{22}}}\right)\right) \ = \\ \nonumber
& 0.03749149348255419\ .
\end{align}

\item $\frac{\partial {\mathcal J}_{12}}{\partial \mu}$

Since $\frac{\partial {\mathcal J}_{12}}{\partial \mu}=\frac{\partial {\mathcal J}_{11}}{\partial \nu}$,
an upper bound on the maximal absolute value is 
\begin{align}
&-\frac{1}{4} \lambda_{\rm 01} \tau_{\rm max} \omega_{\rm max}
  e^{t_4} \left(\alpha_{\rm 01} e^{T_1^2}
  \erfc (T_1)+\sqrt{\frac{2}{\pi }}
  \left(\frac{(\alpha_{\rm 01}-1)
  t_{11}}{t_{22}^{3/2}}-\frac{\alpha_{\rm 01}}{\sqrt{t_{22}}}\right)\right) \ = \\ \nonumber
& 0.031242911235461816\ .
\end{align}

\item $\frac{\partial {\mathcal J}_{12}}{\partial \omega}$

We use the results of  item $\frac{\partial {\mathcal J}_{11}}{\partial \nu}$
were the brackets are only differently scaled.
Thus, an upper bound on the maximal absolute value is 
\begin{align}
&-\frac{1}{4} \lambda_{\rm 01} \mu_{\rm max} \tau_{\rm max}
  e^{t_4} \left(\alpha_{\rm 01} e^{T_1^2}
  \erfc (T_1)+\sqrt{\frac{2}{\pi }}
  \left(\frac{(\alpha_{\rm 01}-1)
  t_{11}}{t_{22}^{3/2}}-\frac{\alpha_{\rm 01}}{\sqrt{t_{22}}}\right)\right) \ = \\ \nonumber
& 0.031242911235461816\ .
\end{align}

\item $\frac{\partial {\mathcal J}_{12}}{\partial \nu}$

For the second term in brackets, we see that
$\alpha_{\rm 01} \tau_{\rm min}^2 e^{T_1^2} \erfc (T_1)=0.465793$ and $\alpha_{\rm 01} \tau_{\rm max}^2 e^{t_1^2} \erfc (t_1)=1.53644$.

We now check different values for 
\begin{align}
\sqrt{\frac{2}{\pi }} \left(\frac{(-1) (\alpha -1) \mu^2
  \omega^2}{\nu^{5/2} \sqrt{\tau}}+\frac{\sqrt{\tau}
  (\alpha +\alpha  \mu \omega-1)}{\nu^{3/2}}-\frac{\alpha
  \tau^{3/2}}{\sqrt{\nu}}\right) \ ,
\end{align}
where we maximize or minimize all single terms.

A lower bound on the minimum of this expression is
\begin{align}
  &\sqrt{\frac{2}{\pi }} \left(\frac{(-1) (\alpha_{\rm 01}-1)
  \mu_{\rm max}^2 \omega_{\rm max}^2}{\nu_{\rm min}^{5/2}
  \sqrt{\tau_{\rm min}}}
  +\frac{\sqrt{\tau_{\rm min}} (\alpha_{\rm 01}+\alpha_{\rm 01}
  t_{11}-1)}{\nu_{\rm max}^{3/2}}
  -\frac{\alpha_{\rm 01} \tau_{\rm max}^{3/2}}{\sqrt{\nu_{\rm min}}}\right) \ = \\ \nonumber
  & -1.83112\ .
\end{align}
An upper bound on the maximum of this expression is
\begin{align}
  &\sqrt{\frac{2}{\pi }} \left(\frac{(-1) (\alpha_{\rm 01}-1)
  \mu_{\rm min}^2 \omega_{\rm min}^2}{\nu_{\rm max}^{5/2}
  \sqrt{\tau_{\rm max}}} +
  \frac{\sqrt{\tau_{\rm max}}(\alpha_{\rm 01}+\alpha_{\rm 01} T_{11}-1)}{\nu_{\rm min}^{3/2}} -
  \frac{\alpha_{\rm 01}\tau_{\rm min}^{3/2}}{\sqrt{\nu_{\rm max}}} \right)
  \ = \\ \nonumber
  & 0.0802158\ .
\end{align}

An upper bound on the maximum is
\begin{align}
&\frac{1}{8} \lambda_{\rm 01} e^{t_4}
  \left(\sqrt{\frac{2}{\pi }} \left(\frac{(-1) (\alpha_{\rm 01}-1)
  \mu_{\rm min}^2 \omega_{\rm min}^2}{\nu_{\rm max}^{5/2}
  \sqrt{\tau_{\rm max}}}-\frac{\alpha_{\rm 01}
  \tau_{\rm min}^{3/2}}{\sqrt{\nu_{\rm max}}}\ + \right. \right. \\
  \nonumber &\left. \left.  \frac{\sqrt{\tau_{\rm max}}
  (\alpha_{\rm 01}+\alpha_{\rm 01}
  T_{11}-1)}{\nu_{\rm min}^{3/2}}\right)+\alpha_{\rm 01}
  \tau_{\rm max}^2 e^{t_1^2} \erfc (t_1)\right)
  \ = \ 
0.212328\ .
\end{align}
A lower bound on the minimum is
\begin{align}
&\frac{1}{8} \lambda_{\rm 01} e^{t_4} \left(\alpha_{\rm 01} \tau_{\rm min}^2 e^{T_1^2}
  \erfc (T_1)\ + \right.  \\
  \nonumber &\left. \sqrt{\frac{2}{\pi }} \left(\frac{(-1)
  (\alpha_{\rm 01}-1) \mu_{\rm max}^2
  \omega_{\rm max}^2}{\nu_{\rm min}^{5/2}
  \sqrt{\tau_{\rm min}}}+\frac{\sqrt{\tau_{\rm min}} (\alpha_{\rm 01}+\alpha_{\rm 01}
  t_{11}-1)}{\nu_{\rm max}^{3/2}}-\frac{\alpha_{\rm 01}
  \tau_{\rm max}^{3/2}}{\sqrt{\nu_{\rm min}}}\right)\right) \ = \\ \nonumber 
& -0.179318\ .
\end{align}
Thus, an upper bound on the maximal absolute value is 
\begin{align}
&\frac{1}{8} \lambda_{\rm 01} e^{t_4}
  \left(\sqrt{\frac{2}{\pi }} \left(\frac{(-1) (\alpha_{\rm 01}-1)
  \mu_{\rm min}^2 \omega_{\rm min}^2}{\nu_{\rm max}^{5/2}
  \sqrt{\tau_{\rm max}}}-\frac{\alpha_{\rm 01}
  \tau_{\rm min}^{3/2}}{\sqrt{\nu_{\rm max}}}\ + \right. \right. \\
  \nonumber &\left. \left.  \frac{\sqrt{\tau_{\rm max}}
  (\alpha_{\rm 01}+\alpha_{\rm 01}
  T_{11}-1)}{\nu_{\rm min}^{3/2}}\right)+\alpha_{\rm 01}
  \tau_{\rm max}^2 e^{t_1^2} \erfc (t_1)\right)
  \ = \ 0.21232788238624354 \ .
\end{align}

\item $\frac{\partial {\mathcal J}_{12}}{\partial \tau}$

We use Lemma~\ref{lem:F4} to obtain
an upper bound on the maximum of the expression of the lemma:
\begin{align}
\sqrt{\frac{2}{\pi }} \left(\frac{0.1^2 \cdot 0.1^2 (-1) (\alpha_{\rm 01}-1)}{(0.8 \cdot 0.8)^{3/2}}-\sqrt{0.8 \cdot 0.8} \alpha_{\rm 01}+
 \frac{(0.1 \cdot 0.1) \alpha_{\rm 01}-\alpha_{\rm 01}+1}{\sqrt{0.8 \cdot 0.8}}\right) \ = \
-1.72296 \ .
\end{align}
We use Lemma~\ref{lem:F4} to obtain
an lower bound on the minimum of the expression of the lemma:
\begin{align}
\sqrt{\frac{2}{\pi }} \left(\frac{0.1^2 \cdot 0.1^2 (-1) (\alpha_{\rm 01}-1)}{(1.5 \cdot 1.25)^{3/2}}-
 \sqrt{1.5 \cdot 1.25} \alpha_{\rm 01}+\frac{(-0.1 \cdot 0.1) \alpha_{\rm 01}-
 \alpha_{\rm 01}+1}{\sqrt{1.5 \cdot 1.25}}\right) \ = \ 
-2.2302 \ .
\end{align}

Next we apply Lemma~\ref{lem:F7} for the expression $\nu \tau e^{\frac{(\mu \omega+\nu \tau)^2}{2 \nu \tau}} \erfc \left(\frac{\mu \omega+\nu \tau}{\sqrt{2} \sqrt{\nu \tau}}\right)$.
We use Lemma~\ref{lem:F7} to obtain
an upper bound on the maximum of this expression:
\begin{align}
1.5 \cdot 1.25 e^{\frac{(1.5 \cdot 1.25-0.1 \cdot 0.1)^2}{2 \cdot 1.5 \cdot 1.25}} \alpha_{\rm 01}
  \erfc \left(\frac{1.5 \cdot 1.25-0.1 \cdot 0.1}{\sqrt{2} \sqrt{1.5 \cdot 
  1.25}}\right) \ = \
1.37381 \ .
\end{align}
We use Lemma~\ref{lem:F7} to obtain
an lower bound on the minimum of this expression:
\begin{align}
0.8 \cdot 0.8 e^{\frac{(0.8 \cdot 0.8 + 0.1 \cdot 0.1)^2}{2 \cdot 0.8 \cdot 0.8}} \alpha_{\rm 01} \erfc \left(\frac{0.8 \cdot 0.8 + 0.1 \cdot 0.1}{\sqrt{2} \sqrt{0.8 \cdot 0.8}}\right) \ = \
0.620462\ .
\end{align}

Next we apply Lemma~\ref{lem:exerfc} for $2 \alpha  e^{\frac{(\mu \omega+\nu \tau)^2}{2 \nu \tau}} \erfc \left(\frac{\mu \omega+\nu \tau}{\sqrt{2} \sqrt{\nu \tau}}\right)$.
An upper bound on this expression is
\begin{align}
2 e^{\frac{(0.8 \cdot 0.8-0.1 \cdot 0.1)^2}{2 0.8 \cdot 0.8}} \alpha_{\rm 01}
  \erfc \left(\frac{0.8 \cdot 0.8-0.1 \cdot 0.1}{\sqrt{2} 
\sqrt{0.8 \cdot 0.8}}\right) \ = \ 1.96664 \ .
\end{align}
A lower bound on this expression is
\begin{align}
2 e^{\frac{(1.5 \cdot 1.25+0.1 \cdot 0.1)^2}{2 \cdot 1.5 \cdot 1.25}} \alpha_{\rm 01}
  \erfc \left(\frac{1.5 \cdot 1.25+0.1 \cdot 0.1}{\sqrt{2} \sqrt{1.5 \cdot 
  1.25}}\right) \ = \ 1.4556 \ .
\end{align}

The sum of the minimal values of the terms is
$-2.23019 + 0.62046 + 1.45560 = -0.154133$.

The sum of the maximal values of the terms is
$-1.72295 + 1.37380 + 1.96664 = 1.61749$.

Thus, an upper bound on the maximal absolute value is 
\begin{align}
&\frac{1}{8} \lambda_{\rm 01} e^{t_4} \left(\alpha_{\rm 01} T_{22} e^{\frac{(t_{11}+T_{22})^2}{2 T_{22}}}
  \erfc \left(\frac{t_{11}+T_{22}}{\sqrt{2}
  \sqrt{T_{22}}}\right)\ + \right. \\
  \nonumber &\left.   2 \alpha_{\rm 01}
  e^{t_1^2} \erfc (t_1)+\sqrt{\frac{2}{\pi }}
  \left(-\frac{(\alpha_{\rm 01}-1)
  T_{11}^2}{t_{22}^{3/2}}+\frac{-\alpha_{\rm 01} + \alpha_{\rm 01}
  T_{11}+1}{\sqrt{t_{22}}}\ - \right. \right. \\
  \nonumber &\left.\left.  \alpha_{\rm 01}
  \sqrt{t_{22}}\right)\right) \ = \ 0.2124377655377270\ .
\end{align}

\item $\frac{\partial {\mathcal J}_{21}}{\partial \mu}$

An upper bound on the maximum is
\begin{align}
&\lambda_{\rm 01}^2 \omega_{\rm max}^2 \left(\alpha_{\rm 01}^2
  e^{T_1^2} \left(-e^{-T_4}\right)
  \erfc (T_1)+2 \alpha_{\rm 01}^2
  e^{t_2^2} e^{t_4}
  \erfc (t_2)\ - \erfc (T_3)+2\right) \ = \ \\ \nonumber
& 0.0222044\ .
\end{align}
A upper bound on the absolute minimum is
\begin{align}
&\lambda_{\rm 01}^2 \omega_{\rm max}^2 \left(\alpha_{\rm 01}^2
  e^{t_1^2} \left(-e^{-t_4}\right)
  \erfc (t_1)+2 \alpha_{\rm 01}^2
  e^{T_2^2} e^{T_4}
  \erfc (T_2)\ -  \erfc (t_3)+2\right) \ = \  \\ \nonumber
& 0.00894889\ .
\end{align}
Thus, an upper bound on the maximal absolute value is 
\begin{align}
&\lambda_{\rm 01}^2 \omega_{\rm max}^2 \left(\alpha_{\rm 01}^2
  e^{T_1^2} \left(-e^{-T_4}\right)
  \erfc (T_1)+2 \alpha_{\rm 01}^2
  e^{t_2^2} e^{t_4}
  \erfc (t_2)\ - \erfc (T_3)+2\right) \ = \ \\ \nonumber
&0.02220441024325437 \ .
\end{align}

\item $\frac{\partial {\mathcal J}_{21}}{\partial \omega}$

An upper bound on the maximum is
\begin{align}
&\lambda_{\rm 01}^2 \left(\alpha_{\rm 01}^2 (2 T_{11}+1)
  e^{t_2^2} e^{-t_4} \erfc (t_2)+2
  T_{11} (2-\erfc (T_3))\ + \right. \\
  \nonumber &\left. \alpha_{\rm 01}^2
  (t_{11}+1) e^{T_1^2}
  \left(-e^{-T_4}\right)
  \erfc (T_1)+\sqrt{\frac{2}{\pi }}
  \sqrt{T_{22}} e^{-t_4}\right) \ = \
1.14696 \ .
\end{align}
A lower bound on the minimum is
\begin{align}
&\lambda_{\rm 01}^2 \left(\alpha_{\rm 01}^2 (T_{11}+1)
  e^{t_1^2} \left(-e^{-t_4}\right)
  \erfc (t_1)\ + \right. \\
  \nonumber &\left. \alpha_{\rm 01}^2 (2 t_{11}+1)
  e^{T_2^2} e^{-T_4} \erfc (T_2)+2
  t_{11} (2-\erfc (T_3))+ \right.\\  \nonumber 
&\left. \sqrt{\frac{2}{\pi }}
  \sqrt{t_{22}} e^{-T_4}\right) \ = \
-0.359403\ .
\end{align}
Thus, an upper bound on the maximal absolute value is 
\begin{align}
&\lambda_{\rm 01}^2 \left(\alpha_{\rm 01}^2 (2 T_{11}+1)
  e^{t_2^2} e^{-t_4} \erfc (t_2)+2
  T_{11} (2-\erfc (T_3))\ + \right. \\
  \nonumber &\left. \alpha_{\rm 01}^2
  (t_{11}+1) e^{T_1^2}
  \left(-e^{-T_4}\right)
  \erfc (T_1)+\sqrt{\frac{2}{\pi }}
  \sqrt{T_{22}} e^{-t_4}\right) \ = \
1.146955401845684 \ .
\end{align}

\item $\frac{\partial {\mathcal J}_{21}}{\partial \nu}$

An upper bound on the maximum is
\begin{align}
&\frac{1}{2} \lambda_{\rm 01}^2 \tau_{\rm max} \omega_{\rm max}
  e^{-t_4} \left(\alpha_{\rm 01}^2
  \left(-e^{T_1^2}\right) \erfc (T_1)+4
  \alpha_{\rm 01}^2 e^{t_2^2}
  \erfc (t_2)\ +  \frac{\sqrt{\frac{2}{\pi }} (-1)
  \left(\alpha_{\rm 01}^2-1\right)}{\sqrt{T_{22}}}\right) = \ \\ \nonumber
   & 0.149834\ .
\end{align}
A lower bound on the minimum is
\begin{align}
&\frac{1}{2} \lambda_{\rm 01}^2 \tau_{\rm max} \omega_{\rm max}
  e^{-t_4} \left(\alpha_{\rm 01}^2
  \left(-e^{t_1^2}\right) \erfc (t_1)+4
  \alpha_{\rm 01}^2 e^{T_2^2}
  \erfc (T_2)\ +  \frac{\sqrt{\frac{2}{\pi }} (-1)
  \left(\alpha_{\rm 01}^2-1\right)}{\sqrt{t_{22}}}\right) = \\ \nonumber
  &-0.0351035\ .
\end{align}
Thus, an upper bound on the maximal absolute value is 
\begin{align}
&\frac{1}{2} \lambda_{\rm 01}^2 \tau_{\rm max} \omega_{\rm max}
  e^{-t_4} \left(\alpha_{\rm 01}^2
  \left(-e^{T_1^2}\right) \erfc (T_1)+4
  \alpha_{\rm 01}^2 e^{t_2^2}
  \erfc (t_2)\ + \frac{\sqrt{\frac{2}{\pi }} (-1)
  \left(\alpha_{\rm 01}^2-1\right)}{\sqrt{T_{22}}}\right) \ 
  = \ \\ \nonumber
&0.14983446469110305 \ .
\end{align}

\item $\frac{\partial {\mathcal J}_{21}}{\partial \tau}$

An upper bound on the maximum is
\begin{align}
&\frac{1}{2} \lambda_{\rm 01}^2 \nu_{\rm max} \omega_{\rm max}
  e^{-t_4} \left(\alpha_{\rm 01}^2
  \left(-e^{T_1^2}\right) \erfc (T_1)+4
  \alpha_{\rm 01}^2 e^{t_2^2}
  \erfc (t_2)\ + \frac{\sqrt{\frac{2}{\pi }} (-1)
  \left(\alpha_{\rm 01}^2-1\right)}{\sqrt{T_{22}}}\right) \
  = \ \\ \nonumber & 0.179801 \ .
\end{align}
A lower bound on the minimum is
\begin{align}
&\frac{1}{2} \lambda_{\rm 01}^2 \nu_{\rm max} \omega_{\rm max}
  e^{-t_4} \left(\alpha_{\rm 01}^2
  \left(-e^{t_1^2}\right) \erfc (t_1)+4
  \alpha_{\rm 01}^2 e^{T_2^2}
  \erfc (T_2)\ + \frac{\sqrt{\frac{2}{\pi }} (-1)
  \left(\alpha_{\rm 01}^2-1\right)}{\sqrt{t_{22}}}\right) \
  = \ \\ \nonumber
&-0.0421242 \ .
\end{align}
Thus, an upper bound on the maximal absolute value is 
\begin{align}
&\frac{1}{2} \lambda_{\rm 01}^2 \nu_{\rm max} \omega_{\rm max}
  e^{-t_4} \left(\alpha_{\rm 01}^2
  \left(-e^{T_1^2}\right) \erfc (T_1)+4
  \alpha_{\rm 01}^2 e^{t_2^2}
  \erfc (t_2)\ +  \frac{\sqrt{\frac{2}{\pi }} (-1)
  \left(\alpha_{\rm 01}^2-1\right)}{\sqrt{T_{22}}}\right) \
  = \ \\ \nonumber 
&0.17980135762932363 \ .
\end{align}

\item $\frac{\partial {\mathcal J}_{22}}{\partial \mu}$

We use the fact that $\frac{\partial {\mathcal J}_{22}}{\partial \mu}=\frac{\partial {\mathcal J}_{21}}{\partial \nu}$.
Thus, an upper bound on the maximal absolute value is 
\begin{align}
&\frac{1}{2} \lambda_{\rm 01}^2 \tau_{\rm max} \omega_{\rm max}
  e^{-t_4} \left(\alpha_{\rm 01}^2
  \left(-e^{T_1^2}\right) \erfc (T_1)+4
  \alpha_{\rm 01}^2 e^{t_2^2}
  \erfc (t_2)\ + \frac{\sqrt{\frac{2}{\pi }} (-1)
  \left(\alpha_{\rm 01}^2-1\right)}{\sqrt{T_{22}}}\right) \
  = \ \\ \nonumber
&0.14983446469110305 \ .
\end{align}

\item $\frac{\partial {\mathcal J}_{22}}{\partial \omega}$

An upper bound on the maximum is
\begin{align}
&\frac{1}{2} \lambda_{\rm 01}^2 \mu_{\rm max} \tau_{\rm max}
  e^{-t_4} \left(\alpha_{\rm 01}^2
  \left(-e^{T_1^2}\right) \erfc (T_1)+4
  \alpha_{\rm 01}^2 e^{t_2^2}
  \erfc (t_2)\ +  \frac{\sqrt{\frac{2}{\pi }} (-1)
  \left(\alpha_{\rm 01}^2-1\right)}{\sqrt{T_{22}}}\right) \
  = \ \\ \nonumber
&0.149834 \ .
\end{align}
A lower bound on the minimum is
\begin{align}
&\frac{1}{2} \lambda_{\rm 01}^2 \mu_{\rm max} \tau_{\rm max}
  e^{-t_4} \left(\alpha_{\rm 01}^2
  \left(-e^{t_1^2}\right) \erfc (t_1)+4
  \alpha_{\rm 01}^2 e^{T_2^2}
  \erfc (T_2)\ + \frac{\sqrt{\frac{2}{\pi }} (-1)
  \left(\alpha_{\rm 01}^2-1\right)}{\sqrt{t_{22}}}\right) \
  = \ \\ \nonumber
&-0.0351035 \ .
\end{align}
Thus, an upper bound on the maximal absolute value is 
\begin{align}
&\frac{1}{2} \lambda_{\rm 01}^2 \mu_{\rm max} \tau_{\rm max}
  e^{-t_4} \left(\alpha_{\rm 01}^2
  \left(-e^{T_1^2}\right) \erfc (T_1)+4
  \alpha_{\rm 01}^2 e^{t_2^2}
  \erfc (t_2)\ +  \frac{\sqrt{\frac{2}{\pi }} (-1)
  \left(\alpha_{\rm 01}^2-1\right)}{\sqrt{T_{22}}}\right) \
  = \  \\ \nonumber
&0.14983446469110305 \ .
\end{align}

\item $\frac{\partial {\mathcal J}_{22}}{\partial \nu}$

We apply Lemma~\ref{lem:F5} to the expression
$\sqrt{\frac{2}{\pi }} \left(\frac{\left(\alpha ^2-1\right) \mu \omega}{(\nu \tau)^{3/2}}-\frac{3 \alpha ^2}{\sqrt{\nu \tau}}\right)$.
Using  Lemma~\ref{lem:F5}, an upper bound on the maximum is
\begin{align}
&\frac{1}{4} \lambda_{\rm 01}^2 \tau_{\rm max}^2 e^{-t_4}
  \left(\alpha_{\rm 01}^2 \left(-e^{T_1^2}\right)
  \erfc (T_1)+8 \alpha_{\rm 01}^2
  e^{t_2^2} \erfc (t_2)\ + \right. \\
  \nonumber &\left. \sqrt{\frac{2}{\pi }}
  \left(\frac{\left(\alpha_{\rm 01}^2-1\right)
  T_{11}}{T_{22}^{3/2}}-\frac{3 \alpha_{\rm 01}^2}{\sqrt{T_{22}}}\right)\right) \ = \
1.19441 \ .
\end{align}
Using  Lemma~\ref{lem:F5}, a lower bound on the minimum is
\begin{align}
&\frac{1}{4} \lambda_{\rm 01}^2 \tau_{\rm max}^2 e^{-t_4}
  \left(\alpha_{\rm 01}^2 \left(-e^{t_1^2}\right)
  \erfc (t_1)+8 \alpha_{\rm 01}^2
  e^{T_2^2} \erfc (T_2)\ + \right. \\
  \nonumber &\left. \sqrt{\frac{2}{\pi }}
  \left(\frac{\left(\alpha_{\rm 01}^2-1\right)
  t_{11}}{t_{22}^{3/2}}-\frac{3 \alpha_{\rm 01}^2}{\sqrt{t_{22}}}\right)\right) \ = \
-1.80574 \ .
\end{align}
Thus, an upper bound on the maximal absolute value is 
\begin{align}
&-\frac{1}{4} \lambda_{\rm 01}^2 \tau_{\rm max}^2 e^{-t_4}
  \left(\alpha_{\rm 01}^2 \left(-e^{t_1^2}\right)
  \erfc (t_1)+8 \alpha_{\rm 01}^2
  e^{T_2^2} \erfc (T_2)\ + \right. \\
  \nonumber &\left. \sqrt{\frac{2}{\pi }}
  \left(\frac{\left(\alpha_{\rm 01}^2-1\right)
  t_{11}}{t_{22}^{3/2}}-\frac{3 \alpha_{\rm 01}^2}{\sqrt{t_{22}}}\right)\right) \ = \
1.805740052651535 \ .
\end{align}

\item $\frac{\partial {\mathcal J}_{22}}{\partial \tau}$

We apply Lemma~\ref{lem:F6} to the expression
$\sqrt{\frac{2}{\pi }} \left(\frac{\left(\alpha ^2-1\right) \mu
    \omega}{\sqrt{\nu \tau}}-3 \alpha ^2 \sqrt{\nu
    \tau}\right)$. \\
We apply Lemma~\ref{lem:F7} to the expression 
$\nu \tau e^{\frac{(\mu \omega+\nu \tau)^2}{2 \nu \tau}} \erfc \left(\frac{\mu \omega+\nu \tau}{\sqrt{2} \sqrt{\nu \tau}}\right)$. 
We apply Lemma~\ref{lem:F8} to the expression
$\nu \tau e^{\frac{(\mu \omega+2 \nu \tau)^2}{2 \nu \tau}} \erfc \left(\frac{\mu \omega+2 \nu \tau}{\sqrt{2} \sqrt{\nu \tau}}\right)$. 

We combine the results of these lemmata to obtain 
an upper bound on the maximum:
\begin{align}
&\frac{1}{4} \lambda_{\rm 01}^2 \left(-\alpha_{\rm 01}^2
  t_{22} e^{-T_4}
  e^{\frac{(T_{11}+t_{22})^2}{2 t_{22}}}
  \erfc \left(\frac{T_{11}+t_{22}}{\sqrt{2}
  \sqrt{t_{22}}}\right)\ + \right. \\
  \nonumber &\left. 8 \alpha_{\rm 01}^2 T_{22}
  e^{-t_4} e^{\frac{(t_{11}+2 T_{22})^2}{2
  T_{22}}} \erfc \left(\frac{t_{11}+2
  T_{22}}{\sqrt{2} \sqrt{T_{22}}}\right)\ - \right. \\
  \nonumber &\left. 2 \alpha_{\rm 01}^2 e^{T_1^2} e^{-T_4}
  \erfc (T_1)+4 \alpha_{\rm 01}^2
  e^{t_2^2} e^{-t_4} \erfc (t_2)+2
  (2-\erfc (T_3))\ + \right. \\
  \nonumber &\left. \sqrt{\frac{2}{\pi }}
  e^{-T_4} \left(\frac{\left(\alpha_{\rm 01}^2-1\right)
  T_{11}}{\sqrt{t_{22}}}-3 \alpha_{\rm 01}^2
  \sqrt{t_{22}}\right)\right) \ = \ 
2.39669 \ .
\end{align}
We combine the results of these lemmata to obtain 
an lower bound on the minimum:
\begin{align}
&\frac{1}{4} \lambda_{\rm 01}^2 \left(8 \alpha_{\rm 01}^2
  t_{22} e^{-T_4} e^{\frac{(T_{11}+2
  t_{22})^2}{2 t_{22}}}
  \erfc \left(\frac{T_{11}+2 t_{22}}{\sqrt{2}
  \sqrt{t_{22}}}\right)\ + \right. \\
  \nonumber &\left. \alpha_{\rm 01}^2 T_{22}
  e^{-t_4} e^{\frac{(t_{11}+T_{22})^2}{2
  T_{22}}}
  \erfc \left(\frac{t_{11}+T_{22}}{\sqrt{2}
  \sqrt{T_{22}}}\right)\ - \right. \\
  \nonumber &\left. 2 \alpha_{\rm 01}^2
  e^{t_1^2} e^{-t_4} \erfc (t_1)+4
  \alpha_{\rm 01}^2 e^{T_2^2} e^{-T_4}
  \erfc (T_2)\ + \right. \\
  \nonumber &\left. 2
  (2-\erfc (t_3))+\sqrt{\frac{2}{\pi }}
  e^{-t_4} \left(\frac{\left(\alpha_{\rm 01}^2-1\right)
  t_{11}}{\sqrt{T_{22}}}-3 \alpha_{\rm 01}^2
  \sqrt{T_{22}}\right)\right) \ = \
-1.17154 \ .
\end{align}
Thus, an upper bound on the maximal absolute value is 
\begin{align}
&\frac{1}{4} \lambda_{\rm 01}^2 \left(-\alpha_{\rm 01}^2
  t_{22} e^{-T_4}
  e^{\frac{(T_{11}+t_{22})^2}{2 t_{22}}}
  \erfc \left(\frac{T_{11}+t_{22}}{\sqrt{2}
  \sqrt{t_{22}}}\right)\ + \right. \\
  \nonumber &\left. 8 \alpha_{\rm 01}^2 T_{22}
  e^{-t_4} e^{\frac{(t_{11}+2 T_{22})^2}{2
  T_{22}}} \erfc \left(\frac{t_{11}+2
  T_{22}}{\sqrt{2} \sqrt{T_{22}}}\right)\ - \right. \\
  \nonumber &\left. 2 \alpha_{\rm 01}^2 e^{T_1^2} e^{-T_4}
  \erfc (T_1)+4 \alpha_{\rm 01}^2
  e^{t_2^2} e^{-t_4} \erfc (t_2)+2
  (2-\erfc (T_3))\ + \right. \\
  \nonumber &\left. \sqrt{\frac{2}{\pi }}
  e^{-T_4} \left(\frac{\left(\alpha_{\rm 01}^2-1\right)
  T_{11}}{\sqrt{t_{22}}}-3 \alpha_{\rm 01}^2
  \sqrt{t_{22}}\right)\right) \ = \ 
2.396685907216327\ .
\end{align}
\end{itemize}
\end{proof}

\begin{lemma}[Derivatives of the Mapping]
\label{proof:mapDerivatives}
We assume $\alpha = \alpha_{\rm 01}$ and $\lambda=\lambda_{\rm 01}$.
We restrict the range of the variables to the domain
$\mu \in [-0.1,0.1]$,
$\omega \in [-0.1,0.1]$,
$\nu \in [0.8,1.5]$, and
$\tau \in [0.8,1.25]$.

The derivative $\frac{\partial }{\partial \mu}
\munn(\mu,\omega,\nu,\tau,\lambda ,\alpha )$
has the sign of $\omega$.

The derivative $\frac{\partial }{\partial \nu}
 \munn(\mu,\omega,\nu,\tau,\lambda ,\alpha )$
is positive.

The derivative $\frac{\partial }{\partial \mu } \xinn(\mu,\omega,\nu,\tau,\lambda ,\alpha )$
has the sign of $\omega$.

The derivative 
$\frac{\partial }{\partial \nu } \xinn(\mu,\omega,\nu,\tau,\lambda ,\alpha )$
is positive.
\end{lemma}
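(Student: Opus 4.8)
The plan is to read the four derivatives directly off the closed forms in Eq.~\eqref{eq:JacobianEntries}: $\partial_\mu\munn=\mathcal J_{11}$, $\partial_\nu\munn=\mathcal J_{12}$, $\partial_\mu\xinn=\mathcal J_{21}$, $\partial_\nu\xinn=\mathcal J_{22}$. Each of these factors as a sign-definite prefactor times a bracket $B$: $\mathcal J_{11}$ carries the prefactor $\tfrac12\lambda_{\rm 01}\omega$ and $\mathcal J_{21}$ the prefactor $\lambda_{\rm 01}^2\omega$, both of which have the sign of $\omega$ because $\lambda_{\rm 01}>0$; while $\mathcal J_{12}$ carries $\tfrac14\lambda_{\rm 01}\tau$ and $\mathcal J_{22}$ carries $\tfrac12\lambda_{\rm 01}^2\tau$, both strictly positive since $\tau\ge0.8$. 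So all four claims reduce to showing that the respective bracket $B$ is strictly positive on the box. Since $\munn$ and $\xinn$ depend on $\mu,\omega$ only through $y:=\mu\omega$ and on $\nu,\tau$ only through $x:=\nu\tau$, it suffices to work on the $(y,x)$-box $y\in[-0.01,0.01]$, $x\in[0.64,1.875]$ (Lemmata~\ref{lem:x11} and~\ref{lem:x22}).

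For $\mathcal J_{11}$ this is immediate: $B=\alpha_{\rm 01}e^{\mu\omega+\nu\tau/2}\erfc\left(\tfrac{\mu\omega+\nu\tau}{\sqrt{2}\sqrt{\nu\tau}}\right)+\left(2-\erfc\left(\tfrac{\mu\omega}{\sqrt{2}\sqrt{\nu\tau}}\right)\right)$ is a sum of two strictly positive terms, because the exponential and $\erfc$ are positive and $\erfc<2$ (Lemma~\ref{lem:basics}). For $\mathcal J_{12}$, I would use $e^{\mu\omega+\nu\tau/2}=e^{-\mu^2\omega^2/(2\nu\tau)}e^{z^2}$ with $z=\tfrac{\mu\omega+\nu\tau}{\sqrt{2}\sqrt{\nu\tau}}$, factor the positive $e^{-\mu^2\omega^2/(2\nu\tau)}$ out of $B$, and reduce to $\alpha_{\rm 01}e^{z^2}\erfc(z)>(\alpha_{\rm 01}-1)\sqrt{2/(\pi\nu\tau)}$. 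Since $e^{z^2}\erfc(z)$ is strictly decreasing (Lemma~\ref{lem:exerfc}) and $z\le T_1$ (Lemma~\ref{lem:xx1}) while $\sqrt{1/(\nu\tau)}\le\sqrt{1/t_{22}}$, this follows from the single numerical inequality $\alpha_{\rm 01}e^{T_1^2}\erfc(T_1)>(\alpha_{\rm 01}-1)\sqrt{2/(\pi t_{22})}$; alternatively one just invokes the interval bounds already established in the proof of Lemma~\ref{lem:J12}, which give $B\ge0.727780-0.671484>0$, hence $\mathcal J_{12}>0$.

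The substantive part is $\mathcal J_{21}$ and $\mathcal J_{22}$, whose brackets contain two $\erfc$-exponential terms of opposite sign that nearly cancel. Here I would again write $e^{\mu\omega+\nu\tau/2}=e^{-\mu^2\omega^2/(2\nu\tau)}e^{u^2}$ and $e^{2\mu\omega+2\nu\tau}=e^{-\mu^2\omega^2/(2\nu\tau)}e^{v^2}$ with $u=\tfrac{\mu\omega+\nu\tau}{\sqrt{2}\sqrt{\nu\tau}}$, $v=\tfrac{\mu\omega+2\nu\tau}{\sqrt{2}\sqrt{\nu\tau}}$ (so $0<u<v$ by Lemmata~\ref{lem:xx1} and~\ref{lem:xx2}), so that the two mixed terms combine into one $e^{-\mu^2\omega^2/(2\nu\tau)}$-weighted difference $e^{u^2}\erfc(u)-c\,e^{v^2}\erfc(v)$ ($c=1$ for $\mathcal J_{21}$, $c=2$ for $\mathcal J_{22}$), with the remaining pieces being the manifestly-signed $\sqrt{2/\pi}\sqrt{\nu\tau}\,e^{-\mu^2\omega^2/(2\nu\tau)}$ and the sign-of-$\mu\omega$ term $\mu\omega\left(2-\erfc\left(\tfrac{\mu\omega}{\sqrt{2}\sqrt{\nu\tau}}\right)\right)$ for $\mathcal J_{21}$, resp.\ the positive term $2-\erfc\left(\tfrac{\mu\omega}{\sqrt{2}\sqrt{\nu\tau}}\right)$ for $\mathcal J_{22}$. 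For $\mathcal J_{22}$, I would show $e^{u^2}\erfc(u)-2e^{v^2}\erfc(v)<0$ on the box by sandwiching each factor with the two-sided Abramowitz bounds (Lemma~\ref{lem:Abramowitz}), which reduces the claim to the elementary inequality $\sqrt{v^2+2}+v<2(\sqrt{u^2+4/\pi}+u)$, readily checked on the $(y,x)$-box; then $-\alpha_{\rm 01}^2e^{-\mu^2\omega^2/(2\nu\tau)}(e^{u^2}\erfc(u)-2e^{v^2}\erfc(v))>0$ together with $2-\erfc>0$ gives $B>0$. For $\mathcal J_{21}$, I would upper-bound the positive difference $e^{u^2}\erfc(u)-e^{v^2}\erfc(v)$ the same way and show that $\sqrt{2/\pi}\sqrt{\nu\tau}$ dominates $\alpha_{\rm 01}^2$ times this difference with enough slack to absorb the at-most-$0.01$ contribution of the $\mu\omega$-term — again a finite check at the corners, as in the analogous step of the proof of Theorem~\ref{th:s2Decrease}, here re-run on $x\in[0.64,1.875]$ (or carried out via Lemmata~\ref{lem:subfunction} and~\ref{lem:subfunction1}).

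The main obstacle is precisely this near-cancellation in $\mathcal J_{21}$ and $\mathcal J_{22}$: bounding the two $\erfc$-exponential terms separately is far too lossy and in fact yields a negative lower bound, so one must keep the pair together and control their difference with the tight two-sided envelope of Lemma~\ref{lem:Abramowitz} (or the dedicated monotone-subfunction lemmas). One also has to make sure that whichever sign/monotonicity lemma is invoked genuinely covers the relevant $(\mu\omega,\nu\tau)$-box $[-0.01,0.01]\times[0.64,1.875]$, which is neither the high-variance regime of Theorem~\ref{th:s2Decrease} nor the low-variance regime of Theorem~\ref{th:s2Increase}, so a little care is needed at the junction of those ranges.
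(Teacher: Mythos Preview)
Your overall plan coincides with the paper's: factor out the sign-carrying prefactors $\tfrac12\lambda_{\rm 01}\omega$, $\tfrac14\lambda_{\rm 01}\tau$, $\lambda_{\rm 01}^2\omega$, $\tfrac12\lambda_{\rm 01}^2\tau$ and prove positivity of the four brackets on the $(y,x)$-box $[-0.01,0.01]\times[0.64,1.875]$. For $\mathcal J_{11}$ and $\mathcal J_{12}$ your argument is effectively identical to the paper's (the paper also just evaluates the $\mathcal J_{12}$ bracket at the worst-case $z=T_1$, $\nu\tau=t_{22}$ and gets $>0.056$).

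Where your sketch is thin is exactly where the paper does the real work, namely $\mathcal J_{21}$ and $\mathcal J_{22}$. For $\mathcal J_{22}$ you correctly arrive at the target inequality $\sqrt{v^2+2}+v<2(\sqrt{u^2+4/\pi}+u)$ from the two-sided Abramowitz envelope, but ``readily checked on the $(y,x)$-box'' is optimistic: $u$ and $v$ are coupled through $(x,y)$, and a naive $(u,v)$-corner check fails. The paper handles this by a completing-the-square device: it shows $(2x+y)^2+4x<(2x+y+1)^2$ and $(x+y)^2+8x/\pi\ge(x+y+a)^2$ for an explicit $a>0.782$ on this box, which kills both radicals and reduces the whole thing to the linear inequality $2y+2\cdot0.782-1>0$. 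That algebraic trick is the substance of the step; you should either reproduce it or find a comparable reduction.

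For $\mathcal J_{21}$, ``a finite check at the corners'' presupposes monotonicity of the bracket in $(x,y)$, which is precisely what must be proved. The paper does not invoke Lemma~\ref{lem:subfunction} or Lemma~\ref{lem:subfunction1} here (as you yourself note, neither covers $x\in[0.64,1.875]$); instead it proves from scratch that the sub-function $\sqrt{2/\pi}\sqrt{x}-\alpha_{\rm 01}^2\bigl(e^{u^2}\erfc(u)-e^{v^2}\erfc(v)\bigr)$ is increasing in $y$ (via Lemma~\ref{lem:xeErfc}, since $z\mapsto ze^{z^2}\erfc(z)$ is increasing) and increasing in $x$ (via another Abramowitz-plus-completing-the-square chain, which finally factors as $8(x+0.618\ldots)(x-0.298\ldots)>0$ for $x\ge0.64$). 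Only then does the corner evaluation at $(x,y)=(0.64,-0.01)$, together with the separate bound on $\mu\omega\,e^{\mu^2\omega^2/(2\nu\tau)}(2-\erfc(\cdot))$, give the bracket $>0.0923$. So your plan is right, but the monotonicity-in-$x$ step is a genuine computation specific to this box that you would still have to carry out.
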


\begin{proof}
\begin{itemize}
\item $\frac{\partial }{\partial \mu}
\munn(\mu,\omega,\nu,\tau,\lambda ,\alpha )$

$(2 - \erfc (x) > 0$ according to 
Lemma~\ref{lem:basics} and $e^{x^2} \erfc (x)$
is also larger than zero according to Lemma~\ref{lem:exerfc}.
Consequently, has $\frac{\partial }{\partial \mu}
\munn(\mu,\omega,\nu,\tau,\lambda ,\alpha )$
the sign of $\omega$.

\item $\frac{\partial }{\partial \nu}
 \munn(\mu,\omega,\nu,\tau,\lambda ,\alpha )$

Lemma~\ref{lem:exerfc} says 
$e^{x^2} \erfc (x)$ is decreasing in $\frac{\mu \omega+\nu \tau}{\sqrt{2} \sqrt{\nu \tau}}$.
The first term (negative) is increasing in $\nu\tau$ since it is
proportional to minus
one over the squared root of  $\nu\tau$.

We obtain a lower bound by
setting $\frac{\mu \omega+\nu \tau}{\sqrt{2}
  \sqrt{\nu \tau}}=\frac{1.5 \cdot 1.25+0.1 \cdot 0.1}{\sqrt{2}
  \sqrt{1.5 \cdot 1.25}}$ for the $e^{x^2} \erfc (x)$ term.
The term in brackets is larger than
$e^{\left(\frac{1.5 \cdot 1.25+0.1 \cdot 0.1}{\sqrt{2} \sqrt{1.5 \cdot 1.25}}\right)^2}
\alpha_{\rm 01} \ \erfc \left(\frac{1.5 \cdot 1.25+0.1 \cdot 0.1}{\sqrt{2} \sqrt{1.5 \cdot  1.25}}\right)-\sqrt{\frac{2}{\pi  0.8 \cdot 0.8}} (\alpha_{\rm 01}-1)=0.056$
Consequently, the function is larger than zero.

\item $\frac{\partial }{\partial \mu } \xinn(\mu,\omega,\nu,\tau,\lambda ,\alpha )$

We consider the sub-function
\begin{align}
\sqrt{\frac{2}{\pi }} \sqrt{\nu \tau}-\alpha ^2
  \left(e^{\left(\frac{\mu \omega+\nu \tau}{\sqrt{2}
  \sqrt{\nu \tau}}\right)^2}
  \erfc \left(\frac{\mu \omega+\nu \tau}{\sqrt{2}
  \sqrt{\nu \tau}}\right)-e^{\left(\frac{\mu \omega+2
  \nu \tau}{\sqrt{2} \sqrt{\nu \tau}}\right)^2}
  \erfc \left(\frac{\mu \omega+2 \nu \tau}{\sqrt{2}
  \sqrt{\nu \tau}}\right)\right) \ .
\end{align}
We set $x=\nu\tau$ and $y=\mu \omega$ and obtain
\begin{align}
\sqrt{\frac{2}{\pi }} \sqrt{x}-\alpha ^2
  \left(e^{\left(\frac{x+y}{\sqrt{2} \sqrt{x}}\right)^2}
  \erfc \left(\frac{x+y}{\sqrt{2}
  \sqrt{x}}\right)-e^{\left(\frac{2 x+y}{\sqrt{2} \sqrt{x}}\right)^2}
  \erfc \left(\frac{2 x+y}{\sqrt{2} \sqrt{x}}\right)\right) \ .
\end{align}

The derivative of this sub-function with respect to $y$ is
\begin{align}
&\frac{\alpha ^2 \left(e^{\frac{(2 x+y)^2}{2 x}} (2 x+y)
   \erfc \left(\frac{2 x+y}{\sqrt{2}
   \sqrt{x}}\right)-e^{\frac{(x+y)^2}{2 x}} (x+y)
   \erfc \left(\frac{x+y}{\sqrt{2} \sqrt{x}}\right)\right)}{x} \ =
  \\ \nonumber &\frac{\sqrt{2} \alpha ^2 \sqrt{x} \left(\frac{e^{\frac{(2 x+y)^2}{2 x}} (x+y) \erfc \left(\frac{x+y}{\sqrt{2} \sqrt{x}}\right)}{\sqrt{2} \sqrt{x}}-\frac{e^{\frac{(x+y)^2}{2 x}} (x+y) \erfc \left(\frac{x+y}{\sqrt{2} \sqrt{x}}\right)}{\sqrt{2} \sqrt{x}}\right)}{x}\ >\ 0 \ .
\end{align}

The inequality follows from Lemma~\ref{lem:xeErfc}, which states that 
$z e^{z^2} \erfc (z)$ is monotonically increasing in $z$.
Therefore the sub-function is increasing in $y$. 

The derivative of this sub-function with respect to $x$ is
\begin{align}
& \frac{\sqrt{\pi } \alpha ^2 \left(e^{\frac{(2 x+y)^2}{2 x}} \left(4 x^2-y^2\right) \erfc \left(\frac{2 x+y}{\sqrt{2} \sqrt{x}}\right) 
- e^{\frac{(x+y)^2}{2 x}} (x-y) (x+y) \erfc \left(\frac{x+y}{\sqrt{2} \sqrt{x}}\right)\right)-
\sqrt{2} \left(\alpha ^2-1\right) x^{3/2}}{2 \sqrt{\pi } x^2}\ . 
\end{align}

The sub-function is increasing in $x$, since the
derivative is larger than zero:
\begin{align}
&\frac{\sqrt{\pi } \alpha ^2 \left(e^{\frac{(2 x+y)^2}{2 x}} \left(4 x^2-y^2\right) \erfc \left(\frac{2 x+y}{\sqrt{2} \sqrt{x}}\right)-e^{\frac{(x+y)^2}{2 x}} (x-y) (x+y) \erfc \left(\frac{x+y}{\sqrt{2} \sqrt{x}}\right)\right)-\sqrt{2} x^{3/2} \left(\alpha ^2-1\right)}{2 \sqrt{\pi } x^2}\ \geq \\ \nonumber 
&\frac{\sqrt{\pi } \alpha ^2 \left(\frac{(2 x-y) (2 x+y) 2}{\sqrt{\pi } \left(\frac{2 x+y}{\sqrt{2} \sqrt{x}}+\sqrt{\left(\frac{2 x+y}{\sqrt{2} \sqrt{x}}\right)^2+2}\right)}-\frac{(x-y) (x+y) 2}{\sqrt{\pi } \left(\frac{x+y}{\sqrt{2} \sqrt{x}}+\sqrt{\left(\frac{x+y}{\sqrt{2} \sqrt{x}}\right)^2+\frac{4}{\pi }}\right)}\right)-\sqrt{2} x^{3/2} \left(\alpha ^2-1\right)}{2 \sqrt{\pi } x^2}\ =\\ \nonumber 
&\frac{\sqrt{\pi } \alpha ^2 \left(\frac{(2 x-y) (2 x+y) 2 \left(\sqrt{2} \sqrt{x}\right)}{\sqrt{\pi } \left(2 x+y+\sqrt{(2 x+y)^2+4 x}\right)}-\frac{(x-y) (x+y) 2 \left(\sqrt{2} \sqrt{x}\right)}{\sqrt{\pi } \left(x+y+\sqrt{(x+y)^2+\frac{8 x}{\pi }}\right)}\right)-\sqrt{2} x^{3/2} \left(\alpha ^2-1\right)}{2 \sqrt{\pi } x^2}\ = \\ \nonumber 
&\frac{\sqrt{\pi } \alpha ^2 \left(\frac{(2 x-y) (2 x+y) 2}{\sqrt{\pi } \left(2 x+y+\sqrt{(2 x+y)^2+4 x}\right)}-\frac{(x-y) (x+y) 2}{\sqrt{\pi } \left(x+y+\sqrt{(x+y)^2+\frac{8 x}{\pi }}\right)}\right)-x \left(\alpha ^2-1\right)}{\sqrt{2} \sqrt{\pi } x^{3/2}}\ > \\ \nonumber 
  & \frac{\sqrt{\pi } \alpha ^2 \left(\frac{(2 x-y) (2 x+y) 2}{\sqrt{\pi } \left(2 x+y+\sqrt{(2 x+y)^2+2 (2 x+y)+1}\right)}-\frac{(x-y) (x+y) 2}{\sqrt{\pi } \left(x+y+\sqrt{(x+y)^2+0.782 \cdot 2 (x+y)+0.782^2}\right)}\right)-x \left(\alpha ^2-1\right)}{\sqrt{2} \sqrt{\pi } x^{3/2}}\ =
  \\ \nonumber &\frac{\sqrt{\pi } \alpha ^2 \left(\frac{(2 x-y) (2 x+y) 2}{\sqrt{\pi } \left(2 x+y+\sqrt{(2 x+y+1)^2}\right)}-\frac{(x-y) (x+y) 2}{\sqrt{\pi } \left(x+y+\sqrt{(x+y+0.782)^2}\right)}\right)-x \left(\alpha ^2-1\right)}{\sqrt{2} \sqrt{\pi } x^{3/2}}\ =
  \\ \nonumber 
  &\frac{\sqrt{\pi } \alpha ^2 \left(\frac{(2 x-y) (2 x+y) 2}{\sqrt{\pi } (2 (2 x+y)+1)}-\frac{(x-y) (x+y) 2}{\sqrt{\pi } (2 (x+y)+0.782)}\right)-x \left(\alpha ^2-1\right)}{\sqrt{2} \sqrt{\pi } x^{3/2}}\ = \\ \nonumber 
  &\frac{\sqrt{\pi } \alpha ^2 \left(\frac{(2(x+y)+0.782) (2 x-y) (2 x+y) 2}{\sqrt{\pi
                 }}-\frac{(x-y) (x+y) (2 (2 x+y)+1) 2}{\sqrt{\pi
                 }}\right)}{(2 (2 x+y)+1) (2 (x+y)+0.782) \sqrt{2}
                 \sqrt{\pi } x^{3/2}} \ + \\ \nonumber 
  &\frac{\sqrt{\pi } \alpha ^2 \left(-x \left(\alpha ^2-1\right) (2 (2 x+y)+1) (2 (x+y)+0.782)\right)}{(2 (2 x+y)+1) (2 (x+y)+0.782) \sqrt{2} \sqrt{\pi } x^{3/2}}\ =\\ \nonumber 
 &\frac{8 x^3+(12 y+2.68657) x^2+(y (4 y-6.41452)-1.40745) x+1.22072 y^2}{(2 (2 x+y)+1) (2 (x+y)+0.782) \sqrt{2} \sqrt{\pi } x^{3/2}}\ >
  \\ \nonumber 
  &\frac{8 x^3+(2.68657 -12 0.01) x^2+(0.01 (-6.41452-4 0.01)-1.40745) x+1.22072 (0.0)^2}{(2 (2 x+y)+1) (2 (x+y)+0.782) \sqrt{2} \sqrt{\pi } x^{3/2}}\ =
  \\ \nonumber &\frac{8 x^2+2.56657 x-1.472}{(2 (2 x+y)+1) (2 (x+y)+0.782) \sqrt{2} \sqrt{\pi } \sqrt{x}}\ =
  \\ \nonumber 
  &\frac{8 x^2+2.56657 x-1.472}{(2 (2 x+y)+1) (2 (x+y)+0.782) \sqrt{2} \sqrt{\pi } \sqrt{x}}\ =
  \\ \nonumber &\frac{8 (x+0.618374) (x-0.297553)}{(2 (2 x+y)+1) (2
(x+y)+0.782) \sqrt{2} \sqrt{\pi } \sqrt{x}} \ > \ 0 \ .
\end{align}
We explain this chain of inequalities:
\begin{itemize}
\item First inequality: We applied Lemma~\ref{lem:Abramowitz} two times.

\item Equalities factor out $\sqrt{2} \sqrt{x}$ and reformulate.

\item Second inequality part 1: we applied
\begin{align}
0<2 y\Longrightarrow (2 x+y)^2+4 x+1<(2 x+y)^2+2 (2 x+y)+1=(2 x+y+1)^2
  \ .
\end{align}
\item Second inequality part 2: we show that for $a=\frac{1}{20}
  \left(\sqrt{\frac{2048+169 \pi }{\pi }}-13\right)$ following holds:
$\frac{8 x}{\pi }-\left(a^2+2 a (x+y)\right) \geq 0$. 
We have $\frac{\partial }{\partial x }\frac{8 x}{\pi }-\left(a^2+2 a
  (x+y)\right)=\frac{8}{\pi }-2 a>0$ and
 $\frac{\partial }{\partial y }\frac{8 x}{\pi }-\left(a^2+2 a
  (x+y)\right)=-2 a>0$. 
Therefore the minimum is at border for minimal $x$ and maximal $y$:
\begin{align}
\frac{8 \cdot 0.64}{\pi }-\left(\frac{2}{20} \left(\sqrt{\frac{2048+169 \pi
  }{\pi }}-13\right) (0.64+0.01)+\left(\frac{1}{20}
  \left(\sqrt{\frac{2048+169 \pi }{\pi }}-13\right)\right)^2\right) \
  = \ 0 \ .
\end{align}
Thus
\begin{align}
\frac{8 x}{\pi } \ \geq \ a^2+2 a (x+y) \ .
\end{align}
for $a=\frac{1}{20}
  \left(\sqrt{\frac{2048+169 \pi }{\pi }}-13\right) > 0.782$.

\item Equalities only solve square root and factor out the resulting
  terms $(2(2 x+y)+1)$ and $(2(x+y)+0.782)$.

\item We set $\alpha=\alpha_{\rm 01}$ and multiplied out. Thereafter we
  also factored out $x$ in the numerator. Finally a quadratic
  equations was solved. 
\end{itemize}

The sub-function has its minimal value for 
minimal $x$ and minimal $y$
$x=\nu\tau=0.8 \cdot 0.8=0.64$ and $y=\mu \omega=-0.1 \cdot 0.1=-0.01$. 
We further minimize the function
\begin{align}
\mu \omega e^{\frac{\mu^2 \omega^2}{2 \nu \tau}}
  \left(2-\erfc \left(\frac{\mu \omega}{\sqrt{2} \sqrt{\nu
  \tau}}\right)\right)>-0.01 e^{\frac{0.01^2}{2 0.64}}
  \left(2-\erfc \left(\frac{0.01}{\sqrt{2}
  \sqrt{0.64}}\right)\right) \ .
\end{align}

We compute the minimum of the term in brackets of $\frac{\partial }{\partial \mu } \xinn(\mu,\omega,\nu,\tau,\lambda ,\alpha )$:  
\begin{align}
&\mu \omega e^{\frac{\mu^2 \omega^2}{2 \nu \tau}}
  \left(2-\erfc \left(\frac{\mu \omega}{\sqrt{2} \sqrt{\nu \tau}}\right)\right)+ \\ \nonumber
&\alpha_{\rm 01}^2 \left(-\left(e^{\left(\frac{\mu \omega+\nu \tau}{\sqrt{2} \sqrt{\nu \tau}}\right)^2} 
\erfc \left(\frac{\mu \omega+\nu \tau}{\sqrt{2} \sqrt{\nu \tau}}\right)-e^{\left(\frac{\mu \omega+2 \nu \tau}{\sqrt{2} 
\sqrt{\nu \tau}}\right)^2} \erfc \left(\frac{\mu \omega+2 \nu \tau}{\sqrt{2} \sqrt{\nu \tau}}\right)\right)\right)+ \sqrt{\frac{2}{\pi }} \sqrt{\nu \tau} \ > \\ \nonumber
&\alpha_{\rm 01}^2 \left(-\left(e^{\left(\frac{0.64 -0.01}{\sqrt{2} \sqrt{0.64}}\right)^2} 
\erfc \left(\frac{0.64 -0.01}{\sqrt{2} \sqrt{0.64}}\right)-e^{\left(\frac{2 0.64-0.01}{\sqrt{2} \sqrt{0.64}}\right)^2}
\erfc \left(\frac{2 \cdot 0.64-0.01}{\sqrt{2} \sqrt{0.64}}\right)\right)\right)-\\ \nonumber 
&0.01 e^{\frac{0.01^2}{2 0.64}} \left(2-\erfc \left(\frac{0.01}{\sqrt{2} \sqrt{0.64}}\right)\right)+\sqrt{0.64} \sqrt{\frac{2}{\pi }} \ = \ 0.0923765 \ .
\end{align}
Therefore the term in brackets is larger than zero.

Thus, $\frac{\partial }{\partial \mu }
\xinn(\mu,\omega,\nu,\tau,\lambda ,\alpha )$
has the sign of $\omega$.

\item $\frac{\partial }{\partial \nu } \xinn(\mu,\omega,\nu,\tau,\lambda ,\alpha )$

We look at the sub-term
\begin{align}
2 e^{\left(\frac{2 x+y}{\sqrt{2} \sqrt{x}}\right)^2}
  \erfc \left(\frac{2 x+y}{\sqrt{2}
  \sqrt{x}}\right)-e^{\left(\frac{x+y}{\sqrt{2} \sqrt{x}}\right)^2}
  \erfc \left(\frac{x+y}{\sqrt{2} \sqrt{x}}\right) \ .
\end{align}
We obtain a chain of inequalities:
\begin{align}
&2 e^{\left(\frac{2 x+y}{\sqrt{2} \sqrt{x}}\right)^2} \erfc \left(\frac{2 x+y}{\sqrt{2} \sqrt{x}}\right)-e^{\left(\frac{x+y}{\sqrt{2} \sqrt{x}}\right)^2} \erfc \left(\frac{x+y}{\sqrt{2} \sqrt{x}}\right)\ > \\ \nonumber
&\frac{2 \cdot 2}{\sqrt{\pi } \left(\frac{2 x+y}{\sqrt{2} \sqrt{x}}+\sqrt{\left(\frac{2 x+y}{\sqrt{2} \sqrt{x}}\right)^2+2}\right)}-\frac{2}{\sqrt{\pi } \left(\frac{x+y}{\sqrt{2} \sqrt{x}}+\sqrt{\left(\frac{x+y}{\sqrt{2} \sqrt{x}}\right)^2+\frac{4}{\pi }}\right)}\ = \\ \nonumber
&\frac{2 \sqrt{2}  \sqrt{x} \left(\frac{2}{\sqrt{(2 x+y)^2+4 x}+2 x+y}-\frac{1}{\sqrt{(x+y)^2+\frac{8 x}{\pi }}+x+y}\right)}{\sqrt{\pi }}\ > \\ \nonumber
&\frac{2 \sqrt{2}  \sqrt{x} \left(\frac{2}{\sqrt{(2 x+y)^2+2 (2
  x+y)+1}+2 x+y}-\frac{1}{\sqrt{(x+y)^2+0.782 \cdot 2
  (x+y)+0.782^2}+x+y}\right)}{\sqrt{\pi }}\ = \\ \nonumber
&\frac{2 \sqrt{2}  \sqrt{x}
  \left(\frac{2}{2 (2 x+y)+1}-\frac{1}{2
  (x+y)+0.782}\right)}{\sqrt{\pi }}\ = \\ \nonumber
&\frac{\left(2 \sqrt{2} 
  \sqrt{x}\right) (2 (2 (x+y)+0.782)-(2 (2 x+y)+1))}{\sqrt{\pi } ((2
  (x+y)+0.782) (2 (2 x+y)+1))}\ = \\ \nonumber
&\frac{\left(2 \sqrt{2}  \sqrt{x}\right)
  (2 y+0.782 \cdot 2-1)}{\sqrt{\pi } ((2 (x+y)+0.782) (2 (2 x+y)+1))} \ > \
  0 \ .
\end{align}
We explain this chain of inequalities:
\begin{itemize}
\item First inequality: We applied Lemma~\ref{lem:Abramowitz} two times.

\item Equalities factor out $\sqrt{2} \sqrt{x}$ and reformulate.

\item Second inequality part 1: we applied
\begin{align}
0<2 y\Longrightarrow (2 x+y)^2+4 x+1<(2 x+y)^2+2 (2 x+y)+1=(2 x+y+1)^2
  \ .
\end{align}
\item Second inequality part 2: we show that for $a=\frac{1}{20}
  \left(\sqrt{\frac{2048+169 \pi }{\pi }}-13\right)$ following holds:
$\frac{8 x}{\pi }-\left(a^2+2 a (x+y)\right) \geq 0$. 
We have $\frac{\partial }{\partial x }\frac{8 x}{\pi }-\left(a^2+2 a
  (x+y)\right)=\frac{8}{\pi }-2 a>0$ and
 $\frac{\partial }{\partial y }\frac{8 x}{\pi }-\left(a^2+2 a
  (x+y)\right)=-2 a<0$. 
Therefore the minimum is at border for minimal $x$ and maximal $y$:
\begin{align}
\frac{8 \cdot 0.64}{\pi }-\left(\frac{2}{20} \left(\sqrt{\frac{2048+169 \pi
  }{\pi }}-13\right) (0.64+0.01)+\left(\frac{1}{20}
  \left(\sqrt{\frac{2048+169 \pi }{\pi }}-13\right)\right)^2\right) \
  = \ 0 \ .
\end{align}
Thus
\begin{align}
\frac{8 x}{\pi } \ \geq \ a^2+2 a (x+y) \ .
\end{align}
for $a=\frac{1}{20}
  \left(\sqrt{\frac{2048+169 \pi }{\pi }}-13\right) > 0.782$.

\item Equalities only solve square root and factor out the resulting
  terms $(2(2 x+y)+1)$ and $(2(x+y)+0.782)$.
\end{itemize}

We know that $(2 - \erfc (x) > 0$ according to 
Lemma~\ref{lem:basics}.
For the sub-term we derived  
\begin{align}
&2 e^{\left(\frac{2 x+y}{\sqrt{2} \sqrt{x}}\right)^2} \erfc \left(\frac{2 x+y}{\sqrt{2} \sqrt{x}}\right)-e^{\left(\frac{x+y}{\sqrt{2} \sqrt{x}}\right)^2} \erfc \left(\frac{x+y}{\sqrt{2} \sqrt{x}}\right)\ > \ 0 \ .
\end{align}

Consequently, both terms in the brackets of  $\frac{\partial
}{\partial \nu }
\xinn(\mu,\omega,\nu,\tau,\lambda ,\alpha )$
are larger than zero.
Therefore  $\frac{\partial
}{\partial \nu }
\xinn(\mu,\omega,\nu,\tau,\lambda ,\alpha )$
is larger than zero.
\end{itemize}

\end{proof}

\begin{lemma}[Mean at low variance]
\label{lem:meanLowVar}
The mapping of the mean $\munn$ (Eq.~\eqref{eq:mappingMean}) 
\begin{align}
&\munn(\mu,\omega,\nu,\tau, \lambda ,\alpha ) \
  = \frac{1}{2} \lambda  \left(-(\alpha +\mu  \omega ) \erfc \left(\frac{\mu  \omega }{\sqrt{2} \sqrt{\nu  \tau }}\right)+ \right. \\ \nonumber & \left.
    \alpha  e^{\mu  \omega +\frac{\nu  \tau }{2}} \erfc 
    \left(\frac{\mu  \omega +\nu  \tau }{\sqrt{2} \sqrt{\nu  \tau }}\right)+ 
    \sqrt{\frac{2}{\pi }} \sqrt{\nu  \tau } e^{-\frac{\mu ^2 \omega ^2}{2 \nu  \tau }}+2 \mu  \omega \right) 
\end{align}
in the domain $-0.1 \leq \mu \leq -0.1$,  $-0.1 \leq \omega \leq -0.1$,
and $0.02 \leq \nu \tau \leq 0.5$ is bounded by  
\begin{align}
&| \munn(\mu,\omega,\nu,\tau, \lambda_{\mathrm{01}} ,\alpha_{\mathrm{01}} ) | < 0.289324
\end{align}
and
\begin{align}
 \lim_{\nu \rightarrow 0} | \munn(\mu,\omega,\nu,\tau, \lambda_{\mathrm{01}} ,\alpha_{\mathrm{01}} ) | = \lambda \mu \omega.
\end{align}
\end{lemma}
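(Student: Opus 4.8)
The plan is to use that $\munn$ depends on the parameters only through the two products $x:=\nu\tau\in[0.02,0.5]$ and $y:=\mu\omega\in[-0.1,0.1]^2\subseteq[-0.01,0.01]$ (with equality of extremes attained), and then to (i) prove the numerical bound by first reducing to one variable via a sign argument for $\partial_\mu\munn$ and then estimating the resulting building blocks with the Abramowitz inequality, and (ii) obtain the limit by passing to the limit term by term in the closed form.

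For the bound I would first note that $\partial_\mu\munn=\mathcal J_{11}=\tfrac12\lambda\omega\big(\alpha e^{\mu\omega+\nu\tau/2}\erfc(\tfrac{\mu\omega+\nu\tau}{\sqrt2\sqrt{\nu\tau}})+2-\erfc(\tfrac{\mu\omega}{\sqrt2\sqrt{\nu\tau}})\big)$ has the sign of $\omega$, because the bracket is strictly positive ($\erfc<2$ and the exponential times $\erfc$ is positive, Lemma~\ref{lem:basics}) and this uses nothing about the range of $\nu\tau$; by symmetry of $\munn$ in $(\mu,\omega)$ it follows that $\munn$ is strictly increasing in $y$, so it suffices to bound $\munn$ at $y=\pm0.01$ over $x\in[0.02,0.5]$. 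Introducing $z_1:=\tfrac{y}{\sqrt{2x}}$ and $z_2:=\tfrac{y+x}{\sqrt{2x}}=z_1+\sqrt{x/2}$ and using the elementary identities $z_2^2-z_1^2=y+x/2$ and $\sqrt{\tfrac2\pi}\sqrt x=\tfrac2{\sqrt\pi}(z_2-z_1)$, one rewrites $\munn=\tfrac12\lambda\big(C+y(2-\erfc(z_1))\big)$ with $C:=\alpha e^{-z_1^2}\big(e^{z_2^2}\erfc(z_2)-e^{z_1^2}\erfc(z_1)\big)+\tfrac2{\sqrt\pi}(z_2-z_1)$; on the domain $z_1\in[-0.05,0.05]$, $z_2\in[0.05,0.55]$, $z_2-z_1\in[0.1,0.5]$, $e^{-z_1^2}\in[e^{-0.0025},1]$, and $|y(2-\erfc(z_1))|\le0.02$. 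The heart of the estimate is $C$: since $\tfrac{d}{dt}(e^{t^2}\erfc(t))=2te^{t^2}\erfc(t)-\tfrac2{\sqrt\pi}$ (proof of Lemma~\ref{lem:exerfc}), the fundamental theorem of calculus gives $e^{z_2^2}\erfc(z_2)-e^{z_1^2}\erfc(z_1)=\int_{z_1}^{z_2}(2te^{t^2}\erfc(t)-\tfrac2{\sqrt\pi})\,dt$, so $C=\alpha e^{-z_1^2}\int_{z_1}^{z_2}2te^{t^2}\erfc(t)\,dt+\tfrac2{\sqrt\pi}(z_2-z_1)(1-\alpha e^{-z_1^2})$; since $t\mapsto te^{t^2}\erfc(t)$ is increasing, bounded above by $1/\sqrt\pi$ (Lemma~\ref{lem:xeErfc}), bounded above at $t=z_2\le0.55$ by the Abramowitz upper bound (Lemma~\ref{lem:Abramowitz}) and below at $t=z_1\ge-0.05$ by direct evaluation, the integral lies in $[-c_1(z_2-z_1),\,c_2(z_2-z_1)]$ for explicit small $c_1>0$ and $c_2<2/\sqrt\pi$; as $1-\alpha e^{-z_1^2}$ is negative, substituting these ranges and $z_2-z_1\le0.5$ gives two-sided bounds on $C$ and hence $|\munn|<0.289324$. (Alternatively one can evaluate $\munn$ on a fine $(x,y)$-grid and control the interpolation error by the mean value theorem, as in Lemma~\ref{lem:sBound}.)

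For the limit, let $\nu\to0^+$; then $x=\nu\tau\to0^+$ because $\tau\ge0.8$. Passing to the limit in the closed form term by term: $z_1$ and $z_2$ tend to $+\infty$ or $-\infty$ according to $\sgn(\mu\omega)$, so both $\erfc$-terms tend to $0$ or $2$; the prefactor $e^{\mu\omega+\nu\tau/2}\to e^{\mu\omega}$ stays bounded; and $\sqrt{\nu\tau}\,e^{-\mu^2\omega^2/(2\nu\tau)}\to0$ (for $\mu\omega\neq0$ the exponential decay beats $\sqrt{\nu\tau}$, and for $\mu\omega=0$ the term is just $\sqrt{\nu\tau}$). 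Collecting the surviving contributions — equivalently, observing that $p_{\rN}(z;\mu\omega,\sqrt{\nu\tau})$ concentrates at $z=\mu\omega$, so $\munn=\int\selu(z)\,p_{\rN}(z;\mu\omega,\sqrt{\nu\tau})\,dz\to\selu(\mu\omega)$ — gives $\lim_{\nu\to0}\munn=\lambda\mu\omega$ on the branch $\mu\omega\ge0$.

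I expect the lower bound $\munn>-0.289324$ to be the main obstacle: a triangle inequality that bounds the (negative) combination $\alpha(e^{\mu\omega+\nu\tau/2}\erfc(z_2)-\erfc(z_1))$ and the (positive) term $\sqrt{2/\pi}\sqrt{\nu\tau}$ separately is too lossy, since these are coupled through $\sqrt{\nu\tau}=\sqrt2\,(z_2-z_1)$ and must be bounded jointly, as in the rewriting of $C$ above. A secondary point is that the monotonicity of $\munn$ in $\nu\tau$ from Lemma~\ref{lem:mapDerivatives} was proved only for $\nu\tau\in[0.64,1.875]$ and in fact fails on $[0.02,0.5]$ (there $\mathcal J_{12}$ changes sign), so the $x$-direction cannot be collapsed to its endpoints and must be handled by the uniform estimates above.
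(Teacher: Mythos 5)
Your proof is correct and takes a genuinely more careful route than the paper's. Both reduce to $\mu\omega=\pm 0.01$ by the same sign argument for $\mathcal J_{11}$. The paper then inserts extreme $\nu\tau$-values into the individual summands (citing informally "monotonicity of the terms in $\nu\tau$", not of $\munn$ itself), which is delicate: the exponential factor $e^{\mu\omega+\nu\tau/2}$ is increasing in $\nu\tau$ while $\erfc\left(\tfrac{\mu\omega+\nu\tau}{\sqrt{2\nu\tau}}\right)$ is decreasing, so a naive decoupled bound is too lossy to give $0.21857$, and the displayed insertion uses a non-extremal $\nu\tau$ in the exponent without justifying why the coupled product attains its maximum there. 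Your approach — changing variables to $z_1,z_2$, representing $e^{z_2^2}\erfc(z_2)-e^{z_1^2}\erfc(z_1)=\int_{z_1}^{z_2}\bigl(2te^{t^2}\erfc(t)-\tfrac{2}{\sqrt\pi}\bigr)\,dt$, and bounding the integrand uniformly via Lemma~\ref{lem:xeErfc} and Lemma~\ref{lem:Abramowitz} — keeps that coupling and is therefore rigorous; it also matches the Abramowitz-bound curves the paper plots in Figure~\ref{fig:meanAtLowVar} but does not actually use in its proof text. Your observation that $\mathcal J_{12}$ changes sign on $\nu\tau\in[0.02,0.5]$ is accurate and confirms that the $\nu\tau$-direction cannot be collapsed to endpoints of the whole expression. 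One small correction: the rewrite $\munn=\tfrac12\lambda\bigl(C+y(2-\erfc(z_1))\bigr)$ should carry the factor $e^{-z_1^2}$ on the $\tfrac{2}{\sqrt\pi}(z_2-z_1)$ term (so $C=\alpha e^{-z_1^2}\bigl(e^{z_2^2}\erfc(z_2)-e^{z_1^2}\erfc(z_1)\bigr)+\tfrac{2}{\sqrt\pi}(z_2-z_1)e^{-z_1^2}$); since $z_1\in[-0.05,0.05]$ this factor differs from $1$ by less than $0.0025$ and the estimate survives, but it should not be silently dropped. The limit argument and your fallback (grid evaluation plus mean-value-theorem interpolation control, as in Lemma~\ref{lem:sBound}) both coincide with what the paper does elsewhere.
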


We can consider $\munn$ with given $\mu \omega$ as a function in $x = \nu \tau$. We show the graph of this function at the
maximal $\mu \omega=0.01$ in the interval $x \in [0, 1]$ in Figure~\ref{fig:meanAtLowVar}.

\begin{figure}
  \centering
 \includegraphics[width=0.5\textwidth]{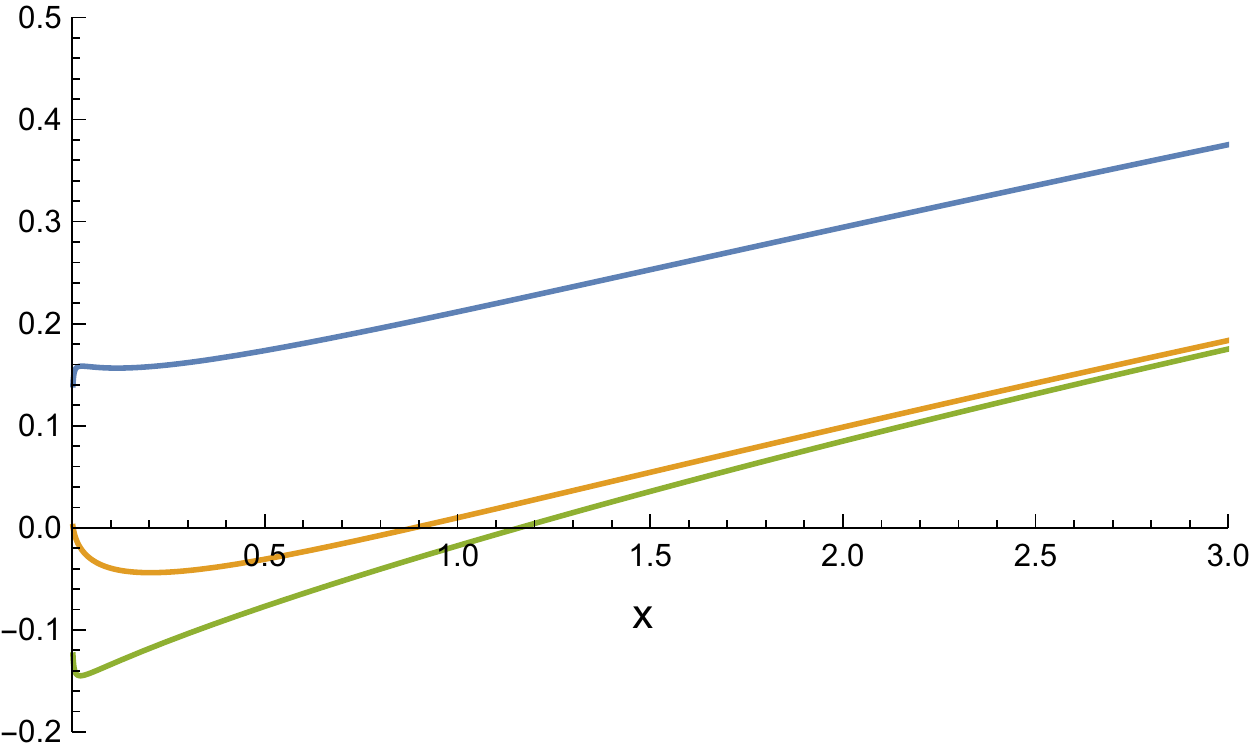}
 \caption[The graph of function $\munn$ for low variances]{The graph of function $\munn$ for low variances $x=\nu \tau$ for $\mu \omega=0.01$, where $x \in [0, 3]$, is
 displayed in yellow. 
 Lower and upper bounds based on the Abramowitz bounds (Lemma~\ref{lem:Abramowitz}) are displayed in green and blue, respectively. 
 \label{fig:meanAtLowVar}}
\end{figure}

\begin{proof}
Since $\munn$ is strictly monotonically increasing with $\mu \omega$ 
\begin{align}
&   \munn(\mu,\omega,\nu,\tau, \lambda ,\alpha )   \leq \\ \nonumber
&   \munn(0.1, 0.1,\nu,\tau, \lambda ,\alpha )   \leq \\ \nonumber
&   \frac{1}{2} \lambda  \left(-(\alpha +0.01) \erfc \left(\frac{0.01}{\sqrt{2} \sqrt{\nu  \tau }}\right)+
    \alpha  e^{0.01+\frac{\nu  \tau }{2}} \erfc \left(\frac{0.01 +\nu  \tau }{\sqrt{2} \sqrt{\nu  \tau }}\right)
    +\sqrt{\frac{2}{\pi }} \sqrt{\nu  \tau } e^{-\frac{0.01 ^2}{2 \nu  \tau }}+2 \cdot 0.01 \right)  \leq \\ \nonumber 
&  \frac{1}{2} \lambda_{\mathrm{01}} \left(e^{\frac{0.05}{2}+0.01} \alpha_{\mathrm{01}} \erfc \left(\frac{0.02 + 0.01}{\sqrt{2} \sqrt{0.02}}\right)-
  (\alpha_{\mathrm{01}} +0.01) \erfc \left(\frac{0.01}{\sqrt{2} \sqrt{0.02}}\right)+e^{-\frac{0.01 ^2}{2\cdot 0.5}} \sqrt{0.5} \sqrt{\frac{2}{\pi }}+0.01 \cdot 2\right) \\ \nonumber
  & < 0.21857,
\end{align}
where we have used the monotonicity of the terms in $\nu \tau$. 

Similarly, we can use the monotonicity of the terms in $\nu \tau$ to show that 
\begin{align}
&   \munn(\mu,\omega,\nu,\tau, \lambda ,\alpha )   \geq   \munn(0.1, -0.1,\nu,\tau, \lambda ,\alpha )   >  -0.289324,
\end{align}

such that $\left| \munn \right| < 0.289324$ at low variances.

Furthermore, when $(\nu \tau) \rightarrow 0$, the terms with the arguments of the complementary error functions $\erfc$ and the exponential function 
go to infinity, therefore these three terms converge to zero. Hence, the remaining terms are only $2 \mu  \omega  \frac{1}{2} \lambda$.
\end{proof}

\begin{lemma}[Bounds on derivatives of $\munn$ in $\Omega^-$]
\label{lem:muBounds}
The derivatives of the function $\munn (\mu, \omega, \nu,\tau,\lambda_{\mathrm 01}, \alpha_{\mathrm 01}$ 
(Eq.~\eqref{eq:mappingMean})
with respect to $\mu, \omega, \nu,\tau$ in the domain 
$\Omega^- = \{ \mu, \omega, \nu, \tau \ | \ -0.1 \leq \mu \leq 0.1, -0.1 \leq \omega \leq 0.1, 0.05 \leq \nu \leq 0.24, 0.8 \leq \tau \leq 1.25 \}$
can be bounded as follows:

\begin{align}
 & \left|\frac{\partial}{\partial \mu} \munn \right| < 0.14  \\ \nonumber
 & \left|\frac{\partial}{\partial \omega} \munn \right| < 0.14  \\ \nonumber
 & \left|\frac{\partial}{\partial \nu} \munn \right| < 0.52  \\ \nonumber
 & \left|\frac{\partial}{\partial \tau} \munn \right| < 0.11.
\end{align}
\end{lemma}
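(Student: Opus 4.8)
The plan is to exploit that $\munn$, as given by Eq.~\eqref{eq:mappingMean}, depends on the four variables only through the two products $x:=\mu\omega$ and $y:=\nu\tau$. Writing $\munn=F(x,y)$, the chain rule gives $\partial_\mu\munn=\omega\,F_x$, $\partial_\omega\munn=\mu\,F_x$, $\partial_\nu\munn=\tau\,F_y$, $\partial_\tau\munn=\nu\,F_y$. The first and third of these are exactly the Jacobian entries $\mathcal J_{11}$ and $\mathcal J_{12}$ written out in Eq.~\eqref{eq:JacobianEntries}, and since $\mathcal J_{11}$ carries $\omega$ and $\mathcal J_{12}$ carries $\tau$ as an explicit prefactor, the functions $F_1:=\mathcal J_{11}/\omega=\tfrac12\lambda\bigl(\alpha e^{\mu\omega+\nu\tau/2}\erfc(h_1)-\erfc(h_3)+2\bigr)$ and $F_2:=\mathcal J_{12}/\tau=\tfrac14\lambda\bigl(\alpha e^{\mu\omega+\nu\tau/2}\erfc(h_1)-(\alpha-1)\sqrt{\tfrac{2}{\pi\nu\tau}}\,e^{-\mu^2\omega^2/(2\nu\tau)}\bigr)$ (with $h_1=\tfrac{\mu\omega+\nu\tau}{\sqrt2\sqrt{\nu\tau}}$, $h_3=\tfrac{\mu\omega}{\sqrt2\sqrt{\nu\tau}}$) are well-defined smooth functions of $x\in[-0.01,0.01]$, $y\in[0.04,0.3]$. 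Then $|\partial_\mu\munn|,|\partial_\omega\munn|\le 0.1\sup|F_1|$, $|\partial_\nu\munn|\le 1.25\sup|F_2|$, and $|\partial_\tau\munn|\le 0.24\sup|F_2|$, so it suffices to prove $\sup|F_1|<1.4$ and $\sup|F_2|<0.44$ on that rectangle.

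\paragraph{Bounding the core functions.}
To bound $F_1$ and $F_2$ I would follow the template of the proof of Lemma~\ref{proof:Bounds}. Rewrite $e^{\mu\omega+\nu\tau/2}\erfc(h_1)=e^{-\mu^2\omega^2/(2\nu\tau)}\,e^{h_1^2}\erfc(h_1)$ and bound the three building blocks separately: by Lemma~\ref{lem:exerfc} the factor $e^{h_1^2}\erfc(h_1)$ is positive and strictly decreasing, hence maximized/minimized at the extreme admissible values of $h_1$; by Lemma~\ref{lem:basics} $\erfc$ and $\exp$ are monotone, so $2-\erfc(h_3)$ and $e^{-\mu^2\omega^2/(2\nu\tau)}$ are controlled by the extreme values of $h_3$ and of $\mu^2\omega^2/(2\nu\tau)$; and the term $(\alpha-1)\sqrt{2/(\pi\nu\tau)}$ is monotone in $\nu\tau$. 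Concretely I would re-derive, for the present domain $\nu\tau\in[0.04,0.3]$, $\mu\omega\in[-0.01,0.01]$, the analogues of the elementary constants of Lemmata~\ref{lem:x11}, \ref{lem:x22}, \ref{lem:xx1}, \ref{lem:xx3}, \ref{lem:xx4} (i.e. the new values of $t_1,T_1,t_3,T_3,t_4,T_4$), and assemble upper and lower bounds on $F_1$ and $F_2$ from these, exactly as in the itemized computation in the proof of Lemma~\ref{proof:Bounds}. This yields $F_1\in(1.0,1.35)$ and $F_2\in(-0.35,0.05)$, which after multiplication by the prefactor bounds $0.1$, $0.1$, $1.25$, $0.24$ gives the four claimed inequalities with room to spare.

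\paragraph{Main obstacle.}
The only genuinely new feature compared with Lemma~\ref{lem:Bounds} is that $\nu\tau$ is now allowed to be as small as $0.04$, so the term $(\alpha-1)\sqrt{2/(\pi\nu\tau)}$ in $F_2$ is of size roughly $2.7$ rather than the $\approx 0.4$ of the original domain; the hard part is checking that the partial cancellation against the first sub-term $\alpha e^{\mu\omega+\nu\tau/2}\erfc(h_1)$ (which at small $\nu\tau$ is close to $\alpha\approx1.67$) still leaves $|F_2|$ comfortably below $0.44$. This requires pinning down where on the boundary the extreme of $F_2$ is attained — the most negative value sits at the corner $\nu\tau=0.04$, $\mu\omega=0.01$ — and confirming the sub-term estimates there. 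Since the domain is bounded away from $\nu\tau=0$, no limiting argument of the kind used in Lemma~\ref{lem:meanLowVar} is needed; once the correct sub-term bounds are in place the rest is routine interval arithmetic of the same flavour as in Lemma~\ref{proof:Bounds}. (The signs needed for the intermediate monotonicity claims are already supplied by Lemma~\ref{lem:mapDerivatives}.)
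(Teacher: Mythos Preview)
Your approach is essentially the paper's: it too factors out the explicit $\omega$ (resp.\ $\tau$) from $\mathcal J_{11}$ (resp.\ $\mathcal J_{12}$), bounds the remaining bracket via the monotonicity of $e^{z^2}\erfc(z)$ and of $\sqrt{2/(\pi\nu\tau)}$, and then invokes the $\mu\!\leftrightarrow\!\omega$ and $\nu\!\leftrightarrow\!\tau$ symmetry to transfer the bounds to the other two derivatives. One arithmetic slip to fix: the stated sufficiency threshold $\sup|F_2|<0.44$ would only give $|\partial_\nu\munn|<1.25\cdot 0.44=0.55$, not $0.52$; however, your actual claimed range $F_2\in(-0.35,0.05)$ (or even the naive term-by-term bound $|F_2|\lesssim 0.415$ that the paper uses) does suffice, so this does not affect the argument.
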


\begin{proof}
The expression
\begin{align}
 & \frac{\partial}{\partial \mu} \munn = J_{11} = \frac{1}{2}\lambda\omega e^{\frac{-(\mu\omega)^{2}}{2\nu\tau}}\left(2e^{\frac{(\mu\omega)^{2}}{2\nu\tau}}-
 e^{\frac{(\mu\omega)^{2}}{2\nu\tau}}\erfc \left(\frac{\mu\omega}{\sqrt{2}\sqrt{\nu\tau}}\right)+
 \alpha e^{\frac{(\mu\omega+\nu\tau)^{2}}{2\nu\tau}}\erfc\left(\frac{\mu\omega+\nu\tau}{\sqrt{2}\sqrt{\nu\tau}}\right)\right) \\ \nonumber
\end{align}
contains the terms $e^{\frac{(\mu\omega)^{2}}{2\nu\tau}}\erfc \left(\frac{\mu\omega}{\sqrt{2}\sqrt{\nu\tau}}\right)$
and $e^{\frac{(\mu\omega+\nu\tau)^{2}}{2\nu\tau}}\erfc\left(\frac{\mu\omega+\nu\tau}{\sqrt{2}\sqrt{\nu\tau}}\right)$
which are monotonically decreasing in their arguments (Lemma~\ref{lem:exerfc}). We can therefore obtain their
minima and maximal at the minimal and maximal arguments. Since the first term has a negative sign in the expression, both terms
reach their maximal value at $\mu \omega=-0.01$, $\nu = 0.05$, and $\tau=0.8$. 
\begin{align}
 & \left| \frac{\partial}{\partial \mu} \munn \right| \leq    \frac{1}{2} \left| \lambda\omega \right| \left| \left(2 -
  e^{0.0353553^2}\erfc \left(0.0353553 \right)  +
 \alpha e^{0.106066^2}\erfc\left(0.106066\right)\right) \right| <  0.133
\end{align}

Since, $\munn$ is symmetric in $\mu$ and $\omega$, these bounds also hold for the derivate to $\omega$.

We use the argumentation that the term with the error function is monotonically decreasing (Lemma~\ref{lem:exerfc})
again for the expression
\begin{align}
 & \frac{\partial}{\partial \nu} \munn = J_{12} = \\ \nonumber
 & =\frac{1}{4}\lambda\tau e^{-\frac{\mu^{2}\omega^{2}}{2\nu\tau}} \left(
 \alpha e^{\frac{(\mu\omega+\nu\tau)^{2}}{2\nu\tau}} \erfc \left(\frac{\mu\omega+\nu\tau}{\sqrt{2}\sqrt{\nu\tau}}\right)
 -(\alpha-1)\sqrt{\frac{2}{\pi\nu\tau}}\ \right) \leq \\ \nonumber
 & \left| \frac{1}{4}\lambda \tau \right| 
 \left( \left| 1.1072 -  2.68593   \right| \right) <  0.52.  
\end{align}

We have used that the term $1.1072 \leq \alpha_{\mathrm{01}} e^{\frac{(\mu\omega+\nu\tau)^{2}}{2\nu\tau}} \erfc \left(\frac{\mu\omega+\nu\tau}{\sqrt{2}\sqrt{\nu\tau}}\right) \leq 1.49042$
and the term $0.942286 \leq (\alpha-1)\sqrt{\frac{2}{\pi\nu\tau}}\  \leq 2.68593$.
Since $\munn$ is symmetric in $\nu$ and $\tau$, we only have to chance outermost
term $\left| \frac{1}{4}\lambda \tau \right|$ to  $\left| \frac{1}{4}\lambda \nu \right|$ to 
obtain the estimate $\left| \frac{\partial}{\partial \tau} \munn \right| < 0.11$.

\end{proof}

\begin{lemma}[Tight bound on ${\munn}^2$ in $\Omega^-$]
\label{lem:musquared}
The function ${\munn}^2 (\mu, \omega, \nu,\tau,\lambda_{\mathrm 01}, \alpha_{\mathrm 01})$ 
(Eq.~\eqref{eq:mappingMean})
is bounded by
\begin{align}
 & \left| {\munn}^2 \right| < 0.005  \\ 
\end{align}

in the domain 
$\Omega^- = \{ \mu, \omega, \nu, \tau \ | \ -0.1 \leq \mu \leq 0.1, -0.1 \leq \omega \leq 0.1, 0.05 \leq \nu \leq 0.24, 0.8 \leq \tau \leq 1.25 \}$.
\end{lemma}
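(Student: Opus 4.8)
The plan is to derive $\munn^2 < 0.005$ from a sup-norm bound $|\munn| < \sqrt{0.005}\approx 0.07071$ on $\Omega^-$, obtained by first collapsing the four-dimensional box to two one-parameter families and then controlling each of those. Since $\munn$ depends on $(\mu,\omega)$ only through the product $\mu\omega$, and $\frac{\partial}{\partial\mu}\munn={\mathcal J}_{11}=\frac12\lambda\omega\big(\alpha e^{\mu\omega+\nu\tau/2}\erfc(\tfrac{\mu\omega+\nu\tau}{\sqrt2\sqrt{\nu\tau}})-\erfc(\tfrac{\mu\omega}{\sqrt2\sqrt{\nu\tau}})+2\big)$, Lemma~\ref{lem:basics} gives $2-\erfc(\cdot)>0$ and Lemma~\ref{lem:exerfc} gives $e^{z^2}\erfc(z)>0$, so the bracket is positive and ${\mathcal J}_{11}$ has the sign of $\omega$ for every $\nu\tau>0$ (this is exactly the argument of Lemma~\ref{lem:mapDerivatives}, which does not use the large-$\nu$ domain). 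Hence $\munn$ is strictly monotone in $\mu\omega$, and over $\mu,\omega\in[-0.1,0.1]$ (i.e. $\mu\omega\in[-0.01,0.01]$) its extrema lie on the two slices $\mu\omega=\pm0.01$, reducing everything to the two one-variable functions $x\mapsto\munn(\pm0.01,x)$ with $x=\nu\tau\in[0.04,0.3]$.

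On each slice I would show $-0.07<\munn<0$ for all $x\in[0.04,0.3]$. The most robust route is a computer-assisted grid argument: evaluate $\munn(\pm0.01,x)$ on a fine grid in $x$ and bound the deviation between grid points via the mean value theorem, using $\big|\tfrac{\partial}{\partial x}\munn\big|=\big|\tfrac{\partial}{\partial\nu}\munn\big|/\tau<0.52/0.8=0.65$ from Lemma~\ref{lem:muBounds}; choosing the grid spacing $\delta$ so that $(\text{grid-max of }|\munn|)+0.65\,\delta/2<\sqrt{0.005}$ then closes the bound, and squaring gives $\munn^2<0.005$. An alternative analytic route writes, with $g(z)=e^{z^2}\erfc(z)$, $z_1=\mu\omega/\sqrt{2x}$, $z_2-z_1=\sqrt{x/2}$ and $z_2^2-z_1^2=\mu\omega+x/2$, the identity $\munn=\frac12\lambda\big(e^{-z_1^2}\big[\alpha\big(g(z_2)-g(z_1)\big)-\mu\omega\,g(z_1)+\sqrt{2/\pi}\sqrt x\big]+2\mu\omega\big)$, in which $g(z_2)-g(z_1)<0$; one then bounds $g(z_1)-g(z_2)$ as a function of $x$ using monotonicity of $g$ (Lemma~\ref{lem:exerfc}) and of $z\mapsto zg(z)$ (Lemma~\ref{lem:xeErfc}) together with the monotonicity of $z_1(x)$ and $z_2(x)$, and bounds the residual terms by $|\mu\omega g(z_1)|\le 0.01\,g(0)\cdot e^{0.00125}$ and $\sqrt{2/\pi}\sqrt x\le\sqrt{2/\pi}\sqrt{0.3}$.

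The main obstacle is tightness. The true maximum of $|\munn|$ on $\Omega^-$ is about $0.067$, attained near $\mu\omega=-0.01$, $\nu\tau\approx0.2$, only marginally below the target $\sqrt{0.005}\approx0.0707$; moreover $\munn(\mu\omega,\cdot)$ is \emph{not} monotone in $\nu\tau$ on $[0.04,0.3]$ (the derivative ${\mathcal J}_{12}$ changes sign once, being negative near $\nu\tau=0.04$ and positive near $\nu\tau=0.3$), so a corner evaluation fails, and a single-reference-point Lipschitz estimate is far too lossy since $|\partial\munn/\partial\nu|$ can reach $0.52$ while $\nu$ ranges over an interval of width $0.19$. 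Consequently the argument must either locate and bracket the unique interior critical point in $\nu\tau$ or use a genuinely fine grid (spacing on the order of a few thousandths); making the constants close with so small a margin is where the real work lies.
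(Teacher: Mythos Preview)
Your proposal is correct and shares the paper's core strategy: a computer-assisted grid evaluation plus the mean value theorem, with Lipschitz constants supplied by Lemma~\ref{lem:muBounds}. The difference is in the decomposition. The paper runs the grid over the full four-dimensional box $\Omega^-$, bounding $|\partial(\munn)^2/\partial\theta|\le 2|\munn|\,|\partial\munn/\partial\theta|$ for each $\theta\in\{\mu,\omega,\nu,\tau\}$ using the crude envelope $|\munn|<0.289324$ from Lemma~\ref{lem:meanLowVar}, and reports a grid maximum of $0.00451457$ before adding the Lipschitz correction. You instead exploit that $\munn$ depends only on the products $\mu\omega$ and $\nu\tau$ and is monotone in $\mu\omega$, collapsing the problem to two one-dimensional scans over $x=\nu\tau\in[0.04,0.3]$; this is a genuine simplification the paper does not carry out, and your Lipschitz constant $0.65$ in $x$ is sharper than what the four-dimensional bound would give. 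Your diagnosis that $\mathcal{J}_{12}$ changes sign on this interval (so a corner check fails and an interior maximum must be captured) is exactly right and matches the paper's Figure~\ref{fig:meanSqu}, which locates the peak near $\nu\tau\approx0.187$.
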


We visualize the function ${\munn}^2$ at its maximal $\mu \nu=-0.01$ and for $x=\nu \tau$ in the form
$h(x)={\munn}^2 (0.1, -0.1, x,1,\lambda_{\mathrm 01}, \alpha_{\mathrm 01})$  in Figure~\ref{fig:meanSqu}.

\begin{figure}
 \centering
 \includegraphics[width=0.5\textwidth]{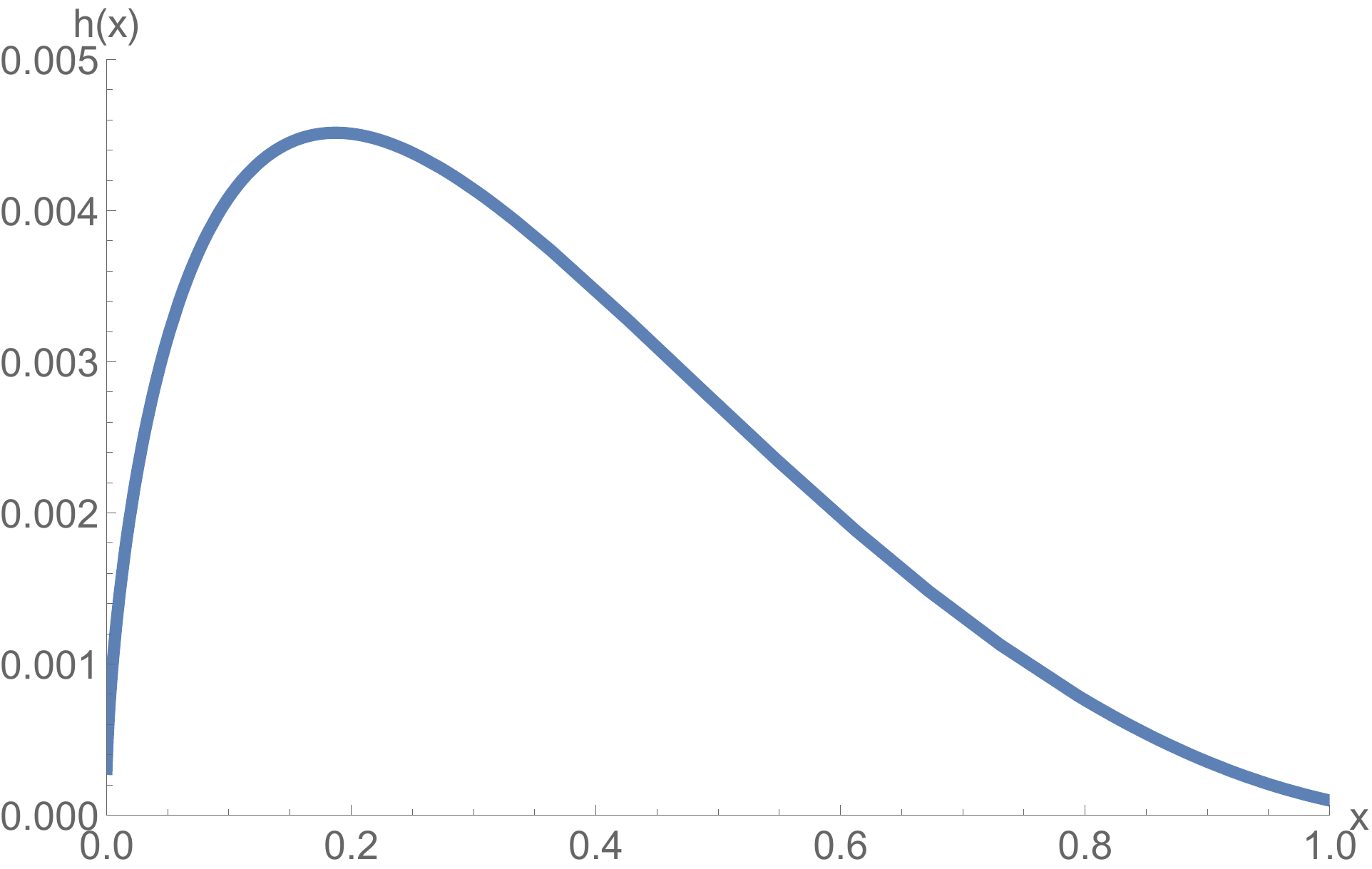}
 \caption[Graph of the function $h(x)={\munn}^2 (0.1, -0.1, x,1,\lambda_{\mathrm 01}, \alpha_{\mathrm 01})$]{The graph of the function $h(x)={\munn}^2 (0.1, -0.1, x,1,\lambda_{\mathrm 01}, \alpha_{\mathrm 01})$ is displayed. It has a local 
  maximum at $x=\nu \tau \approx 0.187342$ and $h(x)\approx 0.00451457$ in the domain $x \in [0,1]$. \label{fig:meanSqu}}
\end{figure}

\begin{proof}
We use a similar strategy to the one we have used to show the bound on the singular value (Lemmata~\ref{lem:Ds1Bounds}, \ref{lem:meanValue}, and \ref{lem:sBound}), where
we evaluted the function on a grid and used bounds on the derivatives together with the mean value theorem.
Here we have
\begin{align}
&\left| {\munn}^2 (\mu, \omega, \nu,\tau,\lambda_{\mathrm 01}, \alpha_{\mathrm 01}) - 
{\munn}^2 (\mu +\Delta \mu, \omega+\Delta \omega, \nu + \Delta \nu,\tau + \Delta \tau,\lambda_{\mathrm 01}, \alpha_{\mathrm 01}) \right|
\leq \\ \nonumber
&\left| \frac{\partial}{\partial \mu}{\munn}^2 \right| |\Delta \mu| + 
\left| \frac{\partial}{\partial \omega}{\munn}^2 \right| |\Delta \omega|+
\left| \frac{\partial}{\partial \nu}{\munn}^2 \right| |\Delta \nu|+
\left| \frac{\partial}{\partial \tau}{\munn}^2 \right| |\Delta \tau|.
\end{align}

We use Lemma~\ref{lem:muBounds} and Lemma~\ref{lem:meanLowVar}, to obtain 
\begin{align}
& \left| \frac{\partial}{\partial \mu}{\munn}^2 \right| = 2  \left| \munn \right|  \left| \frac{\partial}{\partial \mu} \munn \right| <
2 \cdot 0.289324 \cdot 0.14 = 0.08101072 \\ \nonumber
& \left| \frac{\partial}{\partial \omega}{\munn}^2 \right| = 2  \left| \munn \right|  \left| \frac{\partial}{\partial \omega} \munn \right| <
2 \cdot 0.289324 \cdot 0.14 = 0.08101072 \\ \nonumber
& \left| \frac{\partial}{\partial \nu}{\munn}^2 \right| = 2  \left| \munn \right|  \left| \frac{\partial}{\partial \nu} \munn \right| <
2 \cdot 0.289324 \cdot 0.52 = 0.30089696 \\ \nonumber
& \left| \frac{\partial}{\partial \tau}{\munn}^2 \right| = 2  \left| \munn \right|  \left| \frac{\partial}{\partial \tau} \munn \right| <
2 \cdot 0.289324 \cdot 0.11 =  0.06365128 
\end{align}

We evaluated the function ${\munn}^2$ in a grid $G$ of $\Omega^-$ with $\Delta \mu =0.001498041 $,
$\Delta \omega = 0.001498041$,
$\Delta \nu = 0.0004033190$, and
$\Delta \tau = 0.0019065994$ using a computer and obtained the maximal value $\max _{G} (\munn)^2=0.00451457$, therefore 
the maximal value of   ${\munn}^2$ is bounded by
\begin{align}
&\max _ {(\mu, \omega, \nu, \tau) \in \Omega^-} (\munn)^2 \leq \\ \nonumber
&0.00451457 +  0.001498041 \cdot  0.08101072 + 0.001498041 \cdot  0.08101072 + \\
& 0.0004033190 \cdot 0.30089696 + 0.0019065994 \cdot 0.06365128 < 0.005.
\end{align}

Furthermore we used error propagation to estimate the numerical error on the function evaluation. Using the error propagation rules 
derived in Subsection~\ref{sec:error}, we found that the numerical error is smaller than $10^{-13}$ in the worst case.
\end{proof}


\begin{lemma}[Main subfunction]
\label{proof:mainsubfunction}
For $1.2 \leq x \leq 20$ and $-0.1 \leq y \leq 0.1$, 

the function
\begin{align}
e^{\frac{(x+y)^2}{2 x}} \erfc \left(\frac{x+y}{\sqrt{2} \sqrt{x}}\right)-2 e^{\frac{(2 x+y)^2}{2 x}} \erfc \left(\frac{2 x+y}{\sqrt{2} \sqrt{x}}\right)
\end{align}
is smaller than zero, is strictly monotonically increasing in $x$,
and strictly monotonically decreasing in $y$ for the minimal $x=12/10=1.2$.
\end{lemma}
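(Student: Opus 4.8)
Write $g(t)=e^{t^{2}}\erfc(t)$ and, for $(x,y)$ in the stated box, put
\[
a=\frac{x+y}{\sqrt{2}\,\sqrt{x}},\qquad b=\frac{2x+y}{\sqrt{2}\,\sqrt{x}},
\]
so that $a^{2}=\tfrac{(x+y)^{2}}{2x}$, $b^{2}=\tfrac{(2x+y)^{2}}{2x}$, and the function in Eq.~\eqref{eq:subfunction} is exactly $f=g(a)-2g(b)$. On the box $a\ge 1.1/\sqrt{2.4}>0$ and $b\ge 2.3/\sqrt{2.4}>0$, so Lemma~\ref{lem:Abramowitz} applies at both arguments; the identities $b-a=\sqrt{x/2}$ and $2a-b=\tfrac{y}{\sqrt{2}\sqrt{x}}$ (the latter bounded by $0.1/\sqrt{2.4}<0.065$ in modulus) say that $b$ is close to $2a$. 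I will also use $\phi(t):=\tfrac{2}{\sqrt{\pi}}-2t\,g(t)=-g'(t)$, which is positive by Lemma~\ref{lem:xeErfc} and strictly decreasing by Lemma~\ref{lem:exerfc}.

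\textbf{Negativity.} Using the upper Abramowitz bound on $g(a)$ and the lower one on $g(b)$, $f<0$ follows once $\sqrt{b^{2}+2}+b\le 2\sqrt{a^{2}+4/\pi}+2a$. Writing $b=2a-\delta$ with $\delta=\tfrac{y}{\sqrt{2}\sqrt{x}}$ and squaring once (the right side is positive since $2\sqrt{a^{2}+4/\pi}\ge 4/\sqrt{\pi}>|\delta|$), this collapses to $2-\tfrac{16}{\pi}\le 4\delta\bigl(a+\sqrt{a^{2}+4/\pi}\bigr)$. The left side is negative; for $\delta\ge0$ the right side is non-negative, and for $\delta<0$ one bounds $4|\delta|\bigl(a+\sqrt{a^{2}+4/\pi}\bigr)\le 4|\delta|\bigl(2a+\tfrac{2}{\sqrt{\pi}}\bigr)$ using $|\delta|\,a=\tfrac{|y|(x+y)}{2x}<0.05$ and $|\delta|<0.065$, which is well below $\tfrac{16}{\pi}-2$.

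\textbf{Monotonicity.} Since $x\ge1.2>|y|$ we have $\partial a/\partial x=\tfrac{x-y}{2\sqrt{2}x^{3/2}}>0$, $\partial b/\partial x=\tfrac{2x-y}{2\sqrt{2}x^{3/2}}>0$, and $\partial a/\partial y=\partial b/\partial y=\tfrac{1}{\sqrt{2}\sqrt{x}}>0$; with $g'=-\phi$,
\[
\frac{\partial f}{\partial x}=\frac{2(2x-y)\phi(b)-(x-y)\phi(a)}{2\sqrt{2}\,x^{3/2}},\qquad
\frac{\partial f}{\partial y}=\frac{2\phi(b)-\phi(a)}{\sqrt{2}\sqrt{x}} .
\]
So the two claims read $\tfrac{\phi(a)}{\phi(b)}<\tfrac{2(2x-y)}{x-y}$ (increase in $x$) and, at $x=1.2$, $\phi(a)>2\phi(b)$ (decrease in $y$). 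The engine is a pair of rational bounds
\[
\frac{2}{\sqrt{\pi}\,(2t^{2}+3)}\ <\ \phi(t)\ <\ \frac{2}{\sqrt{\pi}\,(2t^{2}+1)}\qquad(t>0),
\]
which I would obtain from $\tfrac{2t}{\sqrt{\pi}(2t^{2}+1)}<g(t)<\tfrac{2(1+t^{2})}{\sqrt{\pi}\,t(3+2t^{2})}$; each of those is a one-line super-/sub-solution comparison for the ODE $R'=2tR-1$ satisfied by $R:=\tfrac{\sqrt{\pi}}{2}g$ (the comparisons reduce to $3\ge0$ and $1\ge-1$). They refine Lemma~\ref{lem:Abramowitz} exactly in the large-$t$ regime, which is where the first inequality is tight. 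Feeding them in, $\tfrac{\phi(a)}{\phi(b)}<\tfrac{2b^{2}+3}{2a^{2}+1}$, and after substituting $2a^{2}=(x+y)^{2}/x$, $2b^{2}=(2x+y)^{2}/x$ the inequality $\partial_{x}f>0$ becomes the polynomial statement
\[
x^{2}(1+6y)+xy(1+3y)-y^{3}\ >\ 0\quad\text{on }[1.2,20]\times[-0.1,0.1],
\]
which is immediate: for $y\ge0$ all but the last term are non-negative and $x^{2}(1+6y)\ge1.44$, while for $y<0$ one has $x^{2}(1+6y)+xy(1+3y)\ge0.4x^{2}-0.1x\ge0.456$.

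\textbf{The hard part.} The remaining piece --- strict decrease in $y$ at $x=1.2$ --- is the obstacle: there $a\in[1.1/\sqrt{2.4},\,1.3/\sqrt{2.4}]$ and $b=a+\sqrt{0.6}$ are of moderate size, where neither Lemma~\ref{lem:Abramowitz} nor the two-term bounds above separate $\phi(a)$ from $2\phi(b)$ with any margin (the true ratio is only about $2.2$, and both one-sided estimates are off by roughly ten percent). I would close it by a short rigorous analysis of $\psi(y):=\phi(a(y))-2\phi(b(y))$ on $y\in[-0.1,0.1]$: use that $\phi$ is decreasing (Lemma~\ref{lem:exerfc}) and a slightly tighter pair of continued-fraction convergents for $g$ to get sharp enclosures of $\phi$ on these tiny intervals, then examine the sign of $\psi'=\tfrac{1}{\sqrt{2.4}}(\phi'(a)-2\phi'(b))$ to put $\min\psi$ at an endpoint, where $\psi>0$ is verified directly. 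This is the same philosophy as the grid-plus-mean-value-theorem argument in Lemma~\ref{lem:sBound}, but over a one-dimensional segment, so it remains elementary; everything else (negativity, the $y\ge0$ half of $\partial_x f$, and the $x=1.2$ statement) is routine once the $\erfc$ bounds and Lemmata~\ref{lem:exerfc} and~\ref{lem:xeErfc} are in hand.
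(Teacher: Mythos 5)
Your arguments for negativity and for strict increase in $x$ are correct and take a genuinely different, more elementary route than the paper: where the paper invokes Ren's rational approximation of $e^{z^2}\erfc(z)$ together with a numerically estimated error envelope and a long chain of polynomial manipulations, you use only the Abramowitz bounds and a clean pair of sub-/super-solution comparisons for the ODE $R'=2tR-1$ to get the rational enclosures $\tfrac{2}{\sqrt{\pi}(2t^2+3)}<\phi(t)<\tfrac{2}{\sqrt{\pi}(2t^2+1)}$, reducing the $x$-monotonicity to the polynomial inequality $x^2(1+6y)+xy(1+3y)-y^3>0$ on the box, which you dispatch at once. The derivative identities $\partial_x f=\bigl(2(2x-y)\phi(b)-(x-y)\phi(a)\bigr)/(2\sqrt{2}\,x^{3/2})$ and $\partial_y f=(2\phi(b)-\phi(a))/\sqrt{2x}$ are correct, and your direct negativity argument via Abramowitz is both correct and cleaner than the paper's, which deduces negativity from the monotonicity and a limit.

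However, the third claim --- strict decrease in $y$ at $x=1.2$ --- is not proved. You yourself observe that at $x=1.2$ the true ratio $\phi(a)/\phi(b)\approx 2.28$ is only just above the threshold $2$, while your enclosures place the ratio only in roughly $(1.4,3.5)$, so they do not separate $\phi(a)$ from $2\phi(b)$. You then outline a remedy --- ``a slightly tighter pair of continued-fraction convergents'', analysis of the sign of $\psi'$ for $\psi(y)=\phi(a(y))-2\phi(b(y))$, and an endpoint check --- but you construct none of it: the tighter convergents are not written down, $\psi'$ is not computed, and no endpoint values are verified. This is exactly the step where the paper has to fall back on Ren's approximation with an explicit computer-checked error bound, because the margin is too small for the coarse bounds used elsewhere. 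Until that one-dimensional analysis on $a\in[1.1/\sqrt{2.4},\,1.3/\sqrt{2.4}]$ is actually carried out (or replaced by a sharper pair of one-line rational enclosures for $\phi$ that you prove to separate $\phi(a)$ from $2\phi(b)$ there), the lemma is only two-thirds proved.
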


\begin{proof}
We first consider the derivative of sub-function
Eq.~\eqref{eq:subfunction} with respect to $x$.
The derivative of the function 
\begin{align}
e^{\frac{(x+y)^2}{2 x}} \erfc \left(\frac{x+y}{\sqrt{2} \sqrt{x}}\right)-2 e^{\frac{(2 x+y)^2}{2 x}} \erfc \left(\frac{2 x+y}{\sqrt{2} \sqrt{x}}\right)
\end{align}
with respect to $x$ is
\begin{align}
&\frac{\sqrt{\pi } \left(e^{\frac{(x+y)^2}{2 x}} (x-y) (x+y)
  \erfc \left(\frac{x+y}{\sqrt{2} \sqrt{x}}\right)-2 e^{\frac{(2
  x+y)^2}{2 x}} \left(4 x^2-y^2\right) \erfc \left(\frac{2
  x+y}{\sqrt{2} \sqrt{x}}\right)\right)+\sqrt{2} \sqrt{x} (3 x-y)}{2
  \sqrt{\pi } x^2}\ = \\ \nonumber &\frac{\sqrt{\pi } \left(e^{\frac{(x+y)^2}{2 x}}
  (x-y) (x+y) \erfc \left(\frac{x+y}{\sqrt{2} \sqrt{x}}\right)-2
  e^{\frac{(2 x+y)^2}{2 x}} (2 x+y) (2 x-y) \erfc \left(\frac{2
  x+y}{\sqrt{2} \sqrt{x}}\right)\right)+\sqrt{2} \sqrt{x} (3 x-y)}{2
  \sqrt{\pi } x^2} \ = \\ \nonumber &\frac{\sqrt{\pi } \left(\frac{e^{\frac{(x+y)^2}{2
  x}} (x-y) (x+y) \erfc \left(\frac{x+y}{\sqrt{2}
  \sqrt{x}}\right)}{\sqrt{2} \sqrt{x}}-\frac{2 e^{\frac{(2 x+y)^2}{2
  x}} (2 x+y) (2 x-y) \erfc \left(\frac{2 x+y}{\sqrt{2}
  \sqrt{x}}\right)}{\sqrt{2} \sqrt{x}}\right)+(3 x-y)}{2 \sqrt{2} 
  \sqrt{\pi }  x^2 \sqrt{x}} \ .
\end{align}

We consider the numerator
\begin{align}
\sqrt{\pi } \left(\frac{e^{\frac{(x+y)^2}{2 x}} (x-y) (x+y)
  \erfc \left(\frac{x+y}{\sqrt{2} \sqrt{x}}\right)}{\sqrt{2}
  \sqrt{x}}-\frac{2 e^{\frac{(2 x+y)^2}{2 x}} (2 x+y) (2 x-y)
  \erfc \left(\frac{2 x+y}{\sqrt{2} \sqrt{x}}\right)}{\sqrt{2}
  \sqrt{x}}\right)+(3 x-y) \ .
\end{align}

For bounding this value, we use the approximation 
\begin{align}
\label{eq:ren}
e^{z^2} \erfc (z) \ \approx \ \frac{2.911}{\sqrt{\pi } (2.911 -1)
  z+\sqrt{\pi  z^2+2.911^2}} \ .
\end{align}
from \citet{Ren:07}.
We start with an error analysis of this approximation.
According to \citet{Ren:07} (Figure~1), the approximation 
error is positive in the range
$[0.7,3.2]$.  This range contains all possible
arguments of $\erfc$ that we consider.
Numerically we maximized and minimized the approximation error of the
whole expression
\begin{align}
E(x,y) \ &= \ \left(\frac{e^{\frac{(x+y)^2}{2 x}} (x-y) (x+y)
  \erfc \left(\frac{x+y}{\sqrt{2} \sqrt{x}}\right)}{\sqrt{2}
  \sqrt{x}}-\frac{2 e^{\frac{(2 x+y)^2}{2 x}} (2 x-y) (2 x+y)
  \erfc \left(\frac{2 x+y}{\sqrt{2} \sqrt{x}}\right)}{\sqrt{2}
  \sqrt{x}}\right) \ - \\ \nonumber 
&\left(\frac{2.911 (x-y) (x+y)}{\left(\sqrt{2} \sqrt{x}\right) \left(\frac{\sqrt{\pi } (2.911 -1) 
  (x+y)}{\sqrt{2} \sqrt{x}}+\sqrt{\pi  \left(\frac{x+y}{\sqrt{2} \sqrt{x}}\right)^2+2.911^2}\right)}\ - \right. \\ \nonumber 
&\left. \frac{2 \cdot 2.911 (2 x-y) (2 x+y)}{\left(\sqrt{2} \sqrt{x}\right) \left(\frac{\sqrt{\pi } (2.911 -1) (2 x+y)}{\sqrt{2} \sqrt{x}}+
  \sqrt{\pi  \left(\frac{2 x+y}{\sqrt{2} \sqrt{x}}\right)^2+2.911^2}\right)}\right) \ .
\end{align}
We numerically determined $0.0113556 \leq E(x,y) \leq 0.0169551$ for 
$1.2 \leq x \leq 20$ and $-0.1 \leq y \leq 0.1$. 
We used different numerical optimization techniques like  
gradient based constraint BFGS algorithms and 
non-gradient-based Nelder-Mead methods with different start points.
Therefore our approximation is smaller than the function that we
approximate. 
We subtract an additional safety gap of 0.0131259 from our
approximation to ensure that the inequality via the approximation
holds true. With this safety gap the inequality would hold true even 
for negative $x$, where the approximation error becomes negative and
the safety gap would compensate.
Of course, the safety gap of 0.0131259 is not necessary for our
analysis but may help or future investigations.

We have the sequences of inequalities using the approximation of \citet{Ren:07}:
\begin{align}
\label{eq:ineqX}
&(3 x-y)+\left(\frac{e^{\frac{(x+y)^2}{2 x}} (x-y) (x+y)
  \erfc \left(\frac{x+y}{\sqrt{2} \sqrt{x}}\right)}{\sqrt{2}
  \sqrt{x}}-\frac{2 e^{\frac{(2 x+y)^2}{2 x}} (2 x-y) (2 x+y)
  \erfc \left(\frac{2 x+y}{\sqrt{2} \sqrt{x}}\right)}{\sqrt{2}
  \sqrt{x}}\right) \sqrt{\pi }\ \geq \\ \nonumber & (3 x-y)+\left(\frac{2.911 (x-y)
  (x+y)}{\left(\sqrt{\pi  \left(\frac{x+y}{\sqrt{2}
  \sqrt{x}}\right)^2+2.911^2}+\frac{(2.911 -1) \sqrt{\pi }
  (x+y)}{\sqrt{2} \sqrt{x}}\right) \left(\sqrt{2}
  \sqrt{x}\right)}\ - \right. \\ \nonumber &\left. \frac{2 (2 x-y) (2 x+y) 2.911}{\left(\sqrt{2}
  \sqrt{x}\right) \left(\sqrt{\pi  \left(\frac{2 x+y}{\sqrt{2}
  \sqrt{x}}\right)^2+2.911^2}+\frac{(2.911 -1) \sqrt{\pi } (2
  x+y)}{\sqrt{2} \sqrt{x}}\right)}\right) \sqrt{\pi }-0.0131259 \ = \\ \nonumber 
  &(3 x-y)+\left(\frac{\left(\sqrt{2} \sqrt{x} 2.911\right) (x-y)
  (x+y)}{\left(\sqrt{\pi  (x+y)^2+2 \cdot 2.911^2 x}+(2.911 -1) (x+y)
  \sqrt{\pi }\right) \left(\sqrt{2} \sqrt{x}\right)}\ - \right. \\ \nonumber &\left. \frac{2 (2 x-y)
  (2 x+y) \left(\sqrt{2} \sqrt{x} 2.911\right)}{\left(\sqrt{2}
  \sqrt{x}\right) \left(\sqrt{\pi  (2 x+y)^2+2 \cdot 2.911^2 x}+(2.911 -1)
  (2 x+y) \sqrt{\pi }\right)}\right) \sqrt{\pi }-0.0131259 \ = \\ \nonumber 
  &(3 x-y)+2.911 \left(\frac{(x-y) (x+y)}{(2.911 -1)
  (x+y)+\sqrt{(x+y)^2+\frac{2 \cdot 2.911^2 x}{\pi }}}\ - \right. \\ \nonumber &\left. \frac{2 (2 x-y) (2
  x+y)}{(2.911 -1) (2 x+y)+\sqrt{(2 x+y)^2+\frac{2 \cdot 2.911^2 x}{\pi
  }}}\right)-0.0131259\ \geq \\ \nonumber & (3 x-y)+2.911 \left(\frac{(x-y)
  (x+y)}{(2.911 -1) (x+y)+\sqrt{\left(\frac{2.911^2}{\pi
  }\right)^2+(x+y)^2+\frac{2 \cdot 2.911^2 x}{\pi }+\frac{2 \cdot 2.911^2 y}{\pi
  }}}\ - \right. \\ \nonumber &\left. \frac{2 (2 x-y) (2 x+y)}{(2.911 -1) (2 x+y)+\sqrt{(2
  x+y)^2+\frac{2 \cdot 2.911^2 x}{\pi }}}\right)-0.0131259 \ = \\ \nonumber 
  &(3 x-y)+2.911
  \left(\frac{(x-y) (x+y)}{(2.911 -1)
  (x+y)+\sqrt{\left(x+y+\frac{2.911^2}{\pi }\right)^2}}\ - \right. \\ \nonumber &\left. \frac{2 (2
  x-y) (2 x+y)}{(2.911 -1) (2 x+y)+\sqrt{(2 x+y)^2+\frac{2 \cdot 2.911^2
  x}{\pi }}}\right)-0.0131259 \ = \\ \nonumber &(3 x-y)+2.911 \left(\frac{(x-y)
  (x+y)}{2.911 (x+y)+\frac{2.911^2}{\pi }}-\frac{2 (2 x-y) (2
  x+y)}{(2.911 -1) (2 x+y)+\sqrt{(2 x+y)^2+\frac{2 \cdot 2.911^2 x}{\pi
  }}}\right)-0.0131259 \ = \\ \nonumber &(3 x-y)+\frac{(x-y)
  (x+y)}{(x+y)+\frac{2.911}{\pi }}-\frac{2 (2 x-y) (2 x+y)
  2.911}{(2.911 -1) (2 x+y)+\sqrt{(2 x+y)^2+\frac{2 \cdot 2.911^2 x}{\pi
  }}}-0.0131259 \ = 
&(3 x-y)+\frac{(x-y) (x+y)}{(x+y)+\frac{2.911}{\pi
  }}-\frac{2 (2 x-y) (2 x+y) 2.911}{(2.911 -1) (2 x+y)+\sqrt{(2
  x+y)^2+\frac{2 \cdot 2.911^2 x}{\pi }}}-0.0131259 \ = \\ \nonumber 
  &\left(-2 (2 x-y) 2.911
  \left((x+y)+\frac{2.911}{\pi }\right) (2
  x+y) \right. \ + \\ \nonumber &\left. \left((x+y)+\frac{2.911}{\pi }\right) (3 x-y-0.0131259)
  \left((2.911 -1) (2 x+y)+\sqrt{(2 x+y)^2+\frac{2 \cdot 2.911^2 x}{\pi
  }}\right)\right. \ + \\ \nonumber &\left. (x-y) (x+y) \left((2.911 -1) (2 x+y)+\sqrt{(2
  x+y)^2+\frac{2 \cdot 2.911^2 x}{\pi
  }}\right)\right) \\ \nonumber &\left(\left((x+y)+\frac{2.911}{\pi }\right) \left((2.911 -1)
  (2 x+y)+\sqrt{(2 x+y)^2+\frac{2 \cdot 2.911^2 x}{\pi
  }}\right)\right)^{-1} \ = \\ &\left(((x-y) (x+y)+(3 x-y-0.0131259) (x+y+0.9266)) \left(\sqrt{(2 x+y)^2+5.39467 x}+3.822 x+1.911 y\right)\right. \ - \\ \nonumber &\left. 5.822 (2 x-y) (x+y+0.9266) (2 x+y)\right)\\ \nonumber &    \left(\left((x+y)+\frac{2.911}{\pi }\right) \left((2.911 -1) (2 x+y)+\sqrt{(2 x+y)^2+\frac{2 2.911^2 x}{\pi }}\right)\right)^{-1} \ > \ 0 \ .
\end{align}

We explain this sequence of inequalities:
\begin{itemize} 
\item First inequality: The approximation of \citet{Ren:07}
and then subtracting a safety gap (which would not be necessary for the
current analysis).

\item Equalities: The factor $\sqrt{2} \sqrt{x}$ is factored out and
  canceled. 

\item Second inequality: adds a positive term in the first root to
  obtain a binomial form. The term containing the root 
is positive and the root is in the denominator, 
therefore the whole term becomes smaller.
\end{itemize}
\begin{itemize} 

\item Equalities: solve for the term and factor out.

\item Bringing all terms to the denominator
$\left((x+y)+\frac{2.911}{\pi }\right) \left((2.911 -1) (2 x+y)+\sqrt{(2 x+y)^2+\frac{2 \cdot 2.911^2 x}{\pi }}\right)$.

\item Equalities: Multiplying out and expanding terms.

\item Last inequality $>0$ is proofed in the following sequence of
  inequalities.
\end{itemize}

We look at the numerator of the last expression of 
Eq.~\eqref{eq:ineqX}, which we show to be
positive in order to show $>0$ in 
Eq.~\eqref{eq:ineqX}. The numerator is 
\begin{align}
&((x-y) (x+y)+(3 x-y-0.0131259) (x+y+0.9266)) \left(\sqrt{(2 x+y)^2+5.39467 x}+3.822 x+1.911 y\right)-\\ \nonumber &5.822 (2 x-y) (x+y+0.9266) (2 x+y)\ = \\ \nonumber &-5.822 (2 x-y) (x+y+0.9266) (2 x+y)+(3.822 x+1.911 y) ((x-y) (x+y)+\\ \nonumber &(3 x-y-0.0131259) (x+y+0.9266))+((x-y) (x+y)+\\ \nonumber &(3 x-y-0.0131259) (x+y+0.9266)) \sqrt{(2 x+y)^2+5.39467 x}\ = \\ \nonumber &-8.0 x^3+\left(4 x^2+2 x y+2.76667 x-2 y^2-0.939726 y-0.0121625\right) \sqrt{(2 x+y)^2+5.39467 x}-\\ \nonumber &8.0 x^2 y-11.0044 x^2+2.0 x y^2+1.69548 x y-0.0464849 x+2.0 y^3+3.59885 y^2-0.0232425 y\ = \\ \nonumber &-8.0 x^3+\left(4 x^2+2 x y+2.76667 x-2 y^2-0.939726 y-0.0121625\right) \sqrt{(2 x+y)^2+5.39467 x}-\\ \nonumber &8.0 x^2 y-11.0044 x^2+2.0 x y^2+1.69548 x y-0.0464849 x+2.0 y^3+3.59885 y^2-0.0232425 y \ .
\end{align}
The factor in front of the root is positive.
If the term, that does not contain the root, was positive, then the whole expression would be positive and 
we would have proofed that the numerator is positive. 
Therefore we consider the case that the term, that does not contain the root, is negative.
The term that contains the root must be larger than the other term in absolute values. 
\begin{align}
&-\left(-8.0 x^3-8.0 x^2 y-11.0044 x^2+2. x y^2+1.69548 x y-0.0464849 x+2. y^3+3.59885 y^2-0.0232425 y\right)\ < \\ \nonumber 
&\left(4 x^2+2 x y+2.76667 x-2 y^2-0.939726 y-0.0121625\right) \sqrt{(2 x+y)^2+5.39467 x} \ . 
\end{align}
Therefore the squares of the root term have to be larger 
than the square of the other term to show $>0$ in 
Eq.~\eqref{eq:ineqX}.
Thus, we have the inequality:
\begin{align}
&\left(-8.0 x^3-8.0 x^2 y-11.0044 x^2+2. x y^2+1.69548 x y-0.0464849 x+2. y^3+3.59885 y^2-0.0232425 y\right)^2\ < \\ \nonumber 
&\left(4 x^2+2 x y+2.76667 x-2 y^2-0.939726 y-0.0121625\right)^2 \left((2 x+y)^2+5.39467 x\right) \ . 
\end{align}

This is equivalent to
\begin{align}
& 0\ < \ \left(4 x^2+2 x y+2.76667 x-2 y^2-0.939726 y-0.0121625\right)^2 \left((2 x+y)^2+5.39467 x\right)-\\ \nonumber 
&\left(-8.0 x^3-8.0 x^2 y-11.0044 x^2+2.0 x y^2+1.69548 x y-0.0464849 x+2.0 y^3+3.59885 y^2-0.0232425 y\right)^2\ = \\ \nonumber &-1.2227 x^5+40.1006 x^4 y+27.7897 x^4+41.0176 x^3 y^2+64.5799 x^3 y+39.4762 x^3+10.9422 x^2 y^3-\\ \nonumber &13.543 x^2 y^2-28.8455 x^2 y-0.364625 x^2+0.611352 x y^4+6.83183 x y^3+5.46393 x y^2+\\ \nonumber &0.121746 x y+0.000798008 x-10.6365 y^5-11.927 y^4+0.190151 y^3-0.000392287 y^2 \ .
\end{align}
We obtain the inequalities: 
\begin{align}
&-1.2227 x^5+40.1006 x^4 y+27.7897 x^4+41.0176 x^3 y^2+64.5799 x^3 y+39.4762 x^3+10.9422 x^2 y^3-\\ \nonumber &13.543 x^2 y^2-28.8455 x^2 y-0.364625 x^2+0.611352 x y^4+6.83183 x y^3+5.46393 x y^2+\\ \nonumber &0.121746 x y+0.000798008 x-10.6365 y^5-11.927 y^4+0.190151 y^3-0.000392287 y^2\ = \\ \nonumber &-1.2227 x^5+27.7897 x^4+41.0176 x^3 y^2+39.4762 x^3-13.543 x^2 y^2-0.364625 x^2+\\ \nonumber &y \left(40.1006 x^4+64.5799 x^3+10.9422 x^2 y^2-28.8455 x^2+6.83183 x y^2+0.121746 x\ - \right. \\ \nonumber &\left. 10.6365 y^4+0.190151 y^2\right)+0.611352 x y^4+5.46393 x y^2+0.000798008 x-11.927 y^4-0.000392287 y^2\ > \\ \nonumber &
-1.2227 x^5+27.7897 x^4+41.0176 \cdot (0.0)^2 x^3+39.4762 x^3-13.543 \cdot (0.1)^2 x^2-0.364625 x^2- \\ \nonumber &0.1 \cdot  \left(40.1006 x^4+64.5799 x^3+10.9422 \cdot (0.1)^2 x^2-28.8455 x^2+6.83183 \cdot (0.1)^2 x+0.121746 x\ + \right. \\ \nonumber &\left. 10.6365 \cdot (0.1)^4+0.190151 \cdot (0.1)^2\right)+\\ \nonumber &0.611352 \cdot (0.0)^4 x+5.46393 \cdot (0.0)^2 x+0.000798008 x-11.927 \cdot (0.1)^4-0.000392287 \cdot (0.1)^2
\ = \\ \nonumber &-1.2227 x^5+23.7796 x^4+(20+13.0182) x^3+2.37355 x^2-0.0182084 x-0.000194074\ \geq \\ \nonumber & -1.2227 x^5+24.7796 x^4+13.0182 x^3+2.37355 x^2-0.0182084 x-0.000194074\ > \\ \nonumber &13.0182 x^3+2.37355 x^2-0.0182084 x-0.000194074\ >\ 0\ .
\end{align}
We used $24.7796 \cdot (20)^4-1.2227 \cdot (20)^5= 52090.9>0 $ and $x \leq 20$.
We have proofed the last inequality $>0$ of Eq.~\eqref{eq:ineqX}.

Consequently the derivative is always positive independent of $y$,
thus 
\begin{align}
e^{\frac{(x+y)^2}{2 x}} \erfc \left(\frac{x+y}{\sqrt{2} \sqrt{x}}\right)-2 e^{\frac{(2 x+y)^2}{2 x}} \erfc \left(\frac{2 x+y}{\sqrt{2} \sqrt{x}}\right)
\end{align}
is strictly monotonically increasing in $x$.

\paragraph{The main subfunction is smaller than zero.} 
Next we show that the 
sub-function Eq.~\eqref{eq:subfunction} is smaller
than zero.
We consider the limit:
\begin{align}
&\lim_{x \to \infty}e^{\frac{(x+y)^2}{2 x}}
  \erfc \left(\frac{x+y}{\sqrt{2} \sqrt{x}}\right) \ - \ 2 e^{\frac{(2 x+y)^2}{2 x}} \erfc \left(\frac{2 x+y}{\sqrt{2} \sqrt{x}}\right)
\ = \ 0
\end{align}
The limit follows from Lemma~\ref{lem:Abramowitz}.
Since the function is monotonic increasing in $x$, it has to approach
$0$ from below. Thus,
\begin{align}
e^{\frac{(x+y)^2}{2 x}} \erfc \left(\frac{x+y}{\sqrt{2} \sqrt{x}}\right)-2 e^{\frac{(2 x+y)^2}{2 x}} \erfc \left(\frac{2 x+y}{\sqrt{2} \sqrt{x}}\right)
\end{align}
is smaller than zero.

\paragraph{Behavior of the main subfunction with respect to $y$ at minimal $x$.} 
We now consider the derivative of sub-function
Eq.~\eqref{eq:subfunction} with respect to $y$.
We proofed that sub-function
Eq.~\eqref{eq:subfunction} is  strictly monotonically increasing 
independent of $y$. 
In the proof of Theorem~\ref{th:s2Cont}, we need the minimum
of  sub-function
Eq.~\eqref{eq:subfunction}. Therefore we are only interested in the
derivative of sub-function
Eq.~\eqref{eq:subfunction} with respect to $y$
for the minimum $x=12/10=1.2$ 

Consequently, we insert the minimum $x=12/10=1.2$ into the  sub-function
Eq.~\eqref{eq:subfunction}. The main terms become
\begin{align}
\frac{x+y}{\sqrt{2} \sqrt{x}} \ = \ \frac{y+1.2}{\sqrt{2} \sqrt{1.2}} \ = \
\frac{y}{\sqrt{2} \sqrt{1.2}}+\frac{\sqrt{1.2}}{\sqrt{2}} \ = \
\frac{5 y+6}{2 \sqrt{15}} 
\end{align}
and
\begin{align}
\frac{2 x+y}{\sqrt{2} \sqrt{x}} \ = \ \frac{y+1.2 \cdot 2}{\sqrt{2} \sqrt{1.2}} \ = \
\frac{y}{\sqrt{2} \sqrt{1.2}}+\sqrt{1.2} \sqrt{2} \ = \
\frac{5 y+12}{2 \sqrt{15}} \ .
\end{align}
Sub-function
Eq.~\eqref{eq:subfunction} becomes:
\begin{align}
e^{\left(\frac{y}{\sqrt{2} \sqrt{\frac{12}{10}}}+\frac{\sqrt{\frac{12}{10}}}{\sqrt{2}}\right)^2} \erfc \left(\frac{y}{\sqrt{2} \sqrt{\frac{12}{10}}}+\frac{\sqrt{\frac{12}{10}}}{\sqrt{2}}\right)-2 e^{\left(\frac{y}{\sqrt{2} \sqrt{\frac{12}{10}}}+\sqrt{2} \sqrt{\frac{12}{10}}\right)^2} \erfc \left(\frac{y}{\sqrt{2} \sqrt{\frac{12}{10}}}+\sqrt{2} \sqrt{\frac{12}{10}}\right) \ .
\end{align}
The derivative of this function with respect to $y$ is
\begin{align}
\frac{\sqrt{15 \pi } \left(e^{\frac{1}{60} (5 y+6)^2} (5 y+6) \erfc \left(\frac{5 y+6}{2 \sqrt{15}}\right)-2 e^{\frac{1}{60} (5 y+12)^2} (5 y+12) \erfc \left(\frac{5 y+12}{2 \sqrt{15}}\right)\right)+30}{6 \sqrt{15 \pi }} \ .
\end{align}

We again will use the approximation of \citet{Ren:07}
\begin{align}
e^{z^2} \erfc (z) \ = \ \frac{2.911}{\sqrt{\pi } (2.911 -1)
  z+\sqrt{\pi  z^2+2.911^2}} \ .
\end{align}
Therefore we first perform an error analysis.
We estimated the maximum and minimum of 
\begin{align}
&\sqrt{15 \pi } \left(\frac{2 \cdot 2.911 (5 y+12)}{\frac{\sqrt{\pi } (2.911 -1) (5 y+12)}{2 \sqrt{15}}+\sqrt{\pi  \left(\frac{5 y+12}{2 \sqrt{15}}\right)^2+2.911^2}} - \frac{2.911 (5 y+6)}{\frac{\sqrt{\pi } (2.911 -1) (5 y+6)}{2 \sqrt{15}}+\sqrt{\pi  \left(\frac{5 y+6}{2 \sqrt{15}}\right)^2+2.911^2}} \right)+30\ + \\ \nonumber 
&\sqrt{15 \pi } \left(e^{\frac{1}{60} (5 y+6)^2} (5 y+6) \erfc \left(\frac{5 y+6}{2 \sqrt{15}}\right)-2 e^{\frac{1}{60} (5 y+12)^2} (5 y+12) \erfc \left(\frac{5 y+12}{2 \sqrt{15}}\right)\right)+30 \ .
\end{align}
We obtained for the maximal absolute error the value $0.163052$.
We added an approximation 
error of $0.2$ to the approximation of the derivative.
Since we want to show that the approximation upper bounds the true
expression, the addition of the approximation error is required here.
We get a sequence of inequalities:

\begin{align}
\label{eq:ineqY}
&\sqrt{15 \pi } \left(e^{\frac{1}{60} (5 y+6)^2} (5 y+6) \erfc \left(\frac{5 y+6}{2 \sqrt{15}}\right)-2 e^{\frac{1}{60} (5 y+12)^2} (5 y+12) \erfc \left(\frac{5 y+12}{2 \sqrt{15}}\right)\right)+30\ \leq \\ \nonumber & \sqrt{15 \pi } \left(\frac{2.911 (5 y+6)}{\frac{\sqrt{\pi } (2.911 -1) (5 y+6)}{2 \sqrt{15}}+\sqrt{\pi  \left(\frac{5 y+6}{2 \sqrt{15}}\right)^2+2.911^2}}-\frac{2 \cdot 2.911 (5 y+12)}{\frac{\sqrt{\pi } (2.911 -1) (5 y+12)}{2 \sqrt{15}}+\sqrt{\pi  \left(\frac{5 y+12}{2 \sqrt{15}}\right)^2+2.911^2}}\right)+\\ \nonumber &30+0.2\ = \\ \nonumber &\frac{(30 \cdot 2.911) (5 y+6)}{(2.911 -1) (5 y+6)+\sqrt{(5 y+6)^2+\left(\frac{2 \sqrt{15} \cdot 2.911}{\sqrt{\pi }}\right)^2}}-\frac{2 (30 \cdot 2.911) (5 y+12)}{(2.911 -1) (5 y+12)+\sqrt{(5 y+12)^2+\left(\frac{2 \sqrt{15} \cdot 2.911}{\sqrt{\pi }}\right)^2}}+\\ \nonumber &30+0.2\ = \\ \nonumber &\left((0.2 +30) \left((2.911 -1) (5 y+12)+\sqrt{(5 y+12)^2+\left(\frac{2 \sqrt{15} \cdot 2.911}{\sqrt{\pi }}\right)^2}\right) \right.  \\ \nonumber &\left.  \left((2.911 -1) (5 y+6)+\sqrt{(5 y+6)^2+\left(\frac{2 \sqrt{15} \cdot 2.911}{\sqrt{\pi }}\right)^2}\right)-\right.  \\ \nonumber &\left.2 \cdot  30 \cdot 2.911 (5 y+12) \left((2.911 -1) (5 y+6)+\sqrt{(5 y+6)^2+\left(\frac{2 \sqrt{15} \cdot 2.911}{\sqrt{\pi }}\right)^2}\right)+\right.  \\ \nonumber &\left. 2.911 \cdot 30 (5 y+6) \left((2.911 -1) (5 y+12)+\sqrt{(5 y+12)^2+\left(\frac{2 \sqrt{15} \cdot 2.911}{\sqrt{\pi }}\right)^2}\right)\right)\\ \nonumber &\left(\left((2.911 -1) (5 y+6)+\sqrt{(5 y+6)^2+\left(\frac{2 \sqrt{15} \cdot 2.911}{\sqrt{\pi }}\right)^2}\right) \right.  \\ \nonumber &\left.\left((2.911 -1) (5 y+12)+\sqrt{(5 y+12)^2+\left(\frac{2 \sqrt{15} \cdot 2.911}{\sqrt{\pi }}\right)^2}\right)\right)^{-1} \ < \ 0 \ .
\end{align}
We explain this sequence of inequalities.
\begin{itemize}
\item First inequality: The approximation of \citet{Ren:07}
and then adding the error bound to ensure that the approximation
is larger than the true value.

\item First equality: The factor $2 \sqrt{15}$ and $2 \sqrt{\pi}$
  are factored out and canceled.

\item Second equality: Bringing all terms to the denominator
\begin{align}
&\left((2.911 -1) (5 y+6)+\sqrt{(5 y+6)^2+\left(\frac{2 \sqrt{15}
  2.911}{\sqrt{\pi }}\right)^2}\right) \\ \nonumber & \left((2.911-1) (5 y+12)+\sqrt{(5 y+12)^2+\left(\frac{2 \sqrt{15} \cdot 2.911}{\sqrt{\pi }}\right)^2}\right) \ .
\end{align}

\item Last inequality $<0$ is proofed in the following sequence of
  inequalities.
\end{itemize}
We look at the numerator of the last term in Eq.~\eqref{eq:ineqY}. We
have to proof that this numerator is smaller than zero in order to
proof the last inequality of  Eq.~\eqref{eq:ineqY}.
The numerator is
\begin{align}
&(0.2 +30) \left((2.911 -1) (5 y+12)+\sqrt{(5 y+12)^2+\left(\frac{2 \sqrt{15} \cdot 2.911}{\sqrt{\pi }}\right)^2}\right) \\ \nonumber & \left((2.911 -1) (5 y+6)+\sqrt{(5 y+6)^2+\left(\frac{2 \sqrt{15} \cdot 2.911}{\sqrt{\pi }}\right)^2}\right)\ - \\ \nonumber &2 \cdot 30 \cdot 2.911 (5 y+12) \left((2.911 -1) (5 y+6)+\sqrt{(5 y+6)^2+\left(\frac{2 \sqrt{15} \cdot 2.911}{\sqrt{\pi }}\right)^2}\right)+ \\ \nonumber &2.911 \cdot 30 (5 y+6) \left((2.911 -1) (5 y+12)+\sqrt{(5 y+12)^2+\left(\frac{2 \sqrt{15} \ . 2.911}{\sqrt{\pi }}\right)^2}\right) \ .
\end{align}
We now compute upper bounds for this numerator:

\begin{align}
&(0.2 +30) \left((2.911 -1) (5 y+12)+\sqrt{(5 y+12)^2+\left(\frac{2 \sqrt{15} \cdot 2.911}{\sqrt{\pi }}\right)^2}\right)\\ \nonumber & \left((2.911 -1) (5 y+6)+\sqrt{(5 y+6)^2+\left(\frac{2 \sqrt{15} \cdot 2.911}{\sqrt{\pi }}\right)^2}\right)-\\ \nonumber &2 \cdot 30 \cdot 2.911 (5 y+12) \left((2.911 -1) (5 y+6)+\sqrt{(5 y+6)^2+\left(\frac{2 \sqrt{15} \cdot 2.911}{\sqrt{\pi }}\right)^2}\right)+\\ \nonumber &2.911 \cdot 30 (5 y+6) \left((2.911 -1) (5 y+12)+\sqrt{(5 y+12)^2+\left(\frac{2 \sqrt{15} \cdot 2.911}{\sqrt{\pi }}\right)^2}\right)\ = \\ \nonumber &-1414.99 y^2-584.739 \sqrt{(5 y+6)^2+161.84} y+725.211 \sqrt{(5 y+12)^2+161.84} y-\\ \nonumber &5093.97 y-1403.37 \sqrt{(5 y+6)^2+161.84}+30.2 \sqrt{(5 y+6)^2+161.84} \sqrt{(5 y+12)^2+161.84}+\\ \nonumber &870.253 \sqrt{(5 y+12)^2+161.84}-4075.17\ < \\ \nonumber &-1414.99 y^2-584.739 \sqrt{(5 y+6)^2+161.84} y+725.211 \sqrt{(5 y+12)^2+161.84} y-\\ \nonumber &5093.97 y-1403.37 \sqrt{(6+5 \cdot (-0.1))^2+161.84}+30.2 \sqrt{(6+5 \cdot 0.1)^2+161.84} \sqrt{(12+5 \cdot 0.1)^2+161.84}+\\ \nonumber &870.253 \sqrt{(12+5 \cdot 0.1)^2+161.84}-4075.17\ = \\ \nonumber &-1414.99 y^2-584.739 \sqrt{(5 y+6)^2+161.84} y+725.211 \sqrt{(5 y+12)^2+161.84} y-5093.97 y-309.691\ < \\ \nonumber &y \left(-584.739 \sqrt{(5 y+6)^2+161.84}+725.211 \sqrt{(5 y+12)^2+161.84}-5093.97\right)-309.691\ < \\ \nonumber &-0.1 \left(725.211 \sqrt{(12+5 \cdot (-0.1))^2+161.84}-584.739 \sqrt{(6+5 \cdot 0.1)^2+161.84}-5093.97\right)-309.691 
 \ = \\ \nonumber &-208.604 \ .
\end{align}
For the first inequality we choose $y$ in the roots, so that 
positive terms maximally increase and negative terms maximally decrease.
The second inequality just removed the $y^2$ term which is always
negative, therefore increased the expression.
For the last inequality, the term in brackets 
is negative for all settings of $y$. 
Therefore we make the brackets as negative as possible 
and make the whole term positive by multiplying with $y=-0.1$.

Consequently 
\begin{align}
e^{\frac{(x+y)^2}{2 x}} \erfc \left(\frac{x+y}{\sqrt{2} \sqrt{x}}\right)-2 e^{\frac{(2 x+y)^2}{2 x}} \erfc \left(\frac{2 x+y}{\sqrt{2} \sqrt{x}}\right)
\end{align}
is strictly monotonically decreasing in $y$ for the minimal
$x=1.2$. 
\end{proof}

\begin{lemma}[Main subfunction below]
\label{proof:mainsubfunctionbelow} 
For $0.007 \leq x \leq 0.875$ and $-0.01 \leq y \leq 0.01$, 
the function
\begin{align}
e^{\frac{(x+y)^2}{2 x}} \erfc \left(\frac{x+y}{\sqrt{2} \sqrt{x}}\right)-2 e^{\frac{(2 x+y)^2}{2 x}} \erfc \left(\frac{2 x+y}{\sqrt{2} \sqrt{x}}\right)
\end{align}
smaller than zero, is strictly monotonically increasing in $x$
and strictly monotonically increasing in $y$ for the minimal $x=0.007=0.00875 \cdot 0.8$,
$x=0.56=0.7 \cdot 0.8$, $x=0.128=0.16 \cdot 0.8$, and $x=0.216=0.24 \cdot 0.9$ (lower
bound of $0.9$ on $\tau$).
\end{lemma}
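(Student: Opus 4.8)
The plan is to mirror the three-part structure of the proof of Lemma~\ref{proof:mainsubfunction}, re-running each step on the smaller domain $0.007 \le x \le 0.875$, $-0.01 \le y \le 0.01$, with the one change that the monotonicity in $y$ now has the opposite sign and is asserted only at the four pinned values of $x$. Write $F(x,y)$ for the subfunction in \eqref{eq:subfunction1}. First I would establish strict monotonic increase in $x$, in fact on the slightly enlarged range $[0.007,1.2]$. The derivative $\partial F/\partial x$ has exactly the shape found in Lemma~\ref{proof:mainsubfunction}: a positive prefactor times a combination of $z e^{z^2}\erfc(z)$-terms plus a remainder $(3x-y)$, with $z=\frac{x+y}{\sqrt2\sqrt x}$ and $z=\frac{2x+y}{\sqrt2\sqrt x}$. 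I would substitute the Ren approximation $e^{z^2}\erfc(z)\approx \frac{2.911}{\sqrt\pi(2.911-1)z+\sqrt{\pi z^2+2.911^2}}$, subtract a safety gap after re-establishing the sign and size of the approximation error over the argument ranges that occur here, clear denominators, and reduce positivity of the numerator to a polynomial inequality in $x,y$ checked by inserting the worst-case $y=\pm0.01$ and factoring a quadratic in $x$ on $[0.007,1.2]$. The restriction $x\le20$ used in Lemma~\ref{proof:mainsubfunction} plays no role here since $x$ is bounded above by $1.2$.

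Negativity then follows without any new limit argument: by the first step $F(\cdot,y)$ is strictly increasing on $[0.007,1.2]$, so $F(x,y)\le F(1.2,y)$ for every $x$ in $[0.007,0.875]$, and $F(1.2,y)<0$ by Lemma~\ref{proof:mainsubfunction} (which gives $F<0$ for $x\ge1.2$). For strict increase in $y$ I would treat each of $x\in\{0.007,\,0.56,\,0.128,\,0.216\}$ as a separate case: substitute the value into $F$, compute $\partial F/\partial y$, rewrite it via $e^{z^2}\erfc(z)$ at the two shifted arguments, apply the Ren approximation with an error term signed so that the approximant \emph{over}-estimates (so that proving the approximant positive proves $\partial F/\partial y>0$), clear denominators, and bound the numerator, reducing to a one-variable inequality in $y$ on $[-0.01,0.01]$ verified at the endpoints after fixing the sign of the governing bracketed factor — exactly as at the end of the proof of Lemma~\ref{proof:mainsubfunction}.

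The hard part will be error control for the Ren approximation on this domain. At every one of the four pinned $x$-values the argument $\frac{x+y}{\sqrt2\sqrt x}$ lies near or below $0.55$, and for $x=0.007$ both arguments collapse to roughly $0.03$–$0.2$ with $\frac{x+y}{\sqrt2\sqrt x}$ even negative when $y<0$ — so one is well outside the interval $[0.7,3.2]$ on which \citet{Ren:07} guarantees a positive, well-controlled approximation error. I therefore expect to need, for the small-$x$ cases, either a tightened numerical error bound computed over the \emph{actual} small argument ranges, or a replacement of the Ren estimate near $z=0$ by elementary two-sided bounds on $e^{z^2}\erfc(z)$ (Maclaurin-type bounds such as $1-\tfrac{2z}{\sqrt\pi}\le\erfc(z)$ together with the Abramowitz upper bound of Lemma~\ref{lem:Abramowitz}, plus $\erfc(-z)=2-\erfc(z)$ to cover negative arguments), spliced to the Ren-based argument for the larger pinned values. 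A secondary subtlety, already present in Lemma~\ref{proof:mainsubfunction}, is that the sign of $\partial F/\partial y$ depends on $x$, so the four values must genuinely be handled as separate cases rather than by one uniform estimate.
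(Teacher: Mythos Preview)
Your plan for the $x$-monotonicity and negativity parts tracks the paper's proof closely: the paper also invokes the Ren approximation with a freshly computed error envelope on the new argument range (it finds the error is now two-signed, roughly in $[-2.3\times10^{-4},5\times10^{-3}]$, and uses a safety gap of $0.0003$), clears denominators, and reduces to a polynomial inequality in $x$ after pinning $y$ to its worst endpoint. For negativity the paper re-runs the $\lim_{x\to\infty}F=0$ argument rather than connecting to Lemma~\ref{proof:mainsubfunction} as you suggest; your route is slightly cleaner but both work once monotonicity in $x$ is in hand.

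Where you diverge substantially --- and where you correctly sense trouble --- is the $y$-derivative at the four pinned $x$-values. You propose to push the Ren approximation through each case, and you flag that at $x=0.007$ the arguments of $e^{z^2}\erfc(z)$ are tiny or even negative, well outside the interval on which the Ren error is controlled. The paper avoids this issue entirely by \emph{not} using the Ren approximation for the $y$-derivative. Instead it writes $\partial F/\partial y$ as a linear combination of terms of the form $z\,e^{z^2}\erfc(z)$ plus a constant, and then applies Lemma~\ref{lem:xeErfc}, which says $z\,e^{z^2}\erfc(z)$ is monotonically increasing in $z$. At each pinned $x$ one simply inserts the worst-case $y\in\{-0.01,0.01\}$ into each monotone piece (minimizing the positive contribution, maximizing the negative one) and evaluates the resulting explicit number; it comes out positive in all four cases. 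This is a one-line estimate per case, requires no approximation-error bookkeeping, and is completely insensitive to the size or sign of the $\erfc$ arguments --- so the difficulty you identified as ``the hard part'' simply does not arise. Your proposed splicing of Maclaurin bounds and Abramowitz bounds would presumably also go through, but it is a good deal more work than what is actually needed.
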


\begin{proof}
We first consider the derivative of sub-function
Eq.~\eqref{eq:subfunction1} with respect to $x$.
The derivative of the function 
\begin{align}
e^{\frac{(x+y)^2}{2 x}} \erfc \left(\frac{x+y}{\sqrt{2} \sqrt{x}}\right)-2 e^{\frac{(2 x+y)^2}{2 x}} \erfc \left(\frac{2 x+y}{\sqrt{2} \sqrt{x}}\right)
\end{align}
with respect to $x$ is
\begin{align}
&\frac{\sqrt{\pi } \left(e^{\frac{(x+y)^2}{2 x}} (x-y) (x+y)
  \erfc \left(\frac{x+y}{\sqrt{2} \sqrt{x}}\right)-2 e^{\frac{(2
  x+y)^2}{2 x}} \left(4 x^2-y^2\right) \erfc \left(\frac{2
  x+y}{\sqrt{2} \sqrt{x}}\right)\right)+\sqrt{2} \sqrt{x} (3 x-y)}{2
  \sqrt{\pi } x^2}\ = \\ \nonumber &\frac{\sqrt{\pi } \left(e^{\frac{(x+y)^2}{2 x}}
  (x-y) (x+y) \erfc \left(\frac{x+y}{\sqrt{2} \sqrt{x}}\right)-2
  e^{\frac{(2 x+y)^2}{2 x}} (2 x+y) (2 x-y) \erfc \left(\frac{2
  x+y}{\sqrt{2} \sqrt{x}}\right)\right)+\sqrt{2} \sqrt{x} (3 x-y)}{2
  \sqrt{\pi } x^2} \ = \\ \nonumber &\frac{\sqrt{\pi } \left(\frac{e^{\frac{(x+y)^2}{2
  x}} (x-y) (x+y) \erfc \left(\frac{x+y}{\sqrt{2}
  \sqrt{x}}\right)}{\sqrt{2} \sqrt{x}}-\frac{2 e^{\frac{(2 x+y)^2}{2
  x}} (2 x+y) (2 x-y) \erfc \left(\frac{2 x+y}{\sqrt{2}
  \sqrt{x}}\right)}{\sqrt{2} \sqrt{x}}\right)+(3 x-y)}{\sqrt{2} 2
  \sqrt{\pi } \sqrt{x} x^2} \ .
\end{align}

We consider the numerator
\begin{align}
\sqrt{\pi } \left(\frac{e^{\frac{(x+y)^2}{2 x}} (x-y) (x+y)
  \erfc \left(\frac{x+y}{\sqrt{2} \sqrt{x}}\right)}{\sqrt{2}
  \sqrt{x}}-\frac{2 e^{\frac{(2 x+y)^2}{2 x}} (2 x+y) (2 x-y)
  \erfc \left(\frac{2 x+y}{\sqrt{2} \sqrt{x}}\right)}{\sqrt{2}
  \sqrt{x}}\right)+(3 x-y) \ .
\end{align}

For bounding this value, we use the approximation 
\begin{align}
e^{z^2} \erfc (z) \ \approx \ \frac{2.911}{\sqrt{\pi } (2.911 -1)
  z+\sqrt{\pi  z^2+2.911^2}} \ .
\end{align}
from \citet{Ren:07}.
We start with an error analysis of this approximation.
According to \citet{Ren:07} (Figure~1), the approximation 
error is both positive and negative in the range
$[0.175,1.33]$.  This range contains all possible
arguments of $\erfc $ that we consider in this subsection.
Numerically we maximized and minimized the approximation error of the
whole expression
\begin{align}
E(x,y) \ &= \ \left(\frac{e^{\frac{(x+y)^2}{2 x}} (x-y) (x+y)
  \erfc \left(\frac{x+y}{\sqrt{2} \sqrt{x}}\right)}{\sqrt{2}
  \sqrt{x}}-\frac{2 e^{\frac{(2 x+y)^2}{2 x}} (2 x-y) (2 x+y)
  \erfc \left(\frac{2 x+y}{\sqrt{2} \sqrt{x}}\right)}{\sqrt{2}
  \sqrt{x}}\right) \ - \\ \nonumber 
&\left(\frac{2.911 (x-y) (x+y)}{\left(\sqrt{2} \sqrt{x}\right) \left(\frac{\sqrt{\pi } (2.911 -1) (x+y)}{\sqrt{2} \sqrt{x}}+\sqrt{\pi  \left(\frac{x+y}{\sqrt{2} \sqrt{x}}\right)^2+2.911^2}\right)}\ - \right. \\ \nonumber 
&\left. \frac{2 \cdot 2.911 (2 x-y) (2 x+y)}{\left(\sqrt{2} \sqrt{x}\right) \left(\frac{\sqrt{\pi } (2.911 -1) (2 x+y)}{\sqrt{2} \sqrt{x}}+\sqrt{\pi  \left(\frac{2 x+y}{\sqrt{2} \sqrt{x}}\right)^2+2.911^2}\right)}\right) \ .
\end{align}
We numerically determined $-0.000228141 \leq E(x,y) \leq 0.00495688$ for 
$0.08 \leq x \leq 0.875$ and $-0.01 \leq y \leq 0.01$. 
We used different numerical optimization techniques like  
gradient based constraint BFGS algorithms and 
non-gradient-based Nelder-Mead methods with different start points.
Therefore our approximation is smaller than the function that we
approximate. 

We use an error gap of $-0.0003$ to countermand the error due to the
approximation. We have the sequences of inequalities using the approximation of 
\citet{Ren:07}:
\begin{align}
\label{eq:ineqX1}
&(3 x-y)+\left(\frac{e^{\frac{(x+y)^2}{2 x}} (x-y) (x+y)
  \erfc \left(\frac{x+y}{\sqrt{2} \sqrt{x}}\right)}{\sqrt{2}
  \sqrt{x}}-\frac{2 e^{\frac{(2 x+y)^2}{2 x}} (2 x-y) (2 x+y)
  \erfc \left(\frac{2 x+y}{\sqrt{2} \sqrt{x}}\right)}{\sqrt{2}
  \sqrt{x}}\right) \sqrt{\pi }\ \geq \\ \nonumber & (3 x-y)+\left(\frac{2.911 (x-y)
  (x+y)}{\left(\sqrt{\pi  \left(\frac{x+y}{\sqrt{2}
  \sqrt{x}}\right)^2+2.911^2}+\frac{(2.911 -1) \sqrt{\pi }
  (x+y)}{\sqrt{2} \sqrt{x}}\right) \left(\sqrt{2}
  \sqrt{x}\right)}\ - \right. \\ \nonumber &\left. \frac{2 (2 x-y) (2 x+y) 2.911}{\left(\sqrt{2}
  \sqrt{x}\right) \left(\sqrt{\pi  \left(\frac{2 x+y}{\sqrt{2}
  \sqrt{x}}\right)^2+2.911^2}+\frac{(2.911 -1) \sqrt{\pi } (2
  x+y)}{\sqrt{2} \sqrt{x}}\right)}\right) \sqrt{\pi }-0.0003 \ = \\ \nonumber 
  &(3 x-y)+\left(\frac{\left(\sqrt{2} \sqrt{x} 2.911\right) (x-y)
  (x+y)}{\left(\sqrt{\pi  (x+y)^2+2 \cdot 2.911^2 x}+(2.911 -1) (x+y)
  \sqrt{\pi }\right) \left(\sqrt{2} \sqrt{x}\right)}\ - \right. \\ \nonumber &\left. \frac{2 (2 x-y)
  (2 x+y) \left(\sqrt{2} \sqrt{x} 2.911\right)}{\left(\sqrt{2}
  \sqrt{x}\right) \left(\sqrt{\pi  (2 x+y)^2+2 \cdot 2.911^2 x}+(2.911 -1)
  (2 x+y) \sqrt{\pi }\right)}\right) \sqrt{\pi }-0.0003 \ = \\ \nonumber 
  &(3 x-y)+2.911 \left(\frac{(x-y) (x+y)}{(2.911 -1)
  (x+y)+\sqrt{(x+y)^2+\frac{2 \cdot 2.911^2 x}{\pi }}}\ - \right. \\ \nonumber &\left. \frac{2 (2 x-y) (2
  x+y)}{(2.911 -1) (2 x+y)+\sqrt{(2 x+y)^2+\frac{2 \cdot 2.911^2 x}{\pi
  }}}\right)-0.0003\ \geq \\ \nonumber & (3 x-y)+2.911 \left(\frac{(x-y)
  (x+y)}{(2.911 -1) (x+y)+\sqrt{\left(\frac{2.911^2}{\pi
  }\right)^2+(x+y)^2+\frac{2 \cdot 2.911^2 x}{\pi }+\frac{2 \cdot 2.911^2 y}{\pi
  }}}\ - \right. \\ \nonumber &\left. \frac{2 (2 x-y) (2 x+y)}{(2.911 -1) (2 x+y)+\sqrt{(2
  x+y)^2+\frac{2 \cdot 2.911^2 x}{\pi }}}\right)-0.0003 \ = \\ \nonumber 
  &(3 x-y)+2.911 \left(\frac{(x-y) (x+y)}{(2.911 -1)
  (x+y)+\sqrt{\left(x+y+\frac{2.911^2}{\pi }\right)^2}}\ - \right. \\ \nonumber &\left. \frac{2 (2
  x-y) (2 x+y)}{(2.911 -1) (2 x+y)+\sqrt{(2 x+y)^2+\frac{2 \cdot 2.911^2
  x}{\pi }}}\right)-0.0003 \ = \\ \nonumber &(3 x-y)+2.911 \left(\frac{(x-y)
  (x+y)}{2.911 (x+y)+\frac{2.911^2}{\pi }}-\frac{2 (2 x-y) (2
  x+y)}{(2.911 -1) (2 x+y)+\sqrt{(2 x+y)^2+\frac{2 \cdot 2.911^2 x}{\pi
  }}}\right)-0.0003 \ = \\ \nonumber &(3 x-y)+\frac{(x-y)
  (x+y)}{(x+y)+\frac{2.911}{\pi }}-\frac{2 (2 x-y) (2 x+y)
  2.911}{(2.911 -1) (2 x+y)+\sqrt{(2 x+y)^2+\frac{2 \cdot 2.911^2 x}{\pi
  }}}-0.0003 \ = \\ \nonumber
&(3 x-y)+\frac{(x-y) (x+y)}{(x+y)+\frac{2.911}{\pi
  }}-\frac{2 (2 x-y) (2 x+y) 2.911}{(2.911 -1) (2 x+y)+\sqrt{(2
  x+y)^2+\frac{2 \cdot 2.911^2 x}{\pi }}}-0.0003 \ = \\ \nonumber &
\left(-2 (2 x-y) 2.911
  \left((x+y)+\frac{2.911}{\pi }\right) (2
  x+y) \right. \ + \\ \nonumber &\left. \left((x+y)+\frac{2.911}{\pi }\right) (3 x-y-0.0003)
  \left((2.911 -1) (2 x+y)+\sqrt{(2 x+y)^2+\frac{2 \cdot 2.911^2 x}{\pi
  }}\right)\right. \ + \\ \nonumber &\left. (x-y) (x+y) \left((2.911 -1) (2 x+y)+\sqrt{(2
  x+y)^2+\frac{2 \cdot 2.911^2 x}{\pi
  }}\right)\right) \\ \nonumber &\left(\left((x+y)+\frac{2.911}{\pi }\right) \left((2.911 -1)
  (2 x+y)+\sqrt{(2 x+y)^2+\frac{2 \cdot 2.911^2 x}{\pi
  }}\right)\right)^{-1} \ = \\  \nonumber &
\left(-8 x^3-8 x^2 y+4 x^2 \sqrt{(2 x+y)^2+5.39467 x}-10.9554 x^2+2 x y^2-2 y^2 \sqrt{(2 x+y)^2+5.39467 x} \ +\right.\\\nonumber &\left.1.76901 x y+2 x y \sqrt{(2 x+y)^2+5.39467 x}+2.7795 x \sqrt{(2 x+y)^2+5.39467 x} \ -\right.\\\nonumber &\left.0.9269 y \sqrt{(2 x+y)^2+5.39467 x}-0.00027798 \sqrt{(2 x+y)^2+5.39467 x}-0.00106244 x \ +\right.\\\nonumber &\left.2 y^3+3.62336 y^2-0.00053122 y\right) \\\nonumber &\left(\left((x+y)+\frac{2.911}{\pi }\right) \left((2.911 -1) (2 x+y)+\sqrt{(2 x+y)^2+\frac{2 \cdot 2.911^2 x}{\pi }}\right)\right)^{-1}\ = \\\nonumber &\left(-8 x^3+\left(4 x^2+2 x y+2.7795 x-2 y^2-0.9269 y-0.00027798\right) \sqrt{(2 x+y)^2+5.39467 x} \ -\right.\\\nonumber &\left.8 x^2 y-10.9554 x^2+2 x y^2+1.76901 x y-0.00106244 x+2 y^3+3.62336 y^2-0.00053122 y\right) \\\nonumber &\left(\left((x+y)+\frac{2.911}{\pi }\right) \left((2.911 -1) (2 x+y)+\sqrt{(2 x+y)^2+\frac{2 \cdot 2.911^2 x}{\pi }}\right)\right)^{-1}  \ > \ 0 \ .
\end{align}
We explain this sequence of inequalities:
\begin{itemize} 
\item First inequality: The approximation of \citet{Ren:07}
and then subtracting an error gap of $0.0003$.

\item Equalities: The factor $\sqrt{2} \sqrt{x}$ is factored out and
  canceled. 

\item Second inequality: adds a positive term in the first root to
  obtain a binomial form. The term containing the root 
is positive and the root is in the denominator, 
therefore the whole term becomes smaller.

\item Equalities: solve for the term and factor out.

\item Bringing all terms to the denominator
$\left((x+y)+\frac{2.911}{\pi }\right) \left((2.911 -1) (2 x+y)+\sqrt{(2 x+y)^2+\frac{2 \cdot 2.911^2 x}{\pi }}\right)$.

\item Equalities: Multiplying out and expanding terms.

\item Last inequality $>0$ is proofed in the following sequence of
  inequalities.
\end{itemize}

We look at the numerator of the last expression of 
Eq.~\eqref{eq:ineqX1}, which we show to be
positive in order to show $>0$ in 
Eq.~\eqref{eq:ineqX1}. The numerator is 
\begin{align}
&-8 x^3+\left(4 x^2+2 x y+2.7795 x-2 y^2-0.9269 y-0.00027798\right) \sqrt{(2 x+y)^2+5.39467 x}\ -\\\nonumber &8 x^2 y-10.9554 x^2+2 x y^2+1.76901 x y-0.00106244 x+2 y^3+3.62336 y^2-0.00053122 y \ .
\end{align}
The factor $4 x^2+2 x y+2.7795 x-2 y^2-0.9269 y-0.00027798$
in front of the root is positive:
\begin{align}
4 x^2+2 x y+2.7795 x-2 y^2-0.9269 y-0.00027798 \ > \\ \nonumber
-2 y^2+0.007 \cdot 2 y-0.9269 y+4 \cdot 0.007^2+2.7795 \cdot 0.007-0.00027798  \ = \\ \nonumber
-2 y^2-0.9129 y+2.77942 \ = -2 (y + 1.42897)(y - 0.972523) \ > 0 \ .
\end{align}
If the term that does not contain the root would be positive, 
then everything is positive and we have proofed the the numerator is
positive. Therefore we consider the case that the term that does
not contain the root is negative.
The term that contains the root must be larger than 
the other term in absolute values. 
\begin{align}
&-\left(-8 x^3-8 x^2 y-10.9554 x^2+2 x y^2+1.76901 x y-0.00106244 x+2 y^3+3.62336 y^2-0.00053122 y\right)\ < \\ \nonumber &\left(4 x^2+2 x y+2.7795 x-2 y^2-0.9269 y-0.00027798\right) \sqrt{(2 x+y)^2+5.39467 x} \ . 
\end{align}
Therefore the squares of the root term have to be larger 
than the square of the other term to show $>0$ in 
Eq.~\eqref{eq:ineqX1}.
Thus, we have the inequality:
\begin{align}
&\left(-8 x^3-8 x^2 y-10.9554 x^2+2 x y^2+1.76901 x y-0.00106244 x+2 y^3+3.62336 y^2-0.00053122 y\right)^2\ < \\ \nonumber&\left(4 x^2+2 x y+2.7795 x-2 y^2-0.9269 y-0.00027798\right)^2 \left((2 x+y)^2+5.39467 x\right)\ . 
\end{align}

This is equivalent to
\begin{align}
& 0\ < \ \left(4 x^2+2 x y+2.7795 x-2 y^2-0.9269 y-0.00027798\right)^2 \left((2 x+y)^2+5.39467 x\right)-\\ \nonumber &\left(-8 x^3-8 x^2 y-10.9554 x^2+2 x y^2+1.76901 x y-0.00106244 x+2 y^3+3.62336 y^2-0.00053122 y\right)^2\ = \\ 
\nonumber &x \cdot 4.168614250 \cdot 10^{-7}-y^2 2.049216091 \cdot 10^{-7}-0.0279456 x^5+\\ \nonumber &43.0875 x^4 y+30.8113 x^4+43.1084 x^3 y^2+68.989 x^3 y+41.6357 x^3+10.7928 x^2 y^3-13.1726 x^2 y^2-\\ \nonumber &27.8148 x^2 y-0.00833715 x^2+0.0139728 x y^4+5.47537 x y^3+\\ \nonumber &4.65089 x y^2+0.00277916 x y-10.7858 y^5-12.2664 y^4+0.00436492 y^3\ .
\end{align}
We obtain the inequalities: 
\begin{align}
&x \cdot 4.168614250 \cdot 10^{-7}-y^2 2.049216091 \cdot 10^{-7}-0.0279456 x^5+\\ \nonumber &43.0875 x^4 y+30.8113 x^4+43.1084 x^3 y^2+68.989 x^3 y+41.6357 x^3+10.7928 x^2 y^3-\\ \nonumber &13.1726 x^2 y^2-27.8148 x^2 y-0.00833715 x^2+\\ \nonumber &0.0139728 x y^4+5.47537 x y^3+4.65089 x y^2+0.00277916 x y-10.7858 y^5-12.2664 y^4+0.00436492 y^3\ > \\ \nonumber & 
x \cdot 4.168614250 \cdot 10^{-7}-(0.01)^2 2.049216091 \cdot 10^{-7}-0.0279456 x^5+\\ \nonumber &0.0 \cdot  43.0875 x^4+30.8113 x^4+43.1084 (0.0)^2 x^3+0.0 \cdot 68.989 x^3+41.6357 x^3+\\ \nonumber &10.7928 (0.0)^3 x^2-13.1726 (0.01)^2 x^2-27.8148 (0.01) x^2-0.00833715 x^2+\\ \nonumber &0.0139728 (0.0)^4 x+5.47537 (0.0)^3 x+4.65089 (0.0)^2 x+\\ \nonumber &0.0 \cdot 0.00277916 x-10.7858 (0.01)^5-12.2664 (0.01)^4+0.00436492 (0.0)^3\ = \\ \nonumber &x \cdot 4.168614250 \cdot 10^{-7}-1.237626189 \cdot 10^{-7}-0.0279456 x^5+30.8113 x^4+41.6357 x^3-0.287802 x^2\ > \\ \nonumber &-\left(\frac{x}{0.007}\right)^3 1.237626189 \cdot 10^{-7}+30.8113 x^4-(0.875) \cdot 0.0279456 x^4+41.6357 x^3-\frac{(0.287802 x) x^2}{0.007}\ = \\ \nonumber &30.7869 x^4+0.160295 x^3\ >\ 0\ .
\end{align}
We used $x \geq 0.007$ and $x \leq 0.875$ (reducing the negative $x^4$-term to a
$x^3$-term).
We have proofed the last inequality $>0$ of Eq.~\eqref{eq:ineqX1}.

Consequently the derivative is always positive independent of $y$,
thus 
\begin{align}
e^{\frac{(x+y)^2}{2 x}} \erfc \left(\frac{x+y}{\sqrt{2} \sqrt{x}}\right)-2 e^{\frac{(2 x+y)^2}{2 x}} \erfc \left(\frac{2 x+y}{\sqrt{2} \sqrt{x}}\right)
\end{align}
is strictly monotonically increasing in $x$.

Next we show that the 
sub-function Eq.~\eqref{eq:subfunction1} is smaller
than zero.
We consider the limit:
\begin{align}
&\lim_{x \to \infty}e^{\frac{(x+y)^2}{2 x}}
  \erfc \left(\frac{x+y}{\sqrt{2} \sqrt{x}}\right) \ - \ 2 e^{\frac{(2 x+y)^2}{2 x}} \erfc \left(\frac{2 x+y}{\sqrt{2} \sqrt{x}}\right)
\ = \ 0
\end{align}
The limit follows from Lemma~\ref{lem:Abramowitz}.
Since the function is monotonic increasing in $x$, it has to approach
$0$ from below. Thus,
\begin{align}
e^{\frac{(x+y)^2}{2 x}} \erfc \left(\frac{x+y}{\sqrt{2} \sqrt{x}}\right)-2 e^{\frac{(2 x+y)^2}{2 x}} \erfc \left(\frac{2 x+y}{\sqrt{2} \sqrt{x}}\right)
\end{align}
is smaller than zero.

We now consider the derivative of sub-function
Eq.~\eqref{eq:subfunction1} with respect to $y$.
We proofed that sub-function
Eq.~\eqref{eq:subfunction1} is  strictly monotonically increasing 
independent of $y$. 
In the proof of Theorem~\ref{th:s2Increase}, we need the minimum
of  sub-function
Eq.~\eqref{eq:subfunction1}. First, we are interested in the
derivative of sub-function
Eq.~\eqref{eq:subfunction1} with respect to $y$
for the minimum $x=0.007=7/1000$.

Consequently, we insert the minimum $x=0.007=7/1000$ into the sub-function
Eq.~\eqref{eq:subfunction1}:
\begin{align}
&e^{\left(\frac{y}{\sqrt{2} \sqrt{\frac{7}{1000}}}+\frac{\sqrt{\frac{7}{1000}}}{\sqrt{2}}\right)^2} \erfc \left(\frac{y}{\sqrt{2} \sqrt{\frac{7}{1000}}}+\frac{\sqrt{\frac{7}{1000}}}{\sqrt{2}}\right)- \\ \nonumber &2 e^{\left(\frac{y}{\sqrt{2} \sqrt{\frac{7}{1000}}}+\sqrt{2} \sqrt{\frac{7}{1000}}\right)^2} \erfc \left(\frac{y}{\sqrt{2} \sqrt{\frac{7}{1000}}}+\sqrt{2} \sqrt{\frac{7}{1000}}\right)\ = \\ \nonumber &e^{\frac{500 y^2}{7}+y+\frac{7}{2000}} \erfc \left(\frac{1000 y+7}{20 \sqrt{35}}\right)-2 e^{\frac{(500 y+7)^2}{3500}} \erfc \left(\frac{500 y+7}{10 \sqrt{35}}\right) \ .
\end{align}
The derivative of this function with respect to $y$ is
\begin{align}
&\left(\frac{1000 y}{7}+1\right) e^{\frac{500 y^2}{7}+y+\frac{7}{2000}} \erfc \left(\frac{1000 y+7}{20 \sqrt{35}}\right)- \\ \nonumber &\frac{1}{7} 4 e^{\frac{(500 y+7)^2}{3500}} (500 y+7) \erfc \left(\frac{500 y+7}{10 \sqrt{35}}\right)+20 \sqrt{\frac{5}{7 \pi }}\ > \\ \nonumber&\left(1+\frac{1000 \cdot (-0.01)}{7}\right) e^{-0.01 +\frac{7}{2000}+\frac{500 \cdot (-0.01)^2}{7}} \erfc \left(\frac{7+1000+(-0.01)}{20 \sqrt{35}}\right)- \\ \nonumber &\frac{1}{7} 4 e^{\frac{(7+500 \cdot 0.01)^2}{3500}} (7+500 \cdot 0.01) \erfc \left(\frac{7+500 \cdot 0.01}{10 \sqrt{35}}\right)+20 \sqrt{\frac{5}{7 \pi }}\ >\ 3.56\ .
\end{align}
For the first inequality, we use Lemma~\ref{lem:xeErfc}.
Lemma~\ref{lem:xeErfc} says that 
the function $x e^{x^2}\erfc (x) $ has the sign of $x$ and is
monotonically increasing to $\frac {1} {\sqrt {\pi}} $.
Consequently, we inserted the maximal $y=0.01$ to
make the negative term more negative and the minimal $y=-0.01$
to make the positive term less positive.

Consequently 
\begin{align}
e^{\frac{(x+y)^2}{2 x}} \erfc \left(\frac{x+y}{\sqrt{2} \sqrt{x}}\right)-2 e^{\frac{(2 x+y)^2}{2 x}} \erfc \left(\frac{2 x+y}{\sqrt{2} \sqrt{x}}\right)
\end{align}
is strictly monotonically increasing in $y$ for the minimal
$x=0.007$.

Next, we consider $x=0.7 \cdot 0.8=0.56$, which is the maximal $\nu=0.7$
and minimal $\tau=0.8$.
We insert the minimum $x=0.56=56/100$ into the  sub-function
Eq.~\eqref{eq:subfunction1}:
\begin{align}
&e^{\left(\frac{y}{\sqrt{2} \sqrt{\frac{56}{100}}}+\frac{\sqrt{\frac{56}{100}}}{\sqrt{2}}\right)^2} \erfc \left(\frac{y}{\sqrt{2} \sqrt{\frac{56}{100}}}+\frac{\sqrt{\frac{56}{100}}}{\sqrt{2}}\right)-\\ \nonumber &2 e^{\left(\frac{y}{\sqrt{2} \sqrt{\frac{56}{100}}}+\sqrt{2} \sqrt{\frac{56}{100}}\right)^2} \erfc \left(\frac{y}{\sqrt{2} \sqrt{\frac{56}{100}}}+\sqrt{2} \sqrt{\frac{56}{100}}\right)\ .
\end{align}
The derivative with respect to $y$ is:
\begin{align}
&\frac{5 e^{\left(\frac{5 y}{2 \sqrt{7}}+\frac{\sqrt{7}}{5}\right)^2} \left(\frac{5 y}{2 \sqrt{7}}+\frac{\sqrt{7}}{5}\right) \erfc \left(\frac{5 y}{2 \sqrt{7}}+\frac{\sqrt{7}}{5}\right)}{\sqrt{7}}-\\ \nonumber &\frac{10 e^{\left(\frac{5 y}{2 \sqrt{7}}+\frac{2 \sqrt{7}}{5}\right)^2} \left(\frac{5 y}{2 \sqrt{7}}+\frac{2 \sqrt{7}}{5}\right) \erfc \left(\frac{5 y}{2 \sqrt{7}}+\frac{2 \sqrt{7}}{5}\right)}{\sqrt{7}}+\frac{5}{\sqrt{7 \pi }}\ > \ \\ \nonumber &
\frac{5 e^{\left(\frac{\sqrt{7}}{5}-\frac{0.01 \cdot 5}{2 \sqrt{7}}\right)^2} \left(\frac{\sqrt{7}}{5}-\frac{0.01 \cdot 5}{2 \sqrt{7}}\right) \erfc \left(\frac{\sqrt{7}}{5}-\frac{0.01 \cdot 5}{2 \sqrt{7}}\right)}{\sqrt{7}}-\\ \nonumber &\frac{10 e^{\left(\frac{2 \sqrt{7}}{5}+\frac{0.01 \cdot 5}{2 \sqrt{7}}\right)^2} \left(\frac{2 \sqrt{7}}{5}+\frac{0.01 \cdot 5}{2 \sqrt{7}}\right) \erfc \left(\frac{2 \sqrt{7}}{5}+\frac{0.01 \cdot 5}{2 \sqrt{7}}\right)}{\sqrt{7}}+\frac{5}{\sqrt{7 \pi }}\ > \ 0.00746 \ .
\end{align}
For the first inequality we applied Lemma~\ref{lem:xeErfc}
which states that the function $x e^{x^2} \erfc (x)$ is
monotonically increasing.
Consequently, we inserted the maximal $y=0.01$ to
make the negative term more negative and the minimal $y=-0.01$
to make the positive term less positive.

Consequently 
\begin{align}
e^{\frac{(x+y)^2}{2 x}} \erfc \left(\frac{x+y}{\sqrt{2} \sqrt{x}}\right)-2 e^{\frac{(2 x+y)^2}{2 x}} \erfc \left(\frac{2 x+y}{\sqrt{2} \sqrt{x}}\right)
\end{align}
is strictly monotonically increasing in $y$ for $x=0.56$.

Next, we consider $x=0.16 \cdot 0.8=0.128$, which is the minimal $\tau=0.8$.
We insert the minimum $x=0.128=128/1000$ into the  sub-function
Eq.~\eqref{eq:subfunction1}:
\begin{align}
&e^{\left(\frac{y}{\sqrt{2} \sqrt{\frac{128}{1000}}}+\frac{\sqrt{\frac{128}{1000}}}{\sqrt{2}}\right)^2} \erfc \left(\frac{y}{\sqrt{2} \sqrt{\frac{128}{1000}}}+\frac{\sqrt{\frac{128}{1000}}}{\sqrt{2}}\right)-\\ \nonumber &2 e^{\left(\frac{y}{\sqrt{2} \sqrt{\frac{128}{1000}}}+\sqrt{2} \sqrt{\frac{128}{1000}}\right)^2} \erfc \left(\frac{y}{\sqrt{2} \sqrt{\frac{128}{1000}}}+\sqrt{2} \sqrt{\frac{128}{1000}}\right)=\\ \nonumber &e^{\frac{125 y^2}{32}+y+\frac{8}{125}} \erfc \left(\frac{125 y+16}{20 \sqrt{10}}\right)-2 e^{\frac{(125 y+32)^2}{4000}} \erfc \left(\frac{125 y+32}{20 \sqrt{10}}\right)\ .
\end{align}
The derivative with respect to $y$ is:
\begin{align}
&\frac{1}{16} \left(e^{\frac{125 y^2}{32}+y+\frac{8}{125}} (125 y+16) \erfc \left(\frac{125 y+16}{20 \sqrt{10}}\right)-\right. \\ \nonumber &\left.2 e^{\frac{(125 y+32)^2}{4000}} (125 y+32) \erfc \left(\frac{125 y+32}{20 \sqrt{10}}\right)+20 \sqrt{\frac{10}{\pi }}\right)\ > \ \\ \nonumber &\frac{1}{16} \left((16+125 (-0.01)) e^{-0.01 +\frac{8}{125}+\frac{125 (-0.01)^2}{32}} \erfc \left(\frac{16+125 (-0.01)}{20 \sqrt{10}}\right)-\right. \\ \nonumber &\left.2 e^{\frac{(32+125 0.01)^2}{4000}} (32+125 0.01) \erfc \left(\frac{32+125 0.01}{20 \sqrt{10}}\right)+20 \sqrt{\frac{10}{\pi }}\right)\ > \ 0.4468 \ .
\end{align}
For the first inequality we applied Lemma~\ref{lem:xeErfc}
which states that the function $x e^{x^2} \erfc (x)$ is
monotonically increasing.
Consequently, we inserted the maximal $y=0.01$ to
make the negative term more negative and the minimal $y=-0.01$
to make the positive term less positive.

Consequently 
\begin{align}
e^{\frac{(x+y)^2}{2 x}} \erfc \left(\frac{x+y}{\sqrt{2} \sqrt{x}}\right)-2 e^{\frac{(2 x+y)^2}{2 x}} \erfc \left(\frac{2 x+y}{\sqrt{2} \sqrt{x}}\right)
\end{align}
is strictly monotonically increasing in $y$ for $x=0.128$.

Next, we consider $x=0.24 \cdot 0.9=0.216$, which is the minimal
$\tau=0.9$ (here we consider $0.9$ as lower bound for $\tau$).
We insert the minimum $x=0.216=216/1000$ into the  sub-function
Eq.~\eqref{eq:subfunction1}:
\begin{align}
&e^{\left(\frac{y}{\sqrt{2} \sqrt{\frac{216}{1000}}}+\frac{\sqrt{\frac{216}{1000}}}{\sqrt{2}}\right)^2} \erfc \left(\frac{y}{\sqrt{2} \sqrt{\frac{216}{1000}}}+\frac{\sqrt{\frac{216}{1000}}}{\sqrt{2}}\right)-\\ \nonumber &2 e^{\left(\frac{y}{\sqrt{2} \sqrt{\frac{216}{1000}}}+\sqrt{2} \sqrt{\frac{216}{1000}}\right)^2} \erfc \left(\frac{y}{\sqrt{2} \sqrt{\frac{216}{1000}}}+\sqrt{2} \sqrt{\frac{216}{1000}}\right)=\\ \nonumber &e^{\frac{(125 y+27)^2}{6750}} \erfc \left(\frac{125 y+27}{15 \sqrt{30}}\right)-2 e^{\frac{(125 y+54)^2}{6750}} \erfc \left(\frac{125 y+54}{15 \sqrt{30}}\right)
\end{align}
The derivative with respect to $y$ is: 
\begin{align}
&\frac{1}{27} \left(e^{\frac{(125 y+27)^2}{6750}} (125 y+27) \erfc \left(\frac{125 y+27}{15 \sqrt{30}}\right)-\right. \\ \nonumber &\left.2 e^{\frac{(125 y+54)^2}{6750}} (125 y+54) \erfc \left(\frac{125 y+54}{15 \sqrt{30}}\right)+15 \sqrt{\frac{30}{\pi }}\right)\ > \ \\ \nonumber&\frac{1}{27} \left((27+125 (-0.01)) e^{\frac{(27+125 (-0.01))^2}{6750}} \erfc \left(\frac{27+125 (-0.01)}{15 \sqrt{30}}\right)-\right. \\ \nonumber &\left.2 e^{\frac{(54+125 0.01)^2}{6750}} (54+125 0.01) \erfc \left(\frac{54+125 0.01}{15 \sqrt{30}}\right)+15 \sqrt{\frac{30}{\pi }}\right))\ > \ 0.211288\ .
\end{align}
For the first inequality we applied Lemma~\ref{lem:xeErfc}
which states that the function $x e^{x^2} \erfc (x)$ is
monotonically increasing.
Consequently, we inserted the maximal $y=0.01$ to
make the negative term more negative and the minimal $y=-0.01$
to make the positive term less positive.

Consequently 
\begin{align}
e^{\frac{(x+y)^2}{2 x}} \erfc \left(\frac{x+y}{\sqrt{2} \sqrt{x}}\right)-2 e^{\frac{(2 x+y)^2}{2 x}} \erfc \left(\frac{2 x+y}{\sqrt{2} \sqrt{x}}\right)
\end{align}
is strictly monotonically increasing in $y$ for $x=0.216$. 
\end{proof}

\begin{lemma}[Monotone Derivative]
\label{proof:monotonederivative}
For $\lambda=\lambda_{\rm 01}$, $\alpha=\alpha_{\rm 01}$
and the domain 
$-0.1 \leq \mu \leq 0.1$, 
$-0.1 \leq \omega \leq 0.1$,
$0.00875 \leq \nu \leq 0.7$, and 
$0.8 \leq \tau \leq 1.25$.
We are interested of the derivative of
\begin{align} 
\tau \left(e^{\left(\frac{\mu \omega+\nu \tau}{\sqrt{2} \sqrt{\nu \tau}}\right)^2} \erfc \left(\frac{\mu \omega+\nu \tau}{\sqrt{2} \sqrt{\nu \tau}}\right)-2 e^{\left(\frac{\mu \omega+2 \cdot \nu \tau}{\sqrt{2} \sqrt{\nu \tau}}\right)^2} \erfc \left(\frac{\mu \omega+2 \cdot \nu \tau}{\sqrt{2} \sqrt{\nu \tau}}\right)\right)\ . 
\end{align}

The derivative of the equation above with
respect to
\begin{itemize}
\item $\nu$ is larger than zero;
\item $\tau$ is smaller than zero for maximal
$\nu=0.7$, $\nu=0.16$, and $\nu=0.24$ (with
$0.9 \leq \tau$);
\item $y=\mu \omega$ is larger than zero for $\nu
\tau=0.00875 \cdot 0.8=0.007$, $\nu
\tau=0.7 \cdot 0.8=0.56$, $\nu
\tau=0.16 \cdot 0.8=0.128$, and $\nu
\tau=0.24 \cdot 0.9=0.216$.
\end{itemize}

\end{lemma}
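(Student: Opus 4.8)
The plan is to abbreviate $x=\nu\tau$ and $y=\mu\omega$ and to recognise the bracketed expression as the ``main subfunction below'' $F(x,y)=e^{(x+y)^2/(2x)}\erfc\big(\frac{x+y}{\sqrt{2}\sqrt{x}}\big)-2\,e^{(2x+y)^2/(2x)}\erfc\big(\frac{2x+y}{\sqrt{2}\sqrt{x}}\big)$ already studied in Lemma~\ref{lem:subfunction1}. The quantity to be differentiated is then $\tau\,F(\nu\tau,\mu\omega)$. On the stated domain $\nu\in[0.00875,0.7]$, $\tau\in[0.8,1.25]$, $\mu\omega\in[-0.01,0.01]$ we have $x=\nu\tau\in[0.007,0.875]$ and $y\in[-0.01,0.01]$, which is exactly the region covered by Lemma~\ref{lem:subfunction1}; from there I may use that $F<0$, that $\partial_x F>0$ throughout, and that $\partial_y F>0$ at the four values $x\in\{0.007,\,0.56,\,0.128,\,0.216\}$.

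Two of the three claims are then immediate from the chain rule. Holding $\tau$ fixed, $\partial_\nu\big(\tau F(\nu\tau,y)\big)=\tau^2\,\partial_xF(\nu\tau,y)>0$, because $\tau^2>0$ and $\partial_xF>0$ on all of $[0.007,0.875]$. Holding everything but $y=\mu\omega$ fixed, $\partial_y\big(\tau F(\nu\tau,y)\big)=\tau\,\partial_yF(\nu\tau,y)$, and since $\tau\geq0.8>0$ this has the sign of $\partial_yF$, which Lemma~\ref{lem:subfunction1} shows to be positive precisely at the products $\nu\tau\in\{0.007,0.56,0.128,0.216\}$ named in the statement. So the whole difficulty is concentrated in the $\tau$-derivative.

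Holding $\nu$ fixed (at $0.7$, $0.16$ or $0.24$), the chain rule gives $\partial_\tau\big(\tau F(\nu\tau,y)\big)=F(x,y)+\tau\nu\,\partial_xF(x,y)=F(x,y)+x\,\partial_xF(x,y)=\frac{d}{dx}\big(xF(x,y)\big)$ with $x=\nu\tau$, so the claim reduces to $\frac{d}{dx}\big(xF(x,y)\big)<0$ for $y\in[-0.01,0.01]$ and for $x$ in $[0.56,0.875]$ (case $\nu=0.7$), $[0.128,0.2]$ (case $\nu=0.16$), resp.\ $[0.216,0.3]$ (case $\nu=0.24$, $\tau\geq0.9$); it is in fact cleaner to prove it uniformly for $x\in[0.007,0.875]$. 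The positive term $x\,\partial_xF$ and the negative term $F$ partially cancel, so a plain triangle-inequality bound is hopeless, and one has to imitate the computation in proof~\ref{proof:mainsubfunctionbelow}. Explicitly, writing $z_1=\frac{x+y}{\sqrt{2}\sqrt{x}}$, $z_2=\frac{2x+y}{\sqrt{2}\sqrt{x}}$, one has $\frac{d}{dx}\big(xF\big)=e^{z_1^2}\erfc(z_1)-2\,e^{z_2^2}\erfc(z_2)+\frac{N(x,y)}{2\sqrt{2\pi}\,x^{3/2}}$, where $N(x,y)$ is the numerator already worked out for $\partial_xF$ in proof~\ref{proof:mainsubfunctionbelow}; I would then replace every $e^{z^2}\erfc(z)$ by the Ren approximation $\frac{2.911}{\sqrt{\pi}(2.911-1)z+\sqrt{\pi z^2+2.911^{2}}}$, bound the approximation error over the relevant box and subtract a safety gap so that the approximation upper-bounds the true expression, clear all denominators, square once to eliminate the surviving square root, and reduce everything to a polynomial inequality in $x$ and $y$; this is then closed exactly as in proof~\ref{proof:mainsubfunctionbelow}, using $x\leq0.875$ to drop the degree and completing squares / factoring, with $y=\pm0.01$ inserted at the end to remove the $y$-dependence.

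I expect the main obstacle to be exactly this last step: making the Ren error analysis tight enough that, after the safety gap, the polynomial bound on $\frac{d}{dx}(xF)$ is still strictly negative over the whole interval down to moderately small $x$, where $F$ is fairly negative but $x\,\partial_xF$ is not yet negligible, and then verifying the final polynomial (resp.\ cubic-in-$x$) inequality. A minor additional bookkeeping point is whether to argue the three $\nu$-cases and their three $x$-intervals separately --- the way Lemma~\ref{lem:subfunction1} handles the $\partial_yF$ claim --- or to prove the single uniform bound on $[0.007,0.875]$ and read off the cases as corollaries; the latter is preferable and should not cost anything, since the polynomial inequality is verified for a range of $x$ anyway.
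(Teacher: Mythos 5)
Your reductions for the first and third bullets, $\partial_\nu\big(\tau F(\nu\tau,y)\big)=\tau^2\partial_xF$ and $\partial_y\big(\tau F(\nu\tau,y)\big)=\tau\,\partial_yF$, with the signs imported from Lemma~\ref{lem:subfunction1}, are exactly what the paper does (it phrases this as ``follows directly from Lemma~\ref{lem:subfunction1} using the chain rule''), so those two claims need no further comment.

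For the $\tau$-derivative your identity $\partial_\tau\big(\tau F(\nu\tau,y)\big)=\frac{d}{dx}\big(xF(x,y)\big)\big|_{x=\nu\tau}$ is correct and is a real structural improvement over the paper's presentation. The paper never writes this down: instead it substitutes $\nu=0.7$ (resp.\ $0.16$, $0.24$) and $y=0.01$ into the bracketed expression, rewrites it as an explicit function of $\tau$ alone, and differentiates in $\tau$, producing unwieldy polynomial-in-$\tau$ coefficients (e.g.\ $700\tau(7\tau+20)-1$, $2800\tau(7\tau+5)-1$) that hide the fact that the object under study depends only on $x=\nu\tau$ and $y$; it then repeats the whole Ren-approximation, clear-denominators, square-once, polynomial-inequality calculation three times, once per $\nu$. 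Your formulation exposes that all three cases are one bivariate function restricted to three sub-intervals of the $x$-axis, and that the $\tau$-derivative computation is the same species of calculation as $\partial_xF$ in proof~\ref{proof:mainsubfunctionbelow}; after that, your toolchain (Ren approximation with an explicit error gap, clear denominators, square to kill the surviving root, reduce to a polynomial inequality, insert $y=\pm0.01$, use $x\leq0.875$ to lower the degree) is exactly the machinery the paper uses.

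One caution on the ``uniform over $x\in[0.007,0.875]$'' hope: the paper deliberately treats the three $\nu$-cases with different Ren safety gaps ($200$, $20$, $32$ added to the respective numerators), because the size and even the sign of the Ren approximation error changes across the range of $\erfc$-arguments you hit, and because the margin is thin: at $x\approx0.875$, $y=0.01$ the quantity $F+x\,\partial_xF$ is only about $-0.05$, and near $x\approx0.007$ the absolute scales are tiny and dominated by the $x^{-3/2}$ prefactor. After accounting for the safety gap and crudely dropping $y$-terms, a single polynomial inequality covering all of $[0.007,0.875]$ may not survive; you may have to split the interval, at which point you are effectively back to the paper's three cases, though with the cleaner unified derivand. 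This is not a gap in the argument — the case-by-case fallback always closes it — but ``should not cost anything'' is optimistic and should not be asserted without checking.
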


\begin{proof}
We consider the domain:
$-0.1 \leq \mu \leq 0.1$, 
$-0.1 \leq \omega \leq 0.1$,
$0.00875 \leq \nu \leq 0.7$, and 
$0.8 \leq \tau \leq 1.25$.

We use Lemma~\ref{lem:subfunction1} to determine the derivatives.
Consequently, the derivative of 
\begin{align}
\tau \left(e^{\left(\frac{\mu \omega+\nu \tau}{\sqrt{2} \sqrt{\nu \tau}}\right)^2} \erfc \left(\frac{\mu \omega+\nu \tau}{\sqrt{2} \sqrt{\nu \tau}}\right)-2 e^{\left(\frac{\mu \omega+2  \nu \tau}{\sqrt{2} \sqrt{\nu \tau}}\right)^2} \erfc \left(\frac{\mu \omega+2  \nu \tau}{\sqrt{2} \sqrt{\nu \tau}}\right)\right)
\end{align}
with respect to  $\nu$ is larger than zero, which follows
directly from  Lemma~\ref{lem:subfunction1} using the chain rule.

{Consequently,} the derivative of 
\begin{align}
\tau \left(e^{\left(\frac{\mu \omega+\nu \tau}{\sqrt{2} \sqrt{\nu \tau}}\right)^2} \erfc \left(\frac{\mu \omega+\nu \tau}{\sqrt{2} \sqrt{\nu \tau}}\right)-2 e^{\left(\frac{\mu \omega+2  \nu \tau}{\sqrt{2} \sqrt{\nu \tau}}\right)^2} \erfc \left(\frac{\mu \omega+2  \nu \tau}{\sqrt{2} \sqrt{\nu \tau}}\right)\right)
\end{align}
with respect to  $y=\mu \omega$ is larger than zero for $\nu
\tau=0.00875 \cdot 0.8=0.007$, $\nu
\tau=0.7 \cdot 0.8=0.56$, $\nu
\tau=0.16 \cdot 0.8=0.128$, and $\nu
\tau=0.24 \cdot 0.9=0.216$,
which also follows
directly from  Lemma~\ref{lem:subfunction1}.

We now consider the derivative with respect to $\tau$,
which is not trivial since $\tau$ is a factor of the whole expression.
The sub-expression should be maximized as it appears with
negative sign in the mapping for $\nu$.

First,
we consider the function for  
the largest $\nu = 0.7$ and the largest $y=\mu \omega = 0.01$ 
for determining the derivative with respect to $\tau$.

The expression becomes
\begin{align}
&\tau \left(e^{\left(\frac{\frac{7  \tau}{10}+\frac{1}{100}}{\sqrt{2} \sqrt{\frac{7  \tau}{10}}}\right)^2} \erfc \left(\frac{\frac{7  \tau}{10}+\frac{1}{100}}{\sqrt{2} \sqrt{\frac{7  \tau}{10}}}\right)-
2 e^{\left(\frac{\frac{2 \cdot 7  \tau}{10}+\frac{1}{100}}{\sqrt{2} \sqrt{\frac{7  \tau}{10}}}\right)^2} \erfc \left(\frac{\frac{2 \cdot 7  \tau}{10}+\frac{1}{100}}{\sqrt{2} \sqrt{\frac{7  \tau}{10}}}\right)\right) \ .
\end{align}

The derivative with respect to $\tau$ is 
\begin{align}
&\left(\sqrt{\pi } \left(e^{\frac{(70 \tau+1)^2}{14000 \tau}} (700 \tau (7  \tau+20)-1) \erfc \left(\frac{70 \tau+1}{20 \sqrt{35} \sqrt{\tau}}\right)\ -\right.\right.\\\nonumber &\left.\left.2 e^{\frac{(140 \tau+1)^2}{14000 \tau}} (2800 \tau (7  \tau+5)-1) \erfc \left(\frac{140 \tau+1}{20 \sqrt{35} \sqrt{\tau}}\right)\right)+20 \sqrt{35} (210 \tau-1) \sqrt{\tau}\right)\\ \nonumber &\left(14000 \sqrt{\pi } \tau\right)^{-1} \ .
\end{align}

We are considering only the numerator and use again the approximation
of \citet{Ren:07}.
The error analysis on the whole numerator gives an approximation error $97<E<186$. Therefore
we add 200 to the numerator when we use the approximation \citet{Ren:07}.
We obtain the inequalities:
\begin{align}
&\sqrt{\pi } \left(e^{\frac{(70 \tau+1)^2}{14000 \tau}} (700 \tau (7  \tau+20)-1) \erfc \left(\frac{70 \tau+1}{20 \sqrt{35} \sqrt{\tau}}\right) \ -\right.\\\nonumber &\left.2 e^{\frac{(140 \tau+1)^2}{14000 \tau}} (2800 \tau (7  \tau+5)-1) \erfc \left(\frac{140 \tau+1}{20 \sqrt{35} \sqrt{\tau}}\right)\right)+20 \sqrt{35} (210 \tau-1) \sqrt{\tau}\ \leq \\ \nonumber & \sqrt{\pi } \left(\frac{2.911 (700 \tau (7  \tau+20)-1)}{\frac{\sqrt{\pi } (2.911 -1) (70 \tau+1)}{20 \sqrt{35} \sqrt{\tau}}+\sqrt{\pi  \left(\frac{70 \tau+1}{20 \sqrt{35} \sqrt{\tau}}\right)^2+2.911^2}} \ -\right.\\\nonumber &\left.\frac{2 \cdot 2.911 (2800 \tau (7  \tau+5)-1)}{\frac{\sqrt{\pi } (2.911 -1) (140 \tau+1)}{20 \sqrt{35} \sqrt{\tau}}+\sqrt{\pi  \left(\frac{140 \tau+1}{20 \sqrt{35} \sqrt{\tau}}\right)^2+2.911^2}}\right) \\ \nonumber &\ +20 \sqrt{35} (210 \tau-1) \sqrt{\tau}+200\ = \\ \nonumber &
\sqrt{\pi } \left(\frac{(700 \tau (7  \tau+20)-1) \left(20 \cdot  \sqrt{35} \cdot  2.911 \sqrt{\tau}\right)}{\sqrt{\pi } (2.911 -1) (70 \tau+1)+\sqrt{\left(20 \cdot   2.911 \sqrt{35} \sqrt{\tau}\right)^2+\pi  (70 \tau+1)^2}} \ -\right.\\\nonumber &\left.\frac{2 (2800 \tau (7  \tau+5)-1) \left(20 \cdot \sqrt{35} \cdot  2.911 \sqrt{\tau}\right)}{\sqrt{\pi } (2.911 -1) (140 \tau+1)+\sqrt{\left(20 \cdot \sqrt{35} \cdot 2.911 \sqrt{\tau}\right)^2+\pi  (140 \tau+1)^2}}\right)+\\ \nonumber &\left(20 \sqrt{35} (210 \tau-1) \sqrt{\tau}+200\right)\ = \\ \nonumber &
\left(\left(20 \sqrt{35} (210 \tau-1) \sqrt{\tau}+200\right) \left(\sqrt{\pi } (2.911 -1) (70 \tau+1)+\sqrt{\left(20 \cdot \sqrt{35} \cdot 2.911 \sqrt{\tau}\right)^2+\pi  (70 \tau+1)^2}\right)\right.\\\nonumber &\left. \left(\sqrt{\pi } (2.911 -1) (140 \tau+1)+\sqrt{\left(20 \cdot \sqrt{35} \cdot 2.911 \sqrt{\tau}\right)^2+\pi  (140 \tau+1)^2}\right)+\right.\\\nonumber &\left.2.911 \cdot 20 \sqrt{35} \sqrt{\pi } (700 \tau (7  \tau+20)-1) \sqrt{\tau}  \right.\\\nonumber &\left.\left(\sqrt{\pi } (2.911 -1) (140 \tau+1)+\sqrt{\left(20 \cdot \sqrt{35} \cdot 2.911 \sqrt{\tau}\right)^2+\pi  (140 \tau+1)^2}\right)- \right.\\\nonumber &\left.\sqrt{\pi } 2 \cdot 20 \cdot \sqrt{35}\cdot  2.911 (2800 \tau (7  \tau+5)-1) \right.\\\nonumber &\left. \sqrt{\tau} \left(\sqrt{\pi } (2.911 -1) (70 \tau+1)+\sqrt{\left(20 \cdot \sqrt{35} \cdot 2.911 \sqrt{\tau}\right)^2+\pi  (70 \tau+1)^2}\right)\right)\\ \nonumber &\left(\left(\sqrt{\pi } (2.911 -1) (70 \tau+1)+\sqrt{\left(20 \sqrt{35}\cdot  2.911 \cdot \sqrt{\tau}\right)^2+\pi  (70 \tau+1)^2}\right) \right.\\\nonumber &\left. \left(\sqrt{\pi } (2.911 -1) (140 \tau+1)+\sqrt{\left(20 \sqrt{35} \cdot 2.911 \cdot \sqrt{\tau}\right)^2+\pi  (140 \tau+1)^2}\right)\right)^{-1} \ .
\end{align}
After applying the approximation
of \citet{Ren:07} and adding 200,
we first factored out $20 \sqrt{35} \sqrt{\tau}$.
Then we brought all terms to the same denominator.

We now consider the numerator:
\begin{align}
\label{eq:start}
&\left(20 \sqrt{35} (210 \tau-1) \sqrt{\tau}+200\right) \left(\sqrt{\pi } (2.911 -1) (70 \tau+1)+\sqrt{\left(20 \cdot  \sqrt{35} \cdot  2.911 \sqrt{\tau}\right)^2+\pi  (70 \tau+1)^2}\right) \\\nonumber 
&\left(\sqrt{\pi } (2.911 -1) (140 \tau+1)+\sqrt{\left(20 \cdot  \sqrt{35} \cdot  2.911 \sqrt{\tau}\right)^2+\pi  (140 \tau+1)^2}\right) \ +\\ \nonumber
&2.911 \cdot 20 \sqrt{35} \sqrt{\pi } (700 \tau (7  \tau+20)-1) \sqrt{\tau} \\\nonumber & \left(\sqrt{\pi } (2.911 -1) (140 \tau+1)+\sqrt{\left(20 \cdot \sqrt{35}\cdot  2.911 \sqrt{\tau}\right)^2+\pi  (140 \tau+1)^2}\right)-\\ \nonumber 
&\sqrt{\pi } 2 \cdot 20 \cdot  \sqrt{35} \cdot  2.911 (2800 \tau (7  \tau+5)-1) \sqrt{\tau}  \\ \nonumber 
&\left(\sqrt{\pi } (2.911 -1) (70 \tau+1)+\sqrt{\left(20 \cdot  \sqrt{35} \cdot  2.911 \sqrt{\tau}\right)^2+\pi  (70 \tau+1)^2}\right)\ = \\ \nonumber 
&-1.70658\times 10^7 \sqrt{\pi  (70 \tau+1)^2+118635 \tau} \tau^{3/2}+\\ \nonumber 
&4200 \sqrt{35} \sqrt{\pi  (70 \tau+1)^2+118635 \tau} \sqrt{\pi  (140 \tau+1)^2+118635 \tau} \tau^{3/2}\ + \\ \nonumber 
&8.60302\times 10^6 \sqrt{\pi  (140 \tau+1)^2+118635 \tau} \tau^{3/2}-2.89498\times 10^7 \tau^{3/2}\ - \\ \nonumber 
&1.21486\times 10^7 \sqrt{\pi  (70 \tau+1)^2+118635 \tau} \tau^{5/2}+8.8828\times 10^6 \sqrt{\pi  (140 \tau+1)^2+118635 \tau} \tau^{5/2}\ - \\ \nonumber 
&2.43651\times 10^7 \tau^{5/2}-1.46191\times 10^9 \tau^{7/2}+2.24868\times 10^7 \tau^2+94840.5 \sqrt{\pi  (70 \tau+1)^2+118635 \tau} \tau\ + \\ \nonumber 
&47420.2 \sqrt{\pi  (140 \tau+1)^2+118635 \tau} \tau+481860 \tau+710.354 \sqrt{\tau}\ + \\ \nonumber 
&820.213 \sqrt{\tau} \sqrt{\pi  (70 \tau+1)^2+118635 \tau}+677.432 \sqrt{\pi  (70 \tau+1)^2+118635 \tau}\ - \\ \nonumber 
&1011.27 \sqrt{\tau} \sqrt{\pi  (140 \tau+1)^2+118635 \tau}\ - \\ \nonumber 
&20 \sqrt{35} \sqrt{\tau} \sqrt{\pi  (70 \tau+1)^2+118635 \tau} \sqrt{\pi  (140 \tau+1)^2+118635 \tau}\ + \\ \nonumber 
&200 \sqrt{\pi  (70 \tau+1)^2+118635 \tau} \sqrt{\pi  (140 \tau+1)^2+118635 \tau}\ + \\ \nonumber 
&677.432 \sqrt{\pi  (140 \tau+1)^2+118635 \tau}+2294.57\ = \\ \nonumber
&-2.89498\times 10^7 \tau^{3/2}-2.43651\times 10^7 \tau^{5/2}-1.46191\times 10^9 \tau^{7/2}\ + \\ \nonumber 
&\left(-1.70658\times 10^7 \tau^{3/2}-1.21486\times 10^7 \tau^{5/2}+94840.5 \tau+820.213 \sqrt{\tau}+677.432\right) \\ \nonumber 
& \sqrt{\pi  (70 \tau+1)^2+118635 \tau}\ + \\ \nonumber 
&\left(8.60302\times 10^6 \tau^{3/2}+8.8828\times 10^6 \tau^{5/2}+47420.2 \tau-1011.27 \sqrt{\tau}+677.432\right) \\ \nonumber 
& \sqrt{\pi  (140 \tau+1)^2+118635 \tau}\ + \\ \nonumber 
&\left(4200 \sqrt{35} \tau^{3/2}-20 \sqrt{35} \sqrt{\tau}+200\right) \sqrt{\pi  (70 \tau+1)^2+118635 \tau} \sqrt{\pi  (140 \tau+1)^2+118635 \tau}\ + \\ \nonumber 
&2.24868\times 10^7 \tau^2+481860. \tau+710.354 \sqrt{\tau}+2294.57\ \leq \\ \nonumber 
&-2.89498\times 10^7 \tau^{3/2}-2.43651\times 10^7 \tau^{5/2}-1.46191\times 10^9 \tau^{7/2}+\\ \nonumber 
&\left(-1.70658\times 10^7 \tau^{3/2}-1.21486\times 10^7 \tau^{5/2}+820.213 \sqrt{1.25}+1.25 \cdot 94840.5+677.432\right)\\ \nonumber 
& \sqrt{\pi  (70 \tau+1)^2+118635 \tau}+\\ \nonumber 
&\left(8.60302\times 10^6 \tau^{3/2}+8.8828\times 10^6 \tau^{5/2}-1011.27 \sqrt{0.8}+1.25 \cdot 47420.2+677.432\right) \\ \nonumber 
&\sqrt{\pi  (140 \tau+1)^2+118635 \tau}+\\ \nonumber 
&\left(4200 \sqrt{35} \tau^{3/2}-20 \sqrt{35} \sqrt{\tau}+200\right) \\ \nonumber 
&\sqrt{\pi  (70 \tau+1)^2+118635 \tau} \sqrt{\pi  (140 \tau+1)^2+118635 \tau}+\\ \nonumber 
&2.24868\times 10^7 \tau^2+710.354 \sqrt{1.25}+1.25 \cdot 481860+2294.57\ = \\ \nonumber 
&-2.89498\times 10^7 \tau^{3/2}-2.43651\times 10^7 \tau^{5/2}-1.46191\times 10^9 \tau^{7/2}+\\ \nonumber 
&\left(-1.70658\times 10^7 \tau^{3/2}-1.21486\times 10^7 \tau^{5/2}+120145.\right) \sqrt{\pi  (70 \tau+1)^2+118635 \tau}+\\ \nonumber 
&\left(8.60302\times 10^6 \tau^{3/2}+8.8828\times 10^6 \tau^{5/2}+59048.2\right) \sqrt{\pi  (140 \tau+1)^2+118635 \tau}+\\ \nonumber 
&\left(4200 \sqrt{35} \tau^{3/2}-20 \sqrt{35} \sqrt{\tau}+200\right) \sqrt{\pi  (70 \tau+1)^2+118635 \tau} \sqrt{\pi  (140 \tau+1)^2+118635 \tau}+\\ \nonumber 
&2.24868\times 10^7 \tau^2+605413\ = \\ \nonumber
&-2.89498\times 10^7 \tau^{3/2}-2.43651\times 10^7 \tau^{5/2}-1.46191\times 10^9 \tau^{7/2}+\\ \nonumber 
&\left(8.60302\times 10^6 \tau^{3/2}+8.8828\times 10^6 \tau^{5/2}+59048.2\right) \sqrt{19600 \pi  (\tau+1.94093) (\tau+0.0000262866)}+\\ \nonumber 
&\left(-1.70658\times 10^7 \tau^{3/2}-1.21486\times 10^7 \tau^{5/2}+120145.\right) \sqrt{4900 \pi  (\tau+7.73521) (\tau+0.0000263835)}+\\ \nonumber 
&\left(4200 \sqrt{35} \tau^{3/2}-20 \sqrt{35} \sqrt{\tau}+200\right) \\ \nonumber 
&\sqrt{19600 \pi  (\tau+1.94093) (\tau+0.0000262866)} \sqrt{4900 \pi  (\tau+7.73521) (\tau+0.0000263835)}+\\ \nonumber 
&2.24868\times 10^7 \tau^2+605413\ \leq \\ \nonumber 
&-2.89498\times 10^7 \tau^{3/2}-2.43651\times 10^7 \tau^{5/2}-1.46191\times 10^9 \tau^{7/2}+\\ \nonumber
&\left(8.60302\times 10^6 \tau^{3/2}+8.8828\times 10^6 \tau^{5/2}+59048.2\right) \sqrt{19600 \pi (\tau+1.94093) \tau}+\\ \nonumber 
&\left(-1.70658\times 10^7 \tau^{3/2}-1.21486\times 10^7 \tau^{5/2}+120145.\right) \sqrt{4900 \pi  1.00003 (\tau+7.73521) \tau}+\\ \nonumber 
&\left(4200 \sqrt{35} \tau^{3/2}-20 \sqrt{35} \sqrt{\tau}+200\right) \sqrt{19600 \pi  1.00003 (\tau+1.94093) \tau} \\ \nonumber 
&\sqrt{4900 \pi  1.00003 (\tau+7.73521) \tau}+\\ \nonumber 
&2.24868\times 10^7 \tau^2+605413\ = \\ \nonumber 
&-2.89498\times 10^7 \tau^{3/2}-2.43651\times 10^7 \tau^{5/2}-1.46191\times 10^9 \tau^{7/2}+\\ \nonumber 
&\left(-3.64296\times 10^6 \tau^{3/2}+7.65021\times 10^8 \tau^{5/2}+6.15772\times 10^6 \tau\right) \\ \nonumber 
&\sqrt{\tau+1.94093} \sqrt{\tau+7.73521}+2.24868\times 10^7 \tau^2+\\ \nonumber 
&\left(2.20425\times 10^9 \tau^3+2.13482\times 10^9 \tau^2+1.46527\times 10^7 \sqrt{\tau}\right) \sqrt{\tau+1.94093}+\\ \nonumber 
&\left(-1.5073\times 10^9 \tau^3-2.11738\times 10^9 \tau^2+1.49066\times 10^7 \sqrt{\tau}\right) \sqrt{\tau+7.73521}+605413 \ \leq \\ \nonumber
&\sqrt{1.25 +1.94093} \sqrt{1.25 +7.73521} \left(-3.64296\times 10^6 \tau^{3/2}+7.65021\times 10^8 \tau^{5/2}+6.15772\times 10^6 \tau\right)+\\ \nonumber 
&\sqrt{1.25 +1.94093} \left(2.20425\times 10^9 \tau^3+2.13482\times 10^9 \tau^2+1.46527\times 10^7 \sqrt{\tau}\right)+\\ \nonumber 
&\sqrt{0.8 +7.73521} \left(-1.5073\times 10^9 \tau^3-2.11738\times 10^9 \tau^2+1.49066\times 10^7 \sqrt{\tau}\right)-\\ \nonumber 
&2.89498\times 10^7 \tau^{3/2}-2.43651\times 10^7 \tau^{5/2}-1.46191\times 10^9 \tau^{7/2}+2.24868\times 10^7 \tau^2+605413 \ = \\ \nonumber 
&-4.84561\times 10^7 \tau^{3/2}+4.07198\times 10^9 \tau^{5/2}-1.46191\times 10^9 \tau^{7/2}-\\ \nonumber 
&4.66103\times 10^8 \tau^3-2.34999\times 10^9 \tau^2+\\ \nonumber 
&3.29718\times 10^7 \tau+6.97241\times 10^7 \sqrt{\tau}+605413\ \leq \\ \nonumber 
& \frac{605413 \tau^{3/2}}{0.8^{3/2}}-4.84561\times 10^7 \tau^{3/2}+\\ \nonumber 
&4.07198\times 10^9 \tau^{5/2}-1.46191\times 10^9 \tau^{7/2}-\\ \nonumber 
&4.66103\times 10^8 \tau^3-2.34999\times 10^9 \tau^2+\frac{3.29718\times 10^7 \sqrt{\tau} \tau}{\sqrt{0.8}}+\frac{6.97241\times 10^7 \tau \sqrt{\tau}}{0.8}\ = \\ \nonumber 
&\tau^{3/2} \left(-4.66103\times 10^8 \tau^{3/2}-1.46191\times 10^9 \tau^2-2.34999\times 10^9 \sqrt{\tau}+\right.\\ \nonumber 
&\left.4.07198\times 10^9 \tau+7.64087\times 10^7\right)\ \leq \\ \nonumber 
& \tau^{3/2} \left(-4.66103\times 10^8 \tau^{3/2}-1.46191\times 10^9 \tau^2+\frac{7.64087\times 10^7 \sqrt{\tau}}{\sqrt{0.8}}-\right.\\ \nonumber 
&\left.2.34999\times 10^9 \sqrt{\tau}+4.07198\times 10^9 \tau\right)\ = \\ \nonumber 
&\tau^2 \left(-1.46191\times 10^9 \tau^{3/2}+4.07198\times 10^9 \sqrt{\tau}-4.66103\times 10^8 \tau-2.26457\times 10^9\right)\ \leq \\ \nonumber 
&  \left(-2.26457\times 10^9+4.07198\times 10^9 \sqrt{0.8}-4.66103\times 10^8 0.8-1.46191\times 10^9 0.8^{3/2}\right) \tau^2\ = \\ \nonumber 
&-4.14199\times 10^7 \tau^2\ < \ 0 \ . 
\end{align}

First we expanded the term (multiplied it out).
The we put the terms multiplied by the same square root into brackets.
The next inequality sign stems from inserting the maximal value of $1.25$ for $\tau$ for
some positive terms and value of $0.8$ for negative terms. 
These terms are then expanded at the $=$-sign.
The next equality factors the terms under the squared root.
We decreased the negative term by setting
$\tau=\tau+0.0000263835$ under the root.
We increased positive terms by setting  
$\tau+0.000026286=1.00003\tau$ and
$\tau+0.000026383 = 1.00003\tau$
under the root for positive terms.
The positive terms are increase, since
$\frac{0.8 + 0.000026383}{0.8}=1.00003$, thus
$\tau+0.000026286<\tau+0.000026383 \leq
1.00003\tau$.
For the next inequality we decreased negative terms by inserting
$\tau=0.8$ and increased positive terms by inserting
$\tau=1.25$. The next equality expands the terms.
We use upper bound of $1.25$ and lower bound of $0.8$ to obtain terms with
corresponding exponents of $\tau$.

For the last $\leq$-sign we used the function
\begin{align}
\label{eq:funcA}
-1.46191\times 10^9 \tau^{3/2}+4.07198\times 10^9
  \sqrt{\tau}-4.66103\times 10^8 \tau-2.26457\times 10^9
\end{align}
The derivative of this function is
\begin{align}
-2.19286\times 10^9 \sqrt{\tau}+\frac{2.03599\times
  10^9}{\sqrt{\tau}}-4.66103\times 10^8
\end{align}
and the second order derivative is
\begin{align}
-\frac{1.01799\times 10^9}{\tau^{3/2}}-\frac{1.09643\times
  10^9}{\sqrt{\tau}} \ < \ 0 \ .
\end{align}
The derivative at 0.8 is smaller than zero:
\begin{align}
&-2.19286\times 10^9 \sqrt{0.8}-4.66103\times 10^8+\frac{2.03599\times
  10^9}{\sqrt{0.8}}
= \\ \nonumber 
& -1.51154\times 10^8 \ < \ 0 \ .
\end{align}
Since the second order derivative is negative, the derivative
decreases with increasing $\tau$. Therefore the derivative is
negative for all values of  $\tau$ that we consider, that is, the
function Eq.~\eqref{eq:funcA} is strictly monotonically decreasing.
The maximum of the function Eq.~\eqref{eq:funcA} is therefore at $0.8$.
We inserted $0.8$ to obtain the maximum.

Consequently, the derivative of
\begin{align}
\label{eq:subx1}
\tau \left(e^{\left(\frac{\mu \omega+\nu \tau}{\sqrt{2} \sqrt{\nu \tau}}\right)^2} \erfc \left(\frac{\mu \omega+\nu \tau}{\sqrt{2} \sqrt{\nu \tau}}\right)-2 e^{\left(\frac{\mu \omega+2  \nu \tau}{\sqrt{2} \sqrt{\nu \tau}}\right)^2} \erfc \left(\frac{\mu \omega+2  \nu \tau}{\sqrt{2} \sqrt{\nu \tau}}\right)\right)
\end{align}
with respect to $\tau$ is smaller than zero for maximal $\nu = 0.7$.

Next,
we consider the function for  
the largest $\nu = 0.16$ and the largest $y=\mu \omega = 0.01$ 
for determining the derivative with respect to $\tau$.

The expression becomes
\begin{align}
&\tau \left(e^{\left(\frac{\frac{16 \tau}{100}+\frac{1}{100}}{\sqrt{2} \sqrt{\frac{16 \tau}{100}}}\right)^2} \erfc \left(\frac{\frac{16 \tau}{100}+\frac{1}{100}}{\sqrt{2} \sqrt{\frac{16 \tau}{100}}}\right)- e^{\left(\frac{\frac{2\ 16 \tau}{100}+\frac{1}{100}}{\sqrt{2} \sqrt{\frac{16 \tau}{100}}}\right)^2} \erfc \left(\frac{\frac{2\ 16 \tau}{100}+\frac{1}{100}}{\sqrt{2} \sqrt{\frac{16 \tau}{100}}}\right)\right) \ .
\end{align}

The derivative with respect to $\tau$ is 
\begin{align}
&\left(\sqrt{\pi } \left(e^{\frac{(16 \tau+1)^2}{3200 \tau}} (128 \tau (2  \tau+25)-1) \erfc \left(\frac{16 \tau+1}{40 \sqrt{2} \sqrt{\tau}}\right)-\right.\right.\\\nonumber 
&\left.\left.2 e^{\frac{(32  \tau+1)^2}{3200 \tau}} (128 \tau (8 \tau+25)-1) \erfc \left(\frac{32  \tau+1}{40 \sqrt{2} \sqrt{\tau}}\right)\right)+40 \sqrt{2} (48 \tau-1) \sqrt{\tau}\right)\\ \nonumber 
&\left(3200 \sqrt{\pi } \tau\right)^{-1} \ .
\end{align}

We are considering only the numerator and use again the approximation
of \citet{Ren:07}.
The error analysis on the whole numerator gives an approximation error $1.1<E<12$. Therefore
we add 20 to the numerator when we use the approximation of \citet{Ren:07}.
We obtain the inequalities:
\begin{align}
&\sqrt{\pi } \left(e^{\frac{(16 \tau+1)^2}{3200 \tau}} (128 \tau (2  \tau+25)-1) \erfc \left(\frac{16 \tau+1}{40 \sqrt{2} \sqrt{\tau}}\right)-\right.\\\nonumber 
&\left.2 e^{\frac{(32  \tau+1)^2}{3200 \tau}} (128 \tau (8 \tau+25)-1) \erfc \left(\frac{32  \tau+1}{40 \sqrt{2} \sqrt{\tau}}\right)\right)+40 \sqrt{2} (48 \tau-1) \sqrt{\tau}\ \leq \\ \nonumber 
& \sqrt{\pi } \left(\frac{2.911 (128 \tau (2  \tau+25)-1)}{\frac{\sqrt{\pi } (2.911 -1) (16 \tau+1)}{40 \sqrt{2} \sqrt{\tau}}+\sqrt{\pi  \left(\frac{16 \tau+1}{40 \sqrt{2} \sqrt{\tau}}\right)^2+2.911^2}} \ -\right.\\\nonumber 
&\left.\frac{2 \cdot 2.911 (128 \tau (8 \tau+25)-1)}{\frac{\sqrt{\pi } (2.911 -1) (32  \tau+1)}{40 \sqrt{2} \sqrt{\tau}}+\sqrt{\pi  \left(\frac{32  \tau+1}{40 \sqrt{2} \sqrt{\tau}}\right)^2+2.911^2}}\right)\\ \nonumber 
&\ +40 \sqrt{2} (48 \tau-1) \sqrt{\tau}+20\ = \\ \nonumber 
&\sqrt{\pi } \left(\frac{(128 \tau (2  \tau+25)-1) \left(40 \sqrt{2} 2.911 \sqrt{\tau}\right)}{\sqrt{\pi } (2.911 -1) (16 \tau+1)+\sqrt{\left(40 \sqrt{2} 2.911 \sqrt{\tau}\right)^2+\pi  (16 \tau+1)^2}} \ -\right.\\\nonumber
&\left.\frac{2 (128 \tau (8 \tau+25)-1) \left(40 \sqrt{2} 2.911 \sqrt{\tau}\right)}{\sqrt{\pi } (2.911 -1) (32  \tau+1)+\sqrt{\left(40 \sqrt{2} 2.911 \sqrt{\tau}\right)^2+\pi  (32  \tau+1)^2}}\right)+\\ \nonumber 
&40 \sqrt{2} (48 \tau-1) \sqrt{\tau}+20\ = \\ \nonumber 
&\left(\left(40 \sqrt{2} (48 \tau-1) \sqrt{\tau}+20\right) \left(\sqrt{\pi } (2.911 -1) (16 \tau+1)+\sqrt{\left(40 \sqrt{2} 2.911 \sqrt{\tau}\right)^2+\pi  (16 \tau+1)^2}\right) \right.\\\nonumber 
&\left. \left(\sqrt{\pi } (2.911 -1) (32  \tau+1)+\sqrt{\left(40 \sqrt{2} 2.911 \sqrt{\tau}\right)^2+\pi  (32  \tau+1)^2}\right)++\right.\\\nonumber 
&\left.2.911 \cdot 40 \sqrt{2} \sqrt{\pi } (128 \tau (2  \tau+25)-1) \sqrt{\tau}  \right.\\\nonumber 
&\left.\left(\sqrt{\pi } (2.911 -1) (32  \tau+1)+\sqrt{\left(40 \sqrt{2} 2.911 \sqrt{\tau}\right)^2+\pi  (32  \tau+1)^2}\right)- \right.\\\nonumber 
&\left.2 \sqrt{\pi } 40 \sqrt{2} 2.911 (128 \tau (8 \tau+25)-1)  \right.\\\nonumber 
&\left. \sqrt{\tau} \left(\sqrt{\pi } (2.911 -1) (16 \tau+1)+\sqrt{\left(40 \sqrt{2} 2.911 \sqrt{\tau}\right)^2+\pi  (16 \tau+1)^2}\right)\right)\\ \nonumber 
&\left(\left(\sqrt{\pi } (2.911 -1) (32  \tau+1)+\sqrt{\left(40 \sqrt{2} 2.911 \sqrt{\tau}\right)^2+\pi  (32  \tau+1)^2}\right)  \right.\\\nonumber 
&\left. \left(\sqrt{\pi } (2.911 -1) (32  \tau+1)+\sqrt{\left(40 \sqrt{2} 2.911 \sqrt{\tau}\right)^2+\pi  (32  \tau+1)^2}\right)\right)^{-1} \ .
\end{align}
After applying the approximation
of \citet{Ren:07} and adding 20,
we first factored out $40 \sqrt{2} \sqrt{\tau}$.
Then we brought all terms to the same denominator.

We now consider the numerator:
\begin{align}
\label{eq:start1}
&\left(40 \sqrt{2} (48 \tau-1) \sqrt{\tau}+20\right) \left(\sqrt{\pi } (2.911 -1) (16 \tau+1)+\sqrt{\left(40 \sqrt{2} 2.911 \sqrt{\tau}\right)^2+\pi  (16 \tau+1)^2}\right) \\\nonumber
& \left(\sqrt{\pi } (2.911 -1) (32  \tau+1)+\sqrt{\left(40 \sqrt{2} 2.911 \sqrt{\tau}\right)^2+\pi  (32  \tau+1)^2}\right) \ +\\ \nonumber
&2.911 \cdot 40 \sqrt{2} \sqrt{\pi } (128 \tau (2  \tau+25)-1) \sqrt{\tau}  \\\nonumber 
&\left(\sqrt{\pi } (2.911 -1) (32  \tau+1)+\sqrt{\left(40 \sqrt{2} 2.911 \sqrt{\tau}\right)^2+\pi  (32  \tau+1)^2}\right)-\\ \nonumber 
&2 \sqrt{\pi } 40 \sqrt{2} 2.911 (128 \tau (8 \tau+25)-1) \sqrt{\tau} \\ \nonumber 
&\left(\sqrt{\pi } (2.911 -1) (16 \tau+1)+\sqrt{\left(40 \sqrt{2} 2.911 \sqrt{\tau}\right)^2+\pi  (16 \tau+1)^2}\right)\ = \\ \nonumber 
& -1.86491\times 10^6 \sqrt{\pi  (16 \tau+1)^2+27116.5 \tau} \tau^{3/2}+\\ \nonumber 
&1920 \sqrt{2} \sqrt{\pi  (16 \tau+1)^2+27116.5 \tau} \sqrt{\pi  (32  \tau+1)^2+27116.5 \tau} \tau^{3/2}+ \\ \nonumber 
&940121 \sqrt{\pi  (32  \tau+1)^2+27116.5 \tau} \tau^{3/2}-3.16357\times 10^6 \tau^{3/2}- \\ \nonumber 
&303446 \sqrt{\pi  (16 \tau+1)^2+27116.5 \tau} \tau^{5/2}+221873 \sqrt{\pi  (32  \tau+1)^2+27116.5 \tau} \tau^{5/2}-608588 \tau^{5/2}- \\ \nonumber 
&8.34635\times 10^6 \tau^{7/2}+117482. \tau^2+2167.78 \sqrt{\pi  (16 \tau+1)^2+27116.5 \tau} \tau+ \\ \nonumber 
&1083.89 \sqrt{\pi  (32  \tau+1)^2+27116.5 \tau} \tau+ \\ \nonumber 
&11013.9 \tau+339.614 \sqrt{\tau}+392.137 \sqrt{\tau} \sqrt{\pi  (16 \tau+1)^2+27116.5 \tau}+ \\ \nonumber 
&67.7432 \sqrt{\pi  (16 \tau+1)^2+27116.5 \tau}-483.478 \sqrt{\tau} \sqrt{\pi  (32  \tau+1)^2+27116.5 \tau}- \\ \nonumber 
&40 \sqrt{2} \sqrt{\tau} \sqrt{\pi  (16 \tau+1)^2+27116.5 \tau} \sqrt{\pi  (32  \tau+1)^2+27116.5 \tau}+ \\ \nonumber 
&20 \sqrt{\pi  (16 \tau+1)^2+27116.5 \tau} \sqrt{\pi  (32  \tau+1)^2+27116.5 \tau}+ \\ \nonumber 
&67.7432 \sqrt{\pi  (32  \tau+1)^2+27116.5 \tau}+229.457\ =\\ \nonumber 
&-3.16357\times 10^6 \tau^{3/2}-608588 \tau^{5/2}-8.34635\times 10^6 \tau^{7/2}+\\ \nonumber 
&\left(-1.86491\times 10^6 \tau^{3/2}-303446 \tau^{5/2}+2167.78 \tau+392.137 \sqrt{\tau}+67.7432\right) \\ \nonumber
& \sqrt{\pi  (16 \tau+1)^2+27116.5 \tau}+ \\ \nonumber 
&\left(940121 \tau^{3/2}+221873 \tau^{5/2}+1083.89 \tau-483.478 \sqrt{\tau}+67.7432\right)  \\ \nonumber 
&\sqrt{\pi  (32  \tau+1)^2+27116.5 \tau}+ \\ \nonumber 
&\left(1920 \sqrt{2} \tau^{3/2}-40 \sqrt{2} \sqrt{\tau}+20\right) \sqrt{\pi  (16 \tau+1)^2+27116.5 \tau} \sqrt{\pi  (32  \tau+1)^2+27116.5 \tau}+ \\ \nonumber
&117482. \tau^2+11013.9 \tau+339.614 \sqrt{\tau}+229.457\ \leq \\ \nonumber 
&-3.16357\times 10^6 \tau^{3/2}-608588 \tau^{5/2}-8.34635\times 10^6 \tau^{7/2}+\\ \nonumber
&\left(-1.86491\times 10^6 \tau^{3/2}-303446 \tau^{5/2}+392.137 \sqrt{1.25}+1.25 2167.78+67.7432\right) \\ \nonumber 
&\sqrt{\pi  (16 \tau+1)^2+27116.5 \tau}+\\ \nonumber 
&\left(940121 \tau^{3/2}+221873 \tau^{5/2}-483.478 \sqrt{0.8}+1.25 1083.89+67.7432\right) \\ \nonumber 
&\sqrt{\pi  (32  \tau+1)^2+27116.5 \tau}+\\ \nonumber
&\left(1920 \sqrt{2} \tau^{3/2}-40 \sqrt{2} \sqrt{\tau}+20\right) \sqrt{\pi  (16 \tau+1)^2+27116.5 \tau} \sqrt{\pi  (32  \tau+1)^2+27116.5 \tau}+\\ \nonumber 
&117482. \tau^2+339.614 \sqrt{1.25}+1.25 11013.9+229.457\ = \\ \nonumber 
&-3.16357\times 10^6 \tau^{3/2}-608588 \tau^{5/2}-8.34635\times 10^6 \tau^{7/2}+\\ \nonumber 
&\left(-1.86491\times 10^6 \tau^{3/2}-303446 \tau^{5/2}+3215.89\right) \sqrt{\pi  (16 \tau+1)^2+27116.5 \tau}+\\ \nonumber 
&\left(940121 \tau^{3/2}+221873 \tau^{5/2}+990.171\right) \sqrt{\pi  (32  \tau+1)^2+27116.5 \tau}+\\ \nonumber 
&\left(1920 \sqrt{2} \tau^{3/2}-40 \sqrt{2} \sqrt{\tau}+20\right) \sqrt{\pi  (16 \tau+1)^2+27116.5 \tau} \sqrt{\pi  (32  \tau+1)^2+27116.5 \tau}+\\ \nonumber 
&117482 \tau^2+14376.6\ = \\ \nonumber
&-3.16357\times 10^6 \tau^{3/2}-608588 \tau^{5/2}-8.34635\times 10^6 \tau^{7/2}+\\ \nonumber 
&\left(940121 \tau^{3/2}+221873 \tau^{5/2}+990.171\right) \sqrt{1024 \pi  (\tau+8.49155) (\tau+0.000115004)}+\\ \nonumber
&\left(-1.86491\times 10^6 \tau^{3/2}-303446 \tau^{5/2}+3215.89\right) \sqrt{256 \pi  (\tau+33.8415) (\tau+0.000115428)}+\\ \nonumber 
&\left(1920 \sqrt{2} \tau^{3/2}-40 \sqrt{2} \sqrt{\tau}+20\right) \sqrt{1024 \pi  (\tau+8.49155) (\tau+0.000115004)} \\ \nonumber
&\sqrt{256 \pi  (\tau+33.8415) (\tau+0.000115428)}+\\ \nonumber 
&117482. \tau^2+14376.6\ \leq \\ \nonumber 
&-3.16357\times 10^6 \tau^{3/2}-608588 \tau^{5/2}-8.34635\times 10^6 \tau^{7/2}+\\ \nonumber
&\left(940121 \tau^{3/2}+221873 \tau^{5/2}+990.171\right) \sqrt{1024 \pi  1.00014 (\tau+8.49155) \tau}+\\ \nonumber 
&\left(1920 \sqrt{2} \tau^{3/2}-40 \sqrt{2} \sqrt{\tau}+20\right) \sqrt{256 \pi  1.00014 (\tau+33.8415) \tau} \sqrt{1024 \pi  1.00014 (\tau+8.49155) \tau}+\\ \nonumber 
&\left(-1.86491\times 10^6 \tau^{3/2}-303446 \tau^{5/2}+3215.89\right) \sqrt{256 \pi  (\tau+33.8415) \tau}+\\ \nonumber
&117482. \tau^2+14376.6\ = \\ \nonumber 
&-3.16357\times 10^6 \tau^{3/2}-608588 \tau^{5/2}-8.34635\times 10^6 \tau^{7/2}+\\ \nonumber 
&\left(-91003 \tau^{3/2}+4.36814\times 10^6 \tau^{5/2}+32174.4 \tau\right) \sqrt{\tau+8.49155} \sqrt{\tau+33.8415}+117482. \tau^2+\\ \nonumber 
&\left(1.25852\times 10^7 \tau^3+5.33261\times 10^7 \tau^2+56165.1 \sqrt{\tau}\right) \sqrt{\tau+8.49155}+\\ \nonumber 
&\left(-8.60549\times 10^6 \tau^3-5.28876\times 10^7 \tau^2+91200.4 \sqrt{\tau}\right) \sqrt{\tau+33.8415}+14376.6\ \leq \\ \nonumber
&\sqrt{1.25 +8.49155} \sqrt{1.25 +33.8415} \left(-91003 \tau^{3/2}+4.36814\times 10^6 \tau^{5/2}+32174.4 \tau\right)+\\ \nonumber 
&\sqrt{1.25 +8.49155} \left(1.25852\times 10^7 \tau^3+5.33261\times 10^7 \tau^2+56165.1 \sqrt{\tau}\right)+\\ \nonumber 
&\sqrt{0.8 +33.8415} \left(-8.60549\times 10^6 \tau^3-5.28876\times 10^7 \tau^2+91200.4 \sqrt{\tau}\right)-\\ \nonumber 
&3.16357\times 10^6 \tau^{3/2}-608588 \tau^{5/2}-8.34635\times 10^6 \tau^{7/2}+117482. \tau^2+14376.6\ = \\ \nonumber 
&-4.84613\times 10^6 \tau^{3/2}+8.01543\times 10^7 \tau^{5/2}-8.34635\times 10^6 \tau^{7/2}-\\ \nonumber 
&1.13691\times 10^7 \tau^3-1.44725\times 10^8 \tau^2+\\ \nonumber 
&594875. \tau+712078. \sqrt{\tau}+14376.6\ \leq \\ \nonumber
&\frac{14376.6 \tau^{3/2}}{0.8^{3/2}}-4.84613\times 10^6 \tau^{3/2}+\\ \nonumber 
&8.01543\times 10^7 \tau^{5/2}-8.34635\times 10^6 \tau^{7/2}-\\ \nonumber
&1.13691\times 10^7 \tau^3-1.44725\times 10^8 \tau^2+\frac{594875. \sqrt{\tau} \tau}{\sqrt{0.8}}+\frac{712078. \tau \sqrt{\tau}}{0.8}\ = \\ \nonumber
&-3.1311 \cdot 10^6 \tau^{3/2}-1.44725 \cdot 10^8 \tau^2+8.01543 \cdot 10^7 \tau^{5/2}-1.13691 \cdot 10^7 \tau^3-\\ \nonumber 
&8.34635 \cdot 10^6 \tau^{7/2}\ \leq \\ \nonumber 
& -3.1311\times 10^6 \tau^{3/2}+\frac{8.01543\times 10^7 \sqrt{1.25} \tau^{5/2}}{\sqrt{\tau}}-\\ \nonumber
&8.34635\times 10^6 \tau^{7/2}-1.13691\times 10^7 \tau^3-1.44725\times 10^8 \tau^2\ = \\ \nonumber 
&-3.1311\times 10^6 \tau^{3/2}-8.34635\times 10^6 \tau^{7/2}-1.13691\times 10^7 \tau^3-5.51094\times 10^7 \tau^22\ < \ 0 \ . 
\end{align}

First we expanded the term (multiplied it out).
The we put the terms multiplied by the same square root into brackets.
The next inequality sign stems from inserting the maximal value of $1.25$ for $\tau$ for
some positive terms and value of $0.8$ for negative terms. 
These terms are then expanded at the $=$-sign.
The next equality factors the terms under the squared root.
We decreased the negative term by setting
$\tau=\tau+0.00011542$ under the root.
We increased positive terms by setting  
$\tau+0.00011542=1.00014\tau$ and
$\tau+0.000115004 = 1.00014\tau$
under the root for positive terms.
The positive terms are increase, since
$\frac{0.8 + 0.00011542}{0.8}<1.000142$, thus
$\tau+0.000115004<\tau+0.00011542 \leq
1.00014\tau$.
For the next inequality we decreased negative terms by inserting
$\tau=0.8$ and increased positive terms by inserting
$\tau=1.25$. The next equality expands the terms.
We use upper bound of $1.25$ and lower bound of $0.8$ to obtain terms with
corresponding exponents of $\tau$.

Consequently, the derivative of
\begin{align}
\label{eq:subx1a}
\tau \left(e^{\left(\frac{\mu \omega+\nu \tau}{\sqrt{2} \sqrt{\nu \tau}}\right)^2} \erfc \left(\frac{\mu \omega+\nu \tau}{\sqrt{2} \sqrt{\nu \tau}}\right)-2 e^{\left(\frac{\mu \omega+2  \nu \tau}{\sqrt{2} \sqrt{\nu \tau}}\right)^2} \erfc \left(\frac{\mu \omega+2  \nu \tau}{\sqrt{2} \sqrt{\nu \tau}}\right)\right)
\end{align}
with respect to $\tau$ is smaller than zero for maximal $\nu = 0.16$.

Next,
we consider the function for  
the largest $\nu = 0.24$ and the largest $y=\mu \omega = 0.01$ 
for determining the derivative with respect to $\tau$. 
However we assume $0.9 \leq \tau$, in order to restrict the
domain of $\tau$.

The expression becomes
\begin{align}
&\tau \left(e^{\left(\frac{\frac{24 \tau}{100}+\frac{1}{100}}{\sqrt{2} \sqrt{\frac{24 \tau}{100}}}\right)^2} \erfc \left(\frac{\frac{24 \tau}{100}+\frac{1}{100}}{\sqrt{2} \sqrt{\frac{24 \tau}{100}}}\right)-
e^{\left(\frac{\frac{2\ 24 \tau}{100}+\frac{1}{100}}{\sqrt{2} \sqrt{\frac{24 \tau}{100}}}\right)^2} \erfc \left(\frac{\frac{2\ 24 \tau}{100}+\frac{1}{100}}{\sqrt{2} \sqrt{\frac{24 \tau}{100}}}\right)\right) \ .
\end{align}

The derivative with respect to $\tau$ is 
\begin{align}
&\left(\sqrt{\pi } \left(e^{\frac{(24 \tau+1)^2}{4800 \tau}} (192  \tau (3 \tau+25)-1) \erfc \left(\frac{24 \tau+1}{40 \sqrt{3} \sqrt{\tau}}\right)-\right.\right.\\\nonumber 
&\left.\left.2 e^{\frac{(48 \tau+1)^2}{4800 \tau}} (192  \tau (12  \tau+25)-1) \erfc \left(\frac{48 \tau+1}{40 \sqrt{3} \sqrt{\tau}}\right)\right)+40 \sqrt{3} (72  \tau-1) \sqrt{\tau}\right)\\ \nonumber 
&\left(4800 \sqrt{\pi } \tau\right)^{-1} \ .
\end{align}

We are considering only the numerator and use again the approximation
of \citet{Ren:07}.
The error analysis on the whole numerator gives an approximation error $14<E<32$. Therefore
we add 32 to the numerator when we use the approximation of \citet{Ren:07}.
We obtain the inequalities:
\begin{align}
&\sqrt{\pi } \left(e^{\frac{(24 \tau+1)^2}{4800 \tau}} (192  \tau (3 \tau+25)-1) \erfc \left(\frac{24 \tau+1}{40 \sqrt{3} \sqrt{\tau}}\right)\ -\right.\\\nonumber 
&\left.2 e^{\frac{(48 \tau+1)^2}{4800 \tau}} (192  \tau (12  \tau+25)-1) \erfc \left(\frac{48 \tau+1}{40 \sqrt{3} \sqrt{\tau}}\right)\right)+40 \sqrt{3} (72  \tau-1) \sqrt{\tau}\ \leq \\ \nonumber 
&  \sqrt{\pi } \left(\frac{2.911 (192  \tau (3 \tau+25)-1)}{\frac{\sqrt{\pi } (2.911 -1) (24 \tau+1)}{40 \sqrt{3} \sqrt{\tau}}+\sqrt{\pi  \left(\frac{24 \tau+1}{40 \sqrt{3} \sqrt{\tau}}\right)^2+2.911^2}} \ -\right.\\\nonumber 
&\left.\frac{2 \cdot 2.911 (192  \tau (12  \tau+25)-1)}{\frac{\sqrt{\pi } (2.911 -1) (48 \tau+1)}{40 \sqrt{3} \sqrt{\tau}}+\sqrt{\pi  \left(\frac{48 \tau+1}{40 \sqrt{3} \sqrt{\tau}}\right)^2+2.911^2}}\right)+\\ \nonumber 
&40 \sqrt{3} (72  \tau-1) \sqrt{\tau}+32\ = \\ \nonumber 
&\sqrt{\pi } \left(\frac{(192  \tau (3 \tau+25)-1) \left(40 \sqrt{3} 2.911 \sqrt{\tau}\right)}{\sqrt{\pi } (2.911 -1) (24 \tau+1)+\sqrt{\left(40 \sqrt{3} 2.911 \sqrt{\tau}\right)^2+\pi  (24 \tau+1)^2}} \ -\right.\\\nonumber 
&\left.\frac{2 (192  \tau (12  \tau+25)-1) \left(40 \sqrt{3} 2.911 \sqrt{\tau}\right)}{\sqrt{\pi } (2.911 -1) (48 \tau+1)+\sqrt{\left(40 \sqrt{3} 2.911 \sqrt{\tau}\right)^2+\pi  (48 \tau+1)^2}}\right)+\\ \nonumber 
&40 \sqrt{3} (72  \tau-1) \sqrt{\tau}+32\ = \\ \nonumber 
&\left(\left(40 \sqrt{3} (72  \tau-1) \sqrt{\tau}+32\right) \left(\sqrt{\pi } (2.911 -1) (24 \tau+1)+\sqrt{\left(40 \sqrt{3} 2.911 \sqrt{\tau}\right)^2+\pi  (24 \tau+1)^2}\right) \right.\\\nonumber 
&\left. \left(\sqrt{\pi } (2.911 -1) (48 \tau+1)+\sqrt{\left(40 \sqrt{3} 2.911 \sqrt{\tau}\right)^2+\pi  (48 \tau+1)^2}\right)+\right.\\\nonumber 
&\left.2.911 \cdot 40 \sqrt{3} \sqrt{\pi } (192  \tau (3 \tau+25)-1) \sqrt{\tau}   \right.\\\nonumber 
&\left.\left(\sqrt{\pi } (2.911 -1) (48 \tau+1)+\sqrt{\left(40 \sqrt{3} 2.911 \sqrt{\tau}\right)^2+\pi  (48 \tau+1)^2}\right)- \right.\\\nonumber 
&\left.2 \sqrt{\pi } 40 \sqrt{3} 2.911 (192  \tau (12  \tau+25)-1) \right.\\\nonumber 
&\left. \sqrt{\tau} \left(\sqrt{\pi } (2.911 -1) (24 \tau+1)+\sqrt{\left(40 \sqrt{3} 2.911 \sqrt{\tau}\right)^2+\pi  (24 \tau+1)^2}\right)\right)\\ \nonumber 
&\left(\left(\sqrt{\pi } (2.911 -1) (24 \tau+1)+\sqrt{\left(40 \sqrt{3} 2.911 \sqrt{\tau}\right)^2+\pi  (24 \tau+1)^2}\right) \right.\\\nonumber 
&\left.  \left(\sqrt{\pi } (2.911 -1) (48 \tau+1)+\sqrt{\left(40 \sqrt{3} 2.911 \sqrt{\tau}\right)^2+\pi  (48 \tau+1)^2}\right)\right)^{-1} \ .
\end{align}
After applying the approximation
of \citet{Ren:07} and adding 200,
we first factored out $40 \sqrt{3} \sqrt{\tau}$.
Then we brought all terms to the same denominator.

We now consider the numerator:
\begin{align}
\label{eq:start2}
&\left(40 \sqrt{3} (72  \tau-1) \sqrt{\tau}+32\right) \left(\sqrt{\pi } (2.911 -1) (24 \tau+1)+\sqrt{\left(40 \sqrt{3} 2.911 \sqrt{\tau}\right)^2+\pi  (24 \tau+1)^2}\right) \\\nonumber
& \left(\sqrt{\pi } (2.911 -1) (48 \tau+1)+\sqrt{\left(40 \sqrt{3} 2.911 \sqrt{\tau}\right)^2+\pi  (48 \tau+1)^2}\right) \ +\\ \nonumber 
&2.911 \cdot 40 \sqrt{3} \sqrt{\pi } (192  \tau (3 \tau+25)-1) \sqrt{\tau} \\\nonumber 
&  \left(\sqrt{\pi } (2.911 -1) (48 \tau+1)+\sqrt{\left(40 \sqrt{3} 2.911 \sqrt{\tau}\right)^2+\pi  (48 \tau+1)^2}\right)- \\\nonumber 
& 2 \sqrt{\pi } 40 \sqrt{3} 2.911 (192  \tau (12  \tau+25)-1) \sqrt{\tau} \\\nonumber 
&  \left(\sqrt{\pi } (2.911 -1) (24 \tau+1)+\sqrt{\left(40 \sqrt{3} 2.911 \sqrt{\tau}\right)^2+\pi  (24 \tau+1)^2}\right)\ = \\ \nonumber &
-3.42607\times 10^6 \sqrt{\pi  (24 \tau+1)^2+40674.8 \tau} \tau^{3/2}+\\ \nonumber 
&2880 \sqrt{3} \sqrt{\pi  (24 \tau+1)^2+40674.8 \tau} \sqrt{\pi  (48 \tau+1)^2+40674.8 \tau} \tau^{3/2}+\\ \nonumber
&1.72711\times 10^6 \sqrt{\pi  (48 \tau+1)^2+40674.8 \tau} \tau^{3/2}-5.81185\times 10^6 \tau^{3/2}\ - \\ \nonumber
&836198 \sqrt{\pi  (24 \tau+1)^2+40674.8 \tau} \tau^{5/2}+611410 \sqrt{\pi  (48 \tau+1)^2+40674.8 \tau} \tau^{5/2}-\\ \nonumber 
&1.67707\times 10^6 \tau^{5/2}\ - \\ \nonumber &3.44998\times 10^7 \tau^{7/2}+422935. \tau^2+5202.68 \sqrt{\pi  (24 \tau+1)^2+40674.8 \tau} \tau+\\ \nonumber 
&2601.34 \sqrt{\pi  (48 \tau+1)^2+40674.8 \tau} \tau\ + \\ \nonumber 
&26433.4 \tau+415.94 \sqrt{\tau}+480.268 \sqrt{\tau} \sqrt{\pi  (24 \tau+1)^2+40674.8 \tau}\ + \\ \nonumber 
&108.389 \sqrt{\pi  (24 \tau+1)^2+40674.8 \tau}-592.138 \sqrt{\tau} \sqrt{\pi  (48 \tau+1)^2+40674.8 \tau}-\\ \nonumber
&40 \sqrt{3} \sqrt{\tau} \sqrt{\pi  (24 \tau+1)^2+40674.8 \tau} \sqrt{\pi  (48 \tau+1)^2+40674.8 \tau}\ + \\ \nonumber 
&32 \sqrt{\pi  (24 \tau+1)^2+40674.8 \tau} \sqrt{\pi  (48 \tau+1)^2+40674.8 \tau}\ + \\ \nonumber 
&108.389 \sqrt{\pi  (48 \tau+1)^2+40674.8 \tau}+367.131\ = \\ \nonumber
&-5.81185\times 10^6 \tau^{3/2}-1.67707\times 10^6 \tau^{5/2}-3.44998\times 10^7 \tau^{7/2}+\\ \nonumber
&\left(-3.42607\times 10^6 \tau^{3/2}-836198 \tau^{5/2}+5202.68 \tau+480.268 \sqrt{\tau}+108.389\right) \\ \nonumber 
&\sqrt{\pi  (24 \tau+1)^2+40674.8 \tau}+\\ \nonumber 
&\left(1.72711\times 10^6 \tau^{3/2}+611410 \tau^{5/2}+2601.34 \tau-592.138 \sqrt{\tau}+108.389\right) \\ \nonumber 
&\sqrt{\pi  (48 \tau+1)^2+40674.8 \tau}+\\ \nonumber 
&\left(2880 \sqrt{3} \tau^{3/2}-40 \sqrt{3} \sqrt{\tau}+32\right) \sqrt{\pi  (24 \tau+1)^2+40674.8 \tau} \sqrt{\pi  (48 \tau+1)^2+40674.8 \tau}+\\ \nonumber 
&422935. \tau^2+26433.4 \tau+415.94 \sqrt{\tau}+367.131\ \leq \\ \nonumber &
-5.81185\times 10^6 \tau^{3/2}-1.67707\times 10^6 \tau^{5/2}-3.44998\times 10^7 \tau^{7/2}+\\ \nonumber 
&\left(-3.42607\times 10^6 \tau^{3/2}-836198 \tau^{5/2}+480.268 \sqrt{1.25}+1.25 5202.68+108.389\right) \\ \nonumber 
&\sqrt{\pi  (24 \tau+1)^2+40674.8 \tau}+\\ \nonumber 
&\left(1.72711\times 10^6 \tau^{3/2}+611410 \tau^{5/2}-592.138 \sqrt{0.9}+1.25 2601.34+108.389\right) \\ \nonumber 
&\sqrt{\pi  (48 \tau+1)^2+40674.8 \tau}+\\ \nonumber 
&\left(2880 \sqrt{3} \tau^{3/2}-40 \sqrt{3} \sqrt{\tau}+32\right) \sqrt{\pi  (24 \tau+1)^2+40674.8 \tau} \sqrt{\pi  (48 \tau+1)^2+40674.8 \tau}+\\ \nonumber 
&422935 \tau^2+415.94 \sqrt{1.25}+1.25 26433.4+367.131\ = \\ \nonumber &
-5.81185\times 10^6 \tau^{3/2}-1.67707\times 10^6 \tau^{5/2}-3.44998\times 10^7 \tau^{7/2}+\\ \nonumber 
&\left(-3.42607\times 10^6 \tau^{3/2}-836198 \tau^{5/2}+7148.69\right) \sqrt{\pi  (24 \tau+1)^2+40674.8 \tau}+\\ \nonumber 
&\left(1.72711\times 10^6 \tau^{3/2}+611410 \tau^{5/2}+2798.31\right) \sqrt{\pi  (48 \tau+1)^2+40674.8 \tau}+\\ \nonumber 
&\left(2880 \sqrt{3} \tau^{3/2}-40 \sqrt{3} \sqrt{\tau}+32\right) \sqrt{\pi  (24 \tau+1)^2+40674.8 \tau} \sqrt{\pi  (48 \tau+1)^2+40674.8 \tau}+\\ \nonumber 
&422935 \tau^2+33874\ = \\ \nonumber
&-5.81185\times 10^6 \tau^{3/2}-1.67707\times 10^6 \tau^{5/2}-3.44998\times 10^7 \tau^{7/2}+\\ \nonumber
&\left(1.72711\times 10^6 \tau^{3/2}+611410 \tau^{5/2}+2798.31\right) \sqrt{2304 \pi  (\tau+5.66103) (\tau+0.0000766694)}+\\ \nonumber 
&\left(-3.42607\times 10^6 \tau^{3/2}-836198 \tau^{5/2}+7148.69\right) \sqrt{576 \pi  (\tau+22.561) (\tau+0.0000769518)}+\\ \nonumber 
&\left(2880 \sqrt{3} \tau^{3/2}-40 \sqrt{3} \sqrt{\tau}+32\right) \sqrt{2304 \pi  (\tau+5.66103) (\tau+0.0000766694)}\\ \nonumber 
& \sqrt{576 \pi  (\tau+22.561) (\tau+0.0000769518)}+\\ \nonumber 
&422935 \tau^2+33874\ \leq \\ \nonumber 
&-5.81185 10^6 \tau^{3/2}-1.67707\times 10^6 \tau^{5/2}-3.44998\times 10^7 \tau^{7/2}+\\ \nonumber 
&\left(1.72711\times 10^6 \tau^{3/2}+611410 \tau^{5/2}+2798.31\right) \sqrt{2304 \pi  1.0001 (\tau+5.66103) \tau}+\\ \nonumber 
&\left(2880 \sqrt{3} \tau^{3/2}-40 \sqrt{3} \sqrt{\tau}+32\right) \sqrt{2304 \pi  1.0001 (\tau+5.66103) \tau} \sqrt{576 \pi  1.0001 (\tau+22.561) \tau}+\\ \nonumber
&\left(-3.42607\times 10^6 \tau^{3/2}-836198 \tau^{5/2}+7148.69\right) \\ \nonumber 
&\sqrt{576 \pi  (\tau+22.561) \tau}+\\ \nonumber 
&422935 \tau^2+33874.\ = \\ \nonumber &
-5.81185 10^6 \tau^{3/2}-1.67707\times 10^6 \tau^{5/2}-3.44998\times 10^7 \tau^{7/2}+\\ \nonumber 
&\left(-250764. \tau^{3/2}+1.8055\times 10^7 \tau^{5/2}+115823. \tau\right) \\ \nonumber 
&\sqrt{\tau+5.66103} \sqrt{\tau+22.561}+422935. \tau^2+\\ \nonumber 
&\left(5.20199\times 10^7 \tau^3+1.46946\times 10^8 \tau^2+238086. \sqrt{\tau}\right) \sqrt{\tau+5.66103}+\\ \nonumber 
&\left(-3.55709\times 10^7 \tau^3-1.45741\times 10^8 \tau^2+304097. \sqrt{\tau}\right) \sqrt{\tau+22.561}+33874. \ \leq \\ \nonumber
&\sqrt{1.25 +5.66103} \sqrt{1.25 +22.561} \left(-250764. \tau^{3/2}+1.8055\times 10^7 \tau^{5/2}+115823. \tau\right)+\\ \nonumber 
&\sqrt{1.25 +5.66103} \left(5.20199\times 10^7 \tau^3+1.46946\times 10^8 \tau^2+238086. \sqrt{\tau}\right)+\\ \nonumber 
&\sqrt{0.9 +22.561} \left(-3.55709\times 10^7 \tau^3-1.45741\times 10^8 \tau^2+304097. \sqrt{\tau}\right)-\\ \nonumber 
&5.81185 10^6 \tau^{3/2}-1.67707\times 10^6 \tau^{5/2}-3.44998\times 10^7 \tau^{7/2}+422935. \tau^2+33874. \ \leq \\ \nonumber 
&
\frac{33874. \tau^{3/2}}{0.9^{3/2}}-9.02866\times 10^6 \tau^{3/2}+2.29933\times 10^8 \tau^{5/2}-3.44998\times 10^7 \tau^{7/2}-\\ \nonumber 
&3.5539\times 10^7 \tau^3-3.19193\times 10^8 \tau^2+\frac{1.48578\times 10^6 \sqrt{\tau} \tau}{\sqrt{0.9}}+\frac{2.09884\times 10^6 \tau \sqrt{\tau}}{0.9}\ = \\ \nonumber 
&
-5.09079\times 10^6 \tau^{3/2}+2.29933\times 10^8 \tau^{5/2}-\\ \nonumber 
&3.44998\times 10^7 \tau^{7/2}-3.5539\times 10^7 \tau^3-3.19193\times 10^8 \tau^2 \ \leq \\ \nonumber 
&
-5.09079\times 10^6 \tau^{3/2}+\frac{2.29933\times 10^8 \sqrt{1.25} \tau^{5/2}}{\sqrt{\tau}}-3.44998\times 10^7 \tau^{7/2}-\\ \nonumber 
&3.5539\times 10^7 \tau^3-3.19193\times 10^8 \tau^2\ = \\ \nonumber &
-5.09079\times 10^6 \tau^{3/2}-3.44998\times 10^7 \tau^{7/2}-3.5539\times 10^7 \tau^3-6.21197\times 10^7 \tau^2\ < \ 0 \ . 
\end{align}

First we expanded the term (multiplied it out).
The we put the terms multiplied by the same square root into brackets.
The next inequality sign stems from inserting the maximal value of $1.25$ for $\tau$ for
some positive terms and value of $0.9$ for negative terms. 
These terms are then expanded at the $=$-sign.
The next equality factors the terms under the squared root.
We decreased the negative term by setting
$\tau=\tau+0.0000769518$ under the root.
We increased positive terms by setting  
$\tau+0.0000769518=1.0000962\tau$ and
$\tau+0.0000766694 =1.0000962\tau$
under the root for positive terms.
The positive terms are increase, since
$\frac{0.8 + 0.0000769518}{0.8}<1.0000962$, thus
$\tau+0.0000766694<\tau+0.0000769518 \leq
1.0000962\tau$.
For the next inequality we decreased negative terms by inserting
$\tau=0.9$ and increased positive terms by inserting
$\tau=1.25$. The next equality expands the terms.
We use upper bound of $1.25$ and lower bound of $0.9$ to obtain terms with
corresponding exponents of $\tau$.

{Consequently}, the derivative of
\begin{align}
\label{eq:subx1b}
\tau \left(e^{\left(\frac{\mu \omega+\nu \tau}{\sqrt{2} \sqrt{\nu \tau}}\right)^2} \erfc \left(\frac{\mu \omega+\nu \tau}{\sqrt{2} \sqrt{\nu \tau}}\right)-2 e^{\left(\frac{\mu \omega+2  \nu \tau}{\sqrt{2} \sqrt{\nu \tau}}\right)^2} \erfc \left(\frac{\mu \omega+2  \nu \tau}{\sqrt{2} \sqrt{\nu \tau}}\right)\right)
\end{align}
with respect to $\tau$ is smaller than zero for maximal
$\nu = 0.24$ and the domain $0.9 \leq \tau \leq 1.25$.
\end{proof}

\begin{lemma}
\label{lem:mainsubfunctionJ11J12}
In the domain $-0.01 \leq y \leq 0.01$ and $0.64 \leq x \leq 1.875$, 
the function $f(x,y)=e^{\frac{1}{2} (2 y + x)} \erfc\left(\frac{x+y}{\sqrt{2 x}}\right)$ has a global 
maximum at $y=0.64$ and $x=-0.01$ and a global minimum at $y=1.875$ and $x=0.01$. 
\end{lemma}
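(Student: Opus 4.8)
The plan is to show that $f$ is strictly decreasing in each of its two arguments $x$ and $y$ on the given box, so that the global maximum must sit at the corner with both coordinates minimal, namely $(x,y)=(0.64,-0.01)$, and the global minimum at the corner with both maximal, namely $(x,y)=(1.875,0.01)$. (The statement as written lists the two coordinate pairs with the roles of $x$ and $y$ transposed; the content is precisely this corner behavior.) The workhorse is Lemma~\ref{lem:Abramowitz} together with the elementary identity
\begin{align*}
y+\frac{x}{2}-\left(\frac{x+y}{\sqrt{2x}}\right)^2 \;=\; -\frac{y^2}{2x},
\end{align*}
which lets me write $e^{y+x/2}=e^{-y^2/(2x)}e^{z^2}$ with $z:=\frac{x+y}{\sqrt{2x}}$; note $z>0$ and $x-y>0$ on the whole box since $x+y\geq 0.63$ and $x\geq 0.64>0.01\geq y$.

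First I would differentiate in $y$. Using $\partial_y z=1/\sqrt{2x}$ and the identity above, $\partial_y f = e^{-y^2/(2x)}\big(e^{z^2}\erfc(z)-\tfrac{2}{\sqrt{2\pi x}}\big)$, so the sign of $\partial_y f$ is that of $e^{z^2}\erfc(z)-\tfrac{2}{\sqrt{2\pi x}}$. Applying the upper Abramowitz bound $e^{z^2}\erfc(z)\leq \tfrac{2}{\sqrt{\pi}(\sqrt{z^2+4/\pi}+z)}$, it suffices to verify $\sqrt{z^2+4/\pi}+z>\sqrt{2x}$. Multiplying by $\sqrt{2x}$ and using $2xz^2=(x+y)^2$ turns this into $\sqrt{(x+y)^2+8x/\pi}>x-y$, and since $x-y>0$ I may square, reducing it to $4x(y+2/\pi)>0$, which is clear because $x>0$ and $y+2/\pi\geq -0.01+2/\pi>0$. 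Hence $\partial_y f<0$.

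Next I would differentiate in $x$. With $\partial_x z=\frac{x-y}{2\sqrt{2}\,x^{3/2}}$ and the same rewriting, $\partial_x f = e^{-y^2/(2x)}\big(\tfrac12 e^{z^2}\erfc(z)-\tfrac{x-y}{\sqrt{2\pi}\,x^{3/2}}\big)$, so I need $e^{z^2}\erfc(z)<\tfrac{2(x-y)}{\sqrt{2\pi}\,x^{3/2}}$. The same Abramowitz upper bound reduces this, after clearing $\sqrt{2x}$ and using $2xz^2=(x+y)^2$, to $\sqrt{(x+y)^2+8x/\pi}>\tfrac{x^2+y^2}{x-y}$; squaring (both sides positive) and simplifying via $(x+y)^2(x-y)^2=(x^2-y^2)^2$ collapses it to $\tfrac{2}{\pi}(x-y)^2>xy^2$. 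On the box this is immediate: $(x-y)^2\geq 0.63^2$, so the left side exceeds $0.25$, while $xy^2\leq 1.875\cdot 10^{-4}$. Hence $\partial_x f<0$.

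Finally, since $f$ is $C^1$ and strictly decreasing in each variable separately on the compact box, its maximum is attained uniquely at $(x,y)=(0.64,-0.01)$ and its minimum uniquely at $(x,y)=(1.875,0.01)$, which is the assertion. I do not anticipate a real obstacle: the only mildly delicate points are ensuring $z>0$ and $x-y>0$, so that Lemma~\ref{lem:Abramowitz} is applicable and squaring of the intermediate inequalities is legitimate, and both are guaranteed by the domain; the rest is routine algebra together with the transposed reading of the coordinates in the statement.
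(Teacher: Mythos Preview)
Your proof is correct and follows the same strategy as the paper: show $\partial_x f<0$ and $\partial_y f<0$ on the box via the upper Abramowitz bound (Lemma~\ref{lem:Abramowitz}), then read off the extrema at the corners. Your execution is in fact tighter than the paper's: the paper establishes $\partial_y f<0$ only at the single slice $x=0.64$ (which suffices for the maximum but not, strictly speaking, for the minimum) and handles the $\partial_x$ inequality by a somewhat informal ``the bound does not change monotonicity, so evaluate at a corner'' step, whereas you reduce both partial-derivative signs to the clean uniform inequalities $4x(y+2/\pi)>0$ and $\tfrac{2}{\pi}(x-y)^2>xy^2$, valid on the whole box.
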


\begin{proof}
$f(x,y)=e^{\frac{1}{2} (2 y + x)} \erfc\left(\frac{x+y}{\sqrt{2 x}}\right)$ is strictly monotonically decreasing
in $x$, since its derivative with respect to $x$ is negative:
\begin{align}
 &\frac{e^{-\frac{y^2}{2 x}} \left(\sqrt{\pi } x^{3/2} e^{\frac{(x+y)^2}{2 x}} \erfc \left(\frac{x+y}{\sqrt{2} \sqrt{x}}\right)+\sqrt{2} (y-x)\right)}{2 \sqrt{\pi } x^{3/2}} < 0 \nonumber \\
 &\iff \sqrt{\pi } x^{3/2} e^{\frac{(x+y)^2}{2 x}} \erfc \left(\frac{x+y}{\sqrt{2} \sqrt{x}}\right)+\sqrt{2} (y-x) < 0  \nonumber \\
 &\sqrt{\pi } x^{3/2} e^{\frac{(x+y)^2}{2 x}} \erfc \left(\frac{x+y}{\sqrt{2} \sqrt{x}}\right)+\sqrt{2} (y-x) \leq \nonumber \\
 &\frac{2 x^{3/2}}{\frac{x+y}{\sqrt{2} \sqrt{x}}+\sqrt{\frac{(x+y)^2}{2 x}+\frac{4}{\pi }}}+ y \sqrt{2} - x \sqrt{2} \leq \nonumber \\ 
 &\frac{2\cdot 0.64^{3/2}}{\frac{0.01+0.64}{\sqrt{2} \sqrt{0.64}}+\sqrt{\frac{(0.01+0.64)^2}{2 \cdot 0.64}+\frac{4}{\pi }}}+0.01\sqrt{2}  - 0.64\sqrt{2} =  -0.334658 < 0.  
\end{align}
The two last inqualities come from applying Abramowitz bounds~\ref{lem:Abramowitz} and from the fact that the expression
$\frac{2 x^{3/2}}{\frac{x+y}{\sqrt{2} \sqrt{x}}+\sqrt{\frac{(x+y)^2}{2 x}+\frac{4}{\pi }}}+ y \sqrt{2} - x \sqrt{2}$
does not change monotonicity in the 
domain and hence the maximum must be found at the border. For $x=0.64$ that maximizes the function $f(x,y)$ is monotonically   in $y$, because 
its derivative w.r.t. $y$ at $x=0.64$ is 
\begin{align}
&e^y \left(1.37713 \erfc (0.883883 y+0.565685)-1.37349 e^{-0.78125 (y+0.64)^2}\right) < 0 \nonumber \\
&\iff  \left(1.37713 \erfc (0.883883 y+0.565685)-1.37349 e^{-0.78125 (y+0.64)^2}\right) < 0 \nonumber \\
&\left(1.37713 \erfc (0.883883 y+0.565685) - 1.37349 e^{-0.78125 (y+0.64)^2}\right) \leq \nonumber \\
&\left(1.37713 \erfc (0.883883 \cdot -0.01 + 0.565685) - 1.37349 e^{-0.78125 (0.01+0.64)^2}\right) =  \nonumber \\
&0.5935272325870631 - 0.987354705867739 < 0.
\end{align}
Therefore, the values $y=0.64$ and $x=-0.01$ give
a global maximum of the function $f(x,y)$ in the domain $-0.01\leq y \leq 0.01$ and $0.64 \leq x \leq 1.875$ and
the values $y=1.875$ and $x=0.01$ give the global minimum.
\end{proof}

\section{Additional information on experiments}
\label{sec:experiments} \index{experiments}
In this section, we report the hyperparameters that were considered for each method and
data set and give details on the processing of the data sets.

\subsection{121 UCI Machine Learning Repository data sets: Hyperparameters} \index{experiments!UCI}
For the UCI data sets, the best hyperparameter setting was determined by a grid-search over all
hyperparameter combinations using 15\% of the training data as validation set.
The early stopping parameter was determined on the smoothed learning curves of 100 epochs
of the validation set. Smoothing was done using moving averages of 10 consecutive 
values. We tested ``rectangular'' and ``conic'' layers -- rectangular layers have 
constant number of hidden units in each layer, conic layers start with the given 
number of hidden units in the first layer and then decrease the number of hidden units
to the size of the output layer according to the geometric progession.
If multiple hyperparameters provided identical performance on the validation 
set, we preferred settings with a higher number of layers, lower learning rates and higher dropout rates. 
All methods had the chance to adjust their hyperparameters to the data set at hand.

\index{experiments!UCI!hyperparameters}
\begin{table}[htp]
\begin{center}
\caption{Hyperparameters considered for self-normalizing networks in the UCI data sets.}

\begin{tabular}{ll}
\toprule
Hyperparameter  & Considered values \\ 
\midrule
  Number of hidden units & \{1024, 512, 256\} \\
  Number of hidden layers & \{2, 3, 4, 8, 16, 32\} \\
  Learning rate & \{0.01, 0.1, 1\} \\
  Dropout rate & \{0.05, 0\}\\
  Layer form & \{rectangular, conic\} \\
\bottomrule
\end{tabular}
\end{center}

\end{table}

\begin{table}[htp]
\begin{center}
\caption[Hyperparameters considered for ReLU networks in the UCI data sets.]{Hyperparameters considered for ReLU networks with MS initialization in the UCI data sets.}

\begin{tabular}{ll}
\toprule
Hyperparameter  & Considered values \\ 
\midrule
   Number of hidden units & \{1024, 512, 256\} \\
  Number of hidden layers & \{2,3,4,8,16,32\} \\
  Learning rate & \{0.01, 0.1, 1\} \\
  Dropout rate & \{0.5, 0\}\\
  Layer form & \{rectangular, conic\} \\
\bottomrule
\end{tabular}
\end{center}

\end{table}


\begin{table}[htp]
\begin{center}
\caption{Hyperparameters considered for batch normalized networks in the UCI data sets.}

\begin{tabular}{ll}
\toprule
Hyperparameter  & Considered values \\ 
\midrule
  Number of hidden units & \{1024, 512, 256\} \\
  Number of hidden layers & \{2, 3, 4, 8, 16, 32\} \\
  Learning rate & \{0.01, 0.1, 1\} \\
  Normalization & \{Batchnorm\} \\
  Layer form & \{rectangular, conic\} \\
\bottomrule
\end{tabular}
\end{center}

\end{table}


\begin{table}[htp]
\begin{center}
\caption{Hyperparameters considered for weight normalized networks in the UCI data sets.}

\begin{tabular}{ll}
\toprule
Hyperparameter  & Considered values \\ 
\midrule
  Number of hidden units & \{1024, 512, 256\} \\
  Number of hidden layers & \{2, 3, 4, 8, 16, 32\} \\
  Learning rate & \{0.01, 0.1, 1\} \\
  Normalization & \{Weightnorm\} \\
  Layer form & \{rectangular, conic\} \\
\bottomrule
\end{tabular}
\end{center}

\end{table}


\begin{table}[htp]
\begin{center}
\caption{Hyperparameters considered for layer normalized networks in the UCI data sets.}

\begin{tabular}{ll}
\toprule
Hyperparameter  & Considered values \\ 
\midrule
  Number of hidden units & \{1024, 512, 256\} \\
  Number of hidden layers & \{2, 3, 4, 8, 16, 32\} \\
  Learning rate & \{0.01, 0.1, 1\} \\
  Normalization & \{Layernorm\} \\
  Layer form & \{rectangular, conic\} \\

\bottomrule
\end{tabular}
\end{center}

\end{table}


\begin{table}[htp]
\begin{center}
\caption{Hyperparameters considered for Highway networks in the UCI data sets.}

\begin{tabular}{ll}
\toprule
Hyperparameter  & Considered values \\ 
\midrule
  Number of hidden layers & \{2, 3, 4, 8, 16, 32\} \\
  Learning rate & \{0.01, 0.1, 1\} \\
  Dropout rate & \{0, 0.5\} \\
\bottomrule
\end{tabular}
\end{center}

\end{table}


\begin{table}[htp]
\begin{center}
\caption{Hyperparameters considered for Residual networks in the UCI data sets.}

\begin{tabular}{ll}
\toprule
Hyperparameter  & Considered values \\ 
\midrule
  Number of blocks & \{2, 3, 4, 8, 16\} \\
  Number of neurons per blocks & \{1024, 512, 256\} \\
  Block form & \{rectangular, diavolo\} \\
  Bottleneck & \{25\%, 50\%\} \\
  Learning rate & \{0.01, 0.1, 1\} \\
\bottomrule
\end{tabular}
\end{center}

\end{table}


\clearpage

\subsection{121 UCI Machine Learning Repository data sets: detailed results}
\index{experiments!UCI!details}

\paragraph{Methods compared.} 
We used data sets and preprocessing scripts by \citet{bib:Fernandez2014} for data preparation and
defining training and test sets. With several flaws in the method comparison\citep{bib:Wainberg2016} that we avoided, 
the authors compared 179 machine learning methods of 17 groups in their experiments. 
The method groups were defined by \citet{bib:Fernandez2014} as follows:
Support Vector Machines, RandomForest, Multivariate adaptive regression splines (MARS), 
Boosting, Rule-based, logistic and multinomial regression,
Discriminant Analysis (DA), Bagging, 
Nearest Neighbour, DecisionTree, other Ensembles, Neural Networks, Bayesian, Other Methods,
generalized linear models (GLM), Partial least squares and principal component regression (PLSR), and Stacking.
However, many of methods assigned to those groups were merely different implementations of the 
same method. Therefore, we selected one representative of each of the 17 groups for method 
comparison. The representative method was chosen as the group's method with the median performance 
across all tasks. 
Finally, we included 17 other machine learning methods of \citet{bib:Fernandez2014},
and 6 FNNs, BatchNorm, WeightNorm, LayerNorm, 
Highway, Residual and MSRAinit networks, and self-normalizing neural networks (SNNs) giving a total of 24 compared methods. 

\paragraph{Results of FNN methods for all 121 data sets.}
The results of the compared FNN methods can be found in Table~\ref{tab:UCIfull}.

\begin{table}
\caption[Comparison of FNN methods on all 121 UCI data sets.]{Comparison of FNN methods on all 121 UCI data sets.. The table reports the accuracy of FNN methods at each individual 
task of the 121 UCI data sets. The first column gives the name of the data set, the second the number
of training data points $N$, the third the number of features $M$ and the consecutive columns the accuracy values of
self-normalizing networks (SNNs), ReLU networks without normalization and with MSRA initialization (MS),
Highway networks (HW), Residual Networks (ResNet), networks with batch normalization (BN), weight 
normalization (WN), and layer normalization (LN). \label{tab:UCIfull}}
\footnotesize
\begin{tabular}{lrrlllllll}
\toprule
dataset & $N$ & $M$ & SNN & MS & HW & ResNet & BN & WN & LN\tabularnewline
\midrule
abalone & 4177 & 9 & 0.6657 & 0.6284 & 0.6427 & 0.6466 & 0.6303 & 0.6351 & 0.6178\tabularnewline
acute-inflammation & 120 & 7 & 1.0000 & 1.0000 & 1.0000 & 1.0000 & 1.0000 & 1.0000 & 0.9000\tabularnewline
acute-nephritis & 120 & 7 & 1.0000 & 1.0000 & 1.0000 & 1.0000 & 1.0000 & 1.0000 & 1.0000\tabularnewline
adult & 48842 & 15 & 0.8476 & 0.8487 & 0.8453 & 0.8484 & 0.8499 & 0.8453 & 0.8517\tabularnewline
annealing & 898 & 32 & 0.7600 & 0.7300 & 0.3600 & 0.2600 & 0.1200 & 0.6500 & 0.5000\tabularnewline
arrhythmia & 452 & 263 & 0.6549 & 0.6372 & 0.6283 & 0.6460 & 0.5929 & 0.6018 & 0.5752\tabularnewline
audiology-std & 196 & 60 & 0.8000 & 0.6800 & 0.7200 & 0.8000 & 0.6400 & 0.7200 & 0.8000\tabularnewline
balance-scale & 625 & 5 & 0.9231 & 0.9231 & 0.9103 & 0.9167 & 0.9231 & 0.9551 & 0.9872\tabularnewline
balloons & 16 & 5 & 1.0000 & 0.5000 & 0.2500 & 1.0000 & 1.0000 & 0.0000 & 0.7500\tabularnewline
bank & 4521 & 17 & 0.8903 & 0.8876 & 0.8885 & 0.8796 & 0.8823 & 0.8850 & 0.8920\tabularnewline
blood & 748 & 5 & 0.7701 & 0.7754 & 0.7968 & 0.8021 & 0.7647 & 0.7594 & 0.7112\tabularnewline
breast-cancer & 286 & 10 & 0.7183 & 0.6901 & 0.7465 & 0.7465 & 0.7324 & 0.6197 & 0.6620\tabularnewline
breast-cancer-wisc & 699 & 10 & 0.9714 & 0.9714 & 0.9771 & 0.9714 & 0.9829 & 0.9657 & 0.9714\tabularnewline
breast-cancer-wisc-diag & 569 & 31 & 0.9789 & 0.9718 & 0.9789 & 0.9507 & 0.9789 & 0.9718 & 0.9648\tabularnewline
breast-cancer-wisc-prog & 198 & 34 & 0.6735 & 0.7347 & 0.8367 & 0.8163 & 0.7755 & 0.8367 & 0.7959\tabularnewline
breast-tissue & 106 & 10 & 0.7308 & 0.4615 & 0.6154 & 0.4231 & 0.4615 & 0.5385 & 0.5769\tabularnewline
car & 1728 & 7 & 0.9838 & 0.9861 & 0.9560 & 0.9282 & 0.9606 & 0.9769 & 0.9907\tabularnewline
cardiotocography-10clases & 2126 & 22 & 0.8399 & 0.8418 & 0.8456 & 0.8173 & 0.7910 & 0.8606 & 0.8362\tabularnewline
cardiotocography-3clases & 2126 & 22 & 0.9153 & 0.8964 & 0.9171 & 0.9021 & 0.9096 & 0.8945 & 0.9021\tabularnewline
chess-krvk & 28056 & 7 & 0.8805 & 0.8606 & 0.5255 & 0.8543 & 0.8781 & 0.7673 & 0.8938\tabularnewline
chess-krvkp & 3196 & 37 & 0.9912 & 0.9900 & 0.9900 & 0.9912 & 0.9862 & 0.9912 & 0.9875\tabularnewline
congressional-voting & 435 & 17 & 0.6147 & 0.6055 & 0.5872 & 0.5963 & 0.5872 & 0.5872 & 0.5780\tabularnewline
conn-bench-sonar-mines-rocks & 208 & 61 & 0.7885 & 0.8269 & 0.8462 & 0.8077 & 0.7115 & 0.8269 & 0.6731\tabularnewline
conn-bench-vowel-deterding & 990 & 12 & 0.9957 & 0.9935 & 0.9784 & 0.9935 & 0.9610 & 0.9524 & 0.9935\tabularnewline
connect-4 & 67557 & 43 & 0.8807 & 0.8831 & 0.8599 & 0.8716 & 0.8729 & 0.8833 & 0.8856\tabularnewline
contrac & 1473 & 10 & 0.5190 & 0.5136 & 0.5054 & 0.5136 & 0.4538 & 0.4755 & 0.4592\tabularnewline
credit-approval & 690 & 16 & 0.8430 & 0.8430 & 0.8547 & 0.8430 & 0.8721 & 0.9070 & 0.8547\tabularnewline
cylinder-bands & 512 & 36 & 0.7266 & 0.7656 & 0.7969 & 0.7734 & 0.7500 & 0.7578 & 0.7578\tabularnewline
dermatology & 366 & 35 & 0.9231 & 0.9121 & 0.9780 & 0.9231 & 0.9341 & 0.9451 & 0.9451\tabularnewline
echocardiogram & 131 & 11 & 0.8182 & 0.8485 & 0.6061 & 0.8485 & 0.8485 & 0.7879 & 0.8182\tabularnewline
ecoli & 336 & 8 & 0.8929 & 0.8333 & 0.8690 & 0.8214 & 0.8214 & 0.8452 & 0.8571\tabularnewline
energy-y1 & 768 & 9 & 0.9583 & 0.9583 & 0.8802 & 0.8177 & 0.8646 & 0.9010 & 0.9479\tabularnewline
energy-y2 & 768 & 9 & 0.9063 & 0.8958 & 0.9010 & 0.8750 & 0.8750 & 0.8906 & 0.8802\tabularnewline
fertility & 100 & 10 & 0.9200 & 0.8800 & 0.8800 & 0.8400 & 0.6800 & 0.6800 & 0.8800\tabularnewline
flags & 194 & 29 & 0.4583 & 0.4583 & 0.4375 & 0.3750 & 0.4167 & 0.4167 & 0.3542\tabularnewline
glass & 214 & 10 & 0.7358 & 0.6038 & 0.6415 & 0.6415 & 0.5849 & 0.6792 & 0.6981\tabularnewline
haberman-survival & 306 & 4 & 0.7368 & 0.7237 & 0.6447 & 0.6842 & 0.7368 & 0.7500 & 0.6842\tabularnewline
hayes-roth & 160 & 4 & 0.6786 & 0.4643 & 0.7857 & 0.7143 & 0.7500 & 0.5714 & 0.8929\tabularnewline
heart-cleveland & 303 & 14 & 0.6184 & 0.6053 & 0.6316 & 0.5658 & 0.5789 & 0.5658 & 0.5789\tabularnewline
heart-hungarian & 294 & 13 & 0.7945 & 0.8356 & 0.7945 & 0.8082 & 0.8493 & 0.7534 & 0.8493\tabularnewline
heart-switzerland & 123 & 13 & 0.3548 & 0.3871 & 0.5806 & 0.3226 & 0.3871 & 0.2581 & 0.5161\tabularnewline
heart-va & 200 & 13 & 0.3600 & 0.2600 & 0.4000 & 0.2600 & 0.2800 & 0.2200 & 0.2400\tabularnewline
hepatitis & 155 & 20 & 0.7692 & 0.7692 & 0.6667 & 0.7692 & 0.8718 & 0.8462 & 0.7436\tabularnewline
hill-valley & 1212 & 101 & 0.5248 & 0.5116 & 0.5000 & 0.5396 & 0.5050 & 0.4934 & 0.5050\tabularnewline
horse-colic & 368 & 26 & 0.8088 & 0.8529 & 0.7794 & 0.8088 & 0.8529 & 0.7059 & 0.7941\tabularnewline
ilpd-indian-liver & 583 & 10 & 0.6986 & 0.6644 & 0.6781 & 0.6712 & 0.5959 & 0.6918 & 0.6986\tabularnewline
\end{tabular}
\end{table}

\begin{table}
\footnotesize
\begin{tabular}{lrrlllllll}
image-segmentation & 2310 & 19 & 0.9114 & 0.9090 & 0.9024 & 0.8919 & 0.8481 & 0.8938 & 0.8838\tabularnewline
ionosphere & 351 & 34 & 0.8864 & 0.9091 & 0.9432 & 0.9545 & 0.9432 & 0.9318 & 0.9432\tabularnewline
iris & 150 & 5 & 0.9730 & 0.9189 & 0.8378 & 0.9730 & 0.9189 & 1.0000 & 0.9730\tabularnewline
led-display & 1000 & 8 & 0.7640 & 0.7200 & 0.7040 & 0.7160 & 0.6280 & 0.6920 & 0.6480\tabularnewline
lenses & 24 & 5 & 0.6667 & 1.0000 & 1.0000 & 0.6667 & 0.8333 & 0.8333 & 0.6667\tabularnewline
letter & 20000 & 17 & 0.9726 & 0.9712 & 0.8984 & 0.9762 & 0.9796 & 0.9580 & 0.9742\tabularnewline
libras & 360 & 91 & 0.7889 & 0.8667 & 0.8222 & 0.7111 & 0.7444 & 0.8000 & 0.8333\tabularnewline
low-res-spect & 531 & 101 & 0.8571 & 0.8496 & 0.9023 & 0.8647 & 0.8571 & 0.8872 & 0.8947\tabularnewline
lung-cancer & 32 & 57 & 0.6250 & 0.3750 & 0.1250 & 0.2500 & 0.5000 & 0.5000 & 0.2500\tabularnewline
lymphography & 148 & 19 & 0.9189 & 0.7297 & 0.7297 & 0.6757 & 0.7568 & 0.7568 & 0.7838\tabularnewline
magic & 19020 & 11 & 0.8692 & 0.8629 & 0.8673 & 0.8723 & 0.8713 & 0.8690 & 0.8620\tabularnewline
mammographic & 961 & 6 & 0.8250 & 0.8083 & 0.7917 & 0.7833 & 0.8167 & 0.8292 & 0.8208\tabularnewline
miniboone & 130064 & 51 & 0.9307 & 0.9250 & 0.9270 & 0.9254 & 0.9262 & 0.9272 & 0.9313\tabularnewline
molec-biol-promoter & 106 & 58 & 0.8462 & 0.7692 & 0.6923 & 0.7692 & 0.7692 & 0.6923 & 0.4615\tabularnewline
molec-biol-splice & 3190 & 61 & 0.9009 & 0.8482 & 0.8833 & 0.8557 & 0.8519 & 0.8494 & 0.8607\tabularnewline
monks-1 & 556 & 7 & 0.7523 & 0.6551 & 0.5833 & 0.7546 & 0.9074 & 0.5000 & 0.7014\tabularnewline
monks-2 & 601 & 7 & 0.5926 & 0.6343 & 0.6389 & 0.6273 & 0.3287 & 0.6644 & 0.5162\tabularnewline
monks-3 & 554 & 7 & 0.6042 & 0.7454 & 0.5880 & 0.5833 & 0.5278 & 0.5231 & 0.6991\tabularnewline
mushroom & 8124 & 22 & 1.0000 & 1.0000 & 1.0000 & 1.0000 & 0.9990 & 0.9995 & 0.9995\tabularnewline
musk-1 & 476 & 167 & 0.8739 & 0.8655 & 0.8992 & 0.8739 & 0.8235 & 0.8992 & 0.8992\tabularnewline
musk-2 & 6598 & 167 & 0.9891 & 0.9945 & 0.9915 & 0.9964 & 0.9982 & 0.9927 & 0.9951\tabularnewline
nursery & 12960 & 9 & 0.9978 & 0.9988 & 1.0000 & 0.9994 & 0.9994 & 0.9966 & 0.9966\tabularnewline
oocytes\_merluccius\_nucleus\_4d & 1022 & 42 & 0.8235 & 0.8196 & 0.7176 & 0.8000 & 0.8078 & 0.8078 & 0.7686\tabularnewline
oocytes\_merluccius\_states\_2f & 1022 & 26 & 0.9529 & 0.9490 & 0.9490 & 0.9373 & 0.9333 & 0.9020 & 0.9412\tabularnewline
oocytes\_trisopterus\_nucleus\_2f & 912 & 26 & 0.7982 & 0.8728 & 0.8289 & 0.7719 & 0.7456 & 0.7939 & 0.8202\tabularnewline
oocytes\_trisopterus\_states\_5b & 912 & 33 & 0.9342 & 0.9430 & 0.9342 & 0.8947 & 0.8947 & 0.9254 & 0.8991\tabularnewline
optical & 5620 & 63 & 0.9711 & 0.9666 & 0.9644 & 0.9627 & 0.9716 & 0.9638 & 0.9755\tabularnewline
ozone & 2536 & 73 & 0.9700 & 0.9732 & 0.9716 & 0.9669 & 0.9669 & 0.9748 & 0.9716\tabularnewline
page-blocks & 5473 & 11 & 0.9583 & 0.9708 & 0.9656 & 0.9605 & 0.9613 & 0.9730 & 0.9708\tabularnewline
parkinsons & 195 & 23 & 0.8980 & 0.9184 & 0.8367 & 0.9184 & 0.8571 & 0.8163 & 0.8571\tabularnewline
pendigits & 10992 & 17 & 0.9706 & 0.9714 & 0.9671 & 0.9708 & 0.9734 & 0.9620 & 0.9657\tabularnewline
pima & 768 & 9 & 0.7552 & 0.7656 & 0.7188 & 0.7135 & 0.7188 & 0.6979 & 0.6927\tabularnewline
pittsburg-bridges-MATERIAL & 106 & 8 & 0.8846 & 0.8462 & 0.9231 & 0.9231 & 0.8846 & 0.8077 & 0.9231\tabularnewline
pittsburg-bridges-REL-L & 103 & 8 & 0.6923 & 0.7692 & 0.6923 & 0.8462 & 0.7692 & 0.6538 & 0.7308\tabularnewline
pittsburg-bridges-SPAN & 92 & 8 & 0.6957 & 0.5217 & 0.5652 & 0.5652 & 0.5652 & 0.6522 & 0.6087\tabularnewline
pittsburg-bridges-T-OR-D & 102 & 8 & 0.8400 & 0.8800 & 0.8800 & 0.8800 & 0.8800 & 0.8800 & 0.8800\tabularnewline
pittsburg-bridges-TYPE & 105 & 8 & 0.6538 & 0.6538 & 0.5385 & 0.6538 & 0.1154 & 0.4615 & 0.6538\tabularnewline
planning & 182 & 13 & 0.6889 & 0.6667 & 0.6000 & 0.7111 & 0.6222 & 0.6444 & 0.6889\tabularnewline
plant-margin & 1600 & 65 & 0.8125 & 0.8125 & 0.8375 & 0.7975 & 0.7600 & 0.8175 & 0.8425\tabularnewline
plant-shape & 1600 & 65 & 0.7275 & 0.6350 & 0.6325 & 0.5150 & 0.2850 & 0.6575 & 0.6775\tabularnewline
plant-texture & 1599 & 65 & 0.8125 & 0.7900 & 0.7900 & 0.8000 & 0.8200 & 0.8175 & 0.8350\tabularnewline
post-operative & 90 & 9 & 0.7273 & 0.7273 & 0.5909 & 0.7273 & 0.5909 & 0.5455 & 0.7727\tabularnewline
primary-tumor & 330 & 18 & 0.5244 & 0.5000 & 0.4512 & 0.3902 & 0.5122 & 0.5000 & 0.4512\tabularnewline
ringnorm & 7400 & 21 & 0.9751 & 0.9843 & 0.9692 & 0.9811 & 0.9843 & 0.9719 & 0.9827\tabularnewline
seeds & 210 & 8 & 0.8846 & 0.8654 & 0.9423 & 0.8654 & 0.8654 & 0.8846 & 0.8846\tabularnewline
semeion & 1593 & 257 & 0.9196 & 0.9296 & 0.9447 & 0.9146 & 0.9372 & 0.9322 & 0.9447\tabularnewline
soybean & 683 & 36 & 0.8511 & 0.8723 & 0.8617 & 0.8670 & 0.8883 & 0.8537 & 0.8484\tabularnewline
spambase & 4601 & 58 & 0.9409 & 0.9461 & 0.9435 & 0.9461 & 0.9426 & 0.9504 & 0.9513\tabularnewline
spect & 265 & 23 & 0.6398 & 0.6183 & 0.6022 & 0.6667 & 0.6344 & 0.6398 & 0.6720\tabularnewline
spectf & 267 & 45 & 0.4973 & 0.6043 & 0.8930 & 0.7005 & 0.2299 & 0.4545 & 0.5561\tabularnewline
statlog-australian-credit & 690 & 15 & 0.5988 & 0.6802 & 0.6802 & 0.6395 & 0.6802 & 0.6860 & 0.6279\tabularnewline
statlog-german-credit & 1000 & 25 & 0.7560 & 0.7280 & 0.7760 & 0.7720 & 0.7520 & 0.7400 & 0.7400\tabularnewline
\end{tabular}
\end{table}

\begin{table}
\footnotesize
\begin{tabular}{lrrlllllll}
statlog-heart & 270 & 14 & 0.9254 & 0.8358 & 0.7761 & 0.8657 & 0.7910 & 0.8657 & 0.7910\tabularnewline
statlog-image & 2310 & 19 & 0.9549 & 0.9757 & 0.9584 & 0.9584 & 0.9671 & 0.9515 & 0.9757\tabularnewline
statlog-landsat & 6435 & 37 & 0.9100 & 0.9075 & 0.9110 & 0.9055 & 0.9040 & 0.8925 & 0.9040\tabularnewline
statlog-shuttle & 58000 & 10 & 0.9990 & 0.9983 & 0.9977 & 0.9992 & 0.9988 & 0.9988 & 0.9987\tabularnewline
statlog-vehicle & 846 & 19 & 0.8009 & 0.8294 & 0.7962 & 0.7583 & 0.7583 & 0.8009 & 0.7915\tabularnewline
steel-plates & 1941 & 28 & 0.7835 & 0.7567 & 0.7608 & 0.7629 & 0.7031 & 0.7856 & 0.7588\tabularnewline
synthetic-control & 600 & 61 & 0.9867 & 0.9800 & 0.9867 & 0.9600 & 0.9733 & 0.9867 & 0.9733\tabularnewline
teaching & 151 & 6 & 0.5000 & 0.6053 & 0.5263 & 0.5526 & 0.5000 & 0.3158 & 0.6316\tabularnewline
thyroid & 7200 & 22 & 0.9816 & 0.9770 & 0.9708 & 0.9799 & 0.9778 & 0.9807 & 0.9752\tabularnewline
tic-tac-toe & 958 & 10 & 0.9665 & 0.9833 & 0.9749 & 0.9623 & 0.9833 & 0.9707 & 0.9791\tabularnewline
titanic & 2201 & 4 & 0.7836 & 0.7909 & 0.7927 & 0.7727 & 0.7800 & 0.7818 & 0.7891\tabularnewline
trains & 10 & 30 & NA & NA & NA & NA & 0.5000 & 0.5000 & 1.0000\tabularnewline
twonorm & 7400 & 21 & 0.9805 & 0.9778 & 0.9708 & 0.9735 & 0.9757 & 0.9730 & 0.9724\tabularnewline
vertebral-column-2clases & 310 & 7 & 0.8312 & 0.8701 & 0.8571 & 0.8312 & 0.8312 & 0.6623 & 0.8442\tabularnewline
vertebral-column-3clases & 310 & 7 & 0.8312 & 0.8052 & 0.7922 & 0.7532 & 0.7792 & 0.7403 & 0.8312\tabularnewline
wall-following & 5456 & 25 & 0.9098 & 0.9076 & 0.9230 & 0.9223 & 0.9333 & 0.9274 & 0.9128\tabularnewline
waveform & 5000 & 22 & 0.8480 & 0.8312 & 0.8320 & 0.8360 & 0.8360 & 0.8376 & 0.8448\tabularnewline
waveform-noise & 5000 & 41 & 0.8608 & 0.8328 & 0.8696 & 0.8584 & 0.8480 & 0.8640 & 0.8504\tabularnewline
wine & 178 & 14 & 0.9773 & 0.9318 & 0.9091 & 0.9773 & 0.9773 & 0.9773 & 0.9773\tabularnewline
wine-quality-red & 1599 & 12 & 0.6300 & 0.6250 & 0.5625 & 0.6150 & 0.5450 & 0.5575 & 0.6100\tabularnewline
wine-quality-white & 4898 & 12 & 0.6373 & 0.6479 & 0.5564 & 0.6307 & 0.5335 & 0.5482 & 0.6544\tabularnewline
yeast & 1484 & 9 & 0.6307 & 0.6173 & 0.6065 & 0.5499 & 0.4906 & 0.5876 & 0.6092\tabularnewline
zoo & 101 & 17 & 0.9200 & 1.0000 & 0.8800 & 1.0000 & 0.7200 & 0.9600 & 0.9600\tabularnewline
\bottomrule
\end{tabular}
\end{table}

\paragraph{Small and large data sets.} 
We assigned each of the 121 UCI data sets into the group ``large datasets'' or 
``small datasets'' if the had more than 1,000 data points or less, respectively. 
We expected that Deep Learning methods require large data sets to competitive to other machine learning methods.
This resulted in 75 small and 46 large data sets. 

\paragraph{Results.}
The results of the method comparison are given in Tables~\ref{tab:uciS1} and \ref{tab:uciS2} for 
small and large data sets, respectively. On small data sets, SVMs performed best followed 
by RandomForest and SNNs. On large data sets, SNNs are the best method followed by SVMs and 
Random Forest.
\index{experiments!UCI!results}


\begin{table}[ht]
\caption[Method comparison on small UCI data sets]{UCI comparison reporting the average rank
of a method on 75 classification task of the 
UCI machine learning repository with  less than 1000 data points. 
For each dataset, the 24 compared methods, 
were ranked by their
accuracy and the ranks were averaged across the tasks. 
The first column gives the method group, the second the 
method, the third  
the average rank , and the last the $p$-value 
of a paired Wilcoxon test whether the difference to the best performing 
method is significant.
SNNs are ranked third having been outperformed by Random Forests and SVMs.  \label{tab:uciS1}}

\centering
\begin{tabular}{llrr}
  \toprule
 methodGroup & method & avg. rank & $p$-value \\ 
  \midrule
  SVM & LibSVM\_weka &  9.3 &\\ 
  RandomForest & RRFglobal\_caret &  9.6 & 2.5e-01 \\ 
  SNN & SNN &  9.6 & 3.8e-01 \\ 
  LMR & SimpleLogistic\_weka &  9.9 & 1.5e-01 \\ 
  NeuralNetworks & lvq\_caret & 10.1 & 1.0e-01 \\ 
  MARS & gcvEarth\_caret & 10.7 & 3.6e-02 \\ 
  MSRAinit & MSRAinit & 11.0 & 4.0e-02 \\ 
  LayerNorm & LayerNorm & 11.3 & 7.2e-02 \\ 
  Highway & Highway & 11.5 & 8.9e-03 \\ 
  DiscriminantAnalysis & mda\_R & 11.8 & 2.6e-03 \\ 
  Boosting & LogitBoost\_weka & 11.9 & 2.4e-02 \\ 
  Bagging & ctreeBag\_R & 12.1 & 1.8e-03 \\ 
  ResNet & ResNet & 12.3 & 3.5e-03 \\ 
  BatchNorm & BatchNorm & 12.6 & 4.9e-04 \\ 
  Rule-based & JRip\_caret & 12.9 & 1.7e-04 \\ 
  WeightNorm & WeightNorm & 13.0 & 8.3e-05 \\ 
  DecisionTree & rpart2\_caret & 13.6 & 7.0e-04 \\ 
  OtherEnsembles & Dagging\_weka & 13.9 & 3.0e-05 \\ 
  Nearest Neighbour & NNge\_weka & 14.0 & 7.7e-04 \\ 
  OtherMethods & pam\_caret & 14.2 & 1.5e-04 \\ 
  PLSR & simpls\_R & 14.3 & 4.6e-05 \\ 
  Bayesian & NaiveBayes\_weka & 14.6 & 1.2e-04 \\ 
  GLM & bayesglm\_caret & 15.0 & 1.6e-06 \\ 
  Stacking & Stacking\_weka & 20.9 & 2.2e-12 \\ 
   \bottomrule
\end{tabular}
\end{table}

\begin{table}[ht]
\caption[Method comparison on large UCI data sets]{UCI comparison reporting the average rank
of a method on 46 classification task of the 
UCI machine learning repository with more than 1000 data points. 
For each dataset, the 24 compared methods, 
were ranked by their
accuracy and the ranks were averaged across the tasks. 
The first column gives the method group, the second the 
method, the third  
the average rank , and the last the $p$-value 
of a paired Wilcoxon test whether the difference to the best performing 
method is significant.
SNNs are ranked first having outperformed diverse machine learning methods and
other FNNs.  \label{tab:uciS2}}

\centering
\begin{tabular}{llrr}
  \toprule
 methodGroup & method & avg. rank & $p$-value \\ 
  \midrule
  SNN & SNN &  5.8 & \\ 
  SVM & LibSVM\_weka &  6.1 & 5.8e-01 \\ 
  RandomForest & RRFglobal\_caret &  6.6 & 2.1e-01 \\ 
  MSRAinit & MSRAinit &  7.1 & 4.5e-03 \\ 
  LayerNorm & LayerNorm &  7.2 & 7.1e-02 \\ 
  Highway & Highway &  7.9 & 1.7e-03 \\ 
  ResNet & ResNet &  8.4 & 1.7e-04 \\ 
  WeightNorm & WeightNorm &  8.7 & 5.5e-04 \\ 
  BatchNorm & BatchNorm &  9.7 & 1.8e-04 \\ 
  MARS & gcvEarth\_caret &  9.9 & 8.2e-05 \\ 
  Boosting & LogitBoost\_weka & 12.1 & 2.2e-07 \\ 
  LMR & SimpleLogistic\_weka & 12.4 & 3.8e-09 \\ 
  Rule-based & JRip\_caret & 12.4 & 9.0e-08 \\ 
  Bagging & ctreeBag\_R & 13.5 & 1.6e-05 \\ 
  DiscriminantAnalysis & mda\_R & 13.9 & 1.4e-10 \\ 
  Nearest Neighbour & NNge\_weka & 14.1 & 1.6e-10 \\ 
  DecisionTree & rpart2\_caret & 15.5 & 2.3e-08 \\ 
  OtherEnsembles & Dagging\_weka & 16.1 & 4.4e-12 \\ 
  NeuralNetworks & lvq\_caret & 16.3 & 1.6e-12 \\ 
  Bayesian & NaiveBayes\_weka & 17.9 & 1.6e-12 \\ 
  OtherMethods & pam\_caret & 18.3 & 2.8e-14 \\ 
  GLM & bayesglm\_caret & 18.7 & 1.5e-11 \\ 
  PLSR & simpls\_R & 19.0 & 3.4e-11 \\ 
  Stacking & Stacking\_weka & 22.5 & 2.8e-14 \\ 
   \bottomrule
\end{tabular}
\end{table}

\clearpage

\subsection{Tox21 challenge data set: Hyperparameters} \index{experiments!Tox21}
For the Tox21 data set, the best hyperparameter setting was determined by a grid-search over all
hyperparameter combinations using the validation set defined by the challenge winners \citep{bib:Mayr2016}.
The hyperparameter space was chosen to be similar to the hyperparameters that were tested by \citet{bib:Mayr2016}.
The early stopping parameter was determined on the smoothed learning curves of 100 epochs 
of the validation set. Smoothing was done using moving averages of 10 consecutive 
values. We tested ``rectangular'' and ``conic'' layers -- rectangular layers have 
constant number of hidden units in each layer, conic layers start with the given 
number of hidden units in the first layer and then decrease the number of hidden units
to the size of the output layer according to the geometric progession.
All methods had the chance to adjust their hyperparameters to the data set at hand. 

\index{experiments!Tox21!hyperparameters}
\begin{table}[htp]
\begin{center}
\caption{Hyperparameters considered for self-normalizing networks in the Tox21 data set.}

\begin{tabular}{ll}
\toprule
Hyperparameter  & Considered values \\ 
\midrule
  Number of hidden units & \{1024, 2048\} \\
  Number of hidden layers & \{2,3,4,6,8,16,32\} \\
  Learning rate & \{0.01, 0.05, 0.1\} \\
  Dropout rate & \{0.05, 0.10\}\\
  Layer form & \{rectangular, conic\} \\
  L2 regularization parameter &  \{0.001,0.0001,0.00001\} \\
\bottomrule
\end{tabular}
\end{center}

\end{table}

\begin{table}[htp]
\begin{center}
\caption[Hyperparameters considered for ReLU networks in the Tox21 data set.]{Hyperparameters considered for ReLU networks with MS initialization in the Tox21 data set.}
\begin{tabular}{ll}
\toprule
Hyperparameter  & Considered values \\ 
\midrule
   Number of hidden units & \{1024, 2048\} \\
  Number of hidden layers & \{2,3,4,6,8,16,32\} \\
  Learning rate & \{0.01, 0.05, 0.1\} \\
  Dropout rate & \{0.5, 0\}\\
  Layer form & \{rectangular, conic\} \\
  L2 regularization parameter &  \{0.001,0.0001,0.00001\} \\
\bottomrule
\end{tabular}
\end{center}
\end{table}


\begin{table}[htp]
\begin{center}
\caption{Hyperparameters considered for batch normalized networks in the Tox21 data set.}

\begin{tabular}{ll}
\toprule
Hyperparameter  & Considered values \\ 
\midrule
  Number of hidden units & \{1024, 2048\} \\
  Number of hidden layers & \{2, 3, 4, 6, 8, 16, 32\} \\
  Learning rate & \{0.01, 0.05, 0.1\} \\
  Normalization & \{Batchnorm\} \\
  Layer form & \{rectangular, conic\} \\
  L2 regularization parameter &  \{0.001,0.0001,0.00001\} \\

\bottomrule
\end{tabular}
\end{center}

\end{table}


\begin{table}[htp]
\begin{center}
\caption{Hyperparameters considered for weight normalized networks in the Tox21 data set.}

\begin{tabular}{ll}
\toprule
Hyperparameter  & Considered values \\ 
\midrule
  Number of hidden units & \{1024, 2048\} \\
  Number of hidden layers & \{2, 3, 4, 6, 8, 16, 32\} \\
  Learning rate & \{0.01, 0.05, 0.1\} \\
  Normalization & \{Weightnorm\} \\
  Dropout rate & \{0, 0.5\} \\
  Layer form & \{rectangular, conic\} \\
  L2 regularization parameter &  \{0.001,0.0001,0.00001\} \\
\bottomrule
\end{tabular}
\end{center}

\end{table}


\begin{table}[htp]
\begin{center}
\caption{Hyperparameters considered for layer normalized networks in the Tox21 data set.}

\begin{tabular}{ll}
\toprule
Hyperparameter  & Considered values \\ 
\midrule
  Number of hidden units & \{1024, 2048\} \\
  Number of hidden layers & \{2, 3, 4, 6, 8, 16, 32\} \\
  Learning rate & \{0.01, 0.05, 0.1\} \\
  Normalization & \{Layernorm\} \\
  Dropout rate & \{0, 0.5\} \\
  Layer form & \{rectangular, conic\} \\
  L2 regularization parameter &  \{0.001,0.0001,0.00001\} \\

\bottomrule
\end{tabular}
\end{center}

\end{table}


\begin{table}[htp]
\begin{center}
\caption{Hyperparameters considered for Highway networks in the Tox21 data set.}

\begin{tabular}{ll}
\toprule
Hyperparameter  & Considered values \\ 
\midrule
  Number of hidden layers & \{2, 3, 4, 6, 8, 16, 32\} \\
  Learning rate & \{0.01, 0.05, 0.1\} \\
  Dropout rate & \{0, 0.5\} \\
  L2 regularization parameter &  \{0.001,0.0001,0.00001\} \\
\bottomrule
\end{tabular}
\end{center}

\end{table}


\begin{table}[htp]
\begin{center}
\caption{Hyperparameters considered for Residual networks in the Tox21 data set.}
\begin{tabular}{ll}
\toprule
Hyperparameter  & Considered values \\ 
\midrule
  Number of blocks & \{2, 3, 4, 6, 8, 16\} \\
  Number of neurons per blocks & \{1024, 2048\} \\
  Block form & \{rectangular, diavolo\} \\
  Bottleneck & \{25\%, 50\%\} \\
  Learning rate & \{0.01, 0.05, 0.1\} \\
  L2 regularization parameter &  \{0.001,0.0001,0.00001\} \\
\bottomrule
\end{tabular}
\end{center}
\end{table}


\clearpage

\paragraph{Distribution of network inputs.}
We empirically checked the assumption that the distribution of network inputs can 
well be approximated by a normal distribution. To this end, we investigated the 
density of the network inputs before and during learning and found that 
these density are close to normal distributions (see Figure~\ref{fig:clt}).

\begin{figure}
 \includegraphics[width=0.49\columnwidth]{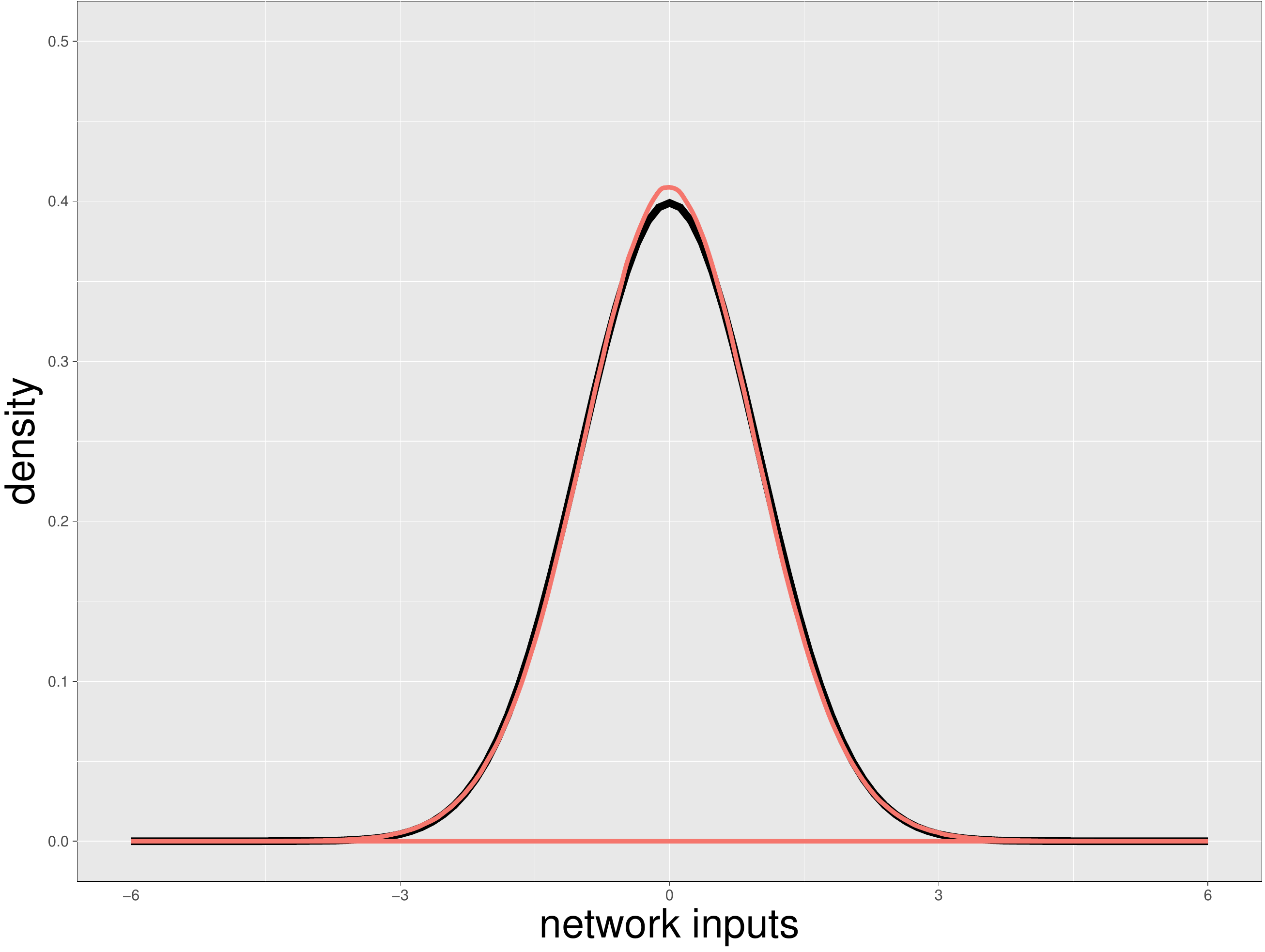}
 \includegraphics[width=0.49\columnwidth]{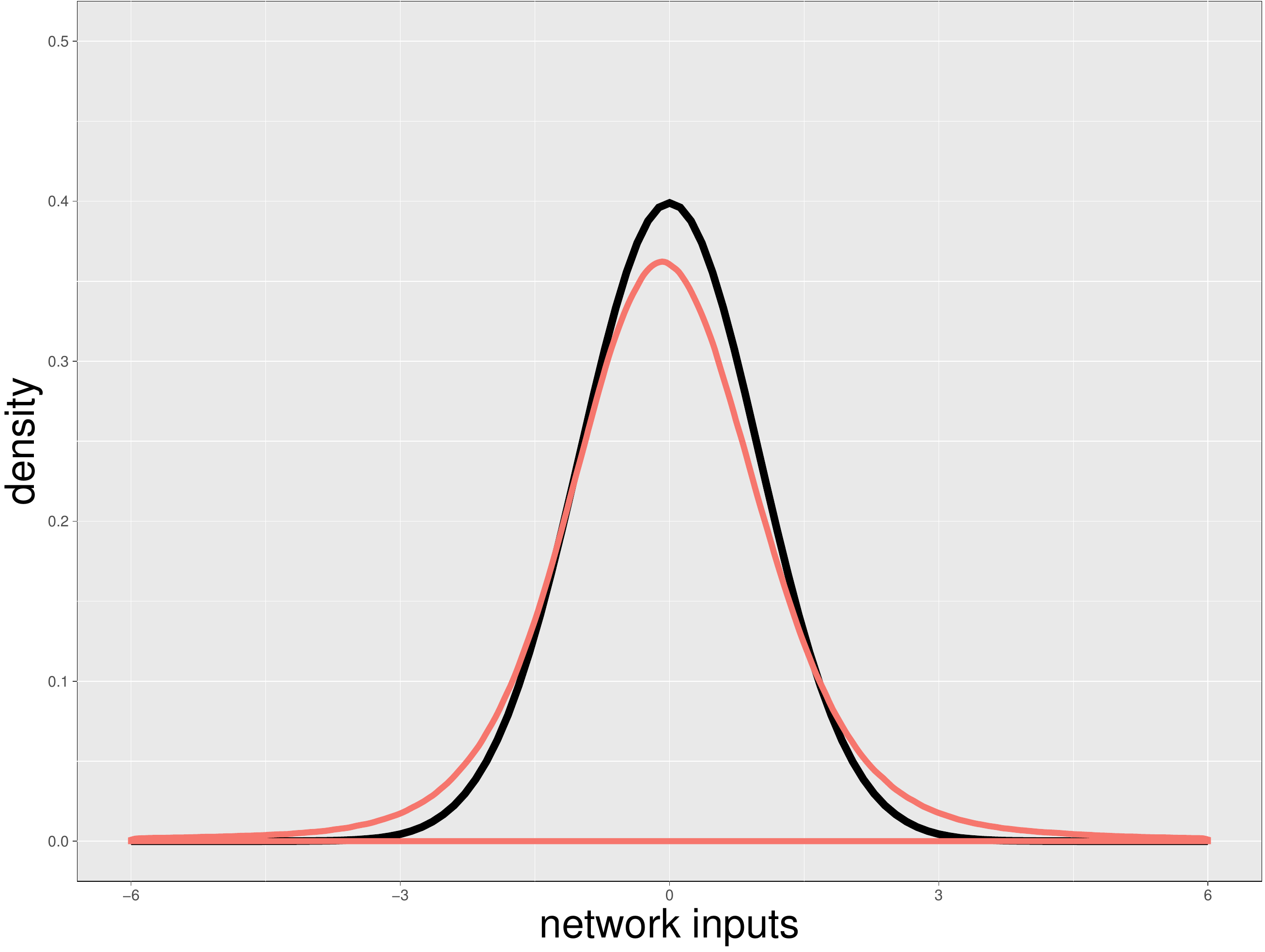}
 \caption[Distribution of network inputs in Tox21 SNNs.]{Distribution of network inputs of an SNN for the Tox21 data set. 
  The plots show the distribution of network inputs $z$ of the second layer of a typical Tox21 network.
  The red curves display a kernel density estimator of the network inputs and the black curve is the 
  density of a standard normal distribution. 
  {\bf Left panel:} At initialization time before learning. The distribution of network inputs is close to a standard 
  normal distribution.
  {\bf Right panel:} After 40 epochs of learning. The distributions of network inputs is close to a normal distribution.
  \label{fig:clt}
 }
\end{figure}

\clearpage

\subsection{HTRU2 data set: Hyperparameters}\index{experiments!HTRU2}
For the HTRU2 data set, the best hyperparameter setting was determined by a grid-search over all
hyperparameter combinations using one of the 9 non-testing folds as validation fold in a nested
cross-validation procedure. Concretely, 
if $M$ was the testing fold, we used $M-1$ as validation fold, and for $M=1$ we used fold $10$
for validation. The early stopping parameter was determined on the smoothed learning curves of 100 epochs 
of the validation set. Smoothing was done using moving averages of 10 consecutive 
values. We tested ``rectangular'' and ``conic'' layers -- rectangular layers have 
constant number of hidden units in each layer, conic layers start with the given 
number of hidden units in the first layer and then decrease the number of hidden units
to the size of the output layer according to the geometric progession.
All methods had the chance to adjust their hyperparameters to the data set at hand. \index{experiments!HTRU2!hyperparameters}

\begin{table}[htp]
\begin{center}
\caption{Hyperparameters considered for self-normalizing networks on the HTRU2 data set.}
\begin{tabular}{ll}
\toprule
Hyperparameter  & Considered values \\ 
\midrule
  Number of hidden units & \{256, 512, 1024\} \\
  Number of hidden layers & \{2, 4, 8, 16, 32\} \\
  Learning rate & \{0.1, 0.01, 1\} \\
  Dropout rate & \{ 0, 0.05\}\\
  Layer form & \{rectangular, conic\} \\
\bottomrule
\end{tabular}
\end{center}
\end{table}

\begin{table}[htp]
\begin{center}
\caption[Hyperparameters considered for ReLU networks on the HTRU2 data set.]{Hyperparameters considered for ReLU networks with Microsoft initialization on the HTRU2 data set.}
\begin{tabular}{ll}
\toprule
Hyperparameter  & Considered values \\ 
\midrule
  Number of hidden units & \{256, 512, 1024\} \\
  Number of hidden layers & \{2, 4, 8, 16, 32\} \\
  Learning rate & \{0.1, 0.01, 1\} \\
  Dropout rate & \{0, 0.5\}\\
  Layer form & \{rectangular, conic\} \\
\bottomrule
\end{tabular}
\end{center}
\end{table}

\begin{table}[htp]
\begin{center}
\caption{Hyperparameters considered for BatchNorm networks on the HTRU2 data set.}
\begin{tabular}{ll}
\toprule
Hyperparameter  & Considered values \\ 
\midrule
  Number of hidden units & \{256, 512, 1024\} \\
  Number of hidden layers & \{2, 4, 8, 16, 32\} \\
  Learning rate & \{0.1, 0.01, 1\} \\
  Normalization & \{Batchnorm\} \\
  Layer form & \{rectangular, conic\} \\
\bottomrule
\end{tabular}
\end{center}
\end{table}

\begin{table}[htp]
\begin{center}
\caption{Hyperparameters considered for WeightNorm networks on the HTRU2 data set.}
\begin{tabular}{ll}
\toprule
Hyperparameter  & Considered values \\ 
\midrule
  Number of hidden units & \{256, 512, 1024\} \\
  Number of hidden layers & \{2, 4, 8, 16, 32\} \\
  Learning rate & \{0.1, 0.01, 1\} \\
  Normalization & \{Weightnorm\} \\
  Layer form & \{rectangular, conic\} \\
\bottomrule
\end{tabular}
\end{center}
\end{table}

\begin{table}[htp]
\begin{center}
\caption{Hyperparameters considered for LayerNorm networks on the HTRU2 data set.}
\begin{tabular}{ll}
\toprule
Hyperparameter  & Considered values \\ 
\midrule
  Number of hidden units & \{256, 512, 1024\} \\
  Number of hidden layers & \{2, 4, 8, 16, 32\} \\
  Learning rate & \{0.1, 0.01, 1\} \\
  Normalization & \{Layernorm\} \\
  Layer form & \{rectangular, conic\} \\
\bottomrule
\end{tabular}
\end{center}
\end{table}

\begin{table}[htp]
\begin{center}
\caption{Hyperparameters considered for Highway networks on the HTRU2 data set.}
\begin{tabular}{ll}
\toprule
Hyperparameter  & Considered values \\ 
\midrule
  Number of hidden layers & \{2, 4, 8, 16, 32\} \\
  Learning rate & \{0.1, 0.01, 1\} \\
  Dropout rate & \{0, 0.5\}\\
\bottomrule
\end{tabular}
\end{center}
\end{table}

\begin{table}[htp]
\begin{center}
\caption{Hyperparameters considered for Residual networks on the HTRU2 data set.}
\begin{tabular}{ll}
\toprule
Hyperparameter  & Considered values \\ 
\midrule
  Number of hidden units & \{256, 512, 1024\} \\
  Number of residual blocks & \{2, 3, 4, 8, 16\} \\
  Learning rate & \{0.1, 0.01, 1\} \\
  Block form & \{rectangular, diavolo\} \\
  Bottleneck & \{0.25, 0.5\} \\
\bottomrule
\end{tabular}
\end{center}
\end{table}

\clearpage

\section{Other fixed points}
A similar analysis with corresponding function domains can be performed for other fixed points, for example for $\mu=\munn=0$ and $\nu=\nunn=2$, which leads
to a SELU activation function with parameters $\alpha_{\mathrm{02}}=1.97126$ and $\lambda_{\mathrm{02}}=1.06071$.

\section{Bounds determined by numerical methods}

In this section we report bounds on previously discussed expressions as determined by numerical methods (min and max have been
computed).

\begin{align}
0_ {(\mu = 0.06, \omega = 0, \nu = 1.35, \tau = 1.12)}\ &< \ \frac{\partial {\mathcal J}_{11}}{\partial \mu}  \ < \ .00182415_{(\mu = -0.1, \omega = 0.1, \nu = 1.47845, \tau = 0.883374)}\\ \nonumber
0.905413_{(\mu = 0.1, \omega = -0.1, \nu = 1.5, \tau = 1.25)}\ &< \  \frac{\partial {\mathcal J}_{11}}{\partial \omega}  \ < \ 1.04143_{(\mu = 0.1, \omega = 0.1, \nu = 0.8, \tau = 0.8)}\\ \nonumber
-0.0151177_{(\mu = -0.1, \omega = 0.1, \nu = 0.8, \tau = 1.25)}\ &< \  \frac{\partial {\mathcal J}_{11}}{\partial \nu}  \ < \ 0.0151177_{(\mu = 0.1, \omega = -0.1, \nu = 0.8, \tau = 1.25)}\\ \nonumber
-0.015194_{(\mu = -0.1, \omega = 0.1, \nu = 0.8, \tau = 1.25)}\ &< \  \frac{\partial {\mathcal J}_{11}}{\partial \tau}  \ < \ 0.015194_{(\mu = 0.1, \omega = -0.1, \nu = 0.8, \tau = 1.25)}\\ \nonumber
-0.0151177_{(\mu = -0.1, \omega = 0.1, \nu = 0.8, \tau = 1.25)}\ &< \  \frac{\partial {\mathcal J}_{12}}{\partial \mu}  \ < \ 0.0151177_{(\mu = 0.1, \omega = -0.1, \nu = 0.8, \tau = 1.25)}\\ \nonumber
-0.0151177_{(\mu = 0.1, \omega = -0.1, \nu = 0.8, \tau = 1.25)}\ &< \  \frac{\partial {\mathcal J}_{12}}{\partial \omega}  \ < \ 0.0151177_{(\mu = 0.1, \omega = -0.1, \nu = 0.8, \tau = 1.25)}\\ \nonumber
-0.00785613_{(\mu = 0.1, \omega = -0.1, \nu = 1.5, \tau = 1.25)}\ &< \  \frac{\partial {\mathcal J}_{12}}{\partial \nu}  \ < \ 0.0315805_{(\mu = 0.1, \omega = 0.1, \nu = 0.8, \tau = 0.8)}\\ \nonumber
0.0799824_{(\mu = 0.1, \omega = -0.1, \nu = 1.5, \tau = 1.25)}\ &< \ \frac{\partial {\mathcal J}_{12}}{\partial \tau}  \ < \ 0.110267_{(\mu = -0.1, \omega = 0.1, \nu = 0.8, \tau = 0.8)}\\ \nonumber
0_{(\mu = 0.06, \omega = 0, \nu = 1.35, \tau = 1.12)}\ &< \ \frac{\partial {\mathcal J}_{21}}{\partial \mu}  \ < \ 0.0174802_{(\mu = 0.1, \omega = 0.1, \nu = 0.8, \tau = 0.8)} \\ \nonumber
0.0849308_{(\mu = 0.1, \omega = -0.1, \nu = 0.8, \tau = 0.8)}\ &< \  \frac{\partial {\mathcal J}_{21}}{\partial \omega}  \ < \ 0.695766_{(\mu = 0.1, \omega = 0.1, \nu = 1.5, \tau = 1.25)}\\ \nonumber
-0.0600823_{(\mu = 0.1, \omega = -0.1, \nu = 0.8, \tau = 1.25)}\ &< \ \frac{\partial {\mathcal J}_{21}}{\partial \nu}  \ < \ 0.0600823_{(\mu = -0.1, \omega = 0.1, \nu = 0.8, \tau = 1.25)}\\ \nonumber
-0.0673083_{(\mu = 0.1, \omega = -0.1, \nu = 1.5, \tau = 0.8)}\ &< \ \frac{\partial {\mathcal J}_{21}}{\partial \tau}  \ < \ 0.0673083_{(\mu = -0.1, \omega = 0.1, \nu = 1.5, \tau = 0.8)}\\ \nonumber
-0.0600823_{(\mu = 0.1, \omega = -0.1, \nu = 0.8, \tau = 1.25)}\ &< \  \frac{\partial {\mathcal J}_{22}}{\partial \mu}  \ < \ 0.0600823_{(\mu = -0.1, \omega = 0.1, \nu = 0.8, \tau = 1.25)}\\ \nonumber
-0.0600823_{(\mu = 0.1, \omega = -0.1, \nu = 0.8, \tau = 1.25)}\ &< \  \frac{\partial {\mathcal J}_{22}}{\partial \omega}  \ < \ 0.0600823_{(\mu = -0.1, \omega = 0.1, \nu = 0.8, \tau = 1.25)}\\ \nonumber
-0.276862_{(\mu = -0.01, \omega = -0.01, \nu = 0.8, \tau = 1.25)}\ &< \  \frac{\partial {\mathcal J}_{22}}{\partial \nu}  \ < \ -0.084813_{(\mu = -0.1, \omega = 0.1, \nu = 1.5, \tau = 0.8)}\\ \nonumber
0.562302_{(\mu = 0.1, \omega = -0.1, \nu = 1.5, \tau = 1.25)}\ &< \ \frac{\partial {\mathcal J}_{22}}{\partial \tau}  \ < \ 0.664051_{(\mu = 0.1, \omega = 0.1, \nu = 0.8, \tau = 0.8)}
\end{align}

\begin{align}
\left| \frac{\partial {\mathcal J}_{11}}{\partial \mu} \right| \ &< \ 0.00182415 (0.0031049101995398316) \\ \nonumber
\left| \frac{\partial {\mathcal J}_{11}}{\partial \omega} \right| \ &< \ 1.04143 (1.055872374194189) \\ \nonumber
\left| \frac{\partial {\mathcal J}_{11}}{\partial \nu} \right| \ &< \ 0.0151177 (0.031242911235461816) \\ \nonumber
\left| \frac{\partial {\mathcal J}_{11}}{\partial \tau} \right| \ &< \ 0.015194 (0.03749149348255419) \\ \nonumber
\left| \frac{\partial {\mathcal J}_{12}}{\partial \mu} \right| \ &< \ 0.0151177 (0.031242911235461816) \\ \nonumber
\left| \frac{\partial {\mathcal J}_{12}}{\partial \omega} \right| \ &< \ 0.0151177 (0.031242911235461816) \\ \nonumber
\left| \frac{\partial {\mathcal J}_{12}}{\partial \nu} \right| \ &< \ 0.0315805 (0.21232788238624354) \\ \nonumber
\left| \frac{\partial {\mathcal J}_{12}}{\partial \tau} \right| \ &< \ 0.110267 (0.2124377655377270) \\ \nonumber
\left| \frac{\partial {\mathcal J}_{21}}{\partial \mu} \right| \ &< \ 0.0174802 (0.02220441024325437) \\ \nonumber
\left| \frac{\partial {\mathcal J}_{21}}{\partial \omega} \right| \ &< \ 0.695766 (1.146955401845684) \\ \nonumber
\left| \frac{\partial {\mathcal J}_{21}}{\partial \nu} \right| \ &< \ 0.0600823 (0.14983446469110305) \\ \nonumber
\left| \frac{\partial {\mathcal J}_{21}}{\partial \tau} \right| \ &< \ 0.0673083 (0.17980135762932363) \\ \nonumber
\left| \frac{\partial {\mathcal J}_{22}}{\partial \mu} \right| \ &< \ 0.0600823 (0.14983446469110305) \\ \nonumber
\left| \frac{\partial {\mathcal J}_{22}}{\partial \omega} \right| \ &< \ 0.0600823 (0.14983446469110305) \\ \nonumber
\left| \frac{\partial {\mathcal J}_{22}}{\partial \nu} \right| \ &< \ 0.562302 (1.805740052651535) \\ \nonumber
\left| \frac{\partial {\mathcal J}_{22}}{\partial \tau} \right| \ &< \ 0.664051 (2.396685907216327)
\end{align}

\section{References}
\label{sec:references}
\bibliographystyle{apalike} 
\bibliography{bibliography_fancy} 

\addcontentsline{toc}{section}{List of figures}
\listoffigures
\addcontentsline{toc}{section}{List of tables}
\listoftables

\newpage
\addcontentsline{toc}{section}{Brief index}
\printindex 

\end{document}